\documentclass{article}

\usepackage[utf8]{inputenc} 
\usepackage[T1]{fontenc}    
\usepackage{url}            
\usepackage{booktabs}       
\usepackage{amsfonts}       
\usepackage{nicefrac}       
\usepackage{microtype}      
\usepackage{fullpage}
\usepackage{natbib}

\usepackage{graphicx} 
\usepackage{xcolor}
\usepackage[colorlinks=true,
            allcolors=blue]{hyperref}

\usepackage{enumitem}
\setlist[enumerate,1]{label=(\alph*), ref=(\alph*)}

\usepackage{tikz}
\usepackage{amsmath,amssymb}
\usetikzlibrary{arrows.meta,calc,positioning,decorations.pathmorphing}

\definecolor{lossblue}{RGB}{35,95,175}
\definecolor{alignred}{RGB}{190,55,45}
\definecolor{alignorange}{RGB}{220,125,25}
\definecolor{densegray}{RGB}{110,110,110}
\definecolor{subblue}{RGB}{70,135,195}

\usepackage{makecell}
\usepackage{subcaption}
\usepackage{wrapfig}

\usepackage{amsmath, amsthm, amssymb}
\usepackage{amsfonts,bm,bbm}
\usepackage{apptools}
\usepackage{mathrsfs}

\usepackage{thmtools,thm-restate}
\makeatletter
\renewcommand\thmt@autorefsetup{%
  \@xa\def\csname\thmt@envname autorefname\@xa\endcsname\@xa{\thmt@thmname}%
}
\makeatother

\usepackage{aliascnt}
\newtheorem{theorem}{Theorem}[section]

\newaliascnt{lemma}{theorem}
\newtheorem{lemma}[lemma]{Lemma}
\aliascntresetthe{lemma}

\newaliascnt{proposition}{theorem}
\newtheorem{proposition}[proposition]{Proposition}
\aliascntresetthe{proposition}

\newtheorem{definition}{Definition}
\newtheorem{assumption}{Assumption}
\newtheorem{claim}{Claim}
\newtheorem{remark}{Remark}

\AtAppendix{\counterwithin{claim}{section}}
\AtAppendix{\counterwithin{assumption}{section}}

\def\1{\bm{1}}

\def\eps{{\varepsilon}}

\def\rd{{\textnormal{d}}}

\def\vzero{{\bm{0}}}

\def\vtheta{{\bm{\theta}}}

\def\vxi{{\bm{\xi}}}

\def\va{{\bm{a}}}
\def\vb{{\bm{b}}}

\def\ve{{\bm{e}}}

\def\vh{{\bm{h}}}

\def\vr{{\bm{r}}}
\def\vs{{\bm{s}}}

\def\vu{{\bm{u}}}
\def\vv{{\bm{v}}}
\def\vw{{\bm{w}}}
\def\vx{{\bm{x}}}

\def\vz{{\bm{z}}}

\def\mA{{\bm{A}}}

\def\mI{{\bm{I}}}

\def\mK{{\bm{K}}}

\def\mW{{\bm{W}}}

\DeclareMathAlphabet{\mathsfit}{\encodingdefault}{\sfdefault}{m}{sl}
\SetMathAlphabet{\mathsfit}{bold}{\encodingdefault}{\sfdefault}{bx}{n}
\newcommand{\tens}[1]{\bm{\mathsfit{#1}}}

\def\tH{{\tens{H}}}

\def\tJ{{\tens{J}}}

\def\tT{{\tens{T}}}

\def\gE{{\mathcal{E}}}
\def\gF{{\mathcal{F}}}

\def\sS{{\mathbb{S}}}

\newcommand{\E}{\mathbb{E}}

\newcommand{\R}{\mathbb{R}}

\newcommand{\norm}[1]{\left\|#1\right\|}

\newcommand{\poly}{\mathrm{poly}}
\newcommand{\polylog}{\mathrm{polylog}}

\newcommand{\Rho}{\mathscr{P}}
\renewcommand{\P}{\mathbb{P}}

\newcommand{\ind}{\mathbbm{1}}
\newcommand{\sgn}{\mathrm{sgn}}

\renewcommand{\L}{\mathcal{L}}
\newcommand{\diag}[1]{\mathrm{diag}\left\{#1\right\}}

\newcommand{\BigO}[1]{\mathcal{O}\!\left(#1\right)}
\newcommand{\BigOmega}[1]{\Omega\!\left(#1\right)}
\newcommand{\BigTheta}[1]{\Theta\!\left(#1\right)}
\newcommand{\BigtldO}[1]{\mathcal{\widetilde{O}}\!\left(#1\right)}

\newcommand{\BigtldTheta}[1]{\widetilde{\Theta}\!\left(#1\right)}

\newcommand{\myge}[1]{\stackrel{(\text{#1})}{\ge}}
\newcommand{\myeq}[1]{\stackrel{(\text{#1})}{=}}
\newcommand{\myle}[1]{\stackrel{(\text{#1})}{\le}}
\newcommand{\mylesim}[1]{\stackrel{(\text{#1})}{\lesssim}}

\newcommand{\btt}[1]{{(#1)}}

\newcommand{\bvw}{\overline{\vw}}
\newcommand{\bvv}{\overline{\vv}}
\newcommand{\bvu}{\overline{\vu}}

\newcommand{\bv}{\overline{v}}
\newcommand{\bw}{\overline{w}}
\newcommand{\bu}{\overline{u}}

\newcommand{\ha}{\hat{a}}
\newcommand{\hb}{\hat{b}}
\newcommand{\hsigma}{\widehat{\sigma}}
\newcommand{\Deltamax}{\Delta_{\max}}

\newcommand{\tldmu}{\tilde{\mu}}
\newcommand{\tldE}{\tilde{\E}}
\newcommand{\tldalpha}{\tilde{\alpha}}

\newcommand{\tldtT}{\tilde{\tT}}

\newcommand{\tldvw}{\widetilde{\vw}}
\newcommand{\tldvv}{\widetilde{\vv}}
\newcommand{\tldvu}{\widetilde{\vu}}
\newcommand{\tldv}{\tilde{v}}

\newcommand{\tldmW}{\widetilde{\mW}}

\newcommand{\btldvv}{\overline{\tldvv}}
\newcommand{\btldv}{\overline{\tldv}}

\newcommand{\vvler}{\vv_{\le r}}
\newcommand{\vuler}{\vu_{\le r}}
\newcommand{\vwler}{\vw_{\le r}}
\newcommand{\bvvler}{\bvv_{\le r}}
\newcommand{\bvuler}{\bvu_{\le r}}
\newcommand{\tldvvler}{\tldvv_{\le r}}
\newcommand{\tldvuler}{\tldvu_{\le r}}
\newcommand{\ckvvler}{\ckvv_{\le r}}
\newcommand{\bckvw}{\overline{\ckvw}}

\newcommand{\vvgtr}{\vv_{> r}}
\newcommand{\vugtr}{\vu_{> r}}
\newcommand{\vwgtr}{\vw_{> r}}
\newcommand{\tldvvgtr}{\tldvv_{> r}}
\newcommand{\tldvugtr}{\tldvu_{> r}}
\newcommand{\bvvgtr}{\bvv_{> r}}
\newcommand{\bvugtr}{\bvu_{> r}}

\newcommand{\bbvvler}{\overline{\vvler}}
\newcommand{\bbvwler}{\overline{\vwler}}
\newcommand{\bbvuler}{\overline{\vuler}}
\newcommand{\bbtldvvler}{\overline{\tldvvler}}
\newcommand{\bbtldvuler}{\overline{\tldvuler}}

\newcommand{\bbvvgtr}{\overline{\vvgtr}}
\newcommand{\bbvwgtr}{\overline{\vwgtr}}
\newcommand{\bbvugtr}{\overline{\vugtr}}
\newcommand{\bbtldvvgtr}{\overline{\tldvvgtr}}

\newcommand{\Sname}[1]{S_{\text{#1}}}

\newcommand{\Sgoodi}{\Sname{good,$i$}}
\newcommand{\Spot}{\Sname{pot}}
\newcommand{\Spotone}{\Sname{pot,1}}
\newcommand{\Spoti}{\Sname{pot,$i$}}
\newcommand{\Spotj}{\Sname{pot,$j$}}
\newcommand{\Spotk}{\Sname{pot,$k$}}
\newcommand{\Sdense}{\Sname{dense}}

\newcommand{\epsdir}{\eps_{\text{dir}}}
\newcommand{\epsnm}{\eps_{\text{norm}}}
\newcommand{\errpot}{\err_{\text{pot}}}

\newcommand{\grad}{\mathrm{grad}}
\newcommand{\op}{\mathrm{op}}

\newcommand{\tDelta}{{\bm{\Delta}}}
\newcommand{\vdelta}{{\bm{\delta}}}

\newcommand{\hL}{\widehat{\L}}

\usepackage{cleveref}
\crefname{lemma}{Lemma}{Lemmas}
\crefname{theorem}{Theorem}{Theorems}
\crefname{claim}{Claim}{Claims}
\crefname{figure}{Figure}{Figures}
\crefname{section}{Section}{Sections}
\crefname{remark}{Remark}{Remarks}
\crefname{assumption}{Assumption}{Assumptions}
\crefname{appendix}{Appendix}{Appendices}
\crefname{proposition}{Proposition}{Propositions}
\AddToHook{cmd/appendix/before}{%
    \crefalias{section}{appendix}%
    \crefalias{subsection}{appendix}%
    \crefalias{subsubsection}{appendix}%
}

\newcommand{\bvwler}{\bvw_{\le r}}
\newcommand{\bvwgtr}{\bvw_{> r}}

\newcommand{\tldT}{\widetilde{T}}
\newcommand{\hT}{\hat{T}}
\newcommand{\err}{\mathrm{err}}
\usepackage{bbm}

\newcommand{\vchi}{\bm{\chi}}

\newcommand{\tldvtheta}{\widetilde{\vtheta}}

\newcommand{\ckvw}{\check{\vw}}
\newcommand{\ckvv}{\check{\vv}}

\newcommand{\bckv}{\overline{\check{v}}}
\newcommand{\bckvv}{\overline{\ckvv}}

\newcommand{\ckf}{\check{f}}

\newcommand{\unif}{\mathrm{Unif}}

\newcommand{\tldf}{\widetilde{f}}
\newcommand{\avgrho}{\frac{1}{|\Rho|} \sum_{\rho\in\Rho}}
\newcommand{\id}{\mathrm{Id}}

\author{
  Mo Zhou$^1$\qquad 
  Weihang Xu$^1$\qquad 
  Simon S. Du$^1$\qquad 
  Maryam Fazel$^{1,2}$\\
  $^1$University of Washington
  \qquad
  $^2$Amazon, Inc.\\
  \texttt{\{mozhou17,xuwh,ssdu\}@cs.washington.edu, mfazel@uw.edu}
}

\title{Learning Orthogonal Multi-Index Models Beyond Small Initialization: Incremental Learning, Competitive Dynamics and Symmetry}

\begin{document}

\maketitle

\begin{abstract}
    Recent work has identified incremental learning in shallow networks trained on single-index and multi-index models. However, existing analyses often rely on simplifying settings, such as small initialization, correlation loss, or layer-wise training. These choices reduce neuron interactions and leave some feature learning dynamics under standard initialization unexplored. 
    We study training dynamics for polynomial-width two-layer networks learning orthogonal multi-index targets under standard initialization using polynomially many samples. We first prove that incremental learning still occurs: the loss decreases sequentially according to the Hermite expansion of the target, with lower-order components learned before higher-order components recover the individual target directions. 
    In this standard initialization regime, training also shows a competitive reallocation of parameter mass: after the total mass fits the target mean and stabilizes, mass shifts into the target subspace and then concentrates on aligned neurons. Our theoretical analysis uses slightly modified gradient flow, while vanilla gradient descent empirically exhibits the same qualitative dynamics. Technically, we introduce a symmetry-based finite-width approximation via symmetrized networks, rather than comparing directly with an infinite-width limit. This yields better control of approximation errors and may be of independent interest.
    \end{abstract}

\section{Introduction}
Understanding how neural networks learn structured target functions is a central question in the theory of feature learning. A canonical setting is the learning of low-dimensional target functions, such as single-index and multi-index models, by shallow networks~\citep{arous2021online,damian2022neural,ba2022high,bietti2022learning,mousavi2022neural,glasgow2023sgd,lee2024neural,troiani2024fundamental,collins2024hitting,zhou2024does,montanari2026phase} (see also the recent survey \cite{bruna2025survey}). In these models, the target depends on the input only through a few relevant directions, so feature learning amounts to discovering these hidden directions through gradient-based training.

One particularly interesting phenomenon in this setting is \emph{incremental learning}: the network does not learn all parts of the target at once, but instead learns them in a structured order~\citep{abbe2022merged,abbe2023sgd,dandi2024two,bietti2023learning}. Under Gaussian inputs, this order is naturally described by the Hermite expansion of the target. Lower-order components can be learned first, while higher-order components become important later for identifying the relevant directions.

For multi-index models, a recent line of work \citep{li2020learning,ge2021understanding,oko2024learning,ren2024learning,csimcsek2024learning,ren2025emergence} has studied
targets of the form
\[
    f_*(\vx)=\sum_{i=1}^r a_i^*\sigma(\vw_i^{*\top}\vx),
    \qquad
    \vx\sim N(\vzero,\mI_d),
\]
where the target directions $\{\vw_i^*\}_{i=1}^r$ are orthonormal. Existing analyses often rely on simplifying modifications such as small initialization, correlation loss, or layer-wise training
\citep{damian2022neural,damian2023smoothing,csimcsek2024learning,
zhang2025neural,ren2024learning}. These choices make the dynamics more tractable by reducing the role of the current network output in the gradient signal, so that each neuron approximately amplifies its weak initial correlation with the target. Thus, while these works establish versions of incremental learning, they leave open the technically more challenging standard-initialization regime, where the network output is non-negligible and neurons interact through their collective prediction. These interactions substantially change the learning mechanism, requiring new analysis techniques. 

In this paper, we show that incremental learning persists under standard initialization, but through a different mechanism. The population loss decreases in Hermite order: the network first fits the mean, then recovers the target subspace through the second-order component, and finally uses higher-order components to identify the individual target directions. This process is not simply independent amplification of weakly aligned neurons. After the mean is fitted, the total parameter mass is effectively constrained, and later stages learn by reallocating this mass \emph{competitively}: first from irrelevant directions into the target subspace, and then from diffuse target-subspace neurons to neurons aligned with individual target directions.
\begin{theorem}[Informal version of \cref{thm: main}]
    Under standard random initialization, (modified) gradient flow on a polynomial-width two-layer network learns orthogonal multi-index targets to small population loss using polynomial samples. Moreover, the loss decreases incrementally according to the Hermite expansion of the target and parameter mass undergoes competitive dynamics described in \cref{fig: illustration}.
\end{theorem}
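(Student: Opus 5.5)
The proof follows the stagewise structure in the statement and splits the dynamics into Hermite-order phases. Write the student as $f(\vx)=\sum_{k} a_k\sigma(\vw_k^\top\vx)$ and expand both $f$ and $f_*$ in Hermite polynomials. The population loss then decomposes as $\sum_{\ell\ge 0}\|\tT_\ell(f)-\tT_\ell(f_*)\|^2$, where $\tT_\ell(f)=\hsigma_\ell\sum_k a_k\|\vw_k\|^\ell\bvw_k^{\otimes\ell}$ up to normalization.

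The main technical device is the symmetrized network. Instead of comparing the finite-width dynamics with an infinite-width mean-field limit, I would compare them with the network obtained by averaging over the symmetry group that fixes the target: permutations and sign flips of $\{\vw_i^*\}$, and orthogonal transformations of the complement of the target subspace. I would first show that the symmetrized network's dynamics are closed on a low-dimensional set of invariants. These are the total mass $\sum_k a_k\|\vw_k\|^{\ell}$, the mass inside versus outside the target subspace, and the per-direction alignment masses. I would then bound the deviation between the true and symmetrized trajectories by a Grönwall argument. Concentration at initialization gives a $\poly(d)^{-1}$ symmetry-breaking error at polynomial width, and this error is amplified only by a bounded factor over each phase. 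The finite-sample error is handled the same way: uniform concentration of empirical gradients over the parameter set, with $\poly(d)$ samples, enters as an additional perturbation term in the same Grönwall bound.

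The phase analysis on the reduced dynamics proceeds in three stages.

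\emph{Stage 1.} The order-$0$ term dominates the gradient. The total mass moves quickly until $\hsigma_0\sum_k a_k\approx\hsigma_0\sum_i a_i^*$, and the loss drops to the order-$\ge1$ residual.

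\emph{Stage 2.} With the mean now fitted, I show the total mass is approximately conserved, using a Lyapunov-type argument in which deviations are restoring. The order-$2$ term then drives a competitive flow. Mass moves from directions orthogonal to $\mathrm{span}\{\vw_i^*\}$ into the subspace, and the loss drops by the order-$2$ component.

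\emph{Stage 3.} Inside the subspace, higher-order terms (order $\ge 3$, or order $4$ when odd terms vanish) break the rotational symmetry within the subspace. Neurons whose directions are closer to some $\vw_i^*$ grow at the expense of diffuse ones. This is a replicator-like equation, and I would analyze it through the potential $\sum_i\sum_k a_k\langle\bvw_k,\vw_i^*\rangle^{p}$, showing it increases monotonically until the mass concentrates near each $\vw_i^*$ with weight $a_i^*$. The modification to the gradient flow (for example, norm or balancedness control of $a_k$ versus $\|\vw_k\|$) is invoked here to keep neuron norms bounded and to prevent blow-up.

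I expect the main obstacle to be Stage 3. Its time scale is polynomially long because the higher-order signals are weak, so the symmetry-breaking and sampling errors must stay controlled over that whole horizon. My first approach would be to use the exact invariance of the symmetrized dynamics, so that the error equation is linearized around a trajectory whose Jacobian is contractive in the symmetry-breaking directions. This would give a uniform-in-time bound instead of an exponentially growing one. If that fails, I would fall back to a stage-by-stage restart, re-symmetrizing the error at the start of each stage and accepting polynomial losses in the width.
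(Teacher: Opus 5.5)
Your loss profile (mean first, then the subspace through a competitive two-mass flow, then the directions) matches the paper. The error control, however, is what makes the result hold at polynomial width, and it does not go through as proposed. Two of your claims are false: that the symmetrization and sampling errors are amplified ``only by a bounded factor'' per phase, and that you can linearize around a symmetric trajectory whose Jacobian contracts in the symmetry-breaking directions. The reference particle flow is expansive precisely along the directions of learning.
\begin{itemize}
\item During subspace learning, the stability term $A_t=-2\langle\grad_{\vv}\L(\tldf)-\grad_{\ckvv}\L(\tldf),\vv-\ckvv\rangle$ for the coupling error contains the positive rate $4\hsigma_2^2\bigl(1-\frac1r\sum_{i\le r}\tldE w_i^2\bigr)$. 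Its time integral is about $\log d$, which gives a $\sqrt{dr}$ amplification over Stage 1.
\item During direction learning, the best-aligned particles escape the diffuse configuration and their norms grow by $m^{\Theta(1)}$, and their errors grow with them.
\item That stage lasts $\Theta(r/\log m)=\omega(\log d)$, which is polylogarithmic in $d$, not polynomial. Your fallback of restarting Gr\"onwall at each stage with the $O(1)$ Lipschitz constant of the particle field therefore gives only an $e^{Cr/\log m}=d^{\omega(1)}$ bound inside that single stage.
\end{itemize}
The missing idea has two parts. First, use 2-homogeneity and the positive semidefinite part of the Jacobians $\tJ_k$ to bound the expansive part of $A_t$ by $\frac{\rd}{\rd t}\log\norm{\ckvv}^2$ plus a remainder that is either $O(\log m/r)$ or has small time integral (e.g.\ $\int\bckv_i^2\,\rd t\le\log r+O(1)$). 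Second, propagate the norm-normalized error $\Deltamax=\max_i\norm{\vv_i-\ckvv_i}/\norm{\ckvv_i}$ together with the average error $\Delta$, stage by stage. The resulting amplification is polynomial: $\sqrt{dr}$, then $\sigma_1^4m^{o(1)}$, then $\poly(1/\epsnm)$. This is what dictates the polynomial width and sample size. Note also that the group-sampling error $\ckf-\tldf$ is not an initialization error. It is a persistent forcing of size $\sqrt{BB_\infty\log m/m}$ at every time. Once aligned neurons carry squared norms of order $m$, it stays small only because each aligned neuron's orbit remains within $\sqrt{\epsdir}$ of $\pm\ve_i$, which is why alignment accuracy $\epsdir=m^{-\Theta(1)}$ is needed.

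Your choice of group also breaks the particle correspondence. If the group permutes the target directions, the symmetrized flow forces equal aligned mass in every direction, so all aligned neurons grow at a common rate. In the actual flow, a neuron aligned with $\ve_j$ grows at a rate set by its own residual $1-\ha_j$. The $\ha_j$ are not balanced at the end of direction learning: the paper only obtains a spread of $m^{o(1)}$ and needs an extra step to rebalance. Hence $\norm{\vv_i}/\norm{\ckvv_i}$ drifts away from $1$ for aligned neurons. Equivalently, $B_\infty$ becomes of order the largest aligned squared norm, since each aligned orbit spans all $r$ directions. The paper's group contains only coordinate sign flips and permutations of the irrelevant coordinates, so every orbit keeps its in-subspace alignment pattern up to signs, which is harmless because $\sigma$ is even.

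Your Stage 3 is also not closed on invariants, and a monotone potential cannot settle it. During direction learning the aligned mass is negligible; what matters is individual particles. The best-initialized neuron in each direction has alignment about $2\log(2m)/r$, and every neuron outside the potential set lies below it by a factor $1-\omega^{-0.01}$. The finite-time blow-up of $\dot p\approx 8\hsigma_4^2p^2$ lets that neuron escape while the dense neurons barely move, provided the residuals $1-\tldE w_i^2$ stay balanced across $i$.

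Protecting that balance is the purpose of the paper's stepsize modification, $\gamma_t=m$ on large aligned neurons for a short window. The other modification is weight decay, switched off at the end. Its role is to keep a positive competitive margin, so that mass transfers at exponential rather than $\Theta(1/t)$ rates; neither modification is about preventing norm blow-up. A quantity like $\sum_i\sum_k a_k\langle\bvw_k,\vw_i^*\rangle^p$ is not a Lyapunov function of this flow. Its growth would not certify that every direction acquires a near-perfectly aligned neuron.

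Finally, $\tT_\ell(f)=\hsigma_\ell\sum_k a_k\norm{\vw_k}^\ell\bvw_k^{\otimes\ell}$ is incorrect. For $\sigma=|x|$ the factor is $\norm{\vw_k}$, and for non-homogeneous $\sigma$ the coefficient depends nonlinearly on $\norm{\vw_k}$. The 2-homogeneous parametrization $\norm{\vw}^2\sigma(\bvw^\top\vx)$ is what makes the Hermite decomposition exact.
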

Thus, this identifies competitive mass reallocation as an additional mechanism for feature learning under standard initialization.

\begin{figure}[t]
    \centering
    \includegraphics[width=0.32\textwidth]{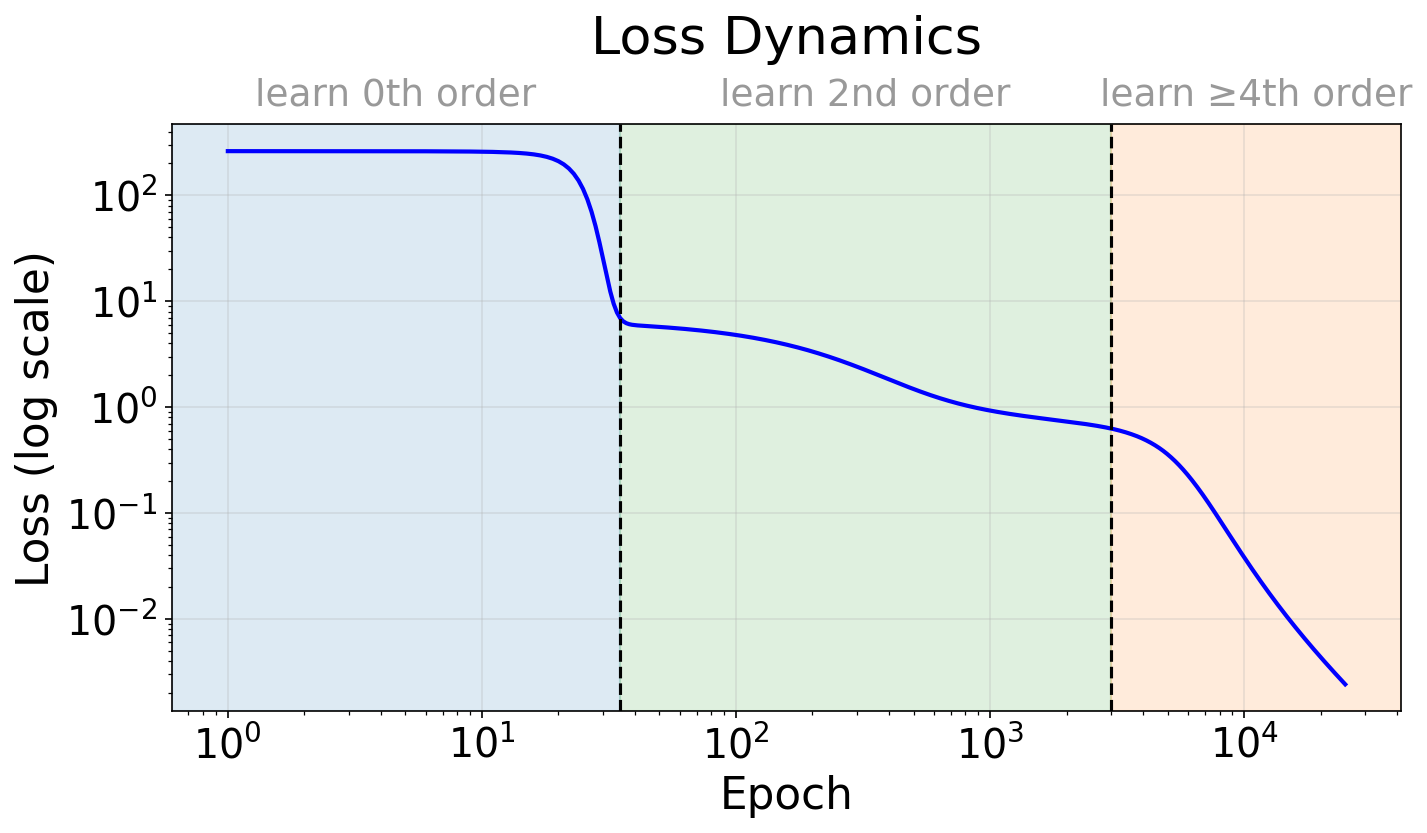}
    \hfill
    \includegraphics[width=0.32\textwidth]{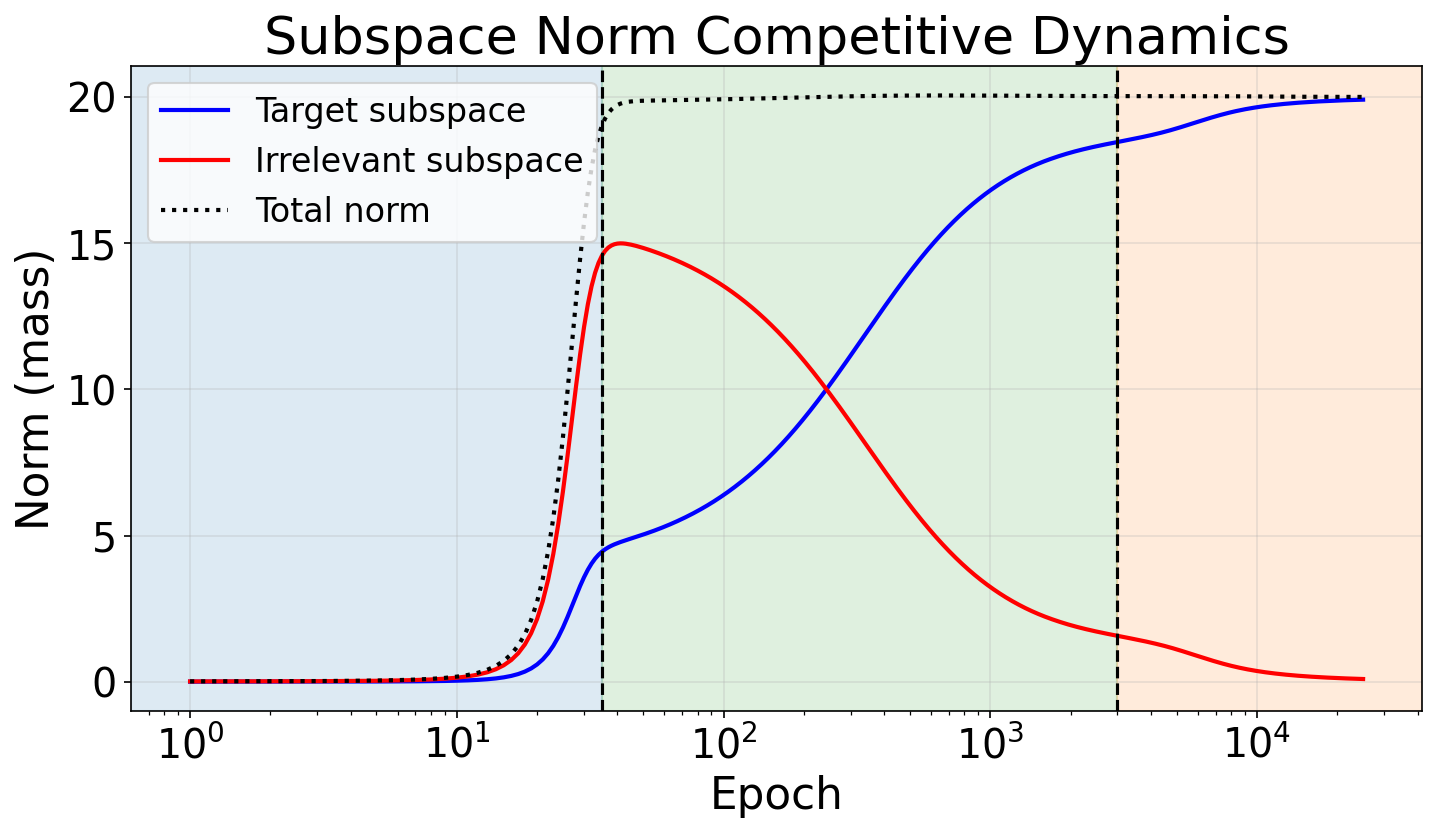}
    \hfill
    \includegraphics[width=0.32\textwidth]{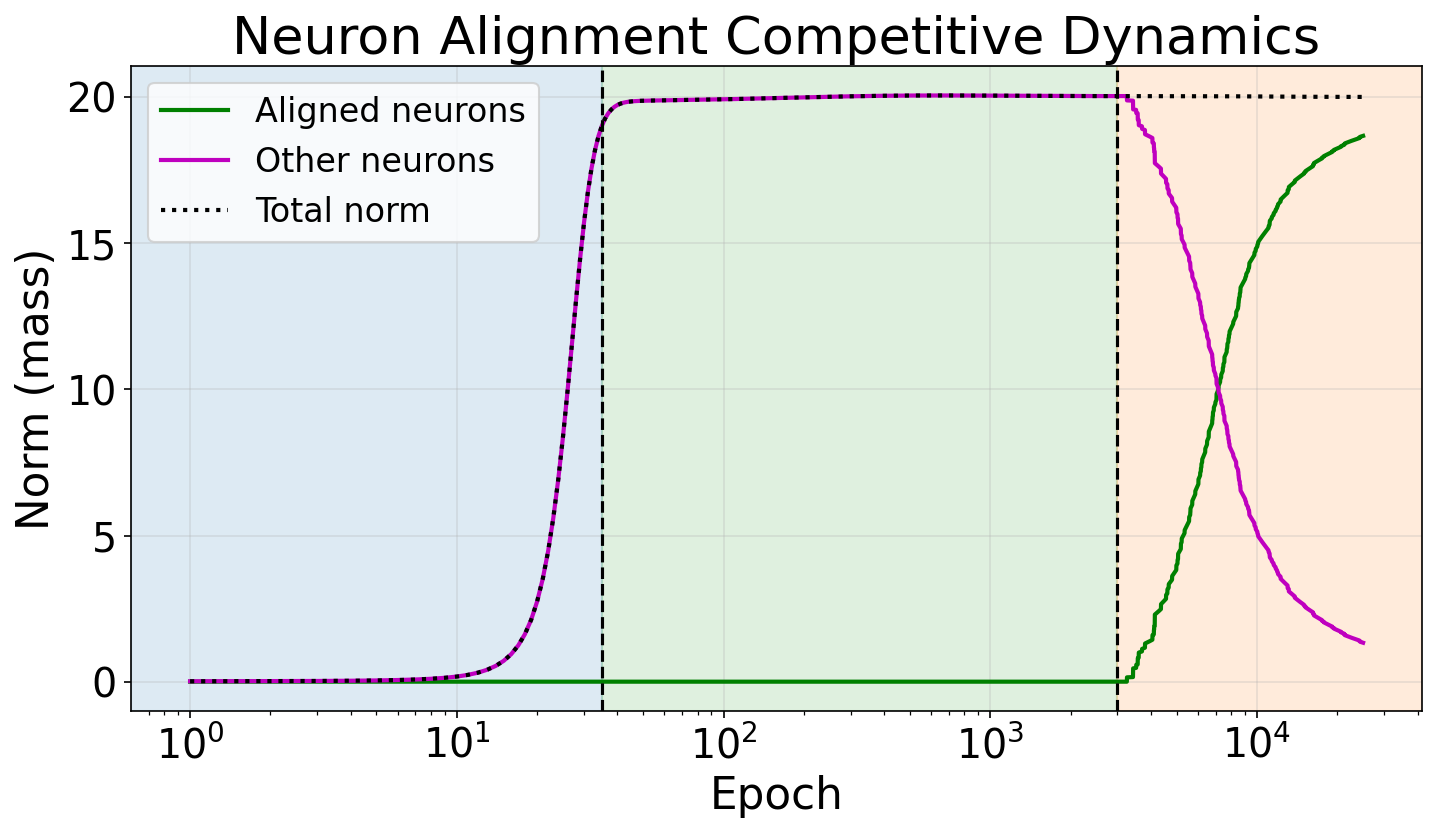}

    \caption{Training dynamics for learning the orthogonal multi-index target \eqref{eq:target-main}. 
    \textbf{Left:} the loss decreases incrementally, with lower-order Hermite components fitted before higher-order components. 
    \textbf{Middle:} after the total mass quickly stabilizes, mass shifts from irrelevant directions to the target subspace. 
    \textbf{Right:} mass concentrates from dense, non-aligned neurons onto target-aligned neurons. 
    ``Mass'' is defined as average squared parameter norm over the indicated subspace or neuron groups.
    We train with vanilla gradient descent using $d=100$, $r=20$, $m=250$, and $n=50000$.
    The vanilla GD trajectory shown here exhibits the same qualitative stagewise behavior as the dynamics characterized in \cref{thm: main} under~\eqref{eq: dynamic of w}.
    }
    \label{fig: illustration}
\end{figure}

\subsection{Related works, key challenges and our contributions}
\paragraph{Incremental learning beyond small initialization.}
A growing line of work has shown that shallow networks can learn low-dimensional targets incrementally. 
In sparse and single-index settings, this appears in saddle-to-saddle dynamics \cite{saxe2013exact,jacot2021saddle,pesme2023saddle}, where higher-order correlations emerge progressively during training 
\citep{damian2023smoothing,bietti2022learning,lee2024neural,berthier2025learning}.
For multi-index models, related works study staircase dynamics \cite{abbe2022merged,abbe2023sgd} and establish subspace or directional recovery, often under slight modifications such as small initialization, correlation loss, or layer-wise training, to achieve strong theoretical guarantees
\citep{damian2022neural,dandi2024two,csimcsek2024learning,ren2024learning,oko2024learning,zhang2025neural,ren2025emergence}. Together, these results show that gradient-based methods can exploit the Hermite structure of the target to recover hidden low-dimensional features.

A common reason these analyses are tractable is that the dynamics become approximately target-driven. Under small initialization with scale $\alpha\ll 1$, a two-layer learner starts with and remains at  $f_t \ll f_*$ during the early feature-learning phase. Each neuron is therefore driven mainly by its correlation with the target, and the many-neuron dynamics are approximately decoupled.
However, this decoupled picture does not apply under standard initialization, where the network output is non-negligible from the start and neurons interact through their shared prediction.

Our first contribution is to prove that incremental learning nevertheless persists in this interacting regime. With standard random initialization and polynomially many neurons and samples, the population loss decreases in Hermite order for the analyzed dynamics. This extends the analysis of incremental learning beyond the small-initialization setting studied in prior work. Related results exist, but often rely on infinite-width limits or nonparametric dynamics \citep{berthier2025learning,bietti2023learning}.

\paragraph{Competitive dynamics in feature learning.}
Many existing analyses of feature-learning dynamics under small initialization largely reduce to an amplification mechanism of the form $\dot x = ax^P$, with $P \ge 1$ and $a>0$
(e.g., \citep{allen2020towards,cao2022benign,ge2021understanding,
csimcsek2024learning,ren2025emergence}). This describes how a useful feature grows from a weak initial correlation with the target. Under standard initialization, however, feature learning can also be competitive: different parts of the parameter population interact, and the growth of one component can occur at the expense of another.

We identify this competitive effect explicitly. After the 0th-order component is fitted, the total parameter mass is effectively constrained. During 2nd-order learning, mass moves from irrelevant directions into the target subspace. During higher-order learning, mass moves from non-aligned neurons to neurons aligned with individual target directions. Thus the mechanism is not only amplification of initially better-aligned neurons, but also competitive mass reallocation across directions and neurons. The resulting effective dynamics have a Lotka--Volterra-type structure \citep{hofbauer1998evolutionary}; see \cref{sec: dynamics main-text}.

Competitive effects have also appeared in phenomena such as grokking
\citep{varma2023explaining,lyu2023dichotomy} and circuit competition in mechanistic studies of large language models
\citep{ortu2024competition,lindsey2025biology}. We give a precise characterization of one such competitive mechanism in an analytically tractable setting.

\paragraph{Symmetry-based finite-width approximation.}
The main technical challenge is to control neuron interactions under standard initialization. Mean-field analyses capture such interactions through infinite-width limits \citep{chizat2018global,mei2018mean,chizat2019lazy}, but turning these limits into quantitative finite-width guarantees can require very large width, many samples, or restrictions to short time horizons
\citep{mousavi2024learning,wang2024mean,takakura2024mean,berthier2025learning, montanari2025dynamical}. Recent works develop finite-width coupling techniques, inspired by propagation of chaos
\citep{sznitman2006topics,chaintron2022propagation}, that obtain polynomial runtime and sample complexity by comparing finite networks with their infinite-width limits
\citep{li2020learning,mahankali2023beyond,glasgow2025propagation}. However, such coupling can still be delicate in the high-accuracy regime, as the required width may depend super-polynomially on the target accuracy $1/\eps$.

A key difficulty is particle-wise coupling near the target directions. In a finite-width network, the neurons that eventually align with the target come from the most favorably initialized particles in a finite sample. In the infinite-width limit, however, there are particles with arbitrarily better initial alignment, which can move faster than any fixed finite counterpart and lead to large coupling errors.

We instead use the symmetry of the target and the population dynamics to construct a finite \emph{symmetrized network}. This network preserves the invariances that make the population dynamics tractable, while retaining a direct finite-particle correspondence with the actual network. We then prove that the actual finite-width empirical dynamics remain close to this symmetrized counterpart; see \cref{sec: coupling main-text}. This avoids projecting neurons onto the sphere and yields a polynomial-width coupling bound, allowing us to control the analyzed finite-width training trajectory under standard initialization.

\paragraph{Comparison with related works on orthogonal multi-index models.}
Several works consider closely related orthogonal additive targets. \cite{li2020learning} study the full-dimensional setting $r=d$. In contrast, for $r\ll d$, we characterize an additional competitive subspace-recovery stage that is absent when $r=d$. \cite{ge2021understanding} analyze analogous incremental dynamics for orthogonal tensor decomposition rather than neural networks, while \cite{oko2024learning} consider nearly orthogonal ridge combinations under layer-wise training. \cite{ben2026learning} provide a sharp characterization of SGD dynamics for quadratic targets, where the relevant signal consists only of second-order Hermite components. \cite{ren2024learning} obtain sharper information-exponent bounds using spherical SGD with correlation loss followed by second-layer fitting, while \cite{csimcsek2024learning} analyze population correlation-loss dynamics in the absence of low-order Hermite components. \cite{ren2025emergence} study online SGD with small initialization in the information-exponent $>2$ regime and precisely characterize the resulting dynamics. \cite{zhu2025gradient} analyze square-loss GD for orthogonal ReLU teachers, but assume small initialization and weak recovery of the target directions. Our focus is instead on interacting finite-width square-loss dynamics under standard initialization, without layer-wise training. In particular, we characterize both incremental loss reduction and competitive mass reallocation arising from the coexistence of low- and high-order components.

\section{Preliminaries and problem setup}\label{sec: setup}

\paragraph{Notation.}
Let $[n]=\{1,\ldots,n\}$. Bold symbols, e.g. $\vv,\mA,\tT$, denote vectors, matrices, and tensors. We use $\mathcal{O},\Theta,\Omega$ to hide universal constants, and $\widetilde{\mathcal{O}},\widetilde{\Theta},\widetilde{\Omega}$ to also hide polylogarithmic factors. We also write $f\lesssim g$ for $f=\mathcal{O}(g)$, and $a=b\pm\delta$ to mean $|a-b|\le |\delta|$. We use $v^{\btt{t}}$ to denote the value of $v$ at time $t$.

\paragraph{Setup.} We consider Gaussian data $\vx\sim N(\vzero,\mI_d)$ and
the orthogonal multi-index target
\begin{equation}\label{eq:target-main}
    f_*(\vx)=\sum_{j=1}^r \sigma(\vw_j^{*\top}\vx),
\end{equation}
where the target directions $\{\vw_j^*\}_{j=1}^r$ are orthonormal. By rotational invariance of the Gaussian distribution, we assume
without loss of generality that $\vw_j^*=\ve_j$ for all $j\le r$.

The learner is a two-layer network with mean-field scaling factor $1/m$ \cite{li2020learning,mahankali2023beyond,glasgow2025propagation}
\begin{equation}\label{eq:network-main}
    f_{\mW}(\vx)
    =\frac{1}{m}\sum_{i=1}^m \norm{\vw_i}_2^2\,
        \sigma(\bvw_i^\top \vx),
\end{equation}
with parameters $\mW=(\vw_1,\ldots,\vw_m)^\top\in\R^{m\times d}$ and $\bvw_i=\vw_i/\norm{\vw_i}_2$ . The 2-homogeneous parametrization has been used in many previous works \cite{li2020learning,zhou2021local,ge2021understanding,ren2025emergence}.

\begin{assumption}[Activation]\label{assum: activation}
The activation $\sigma:\R\to\R$ is even\footnote{This assumption also covers ReLU and related non-even activations under symmetric initialization: $\vw_i\sim N(\vzero,\frac{1}{d}\mI_d)$ for $i\le m/2$ and $\vw_{i+m/2}=-\vw_i$. Each pair acts as a single neuron with the even activation $\sigma(x)=\phi(x)+\phi(-x)$.} and satisfies one of the following conditions. Either
$\sigma(x)=|x|$, or $\sigma$ is smooth with $\sigma(0)=0$,
$\sup_x(|\sigma'(x)|+|\sigma''(x)|)=\BigO{1}$,
$\sum_{k\ge0} k^2\hsigma_{2k}^2=\BigO{1}$, and 
$\hsigma_{2k}\ne0$, $\hsigma_{2k+1}=0$ for all $k\ge0$. Here $\hsigma_k$ is the $k$-th Hermite coefficient of $\sigma$ defined below.
\end{assumption}

This assumption is naturally obtained by symmetrizing activation $\phi$ by $\sigma(x)=\phi(x)+\phi(-x)$ and includes many commonly used activations: ReLU gives
$\sigma(x)=|x|$; smooth ReLU-type activations such as GeLU \cite{hendrycks2016gaussian} and Swish \cite{ramachandran2017searching} give smooth even activations after symmetrization.

We train on the empirical square loss $\hL$ on data $\vx_k\overset{\mathrm{i.i.d.}}{\sim} N(\vzero,\mI_d)$ and evaluate on population loss $\L$:
\begin{align*}
    \L(\mW)
    &=\frac12\E_{\vx\sim N(\vzero,\mI_d)}
        \bigl[(f_{\mW}(\vx)-f_*(\vx))^2\bigr],
    \quad
    \hL(\mW)
    =\frac{1}{2n}\sum_{k=1}^n
        \bigl(f_{\mW}(\vx_k)-f_*(\vx_k)\bigr)^2.
\end{align*}
Training follows gradient flow with weight decay starting from standard random initialization:
\begin{equation}\label{eq: dynamic of w}
    \frac{\rd \vw_i}{\rd t}
    =-\lambda_t\vw_i
      -m\gamma_{t,i}\nabla_{\vw_i}\hL(\mW),
    \qquad
    \vw_i^{(0)}\overset{\mathrm{i.i.d.}}{\sim}N(\vzero,\frac{1}{d}\mI_d)
\end{equation}
The factor $m$ compensates for the mean-field normalization in \eqref{eq:network-main}, yielding
an $\BigO{1}$ particle-level dynamics. Unless otherwise specified, we take
$\lambda_t=\lambda$ and $\gamma_{t,i}=1$, where $\lambda=r\lambda_0$ and
$\lambda_0>0$ is a sufficiently small numerical constant. 
The only exception is a short time interval in which $\gamma_{t,i}$ is modified for technical reasons; see \cref{sec: omit proof main result}.

The initialization above is standard. Each neuron is initialized with $\norm{\vw_i(0)}_2\approx1$. This is in contrast to small-initialization analyses, where $\norm{\vw_i(0)}_2\ll 1$ so that the network output is initially negligible and remains so for a long time~\cite{abbe2022merged,abbe2023sgd}.

\paragraph{Hermite and tensor decomposition.}
Let $\{h_k\}_{k\ge0}$ be the orthonormal Hermite basis under the standard Gaussian measure, and
write $\sigma(z)=\sum_{k\ge0}\hsigma_k h_k(z)$ as the Hermite expansion of $\sigma$, where
$\hsigma_k=\E_{z\sim N(0,1)}[\sigma(z)h_k(z)]$ is the $k$-th Hermite coefficient. For unit vectors
$\vu,\vv$, Gaussian orthogonality gives
$\E_{\vx}[h_k(\vu^\top\vx)h_\ell(\vv^\top\vx)]
=\mathbf{1}\{k=\ell\}\langle \vu,\vv\rangle^k$; see \cite{o2014analysis}.
Therefore the population loss has decomposition below \cite{ge2017learning}, which will play a central role in our proof
\begin{equation}\label{eq:hermite-loss-main}
    \L(\mW)
    =
    \frac12\sum_{k\ge0}\hsigma_k^2
    \norm{\tT_k-\tT_k^*}_F^2,
    \qquad
    \tT_k
    :=
    \frac1m\sum_{i=1}^m \norm{\vw_i}_2^2\bvw_i^{\otimes k},
    \qquad
    \tT_k^*
    :=
    \sum_{j=1}^r \ve_j^{\otimes k}.
\end{equation}
Here $\tT_k$ is the network's $k$th-order moment tensor and $\tT_k^*$ is the corresponding target
tensor, with the convention $\bvw^{\otimes0}=1$ and $\ve_j^{\otimes0}=1$.
Thus, by ``fitting the $k$-th order term,'' we mean $\tT_k\approx\tT_k^*$ in
\eqref{eq:hermite-loss-main}. The corresponding gradient formula is given in \cref{claim: tldw dynamic}.

We also use $\mathrm{P}_k$ for projection onto the degree-$k$ Hermite subspace and
$\mathrm{P}_{\ge k}:=\sum_{\ell\ge k}\mathrm{P}_\ell$. Define the degree-$k$ target energy
$E_k:=\frac12\norm{\mathrm{P}_k f_*}_{L^2}^2
=\frac12\hsigma_k^2\norm{\tT_k^*}_F^2$, so
$E_0=\frac12 r^2\hsigma_0^2$ and $E_k=\frac12 r\hsigma_k^2$ for $k\ge1$. Also write
$E_{\ge k}:=\sum_{\ell\ge k}E_\ell$. 

\section{Main Results}\label{sec: main result}

We now state our main result. We show that, under standard initialization, a polynomial-width two-layer network learns the orthogonal multi-index target with polynomial samples. Moreover, the training trajectory exhibits incremental learning that the population loss decreases in Hermite order.

\begin{theorem}[Main result]\label{thm: main}
    Under \cref{assum: activation}, consider the dynamics given in \eqref{eq: dynamic of w} to learn target $f_*$ with $\log^2 d < r/\log^2 r < r\log^2 r<\log^3 d$. Then for any given target error $\eps>e^{-\log^2 r}$, there exists proper choice of weight decay $\lambda_t$ and stepsize $\gamma_t$, width $d^3\le m\le \poly(d)$ and sample size $n=\BigTheta{d^{3+c}}$ for any fixed small constant $c$, such that with probability $1-1/r$ we have test loss
    \[
        \L(\mW) \le \eps
    \]
    within $T=\BigO{r/\log m}$ time and thus recover target $f_*$.
    If activation\footnote{This is only needed to apply the local convergence result \cite{zhou2021local}. We expect similar results for general activations; see \cref{sec: local convergence} for discussions.} $\sigma(x)=|x|$, then for any given target error $\eps >0$ with $n=\BigTheta{d^{3+c}+\frac{dr^4\log d}{\eps^2}}$ the same dynamics reaches $\L(\mW)\le \eps$ within time $T_\eps=\BigO{r/\eps}$.

    Moreover, the loss decreases sequentially according to the Hermite expansion of the target:
    \begin{center}
\small
\setlength{\tabcolsep}{4pt}
\renewcommand{\arraystretch}{2.2}
\begin{tabular}{p{0.2\linewidth}|p{0.14\linewidth}|p{0.25\linewidth}|p{0.30\linewidth}}
\hline
Learned component
&Transition time
&Loss before transition
&Loss after transition
\\
\hline
$0$th order
& $\tau_0=\BigtldTheta{\frac 1r}$
& \makecell[l]{$\L_t\ge (1-\eta)E_{\ge0}$ \\for $0\le t\le \delta_\tau\tau_0$}
& $\L_{\tau_0}\le \eta E_0+(1+\eta)E_{\ge2}$
\\
\hline
$2$nd order
& $\tau_2=\BigtldTheta{1}$
& \makecell[l]{$\L_t\ge \eta E_0+(1-\eta)E_{\ge2}$ \\for $\tau_0\le t\le (1-\delta_\tau)\tau_2$}
& $\L_{\tau_2}\le \eta E_{\le2}+(1+\eta)E_{\ge4}$
\\
\hline
$\ge4$th order
& $\tau_4=\BigtldTheta{r}$
& \makecell[l]{$\L_t\ge \eta E_{\le2}+(1-\eta)E_{\ge4}$ \\
for $\tau_2\le t\le (1-\delta_\tau)\tau_4$}
& $\L_{\tau_4}\le \eps$
\\
\hline
\end{tabular}
\end{center}
    where $E_{\ge k}=\frac{1}{2}\norm{\mathrm{P}_{\ge k} f_*}_{L^2}^2$, $\eta>0$ is a small constant and $\delta_\tau=1/\sqrt{\log d}=o(1)$.
\end{theorem}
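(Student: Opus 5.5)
The plan is to analyze the population dynamics of an idealized \emph{symmetrized network} first, and then transfer the conclusions to the actual finite-width empirical trajectory \eqref{eq: dynamic of w} through a coupling argument. For each neuron, write $\vw_i=(\vw_{i,\le r},\vw_{i,>r})$ and track its norm $\norm{\vw_i}_2^2$, its target-subspace mass $\norm{\vw_{i,\le r}}_2^2$, and its per-coordinate alignments $\bw_{i,j}^2$ for $j\le r$. Using the Hermite decomposition \eqref{eq:hermite-loss-main} and the gradient formula of \cref{claim: tldw dynamic}, the population gradient on neuron $i$ splits into order-$k$ contributions weighted by $\hsigma_k^2$. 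Each contribution has the form $(\tT_k-\tT_k^*)$ contracted against $\bvw_i^{\otimes(k-1)}$ or $\bvw_i^{\otimes k}$. The symmetrized network is obtained by averaging the initialization over the group generated by permutations of $\{\ve_1,\dots,\ve_r\}$, coordinate sign flips, and rotations of the complement of the target subspace. This group is a symmetry of both $f_*$ and the population flow, so the moment tensors $\tT_k$ of the symmetrized network stay in the invariant class. In that class, $\tT_2=\alpha\,\mI_{\le r}+\beta\,\mI_{>r}$, and the higher-order tensors are described by a few scalars. The per-neuron dynamics therefore reduce to low-dimensional ODEs in the scalars $\norm{\vw_i}^2$, $\norm{\bvw_{i,\le r}}^2$, and $\bw_{i,j}^2$.

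The population analysis proceeds in three phases that match the table.
\begin{enumerate}
\item \textbf{Phase 0, $t\lesssim\tau_0=\BigtldTheta{1/r}$.} The order-0 term dominates, with coefficient $\hsigma_0^2 r$. The average mass $\frac1m\sum_i\norm{\vw_i}^2$ grows exponentially at rate $\Theta(r)$ toward the fixed point where $\tT_0\approx r$. Directions barely move on this timescale. This gives both the lower bound for $t\le\delta_\tau\tau_0$ and the upper bound $\eta E_0+(1+\eta)E_{\ge2}$.
\item \textbf{Phase 2, up to $\tau_2=\BigtldTheta{1}$.} With $\tT_0$ pinned, which acts as a mass constraint, the order-2 term drives the fraction $p_i=\norm{\bvw_{i,\le r}}^2$ by a replicator or Lotka--Volterra equation of the form $\dot p_i\approx c\,p_i(1-p_i)\cdot(\text{gap in }\tT_2)$. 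Starting from $p_i\approx r/d$, this takes time $\Theta(\log(d/r))=\BigtldTheta{1}$, and it yields $\tT_2\approx\mI_{\le r}$.
\item \textbf{Phase 4, up to $\tau_4=\BigtldTheta{r}$.} Inside the subspace, orders $\ge4$ amplify the alignment of the best-aligned neurons to individual $\ve_j$. Initially $\max_i\bw_{i,j}^2\approx\log m/r$, and the growth is competitive: mass flows from the dense neurons to the aligned ones while the order-0 and order-2 fits are maintained up to $\eta$ errors. I expect a time of order $r/\log m$ from the $(\log m/r)^{-1}$ growth rate of $\dot x\propto x^2$-type amplification. The modified $\gamma_{t,i}$ is used on the short interval where the dense neurons must be suppressed. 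After weak recovery, the local convergence result of \cite{zhou2021local} gives $\eps$ accuracy for $\sigma(x)=|x|$. For smooth $\sigma$, a direct local analysis reaches $\eps>e^{-\log^2 r}$.
\end{enumerate}

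The remaining steps are concentration and coupling. The empirical gradients concentrate around the population ones uniformly over relevant parameters, with error $\BigtldO{\sqrt{d^{3}/n}}$, using $n=d^{3+c}$ together with truncation and a net argument; the extra $dr^4\log d/\eps^2$ term covers the final local phase. I would then bound the difference between each actual neuron and its symmetrized counterpart using the same initial seed, via a Gr\"onwall argument along the three phases. The \textbf{main obstacle} is this coupling in Phase 4. There the dynamics are unstable, because the amplification around saddles magnifies errors by factors like $\exp(\BigtldO{1})$ or polynomial factors. I would handle it by controlling errors in relative or multiplicative form for each alignment coordinate, and by exploiting that the symmetrized network shares exactly the finite set of best-initialized particles. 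That shared set is what avoids the bad infinite-width particles. With the width $m\ge d^3$, the averaging error is small enough to be absorbed, and the failure probability $1/r$ comes from the union bound over these events.
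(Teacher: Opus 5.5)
\textbf{Gap: your symmetry group breaks the transfer step.} Your overall architecture matches the paper: symmetrize, run the stagewise population analysis, couple each actual neuron $\vv_i$ to the identity copy $\ckvv_i$ started from the same seed, and close with a stagewise Gr\"onwall argument. The problem is the group. You average over permutations of the relevant coordinates $\{\ve_1,\dots,\ve_r\}$, which is what gives you $\tT_2=\alpha\mI_{\le r}+\beta\mI_{>r}$ exactly. The paper's $\Rho$ contains only sign flips and permutations of the irrelevant coordinates $\{r+1,\dots,d\}$. This is deliberate: every element then preserves $|v_j|$ for each $j\le r$. Relevant permutations cause three failures.
\begin{enumerate}
\item The best particle for $\ve_j$ in your symmetrized network is a permuted copy of the globally best neuron, not the actual network's best neuron for $\ve_j$. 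So the ``shared finite set of best-initialized particles'' you invoke does not exist.
\item The averaging error $\ckf-\tldf$ no longer vanishes. After norm fitting, a heavy aligned neuron has $\norm{\ckvv_i}^2$ of order $m/K$, where $K$ is the number of heavy aligned neurons per direction. $\tldf$ spreads that neuron uniformly over all $r$ directions while $\ckf$ keeps it on one. Hence $\E_\vx(\ckf-\tldf)^2$ is of order $r/K$, independent of $m$, and nothing in the dynamics makes $K$ large. In the paper this term is $\lesssim r\sqrt{\epsdir}\log m$, because $\Rho$ maps an aligned neuron $\approx\pm\ve_j$ to $\approx\pm\ve_j$ and $\sigma$ is even, so $B_\infty\lesssim m\sqrt{\epsdir}$.
\item The particle coupling fails in the norm-fitting phase. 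In your $\tldf$ the aligned mass is identical across directions. In the actual network the $\ha_j$ at the end of direction learning are not balanced: the paper only has $\sigma_1^4/m\le\ha_j\le\sigma_1^4/m^{1-o(1)}$. So $\vv_i$ and $\ckvv_i$ of an aligned neuron are driven by different residuals $1-\ha_j$. Their norms separate by factors of order $\ha_j/\bar{\ha}$, up to $m^{o(1)}$, and $\Deltamax$ is not small.
\end{enumerate}
Averaging over rotations of the complement also turns your reference object into a continuum rather than a finite network. That part is comparatively harmless, since it does not touch the relevant coordinates.

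\textbf{What the paper pays instead, and two smaller points.} With the smaller group, balance across relevant directions is not automatic. The paper carries it as induction hypotheses: $\tldE w_i^2/\tldE w_j^2=1\pm o(1)$ in Stages 1.1, 1.2 and 2.1, and the nonincreasing log-spread of $(\ha_i,\hb_i)$ in Stage 2.2. The $m^{o(1)}$ spread of the $\ha_i$ after direction learning is exactly why $\gamma_{t,i}=m$ is switched on for the large-norm \emph{aligned} neurons on $[T_{21},T_\gamma]$. The purpose is to rebalance aligned masses quickly, so the residuals $1-\tldE w_i^2$ do not fluctuate across directions and destabilize the dense neurons' directions. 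It is not a device to suppress the dense neurons directly.

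First smaller point: you never fix $\lambda_t$, yet your exponential rates for subspace recovery and mass transfer rely on weight decay $\lambda=r\lambda_0>0$. Weight decay keeps the competitive advantage bounded below, e.g.\ $1-\beta_1\ge1-\beta_1^*=\Theta(\lambda_0)$. A replicator driven only by the $\tT_2$ gap slows to $1/t$ decay as the gap closes. That is too slow to reach the $1-O(\omega^5/r)$ subspace concentration the later stages use. The paper then sets $\lambda_t=0$ in a final stage to remove the bias.

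Second smaller point: the local convergence result of Zhou et al.\ needs $\L\le1/\poly(r)$. That is available only after the norm-fitting stages, not after weak recovery. For general $\sigma$, the $e^{-\log^2 r}$ accuracy comes from those stages themselves, not from a separate local analysis.
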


The table should be read as a loss-level characterization of incremental learning. Before each transition, the corresponding Hermite component remains present in the loss. After the transition, its contribution has been reduced to a small residual. Thus \cref{thm: main} identifies not only final target recovery, but also the order in which the target components are learned.

The polynomial width and sample bounds in \cref{thm: main} are not intended to be optimal. They arise from our finite-width coupling arguments in \cref{sec: coupling main-text}. Further discussion of these quantitative requirements and their relation to information-exponent analyses is given in \cref{sec: omit proof main result}.

This distinguishes our result from many small-initialization analyses, where incremental learning is often established through support or subspace recovery rather than through the decrease of the true square loss \citep{abbe2022merged,abbe2023sgd,dandi2024two}. In this sense, our result is closer in spirit to \citep{berthier2025learning,bietti2023learning,ren2025emergence}, although the setting and technical challenges are different. As discussed in \cref{sec: dynamics main-text}, we also reveal a competitive reallocation of parameter mass, a phenomenon \emph{not} captured by the decoupled dynamics typically used in small-initialization analyses.

The theorem also aligns with prior work separating feature learning from fixed-kernel methods in various settings
\citep{wei2019regularization,allen2019learning,li2020learning,mahankali2023beyond}. We obtain a polynomial-complexity guarantee, whereas fixed-kernel methods, including the neural tangent kernel \citep{jacot2018neural}, require super-polynomial sample complexity for such multi-index targets \citep{ghorbani2021linearized}.

The rest of the main text gives the proof strategy. In \cref{sec: symmetry}, we construct the finite symmetrized network. In \cref{sec: dynamics main-text}, we analyze its population dynamics, which exhibits the incremental and competitive dynamics. In \cref{sec: coupling main-text}, we transfer this analysis to the actual empirical network by coupling it to the symmetrized population dynamics.

\section{Symmetry in Dynamics}\label{sec: symmetry}

We next describe the symmetry structure used in the proof. The main idea is to replace the infinite-width reference dynamics with a finite \emph{symmetrized network}, which preserves the invariances of the target and population dynamics while remaining tied to the same finite particles in the actual network.

Throughout the rest of the paper, we use mean-field notation for empirical averages over particles. For the original network, write
$\mu_t:=m^{-1}\sum_{i=1}^m\delta_{\vw_i(t)}$. For the symmetrized network defined below using the group $\Rho$, write
$\tldmu_t:=(m|\Rho|)^{-1}\sum_{i=1}^m\sum_{\rho\in\Rho}
\delta_{\tldvw_{i,\rho}(t)}$. For any test function $g$, we use
$\E_{\mu_t}g:=m^{-1}\sum_i g(\vw_i(t))$ and
$\E_{\tldmu_t}g:=(m|\Rho|)^{-1}\sum_{i,\rho}g(\tldvw_{i,\rho}(t))$.
When the measure and time are clear from context, we abbreviate these as $\E g$ and $\tldE g$.

\paragraph{Symmetry in population dynamics.}
Let $\Rho$ be the finite group of transformations that independently flip coordinate signs and permute the irrelevant coordinates. That is, for every $\rho\in\Rho$,
\[
    \rho(\vx)
    =(s_1x_1,\ldots,s_rx_r,s_{r+1}x_{\pi(r+1)},\ldots,s_dx_{\pi(d)})^\top,
\]
where $s_i\in\{\pm1\}$ and $\pi$ is a permutation of $\{r+1,\ldots,d\}$. Since $\sigma$ is even
and $f_*$ only depends on the first $r$ coordinates, we have
$f_*(\rho(\vx))=f_*(\vx)$ for every $\rho\in\Rho$.

This invariance is preserved by the population dynamics. Indeed, the Gaussian distribution, the
target $f_*$, and the population loss are all invariant under $\Rho$, so the population gradient
field is equivariant with respect to the action of $\Rho$ on the particles. Consequently, if the particle measure
is $\Rho$-invariant at some time, it remains $\Rho$-invariant along population gradient flow.

\begin{claim}[Symmetry preservation]\label{claim:symmetry-preservation-main}
Consider population gradient flow. If the empirical particle measure $\mu_T$ is
$\Rho$-invariant at some time $T$, then $\mu_t$ remains
$\Rho$-invariant for all $t\ge T$. In particular, $f_{\mu_t}$ is $\Rho$-invariant for all
$t\ge T$.
\end{claim}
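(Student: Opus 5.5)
The plan is to show that the population vector field is $\Rho$-equivariant and then use uniqueness of solutions of the particle ODE to conclude that the symmetry, once present, persists.

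Throughout, each $\rho\in\Rho$ acts linearly by a signed permutation matrix, so $\rho$ is orthogonal. Since $\sigma$ is even, $\rho$ also preserves $f_*$ and the law $N(\vzero,\mI_d)$.

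\textbf{Step 1 (invariance of the network and loss).} Write $\mu\mapsto\rho_\#\mu$ for the pushforward. The first identity to check is
\[
f_{\rho_\#\mu}(\vx)=f_\mu(\rho^{-1}\vx).
\]
This holds because $\norm{\rho\vw}_2=\norm{\vw}_2$ and $\langle \rho\vw,\vx\rangle=\langle\vw,\rho^{-1}\vx\rangle$. Combined with $f_*(\rho^{-1}\vx)=f_*(\vx)$ and the invariance of the Gaussian law under $\rho^{-1}$, this gives $\L(\rho_\#\mu)=\L(\mu)$.

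\textbf{Step 2 (equivariance of the particle velocity).} The particle velocity under population gradient flow is
\[
V(\vw;\mu)=-\lambda_t\vw-\E_\vx\Bigl[(f_\mu(\vx)-f_*(\vx))\,\nabla_\vw\bigl(\norm{\vw}^2\sigma(\bvw^\top\vx)\bigr)\Bigr].
\]
We need $V(\rho\vw;\rho_\#\mu)=\rho V(\vw;\mu)$. To prove it, substitute $\vx=\rho\vy$. The residual then becomes $f_\mu(\vy)-f_*(\vy)$ by Step 1. The chain rule gives $\nabla_\vw[\phi(\rho^{-1}\cdot)](\rho\vw)=\rho\nabla\phi(\vw)$, where $\phi(\vw)=\norm{\vw}^2\sigma(\bvw^\top\vy)$ and $\rho$ is orthogonal. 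For $\sigma(x)=|x|$ the gradient is only defined almost everywhere; I will interpret it through the same a.e.\ or Clarke convention, which is itself $\Rho$-equivariant.

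The step sizes $\gamma_{t,i}$ and weight decay $\lambda_t$ must be independent of the particle identity for this to hold, or at least depend only on $\Rho$-invariant quantities. This is the one assumption I need to track for the modified flow, and I expect it to be the only delicate point.

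\textbf{Step 3 (propagation via uniqueness).} For the finite system, $\Rho$-invariance of $\mu_T$ means the following: for each $\rho$ there is a permutation $\pi_\rho$ of $[m]$ with $\rho\vw_i(T)=\vw_{\pi_\rho(i)}(T)$, counting multiplicity.

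Define $\vu_i(t):=\rho^{-1}\vw_{\pi_\rho(i)}(t)$. This family has empirical measure $\rho^{-1}_\#\mu_t$. By Step 2, $\{\vu_i\}$ solves the same particle ODE system, and it has the same initial data $\vu_i(T)=\vw_i(T)$.

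The population vector field is locally Lipschitz for the smooth activation, since $\sigma',\sigma''$ are bounded and away from $\vw=0$ the map $\vw\mapsto\bvw$ is smooth. Hence by uniqueness $\vu_i(t)=\vw_i(t)$ for all $t\ge T$, that is, $\rho_\#\mu_t=\mu_t$.

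For $\sigma(x)=|x|$, uniqueness is the main obstacle. Gaussian smoothing still makes $\vw\mapsto \E[\cdots]$ locally Lipschitz away from the origin, because $\E|\bvw^\top\vx|$-type expressions are smooth in $\vw\neq0$. I would try to verify this explicitly via the closed-form Hermite/arc-cosine kernel in \eqref{eq:hermite-loss-main}, i.e.\ by expressing the dynamics through $\langle\bvw_i,\bvw_j\rangle$ and $\langle\bvw_i,\ve_j\rangle$. Any particle reaching the origin would be handled separately.

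\textbf{Step 4 (conclusion).} The final claim follows from Step 1: $f_{\mu_t}(\rho\vx)=f_{\rho^{-1}_\#\mu_t}(\vx)=f_{\mu_t}(\vx)$ for every $\rho\in\Rho$ and $t\ge T$.
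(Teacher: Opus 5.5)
Your proposal is correct and is essentially the paper's argument: the Gaussian law, $f_*$ and the loss are $\Rho$-invariant, so the population field is $\Rho$-equivariant and invariance of the particle measure propagates (the appendix phrases the same idea through the coordinate-wise formula of \cref{claim: tldw dynamic}); you additionally make explicit the ODE-uniqueness step that the paper leaves implicit, and your caveat on $\gamma_{t,i}$ is satisfied because the paper's modified step size depends only on $\norm{\vv}$. One small correction for $\sigma(x)=|x|$: the kernel $\kappa(\rho)=\E[\sigma(z_1)\sigma(z_2)]$ (standard Gaussians with correlation $\rho$) is not smooth at alignment, since $\kappa''(\rho)\propto(1-\rho^2)^{-1/2}$ blows up at $\rho=\pm1$, which is precisely the Stage~2 regime; the field is nevertheless locally Lipschitz because $\kappa'(\langle\bvv,\bvu\rangle)\propto\arcsin\langle\bvv,\bvu\rangle=\tfrac{\pi}{2}-\arccos\langle\bvv,\bvu\rangle$ is Lipschitz in $\bvv$, so your uniqueness step still goes through.
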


An immediate consequence of $\Rho$-invariance is conditional sign symmetry: for each coordinate $i$, all moments that are odd in $v_i$ vanish after conditioning on the remaining coordinates. This removes odd monomials from the Hermite-gradient expansion. If $\mu$ is a symmetric particle measure and $\tT_k=\E_{\mu}\bigl[\norm{\vw}_2^2\bvw^{\otimes k}\bigr]$,
then the population dynamics of a particle $\vv$ can be written as (see \cref{claim: loss and gradient})
\begin{align*}
    \frac{\rd \vv}{\rd t}
    =& -\lambda\vv + \norm{\vv} \left(
        2\hsigma_0^2 \left(
            \tT_0^*- \tT_0
            \right)\bvv
        +\sum_{k\ge 2} \hsigma_k^2 \left(
        k (\tT_k^*-\tT_k)(\bvv^{\otimes (k-1)})
        - (k-2)(\tT_k^*-\tT_k)(\bvv^{\otimes k})\bvv
        \right)
    \right),
    \end{align*}
Here $\tT_k^*-\tT_k$ is the residual in the degree-$k$ Hermite component. Symmetry greatly simplifies these tensors. For example,
$\tT_2(\mu)=\E_{\mu}[\vw\vw^\top]$ is diagonal because the off-diagonal entries vanish by conditional sign symmetry. More generally, only tensor entries with even coordinate multiplicities survive. The exact coordinate-wise formula is given in
\cref{claim: tldw dynamic}.

This is the simplification that we will exploit. Although existing works often obtain similar simplifications by considering infinite-width dynamics \citep{li2020learning,mahankali2023beyond}, the source of the simplification is not infinite width itself, but symmetry in the target and in the population dynamics. Instead of enforcing this symmetry through an infinite-width limit, we enforce it directly at finite width through a symmetrized network. We believe the same idea can be adapted to other symmetry groups as well.

\paragraph{Symmetrized networks.}
Motivated by the symmetry preservation above, for any finite network $f$ we define its
symmetrization by
\begin{equation*}
    \tldf(\vx):=\frac1{|\Rho|}\sum_{\rho\in\Rho} f(\rho(\vx)).
\end{equation*}
Equivalently, since $\rho$ is orthogonal, $f(\rho(\vx))$ is obtained by replacing each neuron
$\vw_i$ by $\rho^{-1}(\vw_i)$. Hence $\tldf$ can be viewed as a finite network with $m|\Rho|$
neurons, obtained by copying every neuron along its $\Rho$-transform. By construction, $\tldf$ is
exactly invariant under $\Rho$ at finite width.

A useful consequence is the exact decomposition in \cref{lem: decomp of loss}:
\begin{equation}\label{eq:bias-variance-main}
    \L(f)
    =
    \frac12\E_{\vx}\bigl[(\tldf(\vx)-f_*(\vx))^2\bigr]
    +
    \frac12\E_{\vx}\bigl[(f(\vx)-\tldf(\vx))^2\bigr].
\end{equation}
The cross term vanishes because $\tldf-f_*$ is invariant under $\Rho$, while $f-\tldf$ has zero average under $\Rho$. We therefore first analyze the symmetrized dynamics, which controls the first term in
\eqref{eq:bias-variance-main}; this is done in \cref{sec: dynamics main-text}. We then show that the
original finite-width network remains close to its symmetrized counterpart, so that the second term
is small; this is done in \cref{sec: coupling main-text}. Similar decompositions also appear in settings with
rotational invariance \cite{ren2023depth}.

The symmetrized network is not an infinite-width object. As discussed above, \emph{this distinction is important:} we compare the actual finite-width network directly with this finite symmetrized network, rather than coupling it to an infinite-width limit; see
\cref{sec: coupling main-text}.

\section{Training dynamics of symmetrized network under population loss}\label{sec: dynamics main-text}

We now describe the population dynamics of the symmetrized network $\tldf$, focusing on the incremental and competitive mechanisms. Rigorous statements are deferred to the appendix.

\subsection{A recurring competitive ODE}\label{sec: competitive ode main-text}

Before presenting the training stages, we isolate a simple competitive mechanism that
appears repeatedly. Consider two nonnegative quantities $p(t)$ and $q(t)$ satisfying,
up to lower-order errors,
\begin{equation}\label{eq:competitive-ode-main}
    \dot p
    =
    p(-\lambda+A(1-p-Bq)+\kappa(1-p)),
    \qquad
    \dot q
    =
    q(-\lambda+A(1-p-Bq)),
\end{equation}
\begin{wrapfigure}{r}{0.3\textwidth}
    \centering
    \vspace{-12pt}
    \includegraphics[width=0.22\textwidth]{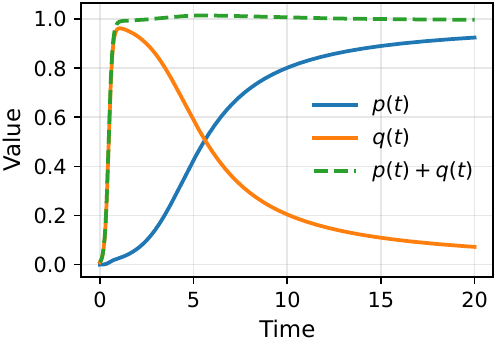}
    \caption{\small Competitive dynamics $A=10$, $B=\kappa=1$, $\lambda=0.1$
    }
    \vspace{-20pt}
\end{wrapfigure}
where $A,B,\kappa>0$. This is a generalized Lotka--Volterra-type competition system \cite{hofbauer1998evolutionary}. The variables $p$ and $q$ share the common growth term $-\lambda+A(1-p-Bq)$, but $p$ has an additional
advantage $\kappa(1-p)$. Note that
$\frac{\rd}{\rd t}\log\frac{q}{p}
    =
    -\kappa(1-p)$. 
As long as $p<1$, the ratio $q/p$ decreases: the favored component $p$ grows relative to the
competing component $q$. In our training dynamics, the common term is controlled by lower-order components, so the total scale can remain nearly constrained while mass is reallocated from
$q$ to $p$. This is the basic mechanism behind both subspace learning and aligned-neuron norm fitting below.

\paragraph{Relation to tensor-power dynamics.}
Many small-initialization analyses largely reduce feature learning to an independent
amplification dynamics, often of the form $\dot x=ax^P$
\citep{allen2020towards,cao2022benign,ge2021understanding,csimcsek2024learning,
ren2025emergence}. This is a one-component limit of the competitive picture: if the
competitor $q$ is absent or nearly frozen, the favored component mainly grows by amplifying its own advantage. The competitive ODE captures the
additional possibility that growth occurs by reallocating mass away from competing components.

\paragraph{Effect of weight decay.}
Weight decay accelerates this reallocation. In \eqref{eq:competitive-ode-main}, when
$0<\lambda<A+\kappa$, the stable equilibrium is
$(p_\lambda,q_\lambda)=(1-\lambda/(A+\kappa),0)$. Near this point, the advantage term
remains positive, $\kappa(1-p_\lambda)=\kappa\lambda/(A+\kappa)$, so the competing
component decays exponentially:
$q(t)/p(t)\approx \exp(-\kappa\lambda t/(A+\kappa))$. When $\lambda=0$, the equilibrium
becomes $(1,0)$, the advantage $\kappa(1-p)$ vanishes as $p\to1$, and the toy model gives
only polynomial decay, typically $q(t)=\Theta(1/t)$ by center-manifold calculation. Thus weight decay keeps a positive
competitive advantage near the limiting state and speeds up the transfer of mass.

\subsection{Training dynamics in stages}

\paragraph{Stage 1.1: learning the mean (0th-order).}
Let $\tldalpha:=\tldE\norm{\vw}^2$ denote the total parameter mass of the symmetrized network,
equivalently the 0th-order component $\tldT_0$. At initialization, $\tldalpha\approx1$,
whereas the 0th-order target mass is $T_0^*=r$. In this stage, the 0th-order Hermite term dominates the dynamics, so $\frac{\rd \tldv_i}{\rd t}\approx \tldv_i(2\hsigma_0^2(r-\tldalpha)-\lambda)$, where $\tldv_i$ denotes the $i$-th coordinate of particle $\tldvv$.
Thus all coordinates of a neuron are scaled by nearly the same factor: directions and relative norms remain essentially fixed, while the total mass grows. The induced dynamics of $\tldalpha$ is
$    \frac{\rd}{\rd t}\tldalpha
    \approx
    4\hsigma_0^2\tldalpha
    (r-\lambda/(2\hsigma_0^2)-\tldalpha).
$
Thus $\tldalpha$ rapidly grows from order one to
$r-\lambda/(2\hsigma_0^2)$ in time $\BigtldO{1/r}$. This gives the first loss drop in \cref{thm: main}. See \cref{sec: stage 1.1}.

\paragraph{Stage 1.2: learning the target subspace (2nd-order).}
After the mean has been fitted, the 2nd-order Hermite term becomes the leading source of movement. Informally,
$\frac{\rd \tldv_i}{\rd t}
    \approx
    \tldv_i(
        -\lambda
        +2\hsigma_0^2(r-\tldalpha)
        +2\hsigma_2^2(\ind_{i\le r}-\tldE w_i^2)
    )$.
By symmetry, $\tldE w_j^2$ is the same for all irrelevant coordinates $j>r$, while the
relevant coordinates remain balanced, with $\tldE w_i^2/\tldE w_j^2\approx1$ for
$i,j\le r$. We therefore summarize the 2nd-order population state by
$\beta_1(t):=\frac1r\sum_{i\le r}\tldE w_i^2$ and $\beta_2(t):=\tldE w_j^2$ for $j>r$, so that $r\beta_1$ is the target-subspace mass and $(d-r)\beta_2$ is the irrelevant mass.

The leading two-dimensional dynamics has the competitive form \eqref{eq:competitive-ode-main}
\begin{align*}
    \dot\beta_1
    &\approx
    2\beta_1(
        -\lambda
        +2\hsigma_0^2(r-r\beta_1-(d-r)\beta_2)
        +2\hsigma_2^2(1-\beta_1)
    ),\\
    \dot\beta_2
    &\approx
    2\beta_2(
        -\lambda
        +2\hsigma_0^2(r-r\beta_1-(d-r)\beta_2)
    ).
\end{align*}
The two masses share the same global mass-control term from the fitted 0th-order component, while $\beta_1$ has the additional advantage
$2\hsigma_2^2(1-\beta_1)$. Thus target-subspace mass grows by displacing irrelevant mass. The stable point is
$(\beta_1^*,\beta_2^*)
    =
    (
        1-\lambda/(2(\hsigma_0^2 r+\hsigma_2^2)),\,0
    )$.
Thus, the scaled ratio 
$(d-r)\beta_2/(r\beta_1)$ decays exponentially, as explained in \cref{sec: competitive ode main-text} and made precise in \cref{prop: stage12-ode}. Since this ratio is initially $\Theta(d/r)$, the transfer of mass into the target subspace takes time $\BigO{\log d}$.
At this point $\tT_2\approx \tT_2^*$, giving the second loss drop in \cref{thm: main}.

At the neuron level, the target-subspace and irrelevant part of each norm track $\beta_1$ and $\beta_2$; see \cref{lem: stage 1.2 convergence of 2nd order}. Hence each neuron becomes almost entirely supported on the target subspace, $\norm{\tldvv_{\le r}}_2/\norm{\tldvv}_2=1-\BigO{1/r}$. Its direction within the target
subspace remains close to its initial direction, so Stage~1.2 recovers the subspace but not the individual directions. See \cref{sec: stage 1.2}.

\paragraph{Stage 2.1: learning the target directions from $\ge4$th-order.}
At this point the target subspace has been learned, but most neurons are still spread
out within that subspace. The higher-order Hermite terms now become the leading source
of directional movement and induce a tensor-power-like dynamics, similar to
\citep{li2020learning,ge2021understanding,ren2025emergence}.

For each target direction $i\le r$, define the set of potential neurons
\begin{equation*}
    \Spoti
    :=\left\{\vv:
        \left(\bbvvler^{(T_1)}\right)_i^2
        \ge (1-\omega^{-0.01})
        \max_\vu\left(\bbvuler^{(T_1)}\right)_i^2
      \right\},
    \qquad
    \Spot:=\bigcup_{i=1}^r\Spoti,
\end{equation*}
where $\omega=\log\log\log r$ absorbs large constants and
$\bbvvler:=\vvler/\norm{\vvler}$ is the normalized projection of $\vv$ onto the target subspace. Thus $\Spoti$ contains neurons with near-maximal alignment with $\ve_i$ inside the target subspace. Overparameterization ensures $\max_\vu(\bbvuler^{(T_1)})_i^2\approx 2\log(m)/r$ with high probability, and that the
sets $\Spoti$ are disjoint; see \cref{lem: stage 2.1 good pot neuron}. We denote the
remaining neurons by $\Sdense$, since they remain small and spread out within the target
subspace.

For a potential neuron in $\Spoti$, let $p_i=(\bbvvler)_i^2$ be its alignment with $\ve_i$. The leading dynamics is
\begin{equation*}
    \frac{\rd}{\rd t} p_i
    \approx
    2(1-p_i)
        \sum_{j\ge2} 2j\hsigma_{2j}^2 p_i^{j},
\end{equation*}
up to lower-order terms controlled by 0th- and 2nd-order components. This resembles a tensor-power-type dynamics: a small advantage is amplified, so a potential neuron aligns with the corresponding target direction. Since the best initial coordinate has size about $2\log(m)/r$, this alignment occurs in time $\BigO{r/\log m}$; see \cref{lem: stage 2.1 neuron main result}. 

The main technical issue is to ensure that lower-order terms do not disrupt this higher-order dynamic. We show that the low-order population terms remain stable and balanced, so dense neurons keep their directions essentially fixed, while for each $i\le r$ at least one neuron in $\Spoti$ reaches alignment at least
$1-\epsdir^2$ with $\ve_i$. Thus Stage 2.1 discovers all individual target directions, although the aligned neurons still carry small total norm and do not yet significantly change the low-order population terms. See \cref{sec: stage 2.1} for details.

\paragraph{Stages 2.2 and 2.3: fitting aligned-neuron norms ($\ge4$th-order).}
Let $G_i$ be the neurons aligned with $\ve_i$ at the end of Stage~2.1, and define
$\ha_i:=(1/m)\sum_{\vv\in G_i}\norm{\vv}_2^2$ and
$\hb_i:=\tldE_{\Sdense} w_i^2$. Here $\ha_i$ is the aligned mass in direction $i$, while
$\hb_i$ is the remaining dense mass in that direction. The target
directions have been found, so the remaining task is to transfer mass from $\hb_i$ to $\ha_i$ and fit the target.

The leading norm dynamics again has the competitive form
\eqref{eq:competitive-ode-main}. The balances $\ha_i\approx\ha_j$ and
$\hb_i\approx\hb_j$ are maintained; see \cref{lem: stage 2.2 log spread}. Thus the
effective dynamics reduce to a two-dimensional competition between aligned and
non-aligned mass:
\begin{align*}
    \dot \ha_i
    &\approx
    2\ha_i(
        -\lambda+2\hsigma_0^2(r-S)
        +2\hsigma_2^2(1-\ha_i-\hb_i)
        +2\hsigma_{\ge4}^2(1-\ha_i)
        ),\\
    \dot \hb_i
    &\approx
    2\hb_i(
        -\lambda+2\hsigma_0^2(r-S)
        +2\hsigma_2^2(1-\ha_i-\hb_i)
        ),
\end{align*}
where $S:=\sum_i(\ha_i+\hb_i)$ and
$\hsigma_{\ge4}^2:=\sum_{j\ge2}\hsigma_{2j}^2$. The shared terms control total mass, while $\ha_i$ receives the additional $\ge4$th-order advantage $2\hsigma_{\ge4}^2(1-\ha_i)$. Thus aligned mass grows by displacing dense non-aligned mass within the target subspace.

Stage~2.2 reaches $|\ha_i-\ha_*|\le \epsnm$ and $\hb_i\le \epsnm$ in time
$\BigO{\log(1/\epsnm)}$, where
$\ha_*:=1-\lambda/(2\hsigma_0^2 r+2\hsigma_{\ge2}^2)$ and
$\epsnm=\exp(-\polylog(r))$; see \cref{sec: stage 2.2}. This fits the target up to the
bias induced by weight decay. Stage~2.3 then sets $\lambda=0$ and reaches loss at most $\eps$; see \cref{sec: stage 2.3}.

\section{Coupling the symmetrized network and finite-width network}\label{sec: coupling main-text}

The previous section analyzed the symmetrized population dynamics. We now explain how to
transfer this analysis to the actual finite-width empirical dynamics. 
Our approach is inspired by finite-width coupling arguments for mean-field dynamics
\citep{li2020learning,mahankali2023beyond,glasgow2025propagation}, but we couple the finite network directly to the finite symmetrized network from \cref{sec: symmetry}, rather than to infinite-width limit.

Let $\vchi_i\sim N(\vzero,\mI_d/d)$ be the initialization of neuron $i$. We couple three processes.
\begin{enumerate}[leftmargin=0pt, itemindent=2em]
    \item \emph{Actual empirical dynamics.} The particle $\vv_i$ follows empirical gradient flow
    \[
        \frac{\rd \vv_i}{\rd t}
        =
        -\lambda \vv_i
        -m\nabla_{\vv_i}\hL(\mW),
        \qquad
        \vv_i(0)=\vchi_i .
    \]

    \item \emph{Symmetrized population dynamics.} For each $\rho\in\Rho$, the particle $\tldvv_{i,\rho}$ follows 
    \[
        \frac{\rd \tldvv_{i,\rho}}{\rd t}
        =
        -\lambda \tldvv_{i,\rho}
        -m|\Rho|\nabla_{\tldvv_{i,\rho}}\L(\tldmW),
        \qquad
        \tldvv_{i,\rho}(0)=\rho(\vchi_i).
    \]

    \item \emph{Intermediate dynamics.} Let
    $\ckvv_i(t):=\tldvv_{i,\id}(t)$ be the identity representative, so
    $\ckvv_i(0)=\vchi_i$ matches the actual initialization. Define
    \[
        \ckf(\vx)
        =
        \frac1m\sum_{i=1}^m
        \norm{\ckvw_i}_2^2
        \sigma(\bckvw_i^\top\vx).
    \]
\end{enumerate}

This leads to the decomposition $f-\tldf=(f-\ckf)+(\ckf-\tldf)$ that separates particle-coupling error from finite-width $\Rho$-sampling error. The term $f-\ckf$ compares the actual empirical particles with their coupled symmetrized representatives, while $\ckf-\tldf$ compares one sampled transform for each particle with the full average over all $\Rho$-transforms.

To quantify the coupling error, define 
$\delta_i:=\norm{\vv_i-\ckvv_i}_2$, $\Delta^2:=\frac1m\sum_{i=1}^m\delta_i^2$, 
$\Deltamax:=\max_i\frac{\delta_i}{\norm{\ckvv_i}_2}$.
The normalization in $\Deltamax$ accounts for the possible growth of relevant particles. The following reduces function-space error to particle coupling and finite-width concentration:
\begin{equation}\label{eq:function-coupling-main}
    \E_{\vx}\bigl[(f(\vx)-\ckf(\vx))^2\bigr]
    \lesssim B\Delta^2+\Delta^4,
    \qquad
    \E_{\vx}\bigl[(\ckf(\vx)-\tldf(\vx))^2\bigr]
    \lesssim \frac{B B_\infty\log m}{m}.
\end{equation}
Here $B$ controls the average particle size. The quantity $B_\infty$ controls the largest single-particle contribution to the $\Rho$-sampling error, i.e., how much one particle can differ from its full average over $\Rho$-transforms. The first bound is a particle-space Lipschitz estimate, and the second is a concentration estimate over the $\Rho$-transforms; see \cref{lem: tldf ckf bound and f ckf bound}.

It remains to control $\Delta$ and $\Deltamax$. We have (see \cref{lem: main text coupling error main,lem: coupling error main})
\begin{lemma}[Finite-width coupling, informal]\label{lem:coupling-main-text-informal}
Under \cref{thm: main}, for all
$t\le T_2$, we have
$\Deltamax(t)\le \epsdir^{1/6}$,
$\Delta(t)\le \epsdir^{1/5}$.
Consequently,
$\E_{\vx}\bigl[(f_t(\vx)-\tldf_t(\vx))^2\bigr]
    \le \epsnm^{1/4}/2$.
\end{lemma}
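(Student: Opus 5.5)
The plan is a Grönwall-type bootstrap on the particle-wise differences $\vdelta_i:=\vv_i-\ckvv_i$, run stage by stage along the symmetrized trajectory described in \cref{sec: dynamics main-text}. Subtracting the two flows gives
\[
    \frac{\rd \vdelta_i}{\rd t}
    =-\lambda\vdelta_i
    -m\bigl(\nabla_{\vv_i}\L(\mW)-\nabla_{\ckvv_i}\L(\check{\mW})\bigr)
    -m\bigl(\nabla_{\vv_i}\L(\check{\mW})-|\Rho|\nabla_{\ckvv_i}\L(\tldmW)\bigr)
    -m\bigl(\nabla_{\vv_i}\hL(\mW)-\nabla_{\vv_i}\L(\mW)\bigr),
\]
where $\check{\mW}$ collects the particles $\ckvv_i$. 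This splits the forcing into three parts.
\begin{enumerate}
    \item A Lipschitz term, which is linear in the local error $\vdelta_i$ and in the global error $f-\ckf$.
    \item A $\Rho$-sampling term, controlled by the residual $\ckf-\tldf$.
    \item An empirical-process term, which is uniform concentration of $\nabla\hL$ around $\nabla\L$ over a bounded parameter set.
\end{enumerate}
The second and third parts are inhomogeneous sources. Using \eqref{eq:function-coupling-main}, the second is bounded by $\sqrt{BB_\infty\log m/m}$. For the third, a covering and chaining argument with $n=\Theta(d^{3+c})$ samples gives an error of order $\poly(d)\cdot d^{-c/2}$ per unit time, using that all particles stay in a ball of radius $\poly(r)$ (a bound that must itself be carried in the bootstrap).

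For the Lipschitz part I would not use a crude operator-norm bound, since that would give growth $e^{\BigO{r}\,t}$ over $T_2=\BigtldTheta{r}$ and blow up. Instead I would linearize around the symmetrized trajectory and exploit its structure. In Stage 1.1 the dominant linearization is the scalar mass mode $2\hsigma_0^2(r-\tldalpha)$, and it is contracting once $\tldalpha$ approaches its fixed point. In Stage 1.2 the $\beta_1/\beta_2$ competition is stable transverse to the fixed point. In Stage 2.1 the perturbation grows only along the tensor-power direction of each potential neuron, and there it grows at most like the neuron's own norm. This is exactly why the statement uses $\Deltamax$, which normalizes by $\norm{\ckvv_i}_2$.

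Concretely, I would track the two quantities separately:
\begin{align*}
    \frac{\rd}{\rd t}\Delta^2 &\le a(t)\,\Delta^2+\text{(sources)}, \\
    \frac{\rd}{\rd t}\frac{\delta_i}{\norm{\ckvv_i}} &\le b(t)\,\frac{\delta_i}{\norm{\ckvv_i}}+\Delta+\text{(sources)},
\end{align*}
and aim to show $\int_0^{T_2}(a+b)\,\rd t=\BigO{\log(1/\epsdir)}$ with a small constant. Then polynomially small sources, given width $m\ge d^3$ and $n=d^{3+c}$, are amplified to at most $\epsdir^{1/5}$ and $\epsdir^{1/6}$ respectively. The hypotheses used in the estimates ($\Deltamax\le1$, particle norms bounded, the symmetrized stage lemmas holding) are maintained by a continuity argument.

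The consequence then follows directly. Combining $f-\tldf=(f-\ckf)+(\ckf-\tldf)$ with \eqref{eq:function-coupling-main} gives
\[
    \E_{\vx}\bigl[(f_t(\vx)-\tldf_t(\vx))^2\bigr]\lesssim B\epsdir^{2/5}+\frac{BB_\infty\log m}{m},
\]
which is at most $\epsnm^{1/4}/2$ once $\epsdir$ is chosen suitably small relative to $\epsnm$, both being $\exp(-\polylog r)$.

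The main obstacle is Stage 2.1. There the potential neurons are unstable: the tensor-power dynamics amplify angular perturbations over time $\BigO{r/\log m}$. A naive bound yields a factor like $(p_i(T_2)/p_i(T_1))^{C}$, which is polynomial in $r$ and possibly not small enough. My first attempt would be to show that the angular error relative to the alignment variable $p_i$ satisfies a variational equation whose growth rate equals $\frac{\rd}{\rd t}\log p_i$ up to lower-order terms, so the relative error is amplified by at most $\poly(r)\le\poly(\log d)$. This is affordable given the $d^{-c}$ slack in the sources. The brief modification of $\gamma_{t,i}$ would be used to cap the neurons that finish aligning first, so that their error stops growing.
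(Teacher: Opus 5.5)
Your architecture matches the paper's $A_t+B_t+C_t$ argument: a three-way split of the forcing, stage-wise Gr\"onwall bounds for $\Delta$ and for the normalized error $\Deltamax$ (so that growth of $\norm{\ckvv_i}$ is not charged as error), and a continuity bootstrap. Your Stage~2.1 heuristic also agrees with the paper, where the extra rate $\BigO{\bckv_i^2}$ integrates to $\BigO{\log r}$. The gap is in the step you correctly say cannot be bounded crudely: the forcing that is linear in $f-\ckf$.

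\textbf{The interaction term.} Using $\norm{f-\ckf}_{L^2}\lesssim\sqrt{B}\,\Delta$ puts a term of order $B\Delta^2\asymp r\Delta^2$ into $\frac{\rd}{\rd t}\Delta^2$. That is fatal over Stages~1.2 and~2.1. ``Linearize around the symmetrized trajectory'' does not replace it. The facts you cite (mass mode contracting near its fixed point, $\beta_1/\beta_2$ stable at its fixed point) are equilibrium properties of the population ODEs. They say nothing about the other Hermite modes of the interaction, and the trajectory spends most of its time far from equilibrium.

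The missing idea is that this term is monotone once averaged over particles. Let $\phi(\vv;\vx)=\norm{\vv}^2\sigma(\bvv^\top\vx)$. Then $\langle\nabla_{\vv}\phi(\vv;\vx),\vv-\ckvv\rangle=\phi(\vv;\vx)-\phi(\ckvv;\vx)+\tau$, with a Taylor remainder satisfying $\norm{\tau}_{L^2}\lesssim\norm{\ckvv}^{1/2}\norm{\vv-\ckvv}^{3/2}$ (\cref{lem: sigma 2nd remainder bound}). Since $f-\ckf=\frac1m\sum_i\bigl(\phi(\vv_i;\cdot)-\phi(\ckvv_i;\cdot)\bigr)$, the particle average of the $f-\ckf$ part of $B_t$ is $-2\E_{\vx}[(f-\ckf)^2]$ plus a cross term with the remainders. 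It is therefore at most of order $r\Delta^{5/2}\le\Delta^2/r$ under the bootstrap $\Delta<r^{-4}$. This handles every mode of the interaction at once, with no spectral analysis. The unaveraged interaction then enters only as a forcing of size $\sqrt{r}\,\Delta$ in the equation for $\delta_i/\norm{\ckvv_i}$, which matches your ``$+\Delta$''.

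\textbf{Your budget.} Your quantitative budget is also wrong in ways that dictate $m$ and $n$.
\begin{itemize}
\item A single particle feels no collective restoring force. Along Stage~1.2 its linearization expands target-subspace perturbations relative to $\norm{\ckvv}$ at rate $4\hsigma_2^2\bigl(1-\frac1r\sum_{i\le r}\tldE w_i^2\bigr)$. By \cref{prop: stage12-ode} the time integral of this rate is $\log d+\BigO{\log\log d}$: relevant-coordinate perturbations grow with the subspace mass while $\norm{\ckvv}$ stays put. So Stage~1 amplifies the sources by $\sqrt{dr}\,\polylog d$ in $\Delta$, and by a further such factor in $\Deltamax$. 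Your target $\int_0^{T_2}(a+b)\,\rd t=\BigO{\log(1/\epsdir)}$ with a small constant is false. This loss is exactly what $m\ge d^{2+c}$ and $n\ge d^{3+c}$ pay for.
\item On $[T_{21},T_\gamma]$ (stepsize $\gamma=m$) and through Stage~2.2, the aligned masses $\ha_k$ grow from about $\sigma_1^4/m$ to order one. That factor cannot pass through a Gr\"onwall bound on $\Delta$. The paper instead uses $\frac1m\sum_{i\in\cup_kG_k}\delta_i^2\le\Deltamax^2\sum_k\ha_k$. The stepsize change is not a cap on error growth; it is an accelerated phase the coupling has to survive.
\item In Stage~2 the $\Rho$-sampling source is not polynomially small in $d$. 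Aligned neurons can have $\norm{\ckvv_i}^2$ of order $m$, so $B_\infty\lesssim m\sqrt{\epsdir}$. The source is then of order $\epsdir^{1/4}$ up to $\sqrt{r\log m}$, and this is what fixes the exponents $1/5$ and $1/6$. Note also that $\epsdir=m^{-\Theta(1)}$, not $e^{-\polylog r}$.
\end{itemize}
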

This coupling allows the actual finite-width empirical network to inherit the incremental and competitive dynamics of the symmetrized population network.

We briefly explain the proof idea. For a network output $g$ and a particle location $\vv$, define the population particle-gradient notation $\grad_{\vv}\L(g)
    :=
    \lambda\vv
    +
    \E_{\vx}\!\left[
        (g(\vx)-f_*(\vx))
        \nabla_{\vv}\bigl(\norm{\vv}_2^2\sigma(\bvv^\top\vx)\bigr)
    \right]$,
and define $\grad_{\vv}\hL(g)$ analogously using
the empirical average. Then 
$\frac{\rd}{\rd t}\vv_i=-\grad_{\vv_i}\hL(f)$, while 
$\frac{\rd}{\rd t}\ckvv_i=-\grad_{\ckvv_i}\L(\tldf)$.
For $\delta=\norm{\vv-\ckvv}_2$, we have
\begin{equation*}
    \frac{\rd}{\rd t}\delta^2
    = A_t+B_t+C_t,
\end{equation*}
where
$A_{t}
    :=
    -2\langle
        \grad_{\vv}\L(\tldf)-\grad_{\ckvv}\L(\tldf),
        \vv-\ckvv
    \rangle$, 
$B_{t}
    :=
    -2\langle
        \grad_{\vv}\L(f)-\grad_{\vv}\L(\tldf),
        \vv-\ckvv
    \rangle$, 
$C_{t}
    :=
    -2\langle
        \grad_{\vv}\hL(f)-\grad_{\vv}\L(f),
        \vv-\ckvv
    \rangle$.
The term $A_t$ is the stability term for the symmetrized population vector field. Since the dynamics is not globally contractive, $A_t$ can be positive during parts of training;
we control it with a stage-wise stability bound tied to particle-norm growth. The term $B_t$ is controlled by the function-space discrepancy in \eqref{eq:function-coupling-main}, and $C_t$ is a finite-sample error controlled by the sample size. A careful stage-wise Gronwall argument then gives \cref{lem:coupling-main-text-informal}, similar to \cite{mahankali2023beyond}.

This coupling hold for time $\omega(\log d)$, so a crude $e^T$ stability bound would require super-polynomial width; see also \citep{glasgow2025propagation}. Our stage-wise symmetrized analysis  keeps the width polynomial.

More broadly, the argument suggests a symmetry-adapted alternative to direct finite-to-infinite-width comparisons. When the target and population dynamics have a useful symmetry, one may first analyze the finite symmetrized dynamics and then couple the actual finite-width dynamics to this symmetry-closed reference system. This perspective may also be useful in combination with recent propagation-of-chaos
techniques \citep{glasgow2025propagation}.

\section{Conclusion}\label{sec: conclusion}

We studied overparameterized two-layer networks learning orthogonal multi-index targets under standard initialization. We show that incremental learning persists beyond small initialization and reveal a competitive reallocation of parameter mass.
Our analysis exploits symmetry in the target and dynamics. Instead of comparing the finite network to an infinite-width limit, we compare it directly with a finite symmetrized counterpart. This symmetry-based finite-width approximation may be useful for other feature-learning problems with exploitable symmetries.

Several questions remain open. The most immediate is whether the technical modification to gradient flow can be removed, allowing us to analyze vanilla gradient flow directly. Another direction is to go beyond the isotropic orthogonal setting, for example to non-isotropic weights or near-orthogonal target directions \citep{oko2024learning,ren2025emergence}. A further question is sample complexity: similar to prior finite-width mean-field analyses
\citep{li2020learning,mahankali2023beyond,glasgow2025propagation}, our bounds separate feature learning from fixed-kernel methods but do not match the optimal rates suggested by information-exponent analyses \citep{arous2021online,ren2024learning}.

\section*{Acknowledgements}
SSD acknowledges the support of  NSF DMS 2134106, NSF IIS 2143493, NSF IIS 2229881, Sloan Fellowship, and the AI2050 program at Schmidt Sciences.
MF, MZ, and WX acknowledge the support of NSF TRIPODS II DMS 2023166. The work of MF was also supported by awards NSF CCF 2212261 and NSF CCF 2312775 and the Moorthy Family Professorship at UW. 

\bibliographystyle{alpha}
\bibliography{ref}

\newpage
\appendix

\section{Omitted results and proofs in \cref{sec: setup,sec: main result,sec: symmetry}}
In this section, we present the omitted results and proofs in  \cref{sec: setup,sec: main result,sec: symmetry}. This includes the exact formula of loss and gradient, and the proof of the main result \cref{thm: main}.

\subsection{Omitted results and proofs in \cref{sec: setup}}
In this section, we collect few useful facts in our proof.

The first is on Hermite coefficient of activation $\sigma(x)=|x|$.
\begin{claim}\label{claim: hermite coeff}
    When $\sigma(x)=|x|$, we know $|\hsigma_k|=\Theta(k^{-5/4})$ and $\hsigma_k=0$ when $k$ is odd.
    As a corollary, $\sum_{k\ge 4} \hsigma_k^2 2k\lesssim 1$.
\end{claim}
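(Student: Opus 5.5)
The claim is about the Hermite coefficients of the absolute value $\sigma(x)=|x|$. I would compute them directly and then use Stirling's formula for the asymptotics.

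\textbf{Odd coefficients.} Since $|x|$ is even and $h_k$ has parity $(-1)^k$, the product $|z|h_k(z)$ is odd for odd $k$. Its Gaussian expectation therefore vanishes, so $\hsigma_k=0$ for odd $k$.

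\textbf{Exact even coefficients.} For even $k=2j\ge 2$, I would use the Rodrigues formula $h_k(z)\varphi(z)=\frac{(-1)^k}{\sqrt{k!}}\varphi^{(k)}(z)$, where $\varphi$ is the standard Gaussian density. Then
\[
\hsigma_k=\frac{1}{\sqrt{k!}}\int_{\R}|z|\,(-1)^k\varphi^{(k)}(z)\,\rd z .
\]
By evenness this equals twice the integral over $(0,\infty)$. I would integrate by parts once on $(0,\infty)$ with $|z|'=1$. The boundary term at $0$ vanishes because of the factor $z$, and the term at $\infty$ vanishes by Gaussian decay. This leaves
\[
\hsigma_k=-\frac{2(-1)^k}{\sqrt{k!}}\int_0^\infty\varphi^{(k-1)}(z)\,\rd z=\frac{2(-1)^k}{\sqrt{k!}}\,\varphi^{(k-2)}(0).
\]
Next I would use the standard identity $\varphi^{(2i)}(0)=(-1)^i(2i-1)!!/\sqrt{2\pi}$. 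It follows from $\varphi^{(n)}(0)=(-1)^n He_n(0)\varphi(0)$ together with $He_{2i}(0)=(-1)^i(2i-1)!!$. With $i=j-1$ this gives the closed form
\[
|\hsigma_{2j}|=\frac{2\,(2j-3)!!}{\sqrt{2\pi}\sqrt{(2j)!}},
\]
which is nonzero for every $j\ge1$. The case $j=0$ is separate: $\hsigma_0=\E|z|=\sqrt{2/\pi}$.

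\textbf{Asymptotics.} I would write $(2j-3)!!=\frac{(2j-2)!}{2^{j-1}(j-1)!}$ and apply Stirling. First, $(2j-2)!/(j-1)!^2\sim 4^{j-1}/\sqrt{\pi j}$, which gives
\[
(2j-3)!!=\Theta\!\left(\frac{2^{j}(j-1)!}{\sqrt{j}}\right).
\]
Second, $\sqrt{(2j)!}=\Theta\big(2^j j!\,(\pi j)^{-1/4}\big)$. Dividing, the factors $2^j$ cancel and $(j-1)!/j!=1/j$, so
\[
|\hsigma_{2j}|=\Theta\big(j^{-1}\cdot j^{-1/2}\cdot j^{1/4}\big)=\Theta\big(j^{-5/4}\big)=\Theta\big(k^{-5/4}\big).
\]

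\textbf{Corollary.} From the asymptotics, $\hsigma_k^2\,2k=\Theta(k^{-5/2}\cdot k)=\Theta(k^{-3/2})$. This is summable, so $\sum_{k\ge4}2k\hsigma_k^2\lesssim1$.

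\textbf{Main obstacle.} The only delicate step is the integration by parts at the kink of $|z|$ at the origin. Splitting the integral at $0$ handles it, and boundary terms appear only at $0$. The constant bookkeeping in the Stirling step also needs care, but only the order $\Theta(k^{-5/4})$ is required, so constants are irrelevant.
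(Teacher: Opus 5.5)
Your proof is correct. The Rodrigues formula plus one integration by parts on $(0,\infty)$ gives the exact value $|\hsigma_{2j}| = 2(2j-3)!!/\bigl(\sqrt{2\pi}\sqrt{(2j)!}\bigr)$ for $j\ge 1$. This agrees with direct computation, e.g.\ $\hsigma_2^2=1/\pi$ and $\hsigma_4^2=1/(12\pi)$. Your Stirling bookkeeping then gives $\Theta(k^{-5/4})$, so $2k\hsigma_k^2=\Theta(k^{-3/2})$ is summable.

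The paper does not prove this claim; it asserts it as a standard fact, so there is no argument to compare against directly. The closest material is the ReLU Jacobian lemma. There the paper works with the arc-cosine kernel $\kappa(\rho)=\E[(z_1)_+(z_2)_+]=\sum_k \hsigma_k^2(\mathrm{ReLU})\,\rho^k$ and uses $\kappa'(\rho)=\tfrac14+\tfrac{1}{2\pi}\arcsin\rho$. Expanding $\arcsin$ and integrating termwise gives the squared ReLU coefficients directly. Since $|x|=2(x)_+-x$, the even coefficients of $|x|$ are twice the ReLU ones. This yields $\hsigma_{2j}^2=\frac{2}{\pi}\cdot\frac{(2j-3)!!}{(2j-2)!!\,(2j-1)(2j)}$, which is exactly the square of your closed form.

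The kernel route is shorter for the squared coefficients, which are the only quantities the analysis uses. It also shows the $k^{-5/2}$ decay as coming from the $(1-\rho^2)^{3/2}$-type singularity of $\kappa$ at $\rho=\pm1$. Your route is fully elementary and also gives the signs. Its two-sided asymptotic also immediately gives the tail estimates the paper uses later, such as $\sum_{j\ge j_0} j\hsigma_{2j}^2\lesssim j_0^{-1/2}$ and $\sum_{j\le j_0} j^2\hsigma_{2j}^2\lesssim \sqrt{j_0}$.
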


\paragraph{Exact Formula on Loss and Dynamics.}
Using the Hermite expansion, the following claim gives us the form of the loss and gradient \cite{ge2017learning}. 
\begin{claim}\label{claim: loss and gradient}
    Recall Hermite expansion $\sigma(x)=\sum_{k=0}^\infty \hsigma_k h_k(x)$.
    The loss can be written as
    \begin{align*}
        L(\mW)
        =& \frac{1}{2}\sum_{k=0}^\infty \hsigma_k^2 \norm{
            \frac{1}{m}\sum_{i=1}^m \norm{\vw_i}_2^2 \bvw_i^{\otimes k}
            -\sum_{i=1}^r a_i^*\norm{\vw_i^*}_2\bvw_i^{*\otimes k} 
            }_F^2  
    \end{align*}
    The dynamic \eqref{eq: dynamic of w} can be written as: for any neuron $\vw_i$
    \begin{align*}
    \frac{\rd \vw_i}{\rd t}
    =& -\lambda_t\vw_i + \gamma_{t,i}\norm{\vw_i} \left(
        2\hsigma_0^2 \left(
            \sum_{j=1}^r \norm{\vw_j^*}_2
            - \frac{1}{m}\sum_{j=1}^m \norm{\vw_j}_2^2\right)\bvw_i
    \right.\\
    &\qquad\qquad \left.
    +\sum_{k\ge 2} \hsigma_k^2 \left(
        k (\tT_k^*-\tT_k)(\bvw_i^{\otimes (k-1)})
        - (k-2)(\tT_k^*-\tT_k)(\bvw_i^{\otimes k})\bvw_i
    \right)
    \right),
    \end{align*}
    where
    \begin{align*}
        \tT_k = \frac{1}{m}\sum_{i=1}^m \norm{\vw_i}_2^2 \bvw_i^{\otimes k}, \quad
        \tT_k^* = \sum_{i=1}^r \norm{\vw_i^*}_2\bvw_i^{*\otimes k}
    \end{align*}
    are $k$-th order tensors.

\end{claim}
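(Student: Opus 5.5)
The plan is to work entirely at the population level, replacing $\hL$ by $\L$ in \eqref{eq: dynamic of w}; the claim is a statement about the population loss and its gradient field. There are two steps: derive the Hermite form of the loss, then differentiate it term by term.

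\textbf{Loss formula.} Expand both the network and the target in the Hermite basis:
\[
f_{\mW}(\vx)=\sum_{k\ge0}\hsigma_k\,\frac1m\sum_{i}\norm{\vw_i}_2^2\,h_k(\bvw_i^\top\vx),
\qquad
f_*(\vx)=\sum_{k\ge0}\hsigma_k\sum_{j}a_j^*\norm{\vw_j^*}_2\,h_k(\bvw_j^{*\top}\vx).
\]
Write $f_{\mW}-f_*$ as a single sum over $k$ of combinations of $h_k(\vu^\top\vx)$ over unit vectors $\vu$. Square it and take the expectation. By the orthogonality identity from \cref{sec: setup},
\[
\E_{\vx}[h_k(\vu^\top\vx)h_\ell(\vv^\top\vx)]=\mathbf 1\{k=\ell\}\langle\vu,\vv\rangle^k=\mathbf 1\{k=\ell\}\langle\vu^{\otimes k},\vv^{\otimes k}\rangle .
\]
So cross terms between different degrees vanish. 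Within degree $k$, the double sum collapses to the squared Frobenius norm of the difference of the moment tensors $\tT_k$ and $\tT_k^*$. This gives $\L=\frac12\sum_k\hsigma_k^2\norm{\tT_k-\tT_k^*}_F^2$.

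To justify exchanging the expectation with the double series, note that $\sum_k\hsigma_k^2<\infty$ because $\sigma\in L^2(N(0,1))$. Each $\norm{\tT_k}_F$ is at most $\frac1m\sum_i\norm{\vw_i}^2$, and each $\norm{\tT^*_k}_F$ is bounded analogously, so the series converges absolutely.

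\textbf{Gradient.} Use 2-homogeneity to write
\[
\tT_k=\frac1m\sum_i g_k(\vw_i),\qquad g_k(\vw):=\norm{\vw}_2^{2-k}\vw^{\otimes k}.
\]
Set $\mathbf R_k:=\tT_k^*-\tT_k$. For a symmetric tensor $\mathbf R_k$, the directional derivative is
\[
\nabla_{\vw}\langle \mathbf R_k,g_k(\vw)\rangle
= k\norm{\vw}^{2-k}\mathbf R_k(\vw^{\otimes(k-1)})+(2-k)\norm{\vw}^{-k}\mathbf R_k(\vw^{\otimes k})\,\vw .
\]
In terms of $\bvw$ this equals $\norm{\vw}\bigl(k\mathbf R_k(\bvw^{\otimes(k-1)})-(k-2)\mathbf R_k(\bvw^{\otimes k})\bvw\bigr)$. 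Since $\nabla_{\vw_i}\tT_k$ carries a factor $1/m$, we get
\[
-m\nabla_{\vw_i}\L=\sum_k\hsigma_k^2\,\norm{\vw_i}\bigl(k\mathbf R_k(\bvw_i^{\otimes(k-1)})-(k-2)\mathbf R_k(\bvw_i^{\otimes k})\bvw_i\bigr).
\]
The individual terms are handled as follows:
\begin{itemize}[label=$\cdot$]
\item For $k=0$ the bracket is $2\mathbf R_0\bvw_i$. Here $\mathbf R_0=\sum_j\norm{\vw_j^*}-\frac1m\sum_j\norm{\vw_j}^2$, which gives the $2\hsigma_0^2$ term.
\item The $k=1$ term drops because $\hsigma_1=0$ by \cref{assum: activation}.
\end{itemize}
Multiplying by $\gamma_{t,i}$ and adding $-\lambda_t\vw_i$ gives the stated dynamics.

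\textbf{Main obstacle.} The delicate step is justifying term-by-term differentiation of the infinite series, particularly for $\sigma(x)=|x|$. By Cauchy--Schwarz, each term is bounded by
\[
\hsigma_k^2\,(2k)\,\norm{\vw_i}\,\norm{\mathbf R_k}_F\lesssim \hsigma_k^2\,k\,\norm{\vw_i}\Bigl(\tfrac1m\textstyle\sum_j\norm{\vw_j}^2+r\Bigr).
\]
The right-hand side is summable and locally uniform in $\mW$ away from $\vw_i=0$. For smooth $\sigma$ this follows from $\sum_k k^2\hsigma_k^2=\BigO{1}$. For $|x|$ it follows from $|\hsigma_k|=\Theta(k^{-5/4})$ in \cref{claim: hermite coeff}, which gives $\sum_k k\hsigma_k^2\lesssim\sum_k k^{-3/2}<\infty$.

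The Weierstrass M-test then gives uniform convergence of the derivative series, which justifies differentiating under the sum. At $\vw_i=0$ the right-hand side extends continuously by $0$, consistent with the factor $\norm{\vw_i}$.
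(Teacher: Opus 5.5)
Your proposal is correct and takes essentially the same route as the paper. The paper cites \cite{ge2017learning} for this claim and carries out the same computation in \cref{lem: hermite of g and J}: Hermite-tensor orthogonality for the loss, then differentiating $\norm{\vv}^{2-k}\tDelta_k[\vv^{\otimes k}]$ using 2-homogeneity. Reading the dynamics at the population level ($\hL$ replaced by $\L$) is correct, since the stated tensor formula is only the population gradient. Your Weierstrass M-test argument for differentiating term by term is a useful addition that the paper leaves implicit.
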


We first define the following conditional symmetry. It says the measure is invariant under sign flip.
\begin{definition}[Conditional-symmetry]
We call a measure $\mu$ over $\mathbb{R}^d$ \textit{conditionally-symmetric} if for every $i \in [d]$ and every $v \in \mathbb{R}^d$, the following is true:
\[
    \Pr_{\vw \sim \mu}[w_i = v_i \mid w_{-i} = v_{-i}] = \Pr_{\vw \sim \mu}[w_i = -v_i \mid w_{-i} = v_{-i}].
\]
\end{definition}

Now we can give each cooridinate dynamic under the measure $\mu$ is conditional symmetry.  
\begin{claim}\label{claim: tldw dynamic}
    Suppose $\mu$ is conditional symmetry, then for any neuron $\vv$ and any $i\in[d]$ we have
    \begin{align}\label{eq: dynamic of tldw}
        \frac{\rd v_i}{\rd t}
        =& - \lambda_t v_i- \gamma_{t,\vv}\sum_{j\ge 0} [\nabla_{2j,\vv}]_i , 
    \end{align}
    where $\nabla_{2j,\vv}$ represents the gradient of $2j$-th order loss
    \begin{align*}
        -[\nabla_{0,\vv}]_i 
        =& 2\hsigma_0^2 \left(
                \sum_{j=1}^r a_j^* \norm{\vw_j^*}_2
                - \E_\mu[ \norm{\vw}_2^2 ]\right) v_i,\\
        -[\nabla_{2j,\vv}]_i 
        =& \hsigma_{2j}^2 \left(
            2j \bv_i^{2j-2} \ind_{i \le r}
            - 2j \E_\mu \left[
                \norm{\vw}_2^2 \langle\bvw,\bvv\rangle^{2j-1} \bw_i/\bv_i
                \right]
        - (2j-2) \left(
            \sum_{l\le r}\bv_l^{2j} - \E_\mu \left[\norm{\vw}_2^2\langle \bvw,\bvv\rangle^{2j}\right]
            \right)\right) v_i.
    \end{align*}

    Note that since $\mu$ is conditionally symmetric, which means all odd polynomials vanish under $\mu$, the term
    \[
        \E_\mu \left[
                \norm{\vw}_2^2 \langle\bvw,\bvv\rangle^{2j-1} \bw_i/\bv_i
                \right]
        = \E_\mu\left[\norm{\vw}_2^2
            \sum_{\beta_1+\cdots+\beta_d = j-1} \binom{2j-1}{2\beta_1,2\beta_2,\ldots,1+2\beta_i,\ldots,2\beta_d}
            \prod_{\ell\in[d]}(\bw_{\ell}\bv_{\ell})^{2\beta_\ell} \bw_i^2 \right]
    \]
    can be rewritten as a sum of even polynomials, which removes the denominator $\bv_i$, and is therefore always non-negative.
\end{claim}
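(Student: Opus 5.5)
The statement to prove is \cref{claim: tldw dynamic}. The plan is to start from the general gradient formula in \cref{claim: loss and gradient}, specialize it to a conditionally symmetric measure $\mu$, and read off coordinate $i$. Two facts drive the computation. For a symmetric $k$-tensor $\tT$, $\tT(\bvv^{\otimes(k-1)})_i=\langle \tT,\ve_i\otimes\bvv^{\otimes(k-1)}\rangle$. Moreover, for the network tensor $\tT_k=\E_\mu[\norm{\vw}^2\bvw^{\otimes k}]$ this equals $\E_\mu[\norm{\vw}^2\langle\bvw,\bvv\rangle^{k-1}\bw_i]$, while $\tT_k(\bvv^{\otimes k})=\E_\mu[\norm{\vw}^2\langle\bvw,\bvv\rangle^k]$.

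\textbf{Target tensor.} For $\tT_k^*=\sum_{l\le r}\ve_l^{\otimes k}$ (normalized as in \eqref{eq:target-main}), we get $\tT_k^*(\bvv^{\otimes(k-1)})_i=\bv_i^{k-1}\ind_{i\le r}$ and $\tT_k^*(\bvv^{\otimes k})=\sum_{l\le r}\bv_l^k$.

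\textbf{Odd orders vanish.} Next I will show that every odd-$k$ term disappears. For $\sigma=|x|$ this is immediate from \cref{claim: hermite coeff}, and for the smooth case it is built into \cref{assum: activation}. Either way $\hsigma_{k}=0$ for odd $k$, so only $k=2j$ survives, together with $k=0$. The $k=0$ term gives $2\hsigma_0^2(T_0^*-\E_\mu\norm{\vw}^2)\norm{\vv}\bvv_i$, and $\norm{\vv}\bvv_i=v_i$.

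\textbf{Even orders.} For $k=2j$, I multiply by $\norm{\vv}$ and factor out $v_i=\norm{\vv}\bv_i$. The target part becomes $2j\bv_i^{2j-2}\ind_{i\le r}\,v_i$ and $-(2j-2)\sum_{l\le r}\bv_l^{2j}\,v_i$. The network part becomes $-2j\E_\mu[\norm{\vw}^2\langle\bvw,\bvv\rangle^{2j-1}\bw_i]/\bv_i\cdot v_i$ and $+(2j-2)\E_\mu[\norm{\vw}^2\langle\bvw,\bvv\rangle^{2j}]\,v_i$. This is exactly the stated $-[\nabla_{2j,\vv}]_i$. With the sign convention of \eqref{eq: dynamic of tldw}, this yields the claimed coordinate dynamics.

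\textbf{Removing the denominator $\bv_i$.} This is the only subtle point. I will expand $\langle\bvw,\bvv\rangle^{2j-1}\bw_i=\sum_\alpha\binom{2j-1}{\alpha}\prod_\ell(\bw_\ell\bv_\ell)^{\alpha_\ell}\,\bw_i$ by the multinomial theorem. Conditional symmetry means flipping the sign of $w_\ell$ leaves $\mu$ invariant, and $\norm{\vw}$ is unchanged by such flips. So any monomial whose total $\bw_\ell$-degree is odd for some $\ell$ has zero expectation. For $\ell\neq i$ the total degree is $\alpha_\ell$, so $\alpha_\ell$ must be even, $\alpha_\ell=2\beta_\ell$. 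For $\ell=i$ the total degree is $\alpha_i+1$, so $\alpha_i=1+2\beta_i$. The degree constraint $\sum\alpha_\ell=2j-1$ then gives $\sum\beta_\ell=j-1$. Each surviving term carries $\bv_i^{1+2\beta_i}$, so dividing by $\bv_i$ leaves $\bv_i^{2\beta_i}$. The expression becomes a sum of nonnegative multinomial coefficients times $\E_\mu[\norm{\vw}^2\prod_\ell(\bw_\ell\bv_\ell)^{2\beta_\ell}\bw_i^2]\ge0$, which is the displayed formula and proves nonnegativity. The expected obstacle is purely bookkeeping here: the exponent of $\bw_i$, since the extra factor $\bw_i$ shifts the parity constraint.
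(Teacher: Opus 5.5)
Your proposal is correct and follows the same route as the paper. You specialize the tensor gradient formula of \cref{claim: loss and gradient} coordinatewise, then expand $\langle\bvw,\bvv\rangle^{2j-1}\bw_i$ by the multinomial theorem and use conditional sign symmetry to force $\alpha_\ell=2\beta_\ell$ for $\ell\neq i$ and $\alpha_i=1+2\beta_i$. You also spell out the bookkeeping the paper leaves implicit: the target-tensor contractions, and the vanishing of odd orders via $\hsigma_{2k+1}=0$ rather than via symmetry.
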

\begin{remark}
    The gradient formula above shows that each coordinate $v_i$ evolves multiplicatively: its velocity is proportional to $v_i$ itself. This property is useful in our dynamical analysis, since it helps us control coordinates that are initially small and show that they remain small.

    We also point out that the gradient formula (3.3) in \cite{li2020learning} does not appear to match the stated tensor form. In particular, after expanding the expression and collecting like terms, the coefficients differ from those obtained from the tensor formulation; for the $k$-th order gradient, the discrepancy can be as large as $k^{\BigO{k}}$. 
    As an example, consider the fourth-order term with $i=1$. The expression
    $
        \E_\mu \left[
            \norm{\vw}_2^2 \langle \bvw,\bvv\rangle^{3}
            \frac{\bw_1}{\bv_1}
        \right]
    $
    should be compared with
    $
        \E_\mu\left[
            \norm{\vw}_2^2
            \sum_{j,k\in[d]}
            \prod_{\ell\in\{j,k\}}(\bw_{\ell}\bv_{\ell})^2
            \bw_1^2
        \right].
    $
    The former, which is the expression derived above, gives
    $
        \tldE\left[
            (\bw_1\bv_1)^2
            + 3\sum_{j\ne 1}(\bw_j\bv_j)^2 \bw_1^2
        \right],
    $
    whereas the latter, as written in \cite{li2020learning}, gives
    $
        \tldE\left[
            \sum_{i\in[d]}(\bw_i\bv_i)^2 \bw_1^2
        \right].
    $
    Nonetheless, their results should remain unchanged after appropriate modifications to the proof, especially regarding the dynamics.
\end{remark}

\begin{proof}
    From \cref{claim: loss and gradient} we get the gradient form of $-[\nabla_{2j,\vv}]$.

    In below, we focus on $\E_\mu [\norm{\vw}_2^2\langle\bvw,\bvv\rangle^{2j-1}\bw_i/\bv_i]$. Recall $\mu$ is conditional symmetric, so
    \begin{align*}
        \E_\mu [\norm{\vw}_2^2\langle\bvw,\bvv\rangle^{2j-1}\bw_i/\bv_i]
        =& \tldE\left[\norm{\vw}_2^2
            \sum_{\alpha_1+\cdots+\alpha_d = 2j-1} \binom{2j-1}{\alpha_1,\alpha_2,\ldots,\alpha_d}
            \prod_{\ell\in[d]}(\bw_{\ell}\bv_{\ell})^{\alpha_\ell} \bw_i^2
            \frac{1}{\bw_i\bv_i}
            \right]\\
        \myeq{a}& \E_\mu\left[\norm{\vw}_2^2
            \sum_{\beta_1+\cdots+\beta_d = j-1} \binom{2j-1}{2\beta_1,2\beta_2,\ldots,1+2\beta_i,\ldots,2\beta_d}
            \prod_{\ell\in[d]}(\bw_{\ell}\bv_{\ell})^{2\beta_\ell} \bw_i^2 \right],
    \end{align*}
    where (a) we use $\mu$ is conditional symmetric so odd polynomials are 0 and let $\alpha_i=1+2\beta_i$, $\alpha_\ell=2\beta_\ell$ for $\ell\ne i$.
\end{proof}

Moreover, for symmetric network $\tldf$ and corresponding measure $\tldmu$, since at initialization $\tldmu$ is conditional symmetry, we know from above \cref{claim: tldw dynamic} that $\tldmu$ will keep to be conditional symmetric. Therefore proving \cref{claim:symmetry-preservation-main}.

\subsection{Omitted results and proofs for \cref{sec: main result}}\label{sec: omit proof main result}

We first state the choice of parameters required in \cref{thm: main}.
\paragraph{Choice of parameters.}
    Our results hold for any $\log^{2} d < r/\log^2 r < r\log^2 r < \log^3 d$, and require width $d^{3}\le m\le \poly(d)$ and sample size $n\ge d^{3+c}$. The required exact degree on $d$ is only needed for the coupling section in \cref{sec: coupling proof}. For the dynamic analysis, it suffices to view $m=\poly(d)$. 

    For weight decay, we set $\lambda_t=\lambda$ for $t\le T_{22}$ and $\lambda_t=0$ for $t>T_{22}$, where $\lambda=\lambda_0 r$ and $\lambda_0$ is any sufficiently small constant. For the stepsize $\gamma_t$, we set $\gamma_t=1$ for all times $t$ except when $T_{21}\le t\le T_\gamma$. During this period, we set $\gamma_t=\gamma$ if $\norm{\vv}\ge \sigma_1^2$, and $\gamma_t=1$ otherwise. Here, we choose $\gamma=m$ and $\sigma_1=m^{\Theta(1)}$, with an arbitrarily small constant in the exponent.

    The change in stepsize is needed only because of a technical challenge that arises when analyzing the symmetrized network dynamics. See \cref{sec: stage 2.2} for details and further discussion.

\vspace{3pt}
\noindent\textbf{Discussion of quantitative bounds and assumptions.}
The quantitative bounds in \cref{thm: main} are not intended to be optimal, as our goal is to characterize the finite-width training trajectory rather than optimize sample complexity. For Gaussian single-index models, the information exponent $k^\star$, defined by the lowest nonzero Hermite order carrying directional information, captures the difficulty of learning from the weak correlations available at random initialization. Sharp or near-optimal sample-complexity bounds have been established in several settings \citep{arous2021online,damian2023smoothing,lee2024neural}.

For orthogonal multi-index models, however, a single information exponent does not fully capture the available structure, since different Hermite orders may reveal different aspects of the target. In the regime where the information exponent exceeds $2$, \cite{ren2025emergence} precisely characterize the emergence of individual orthogonal features and the corresponding learning timescales. In our setting, by contrast, the degree-$2$ component is already informative: it reveals the target subspace but cannot distinguish the individual orthogonal directions, which are identified only through higher-order components. \cite{ren2024learning} exploit this structure to obtain sharper sample-complexity bounds by first recovering the target subspace from the degree-$2$ component and then identifying individual directions using higher-order components. Their sharper bounds, however, rely on spherical SGD with correlation loss, followed by separate fitting of the second layer.

Our result captures a related order-by-order progression, but instead focuses on the finite-width square-loss dynamics under standard initialization, where the network output is non-negligible and low- and high-order Hermite components coexist throughout training. Accordingly, the requirements $m\ge d^3$ and $n=\Theta(d^{3+c})$ in \cref{thm: main} should be viewed as sufficient conditions for controlling this trajectory rather than as sharp learnability thresholds. Our finite-width coupling builds on the approach of \cite{mahankali2023beyond}, which similarly yields polynomial but non-optimal width and sample bounds. The current quantitative requirements arise largely from this coupling framework, and obtaining sharper bounds would likely require a corresponding refinement of the finite-width approximation technique. We leave this as an interesting direction for future work.

We now are ready to prove \cref{thm: main}.
    \begin{proof}[Proof of \cref{thm: main}]
We first analyze the symmetrized trajectory $\tldmW$. By the initialization
guarantees in \cref{lem: init} and the stage-wise dynamics in
\cref{lem: stage 1.1 main}, \cref{lem: stage 1.2 main}, and
\cref{lem: stage 2 main}, the symmetrized network $\tldf$ reaches
\[
    \frac12\norm{\tldf-f_*}_{L^2}^2
    =\L(\tldmW)
    \le \frac12 e^{-\log^2 r}
\]
within time $T_2\lesssim r/\log m$. These stage-wise lemmas also give the lower
and upper bounds on the loss before and after each transition time, which yields
the order-by-order loss profile in the table.

It remains to transfer the guarantee from the symmetrized trajectory to the
actual finite-width empirical trajectory. By \cref{lem: main text coupling error main}, we have for all $t\le T_2$ 
\[
    \norm{f-\tldf}_{L^2}\le \epsnm^{1/4}/2
    \le \frac{1}{2}e^{-\log^2 r}.
\]
Hence, at time $T_2$,
\[
    \sqrt{2\L(\mW(T_2))}
    =\norm{f_{\mW(T_2)}-f_*}_{L^2}
    \le
    \norm{\tldf_{T_2}-f_*}_{L^2}
    +\norm{f_{\mW(T_2)}-\tldf_{T_2}}_{L^2}
    \le \sqrt{2}\,e^{-\frac12\log^2 r}.
\]
Thus $\L(\mW(T_2))\le e^{-\log^2 r}\le \eps$, proving the first claim.

For the absolute-value activation, after $T_2$
\cref{lem: local convergence} gives the additional local convergence phase down
to arbitrary target accuracy $\eps$, with the stated sample complexity and time.
\end{proof}

\subsection{Omitted results and proofs in \cref{sec: symmetry}}

The benefits of symmetrized network can be seen in the lemma below. We are able to decompose the loss into 2 separate terms. Intuitively, the first loss is as if we are training on symmetrized network $\tldf$ (or even in the infinite-width limit), and second loss tries to move model $f$ towards its symmetrized version $\tldf$. 
\begin{lemma}\label{lem: decomp of loss}
    We have
    \begin{align*}
        L(\vtheta)
        = \frac{1}{2}\E_\vx [(f_\vtheta(\vx) - f_*(\vx))^2]
        = \frac{1}{2}\E_\vx [(\tldf_\vtheta(\vx) - f_*(\vx))^2]
            + \frac{1}{2}\E_\vx [(f_\vtheta(\vx) - \tldf_\vtheta(\vx))^2].
    \end{align*}
\end{lemma}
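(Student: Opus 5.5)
The plan is to expand the square around $\tldf_\vtheta$ and show that the cross term vanishes. Write $f_\vtheta - f_* = (\tldf_\vtheta - f_*) + (f_\vtheta - \tldf_\vtheta)$. Then
\[
\E_\vx[(f_\vtheta-f_*)^2] = \E_\vx[(\tldf_\vtheta-f_*)^2] + \E_\vx[(f_\vtheta-\tldf_\vtheta)^2] + 2\,\E_\vx[(\tldf_\vtheta(\vx)-f_*(\vx))(f_\vtheta(\vx)-\tldf_\vtheta(\vx))],
\]
so it suffices to prove that the cross term is zero.

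\textbf{Invariance of $g:=\tldf_\vtheta-f_*$.} Since $\Rho$ is a group, for any $\rho'\in\Rho$ the map $\rho\mapsto\rho\circ\rho'$ is a bijection of $\Rho$. Hence
\[
\tldf_\vtheta(\rho'(\vx))=\frac{1}{|\Rho|}\sum_{\rho\in\Rho} f_\vtheta(\rho(\rho'(\vx)))=\tldf_\vtheta(\vx).
\]
We already noted that $f_*(\rho'(\vx))=f_*(\vx)$, because $\sigma$ is even and $f_*$ depends only on the first $r$ coordinates, which $\rho'$ only sign-flips. Therefore $g$ is $\Rho$-invariant.

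\textbf{Vanishing of the cross term.} Each $\rho\in\Rho$ is a signed permutation matrix, hence orthogonal, so $\rho(\vx)\overset{d}{=}\vx$ for $\vx\sim N(\vzero,\mI_d)$. Using this and the invariance of $g$,
\[
\E_\vx[g(\vx) f_\vtheta(\rho(\vx))] = \E_\vx[g(\rho^{-1}(\vx)) f_\vtheta(\vx)] = \E_\vx[g(\vx)f_\vtheta(\vx)].
\]
Averaging over $\rho\in\Rho$ gives $\E_\vx[g\,\tldf_\vtheta]=\E_\vx[g\,f_\vtheta]$, i.e.\ $\E_\vx[g(f_\vtheta-\tldf_\vtheta)]=0$. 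Dividing the expansion by $2$ then yields the stated identity.

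The only step needing care is integrability, which justifies splitting the expectations. Under \cref{assum: activation}, $\sigma$ grows at most linearly: $|\sigma'|$ is bounded, or $\sigma(x)=|x|$. So $f_\vtheta$, $\tldf_\vtheta$ and $f_*$ all lie in $L^2(N(\vzero,\mI_d))$ and every product above is integrable by Cauchy--Schwarz. With that checked, the argument is an exact identity with no real obstacle.
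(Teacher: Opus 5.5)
Your proposal is correct and follows essentially the same route as the paper: expand the square around $\tldf_\vtheta$, then kill the cross term using the invariance of $N(\vzero,\mI_d)$ under each orthogonal $\rho\in\Rho$ together with the $\Rho$-invariance of $\tldf_\vtheta-f_*$. The differences are cosmetic. You change variables in the term $\E_\vx[g(\vx)f_\vtheta(\rho(\vx))]$, whereas the paper substitutes $\vx\mapsto\rho(\vx)$ in the whole product. You also spell out the $\Rho$-invariance of $\tldf_\vtheta$ and the $L^2$ integrability, both of which the paper leaves implicit.
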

\begin{proof}
    We drop the subscript $\vtheta$ for simplicity. To prove the loss decomposition, we only need to show the cross term is 0. Note that $\rho(\vx)\sim N(\vzero,\mI)$ when $\vx\sim N(\vzero,\mI)$ and $\rho\in\Rho$, then we have
    \begin{align*}
    \E_\vx [(\tldf(\vx) - f_*(\vx))(f(\vx) - \tldf(\vx))]
    =& \frac{1}{|\Rho|} \sum_{\rho\in\Rho}
        \E_\vx [(\tldf(\rho(\vx)) - f_*(\rho(\vx)))(f(\rho(\vx)) - \tldf(\rho(\vx)))]\\
    =& \frac{1}{|\Rho|} \sum_{\rho\in\Rho}
        \E_\vx [(\tldf(\vx) - f_*(\vx))(f(\rho(\vx)) - \tldf(\vx)))]
    = 0,
    \end{align*}
    where we use the fact that $\tldf,f_*$ are invariant under $\Rho$ and definition of $\tldf$.
    
\end{proof}

Another interpretation of the above lemma is as a bias--variance decomposition in function space. Here, the symmetrized network $\tldf$ is the average of the model $f$ over the symmetry group $\Rho$. Thus, the first loss term can be viewed as the bias term, while the second loss term can be viewed as the variance term.

It is therefore natural to expect the variance term
$\E_\vx\left[(f_\vtheta(\vx)-\tldf_\vtheta(\vx))^2\right]$
to be small at random initialization when the width $m$ is large. This also suggests, at an intuitive level, that $f$ should remain close to $\tldf$ throughout training.

However, formally justifying this intuition is nontrivial. In fact, the variance term may increase during training because the scale of $f$ itself increases. We develop the bounds in \cref{sec: coupling main-text} to control this term.

\section{Initialization}
The following properties happen at initialization w.h.p. when we have enough overparametrization. 

Throughout the rest of paper, we always condition on the these high probability cases happen at initialization. Especially, for simplicity we set failure probability $\delta=1/r$ so that we could ignore the $\log(1/\delta)$ term in these bounds.
\begin{lemma}\label{lem: init}
    For $e^{\log^2 d}\ge m\ge d$, $1/d\le \delta <1$, $d>e^{\sqrt{\log d}}>r>\log^2 m$, then with probability at least $1-\delta$ (unless state specifically below)
    \begin{enumerate}
        \item Regularity condition for individual neuron: for any neuron $\vv$
        \begin{align*}
            &v_i^2 \lesssim \log(md/\delta)\sigma^2,\ 
            \bar{v}_i^2 \lesssim \frac{\log(md/\delta)}{d},\ 
            \text{ for all $i\in[d]$,} \\
            &\frac{v_i^2}{\norm{\vvler}_2^2}\lesssim \frac{\log(mr/\delta)}{r}, \text{ for all $i\in[r]$},\quad
            \frac{v_i^2}{\norm{\vvgtr}_2^2}\lesssim \frac{\log(md/\delta)}{d}, \text{ for all $i\in[d]\setminus[r]$},\\
            &\norm{\vv}_2 = \sigma\sqrt{d} \pm \BigO{\sigma\sqrt{\log(m/\delta)}},\ 
            \norm{\vvler}_2 = \sigma\sqrt{r} \pm \BigO{\sigma\sqrt{\log(m/\delta)}},\\ 
            &\norm{\bvvler}_2 = \sqrt{\frac{r}{d}} \left(1\pm \BigO{\sqrt{\frac{\log(m/\delta)}{r}}}\right),
            \sum_{i\in[r]}\bv_i^4 \lesssim \frac{r}{d^2},\ 
            \sum_{i\in[r]}\frac{v_i^4}{\norm{\vvler}_2^4}
            \lesssim \frac{1}{r}\\
        \end{align*}
        As a result, for any neuron $\vv,\vu$
        \begin{align*}
            \frac{\norm{\vv}_2}{\norm{\vu}_2} = 1 \pm \BigO{\sqrt{\frac{\log(m/\delta)}{d}}},\quad
            \frac{\norm{\vvler}_2}{\norm{\vuler}_2} = 1 \pm \BigO{\sqrt{\frac{\log(m/\delta)}{r}}},\quad
            \frac{\norm{\vvgtr}_2}{\norm{\vugtr}_2} = 1 \pm \BigO{\sqrt{\frac{\log(m/\delta)}{d}}}
        \end{align*}

        \item Regularity condition for distribution of neuron: 
        \begin{align*}
            \tldE[\bw_i^2] \lesssim \frac{1}{d},\quad
            \tldE[\bw_i^4] \lesssim \frac{1}{d^2},\quad
            \tldE \left[(\bbvwler)_i^2\right]\lesssim\frac{1}{r},\quad
            \tldE \left[(\bbvwler)_i^4\right]\lesssim\frac{1}{r^2}.
        \end{align*}

        \item \label{item: init good pot bad neuron} 
        `potential', `bad' neuron: for any $i\in[r]$, (the set we consider here has some slackness on the threshold we actual use in Stage 2)
        \begin{enumerate}[label=(\roman*)]
            \item scale of best `potential' neuron: with probability $1-\frac{1}{r}$, for all $i\in[r]$ 
            \[
                \max_{\vv}(\bbvvler)_i^2 = \frac{2\log(2m)}{r}\left(1\pm\BigO{\frac{\log r}{\log m}}\right)
            \]
            \item fraction of `potential' neuron is small:  with probability $1-\frac{1}{\sqrt{m}}$
            \[
                \left|\cup_{i\in[r]}\left\{\vv:
                (\bbvvler)_i^2 \ge (1-2\omega^{-0.01})\max_\vu (\bbvuler)_i^2
                \right\}\right|\le m^{5\omega^{-0.01}}.
            \]
            \item gap between coordinates in `potential neurons': for any $\vv$, if ($\vv$ is a potential neuron)
            \[
                (\bbvvler)_i^2 \ge (1-2\omega^{-0.01})\max_\vu (\bbvuler)_i^2
            \]
            then with probability $1-\frac{1}{m}$,
            \[
            (\bbvvler)_i^2 \ge 0.98\max_{j\in[r]\setminus\{i\}} (\bbvvler)_j^2\]
            \item no `bad' neuron: with probability $1-m^{-0.95}$, for any $i\ne j\in[r]$, there is no neuron $\vv$ that
            \[
                (\bbvvler)_i^2 \ge (1-2\omega^{-0.01})\max_\vu (\bbvuler)_i^2,\quad
                (\bbvvler)_j^2 \ge (1-2\omega^{-0.01})\max_\vu (\bbvuler)_j^2
            \]
            (i.e., $\Spoti\cap\Spotj=\emptyset$). 
        \end{enumerate}
    \end{enumerate}
\end{lemma}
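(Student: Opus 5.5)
The plan is to write each initial neuron as $\vv=\sigma\vg$ with $\vg\sim N(\vzero,\mI_d)$ (here $\sigma^2=1/d$ is the initialization variance), prove every bound for a single neuron with failure probability $\delta/\poly(m,d)$, and then take a union bound over the $m$ neurons and $d$ coordinates. Symmetrized quantities $\tldE$ are averages over $\rho\in\Rho$ of sign flips and permutations of the coordinates $>r$. Hence $\tldE[\bw_i^2]$ for $i>r$ is the average of $\bw_j^2$ over $j>r$ and over neurons, and for $i\le r$ the sign flips do not change $\bw_i^2$.

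\textbf{Part 1 (individual neurons).} For the coordinates, the Gaussian tail gives $v_i^2\lesssim\sigma^2\log(md/\delta)$. For the norms, $\chi^2$ concentration (Laurent--Massart) gives $\norm{\vv}_2=\sigma\sqrt d\pm\BigO{\sigma\sqrt{\log(m/\delta)}}$ and $\norm{\vvler}_2=\sigma\sqrt r\pm\BigO{\sigma\sqrt{\log(m/\delta)}}$, and likewise for $\norm{\vvgtr}_2$. Taking ratios yields $\bv_i^2\lesssim\log(md/\delta)/d$, the bounds on $v_i^2/\norm{\vvler}_2^2$ and $v_i^2/\norm{\vvgtr}_2^2$, and the bound on $\norm{\bvvler}_2$. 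This step uses $r\gtrsim\log(m/\delta)$, which follows from $r>\log^2 m$ and $\delta\ge 1/d$.

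For the fourth-moment sums, $\sum_{i\le r}g_i^4$ has mean $3r$. Since each $g_i^4\lesssim\log^2(m)$, Bernstein's inequality applied to truncated variables, together with $r>\log^2 m$, gives $\sum_{i\le r} g_i^4\lesssim r$. Dividing by $\norm{\vg_{\le r}}^4\approx r^2$ and by $\norm{\vg}^4\approx d^2$ gives the two stated bounds. The pairwise norm ratios follow directly from the norm estimates.

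\textbf{Part 2 (distributional moments).} These follow from Part 1 after averaging. Since every neuron satisfies $\bw_i^2\lesssim\log(md)/d$, averaging already gives $1/d$ up to a log factor. To remove the log, I would instead use the concentration of an empirical mean of $m$ i.i.d. bounded variables (Hoeffding), whose expectation is exactly $1/d$ or $\Theta(1/d^2)$ by exchangeability. Averaging over $\Rho$ only helps, because it replaces a coordinate $i>r$ by the average over all $j>r$.

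\textbf{Part 3 (extremes).} Write $(\bbvvler)_i^2=g_i^2/\norm{\vg_{\le r}}^2$.

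(i) Conditioning on $\norm{\vg_{\le r}}^2=r(1\pm\BigO{\sqrt{\log m/r}})$, I would show that the maximum of $g_i^2$ over $m$ independent neurons equals $2\log(2m)-\BigO{\log\log m}$ with probability $1-1/r^2$. This uses the two-sided Gaussian tail $P(|g|>t)\asymp e^{-t^2/2}/t$, with the upper bound from a union bound and the lower bound from independence. A further union bound over $i\in[r]$ then gives the stated multiplicative accuracy. The normalization error $\sqrt{\log m/r}$ must be absorbed into $\BigO{\log r/\log m}$, which holds because $r\ge\log^2 m$. The main obstacle is obtaining exactly this $\log r/\log m$ precision; I would handle it by splitting $g_i$ from the remaining $r-1$ coordinates so that the norm becomes independent of $g_i$.

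(ii) Each neuron exceeds the threshold $(1-2\omega^{-0.01})\cdot2\log m$ in coordinate $i$ with probability about $m^{-(1-2\omega^{-0.01})}$, up to normalization slack. Summing over the $m$ neurons and the $r$ coordinates gives an expected count of $rm^{2\omega^{-0.01}+o(1)}\le m^{4\omega^{-0.01}}$. Markov's inequality, or a Chernoff bound, then gives the bound $m^{5\omega^{-0.01}}$ with probability $1-1/\sqrt m$.

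(iii) Given that a neuron is potential in coordinate $i$, the other coordinates remain independent Gaussians. Their maximum exceeds $0.98\cdot2\log m$ only with probability $rm^{-0.98}$, which is at most $1/m$ after a slightly sharper count.

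(iv) For a fixed neuron and a pair $i\ne j$, the probability of being potential in both coordinates is about $m^{-2(1-2\omega^{-0.01})}$ by independence. Summing over the $m$ neurons and the $r^2$ pairs gives $r^2m^{-1+4\omega^{-0.01}}\le m^{-0.95}$.

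Because $\omega=\log\log\log r$ grows, all the $\omega^{-0.01}$ exponent slack is $o(1)$. This leaves enough room for the polylogarithmic factors, since $r$ is polylogarithmic in $m$.
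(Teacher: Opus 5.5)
Your overall architecture matches the paper's: a Gaussian representation, coordinate and norm concentration with a union bound over the $m$ neurons, and for (c)(ii)--(iv) the same per-neuron $m^{-1\pm O(\omega^{-0.01})}$ tail counting. In (ii)--(iv) the $O(\omega^{-0.01})$ slack easily absorbs the normalization error $\sqrt{\log m/r}\le(\log m)^{-1/2}$. However, two steps fail to deliver what the lemma claims under its hypothesis $r>\log^2 m$, and that hypothesis is calibrated exactly to the tools the paper uses.

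\textbf{Gap 1: the fourth-moment bound.} You need $\sum_{i\le r}g_i^4\lesssim r$ with per-neuron failure probability at most $\delta/m$, i.e. a tail exponent of order $\log m$. Truncated Bernstein cannot give this. For the truncation event to hold at that confidence you need a level $L\gtrsim\log^2 m$, since $\P(g^4>L)\approx e^{-\sqrt L/2}$. The range term then dominates, so $\P\bigl(\sum_{i\le r}(g_i^4\wedge L)\ge Cr\bigr)\le\exp(-\Theta(Cr/L))=\exp(-\Theta(C))$ when $r\asymp\log^2 m$. At the required confidence this only yields $\sum_i g_i^4\lesssim r\log m$. The true tail is $e^{-\Theta(\sqrt r)}$, because it is driven by a single large coordinate. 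The paper captures it with Gaussian concentration for the $1$-Lipschitz map $g\mapsto\norm{g_{\le r}}_4$. This gives $\P\bigl(\norm{g_{\le r}}_4\ge(3r)^{1/4}+2r^{1/4}\bigr)\le e^{-2\sqrt r}\le m^{-2}$, and this is exactly where $\sqrt r>\log m$ is used.

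\textbf{Gap 2: the precision in (c)(i).} You assert that the normalization error $\sqrt{\log m/r}$ is $\BigO{\log r/\log m}$ because $r\ge\log^2 m$. This is false. The inequality $\sqrt{\log m/r}\lesssim\log r/\log m$ holds only when $r\gtrsim\log^3 m/\log^2 r$. At $r\asymp\log^2 m$ one has $\sqrt{\log m/r}\asymp(\log m)^{-1/2}\gg\log\log m/\log m$. So conditioning on a worst-case concentration window for the norm loses the claimed precision. Splitting off $S:=\sum_{j\le r,\,j\ne i}g_j^2$ only helps if you then integrate over $S$ exactly instead of bounding it on its window. What is needed is the exact tail of $z_i^2=g_i^2/(g_i^2+S)\sim\mathrm{Beta}(\tfrac12,\tfrac{r-1}{2})$, as in the paper:
\begin{itemize}
\item Upper tail: $\P(z_i^2\ge t)\le 2e^{-(r-1)t/2}$. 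In your language, integrate via $\E e^{-\lambda S}=(1+2\lambda)^{-(r-1)/2}$.
\item Lower tail: $\P(z_i^2\ge t)\gtrsim(rt)^{-1/2}e^{-rt/2}$. In your language, restrict to $\{S\le r\}$; this costs only a constant factor, since $rt^2\asymp\log^2 m/r=O(1)$, again by $r\gtrsim\log^2 m$.
\end{itemize}
These two bounds pin the maximum over $m$ neurons to $\bigl(2\log(2m)-O(\log\log m+\log r)\bigr)/r$, which is the stated relative error $\BigO{\log r/\log m}$.

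\textbf{Two smaller slips.}
\begin{itemize}
\item In (ii), Markov's inequality only gives failure probability $m^{-\Theta(\omega^{-0.01})}$, not $m^{-1/2}$. You need a Chernoff bound over the independent neurons.
\item In (iii), a violation means $\max_{j\ne i}(\bbvvler)_j^2>(\bbvvler)_i^2/0.98$, so the relevant level is about $2\log m/0.98$, not $0.98\cdot 2\log m$. With your level, the union bound over potential neurons gives only about $m^{-0.98}$. It is the factor $1/0.98>1$ that yields $mr^2\cdot m^{-(1+1/0.98)(1-O(\omega^{-0.01}))}\le m^{-1}$.
\end{itemize}
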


\begin{proof}
    We show this one by one.
    \paragraph{item (a)}
    From standard Gaussian bound and union bound over all $d$ dimension, we know $v_i^2 \lesssim \log(md/\delta)\sigma^2$ w.p. $1-\delta$ for all neuron $v$ and $i\in[d]$. Also from standard concentration for Gaussian we know the bound on $\norm{\vv}_2$, $\norm{\vvler}_2$, $\norm{\vvgtr}_2$ (e.g., \cite[Thm. 3.1.1]{vershynin2018high}).

    It is then easy to see above bounds imply the bound on $\bv_i^2$, $\frac{v_i^2}{\norm{\vvler}_2^2}$, $\frac{v_i^2}{\norm{\vvgtr}_2^2}$, and $\norm{\bvvler}_2$.
    
    For the bound on sum of power 4 term like $\sum_{i\in[r]}\bv_i^4$ and $\sum_{i\in[r]}\frac{v_i^4}{\norm{\vvler}_2^4}$, first using concentration on Lipschitz function for gaussian vectors (Theorem 5.2.3 in \cite{vershynin2018high}) on $l_4$ norm $\norm{\cdot}_4$, we have $(\sum_{i\in[r]}v_i^4)^{1/4} \lesssim \sigma r^{1/4}$ w.p. $1-e^{-\BigOmega{\sqrt{r}}}\ge 1-1/m^2$ (using $r>\log^2 m$). Then combining this with the norm bound on $\norm{\vvler}_2$ and $\norm{\vv}_2$, we get the desired bounds by union bound.
    
    \paragraph{item (b)}
    This follows from standard concentration of Gaussian.

    \paragraph{item (c)} We consider any fixed $i\in[r]$ below, then apply union bound.
    
    (i) The scale of best `potential' neuron follows by \cref{lem: max spherical coordinate concentration} (set $\delta=1/r^2$ and take union bound over $i\in[r]$) and bound on $\norm{\vv}_2$.

    (ii) Recall we are interested in neurons that 
    \[
        (\bbvvler)_i^2 \ge (1-2\omega^{-0.01})\max_\vu (\bbvuler)_i^2.
    \]
    Using \cref{lem: max spherical coordinate concentration} with $n=m$, we know RHS
    \[
        (1-2\omega^{-0.01})\max_\vu (\bbvuler)_i^2
        \ge
        (1-2\omega^{-0.01})
        \frac{
            2 \log(2m)
            - 2 \log(C\sqrt{\log m})
            - 2\log\log(1/\delta)
        }{r}.
    \]
    Now consider any fixed neuron $\vv$. It suffices to bound the probability of 
    \[
        (\bbvvler)_i^2
        \ge
        (1-2\omega^{-0.01})
        \frac{
            2 \log(2m)
            - 2 \log(C\sqrt{\log m})
            - 2\log\log(1/\delta)
        }{r}.
    \]
    Using the spherical-coordinate upper tail bound, this probability is at most
    \[
    \begin{aligned}
        &2\exp\left(
            -\frac{r}{2}
            (1-2\omega^{-0.01})
            \frac{
                2 \log(2m)
                - 2 \log(C\sqrt{\log m})
                - 2\log\log(1/\delta)
            }{r}
        \right) 
        \le
        2m^{-1+4\omega^{-0.01}},
    \end{aligned}
    \]
    where we use $\poly(\omega)<\log d$ to absorb the lower-order logarithmic
    factors into the $\omega^{-0.01}$ slack.
    This suggests there are at most $r\cdot 2m^{4\omega^{-0.01}}\le m^{5\omega^{-0.01}}=m^{o(1)}$ potential neurons by Chernoff bound.
    
    (iii) From subitem(i)(ii), we know that if $\vv$ satisfies the condition
    (i.e., is a potential neuron), then
    \[
        (\bbvvler)_i^2
        \ge
        (1-2\omega^{-0.01})
        \frac{
            2 \log(2m)
            - 2 \log(C\sqrt{\log m})
            - 2\log\log(1/\delta)
        }{r}.
    \]
    Thus, to control the probability of
    \[
        (\bbvvler)_i^2 < (1-c')\max_{j\in[r]\setminus\{i\}}(\bbvvler)_j^2,
    \]
    it suffices to control the probability of, for fixed
    $j\in[r]\setminus\{i\}$ and fixed $\vv$,
    \[
        (1-c')(\bbvvler)_j^2, (\bbvvler)_i^2
        \ge
        (1-2\omega^{-0.01})
        \frac{
            2 \log(2m)
            - 2 \log(C\sqrt{\log m})
            - 2\log\log(1/\delta)
        }{r},
    \]
    For $\vz\sim\mathrm{Unif}(\mathbb S^{r-1})$ and $i\ne j$,
    we know $\vz_i^2+\vz_j^2\sim \mathrm{Beta}\left(1,\frac{r-2}{2}\right)$,
    and therefore
    \[
        \P\left(
            \vz_i^2\ge s,\ \vz_j^2\ge t
        \right)
        \le
        \P\left(
            \vz_i^2+\vz_j^2\ge s+t
        \right)
        \le
        \exp\left(-\frac{r-2}{2}(s+t)\right).
    \]
    Applying this with the above choices of $s,t$, the probability for fixed
    $\vv,i,j$ is at most
    \[
        m^{
            -(1-O(\omega^{-0.01}))
            \left(1+\frac{1}{1-c'}\right)
        }.
    \]
    Taking a union bound over all $m$ neurons and all $i\ne j\in[r]$, the failure
    probability is at most
    $mr^2\cdot
        m^{
            -(1-O(\omega^{-0.01}))
            \left(1+\frac{1}{1-c'}\right)
        }$.
    Setting $c'=0.02$ and using $r=m^{o(1)}$, this is at most $1/m$ for
    sufficiently large $m$. 
    
    (iv) Recall neuron of interest, i.e., neuron in both $\Spoti$ and $\Spotj$:
    \[
        (\bbvvler)_i^2 \ge (1-2\omega^{-0.01})\max_\vu(\bbvuler)_i^2,\quad
        (\bbvvler)_j^2 \ge (1-2\omega^{-0.01})\max_\vu(\bbvuler)_j^2.
    \]
    Now consider any fixed neuron $\vv$ and fixed distinct $i,j\in[r]$.
    Similar to above in subitem(ii), it suffices to bound the probability of
    \[
        (\bbvvler)_i^2,(\bbvvler)_j^2
        \ge
        (1-2\omega^{-0.01})
        \frac{
            2 \log(2m)
            - 2 \log(C\sqrt{\log m})
            - 2\log\log(1/\delta)
        }{r}.
    \]
    Using the same joint spherical-coordinate tail bound, we know the probability is at most
    \[
    \begin{aligned}
        m^{-2+O(\omega^{-0.01})}.
    \end{aligned}
    \]
    Hence, taking a union bound over all $m$ neurons and all distinct
    $i,j\in[r]$, the failure probability is at most
    $mr^2\cdot m^{-2+O(\omega^{-0.01})}
        =
        r^2 m^{-1+O(\omega^{-0.01})}
        \le
        m^{-0.95}$.
\end{proof}

\subsection{Concentration}

\begin{lemma}\label{lem: max spherical coordinate concentration}
    Let $\vz_1,\ldots,\vz_n \overset{i.i.d}{\sim} \mathrm{Unif}(\mathbb S^{r-1})$.
    Suppose $r\gtrsim \log^2 n$. Then for any fixed $i\in[r]$ and
    $0<\delta\le e^{-1}$, with probability at least $1-\delta$,
    \begin{align*}
        \frac{
        2 \log(2n)
        - 2 \log(C\sqrt{\log n})
        - 2\log\log(1/\delta)
        }{r}
        \le&
        \max_{a\in[n]} (\vz_a)_i^2 
        \le
        \frac{
        2\log(2n)+2\log(2/\delta)
        }{r},
    \end{align*}
    where $C>0$ is a sufficiently large absolute constant.
\end{lemma}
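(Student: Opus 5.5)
Since the $n$ vectors are i.i.d., the coordinate values $X_a:=(\vz_a)_i^2$ are i.i.d. with law $\mathrm{Beta}(\tfrac12,\tfrac{r-1}{2})$. The plan is to reduce both bounds to sharp one-point tail estimates for a single coordinate $X=z_i^2$. For the upper bound, a union bound suffices: $\P(\max_a X_a\ge s)\le n\P(X\ge s)$. For the lower bound we use independence: $\P(\max_a X_a< s)=(1-\P(X\ge s))^n\le \exp(-n\P(X\ge s))$.

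For the upper tail, the density of $X$ is $\propto x^{-1/2}(1-x)^{(r-3)/2}$. Integrating directly, or bounding $\P(X\ge s)\le \int_s^1 \tfrac{\Gamma(r/2)}{\Gamma(1/2)\Gamma((r-1)/2)}x^{-1/2}(1-x)^{(r-3)/2}dx$ with $x^{-1/2}\le s^{-1/2}$ and $\Gamma(r/2)/\Gamma((r-1)/2)\lesssim\sqrt r$, gives $\P(X\ge s)\lesssim (rs)^{-1/2}(1-s)^{(r-1)/2}\le 2e^{-(r-1)s/2}$ once $rs\gtrsim1$. Alternatively, the representation $z_i=g_i/\norm{\vg}$ together with $\chi^2$ concentration gives the same estimate. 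Setting $s=(2\log(2n)+2\log(2/\delta))/r$ makes $n\P(X\ge s)\le\delta/2$, up to the $(r-1)/r$ factor in the exponent. That factor costs $e^{s/2}=1+o(1)$ because $s\lesssim \log(n/\delta)/r=o(1)$, and it is absorbed by the slack in the constants ($2n$ versus $n$, $2/\delta$ versus $1/\delta$).

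For the lower tail we need a matching anti-concentration bound $\P(X\ge s)\gtrsim \frac{1}{\sqrt{rs}}e^{-rs/2}(1-O(rs^2))$. This follows from the same density by integrating over $[s,s+2/r]$: on this interval $(1-x)^{(r-3)/2}\ge e^{-rx/2-O(rx^2)}$ and $x^{-1/2}\gtrsim s^{-1/2}$. Take $s=(2\log(2n)-2\log(C\sqrt{\log n})-2\log\log(1/\delta))/r$, so that $rs\approx 2\log n$. Then $e^{-rs/2}=C\sqrt{\log n}\log(1/\delta)/(2n)$ and $1/\sqrt{rs}\gtrsim 1/\sqrt{\log n}$, which gives $n\P(X\ge s)\gtrsim C\log(1/\delta)$. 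Choosing $C$ large makes this at least $\log(1/\delta)$, so the failure probability is $\exp(-n\P(X\ge s))\le\delta$.

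The main obstacle is controlling the error term $rs^2$ in the lower bound, coming from $\log(1-x)\ge -x-x^2$, together with the polynomial prefactor. Here the assumption $r\gtrsim\log^2 n$ is used: $rs^2\approx 4\log^2 n/r=O(1)$, so the correction is a constant factor, which can be absorbed into $C$. If $\delta$ is extremely small, so that $\log\log(1/\delta)$ is comparable to $\log n$, the lower threshold becomes trivial or negative and the bound holds vacuously. Union bounds at the end give probability $1-\delta$ for both sides, after adjusting constants.
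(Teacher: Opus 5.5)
Your proposal is correct and follows essentially the same route as the paper: a union bound with the upper tail of the $\mathrm{Beta}(\tfrac12,\tfrac{r-1}{2})$ coordinate for the upper bound, and independence with $(1-p)^n\le e^{-np}$ plus the matching anti-concentration estimate for the lower bound, where $r\gtrsim\log^2 n$ keeps $rs^2=O(1)$. You derive the one-point tail bounds from the density, which the paper simply cites as standard; note that $\log(1-x)\ge -x-x^2$ gives a correction factor $e^{-O(rs^2)}$ rather than $1-O(rs^2)$, and it is this form that makes it a constant factor absorbable into $C$.
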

\begin{proof}
    Fix $i\in[r]$. For $\vz\sim \mathrm{Unif}(\mathbb S^{r-1})$, the random
    variable $(\vz)_i^2$ has density
    \[
        f(x)
        =
        \frac{1}{B(\frac12,\frac{r-1}{2})}
        x^{-1/2}(1-x)^{(r-3)/2},
        \qquad x\in(0,1).
    \]
    Using
    $\frac{1}{B(\frac12,\frac{r-1}{2})}\asymp \sqrt r$,
    we have the standard spherical-coordinate tail estimates: for
    $1/r\lesssim t\lesssim 1$,
    \[
        \P\left( (\vz)_i^2 \ge t \right)
        \le
        2\exp\left(-\frac{rt}{2}\right),
    \]
    and, when additionally $rt^2\lesssim 1$,
    \[
        \P\left( (\vz)_i^2 \ge t \right)
        \ge
        \frac{c}{\sqrt{rt}}
        \exp\left(-\frac{rt}{2}\right),
    \]
    where $c>0$ is an absolute constant. In our application below,
    $t\asymp \frac{\log n}{r}$, and since $r\gtrsim \log^2 n$, indeed
    $rt^2\lesssim 1$.

    For the upper bound, by union bound,
    \[
        \P\left(
            \max_{a\in[n]}(\vz_a)_i^2\ge t
        \right)
        \le
        2n\exp\left(-\frac{rt}{2}\right).
    \]
    Taking
    $t=
        \frac{2\log(2n)+2\log(2/\delta)}{r}$,
    the above probability is at most $\delta$. 
    
    For the lower bound, we have
    \[
    \begin{aligned}
        \P\left(
            \max_{a\in[n]}(\vz_a)_i^2\le t
        \right)
        &=
        \left(
            1-\P\left((\vz)_i^2\ge t\right)
        \right)^n 
        \le
        \exp\left(
            -n\P\left((\vz)_i^2\ge t\right)
        \right).
    \end{aligned}
    \]
    Taking
    $t=
        \frac{
        2 \log(2n)
        - 2 \log(C\sqrt{\log n})
        - 2\log\log(1/\delta)
        }{r}$,
    and using the lower spherical-coordinate tail bound,
    \[
    \begin{aligned}
        n\P\left((\vz)_i^2\ge t\right)
        &\ge
        n\cdot
        \frac{c}{\sqrt{rt}}
        \exp\left(-\frac{rt}{2}\right) 
        \ge
        n\cdot
        \frac{c}{\sqrt{rt}}
        \cdot
        \frac{C\sqrt{\log n}\log(1/\delta)}{2n}
        \ge
        \log(1/\delta).
    \end{aligned}
    \]
    where at the end we use $rt\asymp \log n$ and choosing $C$ sufficiently large.
    Therefore, the failed probability is at most $\delta$.
\end{proof}

\section{Stage 1: learn mean and subspace (0-th and 2nd order term)}\label{sec: stage 1}

As described in \cref{sec: dynamics main-text}, in Stage 1 we will recover 0-th order and 2-st order term and recover subspace in the sense that $\frac{\norm{\tldvvler}_2}{\norm{\tldvv}_2} = 1-o(1)$ for every neuron $\tldvv$.

Recall the definition of 0th-order term
\begin{align*}
    \alpha_* &:= \tT_0^*
             = \sum_{j=1}^r\norm{\vw_j^*}_2
             = r,
    \quad
    \tldalpha := \tldtT_0
             = \tldE[\norm{\vw}_2^2]
             = \frac{1}{m|\Rho|} \sum_{i=1}^m \norm{\tldvw_{i,\rho}}_2^2
             = \frac{1}{m}\sum_{i=1}^m\norm{\tldvw_i}_2^2.
\end{align*}

\paragraph{Outline for this section.}
Stage 1 is further decomposed into two substages:
\begin{enumerate}
    \item \textbf{Stage 1.1: learning the 0th-order component.}
    In this stage, the neuron directions remain essentially unchanged, while their norms grow to fit the mean, i.e., the 0th-order component.

    \item \textbf{Stage 1.2: learning the 2nd-order component.}
    In this stage, the neuron directions within the target subspace and its orthogonal complement remain essentially unchanged. Meanwhile, the norm within the target subspace grows, while the norm in the irrelevant subspace decreases.
\end{enumerate}

\subsection{Stage 1.1: learn 0-th order}\label{sec: stage 1.1}
Denote the end time for Stage 1.1 as the time that learns the 0th-order term
\begin{align*}
    T_{11}=\inf\left\{t\ge 0: 
    \alpha_*-\frac{\lambda}{2\hsigma_0^2} -\tldalpha\le \err_{11}:=\frac{C_{11}}{r^{10}}
    \right\},
\end{align*}
where $C_{11}$ is a large enough constant.

We have the main result for Stage 1.1 summarized in the lemma below. All proofs are deferred to \cref{sec: stage 1.1 proofs}.
\begin{lemma}[Stage 1.1]\label{lem: stage 1.1 main}
    Stage 1.1 ends in time $T_{11}=\BigO{\frac{1}{r}\log r}$. At the end of Stage~1.1, we have for any neuron $\tldvv,\tldvu$ and $i,j\in[d]$
    \begin{enumerate}
        \item Direction of neuron remains the same: $\frac{v_i^{\btt{T_{11}} 2}}{v_j^{\btt{T_{11}} 2}} = \frac{v_i^{\btt{0} 2}}{v_j^{\btt{0} 2}} (1\pm\delta_{dir,11})$, where $\delta_{dir,11}\lesssim\frac{\log d}{r}$.
        Moreover, for $i,j\le r$ we have 
        $\frac{v_i^{\btt{T_{11}} 2}}{v_j^{\btt{T_{11}} 2}} = \frac{v_i^{\btt{0} 2}}{v_j^{\btt{0} 2}} (1\pm\delta_{dir,\le r,11})$, where $\delta_{dir,\le r,11}\lesssim\frac{\log d}{d}$.

        \item Norm of neuron remains relatively:  $\frac{\norm{\tldvv^\btt{T_{11}}}_2}{\norm{\tldvu^\btt{T_{11}}}_2} = \frac{\norm{\tldvv^\btt{0}}_2}{\norm{\tldvu^\btt{0}}_2}
                (1\pm \delta_{norm,11})$,
        $\frac{\norm{\tldvvler^\btt{T_{11}}}_2}{\norm{\tldvuler^\btt{T_{11}}}_2} = \frac{\norm{\tldvvler^\btt{0}}_2}{\norm{\tldvuler^\btt{0}}_2}
                (1\pm \delta_{norm,11})$,
        $\frac{\norm{\tldvvgtr^\btt{T_{11}}}_2}{\norm{\tldvugtr^\btt{T_{11}}}_2} = \frac{\norm{\tldvvgtr^\btt{0}}_2}{\norm{\tldvugtr^\btt{0}}_2}
                (1\pm \delta_{norm,11})$, where $\delta_{norm,11}\lesssim \frac{\log d}{d}$.

        \item population 2nd order ratio: for $i,j\in[r]$
        $\frac{\tldE w_i^{\btt{T_{11}}2}}{\tldE w_j^{\btt{T_{11}}2}} = 1 \pm \frac{1}{r^2}$.

        \item Loss $\L(\tldmW)\ge(1-c)\hsigma_0^2r^2/2+(1-c)\hsigma_{\ge2}^2 r/2$ for $t\le \frac{1}{\log r}T_{11}$, and $\L(\tldmW)\le c\hsigma_0^2r^2/2+(1\pm c)\hsigma_{\ge2}^2 r/2$ for $(1-o(1))T_{11}\le t\le T_{11}$.
    \end{enumerate}
\end{lemma}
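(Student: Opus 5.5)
The plan is to treat Stage 1.1 as a perturbation of the pure 0th-order flow. By \cref{claim: tldw dynamic}, every coordinate evolves multiplicatively,
\[
\frac{\rd \tldv_i}{\rd t}=\tldv_i\Bigl(-\lambda+2\hsigma_0^2(r-\tldalpha)+g_i(\tldvv,t)\Bigr),
\]
where $g_i$ collects all terms of order $\ge 2$. The first step is to bound $g_i$ uniformly throughout Stage 1.1. Using the initialization regularity from \cref{lem: init} (in particular $\bv_i^2\lesssim \log d/d$, $\tldE\bw_i^2\lesssim 1/d$, and the fourth-moment bounds), together with $\sum_k k^2\hsigma_{2k}^2=\BigO{1}$, I expect $|g_i|\lesssim \hsigma_2^2(\ind_{i\le r}+\tldE w_i^2)+\BigO{1}$. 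Since $\tldE w_i^2\le \tldalpha\cdot\BigO{\log d/d}$ stays $\BigO{r\log d/d}$ while $\tldalpha\le r$, this gives $|g_i|=\BigO{1}$. The dominant common factor $2\hsigma_0^2(r-\tldalpha)$ is of order $r$. This bound is maintained by a continuity (bootstrap) argument: assume the direction bounds hold up to time $t$, derive $|g_i|=\BigO{1}$, and integrate.

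The second step is the mass ODE. Summing over particles gives
\[
\dot{\tldalpha}=2\tldalpha\Bigl(2\hsigma_0^2\bigl(r-\tfrac{\lambda}{2\hsigma_0^2}-\tldalpha\bigr)\Bigr)\pm \BigO{\tldalpha}.
\]
This is a logistic equation with rate $\Theta(r)$, starting from $\tldalpha\approx 1$. It reaches within a constant factor of the plateau in time $\BigO{\log r/r}$. It then converges exponentially at rate $\Theta(r)$ to within $\err_{11}=C_{11}r^{-10}$ after an additional $\BigO{\log r/r}$.

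One wrinkle is that the $\BigO{\tldalpha}$ error does not vanish at the plateau. I would handle this by defining the plateau precisely through the exact 0th-order equation. The sign of $\alpha_*-\lambda/(2\hsigma_0^2)-\tldalpha$ is what matters, and the $\BigO{1}$ perturbations only shift the equilibrium by $\BigO{1/r}$. If necessary, I would use the fact that the $\ge2$ terms are $\BigO{\tldalpha/r}$ on average once weighted by $\tldE w_i^2$. This keeps the stopping time well defined, and pinning it down is the delicate point I expect to need the most care.

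For items (a) and (b), I integrate the differences of log-rates:
\[
\frac{\rd}{\rd t}\log\frac{\tldv_i^2}{\tldv_j^2}=2(g_i-g_j).
\]
Over time $T_{11}=\BigO{\log r/r}$ this gives a change of $\BigO{T_{11}}\sup|g_i-g_j|$.

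\begin{itemize}
\item For arbitrary $i,j$, the gap $\hsigma_2^2\ind_{i\le r}$ gives $|g_i-g_j|=\BigO{1}$, hence a deviation of $\BigO{\log r/r}\lesssim \log d/r$.
\item For $i,j\le r$, the $\ind$ terms cancel. What remains are differences such as $\tldE w_i^2-\tldE w_j^2$ and the higher-order $\bv$-dependent terms, which are $\BigO{\log d/d}$-small, giving the sharper $\delta_{dir,\le r,11}$.
\end{itemize}

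Norm ratios in (b) follow the same way, because each norm is a weighted sum of coordinates sharing the common factor.

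For item (c), I would use $\tldE w_i^2=\tldE[\norm{\vw}^2\bw_i^2]$ at time $0$ together with the symmetrization over $\Rho$. The group only sign-flips the first $r$ coordinates, so relevant-coordinate balance must come from concentration over the $m$ particles. This gives $1\pm\BigO{\sqrt{\log d/m}}\le 1\pm r^{-2}$ since $m\ge d^3$. The multiplicative drift over Stage 1.1 then only adds a factor $1\pm\BigO{\log d/d}$.

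For item (d), I would use the Hermite decomposition \eqref{eq:hermite-loss-main}.

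\begin{itemize}
\item \emph{Lower bound, $t\le T_{11}/\log r$.} The logistic solution still has $\tldalpha=o(r)$, so the 0th-order term is $\ge(1-c)\hsigma_0^2r^2/2$. The $k\ge2$ terms are $\ge \frac12\hsigma_k^2(\norm{\tT_k^*}_F-\norm{\tldtT_k}_F)^2$, where $\norm{\tldtT_k}_F\le \tldalpha\max\norm{\bvw}^{k}\ldots$ is small relative to $\sqrt r$. For $k=2$ I would make this concrete via $\norm{\tldtT_2}_F^2=\sum_i(\tldE w_i^2)^2\lesssim \tldalpha^2/d$.
\item \emph{Upper bound, near $T_{11}$.} The 0th-order residual is $\BigO{\lambda^2+\err_{11}^2}$, which is $\le c\,r^2$ once $\lambda_0$ is small. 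The cross terms $\langle\tldtT_k,\tT_k^*\rangle=\sum_{j\le r}\tldE[\norm{\vw}^2\bw_j^k]\lesssim r\cdot r/d$ are negligible, and $\norm{\tldtT_k}_F^2\lesssim r^2/d$ as well. This yields $(1\pm c)\hsigma_{\ge2}^2r/2$.
\end{itemize}
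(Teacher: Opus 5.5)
Your overall architecture matches the paper's: a bootstrap on directions and norm ratios, a multiplicative coordinate flow dominated by the common factor $2\hsigma_0^2(r-\tldalpha)$, a logistic ODE for $\tldalpha$, integration of log-ratio rates, and the Hermite decomposition for (d). The quantitative core, however, is missing.

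Your Step~1 bound $|g_i|=\BigO{1}$ is far too weak for what the lemma asserts. An $\BigO{\tldalpha}$ error in $\dot{\tldalpha}$ is an $\BigO{1}$ error in the per-capita rate $4\hsigma_0^2(r-\lambda/(2\hsigma_0^2)-\tldalpha)$. It can therefore move the equilibrium by $\BigO{1}$ (not $\BigO{1/r}$ as you claim), in an uncontrolled direction. Your fallback $\BigO{\tldalpha/r}$ would still allow a shift of order $1/r\gg r^{-10}$. Since $T_{11}$ is the first time $r-\lambda/(2\hsigma_0^2)-\tldalpha$ drops below $C_{11}r^{-10}$, a downward shift of that size means $T_{11}$ is never reached. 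So you have not shown that Stage~1.1 ends at all.

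The same weakness breaks (a) for $i,j\le r$, (b), and (c). Integrating $\BigO{1}$ rate differences over $T_{11}=\Theta(\log r/r)$ gives only $\BigO{\log r/r}$ drift, far from the required $\log d/d$ and $r^{-2}$. Your later assertion that the higher-order terms are $\BigO{\log d/d}$-small is not proved, and is much stronger than anything your Step~1 provides.

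What is missing is the actual size of the non-0th-order terms. Under the bootstrap, every neuron has $\bv_i^2\lesssim\log d/d$, so $\tldE w_i^2\lesssim r/d$, $\sum_{l\le r}\bv_l^{4}\lesssim r/d^2$, and $\tldE[\norm{\vw}^2\langle\bvw,\bvv\rangle^{2j}]\le\sum_i\tldE[w_i^2]\,\bv_i^2\lesssim r/d$. Hence the relative rate of $\tldv_i$ is $-\lambda+2\hsigma_0^2(r-\tldalpha)+2\hsigma_2^2\ind_{i\le r}\pm\BigO{r/d}$. The norm rate is the $\bv_i^2$-weighted average of these, and the only $\Theta(1)$ term, $2\hsigma_2^2\ind_{i\le r}$, carries weight $\sum_{i\le r}\bv_i^2\lesssim r/d$. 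This gives $\dot{\tldalpha}=4\hsigma_0^2\tldalpha\bigl(r-\lambda/(2\hsigma_0^2)-\tldalpha\pm\BigO{r/d}\bigr)$. Because $r\log^2 r<\log^3 d$ makes $d$ super-polynomial in $r$, we have $r/d\ll r^{-10}$, which is exactly what makes $T_{11}$ finite. The same $\BigO{r/d}$ rate differences, integrated over $T_{11}$, give the $\log d/d$ bounds in (a) and (b) and the $1\pm r^{-2}$ in (c).

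A secondary point: the term $\tldE[\norm{\vw}^2\langle\bvw,\bvv\rangle^{2j-1}\bw_i/\bv_i]$ cannot be controlled uniformly in $j$ via $\sum_k k^2\hsigma_{2k}^2=\BigO{1}$ when $\sigma(x)=|x|$. There $\hsigma_k^2\asymp k^{-5/2}$, so that series diverges. The paper instead uses \cref{lem: high order moment bound} for $j\le\sqrt d$ and the induction-hypothesis lower bound $|\bv_i|\ge\iota$ for $j>\sqrt d$.

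Your concentration argument for the initial balance in (c) is fine, and more explicit than the paper's appeal to \cref{lem: init}. Your treatment of (d) is also fine.
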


To prove the above, we introduce the following induction hypothesis that hold for all Stage 1.1. This will later help us to simplify the analysis.
\begin{restatable}[Induction Hypothesis for Stage 1.1]{lemma}{lemstageaaIH}\label{lem: stage 1.1 IH}
    Denote
    \begin{enumerate}
        \item \label{item: stage 1.1 IH dir}
        Direction of neuron remains the same respectively: For any time $t\le T_{dir,11}$ we have
        for any neuron $\tldvv$ and $i\in[d]$
        \begin{align*}
            \btldv_i^2= \btldv_i^{\btt{0}2} (1 \pm \delta_{dir,11}),
        \end{align*}
        where $\delta_{dir,11}\lesssim\frac{\log d}{r} \le 0.01$.

        \item \label{item: stage 1.1 IH norm}
        Norm of neuron remains the same: For any time $t\le T_{norm,11}$ we have 
        for any neurons $\tldvv,\tldvu$
        \begin{align*}
                \frac{\norm{\tldvv^\btt{t}}_2}{\norm{\tldvu^\btt{t}}_2} = \frac{\norm{\tldvv^\btt{0}}_2}{\norm{\tldvu^\btt{0}}_2}
                (1\pm \delta_{norm,11}),
        \end{align*}
        where $\delta_{norm,11}\lesssim\frac{\log d}{d}\le 0.01$.
    
        \item \label{item stage 1.1 IH reg}
        Regularity condition: For any time $t\le T_{reg,11}$ we have
        for any neurons $\tldvv$ and $i\in[d]$
        \begin{align*}
                |\bv_i| \ge \iota:=e^{-r\log^{1/2} r},
        \end{align*}
    \end{enumerate}        
    Then, $T_{11}\le \min\{T_{dir,11}, T_{norm,11}, T_{reg,11}\}$. That is, current stage ends before any of the above condition breaks.
\end{restatable}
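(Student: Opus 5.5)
This is a continuity (bootstrap) argument. Let $T^\star:=\min\{T_{dir,11},T_{norm,11},T_{reg,11},T_{11}\}$; all three hypotheses hold on $[0,T^\star]$, and the goal is to show none of them is saturated before $T_{11}$. The plan has three steps: first, bound $T_{11}=\BigO{\log r/r}$ using only the hypotheses; second, integrate the coordinatewise dynamics of \cref{claim: tldw dynamic} over $[0,T^\star]$ and show each hypothesis holds with strict slack; third, conclude $T^\star=T_{11}$.

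\textbf{Step 1: the mass equation.} By \cref{claim: tldw dynamic}, each coordinate evolves multiplicatively,
\[
\frac{\rd}{\rd t}\log|\tldv_i| = -\lambda+2\hsigma_0^2(r-\tldalpha)+R_i(\tldvv,t),
\]
where $R_i$ collects the $2j$-th order terms with $j\ge1$. On $[0,T^\star]$, the direction hypothesis together with \cref{lem: init}(b) gives $\tldE[\norm{\vw}^2\langle\bvw,\bvv\rangle^{2j-1}\bw_i/\bv_i]\lesssim \tldalpha\cdot \poly\log d/d$. Also $\bv_i^{2j-2}\lesssim(\log d/d)^{j-1}$ and $\sum_{l\le r}\bv_l^{2j}\lesssim r/d^2$. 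Summing over $j$ with $\sum k^2\hsigma_k^2=\BigO{1}$ (or \cref{claim: hermite coeff} for $|x|$), I expect $|R_i|\lesssim \hsigma_2^2\ind_{i\le r}+\tldalpha\,\poly\log d/d+\BigO{1}$, which is $\BigO{1}$ uniformly, and in particular $\ll r$.

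Averaging gives $\dot{\tldalpha}=2\tldalpha\bigl(2\hsigma_0^2(r-\lambda/(2\hsigma_0^2)-\tldalpha)\bigr)\pm\BigO{\tldalpha}$. Starting from $\tldalpha(0)\approx1$, this logistic equation with rate $\Theta(r)$ reaches within $\err_{11}=C_{11}/r^{10}$ of its equilibrium in time $\BigO{\log r/r}$. The $\BigO{\tldalpha}$ perturbation shifts the equilibrium only by $\BigO{1}$, so the precise definition of $T_{11}$ needs care. I would absorb this shift using the threshold in $T_{11}$, or check the sign of the drift near equilibrium. If the shift cannot be absorbed, I would fall back on showing that the inf in $T_{11}$ is attained because the error term has a definite sign.

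\textbf{Step 2: closing the hypotheses.} The key observation is that the dominant term $-\lambda+2\hsigma_0^2(r-\tldalpha)$ is identical for all coordinates and all neurons. Hence it cancels in
\[
\frac{\rd}{\rd t}\log\frac{\tldv_i^2}{\tldv_j^2}=2(R_i-R_j),
\]
and likewise in ratios of norms of different neurons. Each of these log-ratios therefore changes by at most $\int_0^{T^\star}\BigO{|R_i-R_j|}\,\rd t$.

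For directions, the $\ind_{i\le r}$ term gives a change of order $\hsigma_2^2 T_{11}\lesssim \log r/r\le\log d/r$, which is strictly below $\delta_{dir,11}$ once constants are chosen. Among coordinates $i,j\le r$ this term cancels, leaving only the $\poly\log d/d$ pieces. For norm ratios, the relevant difference is $R(\tldvv)-R(\tldvu)$ averaged over the direction, which is $\lesssim \log d/d\cdot T_{11}\cdot r$. Finally, regularity follows because $|\bv_i|$ shrinks by at most a factor $e^{\BigO{T_{11}}}=\BigO{1}$ from its initial value, which is $\gg\iota$ with high probability by a Gaussian small-ball bound plus a union bound.

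\textbf{Main obstacle.} The hardest step is bounding $R_i$ uniformly, and in particular the term $\tldE[\cdots\bw_i/\bv_i]$ when $\bv_i$ is small. The expansion in \cref{claim: tldw dynamic} removes the denominator, but the multinomial sums over $j$ must be controlled by $\sum_j(\text{const})^j\hsigma_{2j}^2$. I would bound $\langle\bvw,\bvv\rangle^{2j-2}\le1$ after extracting one factor, giving $\tldE[\norm{\vw}^2\bw_i^2\cdot(2j-1)]$ plus cross terms. This gives $|R_i|\lesssim\sum_j j\hsigma_{2j}^2(\ldots)=\BigO{1}$, which is the bound the argument needs.
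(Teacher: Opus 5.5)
Your bootstrap has the same architecture as the paper's proof. The common factor $-\lambda+2\hsigma_0^2(r-\tldalpha)$ cancels in coordinate and norm ratios, the ratios are integrated over $T_{11}=\BigO{\log r/r}$, and (c) is read off from the direction bound. The gap is the size of the remainder: you only claim $|R_i|=\BigO{1}$, and this breaks two of your three steps.

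In Step 1, it leaves an $\BigO{\tldalpha}$ error in the mass equation, which a priori lets the equilibrium of $\tldalpha$ sit $\Theta(1)$ below $r-\frac{\lambda}{2\hsigma_0^2}$. Since $T_{11}$ requires $r-\frac{\lambda}{2\hsigma_0^2}-\tldalpha\le C_{11}/r^{10}$, you cannot conclude that $T_{11}$ is ever reached. Neither fallback works as stated. A $\Theta(1)$ shift cannot be absorbed into $C_{11}/r^{10}$. Nothing in your estimates gives the remainder a sign either: it equals $4\sum_{k\ge 2}\hsigma_k^2\bigl(\langle\tT_k^*,\tT_k\rangle-\norm{\tT_k}_F^2\bigr)$, a difference of two nonnegative quantities.

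In Step 2, closing (b) at precision $\delta_{norm,11}\lesssim\log d/d$ requires the neuron-dependent part of $\frac{\rd}{\rd t}\log\norm{\tldvv}$ to be $\BigO{r/d}$ up to logarithms. You assert this, but it does not follow from $R_i=\BigO{1}$.

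What is needed is the sharper statement of \cref{lem: stage 1.1 dynamic}: apart from $2\hsigma_2^2\ind_{i\le r}$, every correction is $\pm\BigO{r/d}$, and $r/d\ll r^{-10}$ because $r^{11}\ll d$. At the norm level this is easy if you compute $\frac{\rd}{\rd t}\norm{\tldvv}^2$ directly from the tensor form of the gradient. The $\bw_i/\bv_i$ terms then recombine into $\tT_k(\bvv^{\otimes k})$, so the correction to $-\lambda+2\hsigma_0^2(r-\tldalpha)$ in $\frac{\rd}{\rd t}\log\norm{\tldvv}$ is exactly $2\sum_{k\ge2}\hsigma_k^2(\tT_k^*-\tT_k)(\bvv^{\otimes k})$. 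Moreover $|(\tT_k^*-\tT_k)(\bvv^{\otimes k})|\le\sum_{l\le r}\bv_l^2+\tldE[\norm{\vw}^2\langle\bvw,\bvv\rangle^2]\lesssim r/d$. Only $\sum_k\hsigma_k^2<\infty$ is used, and no regularity is needed.

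The coordinate-wise remainder, which you still need for the lower bound on $\bv_i^2$ in (a), has a second problem. The only potentially large negative term there is $-2j\hsigma_{2j}^2\tldE[\norm{\vw}^2\langle\bvw,\bvv\rangle^{2j-1}\bw_i/\bv_i]$. Extracting one factor and using $\langle\bvw,\bvv\rangle^{2j-2}\le 1$ costs a factor $2j-1$ on top of the $2j$ already in the gradient, so summing over $j$ needs $\sum_j j^2\hsigma_{2j}^2<\infty$. That holds for smooth $\sigma$ under \cref{assum: activation}, and there it actually gives $\BigO{r/d}$ rather than $\BigO{1}$. It fails for $\sigma(x)=|x|$, where $\hsigma_{2j}^2\asymp j^{-5/2}$ and \cref{claim: hermite coeff} only controls $\sum_j j\hsigma_{2j}^2$.

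The paper handles this in two ranges. For $j\le\sqrt d$ it uses \cref{lem: high order moment bound}, which exploits the near-isotropy of the neurons to remove the extra $j$-growth. For $j>\sqrt d$ it bounds the term by $\frac{1}{|\bv_i|}\tldE[\norm{\vw}^2\langle\bvw,\bvv\rangle^{2j-2}]$ and invokes hypothesis (c), $|\bv_i|\ge\iota$. This is why (c) is part of the induction hypothesis at all; your argument only verifies (c) and never uses it.
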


Denote $\tldT_{11}=\min\{T_{11}, T_{dir,11}, T_{norm,11}, T_{reg,11}\}$ as the time that all conditions maintained in Lemma~\ref{lem: stage 1.1 IH} hold. This lemma is effectively the same as induction hypothesis in the discrete time (gradient descent) case that we need to use induction on time step to show such result (that's why we also call this induction hypothesis).

\paragraph{Implication of Induction Hypothesis.}
Given the above induction hypothesis, we first give few useful facts in this stage that will be later used.

The lemma below collects few useful facts that transfer from random initialization given the direction of each neuron does not move much under Induction Hyphthesis (Lemma~\ref{lem: stage 1.1 IH}).
\begin{lemma}\label{lem: stage 1.1 IH implication}
    For any $t\le\tldT_{11}$, the following holds under the initialization Lemma~\ref{lem: init}: for any neuron $\tldvv$ and $i\in[d]$
    \begin{enumerate}
        \item $\norm{\tldvv}_2 \le 1.01\sqrt{r}$.
        \item $\btldv_i^2 \le 1.01\btldv_i^{\btt{0}2}\lesssim \frac{\log d}{d}$, $\tldv_i^2\lesssim \frac{r\log d}{d}$
        \item $\sum_{i\le r}\btldv_i^2\lesssim \frac{r}{d}$, $\sum_{i\le r}\btldv_i^4 \lesssim \frac{r}{d^2}$.
        \item $\tldE[w_i^2] \lesssim \frac{r}{d}$, $\tldE[\bw_i^2] \lesssim \frac{1}{d}$.
    \end{enumerate}
\end{lemma}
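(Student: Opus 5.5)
Each item follows by combining the induction hypothesis (\cref{lem: stage 1.1 IH}, items \ref{item: stage 1.1 IH dir} and \ref{item: stage 1.1 IH norm}) with the initialization bounds of \cref{lem: init}. The only genuinely dynamical input is an a priori upper bound on the total mass $\tldalpha$ throughout Stage 1.1. Since the symmetrized network copies each neuron along its $\Rho$-orbit, and $\rho$ only permutes and sign-flips coordinates, it is enough to prove coordinate-wise bounds for the representatives. For the $\Rho$-averaged quantities in item (d) we average over the orbit.

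\textbf{Item (a).} By definition $t\le \tldT_{11}\le T_{11}$, so $\tldalpha^{(t)}\le \alpha_*-\lambda/(2\hsigma_0^2)-\err_{11}<r$. Indeed, before $T_{11}$ the quantity $\alpha_*-\lambda/(2\hsigma_0^2)-\tldalpha$ stays above $\err_{11}>0$. This uses that $\tldalpha$ starts near $1$ (by \cref{lem: init}, $\norm{\vv}_2^2=\sigma^2 d(1\pm o(1))$ with $\sigma^2=1/d$) and moves continuously. The norm item \ref{item: stage 1.1 IH norm} says all neurons have norms within a factor $1\pm \BigO{\delta_{norm,11}}$ of each other. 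At initialization these ratios are $1\pm\BigO{\sqrt{\log m/d}}$, so every neuron satisfies $\norm{\tldvv}_2^2\le (1+o(1))\tldalpha\le (1+o(1))r$. Taking square roots gives $\norm{\tldvv}_2\le 1.01\sqrt r$.

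\textbf{Item (b).} The direction item \ref{item: stage 1.1 IH dir} gives $\btldv_i^2\le(1+\delta_{dir,11})\btldv_i^{(0)2}\le 1.01\,\btldv_i^{(0)2}$. \cref{lem: init}(a) gives $\btldv_i^{(0)2}\lesssim \log(md/\delta)/d\lesssim \log d/d$, using $m=\poly(d)$ and $\delta=1/r$. Then $\tldv_i^2=\norm{\tldvv}_2^2\btldv_i^2\lesssim r\log d/d$ by item (a).

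\textbf{Item (c).} We sum the coordinate-wise comparison over $i\le r$: $\sum_{i\le r}\btldv_i^2\le 1.01\norm{\bvvler^{(0)}}_2^2\lesssim r/d$ by the $\norm{\bvvler}_2$ bound in \cref{lem: init}. Similarly $\sum_{i\le r}\btldv_i^4\le 1.03\sum_{i\le r}\bv_i^{(0)4}\lesssim r/d^2$.

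\textbf{Item (d).} We write $\tldE[\bw_i^2]$ as the average over $m$ representatives and over $\rho\in\Rho$. For $i\le r$, $\rho$ maps coordinate $i$ to itself up to sign. The IH then gives $\tldE[\bw_i^2]\le 1.01\,\tldE[\bw_i^{(0)2}]\lesssim 1/d$ by \cref{lem: init}(b). For $i>r$, the $\Rho$-average over the permutations of irrelevant coordinates makes $\tldE[\bw_i^2]$ equal to $\frac{1}{d-r}\tldE\norm{\bvwgtr}_2^2\le 1/(d-r)\lesssim 1/d$. The bound $\tldE[w_i^2]\lesssim r/d$ then follows by multiplying by $\norm{\tldvw}_2^2\lesssim r$ from item (a).

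\textbf{Main obstacle.} The only slightly delicate step is item (a). There we must make sure the norm-ratio control really caps individual norms by $(1+o(1))\tldalpha$, and that $\tldalpha<r$ holds up to $\tldT_{11}$ (not merely at $T_{11}$). Both follow from continuity and the definition of $T_{11}$ as a first hitting time; everything else is bookkeeping with \cref{lem: init}.
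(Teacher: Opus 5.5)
Your proof is correct and follows the paper's argument. Item (a) comes from the norm-ratio hypothesis together with $\tldalpha\le r$ before $T_{11}$, items (b)--(c) from the direction hypothesis combined with the initialization bounds and (a), and item (d) by averaging over particles. The only differences are cosmetic: you derive $\tldalpha<r$ explicitly from the hitting-time definition of $T_{11}$, and for $i>r$ in (d) you use permutation averaging rather than the direction hypothesis; the orbit bookkeeping is unnecessary, since the induction hypothesis already applies to every particle $\tldvv_{i,\rho}$.
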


Given the induction hypothesis and its implications above, we now are able to simplify the dynamics as below. This shows in this stage, the dynamic is dominated by the 0th-order term.
\begin{lemma}\label{lem: stage 1.1 dynamic}
    In Stage 1.1, for $t\le \tldT_{11}$, the dynamic \eqref{eq: dynamic of tldw} can be simplified as: for any neuron $\tldvv$ and $i\in[d]$
    \begin{align*}
        \frac{\rd \tldv_i}{\rd t}
        =&  \tldv_i \left(
            -\lambda
            + 2\hsigma_0^2\left(r-\tldE\norm{\vw}_2^2\right)
            + 2\hsigma_2^2 \ind_{i\le r} 
            \pm \BigO{\frac{r}{d}}
        \right),\\
        \frac{\rd}{\rd t}\norm{\tldvv}_2
        =& \norm{\tldvv}_2 \left(
            -\lambda
            +2\hsigma_0^2\left(r-\tldE\norm{\vw}_2^2\right)
            \pm \BigO{\frac{r}{d}}
        \right),\\
        \frac{\rd \btldv_i}{\rd t}
        =& \btldv_i \cdot 2\hsigma_2^2 \left(\ind_{i\le r}
            \pm \BigO{\frac{r}{d}}
            \right).
    \end{align*}
\end{lemma}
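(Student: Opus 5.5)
The plan is to start from the exact coordinate formula in \cref{claim: tldw dynamic}, with $\gamma=1$ and $\lambda_t=\lambda$ in Stage 1.1. The measure $\tldmu$ is conditionally symmetric, so the formula applies. I will then bound every term except the $0$th-order term and the leading part of the $2$nd-order term by $\BigO{r/d}\,|\tldv_i|$, using the implications collected in \cref{lem: stage 1.1 IH implication}. The $0$th-order term is exactly $2\hsigma_0^2(r-\tldE\norm{\vw}_2^2)\tldv_i$.

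For $j=1$ (the $2$nd-order term), the bracket equals
\[
2\ind_{i\le r}-2\tldE[\norm{\vw}_2^2\bw_i^2].
\]
The third piece has coefficient $2j-2=0$ and drops out. By item 4 of \cref{lem: stage 1.1 IH implication}, $\tldE[\norm{\vw}_2^2\bw_i^2]=\tldE w_i^2\lesssim r/d$, which gives the main term $2\hsigma_2^2\ind_{i\le r}$ plus an $\BigO{r/d}$ error.

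For $j\ge2$ I bound the three pieces separately.
\begin{enumerate}
\item The target piece satisfies $2j\btldv_i^{2j-2}\ind_{i\le r}\le 2j(\log d/d)^{j-1}\lesssim j/d$.
\item The piece $(2j-2)\sum_{l\le r}\btldv_l^{2j}$ is at most $(2j-2)\sum_{l\le r}\btldv_l^{4}\lesssim jr/d^2$ by item 3.
\item The piece $(2j-2)\tldE[\norm{\vw}_2^2\langle\bvw,\bvv\rangle^{2j}]$ is at most $(2j-2)\tldE[\norm{\vw}_2^2\langle\bvw,\bvv\rangle^{2}]$. Conditional symmetry kills the cross terms, so this equals $\sum_\ell\btldv_\ell^2\,\tldE w_\ell^2\lesssim r/d$ by item 4.
\end{enumerate}

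The cross-moment term $2j\tldE[\norm{\vw}_2^2\langle\bvw,\bvv\rangle^{2j-1}\bw_i/\bv_i]$ is the main obstacle, because of the division by $\bv_i$. I will use the even-polynomial rewriting from \cref{claim: tldw dynamic}. In that expansion every monomial carries $\bw_i^2$ and otherwise only even powers $(\bw_\ell\bv_\ell)^{2\beta_\ell}$ with $\sum_\ell\beta_\ell=j-1$.

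\emph{Case $j=2$.} The term is $\tldE[\norm{\vw}^2(\bw_i^2\bv_i^2+3\sum_{\ell\ne i}\bw_\ell^2\bv_\ell^2\bw_i^2)]$. Using $\bw_i^2\lesssim\log d/d$ for all particles (item 2, which holds uniformly over particles) and $\tldE w_\ell^2\lesssim r/d$, this is $\lesssim (\log d/d)\sum_\ell\bv_\ell^2\,r\lesssim r\log d/d$. The extra $\log d$ is absorbed into the $\BigO{r/d}$ error, either by the slack $\delta_{dir,11}$ or by using the sharper averaged bound $\tldE[\norm{\vw}^2\bw_\ell^2\bw_i^2]\lesssim r/d^2$.

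\emph{General $j$.} Bound the multinomial sum by $|\langle\bvw,\bvv\rangle|^{2j-2}$ times a constant factor. This gives a total of order $j\,\tldE[\norm{\vw}^2\bw_i^2\langle\bvw,\bvv\rangle^{2}]$ up to combinatorics, and hence geometric decay in $j$, since $\langle\bvw,\bvv\rangle^2\lesssim \log d/d$ off the diagonal. Note that $\langle\bvw,\bvv\rangle^2$ can be $1$ only on the null set of a particle meeting itself. Summing over $j$ against $\hsigma_{2j}^2$ and using $\sum k^2\hsigma_{2k}^2=\BigO{1}$ (or \cref{claim: hermite coeff} for $|x|$) keeps the total error $\BigO{r/d}$.

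The remaining two displays follow from the first. For the norm,
\[
\frac{\rd}{\rd t}\norm{\tldvv}^2=2\sum_i\tldv_i\dot{\tldv}_i .
\]
The $\ind_{i\le r}$ term contributes $2\hsigma_2^2\norm{\tldvvler}^2$, and $\norm{\tldvvler}^2/\norm{\tldvv}^2\lesssim r/d$ by item 3, so it is absorbed into the error. For the direction,
\[
\frac{\rd\btldv_i}{\rd t}=\frac{\dot{\tldv}_i}{\norm{\tldvv}}-\btldv_i\frac{\rd}{\rd t}\log\norm{\tldvv}.
\]
The common factor $-\lambda+2\hsigma_0^2(r-\tldE\norm{\vw}^2)$ cancels, leaving $\btldv_i\,2\hsigma_2^2(\ind_{i\le r}\pm\BigO{r/d})$ after rescaling the error constant by $\hsigma_2^2$.
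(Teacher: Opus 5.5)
Your handling of the $0$th- and $2$nd-order terms, of the three non-cross pieces for $j\ge2$, and your derivation of the norm and direction equations all follow the paper's proof. The gap is in the cross-moment term $\tldE[\norm{\vw}_2^2\langle\bvw,\bvv\rangle^{2j-1}\bw_i/\bv_i]$ for general $j$ (writing $\vv$ for $\tldvv$).

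Your decay argument rests on $\langle\bvw,\bvv\rangle^2\lesssim\log d/d$ for every particle other than $\vv$, and this is false for $\tldmu$. The $\Rho$-orbit of $\vv$ contains $-\vv$, and for each $k$ it contains the copy of $\vv$ with its $k$-th sign flipped, whose normalized inner product with $\bvv$ is $1-2\bv_k^2$. More generally, for any particle, matching signs and sorting the irrelevant block gives an orbit element with inner product of order one. These particles carry only tiny mass, so the decay has to come from an \emph{averaged} bound.

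Since $\tldmu$ is sign-flip invariant, $\tldE[\norm{\vw}_2^2\bw_i^2\langle\bvw,\bvv\rangle^{2j-2}]$ is an average of moments of Rademacher sums $\sum_k s_k\bw_k\bv_k$. Khintchine bounds these by $(Cj\max_k\bw_k^2)^{j-1}\le(Cj\log d/d)^{j-1}$; this is essentially \cref{lem: high order moment bound}. The paper applies it only for $2\le j\le\sqrt d$, because the bound stops decaying once $j\gtrsim d/\log d$. Note also that the multinomial comparison costs a factor $2j-1$, not a constant, and it holds only after this sign averaging.

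The tail $j>\sqrt d$ therefore needs its own argument, and your route does not supply one for $\sigma(x)=|x|$. The factor $2j-1$ times the prefactor $2j$ weights the tail by $\sum_j j^2\hsigma_{2j}^2$, which diverges when $\hsigma_{2j}^2\asymp j^{-5/2}$; \cref{claim: hermite coeff} only gives $\sum_j j\hsigma_{2j}^2\lesssim1$.

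The paper avoids the expansion in this range. It bounds the cross term by $\frac{1}{|\bv_i|}\tldE[\norm{\vw}_2^2\langle\bvw,\bvv\rangle^{2j-2}]\le\frac{1}{|\bv_i|}\tldE[\norm{\vw}_2^2\langle\bvw,\bvv\rangle^{2\sqrt d}]\lesssim\frac{r}{|\bv_i|}(\log d/\sqrt d)^{\sqrt d}$. It then uses the induction-hypothesis lower bound $|\bv_i|\ge\iota=e^{-r\log^{1/2}r}$ from \cref{lem: stage 1.1 IH}\ref{item stage 1.1 IH reg}, which your proposal never invokes, to get $\le 1/d$. That is summable against $\sum_j 2j\hsigma_{2j}^2\lesssim1$. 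For smooth activations with $\sum_k k^2\hsigma_{2k}^2=\BigO{1}$, your route does close once the averaged bound replaces the pointwise one, capped at $j=\sqrt d$ by monotonicity.

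Two smaller slips. For $j=2$, combining $\bw_i^2\lesssim\log d/d$ with $\tldE w_\ell^2\lesssim r/d$ already gives $r\log d/d^2$. Your intermediate $r\log d/d$ is not $\BigO{r/d}$, and $\delta_{dir,11}$ cannot absorb it. Also, the target piece is $\lesssim j\log d/d$ rather than $j/d$; this is harmless since $r\ge\log d$.
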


Give the above simplified dynamics, we now ready to show the convergence of $\tldalpha$ in Stage 1.1.
\begin{lemma}[Dynamic of $\tldalpha$]\label{lem: stage 1.1 dynamic of tldalpha}
    For $t\le \tldT_{11}$, we have
    \begin{align*}
        \frac{\rd \tldalpha}{\rd t}
        = 4\hsigma_0^2\tldalpha \left(r-\frac{\lambda}{2\hsigma_0^2}-\tldalpha \pm \BigO{\frac{r}{d}}\right).
    \end{align*}
    As a corollary, within time $\hT_{11}=\BigO{\frac{1}{r}\log r}$ we have $\alpha_* - \frac{\lambda}{2\hsigma_0^2} - \tldalpha \lesssim \frac{1}{r^{10}}$. Moreover before $\frac{1}{\log r}\hT_{11}\lesssim \frac{1}{r}$, we know $\alpha=o(1)\alpha_*$.
\end{lemma}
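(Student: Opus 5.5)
Since $\tldalpha=\tldE\norm{\vw}_2^2=\frac1m\sum_i\norm{\tldvw_i}_2^2$, the plan is to differentiate this average and plug in the per-neuron norm dynamics of \cref{lem: stage 1.1 dynamic}. For each neuron, $\frac{\rd}{\rd t}\norm{\tldvv}_2^2=2\norm{\tldvv}_2\frac{\rd}{\rd t}\norm{\tldvv}_2=2\norm{\tldvv}_2^2\bigl(-\lambda+2\hsigma_0^2(r-\tldalpha)\pm\BigO{r/d}\bigr)$. The error $\BigO{r/d}$ is uniform over neurons, so averaging gives
\[
\frac{\rd\tldalpha}{\rd t}=2\tldalpha\bigl(-\lambda+2\hsigma_0^2(r-\tldalpha)\pm\BigO{r/d}\bigr)=4\hsigma_0^2\tldalpha\Bigl(r-\tfrac{\lambda}{2\hsigma_0^2}-\tldalpha\pm\BigO{r/d}\Bigr),
\]
where the last step absorbs the constant $\hsigma_0^{-2}$ into the $\BigO{\cdot}$ (we have $\hsigma_0\neq0$ by \cref{assum: activation}). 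This is the displayed identity, valid for $t\le\tldT_{11}$.

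For the corollary, set $a:=r-\lambda/(2\hsigma_0^2)=r(1-\lambda_0/(2\hsigma_0^2))=\Theta(r)$, using that $\lambda_0$ is small. The ODE is then a perturbed logistic equation $\dot{\tldalpha}=4\hsigma_0^2\tldalpha(a-\tldalpha\pm\epsilon)$ with $\epsilon=\BigO{r/d}$. At initialization \cref{lem: init} gives $\tldalpha(0)=\Theta(1)$, since $\norm{\vv}_2^2\approx d\sigma^2=1$. The growth phase is handled by noting that while $\tldalpha\le a/2$ we have $\dot{\tldalpha}\ge c\,r\,\tldalpha$, so $\tldalpha$ reaches $a/2$ within time $\BigO{\log r/r}$. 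After that, $e:=a-\tldalpha$ obeys $\dot e\le -4\hsigma_0^2(a/2)(e\mp\epsilon)$, so $e$ decays exponentially at rate $\Theta(r)$ down to the level $\BigO{\epsilon}$. Reaching $e\lesssim r^{-10}$ therefore takes a further $\BigO{\log r/r}$ time.

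One point needs care here. The floor $\BigO{r/d}$ must lie below $r^{-10}$. This holds because $r<\log^{3/2}d$ makes $r^{11}\ll d$, but it requires the perturbation bound to be sharp. Monotonicity is also needed: $\tldalpha$ never overshoots $a+\BigO{\epsilon}$, by comparison with the ODE.

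For the last claim, the explicit logistic solution gives $\tldalpha(t)\le \tldalpha(0)e^{Crt}$ on $[0,\tfrac{1}{\log r}\hT_{11}]$, a window of length $\lesssim 1/r$. Choosing the constant so that $Crt\le \tfrac12\log r$ yields $\tldalpha\le \BigO{\sqrt r}=o(r)=o(1)\alpha_*$. The main subtlety is consistency with the stopping time $\tldT_{11}$: the identity only holds up to $\tldT_{11}$. So the conclusion should be read as ``either $T_{11}$ is reached by time $\hT_{11}$, or the induction hypothesis fails earlier,'' and the latter possibility is excluded separately in \cref{lem: stage 1.1 IH}.
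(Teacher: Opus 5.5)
Your proposal is correct and follows the paper's route: average the per-neuron norm dynamics of \cref{lem: stage 1.1 dynamic} to obtain the perturbed logistic ODE for $\tldalpha$, then solve it; your growth-then-contraction argument spells out what the paper compresses into ``solving the above equation.'' One small slip is that the paper's regime is $\log^2 d < r/\log^2 r$ and $r\log^2 r<\log^3 d$, so $r$ lies between $\log^2 d$ and $\log^3 d$ rather than below $\log^{3/2} d$; since any polylogarithmic bound on $r$ still gives $r/d\ll r^{-10}$, your conclusion is unaffected.
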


\subsubsection{Omitted proofs for Stage 1.1}\label{sec: stage 1.1 proofs}
We present the omitted proofs in this section.

The first is the proof for the main result for this stage. It follows from the induction hypothesis and the dynamic results.
\begin{proof}[Proof of \cref{lem: stage 1.1 main}]
    Item (a)(b) is directly implied by the (proof of) induction hypothesis \cref{lem: stage 1.1 IH} and \cref{lem: stage 1.1 dynamic of tldalpha}. The proof on 
    $\frac{v_i^{\btt{T_{11}} 2}}{v_j^{\btt{T_{11}} 2}}$ for $i,j\le r$ follow the same proof of for any $i,j$, and 
    $\frac{\norm{\tldvvler^\btt{T_{11}}}_2}{\norm{\tldvuler^\btt{T_{11}}}_2}$ and $\frac{\norm{\tldvvgtr^\btt{T_{11}}}_2}{\norm{\tldvugtr^\btt{T_{11}}}_2}$ follow the same proof of $\frac{\norm{\tldvv^\btt{T_{11}}}_2}{\norm{\tldvu^\btt{T_{11}}}_2}$ in \cref{lem: stage 1.1 IH}. We omit for simplicity.

    For item (c), first by \cref{lem: stage 1.1 dynamic} we have for $i\in[r]$
    \begin{align*}
        \frac{\rd}{\rd t} \tldE w_i^2
        =&  \tldE w_i^2 \left(
            -\lambda
            + 2\hsigma_0^2\left(r-\tldE\norm{\vw}_2^2\right)
            + 2\hsigma_2^2 
            \pm \BigO{\frac{r}{d}}
        \right).
    \end{align*}
    Hence, for $i,j\in[r]$
    \begin{align*}
        \frac{\rd}{\rd t} \frac{\tldE w_i^2}{\tldE w_j^2}
        =&  \pm \BigO{\frac{r}{d}} \frac{\tldE w_i^2}{\tldE w_j^2}.
    \end{align*}
    This implies that 
    \begin{align*}
        \frac{\tldE w_i^{\btt{T_{11}}2}}{\tldE w_j^{\btt{T_{11}}2}}
        = \frac{\tldE w_i^{\btt{0}2}}{\tldE w_j^{\btt{0}2}} 
            \left(1+T_{11}\BigO{\frac{r}{d}} \right)
        = 1\pm\frac{1}{r^2},
    \end{align*}
    where last one use \cref{lem: init}.

    Item (d) follows from \cref{lem: stage 1.1 dynamic of tldalpha}.
\end{proof}

Give the dynamics, we now ready to prove the induction hypothesis. The proof mostly uses the dynamics under IH and show that it remains true.
\begin{proof}[Proof of \cref{lem: stage 1.1 IH}]
    For simplicity in the proof, we drop $\sim$ and write $\tldv$ as $v$.

    Recall from Lemma~\ref{lem: stage 1.1 dynamic of tldalpha} that under induction hypothesis within time $\hT_{11}=\BigO{\frac{1}{r}\log d}$ we have $\alpha_* - \tldalpha \lesssim \frac{r}{d}$.
    
    In the following we argue $\tldT_{11}=\min\{T_{dir,11},T_{norm,11},T_{11}\}= T_{11}$, which would imply the desired result $T_{11}\le\min\{T_{dir,11},T_{norm,11}\}$ as well as $T_{11} \le \hT_{11}=\BigO{\frac{1}{r}\log d}$.

    To prove the above claim, assume towards contradiction that either $\tldT_{11}=T_{dir,11}$ or $\tldT_{11} = T_{norm,11}$.
    
    \paragraph{If $\tldT_{11}=T_{dir,11}$} Given the dynamic from Lemma~\ref{lem: stage 1.1 dynamic} we have for any neuron $\vv$, $i,j\in [d]$ and $t\le \tldT_{11}$
    \begin{align*}
        \frac{\rd }{\rd t} \frac{\bv_i^2}{\bv_j^2}
        = \frac{\bv_i^2}{\bv_j^2} \left(\frac{2}{\bv_i}\frac{\rd \bv_i}{\rd t} - \frac{2}{\bv_j} \frac{\rd \bv_j}{\rd t}\right)
        =& \frac{\bv_i^2}{\bv_j^2} \cdot 4\hsigma_2^2 \left(\ind_{i\le r} - \ind_{j\le r}
            \pm \BigO{\frac{r}{d}}
            \right).
    \end{align*}
    This implies 
    \begin{align*}
        \frac{\bv_i^2}{\bv_j^2} = \left(1+\BigO{\frac{\log d}{r}}\right) \frac{\bv_i^{\btt{0}2}}{\bv_j^{\btt{0}2}},
    \end{align*}
    which implies $\bv_i^2 = \frac{v_i^2}{\norm{\vv}_2^2} = \frac{v_i^{\btt{0}2}}{\norm{\vv^\btt{0}}_2^2} (1\pm \BigO{\frac{\log d}{r}}) = \bv_i^{\btt{0}2} (1\pm \BigO{\frac{\log d}{r}})$.
    This contradicts the assumption $\tldT_{11}=T_{dir,11}$ as we can set $\delta_{dir,11}= \BigTheta{\frac{\log d}{r}}$ with large enough hidden constant.

    \paragraph{If $\tldT_{11}=T_{norm,11}$} We have from Lemma~\ref{lem: stage 1.1 dynamic} that for $t\le \tldT_{11}$ and any neuron $\vv,\vu$
    \begin{align*}
        \left|\frac{\rd}{\rd t} \frac{\norm{\vv}_2}{\norm{\vu}_2}\right|
        = \frac{\norm{\vv}_2}{\norm{\vu}_2} \left|\frac{1}{\norm{\vv}}\frac{\rd \norm{\vv}}{\rd t} - \frac{1}{\norm{\vu}} \frac{\rd \norm{\vu}}{\rd t}\right|
        \lesssim& \frac{\norm{\vv}_2}{\norm{\vu}_2} \frac{r}{d}.
    \end{align*}
    Similar as in $T_{dir,11}$ case, we have
    \begin{align*}
        \frac{\norm{\vv}_2}{\norm{\vu}_2}
        = \left(1\pm \BigO{\frac{\log d}{d}}\right) \frac{\norm{\vv^{\btt{0}}}_2}{\norm{\vu^{\btt{0}}}_2},
    \end{align*}
    which contradicts with the assumption that $\tldT_{11}=T_{norm,11}$ as we can set $\delta_{norm,11} = \BigTheta{\frac{\log d}{d}}$ with large enough hidden constant.

    \paragraph{If $\tldT_{11}=T_{reg,11}$}
    From above $\tldT_{11}=T_{reg,11}$ case we know that $\bv_i^2 \ge \bv_i^{\btt{0}2}/2 >\iota^2$, which leads to the contradiction.
    
    In summary, we finish the proof that $\tldT_{11}=\min\{T_{dir,11},T_{norm,11},T_{reg,11},T_{11}\}= T_{11}$ and $T_{11} \lesssim \frac{1}{r}\log d$.
\end{proof}

This lemma shows the implication of IH, so mostly direction follow from the IH.
\begin{proof}[Proof of \cref{lem: stage 1.1 IH implication}]
    We show one by one. We drop $\sim$ for simplicity and write $\tldv$ as $v$.
    \paragraph{item (a)}
    By \cref{lem: stage 1.1 IH}\ref{item: stage 1.1 IH norm} and \cref{lem: init} we know for any neuron $\vv,\vu$ we have $\frac{\norm{\vv}_2}{\norm{\vu}_2} \le 1.01$. Thus, for $t\le \tldT_{11}$ and any neuron $\vv$, we have $\norm{\vv}_2^2 \le 1.01^2\tldE\norm{\vw}_2^2 = 1.01^2\tldalpha \le 1.01^2r$.

    \paragraph{item (b)(c)}
    It suffices to combine \cref{lem: stage 1.1 IH}\ref{item: stage 1.1 IH dir} and item (a) that $\norm{\vv}\le 1.01\sqrt{r}$.
    
    \paragraph{item (d)}
    By item (b) and \cref{lem: init} we know $\tldE[\bw_i^2] \le 1.01 \tldE[\bw_i^{\btt{0}2}]\lesssim \frac{1}{d}$. Then by item (a) we know $\tldE[w_i^2] \lesssim r \tldE[\bw_i^2] \lesssim \frac{r}{d}$.
\end{proof}

The proof of this dynamic of $\vv$ essentially follows from the exact gradient formula \cref{claim: tldw dynamic} and IH that makes the high-order term in gradient as error term.
\begin{proof}[Proof of \cref{lem: stage 1.1 dynamic}]
    For simplicity in the proof, we drop $\sim$ and write $\tldv$ as $v$.
    From \eqref{eq: dynamic of tldw}, we know
    \begin{align*}
        \frac{\rd v_i}{\rd t}
        =& - \sum_{j\ge 0} [\nabla_{2j,\vv}]_i - \lambda v_i, 
    \end{align*}
    where $\nabla_{2j,\vv}$ represents the gradient of $2j$-th order loss
    \begin{align*}
        -[\nabla_{0,\vv}]_i 
        =& 2\hsigma_0^2 \left(
                \sum_{j=1}^r a_j^* \norm{\vw_j^*}_2
                - \tldE[ \norm{\vw}_2^2 ]\right) v_i
        = 2\hsigma_0^2\left(r-\tldE\norm{\vw}_2^2\right) v_i,\\
        -[\nabla_{2,\vv}]_i 
        =& \hsigma_{2}^2 \left(
            2 \ind_{i \le r}
            - 2 \tldE\left[ w_i^2 \right]
            \right) v_i
        = 2\hsigma_{2}^2 \left(
            \ind_{i \le r}
            \pm \BigO{\frac{r}{d}}
            \right) v_i,\\
        -[\nabla_{2j,\vv}]_i 
        =& \hsigma_{2j}^2 \Big(
            2j \bv_i^{2j-2} \ind_{i \le r}
            - 2j \tldE\left[
                \norm{\vw}_2^2 \langle\bvw,\bvv\rangle^{2j-1} \bw_i/\bv_i
                \right]\\
            &- (2j-2) \left(
            \sum_{l\le r}\bv_l^{2j} - 
            \tldE \left[\norm{\vw}_2^2\langle \bvw,\bvv\rangle^{2j}\right]
            \right)\Big) v_i,\quad \text{for $j\ge 2$.}
    \end{align*}    
    
    \paragraph{Dynamic of $\vv$.} To show the dynamic of $v_i$, from above it suffices to bound $|\nabla_{2j,\vv}|$.

    For $j\ge 2$, we have from Lemma~\ref{lem: stage 1.1 IH implication} that
    \begin{align*}
        \bv_i^{2j-2} 
        \le& \bv_i^{2} \lesssim \frac{\log d}{d},\quad
        \sum_{l\le r}\bv_l^{2j}
        \le \sum_{l\le r}\bv_l^{4}
        \lesssim \frac{r}{d^2},\\
        \tldE \left[\norm{\vw}_2^2\langle \bvw,\bvv\rangle^{2j}\right]
        \le& \tldE \left[\norm{\vw}_2^2\langle \bvw,\bvv\rangle^{2}\right]
        = \tldE\left[\norm{\vw}_2^2\sum_{i}\bw_{i}^2\bv_{i}^2 \right]
        \mylesim{a} r\sum_i \tldE [\bw_i^{\btt{0}2}] \bv_i^2
        \mylesim{a} \frac{r}{d},
    \end{align*}
    where (a) we use $\norm{\vw}\lesssim \sqrt{r}$, $\bw_i^2\le 1.01\bw_i^{\btt{0}2}$ from \cref{lem: stage 1.1 IH implication}.

    The remaining term is $\tldE [\norm{\vw}_2^2\langle\bvw,\bvv\rangle^{2j-1}\bw_1/\bv_1]$ (WLOG, consider the case $i=1$).

    For $2\le j\le \sqrt{d}$, by \cref{lem: high order moment bound} we have
    \begin{align*}
        \tldE [\norm{\vw}_2^2\langle\bvw,\bvv\rangle^{2j-1}\bw_1/\bv_1]
        \lesssim& \left\{\begin{array}{ll}
            \frac{r}{d} d\left(\frac{\log d}{d}\right)^2\le \frac{r\log^4 d}{d^{2}}   
            &\ ,\ j=2,3  \\
            \frac{r}{d}d\left(\frac{\log d}{\sqrt{d}}\right)^4
            \le \frac{r\log^4 d}{d^{2}} 
            &\ ,\ j\ge 4 
        \end{array}\right..
    \end{align*}
    
    For $j > \sqrt{d}$, by \cref{lem: high order moment bound} we have
    \begin{align*}
        \tldE [\norm{\vw}_2^2\langle\bvw,\bvv\rangle^{2j-1}\bw_1/\bv_1]
        \le \frac{1}{|\bv_1|}\tldE [\norm{\vw}_2^2\langle\bvw,\bvv\rangle^{2j-2}]
        \lesssim& \frac{r}{|\bv_1|}\left(\frac{\log d}{\sqrt{d}}\right)^{\sqrt{d}}
        \le \frac{1}{d},
    \end{align*}
    where we use \cref{lem: stage 1.1 IH}\ref{item stage 1.1 IH reg} at the end.

    Summing above over $j$ and use Claim~\ref{claim: hermite coeff} that $\sum_j 2j\hsigma_{2j}^2 \lesssim 1$ we get the desired bound.

    \paragraph{Dynamic of $\norm{\vv}$.}
    From item 1, we have
    \begin{align*}
        \frac{\rd}{\rd t}\norm{\vv}_2^2
        =& 2\sum_i v_i \frac{\rd v_i}{\rd t}
        = 2\norm{\vv}_2^2 \left(
            2\hsigma_0^2\left(r-\tldE\norm{\vw}_2^2\right)
            + 2\hsigma_2^2\left(\sum_{i\le r}\bv_i^2 
                - \tldE\left[\sum_{i\in [d]}w_i^2\bv_i^2\right]\right)
            -\lambda
            \pm \BigO{\frac{r}{d}}
        \right)\\
        =& 2\norm{\vv}_2^2 \left(
            2\hsigma_0^2\left(r-\tldE\norm{\vw}_2^2\right)
            -\lambda
            \pm \BigO{\frac{r}{d}}
        \right),
    \end{align*}
    where we use Lemma~\ref{lem: stage 1.1 IH implication}.

    \paragraph{Dynamic of $\bv_i$.}
    From item 1 and 2, we have
    \begin{align*}
        \frac{\rd \bv_i}{\rd t}
        = \bv_i \left(\frac{1}{v_i}\frac{\rd v_i}{\rd t} - \frac{1}{\norm{\vv}_2} \frac{\rd}{\rd t}\norm{\vv}_2\right)
        = \bv_i \cdot 2\hsigma_2^2 \left(\ind_{i\le r}-\tldE w_i^2
            \pm \BigO{\frac{r}{d}}
            \right).
    \end{align*}
    
\end{proof}

This lemma shows the dynamic and convergence of $\tldalpha$. The proof uses the above simplified dynamics of $\vv$ and then derive the convergence.
\begin{proof}[Proof of \cref{lem: stage 1.1 dynamic of tldalpha}]
    Recall $\tldalpha = \tldE \norm{\vw}_2^2$. From Lemma~\ref{lem: stage 1.1 dynamic} we have
    \begin{align*}
        \frac{\rd \tldalpha}{\rd t}
        = 2\tldE \left[\norm{\vw} \frac{\rd \norm{\vw}_2}{\rd t}\right]
        = 2\tldE\left[\norm{\vw}_2^2 \left(
            2\hsigma_0^2\left(r-\tldE\norm{\vw}_2^2\right)
            -\lambda
            \pm \BigO{\frac{r}{d}}
        \right)\right]
        = 4\hsigma_0^2\tldalpha \left(r-\frac{\lambda}{2\hsigma_0^2} - \tldalpha \pm \BigO{\frac{r}{d}}\right). 
    \end{align*}

    Solving above equation we know within time $\BigO{\frac{\log r}{r}}$ we know $r -\frac{\lambda}{2\hsigma_0^2} - \tldalpha \lesssim \frac{1}{r^{10}}$.
        
\end{proof}

\subsection{Stage 1.2: learn target subspace (2nd-order term)}\label{sec: stage 1.2}
In this stage, we show that 2nd order term will become small. Moreover, we learn subspace in the sense that $\norm{\vvler}_2/\norm{\vv}_2$ increases from $\Theta(r/d)$ to $1-o(1)$.

Denote the end time for Stage 1.2 as the time that learns 2nd-order term
\begin{align*}
    T_{12}:= \inf\left\{t\ge 0: \min_{i\in[r]} 1-\frac{\lambda_0}{2\hsigma_0^2} - \tldE w_i^2 \le \err_{12}=\frac{C_{12}\log d}{r}\right\},
\end{align*}
where $C_{12}$ is a large enough constant.

Below is the main result for Stage 1.2. It shows we learn the subspace, the direction of neurons in target subspace and irrelevant subspace remain unchanged.
\begin{lemma}[Main result for Stage 1.2]\label{lem: stage 1.2 main}
    Stage 1.2 ends within time $T_{12}-T_{11} \lesssim \log d$, and the following hold:
    \begin{enumerate}
        \item Learn subspace: for $i\le r$ and any neuron $\tldvv$, $\tldE w_i^2 = 1- \frac{\lambda_0}{2\hsigma_0^2} - \BigTheta{\frac{1}{r}}$ and $\frac{\norm{\tldvvler}^2}{\norm{\tldvv}^2} = 1- \BigO{\frac{1}{r}}$.
        
        \item Balance of direction in individual neuron: for any neuron $\tldvv$, we have
            $\frac{\tldv_i^2}{\tldv_j^2} = \frac{\tldv_i^{\btt{0}2}}{\tldv_j^{\btt{0}2}}\left(1\pm \delta_{dir,12}\right)$ for both $i,j\le r$ or $i,j> r$, where $\delta_{dir,12}\lesssim \frac{\log d\log r}{r}=o(1)$.

        \item Balance of norm: for any neuron $\tldvv$ we have
        $\frac{\norm{\tldvv^\btt{T{12}}}_2}{\norm{\tldvu^\btt{T_{12}}}_2} = \frac{\norm{\tldvv^\btt{0}}_2}{\norm{\tldvu^{\btt{0}}}_2}\left(1\pm \delta_{dir,12}\right)$, $\frac{\norm{\tldvvler^\btt{T_{12}}}_2}{\norm{\tldvuler^\btt{T_{12}}}_2} = \frac{\norm{\tldvvler^\btt{0}}_2}{\norm{\tldvuler^{\btt{0}}}_2}\left(1\pm \delta_{norm,12}\right)$, 
        $\frac{\norm{\tldvvgtr^\btt{T_{12}}}_2}{\norm{\tldvugtr^\btt{T_{12}}}_2} = \frac{\norm{\tldvvgtr^\btt{0}}_2}{\norm{\tldvugtr^{\btt{0}}}_2}\left(1\pm \delta_{norm,12}\right)$, 
        where $\delta_{norm,12}\lesssim \frac{\log d\log r}{r}=o(1)$
            
        \item Balance of 2nd order: for any $i,j\in[r]$, we have 
            $\frac{\tldE w_i^2}{\tldE w_j^2} = 1\pm\BigO{\frac{1}{r}}$.

        \item for $i>r$, we have $\tldE w_i^{\btt{T_{12}}2} \lesssim \frac{1}{d}$.

        \item Incremental learning of 2nd order at $T_{12}$: Loss $\L(\tldmW)\ge c\hsigma_0^2r^2/2+(1\pm c)\hsigma_{\ge2}^2r/2$ for $T_{11}\le t\le(1-\delta_T)T_{12}$ and $\L(\tldmW)\le c(\hsigma_0^2r^2/2 + \hsigma_2^2r/2) + (1\pm c)\hsigma_{\ge4}^2r/2$ at $t=T_{12}$, where $c$ is a small constant and $\delta=1/\sqrt{\log d}$.
    \end{enumerate}
\end{lemma}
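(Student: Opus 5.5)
The plan mirrors Stage 1.1. I will state an induction hypothesis for Stage 1.2 that holds on $[T_{11},\tldT_{12}]$, derive from it a simplified per-coordinate dynamic, solve the resulting two-dimensional competitive ODE for $(\beta_1,\beta_2)$, and then close the induction by contradiction. The hypothesis has four parts, with $\delta_{dir,12},\delta_{norm,12}\lesssim \log d\log r/r$:
\begin{itemize}
\item Within-block directions are frozen: $\tldv_i^2/\tldv_j^2$ stays within a $1\pm\delta_{dir,12}$ factor of its initial value whenever $i,j$ are both $\le r$ or both $>r$.
\item Relative norms of $\tldvvler$ and of $\tldvvgtr$ are frozen across neurons, up to a $1\pm\delta_{norm,12}$ factor.
\item $\tldalpha$ stays within $\BigO{1/r}$ of $r-\lambda/(2\hsigma_0^2)$.
\item The regularity bound $|\btldv_i|\ge\iota$ holds.
\end{itemize}
Under this hypothesis, each $\bbvvler$ is essentially its initial, spread-out direction. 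So $\sum_{l\le r}(\bbvvler)_l^{2j}\lesssim r\cdot(\log d/r)^j$ by \cref{lem: init}, and the same bound applies to the cross moments $\tldE[\norm{\vw}^2\langle\bvw,\bvv\rangle^{2j-1}\bw_i/\bv_i]$, computed with \cref{claim: tldw dynamic} and \cref{lem: high order moment bound} as in the proof of \cref{lem: stage 1.1 dynamic}.

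With these estimates, every $\ge4$th-order gradient is at most $\BigO{\log d/r}\,\tldv_i$. The dynamics then reduce to
\[
\tfrac{\rd}{\rd t}\tldv_i=\tldv_i\bigl(-\lambda+2\hsigma_0^2(r-\tldalpha)+2\hsigma_2^2(\ind_{i\le r}-\tldE w_i^2)\pm\BigO{\log d/r}\bigr).
\]
The bracket depends on $i$ only through $\ind_{i\le r}$ and $\tldE w_i^2$. By symmetry, $\tldE w_i^2=\beta_2$ for all $i>r$, so within-block ratios change at rate at most $\BigO{\log d/r}+2\hsigma_2^2|\tldE w_i^2-\tldE w_j^2|$. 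The balance $\tldE w_i^2/\tldE w_j^2=1\pm\BigO{1/r}$ for $i,j\le r$ (item (d)) propagates by a Gronwall argument from \cref{lem: stage 1.1 main}(c). Integrating over a window of length $\BigO{\log d}$ gives within-block direction drift $\BigO{\log d\cdot\log d/r}$. To get the sharper $\log d\log r/r$, I would use the rapid decay of $\beta_2$: the cross-coordinate error terms carry a factor $\tldE\bw^2$ on irrelevant coordinates, which decays exponentially in this stage. That yields items (b) and (c).

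Averaging the reduced dynamics gives the competitive system \eqref{eq:competitive-ode-main} for $\beta_1=\frac1r\sum_{i\le r}\tldE w_i^2$ and $\beta_2$, with $p=\beta_1$, $q\propto\beta_2$, $A\propto r\hsigma_0^2$, $\kappa=2\hsigma_2^2$, and errors $\BigO{\log d/r}$. Two facts drive the argument:
\begin{itemize}
\item Since $\frac{\rd}{\rd t}\log\frac{\beta_2}{\beta_1}=-4\hsigma_2^2(1-\beta_1)\pm\BigO{\log d/r}$ and $1-\beta_1\ge\err_{12}\gg\log d/r$ before $T_{12}$, the ratio $(d-r)\beta_2/(r\beta_1)$ decays exponentially from $\Theta(d/r)$.
\item The fast 0th-order term keeps $\tldalpha=r\beta_1+(d-r)\beta_2$ pinned near its equilibrium: it is a stable fixed direction with rate $\Theta(r)$, so the Stage 1.1 conclusion is preserved up to $\BigO{1/r}$.
\end{itemize}
Together these give $T_{12}-T_{11}\lesssim\log d$. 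At $T_{12}$, $\beta_2\lesssim1/d$ (item (e)). Substituting into the pinned mass gives $\beta_1=1-\lambda_0/(2\hsigma_0^2)-\Theta(1/r)$, since $\lambda=r\lambda_0$ and the $\hsigma_2^2$ term contributes $\Theta(1/r)$ at equilibrium; this is item (a). The neuron-level statement $\norm{\tldvvler}^2/\norm{\tldvv}^2=1-\BigO{1/r}$ follows because, by the frozen-norm hypothesis, each neuron's block masses track $\beta_1$ and $\beta_2$. For item (f), I would evaluate \eqref{eq:hermite-loss-main} directly:
\begin{itemize}
\item The 0th-order residual is $\BigO{1}\ll r^2$.
\item $\norm{\tT_2-\tT_2^*}_F^2$ is diagonal by symmetry, equal to $r(1-\beta_1)^2+(d-r)\beta_2^2$. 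It stays $\Theta(r)$ until $\beta_2$ has nearly finished decaying, i.e.\ until $(1-\delta_T)T_{12}$, because the decay is exponential at a constant rate over total time $\Theta(\log d)$. It becomes $\BigO{r\cdot\lambda_0^2}$ small at $T_{12}$.
\item The $\ge4$th-order tensors of the network have Frobenius norm $\BigO{\log d/r}\cdot r$, so those terms are $(1\pm c)\hsigma_{\ge4}^2r/2$.
\end{itemize}

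The main obstacle is the induction itself: controlling the high-order error terms uniformly over a window of length $\Theta(\log d)$. A naive $\BigO{\log d/r}$ per-unit-time drift accumulates to $\log^2 d/r$, which is small only because $r\gg\log^2 d$. It must also not feed back into the 2nd-order balance between relevant coordinates. I would close this by bounding the drift of $\tldE w_i^2/\tldE w_j^2$ with Gronwall, using that the relative-difference feedback term is contracting (the coefficient $-2\hsigma_2^2\tldE w_i^2$ penalizes larger coordinates). I would also verify that the regularity condition $|\bv_i|\ge\iota$ survives for irrelevant coordinates: their ratio to relevant coordinates shrinks only by a factor $e^{-\BigO{\log d}}=d^{-\BigO{1}}\gg\iota$.
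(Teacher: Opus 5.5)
Your skeleton (induction hypothesis, reduced coordinate dynamics, the $(\beta_1,\beta_2)$ competition, closing by contradiction) matches the paper's. However, two load-bearing steps fail as written.

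First, the third item of your induction hypothesis, $\tldalpha=r-\lambda/(2\hsigma_0^2)\pm\BigO{1/r}$, is false, and it is exactly what you use to get item (a).
\begin{itemize}
\item The $\tldalpha$ equation in \cref{lem: stage 1.2 0th 2nd order dynamic} contains the forcing $\frac{\hsigma_2^2}{\hsigma_0^2}\sum_{i\le r}\frac{\tldE w_i^2}{\tldalpha}(1-\tldE w_i^2)\approx\frac{\hsigma_2^2}{\hsigma_0^2}\beta_1(1-\beta_1)$. So once $\beta_1$ is of order one, the fast relaxation pins $\tldalpha$ at a point $\Theta(1)$ above $r-\lambda/(2\hsigma_0^2)$.
\item Even in the limit, $r\beta_1^*-(r-\frac{\lambda}{2\hsigma_0^2})\approx\frac{\lambda_0\hsigma_2^2}{2\hsigma_0^4}$, where $\beta_1^*=1-\frac{\lambda}{2\hsigma_0^2r+2\hsigma_2^2}$. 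The paper only carries $\tldalpha\le r+10$ and proves the two-sided bound $-\hsigma_2^2/\hsigma_0^2\le r-\frac{\lambda}{2\hsigma_0^2}-\tldalpha\lesssim 1/r$.
\item This offset is $r$ times the $\Theta(1/r)$ correction in (a), so mass pinning cannot produce (a). In fact the $\hsigma_2^2$ term places $\beta_1^*$ \emph{above} $1-\lambda_0/(2\hsigma_0^2)$ by $\Theta(\lambda_0/r)$; the $-\Theta(1/r)$ comes from the unconverged gap $\beta_1^*-\beta_1$.
\item The missing ingredient is \cref{prop: stage12-ode}. Write $\hsigma_0^2(d-r)\beta_2=(\hsigma_0^2r+\hsigma_2^2)q\beta_1$. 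Then $y=1/\beta_1-1/\beta_1^*$ obeys $\dot y\le-\alpha y+4(\hsigma_0^2r+\hsigma_2^2)q+C\log d/r$ with $\alpha\asymp r$.
\item Together with $|\frac{\rd}{\rd t}\log q|=\BigO{1}$, this gives $y\asymp q$ up to $\BigO{\log d/r^2}$. That yields (a) at $q=1/r$, and also $\beta_1\lesssim r^{-3}$ before $(1-\delta)T_{12}$, which the lower bound in (f) needs.
\end{itemize}
Your time bound has a separate problem. You use $1-\beta_1\ge\err_{12}$, which only makes $\log q$ decay at rate $\Theta(\log d/r)$ and yields $T_{12}-T_{11}\lesssim r$. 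You need the weight-decay margin $1-\beta_1\ge\lambda_0/(2\hsigma_0^2)$, which follows immediately from the definition of $T_{12}$; it makes the rate a constant.

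Second, your sharpening of the drift from $\log^2 d/r$ to the claimed $\log d\log r/r$ in (b) and (c) uses the wrong mechanism.
\begin{itemize}
\item $\beta_2\asymp r/d$, and hence $\tldE\bw_i^2\approx 1/d$ for $i>r$, barely moves until $(1-\delta)T_{12}$. So nothing carrying that factor decays during most of the stage.
\item What is small is the prefactor of the $i$-dependent terms of size $\log d/r$, chiefly $4\hsigma_4^2\bv_i^2\ind_{i\le r}$. Here $\bv_i^2=\frac{\norm{\vvler}^2}{\norm{\vv}^2}(\bbvvler)_i^2$, and $\frac{\norm{\vvler}^2}{\norm{\vv}^2}\asymp\beta_1$ by norm balance.
\item So the drift rate is $\BigO{1/r}$ while $\tldE w_i^2\le r^{-2}$. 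That covers all but the final $\BigO{\log r}$ time, during which $q$ falls from $\poly(r)$ to $1/r$ at constant rate. The total is $\BigO{\log d/r+\log d\log r/r}$.
\item The shrinking irrelevant fraction makes this term larger, not smaller, so your mechanism runs the wrong way.
\end{itemize}
Relatedly, at $j=2$ you must use $\sum_{l\le r}(\bbvvler)_l^4\lesssim 1/r$ from \cref{lem: init}, not $r(\log d/r)^2=\log^2 d/r$. The latter already exceeds your stated $\BigO{\log d/r}$ error. Naively integrated over $\Theta(\log d)$ time it gives $\log^3 d/r$, which is not $o(1)$ when $r\log^2 r<\log^3 d$.

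Finally, two points in (f) need correcting. The $0$th-order residual at $T_{12}$ is $\Theta(\lambda_0^2 r^2)$, not $\BigO{1}$; this is harmless since $\lambda_0$ is small. The second-order residual $r(1-\beta_1)^2+(d-r)\beta_2^2$ stays $\Theta(r)$ because $\beta_1$ has not yet grown, not because of $\beta_2$, whose contribution is $\approx r^2/d$ throughout.
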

\begin{proof}
    Item (a)(d)(e)(f) follows from \cref{lem: stage 1.2 convergence of 2nd order}. Item (b)(c) follow from induction hypothesis \cref{lem: stage 1.2 IH}.  
\end{proof}

To prove the above result, we introduce the induction hypothesis for this stage. All proofs are deferred to \cref{sec: stage 1.2 proofs}.
\begin{restatable}[Induction Hypothesis for Stage 1.2]{lemma}{lemstageabIH}\label{lem: stage 1.2 IH}
    Denote
    \begin{enumerate}
        \item \label{item: stage 1.2 IH neuron dir bal}
        Balance of direction in individual neuron: For any time $t\le T_{dir,12}$ we have
        for any neuron $\tldvv$
        \begin{align*}
            \frac{\tldv_i^{\btt{t}2}}{\tldv_j^{\btt{t}2}} = \frac{\tldv_i^{\btt{0}2}}{\tldv_j^{\btt{0}2}}\left(1\pm \delta_{dir,12}\right) \text{ for both $i,j\le r$ or $i,j> r$},
        \end{align*}
        where $\delta_{dir,12}\lesssim \frac{\log d}{r}$.

        \item \label{item: stage 1.2 IH neuron norm bal}
        Balance of norm between individual neuron: For any time $t\le T_{norm,12}$ we have
        for any neuron $\tldvv,\tldvu$ 
        \begin{align*}
            &\frac{\norm{\tldvv^\btt{t}}_2}{\norm{\tldvu^\btt{t}}_2} = \frac{\norm{\tldvv^\btt{0}}_2}{\norm{\tldvu^{\btt{0}}}_2}\left(1\pm \delta_{norm,12}\right),\quad 
            \frac{\norm{\tldvvler^\btt{t}}_2}{\norm{\tldvuler^\btt{t}}_2} = \frac{\norm{\tldvvler^\btt{0}}_2}{\norm{\tldvuler^{\btt{0}}}_2}\left(1\pm \delta_{norm,12}\right),
        \end{align*}
        where $\delta_{norm,12}\lesssim \frac{\log d\log r}{r}$.

        \item \label{item: stage 1.2 IH reg}
        Regularity condition: For any time $t\le T_{reg,12}$ we have
        \begin{enumerate}
            \item a loose bound on 1st order term: $\tldalpha \le \alpha_*+10 = r+10$.
            \item a loose lower bound on every coordinate: for any neuron $\tldvv$ and $i\in[d]$, $|\btldv_i|\ge \iota=e^{-r\log^{1/2} r}$.
        \end{enumerate}
    \end{enumerate}
    Then, $T_{12}\le \min\{T_{dir,12}, T_{norm,12}, T_{reg,12}\}$. That is, current stage ends before any of the above condition breaks.
\end{restatable}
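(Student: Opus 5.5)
We argue exactly as for \cref{lem: stage 1.1 IH}: a continuity/contradiction argument on the first exit time. Let $\tldT_{12}:=\min\{T_{12},T_{dir,12},T_{norm,12},T_{reg,12}\}$ and suppose $\tldT_{12}<T_{12}$. We show that every hypothesis holds on $[T_{11},\tldT_{12}]$ with strictly better constants, which contradicts the definition of $\tldT_{12}$. As in Stage 1.1, we first derive simplified coordinate dynamics valid under the hypothesis. Starting from \cref{claim: tldw dynamic}, the 0th-order and 2nd-order parts are exact:
\[
\frac{\rd \tldv_i}{\rd t}=\tldv_i\Bigl(-\lambda+2\hsigma_0^2(r-\tldalpha)+2\hsigma_2^2(\ind_{i\le r}-\tldE w_i^2)+\xi_i\Bigr),
\]
where $\xi_i$ collects the $\ge4$th-order terms.

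The first key step is to bound $\xi_i$, and to show that it is nearly independent of $i$ within each block ($i\le r$ or $i>r$). Under the direction-balance hypothesis, every neuron keeps its within-block direction close to its initialization. In particular, for $i\le r$ we have $(\bbvvler)_i^2\lesssim \log d/r$ by \cref{lem: init}. Hence $\bv_i^{2j-2}\lesssim(\log d/r)^{j-1}$, and the sums $\sum_{l\le r}\bv_l^{2j}$ and $\tldE[\norm{\vw}^2\langle\bvw,\bvv\rangle^{2j}]$ are $\lesssim r\cdot(\log d/r)^2$ or $\lesssim 1/r$. The mixed term $\tldE[\norm{\vw}^2\langle\bvw,\bvv\rangle^{2j-1}\bw_i/\bv_i]$ is handled as in the proof of \cref{lem: stage 1.1 dynamic}: for moderate $j$ use the moment bound \cref{lem: high order moment bound} with $\tldalpha\le r+10$, and for large $j$ use the regularity floor $|\bv_i|\ge\iota$. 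Summing with $\sum_j 2j\hsigma_{2j}^2\lesssim1$ gives $|\xi_i|\lesssim \log d/r$. This error is larger than the $r/d$ error of Stage 1.1, which is why the tolerances become $\log d/r$ and $\log d\log r/r$.

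With this in hand, each hypothesis is checked in turn.

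\textbf{Direction balance.} For $i,j$ in the same block, the common terms cancel:
\[
\frac{\rd}{\rd t}\log\frac{\tldv_i^2}{\tldv_j^2}=2(\xi_i-\xi_j)+4\hsigma_2^2(\tldE w_j^2-\tldE w_i^2).
\]
The second term vanishes for $i,j>r$ by symmetry, since $\Rho$ permutes the irrelevant coordinates. For $i,j\le r$ it requires population balance $\tldE w_i^2/\tldE w_j^2=1\pm O(1/r)$. We obtain this by a sub-induction: the ratio starts at $1\pm1/r^2$ by \cref{lem: stage 1.1 main}, and its log-derivative is $O(\hsigma_2^2|\tldE w_i^2-\tldE w_j^2|)+O(\max|\xi|)$, which is self-correcting. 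Integrating over a stage of length $O(\log d)$ still leaves a drift of size $\log d\cdot\log d/r$, which is too large. So the bound $\xi_i-\xi_j\lesssim\log d/r$ must be sharpened, at least in time-integrated form. We plan to do this by showing that the dominant part of $\xi_i$ is common to all coordinates. Concretely, the $4$th-order piece is $\hsigma_4^2(4\bv_i^2-\dots)$, whose $i$-dependent part is $\bv_i^2\lesssim \log d/r\cdot\norm{\bvvler}^2$. Before $\norm{\bvvler}$ becomes large, which happens only in the last $O(1)$ time, this gives $\log d/r$ times a factor that grows exponentially. The total integral is then $O(\log d/r)$, yielding $\delta_{dir,12}$.

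\textbf{Norm balance.} Treat $\norm{\tldvvler}$ and $\norm{\tldvvgtr}$ separately. Each evolves with rate equal to the common block rate plus an averaged $\xi$. Integrating over time $O(\log d)$ and using the exponential-integral trick gives $\delta_{norm,12}\lesssim\log d\log r/r$. The extra $\log r$ comes from the $O(\log r)$-long terminal phase, where $\norm{\bvvler}=\Theta(1)$ and $|\xi|\lesssim\log d/r$ holds only crudely.

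\textbf{Regularity.} The bound $\tldalpha\le r+10$ follows from the competitive ODE for $(\beta_1,\beta_2)$ in \cref{sec: dynamics main-text}. The total mass obeys
\[
\dot{\tldalpha}\le 4\tldalpha\bigl(2\hsigma_0^2(r-\tldalpha)+2\hsigma_2^2+O(1)\bigr),
\]
so it cannot exceed $r+O(1)$. The coordinate floor $|\bv_i|\ge\iota$ holds because $|\tfrac{\rd}{\rd t}\log\bv_i|\lesssim 1$ over time $O(\log d)$. Hence $\bv_i$ can shrink by at most a factor $d^{O(1)}$, which is still much larger than $\iota=e^{-r\log^{1/2}r}$, since $r\gg\log d$.

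The main obstacle is the direction-balance step for $i,j\le r$ over an $O(\log d)$-long horizon. It requires the near-cancellation of the $i$-dependent parts of $\xi_i$ to hold in time-integrated form, coupled with the population-balance sub-induction.
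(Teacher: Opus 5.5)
Your skeleton matches the paper's. You run a first-exit contradiction and simplify the coordinate dynamics so that only the $i$-dependent high-order piece $\sum_{j\ge2}\hsigma_{2j}^2\,2j\,\bv_i^{2j-2}\ind_{i\le r}$ is kept, with everything else $O(1/r)$. You then use that $\bv_i^2=\norm{\bvvler}^2(\bbvvler)_i^2$ is negligible while the in-subspace fraction $\norm{\bvvler}^2$ is small; this is the paper's phase $\tldE w_i^2\le 1/r^2$. Your norm and regularity steps are fine in outcome. One omission: you should cap the exit time at $C\log d$, as the paper does, and use \cref{lem: stage 1.2 convergence of 2nd order} under the hypothesis to see that $T_{12}$ comes first. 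Every ``integrate over $O(\log d)$'' in your plan presupposes this.

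\textbf{The gap: the terminal window in the direction-balance step.} You claim $\norm{\bvvler}^2=\Theta(1)$ only during the last $O(1)$ time, and this is false.
\begin{itemize}
\item Once $q=\Theta(1)$ in \cref{prop: stage12-ode}, the stage ends only when $\beta_1^*-\beta_1\lesssim\log d/r$, i.e.\ $q\lesssim \log d/r$.
\item Since $\frac{\rd}{\rd t}\log q=-4\hsigma_2^2(1-\beta_1+\beta_2)+O(\log d/r)$ and $1-\beta_1\to 1-\beta_1^*=\Theta(\lambda_0)$, this takes $\Theta(\log r)$ time. That is exactly the window your own norm paragraph invokes.
\item On that window $\norm{\bvvler}^2\ge 1/2$. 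The drift $8\hsigma_4^2(\bv_i^2-\bv_j^2)$ of $\log(\tldv_i^2/\tldv_j^2)$ is then of order $\log m/r$ for a neuron's best-aligned coordinate, since the Stage~2.1 tensor-power term is already active. It integrates to order $\log m\log r/r$.
\item This drift has opposite signs for neurons favoring $i$ versus $j$, so no common term can cancel it.
\item The population term $4\hsigma_2^2(\tldE w_j^2-\tldE w_i^2)$, with the $1\pm O(\log d/r)$ balance available during the stage, contributes $\frac{\log d}{r}\int\beta_1\,\rd t=O(\log d\log r/r)$ as well.
\end{itemize}
So your argument closes only with $\delta_{dir,12}\lesssim \log d\log r/r$. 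Sharpening error terms cannot recover the stated $\log d/r$.

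\textbf{How the paper handles this window.} The paper also isolates this $O(\log r)$ window and treats it with a separate inequality for $\bv_i^2/\bv_j^2-1$. That inequality controls the deviation of the ratio from $1$, not from its initial value $\tldv_i^{\btt{0}2}/\tldv_j^{\btt{0}2}$. Redone relative to the initial ratio, it gives the same $e^{O(\log d\log r/r)}$ factor.

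\textbf{The fix.} Run the induction with $\delta_{dir,12}\lesssim\log d\log r/r$. This is the value recorded in \cref{lem: stage 1.2 main}(b), and it is all that is used later. For example, \cref{lem: stage 2.1 good pot neuron} needs only $\log d\log r/r\ll\min\{\omega^{-0.01},\log r/\log m\}$, which holds since $r>\log^2 d\log^2 r$.

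\textbf{Population sub-induction.} Your description of this step is off. The restoring term $4\hsigma_2^2(\tldE w_j^2-\tldE w_i^2)$ is proportional to $\beta_1$. It therefore does not self-correct while $\beta_1$ is tiny, and $1\pm O(1/r)$ balance is not available during the stage. What works, as in \cref{lem: stage 1.2 0th 2nd order dynamic}, is averaging over neurons. After averaging, the high-order contribution to $\frac{\rd}{\rd t}\log\tldE w_i^2$ is only $O(1/r)$, because $\tldE[(\bbvwler)_i^4]\lesssim 1/r^2$. This gives $1\pm O(\log d/r)$ balance, which suffices for the bound above.
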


Denote $\tldT_{12} = \min\{T_{dir,12}, T_{norm,12}, T_{reg,12},T_{12},C\log d\}$ where $C$ is a large enough constant. Thus, before $t\le \tldT_{12}$, the above induction hypothesis hold.

\paragraph{Implication of Induction Hypothesis.}

\begin{lemma}\label{lem: stage 1.2 IH implication}
    For $T_{11}\le t\le \tldT_{12}$, we have for any neuron $\tldvv$ and $i\in[d]$
    \begin{enumerate}
        \item Norm of neurons:
        $\norm{\tldvvler}_2 \le \norm{\tldvv} \le 1.01\sqrt{2r}$
        
        \item Direction of neuron remains bounded respectively in and out subspace: \label{item: stage 1.2 IH implicatino dir of neuron}
            $
            \btldv_i^2
            \le (\bbtldvvler)_i^2 
            = (\bbtldvvler^\btt{0})_i^2 (1\pm 3\delta_{dir,12})
            \lesssim \frac{\log m}{r}$
            for $i<r$ and
            $
            \btldv_i^2
            \le (\bbtldvvgtr)_i^2 
            = (\bbtldvvgtr^\btt{0})_i^2 (1\pm 3\delta_{dir,12})
            \lesssim \frac{\log d}{d}$
            for $i>r$.
            The above also implies $\tldE \btldv_i^2 \lesssim \frac{1}{r}$ for all $i\le d$.

        \item Regularity condition: \label{item: stage 1.2 IH implication reg}
        for neuron $\tldvv$, we have 
        $\sum_{i\le r} \btldv_i^4 \lesssim \frac{1}{r}$

        \item Population 2nd order term bound:
        we have $\tldE w_i^2=\tldE w_j^2 \lesssim \frac{r}{d}$ for any $i,j>r$.
    \end{enumerate}
\end{lemma}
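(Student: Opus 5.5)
The plan is to combine the Stage~1.2 induction hypothesis (\cref{lem: stage 1.2 IH}) with the state at the end of Stage~1.1 (\cref{lem: stage 1.1 main}) and the initialization bounds of \cref{lem: init}, treating each item separately, much as \cref{lem: stage 1.1 IH implication} was proved. We drop the tilde and write $\vv$ for $\tldvv$.

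\emph{Item (a).} The first inequality $\norm{\vvler}_2\le\norm{\vv}_2$ is trivial. For the second, the norm-balance part of \cref{lem: stage 1.2 IH}\ref{item: stage 1.2 IH neuron norm bal} gives $\norm{\vv}_2/\norm{\vu}_2 = \frac{\norm{\vv^{(0)}}_2}{\norm{\vu^{(0)}}_2}(1\pm\delta_{norm,12})$. Together with the initial ratio $1\pm\BigO{\sqrt{\log m/d}}$ from \cref{lem: init}, this yields $\norm{\vv}_2^2\le 1.01^2\,\tldE\norm{\vw}_2^2 = 1.01^2\tldalpha$. We then apply the regularity bound $\tldalpha\le r+10\le 2r$ from \cref{lem: stage 1.2 IH}\ref{item: stage 1.2 IH reg}.

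\emph{Item (b).} This follows from the direction balance in \cref{lem: stage 1.2 IH}\ref{item: stage 1.2 IH neuron dir bal}. Since $(\bbvvler)_i^2 = v_i^2/\sum_{j\le r}v_j^2$, and each ratio $v_j^2/v_i^2$ with $j\le r$ is preserved up to a factor $(1\pm\delta_{dir,12})$, we get
\[
(\bbvvler)_i^2 = (\bbvvler^{(0)})_i^2\,(1\pm 3\delta_{dir,12}).
\]
The same argument applies for $i>r$ using $\bbvvgtr$. The inequality $\bv_i^2\le(\bbvvler)_i^2$ holds because $\norm{\vvler}\le\norm{\vv}$. The final bounds $\log m/r$ and $\log d/d$ come from \cref{lem: init}(a).

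The only genuinely population-level claim here is $\tldE\bw_i^2\lesssim 1/r$. For $i\le r$, we use $\bw_i^2\le(\bbvwler)_i^2$ together with $\tldE[(\bbvwler^{(0)})_i^2]\lesssim 1/r$ from \cref{lem: init}(b). For $i>r$, the bound $\log d/d$ is already smaller.

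\emph{Item (c).} We write $\sum_{i\le r}\bv_i^4\le\sum_{i\le r}(\bbvvler)_i^4$. This is at most $(1+3\delta_{dir,12})^2$ times its initial value, and $\sum_{i\le r}v_i^{(0)4}/\norm{\vvler^{(0)}}_2^4\lesssim 1/r$ by \cref{lem: init}(a).

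\emph{Item (d).} Equality across $i,j>r$ is exact, because the symmetrized measure is invariant under permutations of the irrelevant coordinates (\cref{claim:symmetry-preservation-main} applied to $\tldmu$). For the size bound, $\sum_{i>r}\tldE w_i^2 = \tldE\norm{\vwgtr}_2^2\le\tldalpha\le r+10$, so each of the $d-r$ equal terms is $\lesssim r/d$.

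The main subtlety lies in items (b) and (d). For (b), the factor $(1\pm3\delta_{dir,12})$ must be converted cleanly from ratio control to normalized-coordinate control. For (d), we need the symmetry equality to hold exactly rather than approximately, which requires that the symmetrized dynamics preserve the permutation invariance. Everything else is bookkeeping.
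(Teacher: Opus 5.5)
Your proposal is correct and follows the paper's proof essentially item by item. The paper uses the same ingredients: norm balance together with $\tldalpha\le r+10$ for (a); the direction-balance hypothesis converted to normalized in-subspace and out-of-subspace coordinates, plus the initialization bounds, for (b) and (c); and exact permutation invariance of the irrelevant coordinates together with $\sum_{i>r}\tldE w_i^2\le\tldalpha$ for (d). Your use of \cref{lem: init}(b) to obtain $\tldE\btldv_i^2\lesssim 1/r$, rather than the weaker pointwise $\log m/r$, makes explicit a step the paper only asserts.
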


We now ready to give the simplified dynamics under induction hypothesis \cref{lem: stage 1.2 IH}. 
\begin{lemma}\label{lem: stage 1.2 dynamic}
    In Stage 1.2, for $T_{11}\le t\le \tldT_{12}$, the dynamic \eqref{eq: dynamic of tldw} can be simplified as: for any neuron $\tldvv$ and $i\in[d]$
    \begin{align*}
        \frac{\rd \tldv_i}{\rd t}
        =&  \tldv_i \left(
                2\hsigma_0^2 \left(
                r - \tldE[ \norm{\vw}_2^2 ]\right) 
                + 2\hsigma_{2}^2 \left(
                    \ind_{i \le r}
                    - \tldE\left[ w_i^2 \right]
                \right)
                -\lambda
                + \sum_{j\ge 2} \hsigma_{2j}^2 2j \btldv_i^{2j-2}\ind_{i\le r}
                \pm \BigO{\frac{1}{r}}
            \right),\\
        \frac{\rd}{\rd t}\norm{\tldvv}_2
        =& \norm{\tldvv}_2 \left(
            2\hsigma_0^2\left(r-\tldE\norm{\vw}_2^2\right)
            + 2\hsigma_2^2\sum_{i\le r}\btldv_i^2 
                \left( 1 - \tldE\left[w_i^2\right]\right)
            -\lambda
            \pm \BigO{\frac{1}{r}}
            \right),\\
        \frac{\rd}{\rd t}\norm{\tldvvler}_2
        =& \norm{\tldvvler}_2 \left(
            2\hsigma_0^2\left(r-\tldE\norm{\vw}_2^2\right)
            + 2\hsigma_2^2\sum_{i\le r}(\bbtldvvler)_i^2 
                \left( 1 - \tldE\left[w_i^2\right]\right)
            -\lambda
            \pm \BigO{\frac{1}{r}}
            \right).
    \end{align*}
\end{lemma}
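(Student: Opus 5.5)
The plan mirrors the proof of \cref{lem: stage 1.1 dynamic}: start from the exact coordinate formula \eqref{eq: dynamic of tldw} in \cref{claim: tldw dynamic}, which applies because $\tldmu$ is conditionally symmetric. Then use the implications of the induction hypothesis, \cref{lem: stage 1.2 IH implication}, to push every term not displayed in the statement into an $\BigO{1/r}$ error. The $0$th-order term equals $2\hsigma_0^2(r-\tldE\norm{\vw}_2^2)\tldv_i$ exactly. The $2$nd-order term equals $2\hsigma_2^2(\ind_{i\le r}-\tldE w_i^2)\tldv_i$ exactly, since $\tldE[\norm{\vw}^2\langle\bvw,\bvv\rangle\bw_i/\bv_i]=\tldE w_i^2$ by conditional symmetry. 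For each $j\ge2$ the term $2j\hsigma_{2j}^2\btldv_i^{2j-2}\ind_{i\le r}$ is kept. Three pieces remain to bound: the self term $(2j-2)\sum_{l\le r}\btldv_l^{2j}$, the population term $(2j-2)\tldE[\norm{\vw}^2\langle\bvw,\bvv\rangle^{2j}]$, and the cross term $2j\,\tldE[\norm{\vw}^2\langle\bvw,\bvv\rangle^{2j-1}\bw_i/\bv_i]$.

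\textbf{Self and population terms.} The self term satisfies $\sum_{l\le r}\btldv_l^{2j}\le\sum_{l\le r}\btldv_l^4\lesssim 1/r$ by \cref{lem: stage 1.2 IH implication}\ref{item: stage 1.2 IH implication reg}. For the population term, $\langle\bvw,\bvv\rangle^{2j}\le\langle\bvw,\bvv\rangle^4$. Expanding the fourth power under conditional symmetry leaves only $\sum_l \bw_l^4\bv_l^4+3\sum_{l\ne l'}\bw_l^2\bw_{l'}^2\bv_l^2\bv_{l'}^2$. Using $\norm{\vw}^2\lesssim r$, $\bw_l^2\lesssim \log m/r$ and $\bv_l^2\lesssim\log m/r$, both sums are $\BigO{r\cdot\polylog/r^2}$. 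The balance of the ratios $\tldE w_l^2$ over $l\le r$ and the bound $\tldE w_l^2\lesssim r/d$ for $l>r$ give the sharper estimate $\lesssim 1/r$.

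\textbf{Cross term.} This is the main obstacle. I would follow \cref{claim: tldw dynamic} and rewrite it as a sum of even monomials $\tldE[\norm{\vw}^2\prod_\ell(\bw_\ell\bv_\ell)^{2\beta_\ell}\bw_i^2]$, then split on $j$. For $2\le j\le\sqrt{r}$, I would invoke the high-order moment bound (\cref{lem: high order moment bound}), adapted to the Stage 1.2 scales $\bv_\ell^2\lesssim\log m/r$ inside the subspace and $\lesssim\log d/d$ outside. This should give $\lesssim r\cdot r(\log m/r)^2$-type quantities, which are $\lesssim 1/r$ after summing with the weights $j\hsigma_{2j}^2$; the estimate uses $\sum_j j\hsigma_{2j}^2\lesssim1$. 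For large $j$, I would bound the cross term crudely by $\tfrac{1}{|\bv_i|}\tldE[\norm{\vw}^2\langle\bvw,\bvv\rangle^{2j-2}]$. Here $|\langle\bvw,\bvv\rangle|\le 1-\Omega(1)$ with high probability over pairs, so the moment decays geometrically. The lower bound $|\bv_i|\ge\iota=e^{-r\log^{1/2}r}$ from the regularity part of \cref{lem: stage 1.2 IH} then absorbs the factor $1/|\bv_i|$ once $j\gtrsim r\log r$. The delicate point is the intermediate range of $j$ and the pairs with large overlap, in particular $\vw=\vv$ and its sign copies. Their contribution is $\BigO{\norm{\vv}^2/(m|\Rho|)}$ times $\bv_i^{-1}$-free even terms, which is negligible because each such monomial carries a factor $\bw_i^2$.

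\textbf{Norm equations.} Multiply the coordinate equation by $\tldv_i$, sum over $i$, and divide by $\norm{\tldvv}^2$, or by $\norm{\tldvvler}^2$ when restricting to $i\le r$. The $0$th-order term contributes unchanged. The $2$nd-order term becomes $2\hsigma_2^2\sum_{i\le r}\bv_i^2(1-\tldE w_i^2)$, together with $-2\hsigma_2^2\sum_{i>r}\bv_i^2\tldE w_i^2=\BigO{r/d}$. The higher-order contributions combine into $\sum_j\hsigma_{2j}^2\cdot 2\sum_{i\le r}\bv_i^{2j}$ minus population terms, which is $\BigO{1/r}$ by the bounds above. Dividing by $\norm{\tldvv}$ then gives the stated rates.
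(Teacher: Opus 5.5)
Your architecture matches the paper's. You use the exact coordinate formula from \cref{claim: tldw dynamic}, push the self, population and cross terms into the error via the IH implications, and treat small $j$ with \cref{lem: high order moment bound} and large $j$ with the crude $1/|\bv_i|$ bound plus $|\bv_i|\ge\iota$, summing against $\sum_j j\hsigma_{2j}^2\lesssim1$. Your observation that $\sum_i v_i^2$ times the cross term equals $\norm{\vv}^2\tldE[\norm{\vw}^2\langle\bvw,\bvv\rangle^{2j}]$ exactly, so the $\norm{\vv}$ equation needs no cross-term bound, is a nice touch.

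\textbf{Main gap: the cross term for $\sqrt r<j\lesssim r\log r$.} You flag this range as the delicate point but never bound it. Your remark only addresses copies of $\vv$, and neither of your two tools works for generic pairs there.
\begin{itemize}
\item The crude bound needs $\tldE[\norm{\vw}^2\langle\bvw,\bvv\rangle^{2j-2}]\ll\iota=e^{-r\sqrt{\log r}}$. Decay of the form $(1-c)^{2j}$ gives this only for $j\gtrsim r\sqrt{\log r}$.
\item ``$|\langle\bvw,\bvv\rangle|\le1-\Omega(1)$ with high probability over pairs'' cannot control an average at that precision. For example, late in Stage~1.2 a $2^{-r}$ fraction of the $\Rho$-orbit of $\vv$ (flip only irrelevant signs) keeps overlap $1-\BigO{1/r}$, with weight $\asymp2^{-r}/m\gg\iota^2$.
\end{itemize}
The missing idea is to remove $1/|\bv_i|$ in this range. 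Use conditional symmetry in coordinate $i$ plus the mean value theorem, the same device used inside the proof of \cref{lem: high order moment bound}:
\[
0\le\tldE\bigl[\norm{\vw}^2\langle\bvw,\bvv\rangle^{2j-1}\bw_i/\bv_i\bigr]\le2(2j-1)\,\tldE\bigl[\norm{\vw}^2\bw_i^2\langle\bvw,\bvv\rangle^{2j-2}\bigr].
\]
Combine this with monotonicity in $j$ of $M_{2j}:=\tldE[\norm{\vw}^2\langle\bvw,\bvv\rangle^{2j}]$:
\begin{itemize}
\item The moment lemma with $b_1^2\asymp r$ and $b_2=\BigO{1}$ gives $M_{2j_0}\le e^{-\Omega(r)}$ at $j_0=cr$.
\item Hence all $j_0\le j\le\poly(r)$ together contribute at most $\poly(r)e^{-\Omega(r)}$.
\item This cannot be run to $j=\infty$. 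The factor $j$ makes the series non-summable, since $M_{2j}\ge2\norm{\vv}^2/(m|\Rho|)$ from $\vw=\pm\vv$.
\item So the crude bound is still needed for very large $j$. There, treat the orbit of $\vv$ in the original form: for $i\le r$, $(\rho\bvv)_i/\bv_i=\pm1$, so the orbit contributes at most $\norm{\vv}^2/m$ per order.
\end{itemize}
For comparison, the paper applies the moment lemma on all of $2\le j\le r/2$ and the crude bound only for $j>r/2$.

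\textbf{Smaller gap: the $1/r$ rate for the population term and the $j=2$ cross term.} Your estimates for these come out as $\log^2 m/r$ (and ``$r\cdot r(\log m/r)^2$'' is not $\lesssim1/r$). The claimed sharpening does not follow from balance of the $\tldE w_l^2$ plus the sup bound $\bw_l^2\lesssim\log m/r$; that still leaves a factor $\log m$. The right ingredient is the per-neuron bound $\sum_{l\le r}\bw_l^4\lesssim1/r$ from \cref{lem: stage 1.2 IH implication}\ref{item: stage 1.2 IH implication reg}. By Cauchy--Schwarz, $\sum_l\bw_l^2\bv_l^2\lesssim1/r$ uniformly in $\vw$, hence
\[
\tldE[\norm{\vw}^2\langle\bvw,\bvv\rangle^4]\le3\max_{\vw}\bigl(\sum_l\bw_l^2\bv_l^2\bigr)\sum_l\bv_l^2\tldE w_l^2\lesssim1/r,
\]
and the same argument handles $j=2$ in the cross term. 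Alternatively, use \cref{lem: high order moment bound} as stated, which Gaussian-averages over $\vw^{(0)}$ and gives $Cr(Cj/r)^j$ with no $\log m$; this is what the paper does. The distinction is not cosmetic. The statement asserts $\pm\BigO{1/r}$, and the Stage~1.2 IH integrates these errors over $\BigTheta{\log d}$ time while $r\log^2r<\log^3d$.
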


As a corollary, we have the dynamic of 0th $\tldalpha$ and 2nd order (per coordinate) $\tldE w_i^2$:
\begin{lemma}[Dynamics of 0th and 2nd order term]\label{lem: stage 1.2 0th 2nd order dynamic}
    For $t\le \tldT_{12}$, we have
    \begin{align*}
        \frac{\rd \tldalpha}{\rd t} 
        =& 4\hsigma_0^2 \tldalpha\left(r-\frac{\lambda}{2\hsigma_0^2}-\tldalpha 
            + \frac{\hsigma_2^2}{\hsigma_0^2}
                \sum_{i\in[r]}\frac{\tldE[w_i^2]}{\tldE\norm{\vw}_2^2} 
                (1  - \tldE\left[w_i^2\right])
            \pm \BigO{\frac{1}{r}}\right),\\
        \frac{\rd}{\rd t} \tldE w_i^2
        =& 4\hsigma_2^2 \tldE[w_i^2] \Bigg(
                \frac{\hsigma_0^2}{\hsigma_2^2}\left(r - \frac{\lambda}{2\hsigma_0^2} - \tldE[ \norm{\vw}_2^2 ]\right)
                + (\ind_{i \le r} - \tldE\left[ w_i^2 \right])
                \pm \BigO{\frac{1}{r}}
            \Bigg),\\
        \frac{\rd}{\rd t}\frac{\tldE w_i^2}{\tldE w_j^2}
        =& 4\hsigma_{2}^2 \frac{\tldE w_i^2}{\tldE w_j^2} \left(
                    \tldE\left[ w_j^2 \right] 
                    - \tldE\left[ w_i^2 \right]
                \pm \BigO{\frac{1}{r}}
           \right)
           \text{, for $i,j\in[r]$}
    \end{align*}
    Moreover, for $T_{11}\le t\le \tldT_{12}$ we have $-\frac{\hsigma_2^2}{\hsigma_0^2} \le r - \frac{\lambda}{2\hsigma_0^2} -\tldalpha \lesssim \frac{1}{r}$, $\tldE w_i^2 \gtrsim r/d$ for $i\in[r]$ and $\frac{\tldE w_i^2}{\tldE w_j^2} = 1\pm\BigO{\frac{\log d}{r}}$ for $i,j\in[r]$.
\end{lemma}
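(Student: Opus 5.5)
The three identities come from averaging the coordinate-wise dynamics of \cref{lem: stage 1.2 dynamic} over the symmetrized measure $\tldmu$. The ``moreover'' bounds then follow by a comparison (trapping-region) argument on the resulting ODEs, started from the end state of Stage~1.1 given in \cref{lem: stage 1.1 main}.

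\textbf{Step 1: the $\tldalpha$ equation.} Write $\frac{\rd}{\rd t}\tldalpha=2\tldE[\norm{\vw}_2\frac{\rd}{\rd t}\norm{\vw}_2]$ and substitute the norm dynamics of \cref{lem: stage 1.2 dynamic}. The $0$th-order and $-\lambda$ terms give $4\hsigma_0^2\tldalpha(r-\lambda/(2\hsigma_0^2)-\tldalpha)$. The second-order term gives
\[
4\hsigma_2^2\,\tldE\Bigl[\norm{\vw}_2^2\sum_{i\le r}\bw_i^2\bigl(1-\tldE w_i^2\bigr)\Bigr]=4\hsigma_2^2\sum_{i\le r}\tldE[w_i^2]\bigl(1-\tldE w_i^2\bigr),
\]
since $\norm{\vw}_2^2\bw_i^2=w_i^2$. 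Factoring out $4\hsigma_0^2\tldalpha$ produces exactly the stated ratio $\tldE[w_i^2]/\tldE\norm{\vw}_2^2$. The $\pm\BigO{1/r}$ error is multiplied by $\tldE\norm{\vw}_2^2=\tldalpha$, so it stays $\pm\BigO{1/r}$ inside the bracket.

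\textbf{Step 2: the $\tldE w_i^2$ equation.} Compute $\frac{\rd}{\rd t}\tldE w_i^2=2\tldE[w_i\dot w_i]$ using the coordinate dynamics. The only new term is the higher-order contribution $\sum_{j\ge2}\hsigma_{2j}^2\,2j\,\tldE[w_i^2\bw_i^{2j-2}]\ind_{i\le r}$. I bound it using \cref{lem: stage 1.2 IH implication}\ref{item: stage 1.2 IH implicatino dir of neuron}, which gives $\bw_i^{2j-2}\le\bw_i^2\lesssim\log m/r$. Then $\sum_j 2j\hsigma_{2j}^2\lesssim1$ (\cref{claim: hermite coeff} and \cref{assum: activation}) shows this term is $\tldE[w_i^2]\cdot\BigO{\log m/r}$.

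This step is the main obstacle. The statement claims $\BigO{1/r}$, not $\BigO{\log m/r}$. To get the sharper bound I would exploit the averaging: $\tldE[w_i^2\bw_i^{2}]\le\sqrt{\tldE[w_i^2]\,\tldE[\norm{\vw}^2\bw_i^{6}]}$, combined with the population bound $\tldE\bw_i^4\lesssim 1/r^2$ inside the subspace (transported from \cref{lem: init} via the direction IH). This should give relative size $\BigO{1/r}$ up to the balance-of-norm factor. Failing that, I would absorb the $\log$ into the error, since only $\log d/r$ accuracy is needed downstream.

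\textbf{Step 3: the ratio equation.} For $i,j\in[r]$, take the logarithmic derivative of $\tldE w_i^2/\tldE w_j^2$. The shared $0$th-order and $\ind_{i\le r}$ terms cancel, leaving $4\hsigma_2^2(\tldE w_j^2-\tldE w_i^2\pm\BigO{1/r})$.

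\textbf{Step 4: the ``moreover'' bounds.} For the ratio, the ODE pushes the larger coordinate down relative to the smaller one. A maximum-ratio argument then shows the ratio stays within $1\pm\BigO{1/r}$ around the initial $1\pm1/r^2$ accumulated error. Integrating the $\pm\BigO{1/r}$ drift only while the imbalance exceeds $1/r$, over time $\tldT_{12}\lesssim\log d$, gives $1\pm\BigO{\log d/r}$.

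For the lower bound $\tldE w_i^2\gtrsim r/d$: the growth rate of $\tldE w_i^2$ is at least $4\hsigma_2^2(\text{0th residual}+1-\tldE w_i^2-\BigO{1/r})$. This is nonnegative while $\tldE w_i^2<1-\lambda_0/(2\hsigma_0^2)-\err_{12}$, i.e.\ before $T_{12}$, because of the upper bound on $\tldalpha$. So $\tldE w_i^2$ does not decrease below its value at $T_{11}$, which is $\gtrsim r/d$ by \cref{lem: stage 1.1 IH implication}.

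For the two-sided bound on $R:=r-\lambda/(2\hsigma_0^2)-\tldalpha$, Step~1 shows that $\frac{\rd}{\rd t}R\approx-4\hsigma_0^2\tldalpha(R+\hsigma_2^2\hsigma_0^{-2}Q\pm\BigO{1/r})$, where $Q\in[0,\sup_i(1-\tldE w_i^2)]$. I would use the following trapping region:
\begin{itemize}
\item At $R=-\hsigma_2^2/\hsigma_0^2$, the derivative is nonnegative, since $Q\le 1$ up to $\BigO{1/r}$.
\item At $R=C/r$, the derivative is negative, since $Q\ge0$.
\end{itemize}
Since $R\lesssim r^{-10}$ at $T_{11}$, this proves the claim.
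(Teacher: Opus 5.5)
Your Steps 1--3 and the $R\lesssim 1/r$ half of your trapping argument, with $R:=r-\frac{\lambda}{2\hsigma_0^2}-\tldalpha$, follow the paper.

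\textbf{The step that fails is the lower bound $\tldE w_i^2\gtrsim r/d$.} You assert that the bracket $\frac{\hsigma_0^2}{\hsigma_2^2}R+(1-\tldE w_i^2)-\BigO{1/r}$ is nonnegative before $T_{12}$ ``because of the upper bound on $\tldalpha$''. But the only such bound you have is $R\ge-\hsigma_2^2/\hsigma_0^2$; the IH's $\tldalpha\le r+10$ is weaker. Plugging it in gives only bracket $\ge-\tldE w_i^2-\BigO{1/r}$, which is negative. Nonnegativity would need the quasi-static relation $R\approx-\frac{\hsigma_2^2}{\hsigma_0^2}Q$, which you never prove. So ``$\tldE w_i^2$ does not decrease below its value at $T_{11}$'' is unsupported.

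The paper instead compares a relevant coordinate $i\le r$ with an irrelevant one $k>r$, so that the common $0$th-order term cancels:
\[
\frac{\rd}{\rd t}\log\frac{\tldE w_i^2}{\tldE w_k^2}
=4\hsigma_2^2\Bigl(1-\tldE w_i^2+\tldE w_k^2\pm\BigO{\frac1r}\Bigr)\ge 0
\quad\text{before } T_{12}.
\]
This holds because before $T_{12}$ one has $1-\tldE w_i^2>\frac{\lambda_0}{2\hsigma_0^2}+\err_{12}$. Starting from a ratio of order one at $T_{11}$, each relevant coordinate therefore dominates each irrelevant one. Combined with the balance among $i\le r$ and $\tldalpha\gtrsim r$, this gives $\tldalpha\lesssim d\,\tldE w_i^2$.

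Alternatively, your weak bound does suffice through a Riccati comparison. It gives $\frac{\rd}{\rd t}(1/\tldE w_i^2)\le 4\hsigma_2^2\bigl(1+\BigO{\frac{1}{r\,\tldE w_i^2}}\bigr)$. Over time $\tldT_{12}\le C\log d$ this yields $1/\tldE w_i^2\lesssim 1/\tldE w_i^2(T_{11})+\log d\lesssim d/r$.

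Either way, note that \cref{lem: stage 1.1 IH implication} only gives \emph{upper} bounds on $\tldE w_i^2$. The input $\tldE w_i^2(T_{11})\gtrsim r/d$ must instead come from the direction and norm preservation of Stage~1.1.

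\textbf{Smaller repairs.}

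In Step~2 your worry is justified: the paper just asserts $\pm\BigO{1/r}$ there. However, Cauchy--Schwarz would need a sixth-moment bound that is not among the initialization estimates. The direct ratio suffices instead. Write $w_i^2\bw_i^2=\norm{\vwler}_2^2\frac{\norm{\vwler}_2^2}{\norm{\vw}_2^2}(\bbvwler)_i^4$. The direction and norm-balance hypotheses then give $\tldE[w_i^2\bw_i^2]\lesssim\frac{\tldE[(\bbvwler^{\btt{0}})_i^4]}{\tldE[(\bbvwler^{\btt{0}})_i^2]}\tldE[w_i^2]\lesssim\frac1r\tldE[w_i^2]$.

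At the boundary $R=-\hsigma_2^2/\hsigma_0^2$, ``$Q\le 1$'' cannot absorb the $\pm\BigO{1/r}$ error. You need $Q$ bounded away from $1$, for example $Q\le\frac{r}{4\tldalpha}\le\frac38$ from $x(1-x)\le\frac14$ and $\tldalpha\ge\frac{2r}{3}$, as the paper does.

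Finally, the ratio $\tldE w_i^2/\tldE w_j^2$ does not stay within $1\pm\BigO{1/r}$ early in Stage~1.2. There the restoring term $\tldE w_j^2-\tldE w_i^2$ is only of size $\frac{r}{d}$ times the imbalance. Only the one-sided Gronwall bound is valid: the log-derivative is at most $\BigO{1/r}$ whenever the ratio exceeds $1$, over time at most $C\log d$. That yields exactly the stated $1\pm\BigO{\frac{\log d}{r}}$.
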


Given the above dynamics and use the fact that $\tldE w_i^2 \approx \tldE w_j^2$ for $i,j\le r$, we can show they converges within $\BigO{\log d}$ time. In particular, the proof uses \cref{prop: stage12-ode} to show the convergences by solve the dynamics of $\tldE w_i^2$.
\begin{lemma}[Convergence of 2nd order term]\label{lem: stage 1.2 convergence of 2nd order}
    We have
    \begin{align*}
        \tldE w_i^2 = \left\{\begin{array}{ll}
           1 - \frac{\lambda_0}{2\hsigma_0^2} - \BigTheta{\frac{1}{r}}  &  \text{ for $i\in[r]$}\\
           \BigTheta{\frac{1}{d}}  & \text{ for $i>r$}
        \end{array}\right.
        ,\quad
        \frac{\norm{\tldvvler}^2}{\norm{\tldvv}^2}
        = 1 - \BigO{\frac{1}{r}}, \text{ for all neuron $\tldvv$}
    \end{align*}
    within time $\hT_{12}-T_{11}=\BigO{\log d}$. 
    Moreover, at time $\hT_{12}$ we have improved bound $\frac{\tldE w_i^2}{\tldE w_j^2} = 1\pm \BigO{\frac{1}{r}}$ for $i,j\le r$ and 
    \[
        \int_0^{\hT_{12}}(1-\tldE w_i^2)\,ds
        =
        \frac{1}{4\hsigma_2^2}\log d
        +
        \BigO{\log\log d}.
    \]
    Lastly, before time $(1-\delta)\hT_{12}$ with $\delta=1/\sqrt{\log d}$, we have $\tldE w_i^2\lesssim 1/r^3$ for $i\le r$ and $\tldE w_i^2=\Theta(\frac{r}{d})$ for $i\ge r$.
\end{lemma}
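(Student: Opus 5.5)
We reduce the statement to the two-dimensional competitive ODE of \cref{sec: competitive ode main-text} and then solve it with explicit time scales. By \cref{lem: stage 1.2 0th 2nd order dynamic}, for $t\le\tldT_{12}$ all relevant coordinates are balanced: $\tldE w_i^2/\tldE w_j^2=1\pm\BigO{\log d/r}$ for $i,j\le r$. By symmetry under $\Rho$, $\tldE w_j^2$ is exactly equal across $j>r$. We set $\beta_1:=\frac1r\sum_{i\le r}\tldE w_i^2$ and $\beta_2:=\tldE w_j^2$ for $j>r$, and write $\tldalpha=r\beta_1+(d-r)\beta_2$. Averaging the per-coordinate dynamics of \cref{lem: stage 1.2 0th 2nd order dynamic} then gives
\[
\dot\beta_1=4\beta_1\bigl(\hsigma_0^2(r-\tfrac{\lambda}{2\hsigma_0^2}-\tldalpha)+\hsigma_2^2(1-\beta_1)\pm\BigO{\tfrac1r}\bigr),\qquad
\dot\beta_2=4\beta_2\bigl(\hsigma_0^2(r-\tfrac{\lambda}{2\hsigma_0^2}-\tldalpha)\pm\BigO{\tfrac1r}\bigr),
\]
with $\tfrac{\lambda}{2\hsigma_0^2}=\tfrac{r\lambda_0}{2\hsigma_0^2}$. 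This is \eqref{eq:competitive-ode-main} up to $\BigO{1/r}$ errors, and I would invoke \cref{prop: stage12-ode} for it. The key identity is
\[
\tfrac{\rd}{\rd t}\log\frac{(d-r)\beta_2}{r\beta_1}=-4\hsigma_2^2(1-\beta_1)\pm\BigO{1/r}.
\]
Integrating it shows the irrelevant-to-relevant mass ratio, initially $\Theta(d/r)$ by \cref{lem: init} and \cref{lem: stage 1.1 main}, falls to $\BigO{1}$ and then to $\BigO{1/r}$ once $\int(1-\beta_1)\,ds=\frac{\log d}{4\hsigma_2^2}+\BigO{\log\log d}$. That yields the stated integral identity.

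The steps, in order, are as follows.
\begin{enumerate}
\item Show that throughout the stage $\tldalpha$ stays pinned: $-\hsigma_2^2/\hsigma_0^2\le r-\tfrac{\lambda}{2\hsigma_0^2}-\tldalpha\lesssim 1/r$. This follows from the stiff restoring term $4\hsigma_0^2\tldalpha(\cdot)$, whose rate is of order $r$, so the fast variable tracks its slow manifold. Consequently $r\beta_1+(d-r)\beta_2\approx r(1-\lambda_0/(2\hsigma_0^2))$ up to $\BigO{1}$.
\item During the \emph{plateau}, while $\beta_1\ll1$ we have $1-\beta_1\approx1$, so $r\beta_1$ grows like $e^{4\hsigma_2^2 t}$ from $r^2/d$. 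It reaches $\tldE w_i^2\lesssim 1/r^3$ only after time $(1-\delta)\hT_{12}$, with $\delta=1/\sqrt{\log d}$. Over this window $\beta_2=\Theta(r/d)$ by mass conservation, which gives the ``before'' statement.
\item In the \emph{takeover} phase, once $(d-r)\beta_2\lesssim r\beta_1$, mass conservation forces $\beta_1\to 1-\lambda_0/(2\hsigma_0^2)-\Theta(1/r)$. The $\Theta(1/r)$ gap comes from the equilibrium value $1-\lambda/(2(\hsigma_0^2r+\hsigma_2^2))$, shifted by the $\BigO{1/r}$ slack in $\tldalpha$. The ratio keeps decaying at rate $\Theta(\lambda_0)$, since $1-\beta_1\ge\lambda_0/(2\hsigma_0^2)$, so within $\BigO{\log r}$ additional time $(d-r)\beta_2\lesssim 1$, which is equivalent to $\beta_2\lesssim 1/d$. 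The total elapsed time is $\BigO{\log d}$.
\end{enumerate}

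The neuron-level claim $\norm{\tldvvler}^2/\norm{\tldvv}^2=1-\BigO{1/r}$ follows because each neuron's in- and out-of-subspace norms obey, by \cref{lem: stage 1.2 dynamic}, the same multiplicative rates as $\beta_1,\beta_2$ up to $\BigO{1/r}$. The relevant-part rate is averaged against $(\bbtldvvler)_i^2$, and balance makes it match $\beta_1$'s rate. Hence the per-neuron log-ratio moves in lockstep with the population ratio. The higher-order term $\sum_{j\ge2}\hsigma_{2j}^2 2j\btldv_i^{2j-2}$ is $\BigO{(\log m/r)}$ by \cref{lem: stage 1.2 IH implication}, which is absorbed.

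For the improved balance $1\pm\BigO{1/r}$, I would use the third equation of \cref{lem: stage 1.2 0th 2nd order dynamic}. Near the fixed point the difference $\tldE w_j^2-\tldE w_i^2$ has the same sign as the ratio deviation, so the ratio is contracted at rate $\Theta(\hsigma_2^2)$. Running this for $\Theta(\log r)$ time shrinks the $\BigO{\log d/r}$ deviation down to the $\BigO{1/r}$ noise floor.

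The main obstacle is step 2's error accounting. The $\pm\BigO{1/r}$ errors integrate over a time $\Theta(\log d)$ to $\BigO{\log d/r}=o(1)$ in the log-ratio. This is fine for the leading $\frac{1}{4\hsigma_2^2}\log d$ but must be tracked carefully to get the $\BigO{\log\log d}$ remainder and the sharp $(1-\delta)$ timing. I would handle this by comparing with exact solutions of the unperturbed logistic-type ODE via a Gronwall sandwich on $\log\beta_1$, using $\log^2 d<r$ so that $\log d/r\ll\delta$.
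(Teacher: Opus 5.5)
Your proposal is correct and follows the paper's proof closely. It uses the same reduction to $(\beta_1,\beta_2)$ via the balance from \cref{lem: stage 1.2 0th 2nd order dynamic}, and the same appeal to \cref{prop: stage12-ode}, which rests on the log-ratio identity for $q$ (and integrating that identity gives the integral claim). It also uses the same contraction argument from the ratio equation to get the improved $1\pm\BigO{1/r}$ balance.

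You depart from the paper in two places, and both are valid and somewhat more direct. First, you use the pinning of $\tldalpha$ plus mass conservation in place of the variation-of-constants estimate $y\asymp q$. Second, you let each neuron's $\log(\norm{\tldvvler}^2/\norm{\tldvvgtr}^2)$ move in lockstep with the population ratio, instead of the paper's norm-balance bound followed by an extra $\BigO{\log\log d}$ logistic phase.

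One correction: $\beta_1^*$ sits $\Theta(1/r)$ \emph{above} $1-\lambda_0/(2\hsigma_0^2)$, not below. So what either argument actually delivers is an $\BigO{1/r}$ deviation of unknown sign, not the signed $-\Theta(1/r)$ gap you attribute to the equilibrium.
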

\begin{proof}
    We show it one by one.
    
    \paragraph{Dynamic and bound of $\tldE w_i^2$}
    We first use the bound of $\frac{\tldE w_i^2}{\tldE w_j^2} = 1\pm\BigO{\frac{\log d}{r}}$ for $i,j\in[r]$ from \cref{lem: stage 1.2 0th 2nd order dynamic}. Let 
    \[
        \beta_1 = \frac{1}{r}\sum_{i\le r} \tldE w_i^2,\quad
        \beta_2 = \tldE w_{r+1}^2 =\cdots = \tldE w_d^2,
    \]
    we have $\beta_1/\tldE w_i^2 = 1\pm \BigO{\frac{\log d}{r}}$ for all $i\le r$ so $\beta_1$ is a good approximation of $\tldE w_i^2$ for $i\le r$. Using \cref{lem: stage 1.2 0th 2nd order dynamic} we know
    \begin{align*}
        \frac{\rd \beta_1}{\rd t} 
        =& 4\hsigma_2^2 \beta_1 \Bigg(
                \frac{\hsigma_0^2}{\hsigma_2^2}\left(r - \frac{\lambda}{2\hsigma_0^2} - r\beta_1 - (d-r)\beta_2 \right)
                + (1 - \beta_1)
                \pm \BigO{\frac{\log d}{r}}
            \Bigg)\\
        =& 4\beta_1 \Bigg(
                (\hsigma_0^2 r + \hsigma_2^2) (\beta_1^*-\beta_1) + \hsigma_0^2 (d-r)(\beta_2^* -\beta_2)
                \pm \BigO{\frac{\log d}{r}}
            \Bigg)\\
        \frac{\rd}{\rd t} \beta_2
        =& 4\hsigma_2^2 \beta_2 \Bigg(
                \frac{\hsigma_0^2}{\hsigma_2^2}\left(r - \frac{\lambda}{2\hsigma_0^2} - r\beta_1 - (d-r)\beta_2\right)
                - \beta_2
                \pm \BigO{\frac{1}{r}}
            \Bigg)\\
        =& 4\beta_2 \Bigg(
                -\lambda_0 \frac{\hsigma_2^2}{\hsigma_0^2 + (\hsigma_2^2/r)}
                + \hsigma_0^2 r\left(\beta_1^* -\beta_1\right)
                + (\hsigma_0^2(d-r)+\hsigma_2^2) (\beta_2^*-\beta_2)
                \pm \BigO{\frac{1}{r}}
            \Bigg),
    \end{align*}
    where $\beta_1^* = 1 - \frac{\lambda_0}{2\hsigma_0^2 + (2\hsigma_2^2/r)}$ and $\beta_2^*=0$, and recall $\lambda=r\lambda_0$. 

    Using \cref{prop: stage12-ode} we know $\tldE w_i^2 = \beta_1 = \beta_1^*-\Theta(1/r)=1-\frac{\lambda_0}{2\hsigma_0^2}-\BigTheta{\frac{1}{r}}$ for $i\le r$, $\tldE w_i^2=\beta_2\lesssim 1/d$ for $i>r$. This happens within time $\hT_{conv}=\BigO{\log d}$. Before $(1-\delta)T_{\rm conv}$ with $\delta=1/\sqrt{\log d}$, we know $\beta_1=\BigO{\frac{1}{r^3}}\beta_1^*$ and $\beta_2 \asymp \beta_2(0)\asymp \frac{r}{d}$.
    And lastly, for $i\le r$
    \[
        \int_0^{\hT_{conv}}(1-\tldE w_i^2)\,ds
        =
        \frac{1}{4\hsigma_2^2}\log d
        +
        \BigO{1}.
    \]
    As we see next, we should set $\hT_{12}=\hT_{conv}+\BigO{\log\log d}$, so the above error term $\BigO{1}$ will become $\BigO{\log\log d}$.

    \paragraph{Bound on $\frac{\tldE w_i^2}{\tldE w_j^2}$}
    From \cref{lem: stage 1.2 0th 2nd order dynamic} we have 
    \begin{align*}
        \frac{\rd}{\rd t}\frac{\tldE w_i^2}{\tldE w_j^2}
        =& 4\hsigma_{2}^2 \frac{\tldE w_i^2}{\tldE w_j^2} \left(
                    \tldE\left[ w_j^2 \right] 
                    - \tldE\left[ w_i^2 \right]
                \pm \BigO{\frac{1}{r}}
           \right).
    \end{align*}
    It is not hard to see after $\tldE w_i^2$ reach above constant, say $1/2$, then $\frac{\rd}{\rd t}\frac{\tldE w_i^2}{\tldE w_j^2}$ can be further improved from $1\pm \BigO{\frac{\log d}{r}}$ in \cref{lem: stage 1.2 0th 2nd order dynamic} to $1 \pm \BigO{\frac{1}{r}}$ in $\BigO{\log\log d}$ time.

    \paragraph{Bound of $\frac{\norm{\vvler}}{\norm{\vv}}$}
    We have when $\beta_1=\beta_1^*-\BigO{\frac{1}{r}}$
    \begin{align*}
        \frac{\norm{\vvler}^2}{\norm{\vv}^2}
        \myeq{a} (1\pm 5\delta_{norm,12})\frac{\tldE \norm{\vwler}^2}{\tldE \norm{\vw}^2}
        = 1\pm \BigO{\frac{\log d \log r}{r}}
    \end{align*}
    where (a) we use \cref{lem: stage 1.2 IH}\ref{item: stage 1.2 IH neuron norm bal};
    (b) \cref{lem: stage 1.2 0th 2nd order dynamic}. 

    From \cref{lem: stage 1.2 dynamic} we have
    \begin{align*}
        \frac{\rd}{\rd t}\frac{\norm{\vvler}^2}{\norm{\vv}^2}
        =& 2\frac{\norm{\vvler}^2}{\norm{\vv}^2}\left(
            2\hsigma_2^2\left(1-\frac{\norm{\vvler}^2}{\norm{\vv}^2}\right)
            \sum_{i\le r}(\bbtldvvler)_i^2 
                \left( 1 - \tldE\left[w_i^2\right]\right)
            \pm \BigO{\frac{1}{r}}
            \right)\\
        \gtrsim& \frac{\norm{\vvler}^2}{\norm{\vv}^2}\left(
                    1-\frac{\norm{\vvler}^2}{\norm{\vv}^2}
                    \pm \BigO{\frac{1}{r}}
                \right),
    \end{align*}
    where we use $\sum_{i\le r}(\bbtldvvler)_i^2 
                \left( 1 - \tldE\left[w_i^2\right]\right)\gtrsim 1$.
    Thus, after additional $\BigO{\log\log d}$ time, $\frac{\norm{\vvler}^2}{\norm{\vv}^2}$ can be improved to $\BigO{\frac{1}{r}}$.

\end{proof}

\begin{proposition}[Convergence for the Stage~1.2 dynamics]
\label{prop: stage12-ode}
    Recall
    \[
    \beta_1^*
    =
    1-\frac{\lambda}{2\hsigma_0^2 r+2\hsigma_2^2}
    =
    1-\frac{\lambda_0}{2\hsigma_0^2+2\hsigma_2^2/r},
    \qquad \lambda=r\lambda_0.
    \]
    
    Suppose that on a time interval containing \([0,T_{\mathrm{conv}}]\), the pair
    \((\beta_1,\beta_2)\) satisfies
    \begin{align}
    \frac{d}{dt}\beta_1
    &=
    4\beta_1\Big(
    (\hsigma_0^2r+\hsigma_2^2)(\beta_1^*-\beta_1)
    -\hsigma_0^2(d-r)\beta_2
    +\varepsilon_1(t)
    \Big),
    \label{eq:beta1-clean-hsigma}
    \\
    \frac{d}{dt}\beta_2
    &=
    4\beta_2\Big(
    (\hsigma_0^2r+\hsigma_2^2)(\beta_1^*-\beta_1)
    -\hsigma_0^2(d-r)\beta_2
    -\hsigma_2^2(1-\beta_1+\beta_2)
    +\varepsilon_2(t)
    \Big),
    \label{eq:beta2-clean-hsigma}
    \end{align}
    with $|\varepsilon_1(t)|\le K_0\frac{\log d}{r}$ and $|\varepsilon_2(t)|\le K_0\frac1r$.
    Assume \(0<\lambda_0<\hsigma_0^2\)
    and $\beta_1(0)\asymp \beta_2(0)\asymp \frac{r}{d}$.
    
    Define
    \[
    q(t):=
    \frac{\hsigma_0^2(d-r)\beta_2(t)}
    {(\hsigma_0^2r+\hsigma_2^2)\beta_1(t)},
    \qquad
    T_{\mathrm{conv}}
    :=
    \inf\Bigl\{t\ge 0:\ q(t)\le r^{-1}\Bigr\}.
    \]
    Then there exist constants \(C,\rho>0\), depending only on
    \(\hsigma_0,\hsigma_2,\lambda_0,K_0\), such that for every
    \(0\le t\le T_{\mathrm{conv}}\),
    \begin{align}
    q(t)
    &\le
    C\frac{d}{r}e^{-\rho t},
    \label{eq:q-decay-hsigma}
    \\
    0\le \beta_1^*-\beta_1(t)
    &\le
    Cq(t)+C\frac{\log d}{r^2}
    \le
    C\frac{d}{r}e^{-\rho t}+C\frac{\log d}{r^2},
    \label{eq:beta1-gap-hsigma}
    \\
    \beta_2(t)
    &\le
    Ce^{-\rho t}.
    \label{eq:beta2-decay-hsigma}
    \end{align}
    In particular,
    \[
    T_{\mathrm{conv}}=O(\log d),\qquad
    \beta_1^*-\beta_1(T_{\mathrm{conv}})\le \frac{C}{r},
    \qquad
    \beta_2(T_{\mathrm{conv}})\le \frac{C}{d}.
    \]
    For $\omega(\frac{\log r}{\log d})=\delta=o(1)$, we further know before time $(1-\delta)T_{\rm conv}$, $\beta_1=\BigO{\frac{1}{r^3}}\beta_1^*$ and $\beta_2 \asymp \beta_2(0)\asymp \frac{r}{d}$.
    
    Moreover, for every \(0\le t\le T_{\mathrm{conv}}\),
    \begin{equation}
    \int_0^t (1-\beta_1(s))\,ds
    \le
    \frac{1}{4\hsigma_2^2}\log\frac{q(0)}{q(t)}
    +
    Ct\frac{\log d}{r}.
    \label{eq:int-any-time-hsigma}
    \end{equation}
    In particular,
    \begin{equation}
    \int_0^{T_{\mathrm{conv}}}(1-\beta_1(s))\,ds
    =
    \frac{1}{4\hsigma_2^2}\log d
    +
    \BigO{1}
    =
    \left(\frac{1}{4\hsigma_2^2}+o(1)\right)\log d.
    \label{eq:int-final-hsigma}
    \end{equation}
\end{proposition}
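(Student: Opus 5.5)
The plan is to work with the log-ratio $\log q$ and a Lyapunov-type control of the shared growth term. Write $G(t):=(\hsigma_0^2r+\hsigma_2^2)(\beta_1^*-\beta_1)-\hsigma_0^2(d-r)\beta_2$ for the common bracket. Subtracting the two equations gives exactly
\[
\frac{\rd}{\rd t}\log q = -4\hsigma_2^2(1-\beta_1+\beta_2)+4(\varepsilon_2-\varepsilon_1),
\]
which is the competitive mechanism of \eqref{eq:competitive-ode-main}. Integrating this identity yields \eqref{eq:int-any-time-hsigma} immediately, since $\beta_2\ge0$ and $|\varepsilon_1|+|\varepsilon_2|\lesssim \log d/r$. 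So the bulk of the work is to show that $1-\beta_1$ stays bounded below by a constant up to $T_{\rm conv}$. This gives $\frac{\rd}{\rd t}\log q\le -\rho$, hence \eqref{eq:q-decay-hsigma}, and $T_{\rm conv}=O(\log d)$ because $q(0)\asymp d/r$.

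\textbf{Step 1: control the common term.} Set $s:=(\hsigma_0^2r+\hsigma_2^2)\beta_1+\hsigma_0^2(d-r)\beta_2$, so that $G=(\hsigma_0^2r+\hsigma_2^2)\beta_1^*-s$. Differentiating gives
\[
\dot s = 4s\,G + 4\beta_1(\hsigma_0^2 r+\hsigma_2^2)\varepsilon_1 + \text{(terms from }\beta_2\text{ carrying }-\hsigma_2^2(1-\beta_1+\beta_2)+\varepsilon_2).
\]
Since $s\gtrsim r$ (inherited from the end of Stage 1.1, where the total mass is already $\approx r$), this is a logistic equation with rate $\Theta(r)$. Its perturbations have relative size $O(\log d/r)$ from $\varepsilon_1$, and $O(1)\cdot$ (share of $s$ carried by $\beta_2$) $\le O(q/(1+q))$ from the $\beta_2$ term. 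A comparison argument then gives $|G|\lesssim \log d/r + q/(1+q)$ after an $O(\log r/r)$ transient. The sign is $G\ge -O(\log d/r)$ whenever $q$ is small, and in general $G=O(1)$. Combining this with $\beta_1=(\beta_1^*(\hsigma_0^2r+\hsigma_2^2)-G)/((\hsigma_0^2r+\hsigma_2^2)(1+q))$ yields \eqref{eq:beta1-gap-hsigma}:
\[
\beta_1^*-\beta_1\le Cq+C\log d/r^2.
\]
Two further consequences follow. First, $\beta_1\le\beta_1^*+O(\log d/r^2)<1-c$ using $\lambda_0>0$, which closes the $1-\beta_1\ge c$ bootstrap. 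Second, $\beta_2=\beta_1 q(\hsigma_0^2r+\hsigma_2^2)/(\hsigma_0^2(d-r))\lesssim (r/d)q\le Ce^{-\rho t}$, which is \eqref{eq:beta2-decay-hsigma}. At $T_{\rm conv}$ we have $q=1/r$, giving $\beta_1^*-\beta_1\le C/r$ and $\beta_2\lesssim 1/d$.

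\textbf{Step 2: the early-time and integral statements.} Since $q(0)\asymp d/r$ and $\frac{\rd}{\rd t}\log q=-4\hsigma_2^2(1-\beta_1)+O(\log d/r)$ with $1-\beta_1\in[c,1]$, we get $T_{\rm conv}=\Theta(\log d)$. While $q\ge r^3$, i.e. for $t$ up to $T_{\rm conv}-O(\log r)\ge(1-\delta)T_{\rm conv}$ (using $\delta=\omega(\log r/\log d)$), the identity $s\approx r$ together with $q$ large forces $\beta_1\lesssim 1/(q)\lesssim r^{-3}$. During this window $1-\beta_1=1-O(r^{-3})$, so $\log q$ decreases at rate $4\hsigma_2^2(1+o(1))$, and $\beta_2$ stays $\asymp\beta_2(0)$ because its bracket is $G-\hsigma_2^2+\ldots$. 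Here the drift of $\log \beta_2$ equals the drift of $\log q$ plus that of $\log\beta_1$, and $\beta_1$ grows like $e^{4\hsigma_2^2 t}$ while $\beta_2(d-r)\approx r$ is pinned. Finally, \eqref{eq:int-final-hsigma} follows from the exact identity at $t=T_{\rm conv}$: $\log(q(0)/q(T_{\rm conv}))=\log d+O(1)$, the $\beta_2$ contribution integrates to $O(1)$ by exponential decay, and the error term $Ct\log d/r=O(\log^2 d/r)=O(1)$ under $r\gtrsim\log^2 d$.

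The main obstacle is Step 1: making the quasi-static control of $G$ rigorous with only $O(\log d/r)$ error. This requires a fast/slow separation (fast rate $\Theta(r)$ for $s$ versus $O(1)$ for $q$). I would first prove a differential inequality for $G$ of the form
\[
\dot G\le -4sG+O(r\cdot\log d/r)+O(|\dot q|)
\]
and apply Gronwall on the fast scale.
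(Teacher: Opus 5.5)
Your proposal is correct in substance, but you obtain the central estimate \eqref{eq:beta1-gap-hsigma} by a different route than the paper.

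The paper never touches the total mass. It substitutes $y=1/\beta_1-1/\beta_1^*$, which makes the $\beta_1$ equation exactly linear: $\dot y=-4A\beta_1^*y+4Aq-4\varepsilon_1(y+1/\beta_1^*)$ with $A=\hsigma_0^2r+\hsigma_2^2$. A single variation-of-constants estimate then gives $y\le Cq+C\log d/r^2$. It uses $y(0)\asymp q(0)$ for the initial term, and $q(u)\le q(t)e^{M(t-u)}$ for the forcing, with $M=O(1)$ bounding $|\tfrac{d}{dt}\log q|$. The gap bound follows from $\beta_1^*-\beta_1=\beta_1\beta_1^*y\le y$.

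You instead control the shared bracket $G=A\beta_1^*-s$, a Riccati/logistic variable relaxing at rate $s\asymp r$. You then read off $\beta_1=s/(A(1+q))$ and $\beta_2=(s/B)\,q/(1+q)$, with $B=\hsigma_0^2(d-r)$. This costs more: you must keep $s$ of order $r$ from both sides, absorb an $O(\log r/r)$ transient, and run a quasi-static comparison for a nonlinear equation. The transient is unavoidable because the hypotheses give $s(0)\asymp r$ but not $G(0)$ small. The linear $y$-equation needs none of this.

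In return, your route exposes the mass-reallocation mechanism directly. Your two identities also give the intermediate-time claims at once: $\beta_1\lesssim 1/q\le r^{-3}$ and $\beta_2\asymp r/d$ while $q\ge r^3$. The paper instead has to rerun variation of constants on the lower inequality to get $y\asymp q$. You also state explicitly the condition $\log^2 d/r=O(1)$ needed for \eqref{eq:int-final-hsigma}, which the paper uses silently.

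Two things in the sketch need fixing.

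\textbf{The $G$-inequality.} As written, $\dot G\le-4sG+O(\log d)+O(|\dot q|)$ is not right for $G$ itself. Exactly, $\dot G=-4sG+4\hsigma_2^2(1-\beta_1+\beta_2)\,s\,\tfrac{q}{1+q}-4A\beta_1\varepsilon_1-4B\beta_2\varepsilon_2$. At $q\approx1$ the middle term is of order $r$, which is much larger than $\log d+|\dot q|$. Use instead $\dot G\le-4sG+O(\log d)+O\bigl(r\,\tfrac{q}{1+q}\bigr)$. Gronwall on the fast scale, together with $q(u)\le q(t)e^{M(t-u)}$ (the paper's device), then yields your bound $G\lesssim\log d/r+q(t)/(1+q(t))$. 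An $O(|\dot q|)$ forcing is appropriate only for the deviation of $G$ from its quasi-static value.

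\textbf{Nonnegativity of the gap.} The statement requires $0\le\beta_1^*-\beta_1$, not merely $\beta_1\le\beta_1^*+O(\log d/r^2)$. This is a one-line boundary check, not the bulk of the work. For $t\le T_{\mathrm{conv}}$ one has $q\ge1/r$, so at $\beta_1=\beta_1^*$ the $\beta_1$-bracket equals $-Aq\beta_1^*+\varepsilon_1\le-\hsigma_0^2\beta_1^*+K_0\log d/r<0$. This uses $\beta_1^*>1/2$, which follows from $\lambda_0<\hsigma_0^2$.

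Finally, $s(0)\asymp r$ already follows from the hypothesis $\beta_2(0)\asymp r/d$, so there is no need to appeal to Stage~1.1.
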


\subsubsection{Omitted proofs for Stage 1.2}\label{sec: stage 1.2 proofs}
We now present the omitted proofs in this stage.

We first prove the induction hypothesis for this stage. The proof mostly uses the dynamics of $\vv$ derived under IH and show that it remains.
\begin{proof}[Proof of \cref{lem: stage 1.2 IH}]
    For simplicity in the proof, we drop $\sim$ and write $\tldv$ as $v$.

    Recall from Lemma~\ref{lem: stage 1.2 convergence of 2nd order} that under induction hypothesis within time $\hT_{12}=\BigO{\log d}$ we have $\tldE w_i^2 \ge 1-\BigTheta{\frac{\log d}{r}}$ for all $i\in[r]$.
    
    In the following we argue $\tldT_{12}=\min\{T_{dir,12},T_{norm,12},T_{reg,12},T_{12},C\log d \}= T_{12}$, which would imply the desired result $T_{12}\le\min\{T_{dir,12},T_{norm,12},T_{reg,12}\}$ as well as $T_{12} \le \hT_{12}=\BigO{\log d}$.

    To prove the above claim, assume towards contradiction that  $\tldT_{12}$ equals one of $T_{dir,12},T_{norm,12},T_{reg,12}$ ($T_{12}$ will not be $C\log d$ as we can take constant $C$ to be large enough). 

    \paragraph{If $\tldT_{12}=T_{dir,12}$}
    Recall from Lemma~\ref{lem: stage 1.2 dynamic} we have
    \begin{equation}\label{eq: stage 1.2 IH dir}
        \begin{aligned}
            \frac{\rd v_i}{\rd t}
        =& v_i \left(
                2\hsigma_0^2 \left(
                r - \tldE[ \norm{\vw}_2^2 ]\right) 
                + 2\hsigma_{2}^2 \left(
                    \ind_{i \le r}
                    - \tldE\left[ w_i^2 \right]
                \right)
                -\lambda
                + \sum_{j\ge 2} \hsigma_{2j}^2 2j \btldv_i^{2j-2}\ind_{i\le r}
                \pm \BigO{\frac{1}{r}}
            \right)
        \end{aligned}
    \end{equation}

    Hence, for $t\le \tldT_{12}$, we have
    
    Case 1: $i,j\in[r]$.
    When $\tldE w_i^2 \le 1/r^2$ for all $i\le r$ (which takes $\BigO{\log d}$ time, can be seen from the proof of \cref{lem: stage 1.2 0th 2nd order dynamic}. This is the most of the time in this stage), we have 
    \begin{align*}
        \frac{\rd }{\rd t}\frac{v_i^2}{v_j^2}
        = \frac{v_i^2}{v_j^2} \left(
            \frac{1}{v_i^2}\frac{\rd v_i^2}{\rd t}
            - \frac{1}{v_j^2}\frac{\rd v_j^2}{\rd t}
            \right)
        =& \pm  \frac{v_i^2}{v_j^2} \cdot 
            4\hsigma_2^2
                \BigO{\frac{1}{r}}
    \end{align*}
    where we use $\sum_{j\ge 2}\hsigma_{2j}^2 2j\lesssim 1$, \cref{lem: stage 1.2 IH implication}\ref{item: stage 1.2 IH implicatino dir of neuron}, the fact of $\bv_i^2 = \frac{\norm{\vvler}^2}{\norm{\vv}^2}(\bbvvler)_i^2 = \Theta(1) \frac{\tldE\norm{\vwler}^2}{\tldE \norm{\vw}^2}(\bbvvler)_i^2$. 
    Thus, we have $\frac{v_i^2}{v_j^2} - 1 \lesssim \frac{\log d}{r}$.

    For the rest of time, it takes $\BigO{\log r}$ time, we have
    \begin{align*}
        \frac{\rd }{\rd t}\frac{v_i^2}{v_j^2}
        = \frac{v_i^2}{v_j^2} \left(
            \frac{1}{v_i^2}\frac{\rd v_i^2}{\rd t}
            - \frac{1}{v_j^2}\frac{\rd v_j^2}{\rd t}
            \right)
        =& \frac{v_i^2}{v_j^2} \cdot 
            4\hsigma_2^2\left(
                \bv_i^2 - \bv_j^2
                \pm \BigO{\frac{1}{r}}
            \right)
        = \frac{v_i^2}{v_j^2} \cdot 
            4\hsigma_2^2\left(
                \bv_j^2(\frac{\bv_i^2}{\bv_j^2}-1)
                +\BigO{\frac{1}{r}}
            \right),
    \end{align*}
    which implies (assume $\frac{v_i^2}{v_j^2}>
    1$)
    \begin{align*}
        \frac{\rd }{\rd t}\frac{\bv_i^2}{\bv_j^2} - 1
        \lesssim \frac{\log d}{r}\left(
                \frac{\bv_i^2}{\bv_j^2}-1
                \right)
                +\BigO{\frac{1}{r}}
    \end{align*}
     Thus, this implies $\frac{v_i^2}{v_j^2} - 1 \lesssim \frac{\log d}{r}$ since $r>\log m \log r$ and total time for this part is $\BigO{\log r}$.

    Case 2: $i,j\in[d]/[r]$. This follows the same argument. We omit for simplicity.

    This implies $\delta_{dir,12}\lesssim \frac{\log d}{r}$. We know this is a contraction by setting $\delta_{dir,12}=\BigTheta{\frac{\log d}{r}}$ with large enough hidden constant.

    \paragraph{If $\tldT_{12}=T_{norm,12}$}
    Instead of directly proving, we first argue the ratio between different neuron's $\norm{\vvler}$ and $\norm{\vvgtr}$ does not change much.

    We have from \cref{lem: stage 1.2 dynamic} that (using \cref{lem: stage 1.2 0th 2nd order dynamic} to approximate $\tldE w_i$ with $\tldE w_1^2$)
    \begin{align*}
        \frac{\rd}{\rd t}\norm{\vvler}_2^2
        = 2\sum_{i\le r}v_i \frac{\rd v_i}{\rd t}
        =& 2\norm{\vvler}_2^2 \left(
            2\hsigma_0^2\left(r-\tldE\norm{\vw}_2^2\right)
            + 2\hsigma_2^2\sum_{i\le r}(\bbvvler)_i^2 
                \left( 1 - \tldE\left[w_i^2\right]\right)
            -\lambda
            \pm \BigO{\frac{1}{r}}
            \right),\\
        \myeq{a}& 2\norm{\vvler}_2^2 \left(
            2\hsigma_0^2\left(r-\tldE\norm{\vw}_2^2\right)
            + 2\hsigma_2^2 
                \left( 1 - \tldE\left[w_1^2\right]\right)
            -\lambda
            \pm \BigO{\frac{\log d}{r}\tldE w_1^2 + \frac{1}{r}} 
            \right),\\
        \frac{\rd}{\rd t}\norm{\vvgtr}_2^2
        = 2\sum_{i> r}v_i \frac{\rd v_i}{\rd t}
        =& 2\norm{\vvgtr}_2^2 \left(
            2\hsigma_0^2\left(r-\tldE\norm{\vw}_2^2\right)
            -\lambda
            \pm \BigO{\frac{1}{r}}
            \right),
    \end{align*}
    where (a) we follow the same argument as in \eqref{eq: stage 1.2 IH dir}.

    Therefore, for any neuron $\vv,\vu$ we have
    \begin{align*}
        \frac{\rd }{\rd t}\frac{\norm{\vvler}}{\norm{\vuler}}
        =& \frac{\norm{\vvler}}{\norm{\vuler}} \left(
            \frac{1}{\norm{\vvler}}\frac{\rd \norm{\vvler}}{\rd t}
            - \frac{1}{\norm{\vuler}}\frac{\rd \norm{\vuler}}{\rd t}
            \right)
        = \pm \BigO{\frac{\log d}{r}\tldE w_1^2+\frac{1}{r}}\frac{\norm{\vvler}}{\norm{\vuler}}\\
        \frac{\rd }{\rd t}\frac{\norm{\vvgtr}}{\norm{\vugtr}}
        =& \frac{\norm{\vvgtr}}{\norm{\vugtr}} \left(
            \frac{1}{\norm{\vvgtr}}\frac{\rd \norm{\vvgtr}}{\rd t}
            - \frac{1}{\norm{\vugtr}}\frac{\rd \norm{\vugtr}}{\rd t}
            \right)
        = \pm \BigO{\frac{1}{r}}\frac{\norm{\vvgtr}}{\norm{\vugtr}}.
    \end{align*}

    For $\frac{\norm{\vvler}}{\norm{\vuler}}$, we follow the same arguments in "If $\tldT_{12}=T_{dir,12}$" case above. For $\frac{\norm{\vvgtr}}{\norm{\vugtr}}$, it follows from $\hT_{12}\lesssim \log d$.

    We can see this is a contraction by setting $\delta_{norm,12}=\BigTheta{\frac{\log d\log r}{r}}$ with large enough hidden constant.

    \paragraph{If $\tldT_{12}=T_{reg,12}$}
    From Lemma~\ref{lem: stage 1.2 0th 2nd order dynamic} we know $\tldalpha\le r+1.01 < r+10$. 
    
    For $|\bv_i|$ and $i\in[r]$, we have $\frac{\norm{\vvler}^2}{\norm{\vv}^2}
        \myeq{item\ref{item: stage 1.2 IH neuron norm bal}} (1\pm 5\delta_{norm,12})\frac{\tldE \norm{\vwler}^2}{\tldE \norm{\vw}^2} \gtrsim \frac{r}{d}$ from \cref{lem: stage 1.2 0th 2nd order dynamic}. Then $\bv_i^2 = \frac{\norm{\vvler}^2}{\norm{\vv}^2} (\bbvvler)_i^2
        \gtrsim \frac{r}{d} (\bbvvler^\btt{0})_i^2 \ge \iota^2$ using \cref{lem: stage 1.2 IH implication} and \cref{lem: init}.

    For $i>r$, noting $\tldE w_i^2 \ge \frac{r-\sum_{i\le r}\tldE w_i^2}{d} \gtrsim \log d/d$. Then follow the same argument we can show $|\bv_i|\ge \iota$.

    This leads to the contradiction.
    
    \paragraph{Summary}
    Now combining all above we know $\tldT_{12}=T_{12}\le \hT_{12}\lesssim \log d$.
\end{proof}

This gives some implications of IH. The proof mostly directly follows from IH.
\begin{proof}[Proof of \cref{lem: stage 1.2 IH implication}]
    We show one by one. For simplicity in the proof, we drop $\sim$ and write $\tldv$ as $v$.

    \paragraph{item (a)}
    By \cref{lem: stage 1.2 IH}\ref{item: stage 1.2 IH neuron norm bal} and \cref{lem: init} we know for any neuron $\vv,\vu$ we have $\frac{\norm{\vv}_2}{\norm{\vu}_2} \le 1.01$. Thus, for $T_{11} \le t\le \tldT_{12}$ and any neuron $\vv$, we have $\norm{\vv}_2^2 \le 1.01^2\tldE\norm{\vw}_2^2 = 1.01^2\tldalpha \le 1.01^2 2r$, where we use \cref{lem: stage 1.2 IH}\ref{item: stage 1.2 IH reg}.

    \paragraph{item (b)(c)}
    By \cref{lem: stage 1.2 IH}\ref{item: stage 1.2 IH neuron dir bal} we know for any neuron $\vv$ and $i\in[r]$, we have 
    \[
        (\bbvvler)_i^2 
        = \frac{v_i^2}{\norm{\vvler}_2^2}
        = \frac{v_i^{\btt{0}2}}{\norm{\vvler^\btt{0}}_2^2} (1\pm 3\delta_{dir,12})
        = (\bbvvler^\btt{0})_i^2 (1\pm 3\delta_{dir,12})
        \le 1.01 (\bbvvler^\btt{0})_i^2
        \lesssim \frac{\log d}{r}.
    \]
    Thus, we also have $\bv_i^2 \le \frac{v_i^2}{\norm{\vvler}_2^2} \le 1.01 (\bbvvler^\btt{0})_i^2 \lesssim \frac{\log d}{r}$
    and $\sum_{i\le r}\bv_i^4 \lesssim \sum_{i\le r}(\bbvvler^\btt{0})_i^4
    \lesssim \frac{1}{r}$.
    We use the same argument for $i>r$ and get the desired bounds.

    \paragraph{item (d)}

    For $i,j>r$, by the definition of $\tldf$ that it is invariant to the permutation in the last $d-r$ coordinates, we know for $i,j> r$, $\tldE w_i^2=\tldE w_j^2$. Thus, $\frac{\tldE w_i^2}{\sum_{i> r}\tldE w_i^2} =\frac{1}{d-r}$. Hence, $\tldE w_i^2 = \frac{1}{d-r}\sum_{i> r}\tldE w_i^2 \le \frac{2r}{d-r}$ for $i> r$.
\end{proof}

This gives the dynamic of $\vv$. The proof uses the exact gradient formula in \cref{claim: tldw dynamic} and treat the high order term as error term using IH.
\begin{proof}[Proof of \cref{lem: stage 1.2 dynamic}]
    For simplicity in the proof, we drop $\sim$ and write $\tldv$ as $v$.
    From \eqref{eq: dynamic of tldw}, we know
    \begin{align*}
        \frac{\rd v_i}{\rd t}
        =& - \sum_{j\ge 0} [\nabla_{2j,\vv}]_i - \lambda v_i, 
    \end{align*}
    where $\nabla_{2j,\vv}$ represents the gradient of $2j$-th order loss
    \begin{align*}
        -[\nabla_{0,\vv}]_i 
        =& 2\hsigma_0^2 \left(
                \sum_{j=1}^r a_j^* \norm{\vw_j^*}_2
                - \tldE[ \norm{\vw}_2^2 ]\right) v_i
        = 2\hsigma_0^2\left(r-\tldE\norm{\vw}_2^2\right) v_i,\\
        -[\nabla_{2,\vv}]_i 
        =& \hsigma_{2}^2 \left(
            2 \ind_{i \le r}
            - 2 \tldE\left[ w_i^2 \right]
            \right) v_i,\\
        -[\nabla_{2j,\vv}]_i 
        =& \hsigma_{2j}^2 \Bigg(
            2j \bv_i^{2j-2} \ind_{i \le r}
            - 2j \tldE\left[
                \norm{\vw}_2^2 \langle\bvw,\bvv\rangle^{2j-1} \bw_i/\bv_i
                \right]\\
            &- (2j-2) \left(
            \sum_{l\le r}\bv_l^{2j} - 
            \tldE \left[\norm{\vw}_2^2\langle \bvw,\bvv\rangle^{2j}\right]
            \right)\Bigg) v_i,\quad \text{for $j\ge 2$.}
    \end{align*}    

    \paragraph{Dynamic of $v_i$.}
    We first bound each term in $\nabla_{2j,\vv}$ for $j\ge 2$. The term $2j \bv_i^{2j-2} \ind_{i \le r}$ is left as is. We have
    \begin{align*}
        \sum_{l\le r}\bv_l^{2j}
        \le \sum_{j\le r}\bv_j^{4}
        \mylesim{a} \frac{1}{r},
    \end{align*}
    where (a) we use \cref{lem: stage 1.2 IH implication}\ref{item: stage 1.2 IH implication reg}.

    For $\tldE \left[\norm{\vw}_2^2\langle \bvw,\bvv\rangle^{2j}\right]$ and all $j\ge 2$, we have by \cref{lem: high order moment bound}
    \begin{align*}
        \tldE \left[\norm{\vw}_2^2\langle \bvw,\bvv\rangle^{2j}\right]
        \le& \tldE \left[\norm{\vw}_2^2\langle \bvw,\bvv\rangle^{4}\right]
        \lesssim \frac{1}{r}.
    \end{align*}

    The remaining term is $\tldE [\norm{\vw}_2^2\langle\bvw,\bvv\rangle^{2j-1}\bw_1/\bv_1]$ (WLOG, consider the case $i=1$).

    For $2\le j\le r/2$, by \cref{lem: high order moment bound} we have
    \begin{align*}
        \tldE [\norm{\vw}_2^2\langle\bvw,\bvv\rangle^{2j-1}\bw_1/\bv_1]
        \lesssim \frac{1}{r}.
    \end{align*}

    For $j > r/2$, by \cref{lem: high order moment bound} we have
    \begin{align*}
        \tldE [\norm{\vw}_2^2\langle\bvw,\bvv\rangle^{2j-1}\bw_1/\bv_1]
        \le \frac{1}{|\bv_1|}\tldE [\norm{\vw}_2^2\langle\bvw,\bvv\rangle^{2j-2}]
        \lesssim& \frac{r}{|\bv_1|}\left(\frac{2}{r}\right)^{r/2}
        \le \frac{1}{|\bv_1|}\iota^2
        \le \iota
        \le \frac{1}{r^2},
    \end{align*}
    where we use \cref{lem: stage 1.2 IH}\ref{item: stage 1.2 IH reg} at the end.
    
    Summing above over $j$ and use Claim~\ref{claim: hermite coeff} that $\sum_j 2j\hsigma_{2j}^2 \lesssim 1$ we get the desired bound.

    \paragraph{Dynamic of $\norm{\vv}$ and $\norm{\vvler}$.}
    Note that
    \begin{align*}
        \frac{\rd}{\rd t}\norm{\vv}_2^2
        = 2\sum_i v_i \frac{\rd v_i}{\rd t}
    \end{align*}
    so we can apply item 1. 
    When plugging in, further notice that 
    \begin{align*}
        \sum_{i\le r} \sum_{j\ge 2} \hsigma_{2j}^2 2j \btldv_i^{2j}
        \le \sum_{j\ge 2} \hsigma_{2j}^2 2j \sum_{i\le r}\btldv_i^{4}
        \lesssim \sum_{j\ge 2} \hsigma_{2j}^2 2j \sum_{i\le r}(\bbvvler)_i^{4} \frac{\norm{\vvler}^2}{\norm{\vv}^2}
        \mylesim{a} \frac{1}{r}\frac{\norm{\vvler}^2}{\norm{\vv}^2},
    \end{align*}
    where (a) we use \cref{claim: hermite coeff} that $\sum_j 2j\hsigma_{2j}^2 \lesssim 1$ and \cref{lem: stage 1.2 IH implication}\ref{item: stage 1.2 IH implication reg}. This gets the final form. The same gives $\frac{\rd}{\rd t}\norm{\vvler}$.

\end{proof}

The proof uses the dynamic of $\vv$ to derive the dynamic of these population terms.
\begin{proof}[Proof of \cref{lem: stage 1.2 0th 2nd order dynamic}]
    We show one by one.
    \paragraph{Dynamic and bound on $\tldE w_i^2$}
    From Lemma~\ref{lem: stage 1.2 dynamic} and using \cref{lem: stage 1.2 IH implication}\ref{item: stage 1.2 IH implicatino dir of neuron} we know for each $i\in[d]$
    \begin{align*}
        \frac{\rd}{\rd t}\tldE w_i^2
        = 2 \tldE\left[ v_i \frac{\rd v_i}{\rd t} \right]
        =&  2\tldE[w_i^2] \left(
              2\hsigma_0^2 \left(r - \frac{\lambda}{2\hsigma_0^2} - \tldE[ \norm{\vw}_2^2 ]\right)
            + 2\hsigma_{2}^2 \left(
                \ind_{i \le r}
                - \tldE\left[ w_i^2 \right]
            \right)
            \pm \BigO{\frac{1}{r}}\right) \\
        =&   4\hsigma_2^2 \tldE[w_i^2] \Bigg(
                \frac{\hsigma_0^2}{\hsigma_2^2}\left(r - \frac{\lambda}{2\hsigma_0^2} - \tldE[ \norm{\vw}_2^2 ]\right)
                + (\ind_{i \le r} - \tldE\left[ w_i^2 \right])
                \pm \BigO{\frac{1}{r}}
            \Bigg).
    \end{align*}

    For $i<r$ and $j\ge r$, we have
    \begin{align*}
        \frac{\rd}{\rd t} \frac{\tldE w_i^2}{\tldE w_j^2}
        =& 4\hsigma_2^2 \frac{\tldE w_i^2}{\tldE w_j^2} \Bigg(
                (1 - \tldE\left[ w_i^2 \right])
                + \tldE\left[ w_j^2 \right]
                \pm \BigO{\frac{1}{r}}
            \Bigg)
        \ge 4\hsigma_2^2 \frac{\tldE w_i^2}{\tldE w_j^2} \Bigg(
                1 - \tldE\left[ w_i^2 \right]
                - \BigO{\frac{1}{r}}
            \Bigg)
        \ge 0.
    \end{align*}
    This implies $\tldE w_i^2\gtrsim \frac{\tldalpha}{d}\gtrsim r/d$ for $i<r$ for all $t\le \tldT_{12}$.

    For $i,j\le r$, we have (WLOG, let $i=1$ and $j=2$)
    \begin{align*}
        \frac{\rd}{\rd t}\frac{\tldE w_1^2}{\tldE w_2^2}
        =& \frac{\tldE w_1^2}{\tldE w_2^2} \left(
                4\hsigma_{2}^2 \left(
                    \tldE\left[ w_2^2 \right] - \tldE\left[ w_1^2 \right]
                    \right)
                \pm \BigO{\frac{1}{r}}
           \right).
    \end{align*}

    Moreover, this implies an bound on $\frac{\tldE w_i^2}{\tldE w_j^2}$. WLOG, assume $\frac{\tldE w_i^2}{\tldE w_j^2}\ge 1$ (otherwise we can consider $\frac{\tldE w_j^2}{\tldE w_i^2}$). From above we have
    \begin{align*}
        \frac{\rd}{\rd t}\frac{\tldE w_i^2}{\tldE w_j^2}
        \lesssim& \frac{1}{r} \frac{\tldE w_i^2}{\tldE w_j^2}.
    \end{align*}
    Since this stage ends in time $t\le \tldT_{12}\le \BigO{\log d}$, this leads to
    \begin{align*}
        \frac{\tldE w_i^2}{\tldE w_j^2}
        \le \frac{\tldE w_i^{\btt{T_{11}}2}}{\tldE w_j^{\btt{T_{11}}2}}
            \left(1+\BigO{\frac{\log d}{r}}\right)
        \le 1+\BigO{\frac{\log d}{r}}
    \end{align*}
    where last one we use \cref{lem: stage 1.1 main}.

    \paragraph{Dynamic and bound on $\tldalpha$}
    From Lemma~\ref{lem: stage 1.2 dynamic} we know
    \begin{align*}
        \frac{\rd \tldalpha}{\rd t}
        =& \frac{\rd}{\rd t}\tldE \norm{\vv}_2^2
        =  4\hsigma_0^2 \tldalpha\left(r-\frac{\lambda}{2\hsigma_0^2}-\tldalpha \pm \BigO{\frac{1}{r}}\right)
            + 4\hsigma_2^2\left(\sum_{i\in[r]}\tldE[w_i^2] 
                (1 - \tldE\left[w_i^2\right])\right)\\
        =& 4\hsigma_0^2 \tldalpha\left(r-\frac{\lambda}{2\hsigma_0^2}-\tldalpha 
            + \frac{\hsigma_2^2}{\hsigma_0^2}
                \sum_{i\in[r]}\frac{\tldE[w_i^2]}{\tldE\norm{\vw}_2^2} 
                (1  - \tldE\left[w_i^2\right])
            \pm \BigO{\frac{1}{r}}\right).
    \end{align*}
    Moreover, we can see
    \begin{align*}
        \frac{\rd \tldalpha}{\rd t}
        \ge& 4\hsigma_0^2 \tldalpha\left(r-\frac{\lambda}{2\hsigma_0^2}- \tldalpha 
            - \BigO{\frac{1}{r}}\right),
    \end{align*}
    which implies $r - \frac{\lambda}{2\hsigma_0^2} - \tldalpha \lesssim \frac{1}{r}$.
    Similarly, we have
    \begin{align*}
        \frac{\rd \tldalpha}{\rd t}
        \le& 4\hsigma_0^2 \tldalpha\left(r-\frac{\lambda}{2\hsigma_0^2}-\tldalpha 
            +\frac{\hsigma_2^2}{\hsigma_0^2} \frac{r}{4\tldalpha}
            +\BigO{\frac{1}{r}}\right)
        \le 4\hsigma_0^2 \tldalpha\left(r-\frac{\lambda}{2\hsigma_0^2}-\tldalpha 
            +\frac{\hsigma_2^2}{\hsigma_0^2}\right),
    \end{align*}
    which implies $r - \frac{\lambda}{2\hsigma_0^2} - \tldalpha \ge -\frac{\hsigma_2^2}{\hsigma_0^2}$. In above we use $\tldalpha\ge r-\frac{\lambda}{2\hsigma_0^2}-\BigO{1/r}\ge 2r/3$.
\end{proof}

This proof explicitly estimates the convergence time of the ODE given.
\begin{proof}[Proof of \cref{prop: stage12-ode}]
    We divide the proof into following steps.
    
    \medskip
    \noindent\textbf{Step 1: Basic Regularities.}
    
    It is not hard to see $0<\beta_1\le\beta_1^*,0<\beta_2\le 1$. We omit the details.
    Consequently,
    \begin{equation}
    1-\beta_1+\beta_2
    \ge
    1-\beta_1
    \ge
    1-\beta_1^*
    =
    \frac{\lambda}{2\hsigma_0^2r+2\hsigma_2^2}
    =
    \frac{\lambda_0}{2\hsigma_0^2+2\hsigma_2^2/r}.
    \label{eq:one-minus-beta-lower-hsigma}
    \end{equation}
    
    \medskip
    \noindent\textbf{Step 2: Exponential decay of the scaled ratio \(q\).}
    A direct computation from \eqref{eq:beta1-clean-hsigma}--\eqref{eq:beta2-clean-hsigma}
    shows that
    \begin{equation}
    \frac{d}{dt}\log q
    =
    \frac{\dot\beta_2}{\beta_2}-\frac{\dot\beta_1}{\beta_1}
    =
    -4\hsigma_2^2(1-\beta_1+\beta_2)
    +4(\varepsilon_2-\varepsilon_1)
    \le 
    -4\hsigma_2^2\frac{\lambda_0}{2\hsigma_0^2+2\hsigma_2^2/r}
    +
    4K_0\frac{\log d}{r}
    +
    4K_0\frac1r
    \le -\rho.
    \label{eq:logq-exact-hsigma}
    \end{equation}
    where \(\rho>0\) is a constant.
    Therefore
    \[
    q(t)\le q(0)e^{-\rho t}
    \lesssim \frac{\hsigma_0^2(d-r)}{\hsigma_0^2r+\hsigma_2^2}
    \frac{\beta_2(0)}{\beta_1(0)}e^{-\rho t} 
    \asymp
    \frac{d}{r}e^{-\rho t},
    \]
    this proves \eqref{eq:q-decay-hsigma}.
    
    We will also need a lower bound on the rate of variation of \(q\).
    Since \(0<\beta_1\le \beta_1^*<1\) and \(0<\beta_2\le 1\), we have
    \(1-\beta_1+\beta_2\le 2\). Hence by \eqref{eq:logq-exact-hsigma},
    \[
    \frac{d}{dt}\log q
    \ge
    -8\hsigma_2^2
    -4|\varepsilon_1|
    -4|\varepsilon_2|
    \ge -M
    \]
    for some constant \(M>0\). Therefore, for every \(0\le s\le t\le T_{\mathrm{conv}}\),
    \begin{equation}
    q(s)\le q(t)e^{M(t-s)}.
    \label{eq:q-two-sided-speed-hsigma}
    \end{equation}
    
    \medskip
    \noindent\textbf{Step 3: Control of \(\beta_1^*-\beta_1\).}
    Define
    $y(t):=\frac1{\beta_1(t)}-\frac1{\beta_1^*}$.
    Since $\beta_1\le\beta_1^*$, one has \(y(t)\ge 0\) on
    \([0,T_{\mathrm{conv}}]\).
    
    Using
    $\hsigma_0^2(d-r)\beta_2 =
    (\hsigma_0^2r+\hsigma_2^2)q\beta_1$,
    the equation \eqref{eq:beta1-clean-hsigma} becomes
    $\frac{d}{dt}\beta_1
    =
    4\beta_1\Bigl(
    (\hsigma_0^2r+\hsigma_2^2)(\beta_1^*-(1+q)\beta_1)
    +\varepsilon_1
    \Bigr)$.
    Hence
    \begin{equation}
    \frac{d}{dt}y
    =
    -4(\hsigma_0^2r+\hsigma_2^2)\beta_1^*\,y
    +
    4(\hsigma_0^2r+\hsigma_2^2)q
    -
    4\varepsilon_1\Bigl(y+\frac1{\beta_1^*}\Bigr)
    \le
    -\alpha y
    +
    4(\hsigma_0^2r+\hsigma_2^2)q
    +
    C\frac{\log d}{r},
    \qquad
    \alpha:=(\hsigma_0^2r+\hsigma_2^2)\beta_1^*\asymp r.
    \label{eq:y-exact-hsigma}
    \end{equation}
    where we use \(y\ge 0\), \(|\varepsilon_1|\le K_0\frac{\log d}{r}\), and
    \((\hsigma_0^2r+\hsigma_2^2)\beta_1^*\asymp r\), for all sufficiently large
    \(d\) so we may absorb the term \(2|\varepsilon_1|y\) into the damping term.
    By variation of constants,
    \begin{align}
    y(t)
    &\le
    e^{-\alpha t}y(0)
    +
    4(\hsigma_0^2r+\hsigma_2^2)
    \int_0^t e^{-\alpha(t-s)}q(s)\,ds
    +
    C\frac{\log d}{r}\int_0^t e^{-\alpha(t-s)}\,ds.
    \label{eq:y-voc-hsigma}
    \end{align}
    We estimate the three terms on the right-hand side.
    
    First, by \eqref{eq:q-two-sided-speed-hsigma},
    $q(s)\le q(t)e^{M(t-s)}$,
    so
    \begin{align*}
    4(\hsigma_0^2r+\hsigma_2^2)
    \int_0^t e^{-\alpha(t-s)}q(s)\,ds
    &\le
    4(\hsigma_0^2r+\hsigma_2^2)q(t)
    \int_0^t e^{-(\alpha-M)(t-s)}\,ds
    \le
    \frac{4(\hsigma_0^2r+\hsigma_2^2)}{\alpha-M}q(t)
    \le
    Cq(t),
    \end{align*}
    since \(\alpha\asymp r\) while \(M=O(1)\).
    
    Second, \(y(0)\asymp d/r\asymp q(0)\), while
    \eqref{eq:q-two-sided-speed-hsigma} implies
    \(q(t)\ge q(0)e^{-Mt}\). Therefore
    \[
    e^{-\alpha t}y(0)
    \le
    Ce^{-(\alpha-M)t}q(t)
    \le
    Cq(t).
    \]
    
    Third,
    \[
    C\frac{\log d}{r}\int_0^t e^{-\alpha(t-s)}\,ds
    \le
    C\frac{\log d}{r}\cdot \frac1\alpha
    \le
    C\frac{\log d}{r^2}.
    \]
    
    Substituting these estimates into \eqref{eq:y-voc-hsigma} yields
    $y(t)\le Cq(t)+C\frac{\log d}{r^2}$.
    Since
    $
    \beta_1^*-\beta_1
    =
    \beta_1\beta_1^*
    \left(\frac1{\beta_1}-\frac1{\beta_1^*}\right)
    =
    \beta_1\beta_1^* y
    \le y$,
    we obtain \eqref{eq:beta1-gap-hsigma}.
    
    \medskip
    \noindent\textbf{Step 4: Control of \(\beta_2\) and the convergence time.}
    By the definition of \(q\),
    \[
    \beta_2(t)
    =
    \frac{\hsigma_0^2r+\hsigma_2^2}{\hsigma_0^2(d-r)}\,q(t)\beta_1(t)
    \le C\frac{r}{d}q(t)\le Ce^{-\rho t},
    \]
    where we use \(\beta_1(t)\le 1\) and \(d\gg r\).
    This proves \eqref{eq:beta2-decay-hsigma}.
    
    Since \(q(T_{\mathrm{conv}})=r^{-1}\), \eqref{eq:q-decay-hsigma} gives
    $\frac1r=q(T_{\mathrm{conv}})
    \le
    C\frac{d}{r}e^{-\rho T_{\mathrm{conv}}}$,
    hence \(T_{\mathrm{conv}}=O(\log d)\).
    
    At time \(T_{\mathrm{conv}}\), we have \(q(T_{\mathrm{conv}})=r^{-1}\), so
    \eqref{eq:beta1-gap-hsigma} yields
    \[
    \beta_1^*-\beta_1(T_{\mathrm{conv}})
    \le
    \frac{C}{r}
    +
    C\frac{\log d}{r^2}
    \le
    \frac{C}{r}.
    \]
    Also,
    \[
    \beta_2(T_{\mathrm{conv}})
    =
    \frac{\hsigma_0^2r+2\hsigma_2^2}{\hsigma_0^2(d-r)}
    \,q(T_{\mathrm{conv}})\beta_1(T_{\mathrm{conv}})
    \le
    C\frac{r}{d}\cdot \frac1r
    =
    \frac{C}{d}.
    \]
    
    \medskip
    \noindent\textbf{Intermediate time before convergence.}
    Consider $\delta=\delta(d)\to0$ and $\delta\log d\gg \log r$. Define
    $t_\delta:=(1-\delta)T_{\rm conv}$.
    We claim that
    \[
    \beta_1(t_\delta)=o(1)\beta_1^*,
    \qquad
    \beta_2(t_\delta)\asymp \frac{r}{d}.
    \]
    
    Indeed, from Step 2, there exist constants \(0<c<C\) such that
    $\frac{d}{dt}\log q(t)=-\Theta(1)$ for $0\le t\le T_{\rm conv}$.
    Since $T_{\rm conv}=\Theta(\log d)$, we have
    \[
    \log q(t_\delta)
    \ge
    \log q(T_{\rm conv})+c\delta T_{\rm conv}
    =
    -\log r+\Theta(\delta\log d)
    \gg \log r,
    \]
    where we use the choice of \(\delta\). Hence $q(t_\delta)\to\infty$.
    
    It remains to translate this into a bound on \(\beta_1,\beta_2\). As in Step 3, recall $y(t):=\frac1{\beta_1(t)}-\frac1{\beta_1^*}$.
    The same variation-of-constants argument, applied also to the lower
    differential inequality, gives
    $y(t)\asymp q(t)+O\left(\frac{\log d}{r^2}\right)$
    for $0\le t\le T_{\rm conv}$.
    Therefore \(\log q(t_\delta)\gg \log r\) implies \(y(t_\delta)\gtrsim r^3\), and thus
    \[
    \frac{\beta_1(t_\delta)}{\beta_1^*}
    =
    \frac{1}{1+\beta_1^*y(t_\delta)}
    \lesssim \frac{1}{r^3},\qquad
    \beta_2(t_\delta)
    =
    \frac{\hsigma_0^2r+\hsigma_2^2}{\hsigma_0^2(d-r)}
    q(t_\delta)\beta_1(t_\delta)
    \asymp \frac{r}{d}.
    \]
    where we use \(q(t_\delta)\beta_1(t_\delta)\asymp y(t_\delta)\beta_1(t_\delta)\asymp 1\). 
    Thus by time \((1-\delta)T_{\rm conv}\), \(\beta_1\) is still
    \(\BigO{\frac{1}{r^3}}\beta_1^*\), while \(\beta_2\) has only changed by a constant factor.
    
    \medskip
    \noindent\textbf{Step 5: The integral bound and the sharp leading constant.}
    Integrating \eqref{eq:logq-exact-hsigma} from \(0\) to \(t\le T_{\mathrm{conv}}\),
    we obtain
    \begin{equation}
    4\hsigma_2^2
    \int_0^t(1-\beta_1(s)+\beta_2(s))\,ds
    =
    \log\frac{q(0)}{q(t)}
    +
    2\int_0^t(\varepsilon_2(s)-\varepsilon_1(s))\,ds.
    \label{eq:int-exact-hsigma}
    \end{equation}
    Therefore
    \begin{align*}
    \int_0^t (1-\beta_1(s))\,ds
    &=
    \int_0^t (1-\beta_1(s)+\beta_2(s))\,ds
    -
    \int_0^t \beta_2(s)\,ds
    \\
    &\le
    \frac{1}{4\hsigma_2^2}\log\frac{q(0)}{q(t)}
    +
    \frac{1}{2\hsigma_2^2}
    \int_0^t \bigl(|\varepsilon_1(s)|+|\varepsilon_2(s)|\bigr)\,ds
    \\
    &\le
    \frac{1}{4\hsigma_2^2}\log\frac{q(0)}{q(t)}
    +
    Ct\frac{\log d}{r},
    \end{align*}
    which proves \eqref{eq:int-any-time-hsigma}.
    
    Now take \(t=T_{\mathrm{conv}}\). Since \(q(T_{\mathrm{conv}})=r^{-1}\),
    \eqref{eq:int-exact-hsigma} yields
    \[
    \int_0^{T_{\mathrm{conv}}}(1-\beta_1(s))\,ds
    =
    \frac{1}{4\hsigma_2^2}\log\bigl(q(0)r\bigr)
    +
    O\!\left(T_{\mathrm{conv}}\frac{\log d}{r}\right)
    -
    \int_0^{T_{\mathrm{conv}}}\beta_2(s)\,ds
    \le \frac{1}{4\sigma_2^2}\log d + \BigO{1}
    \]
    where we use \eqref{eq:beta2-decay-hsigma}, \(T_{\mathrm{conv}}=O(\log d)\), and \(\log(q(0)r)=\log d+O(1)\). 
\end{proof}

\section{Stage 2: learn target directions (high-order term)}\label{sec: stage 2}
The goal of this section is to show the lemma below: we learn every ground-truth direction $\vw_i^*$, and decrease the loss to $\eps$ at the end. 
\begin{lemma}[Main result for Stage 2]\label{lem: stage 2 main}
    Within time $T_2-T_1\lesssim r/\log m$, we have
    loss $\L(\tldmW)\le \epsnm^{1/2}\le e^{-\log^2 r}$.
    On the other hand, for $t\le (1-\delta)T_2$, we have loss $\L(\tldmW)\ge c(\hsigma_0^2r^2/2 + \hsigma_2^2r/2) + (1\pm c)\hsigma_{\ge4}^2r/2$, where $c$ is a small constant and $\delta=1/\sqrt{\log d}=o(1)$.
\end{lemma}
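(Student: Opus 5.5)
I will split Stage~2 into the three substages sketched in \cref{sec: dynamics main-text}, each with its own induction hypothesis in the style of \cref{lem: stage 1.1 IH,lem: stage 1.2 IH}, and then chain them.

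\textbf{Stage~2.1 (direction finding).} Starting from the state at $T_1=T_{12}$ given by \cref{lem: stage 1.2 main}, I will define $\Spoti$ as in the main text. Using \cref{lem: init}\ref{item: init good pot bad neuron}, the sets $\Spoti$ are disjoint and small, and $\max_\vu(\bbvuler)_i^2\approx 2\log(2m)/r$. The induction hypothesis for this substage will assert four things: the 0th- and 2nd-order population quantities $\tldalpha$ and $\tldE w_i^2$ stay within $O(1/r)$ of their Stage~1.2 values and remain balanced across $i\le r$; dense neurons keep their in-subspace direction up to a $1\pm o(1)$ factor; the irrelevant mass remains $O(1/r)$; and the total mass of $\Spot$ is negligible, namely $m^{o(1)}/m$ times the norm.

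Under this hypothesis I will simplify \cref{claim: tldw dynamic}. For $p_i=(\bbvvler)_i^2$ of a potential neuron, the 0th and 2nd order terms cancel up to $O(1/r)$ corrections, and the leading term becomes $\dot p_i\approx 2(1-p_i)\sum_{j\ge2}2j\hsigma_{2j}^2p_i^j$. Since $p_i(T_1)\approx 2\log m/r$, solving this tensor-power ODE shows that the top neuron in each $\Spoti$ reaches $p_i\ge 1-\epsdir^2$ within time $O(r/\log m)$. Competing coordinates $p_k$, $k\ne i$, are controlled through the $0.98$ gap in \cref{lem: init}. The $\sum_{l\le r}\bv_l^{2j}$ and cross-moment terms would be bounded by $O(1/r)$ using \cref{lem: high order moment bound}, exactly as in the proof of \cref{lem: stage 1.2 dynamic}.

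\textbf{Stages~2.2 and 2.3 (norm fitting).} With aligned groups $G_i$ identified, I will track $\ha_i$ and $\hb_i$. First I will show that the balance $\ha_i\approx\ha_j$ and $\hb_i\approx\hb_j$ is preserved, via a log-ratio argument like the one for $\tldE w_i^2/\tldE w_j^2$ in \cref{lem: stage 1.2 0th 2nd order dynamic}. This reduces the system to the two-dimensional competitive ODE of \cref{sec: competitive ode main-text}. An analogue of \cref{prop: stage12-ode}, with $q=\hb/\ha$ and advantage $2\hsigma_{\ge4}^2(1-\ha)\ge c\lambda_0$, then gives $\log q$ decaying at a constant rate. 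Because $\ha_i$ starts at about $m^{-1+o(1)}$, $q$ starts polynomial in $m$, so reaching $\hb_i\le\epsnm$ takes $O(\log m+\log(1/\epsnm))=O(r/\log m)$ time, which is within budget since $r\log^2 r<\log^3 d$.

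The modified stepsize $\gamma$ is needed to accelerate the aligned neurons out of their tiny initial mass. I will handle it by treating the $\gamma$-window separately: there the aligned neurons' growth rate is multiplied by $m$, while all other neurons are unaffected. In Stage~2.3, once $\lambda=0$, the weight-decay bias vanishes, $\ha_i\to1$, and the loss from \eqref{eq:hermite-loss-main} drops to $\epsnm^{1/2}$. The loss lower bound for $t\le(1-\delta)T_2$ then follows because before this point $\ha_i=o(1)$ and $\hb_i$ remain spread out. Consequently $\norm{\tT_k-\tT_k^*}_F^2\ge(1-c)r$ for each $k\ge4$, while the lower-order terms are already small.

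\textbf{Main obstacle.} The hardest step is Stage~2.1: showing that lower-order feedback does not disrupt the slow, $\Theta(r)$-long tensor-power growth. The 0th/2nd-order errors of size $O(1/r)$, integrated over time $r/\log m$, are $O(1/\log m)$ in the exponent. This is borderline and must be beaten by the $p_i^2$ growth term. I would first try to show that the residuals $r-\lambda/(2\hsigma_0^2)-\tldalpha$ and $1-\lambda_0/(2\hsigma_0^2)-\tldE w_i^2$ are identical across all neurons. They then act as a common multiplicative factor that cancels in direction ratios, so that only genuinely neuron-dependent errors of order $p_i/r$ enter $\dot p_i$.
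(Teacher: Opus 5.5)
Your three-substage skeleton matches the paper's. The obstacle you single out is benign: $O(1/r)$ errors over time $r/\log m$ cost only $O(1/\log m)$ in the exponent. The real gaps are elsewhere.

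\emph{First gap: Stage 2.1 needs synchronization across directions.} $T_{21}$ is the time when \emph{all} $r$ directions have been found. Once a direction is found, its aligned mass keeps growing at rate $\Theta(1)$, because the $0$th/$2$nd-order part of its growth rate is only $\pm O(\omega^5/r)$. Your hypothesis that $\Spot$ carries negligible mass and that the low-order population terms stay put therefore survives only if the discovery times differ by at most a small multiple of $\log m$ (the paper gets $o(\log m)$). Otherwise early directions reach $\ha_i\gtrsim 1/r$ while others are still in the slow phase.

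Proving this requires a quantitative hitting-time comparison for the perturbed tensor-power ODE (\cref{lem: stage 2.1 technical ode time gap}). The spread $\frac{2\log(2m)}{r}\bigl(1\pm O(\frac{\log r}{\log m})\bigr)$ of the initial maxima yields a gap $O(r\log r/\log^2 m)$. This is $o(\log m)$ exactly because $r\log^2 r<\log^3 d$. You invoke that inequality only for the time budget, where it plays no role: $\log m\lesssim r/\log m$ uses the \emph{lower} bound on $r$.

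Two smaller issues sit here too. The $p$-equation with its additive $O(\omega^5/r)$ errors cannot certify $1-p\le\epsdir^2=m^{-\Theta(1)}$. That precision comes from the aligned coordinate's norm growing to about $\sigma_1^2$ while the off-target coordinates stay $O(\omega r)$, which is also what gives $\sigma_1^4/m\le\ha_i\le\sigma_1^4/m^{1-o(1)}$. Also, a $1\pm o(1)$ direction bound for dense neurons is false. Neurons just below the $\Spoti$ threshold get amplified by up to $\omega^{0.01}$, which is why the paper's hypothesis carries a factor $\omega$.

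\emph{Second gap: Stage 2.2 rests on a balance you do not have.} At $T_{21}$ the ratios $\ha_i/\ha_j$ are only bounded by $m^{o(1)}$. The log-spread $|\log(\ha_i/\ha_j)|+|\log(\hb_i/\hb_j)|$ is merely nonincreasing. It decreases only through $\ha_i-\ha_j$, which is negligible while the masses are tiny. So with unit stepsize, directions would enter the norm-fitting regime at times that can be $\gg 1$ (though $o(\log m)$) apart.

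In that window the residuals $1-\ha_i-\hb_i$ can differ by $\Theta(1)$ across directions: the dense mass of leading directions decays while that of lagging ones does not. The dense neurons' in-subspace directions then escape the factor-$\omega$ control on which every moment bound relies. This, not speed, is why $\gamma$ is introduced; your own estimate shows that unaccelerated growth already fits the time budget.

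With $\gamma=m$ on neurons of norm at least $\sigma_1^2$, the aligned masses relax within time $O(\log m/m)$, while dense neurons are effectively frozen. They relax to the common quasi-equilibrium set by the already balanced $\hb_i$. This gives $\ha_i=\Theta(1/r)$ and $\ha_i/\ha_j=1\pm O(\omega^9/\log m)$. Only after that do the log-spread argument and the competitive ODE take over.

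Finally, the loss lower bound needs the low-order terms bounded \emph{below} by $cE_{\le2}$, not merely shown to be small. That lower bound comes from the weight-decay offsets $r-\tldalpha\approx\lambda/(2\hsigma_0^2)$ and $1-\tldE w_i^2\ge\lambda_0/(4\hsigma_0^2)$.
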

\begin{proof}
    It follows from \cref{lem: stage 2 IH}, Stage 2.1 in \cref{lem: stage 2.1 main}, Stage 2.2 in \cref{lem: stage 2.2 main}, Stage 2.3 in \cref{lem: stage 2.3 main}. For the loss bound, we note that Stage 2.1 takes $\BigO{r/\log m}$ time, which takes most of time during this stage, and loss only decreases until Stage 2.2.
\end{proof}

In the rest of this section, we describe and present the proof of the above result. First recall few notations that we divided the neurons in to several groups.

Recall that we introduced the set of potential neurons $\Spot:=\cup_{i\in[r]}\Spoti$, which consists of neurons that may converge to the ground-truth directions. Specifically,
\begin{align*}
    \Spoti
    :=
    \left\{
        \vv:
        \left(\bbvvler^{\btt{T_1}}\right)_i^{2}
        \ge
        (1-\omega^{-0.01})
        \max_{\vu}
        \left(\bbvuler^{\btt{T_1}}\right)_i^{2}
    \right\},
\end{align*}
where $\omega=\log\log\log r$ is used to absorb sufficiently large constants. Although we do not show that every neuron in $\Spoti$ converges to the direction $\ve_i$, we show that at least one such neuron does; see \cref{lem: stage 2.1 neuron main result}.

Let $\Sdense$ denote the remaining neurons, i.e., the complement of $\Spot$. We refer to these neurons as small-and-dense neurons, since we will show that their norms remain small and that
$\norm{\bbvvler}_\infty \lesssim \frac{\log d}{r}$;
see \cref{lem: stage 2 IH}\ref{item: stage 2 IH neuron dir}.

Finally, let $G_i$ be the set of neurons that have learned the $i$-th ground-truth direction by time $T_{21}$, the end of Stage 2.1. Let $\ha_i$ denote their total mass:
\[
    \ha_i
    :=
    \frac{1}{m}\sum_{\vv\in G_i}\norm{\vv}_2^2,
    \qquad
    G_i
    :=
    \left\{
        \vv:
        \left(\bv_i^{\btt{T_{21}}}\right)^2
        \ge
        1-\epsdir
    \right\}.
\]
Here we choose $\epsdir=m^{-\Theta(1)}$, with the constant in the exponent sufficiently small. We will show that neurons in $G_i$ remain aligned with the ground-truth direction throughout the subsequent dynamics; see \cref{lem: stage 2 IH}\ref{item: stage 2 IH stability G_i}. The fixed time $T_{21}$ is used only for technical convenience, so that the set $G_i$ does not change with time.

\paragraph{Choice of parameters in Stage 2.}
We recall the parameters from \cref{sec: omit proof main result} that are relevant to Stage 2. The threshold $\sigma_1^2$ is used to identify aligned neurons in Stage 2.1, and we set $\sigma_1=m^{\Theta(1)}$ with a sufficiently small constant in the exponent. Let $\epsdir$ and $\epsnm$ denote the direction and norm errors, respectively. We choose
$\epsdir=\frac{\omega r}{\sigma_1^2}=m^{-\Theta(1)}$, $\epsnm=e^{-\polylog(r)}$.
As above, $\omega=\log\log\log r$ is used to absorb sufficiently large constants throughout the proof.

The weight decay remains fixed, $\lambda_t=\lambda$, until Stage 2.3. The learning-rate multiplier is $\gamma_t=1$ throughout Stage 2.1. During a short interval in Stage 2.2, when a neuron has sufficiently large norm, we instead set $\gamma_t=\gamma=m$. More precisely, for a short time interval $T_{21}\le t\le T_\gamma$, we set $\gamma_t=\gamma$ whenever $\norm{\vv}\ge \sigma_1^2$. We discuss this choice in more detail in \cref{remark: step size} and \cref{sec: stage 2.2}.

\paragraph{Organization of the rest of the section.}
Stage 2 is divided into three substages.
\begin{itemize}
    \item \textbf{Stage 2.1.}
    This stage takes $\BigO{r/\log m}$ time, which accounts for most of Stage 2.
    During this stage, we show that for every direction $i\le r$, there exists at least one potential neuron $\tldvv\in\Spoti$ whose correlation with the ground-truth direction is at least $1-\epsdir^2$, i.e., $\btldv_i^2 > 1-\epsdir^2$.
    Thus, the direction $\ve_i$ is learned. At the same time, the directions of all other neurons remain almost unchanged. Although the directions are recovered in this stage, the corresponding norms have not yet grown significantly. See \cref{sec: stage 2.1} for details.

    \item \textbf{Stage 2.2.}
    This stage is much shorter than Stage 2.1, taking at most
    $\BigO{\log(1/\epsnm)}=\BigO{\log r}$ time.
    During this stage, the norms of the neurons that have learned the ground-truth directions grow rapidly, so that their total mass reaches $\ha_*$. See \cref{sec: stage 2.2} for details.

    \item \textbf{Stage 2.3.}
    In this stage, we set the weight decay to zero, $\lambda_t=0$, to allow full recovery of the target $f_*$. This stage takes only
    $\BigO{\frac{\log(1/\epsnm)}{r}}$ time. Similar to Stage 2.2, the dynamics is driven by the continued growth of the norms of the aligned neurons. See \cref{sec: stage 2.3} for details.
\end{itemize}

\subsection{Induction Hypothesis and Dynamics of Stage 2}
We present the induction hypothesis that we assume hold throughout Stage 2.
\begin{lemma}[Induction Hypothesis for Stage 2]\label{lem: stage 2 IH}
    Denote
    \begin{enumerate}
        \item \label{item: stage 2 IH neuron dir}
        Direction of small and dense neuron barely moves: For any time $T_1\le t\le T_{dir,2}$ we have
        \begin{itemize}
            \item for $i> r$, $\left(\bbtldvvgtr\right)_i^2 \le \frac{\log^2 d}{d}$
            \item for $i\le r$ and any neuron $\tldvv\not\in \Spoti$,  $\left(\bbtldvvler\right)_i^2 \le \omega\left(\bbtldvvler^\btt{0}\right)_i^{2} $
        \end{itemize}
        where $\omega=\log\log\log r$. 

        \item \label{item: stage 2 IH small dense neuron norm}
        Norm of small and dense neuron remains balance between them: For any time $T_1\le t\le T_{norm,sd,2}$ we have for any neuron $\tldvv,\tldvu\in \Sdense$, 
        \[\frac{\norm{\tldvvler}_2}{\norm{\tldvuler}_2} = \frac{\norm{\tldvvler^\btt{0}}_2}{\norm{\tldvuler^{\btt{0}}}_2}\left(1\pm \delta_{norm,sd,2}\right),\quad \frac{\norm{\tldvvgtr}_2}{\norm{\tldvugtr}_2} = \frac{\norm{\tldvvgtr^\btt{0}}_2}{\norm{\tldvugtr^{\btt{0}}}_2}\left(1\pm \delta_{norm,sd,2}\right)\]
        where $\delta_{norm,sd,2}\le \omega$.

        \item \label{item: stage 2 IH norm bound or basis like}
        Norm of potential neuron either bounded or become basis-like: For any time $T_1\le t\le T_{norm,pot,2}$, we have
        for $i\in[r]$, if neuron $\tldvv \in \Spoti$, then 
        \begin{align*}
                &\text{either $\norm{\tldvv}_2 \le \sigma_1$
                or $\btldv_i^2 \ge 1-\epsdir$}, 
        \end{align*}
        where $\epsdir=\frac{\omega r}{\sigma_1^2}$.

        \item \label{item: stage 2 IH stability G_i}
        Stability of aligned neuron: For any time $T_{21}\le t\le T_{stab,2}$ we have for any $\tldvv\in G_i$ and $i\in[r]$
        \[
            \btldv_i^2 \ge 1-\epsdir.
        \]

        \item \label{item: stage 2 IH reg}
        Regularity condition: for any time $T_1\le t\le T_{reg,2}$, we have 
        \begin{enumerate}
            \item $\tldalpha \le \alpha+10 = r+10$
            \item $\ha_i \le 1$ for $i\le r$
            \item $\frac{\norm{\tldvvler}}{\norm{\tldvv}}\ge 1-\frac{\omega^5}{r}$ for all neuron $\tldvv$
            \item $2\ge \tldE w_i^2 \ge 1/2$ for $i\in[r]$
            \item $\tldE \norm{\vwgtr}^2 \le \tldE \norm{\vwgtr^\btt{T_1}}^2\le \omega$
        \end{enumerate}        
        where recall $\ha_i = \frac{1}{m}\sum_{\vv\in G_i}\norm{\vv}_2^2$ and $G_i=\{\vv: \bv_i^2 \ge 1-\epsdir\}$
    \end{enumerate}
    Then, $T_{2}\le \tldT_2:=\min\{T_{dir,2},T_{norm,sd,2},T_{norm,pot,2}, T_{stab,2}, T_{reg,2}\}$. That is, current stage ends before any of the above condition breaks.
\end{lemma}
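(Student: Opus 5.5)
The argument is a continuity (bootstrap) argument of the same type as the proofs of \cref{lem: stage 1.1 IH} and \cref{lem: stage 1.2 IH}. Let $\tldT_2$ be the first time one of the conditions (a)--(e) fails. At $T_1$ every condition holds with slack:
\begin{itemize}
\item (a) and (b) hold by \cref{lem: stage 1.2 main}(b)(c) and \cref{lem: init};
\item (c) holds because all norms are $\BigO{\sqrt r}\ll\sigma_1$;
\item (e) holds by \cref{lem: stage 1.2 main}(a)(d)(e);
\item (d) is vacuous before $T_{21}$.
\end{itemize}
Assume toward contradiction that $\tldT_2<T_2$. On $[T_1,\tldT_2]$ we may use all five hypotheses, and we show that each of them in fact holds with strictly better constants. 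This contradicts the definition of $\tldT_2$.

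The first step is to derive simplified dynamics under the hypotheses, analogous to \cref{lem: stage 1.2 dynamic}. We use \cref{claim: tldw dynamic} together with \cref{lem: high order moment bound}, now applied to a measure with two parts. The first part is the dense neurons. By (a), their in-subspace directions have $\norm{\bbtldvvler}_\infty^2\lesssim\omega\log m/r$, so their contribution to every $2j$-th order moment with $j\ge2$ is $\BigO{\poly(\omega)\log m/r}$. The second part is the potential and aligned neurons. There are only $m^{o(1)}$ of them by \cref{lem: init}(c)(ii). By (c) each has norm at most $\sigma_1$ unless it is basis-like, and (e)(b) caps the aligned mass. Their contribution to the tensors is therefore either $m^{-1+o(1)}\sigma_1^2$, which is negligible, or a basis-like term $\ha_i\ve_i^{\otimes k}$ that we keep explicitly.

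This yields, for a dense neuron, $\dot{\tldv}_i=\tldv_i\bigl(-\lambda+2\hsigma_0^2(r-\tldalpha)+2\hsigma_2^2(\ind_{i\le r}-\tldE w_i^2)+\text{small}\bigr)$. For $\tldvv\in\Spoti$, it yields the tensor-power-type equation $\dot p_i\approx2(1-p_i)\sum_{j\ge2}2j\hsigma_{2j}^2p_i^j$ with small error. From these two equations we re-derive each hypothesis in turn.
\begin{itemize}
\item (a) and (b): the common factor cancels in ratios. The residual drift is $\BigO{\poly(\omega)\log m/r}$, and integrating it over time $T_2-T_1=\BigO{r/\log m}$ gives only a constant-factor change, at most $\omega$.
\item (e): the 0th- and 2nd-order equations of \cref{lem: stage 1.2 0th 2nd order dynamic} stay at their stable point. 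Then (e)(a)(d) follow by the same ODE stability. For (e)(e), irrelevant coordinates have strictly negative growth rate $-2\hsigma_2^2\tldE w_i^2+\BigO{1/r}$, and (e)(c) follows from this.
\item (e)(b): $\ha_i\le1$ follows from the competitive ODE of \cref{sec: competitive ode main-text}, whose stable point $\ha_*<1$ makes the region $\{\ha_i\le1\}$ forward invariant.
\end{itemize}

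The main obstacle is (c) and (d), which concern the potential neurons once they grow. The norm equation for $\tldvv\in\Spoti$ has growth rate roughly $2\hsigma_{\ge4}^2 p_i^2+(\text{low order terms})$. So $\norm{\tldvv}$ can reach $\sigma_1$ only after $p_i$ is already of order one. I would compare the time integral of the norm growth with that of the alignment growth, using $\frac{\rd}{\rd t}\log\frac{1-p_i}{\norm{\tldvv}^{-2}}$-type quantities, to show the following: whenever $\norm{\tldvv}\ge\sigma_1$, necessarily $1-p_i\lesssim r/\norm{\tldvv}^2\le\omega r/\sigma_1^2=\epsdir$. This uses that the $(1-p_i)$ restoring force has rate $\Theta(1)$, while the norm grows by a factor at most $e^{\BigO{1}}$ per unit time.

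For (d), near $\ve_i$ the linearization gives $\frac{\rd}{\rd t}(1-\btldv_i^2)\le-c(1-\btldv_i^2)+\text{error}$. The errors come from the other aligned neurons and the dense mass, and they are $\BigO{\epsdir}$ because different $G_j$ are orthogonal and the dense neurons are spread out. Hence $\{1-\btldv_i^2\le\epsdir\}$ is invariant. This also holds during the accelerated interval $\gamma=m$, since rescaling time does not affect invariance.

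I expect the delicate part to be controlling the cross-interaction of the potential neurons in the error terms before they align. Here I would use the disjointness of the sets $\Spoti$ (\cref{lem: init}(c)(iv)) and the coordinate gap of \cref{lem: init}(c)(iii) to show that each potential neuron is driven mainly by its own coordinate $i$.
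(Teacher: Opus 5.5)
Your bootstrap framing matches the paper's, which runs the continuity argument separately through Stages~2.1, 2.2 and 2.3. However, the step you use to re-derive (a) does not close, and (a) is the heart of the lemma.

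For a dense neuron the drift of $q=(\bbtldvvler)_i^2$ is not a small residual. By \cref{lem: stage 2.1 dynamic small and dense} it contains the neuron's own fourth-order self-amplification, $\dot q\le 8\hsigma_4^2q\,(q+\BigO{\omega^5/r})$, which is the same tensor-power term that drives the potential neurons. Suppose you only assume $q\le Kq_0$ and integrate over the Stage~2.1 horizon $T\approx1/(8\hsigma_4^2p_0)$, where $p_0=\max_{\vu}(\bbtldvuler)_i^2$ at $T_1$. You get $\log(q/q_0)\le cK+o(1)$ with $c\approx q_0/p_0$. For dense neurons just below the threshold, $c\ge1-\omega^{-0.01}>1/e$, and then $e^{cK}\le K$ has no solution. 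So no hypothesis of the form ``at most $K$ times the initial value'' (in particular $K=\omega$) is reproduced, and these neurons genuinely are close to blowing up.

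The missing idea is the explicit solution $q(t)=q_0/(1-8\hsigma_4^2q_0t)$ combined with a hitting-time comparison. By the very definition of $\Spoti$, $q_0\le(1-\omega^{-0.01})p_0$. For every $i$, the best potential neuron reaches $\btldv_i^2\ge1-\epsdir^2$ within $(1+\BigO{\omega^{-1/2}})/(8\hsigma_4^2p_0)$. This holds uniformly in $i$, because $p_0$ varies across directions only by a relative $\BigO{\log r/\log m}$. Together these give $q\lesssim\omega^{0.01}q_0$ up to $T_{21}$, and the remaining $\polylog(r)$ time of Stages~2.2--2.3 changes this only by a constant factor. Your proposal never uses the $\omega^{-0.01}$ gap in the definition of $\Spoti$, and that gap is exactly what makes (a) hold.

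A second gap is the step ``the common factor cancels in ratios.'' For (a) and (b) this requires the second-order residuals $1-\tldE w_i^2$ to be balanced across $i\le r$. In Stage~2.1 this balance is part of the strengthened hypotheses, but in Stage~2.2 it is the main difficulty. At $T_{21}$ the aligned masses $\ha_i$ have an $m^{o(1)}$ relative spread and then grow to $\Theta(1)$, so $\tldE w_i^2$ can differ by $\Theta(1)$ across directions during the transient. The paper handles this in two steps:
\begin{enumerate}
\item It uses the accelerated step size $\gamma=m$ on neurons with $\norm{\tldvv}\ge\sigma_1^2$. This acts only on those neurons, so it is not a time rescaling.
\item It then applies the monotone log-spread estimate of \cref{lem: stage 2.2 log spread}, which gives $|\tldE w_i^2-\tldE w_j^2|\lesssim\omega^9/\log m$ afterwards.
\end{enumerate}

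The same transient defeats your argument for (d). In \cref{lem: stage 2.2 align neuron dynamic}, the bracket multiplying $\btldv_i^2(1-\btldv_i^2)$ is an $\hsigma_2^2$-imbalance term plus $4\hsigma_4^2(1-\ha_i-o(1))$. The restoring part is only $\Theta(\lambda_0)$ once $\ha_i\approx\ha_*$, and it vanishes when $\lambda=0$. The imbalance term, by contrast, can be $-\Theta(1)$. Moreover, an $\BigO{\epsdir}$ forcing against a rate-$c$ contraction would only give an invariant level $\BigO{\epsdir/c}$, not $\epsdir$. The paper instead starts from the stronger alignment $1-\btldv_i^2\le\epsdir^2$ at $T_{21}$, which is why $T_{21}$ is defined with $\epsdir^2$. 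It then shows the bad window lasts $o(\log m/\gamma)$, losing only an $m^{o(1)}$ factor.

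Similarly for (c), a ``$\Theta(1)$ restoring rate versus $e^{\BigO{1}}$ norm growth'' comparison does not by itself bound $(1-\btldv_i^2)\norm{\tldvv}^2$. The mechanism is that the off-target mass $\sum_{j\ne i}\tldv_j^2$ of a potential neuron has growth rate $\BigO{\omega^5/r}$ once the 0th- and 2nd-order terms sit at their fixed point. It therefore stays $\lesssim\omega r$, which immediately gives $1-\btldv_i^2\le\omega r/\norm{\tldvv}^2$.
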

\begin{proof}
    It follows from \cref{sec: proof stage 2.1 IH} for Stage 2.1, \cref{sec: proof stage 2.2 IH} for Stage 2.2 and \cref{sec: stage 2.3} for Stage 2.3.
\end{proof}

Below are direct implications of induction hypothesis \cref{lem: stage 2 IH}. We collect together here for the ease of use in the later proof.
\begin{lemma}\label{lem: stage 2 IH implication}
    For $T_1\le t \le \tldT_{2}$, we have for any neuron $\tldvv$ and $i\in[d]$
    \begin{enumerate}
        \item \label{item: stage 2 IH implication small and dense norm}
        for $\tldvv\in \Sdense$, $\norm{\tldvv}_2\le 2\omega\sqrt{r}$. Thus, the step size $\gamma_t=1$ for all $\tldvv\in \Sdense$.
        
        \item \label{item: stage 2 IH implication small and dense dir} 
        for $i\in[r]$ and $\tldvv\not\in \Spoti$, $\btldv_i^2 \le \left(\bbtldvvler\right)_i^2\le\omega\left(\bbtldvvler^\btt{0}\right)_i^{2}
        \lesssim \frac{\omega\log m}{r}$. As a result, $\norm{\bbtldvvler}_\infty\lesssim \frac{\omega\log m}{r}$ for $\tldvv \in \Sdense$
        
        \item \label{item: stage 2 IH implication sum v4}
        for $\tldvv \in \Sdense$, $\sum_{i\in[r]}\btldv_i^4 \le \sum_{i\in[r]}(\bbtldvvler)_i^4
        \lesssim \frac{\omega^2}{r}$; 
        for $\tldvv\in \Spoti$, $\sum_{j\in[r]}\btldv_j^4 \le \sum_{j\in[r]}(\bbtldvvler)_j^4
        \lesssim \btldv_i^4 + \frac{\omega^2}{r}$.

        \item Population 2nd order term bound:
        we have $\tldE w_i^2=\tldE w_j^2 \le \frac{2\omega}{d}$ for any $i,j>r$.
    \end{enumerate}
\end{lemma}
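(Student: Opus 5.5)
The plan is to derive each item directly from the induction hypothesis \cref{lem: stage 2 IH}, together with the initialization bounds \cref{lem: init} and the end-of-Stage-1 facts \cref{lem: stage 1.2 main}. There is no dynamics to analyze here; this is bookkeeping.

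For item (a), take $\tldvv\in\Sdense$. By \cref{lem: stage 2 IH}\ref{item: stage 2 IH small dense neuron norm}, the ratios $\norm{\tldvvler}/\norm{\tldvuler}$ and $\norm{\tldvvgtr}/\norm{\tldvugtr}$ between dense neurons stay within a factor $1\pm\omega$ of their initial values. The initial values are $1\pm o(1)$ by \cref{lem: init}. Hence each dense neuron's squared norm is at most about $(1+\omega)^2$ times the average over dense neurons. The dense neurons are all but $m^{o(1)}$ neurons (\cref{lem: init}\ref{item: init good pot bad neuron}(ii)), so that average is at most roughly $\tldalpha\le r+10$ by item (e)(a) of the IH. This gives $\norm{\tldvv}^2\lesssim\omega^2 r$, i.e.\ $\norm{\tldvv}\le 2\omega\sqrt r$ after absorbing constants into $\omega$. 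Since $2\omega\sqrt r\ll\sigma_1^2=m^{\Theta(1)}$, the stepsize rule gives $\gamma_t=1$.

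For item (b), note first that $\btldv_i^2\le(\bbtldvvler)_i^2$, since $\norm{\tldvvler}\le\norm{\tldvv}$. The IH item \ref{item: stage 2 IH neuron dir} then gives the $\omega(\bbtldvvler^{(0)})_i^2$ bound. \Cref{lem: init}(a) gives $(\bbtldvvler^{(0)})_i^2\lesssim\log(mr)/r\lesssim\log m/r$. Taking the maximum over $i\le r$ yields the $\ell_\infty$ bound for dense neurons.

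For item (c), use $\sum_i x_i^4\le\norm{x}_\infty^2\sum_i x_i^2$. The naive bound $\norm{x}_\infty^2\cdot 1$ only gives $\omega^2\log^2m/r^2\cdot r$, which loses a log. The sharper route is the coordinatewise bound $(\bbtldvvler)_i^2\le\omega(\bbtldvvler^{(0)})_i^2$, which gives
\[
\sum_i(\bbtldvvler)_i^4\le\omega^2\sum_i(\bbtldvvler^{(0)})_i^4\lesssim\omega^2/r
\]
by the fourth-moment bound in \cref{lem: init}(a). For $\tldvv\in\Spoti$, split off coordinate $i$. For $j\ne i$ the neuron is not in $\Spotj$, because the sets $\Spot_{,j}$ are disjoint (\cref{lem: init}\ref{item: init good pot bad neuron}(iv)). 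So the same coordinatewise bound applies to every $j\ne i$, giving $\btldv_i^4+\omega^2/r$. Here I would be careful to use $(\bbtldvvler)_i^4$ rather than $\btldv_i^4$; the two agree up to $1-O(\omega^5/r)$ by IH (e)(c), which suffices.

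For item (d), $\Rho$-invariance (permutations of the irrelevant coordinates) gives $\tldE w_i^2=\tldE w_j^2$ for $i,j>r$. Each therefore equals $\tldE\norm{\vwgtr}^2/(d-r)\le\omega/(d-r)\le 2\omega/d$ by IH (e)(e).

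The only delicate point is item (c). One must use the coordinatewise IH bound, not the $\ell_\infty$ bound, so as to avoid an extra $\log^2 m$ factor. Everything else is immediate.
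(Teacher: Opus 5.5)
Your proposal is correct and follows the paper's proof essentially step for step. That means dense-neuron norm balance plus $\tldalpha\le r+10$ for (a), the IH direction bound plus initialization for (b), the coordinatewise (not $\ell_\infty$) bound with the initial fourth-moment estimate for (c), and permutation invariance with $\tldE\norm{\vwgtr}^2\le\omega$ for (d). Your extra care---comparing the dense-neuron average to $\tldalpha$ via $|\Sdense|\approx m$, and using disjointness of the $\Spotj$ together with $\norm{\tldvvler}/\norm{\tldvv}\ge 1-\omega^5/r$ for the $\Spoti$ case of (c)---only fills in steps the paper leaves implicit or calls a ``similar argument.''
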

\begin{proof}
    We show one by one. For simplicity in the proof, we drop $\sim$ and write $\tldv$ as $v$.

    \paragraph{item (a)}
    By \cref{lem: stage 2 IH}\ref{item: stage 2 IH small dense neuron norm} and \cref{lem: init} we know for any neuron $\vv,\vu\in\Sdense$ we have $\frac{\norm{\vvler}_2}{\norm{\vuler}_2} \le 1+\delta_{norm,sd,2}\le 1+\omega$. Thus, for $T_1 \le t\le \tldT_{2}$ and any neuron $\vv\in\Sdense$, we have $\norm{\vvler}_2^2 \le (1+\omega)^2\tldE\norm{\vw}_2^2 = (1+\omega)^2\tldalpha \le (1+\omega)^2 2r$, where we use \cref{lem: stage 2 IH}\ref{item: stage 2 IH reg}. Similar argument apply to $\norm{\vvgtr}$.

    \paragraph{item (b)}
    This follows by using \cref{lem: stage 2 IH}\ref{item: stage 2 IH neuron dir} and \cref{lem: init}.
    
    \paragraph{item (c)}
    By item(b) we know for any neuron $\vv\in\Sdense$, we have 
    \[
        \sum_{i\le r}\bv_i^4 
        \le \sum_{i\in[r]}(\bbtldvvler)_i^4
        \le \omega^2 \sum_{i\le r}(\bbvvler^\btt{0})_i^4
        \lesssim \frac{\omega^2}{r}    
    \]
    where at the end we use \cref{lem: init}. Similar argument also applies to $\vv\in\Spoti$.

    \paragraph{item (d)}
    For $i,j>r$, by the definition of $\tldf$ that it is invariant to the permutation in the last $d-r$ coordinates, we know $\tldE w_i^2=\tldE w_j^2$. Thus, $\frac{\tldE w_i^2}{\sum_{i> r}\tldE w_i^2} =\frac{1}{d-r}$, and $\tldE w_i^2 = \frac{1}{d-r}\sum_{i> r}\tldE w_i^2 \le \frac{\omega}{d-r}$ for $i> r$.

\end{proof}

\subsubsection{Dynamics}\label{sec: stage 2 dynamics}
The lemmas below describe the dynamics of the small-and-dense neuron, the potential neuron, and the population dynamics, respectively. They are better viewed as meta-lemmas that will be further simplified in later analysis. 

The proofs are calculation-intensive. They primarily use the exact gradient formula in \cref{claim: tldw dynamic}, bound the contributions from terms of order higher than four, and retain only the leading low-order terms.
\begin{lemma}[Dynamics of small-and-dense neuron in Stage 2]\label{lem: stage 2 dynamic small and dense}
    In Stage 2, for $T_{12}\le t\le \tldT_{2}$, the dynamic~\eqref{eq: dynamic of tldw} for small-and-dense neuron can be simplified as: for any neuron $\tldvv\in\Sdense$ and $i\in[d]$
    \begin{align*}
        \frac{\rd \tldv_i}{\rd t}
        =&  \tldv_i \left(
                -\lambda
                + 2\hsigma_0^2 \left(
                r - \tldE[ \norm{\vw}_2^2 ]\right) 
                + 2\hsigma_{2}^2 \left(
                    \ind_{i \le r}
                    - \tldE\left[ w_i^2 \right]
                \right)
                + 4\hsigma_4^2 \btldv_i^2\ind_{i\le r}
                \pm \BigO{\frac{\omega^4}{r}}
            \right)
            \pm \BigO{\frac{\sigma_1^2}{m^{0.9}}+\sqrt{\epsdir}},\\
        \frac{\rd}{\rd t}\norm{\tldvvler}_2
        =& \norm{\tldvvler}_2 \left(
            -\lambda
            + 2\hsigma_0^2\left(r-\tldE\norm{\vw}_2^2\right)
            + 2\hsigma_2^2\sum_{i\le r}(\bbtldvvler)_i^2 
                \left( 1 - \tldE\left[w_i^2\right]\right)
            \pm \BigO{\frac{\omega^4}{r}}
        \right),\\
        \frac{\rd}{\rd t}\norm{\tldvv}_2
        =& \norm{\tldvv}_2 \left(
            -\lambda
            + 2\hsigma_0^2\left(r-\tldE\norm{\vw}_2^2\right)
            + 2\hsigma_2^2\sum_{i\le r}\btldv_i^2 
                \left( 1 - \tldE\left[w_i^2\right]\right)
            \pm \BigO{\frac{\omega^4}{r}}
        \right),\\
        \frac{\rd (\bbtldvvler)_i^2}{\rd t}
        \le& 2(\bbtldvvler)_i^2 \left( 2\hsigma_{2}^2 \left(
                \ind_{i\le r} - \tldE\left[ w_i^2 \right]
                - \sum_{j\le r}(\bbtldvvler)_j^2 
                \left( 1 - \tldE\left[w_j^2\right]\right)\right)
                + 4\hsigma_4^2 \btldv_i^2
                \pm \BigO{\frac{\omega^4}{r}}
                \right).
    \end{align*}
\end{lemma}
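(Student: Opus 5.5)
We specialize the exact coordinate formula of \cref{claim: tldw dynamic} to a neuron $\tldvv\in\Sdense$, as in the proofs of \cref{lem: stage 1.1 dynamic,lem: stage 1.2 dynamic}. The new feature in Stage 2 is that $\tldmu$ is no longer made only of dense neurons: it also contains potential neurons in $\Spot$, some of which may be large (norm up to $\sigma_1$) or already aligned. We therefore split every population expectation $\tldE[\cdot]$ into a dense part and a potential part. By \cref{lem: init}\ref{item: init good pot bad neuron}(ii), $|\Spot|\le m^{5\omega^{-0.01}}$, so potential neurons carry $\Rho$-weight $m^{-1+o(1)}$ each.

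\textbf{Orders $0$, $2$ and $4$.} The $0$th- and $2$nd-order terms are exact and are kept as they stand. In the $4$th-order term we keep $4\hsigma_4^2\btldv_i^2\ind_{i\le r}$. The remaining pieces are $\sum_{l\le r}\btldv_l^4$, $\tldE[\norm{\vw}^2\langle\bvw,\bvv\rangle^4]$ and $\tldE[\norm{\vw}^2\langle\bvw,\bvv\rangle^3\bw_i/\bv_i]$.
\begin{itemize}
\item $\sum_{l\le r}\btldv_l^4\lesssim\omega^2/r$ by \cref{lem: stage 2 IH implication}\ref{item: stage 2 IH implication sum v4}.
\item For the two expectations, restricted to dense $\vw$, we expand them with conditional symmetry into even monomials. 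We then use $\norm{\bbtldvvler}_\infty\lesssim\omega\log m/r$ (\cref{lem: stage 2 IH implication}\ref{item: stage 2 IH implication small and dense dir}), the near-full subspace ratio (\cref{lem: stage 2 IH}\ref{item: stage 2 IH reg}(c)), and $\norm{\vw}^2\lesssim\omega^2 r$ on $\Sdense$. The same computation as in \cref{lem: high order moment bound} gives $O(\omega^4/r)$, with $\log m\lesssim r$ absorbed.
\end{itemize}

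\textbf{Orders $2j\ge 6$.} The coefficients $\btldv_i^{2j-2}\le\btldv_i^2\cdot O(\omega\log m/r)$ are summable against $\sum_j 2j\hsigma_{2j}^2\lesssim 1$ (\cref{claim: hermite coeff}). The expectations are handled as in Stage 1.2: for $j\lesssim r$ they follow from \cref{lem: high order moment bound}, and for large $j$ they are beaten by the regularity lower bound $|\bv_i|\ge\iota$.

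\textbf{Potential-neuron contributions.} For $\vw\in\Spot$ with $\norm{\vw}\le\sigma_1$, the total mass is at most $m^{-1+o(1)}\sigma_1^2$. Since the test-function factors are bounded by $1/|\bv_i|$ and cancel after the even-monomial rewriting, this enters as an additive $O(\sigma_1^2/m^{0.9})$. By \cref{lem: stage 2 IH}\ref{item: stage 2 IH norm bound or basis like}, a potential neuron with $\norm{\vw}>\sigma_1$ is basis-like, $\bw_k^2\ge 1-\epsdir$ for some $k$. Its contribution to $\langle\bvw,\bvv\rangle^{2j-1}\bw_i/\bv_i$ then equals that of $\ve_k$, namely $\bv_k^{2j-2}\ind_{i=k}$ times its mass $\ha_k\le 1$, up to an error $O(\sqrt{\epsdir})$ from the off-axis components. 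The $\ve_k$ part is $O(\omega\log m/r)$ because $\bv_k^2$ is small for dense $\vv$, so only the $O(\sqrt{\epsdir})$ remainder is left. This remainder is not proportional to $v_i$, which is why it appears as a separate additive error. This bookkeeping, and showing that no factor $1/\bv_i$ survives, is the step I expect to be the main obstacle. I would first try applying conditional symmetry to the aligned part directly, so that it is exactly diagonal.

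\textbf{Norm and direction equations.} These follow by summation. Compute $\frac{\rd}{\rd t}\norm{\tldvvler}^2=2\sum_{i\le r}\tldv_i\dot{\tldv}_i$, and similarly for $\norm{\tldvv}^2$. The $4\hsigma_4^2\btldv_i^2$ term contributes $\sum_i\btldv_i^4\lesssim\omega^2/r$, which is absorbed. The additive errors contribute $\norm{\tldvv}(\sigma_1^2 m^{-0.9}+\sqrt{\epsdir})\sqrt{d}$, which is $\ll\omega^4/r$ relative to $\norm{\tldvv}$ since $\epsdir=m^{-\Theta(1)}$, $m\ge d^3$, and norms are $\gtrsim 1$. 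Finally, $\frac{\rd}{\rd t}(\bbtldvvler)_i^2=2(\bbtldvvler)_i^2\bigl(\dot{\tldv}_i/\tldv_i-\frac{\rd}{\rd t}\log\norm{\tldvvler}\bigr)$. Substituting the two previous equations gives the last inequality. We state it as an upper bound because it absorbs the additive error terms using the lower bound $|\tldv_i|\ge\iota\norm{\tldvv}$.
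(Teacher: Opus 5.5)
Your treatment of the coordinate equation is essentially the paper's. You split $\tldE$ into dense and potential parts, keep the factors $1/|\bv_i|$, and use $v_i/|\bv_i|=\pm\norm{\vv}$ to turn them into an additive error $\norm{\vv}\,\BigO{\sigma_1^2m^{-0.9}+\sqrt{\epsdir}}$ (here $\vv$ stands for $\tldvv$, as in the paper's proofs).

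\textbf{The gap is in the last display.} You get the one-sided bound by absorbing these additive errors with $|\tldv_i|\ge\iota\norm{\tldvv}$. That lower bound belongs only to the Stage~1 induction hypotheses; \cref{lem: stage 2 IH} contains nothing like it. Even if it held, it would be useless. Since $r>\log^2 d\,\log^2 r$, we have $1/\iota=e^{r\log^{1/2}r}\ge e^{\log^2 d}$, which is superpolynomial in $m$. Meanwhile $\sigma_1^2m^{-0.9}+\sqrt{\epsdir}=m^{-\Theta(1)}$, so the absorbed term is huge rather than $\BigO{\omega^4/r}$.

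The missing idea is a sign argument. Every term carrying $1/\bv_i$ enters $\dot v_i$ as $-2j\hsigma_{2j}^2\,\tldE[\norm{\vw}^2\langle\bvw,\bvv\rangle^{2j-1}\bw_i/\bv_i]\,v_i$. By conditional symmetry (\cref{claim: tldw dynamic}), this expectation is a nonnegative even polynomial. Its contribution to $\frac{\rd}{\rd t}v_i^2$ is therefore $\le 0$ and can simply be dropped, including the $j=2$ term whose main part is $\ha_i\bv_i^2$. This gives $\frac{\rd}{\rd t}v_i^2\le 2v_i^2\bigl(\cdots+4\hsigma_4^2\bv_i^2\ind_{i\le r}\pm\BigO{\omega^4/r}\bigr)$ with no additive remainder. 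Combining this with the two-sided equation for $\norm{\vvler}$ yields the stated inequality, which is exactly why the statement is only one-sided.

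\textbf{Two further steps also need repair.}
\begin{itemize}
\item \emph{Large $j$.} You again appeal to $\iota$. In Stage~2 the dense moments for $j\gtrsim r/\log r$ are only bounded by about $r(\omega/\log r)^{\Theta(r/\log r)}$, which is far above $\iota$, so dividing by $|\bv_i|\ge\iota$ gives nothing small. The paper instead keeps the $1/|\bv_i|$, bounds this moment by $1/m$, and lets it join the additive error.
\item \emph{Norm equations.} Summing your coordinatewise additive errors over $i\in[d]$ costs a factor $\sqrt d$. Then $\sqrt{d\,\epsdir}$ is not small for the paper's choice $\sigma_1=m^{c}$ with $c$ small, with $m$ possibly as small as $d^3$. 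The paper avoids this by summing the exact formula before estimating. There $\sum_i v_i^2\,\bw_i/\bv_i=\norm{\vv}^2\langle\bvw,\bvv\rangle$ cancels every denominator, so the contribution is $\norm{\vv}^2\tldE[\norm{\vw}^2\langle\bvw,\bvv\rangle^{2j}]$. For $\norm{\vvler}$, nonnegativity of each coordinate's term bounds the sum over $i\le r$ by the full sum times $\norm{\vv}^2/\norm{\vvler}^2\le 4$. The moment bound then gives $\BigO{\omega^4/r}$ with no additive remainder.
\end{itemize}
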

\begin{proof}
    For simplicity in the proof, we drop $\sim$ and write $\tldv$ as $v$.
    By \cref{lem: stage 2.1 IH implication} we know step size $\gamma_t=1$. From \eqref{eq: dynamic of tldw}, we know
    \begin{align*}
        \frac{\rd v_i}{\rd t}
        =& - \sum_{j\ge 0} [\nabla_{2j,\vv}]_i -\lambda v_i, 
    \end{align*}
    where $\nabla_{2j,\vv}$ represents the gradient of $2j$-th order loss
    \begin{align*}
        -[\nabla_{0,\vv}]_i 
        =& 2\hsigma_0^2 \left(
                \sum_{j=1}^r a_j^* \norm{\vw_j^*}_2
                - \tldE[ \norm{\vw}_2^2 ]\right) v_i
        = 2\hsigma_0^2\left(r-\tldE\norm{\vw}_2^2\right) v_i,\\
        -[\nabla_{2,\vv}]_i 
        =& \hsigma_{2}^2 \left(
            2 \ind_{i \le r}
            - 2 \tldE\left[ w_i^2 \right]
            \right) v_i,\\
        -[\nabla_{2j,\vv}]_i 
        =& \hsigma_{2j}^2 \Big(
            2j \bv_i^{2j-2} \ind_{i \le r}
            - 2j \tldE\left[
                \norm{\vw}_2^2 \langle\bvw,\bvv\rangle^{2j-1} \bw_i/\bv_i
                \right]
            - (2j-2) \left(
            \sum_{l\le r}\bv_l^{2j} - 
            \tldE \left[\norm{\vw}_2^2\langle \bvw,\bvv\rangle^{2j}\right]
            \right)\Big) v_i,\quad \text{for $j\ge 2$.}
    \end{align*}    

    \paragraph{Dynamic of $v_i$.}
    We first bound the each term in $\nabla_{2j,\vv}$ for $j\ge 2$. 
    
    We have for 
    \begin{align*}
        \bv_i^{2j-2}\ind_{i \le r}
        \le& \bv_i^4\ind_{i \le r} 
        \mylesim{a} \frac{\omega^2\log^2 d}{r^2}
        ,\text{ for $j\ge 3$},
        \qquad
        \sum_{l\le r}\bv_l^{2j}
        \le \sum_{j\le r}\bv_j^{4}
        \mylesim{b} \frac{\omega^2}{r},\text{ for $j\ge 2$}
    \end{align*}
    where (a)(b) we use \cref{lem: stage 2 IH implication}\ref{item: stage 2 IH implication small and dense dir}\ref{item: stage 2 IH implication sum v4}.
    
    By \cref{lem: stage 2 dynamic bound} we have 
    \begin{align*}
        \tldE \left[\norm{\vw}_2^2\langle \bvw,\bvv\rangle^{2j}\right] 
        \lesssim& \frac{\omega^4}{r}, \text{ for $j\ge 2$},
        \quad
        \tldE [\norm{\vw}_2^2\langle\bvw,\bvv\rangle^{2j-1}\bw_i/\bv_i]
        \left\{\begin{array}{ll}
          = \ha_i \bv_i^2\ind_{i\le r} \pm \BigO{\frac{\omega^4}{r}}   & \text{, $j=2$} \\
          \lesssim \frac{\sigma_1^2}{m^{0.9}} \frac{1}{|\bv_i|} + \frac{\sqrt{\epsdir}}{|\bv_i|} +  \frac{\omega^4}{r}  & \text{, $j\ge 3$}
        \end{array}\right.
    \end{align*}
    
    Now  we have the bound for each term. Together with \cref{claim: hermite coeff} that $\sum_j 2j\hsigma_{2j}^2 \lesssim 1$ we have
    \begin{align*}
        \frac{\rd v_i}{\rd t}
        =&  v_i \left(
                -\lambda
                +2\hsigma_0^2 \left(
                r - \tldE[ \norm{\vw}_2^2 ]\right) 
                + 2\hsigma_{2}^2 \left(
                    \ind_{i \le r}
                    - \tldE\left[ w_i^2 \right]
                \right)
                + 4\hsigma_4^2 \bv_i^2\ind_{i\le r}
                \pm \BigO{\frac{\omega^4}{r}}
            \right)
            \pm \BigO{\frac{\sigma_1^2}{m^{0.9}}+\sqrt{\epsdir}}.
    \end{align*}
    
    \paragraph{Dynamic of $\norm{\vvler}$ and $\norm{\vv}$.}
    From the gradient form at beginning, we have
    \begin{align*}
        &\frac{\rd}{\rd t}\norm{\vvler}_2^2
        = 2\sum_{i\in[r]} v_i \frac{\rd v_i}{\rd t}\\
        =& 2\norm{\vvler}_2^2 \Bigg(
            -\lambda
            + 2\hsigma_0^2\left(r-\tldE\norm{\vw}_2^2\right)
            + 2\hsigma_2^2\sum_{i\le r}(\bbvvler)_i^2 
                \left( 1 - \tldE\left[w_i^2\right]\right)\\
            &+ \sum_{j\ge 2}\hsigma_{2j}^2 \Bigg(
                2j \sum_{i\in[r]}\bv_i^{2j-2}(\bbvvler)_i^2
                - 2j \tldE\left[
                    \norm{\vw}_2^2 \langle\bvw,\bvv\rangle^{2j-1} \sum_{i\in[r]}\bw_i\bv_i
                    \frac{\norm{\vv}^2}{\norm{\vvler}^2}
                    \right]
                - (2j-2) \left(
                    \sum_{l\le r}\bv_l^{2j} 
                    - \tldE \left[\norm{\vw}_2^2\langle \bvw,\bvv\rangle^{2j}\right]
                    \right)
                \Bigg)
        \Bigg).\\
    \end{align*}

    We are going to bound the last line above.
    Similar as in item 1, for $j\ge 2$ we have
    \begin{align*}
        &\sum_{i\in[r]}\bv_i^{2j-2}(\bbvvler)_i^2
        \le \sum_{i\in[r]}(\bbvvler)_i^4
        \mylesim{a} \frac{\omega^2}{r},\quad
        \tldE\left[
                    \norm{\vw}_2^2 \langle\bvw,\bvv\rangle^{2j-1} \sum_{i\in[r]}\bw_i\bv_i
                    \frac{\norm{\vv}^2}{\norm{\vvler}^2}
                    \right]
        \myle{b} 4 \tldE \left[\norm{\vw}_2^2\langle \bvw,\bvv\rangle^{2j}\right]
        \mylesim{c} \frac{\omega^4}{r}\\
        &\sum_{l\le r}\bv_l^{2j}
        \le \sum_{j\le r}\bv_j^{4}
        \mylesim{a} \frac{\omega^2}{r},\quad
        \tldE \left[\norm{\vw}_2^2\langle \bvw,\bvv\rangle^{2j}\right] 
        \mylesim{c} \frac{\omega^4}{r},
    \end{align*}
    where (a) \cref{lem: stage 2 IH implication}\ref{item: stage 2 IH implication sum v4};
    (b) \cref{lem: stage 2 IH}\ref{item: stage 2 IH reg} and $\tldmu$ is conditional symmetric so $\tldE\left[
                    \norm{\vw}_2^2 \langle\bvw,\bvv\rangle^{2j-1} \bw_i\bv_i\right] \ge 0$ for $i\in[d]$;
    (c) \cref{lem: stage 2 dynamic bound}.

    Thus, using \cref{claim: hermite coeff} that $\sum_j 2j\hsigma_{2j}^2 \lesssim 1$ we have
    \begin{align*}
        \frac{\rd}{\rd t}\norm{\vvler}_2
        =& \norm{\vvler}_2 \left(
            -\lambda
            + 2\hsigma_0^2\left(r-\tldE\norm{\vw}_2^2\right)
            + 2\hsigma_2^2\sum_{i\le r}(\bbvvler)_i^2 
                \left( 1 - \tldE\left[w_i^2\right]\right)
            \pm \BigO{\frac{\omega^4}{r}}
        \right).
    \end{align*}
    The dynamic of $\norm{\vv}$ follows the same argument.

    \paragraph{Dynamic of $(\bbtldvvler)_i^2$.}
    We have
    \begin{align*}
        \frac{\rd (\bbvvler)_i^2}{\rd t}
        =& (\bbvvler)_i^2 \left(\frac{1}{v_i^2}\frac{\rd v_i^2}{\rd t} - \frac{1}{\norm{\vvler}_2^2} \frac{\rd}{\rd t}\norm{\vvler}_2^2\right)
    \end{align*}

    Note from item 1 that the additional $\pm \BigO{\frac{\sigma_1^2}{m^{0.9}}+r\epsdir}$ term appears due to the existence of $\tldE [\norm{\vw}_2^2\langle\bvw,\bvv\rangle^{2j-1}\bw_i/\bv_i]$ in gradient. By \cref{claim: tldw dynamic} we know $\tldE [\norm{\vw}_2^2\langle\bvw,\bvv\rangle^{2j-1}\bw_i/\bv_i]$ is non-negative, therefore
    \begin{align*}
        \frac{\rd v_i^2}{\rd t}
        \le&  2v_i^2 \left(
                -\lambda
                + 2\hsigma_0^2 \left(
                r - \tldE[ \norm{\vw}_2^2 ]\right) 
                + 2\hsigma_{2}^2 \left(
                    \ind_{i \le r}
                    - \tldE\left[ w_i^2 \right]
                \right)
                + 4\hsigma_4^2 \bv_i^2\ind_{i\le r}
                \pm \BigO{\frac{\omega^4}{r}}
            \right).
    \end{align*}
    This further give the desired bound
    \begin{align*}    
        \frac{\rd (\bbvvler)_i^2}{\rd t}
        \le& 2(\bbvvler)_i^2 \left( 2\hsigma_{2}^2 \left(
                \ind_{i\le r} - \tldE\left[ w_i^2 \right]
                - \sum_{j\le r}(\bbvvler)_j^2 
                \left( 1 - \tldE\left[w_j^2\right]\right)\right)
                + 4\hsigma_4^2 \bv_i^2
                \pm \BigO{\frac{\omega^4}{r}}
                \right).
    \end{align*}

\end{proof}

\begin{lemma}[Dynamics of potential neuron in Stage 2]\label{lem: stage 2 dynamic potential neuron}
    In Stage 2, for $T_{12}\le t\le \tldT_{2}$, the dynamic~\eqref{eq: dynamic of tldw} for potential neuron can be simplified as: for any neuron $\tldvv\in\Spotk$ and $k\in[r]$ (let $j_0=r^2$)
    \begin{align*}
        \frac{\rd \tldv_i}{\rd t}
        =&  \gamma_t v_i \Bigg(
                -\lambda
                +2\hsigma_0^2 \left(
                r - \tldE[ \norm{\vw}_2^2 ]\right) 
                + 2\hsigma_{2}^2 \left(
                    \ind_{i \le r}
                    - \tldE\left[ w_i^2 \right]
                \right)
                + \hsigma_4^2 \left(
                    4(1-\ha_i)\btldv_i^2\ind_{i\le r}
                    - 2(1-\ha_k)\btldv_k^4
                    \right)\\
                &\quad+ \sum_{3\le j\le j_0}\hsigma_{2j}^2\left(
                    2j(1-\ha_i)\btldv_i^{2j-2}\ind_{i=k}
                    -(2j-2)(1-\ha_k)\btldv_k^{2j}
                    \right) 
                \pm \BigO{\frac{\omega^4}{r}}
            \Bigg)
            \pm \BigO{\frac{\sigma_1^2}{m^{0.9}}+r\sqrt{\epsdir}},\\
        \frac{\rd}{\rd t}\norm{\tldvvler}_2
        =& \gamma_t \norm{\tldvvler}_2 \Bigg(
            -\lambda
            +2\hsigma_0^2\left(r-\tldE\norm{\vw}_2^2\right)
            + 2\hsigma_2^2\sum_{i\le r}(\bbtldvvler)_i^2 
                \left( 1 - \tldE\left[w_i^2\right]\right)\\
            &\qquad+ \sum_{2\le j\le j_0}\hsigma_{2j}^2 \left(
                2j\frac{\norm{\tldvv}^2}{\norm{\tldvvler}^2} - (2j-2)
                \right) 
                (1-\ha_k)\btldv_k^{2j}
            \pm \BigO{\frac{\omega^4}{r}}
        \Bigg),\\
        \frac{\rd}{\rd t}\norm{\tldvv}_2
        =& \gamma_t \norm{\tldvv}_2 \Bigg(
            -\lambda
            +2\hsigma_0^2\left(r-\tldE\norm{\vw}_2^2\right)
            + 2\hsigma_2^2\sum_{i\le r}\btldv_i^2 
                \left( 1 - \tldE\left[w_i^2\right]\right)
            + \sum_{2\le j\le j_0}2\hsigma_{2j}^2 
                (1-\ha_k)\btldv_k^{2j}
            \pm \BigO{\frac{\omega^4}{r}}
        \Bigg),\\
        \frac{\rd (\bbtldvvler)_i^2}{\rd t}
        \le& \gamma_t 2(\bbtldvvler)_i^2 \Bigg( 
                2\hsigma_{2}^2 \left(
                    1 - \tldE\left[ w_i^2 \right]
                    - \sum_{j\le r}(\bbtldvvler)_j^2 
                    \left( 1 - \tldE\left[w_j^2\right]\right)\right)
                + 4\hsigma_4^2 (1-\ha_i)\btldv_i^2\ind_{i\le r}
                \pm \BigO{\frac{\omega^4}{r}}
            \Bigg), \text{ for $i\ne k$},\\
        \frac{\rd (\bbtldvvler)_k^2}{\rd t}
        =& \gamma_t 2(\bbvvler)_1^2 \Bigg( 
                2\hsigma_{2}^2 \left(
                    1 - \tldE\left[ w_i^2 \right]
                    - \sum_{j\le r}(\bbvvler)_j^2 
                    \left( 1 - \tldE\left[w_j^2\right]\right)\right)\\
                &\qquad\qquad +\sum_{2\le j\le j_0}\hsigma_{2j}^2 
                2j(1-\ha_k)\bv_k^{2j-2}(1-(\bbvvler)_k^2)
                \pm \BigO{\frac{\omega^4}{r}}
            \Bigg).
    \end{align*}

    As a result, we also have 
    \begin{align*}
        \frac{\rd \tldv_i^2}{\rd t}
        \le&  \gamma_t 2\tldv_i^2 \Bigg(
                -\lambda
                +2\hsigma_0^2 \left(
                r - \tldE[ \norm{\vw}_2^2 ]\right) 
                + 2\hsigma_{2}^2 \left(
                    \ind_{i \le r}
                    - \tldE\left[ w_i^2 \right]
                \right)
                + 4\hsigma_4^2\btldv_i^2\ind_{i\le r}
                + \BigO{\frac{\omega^4}{r}}
            \Bigg),\text{ for $i\ne k$,}\\
        \frac{\rd \tldv_k^2}{\rd t}
        \ge&  \gamma_t 2\tldv_k^2 \Bigg(
                -\lambda
                +2\hsigma_0^2 \left(
                r - \tldE[ \norm{\vw}_2^2 ]\right) 
                + 2\hsigma_{2}^2 \left(
                    1 - \tldE\left[ w_k^2 \right]
                \right)
                +2\hsigma_4^2 \btldv_k^2
                \pm \BigO{\frac{\omega^4}{r}}
            \Bigg),\\
    \end{align*}
\end{lemma}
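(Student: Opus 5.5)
We start from the exact coordinate-wise gradient formula of \cref{claim: tldw dynamic}, with the step-size multiplier $\gamma_t$ in front, and treat each Hermite order separately, mirroring the proof of \cref{lem: stage 2 dynamic small and dense}. The $0$th- and $2$nd-order terms are kept exactly. For $j\ge 2$ we keep two kinds of terms. The first is the ``self'' term $2j\,\btldv_i^{2j-2}\ind_{i\le r}$ together with $(2j-2)\sum_{l\le r}\btldv_l^{2j}$. The second is the part of the population moments $\tldE[\norm{\vw}_2^2\langle\bvw,\bvv\rangle^{2j-1}\bw_i/\bv_i]$ and $\tldE[\norm{\vw}_2^2\langle\bvw,\bvv\rangle^{2j}]$ coming from the aligned groups $G_i$. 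Everything else goes into the error.

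\textbf{Step 1: the aligned groups.} For $\vw\in G_l$, the IH \cref{lem: stage 2 IH}\ref{item: stage 2 IH stability G_i} gives $\bw_l^2\ge 1-\epsdir$. Hence such $\vw$ contribute $\ha_l\bv_l^{2j-2}\ind_{l=i}$ to the first moment and $\ha_l\bv_l^{2j}$ to the second, up to $O(\sqrt{\epsdir})$ corrections. These corrections are divided by $|\bv_i|$, and the extra factor $v_i$ in front turns them into an additive $\pm O(r\sqrt{\epsdir})$.

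Combining this with the target terms $2j\bv_i^{2j-2}\ind_{i\le r}$ and $\sum_l\bv_l^{2j}$ produces the factors $(1-\ha_i)$ and $(1-\ha_l)$. Because $\tldvv\in\Spotk$, the implication \cref{lem: stage 2 IH implication}\ref{item: stage 2 IH implication sum v4} gives $\sum_l \btldv_l^{2j}\le \btldv_k^{2j}+O(\omega^2/r)$. So only the $k$-th coordinate survives in the sum, and for $i\ne k$, $j\ge 3$ the self term is $O(\omega^2\log^2 d/r^2)$ by \cref{lem: stage 2 IH implication}\ref{item: stage 2 IH implication small and dense dir}. Note that $\Spotk$ is disjoint from the other $\Spot$ sets by \cref{lem: init}.

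\textbf{Step 2: the other neurons.} The contribution of dense neurons and of non-aligned potential neurons is bounded by $O(\omega^4/r)$ using \cref{lem: stage 2 dynamic bound}, exactly as in the dense case. The potential neurons that are large but not yet aligned are handled by \cref{lem: stage 2 IH}\ref{item: stage 2 IH norm bound or basis like}: each has norm at most $\sigma_1$, and there are at most $m^{o(1)}$ of them, which gives the $\sigma_1^2/m^{0.9}$ term. Orders $j>j_0=r^2$ are negligible. Their tails are controlled by $\sum_j 2j\hsigma_{2j}^2\lesssim1$ (\cref{claim: hermite coeff}), together with the regularity lower bound on $|\bv_i|$, as in the $j>r/2$ case of \cref{lem: stage 1.2 dynamic}.

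\textbf{Step 3: derived equations.} The equations for $\norm{\tldvvler}$ and $\norm{\tldvv}$ follow by summing $v_i\,\rd v_i/\rd t$. In these sums the additive errors cancel or are absorbed, since the moment terms appear symmetrically, as in the dense proof. Here $\sum_i \bv_i\cdot 2j\bv_i^{2j-2}\bv_i$ restricted to $i\le r$, normalized by $\norm{\tldvvler}^2/\norm{\tldvv}^2$, gives the prefactor $2j\norm{\tldvv}^2/\norm{\tldvvler}^2-(2j-2)$. The equations for $(\bbtldvvler)_i^2$ come from the quotient rule. For $i\ne k$ only an upper bound is claimed, so we drop the nonnegative moment term, which is nonnegative by conditional symmetry in \cref{claim: tldw dynamic}, just as in the dense proof. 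The same argument gives the two one-sided inequalities for $\tldv_i^2$, $i\ne k$, and $\tldv_k^2$. For the lower bound on $\tldv_k^2$, we use $\sum_{j\ge3}$ of the nonnegative self term minus the subtracted $(2j-2)\btldv_k^{2j}$ term, and bound the result below by $2\hsigma_4^2\btldv_k^2$, using $(1-\ha_k)\ge0$ from IH (e)(b) and $\btldv_k^2\le 1$.

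\textbf{Main obstacle.} The delicate step is Step 1/2: controlling the cross term $\tldE[\norm{\vw}_2^2\langle\bvw,\bvv\rangle^{2j-1}\bw_i/\bv_i]$ for $i\ne k$ when $\bv_i$ is tiny. Here the division by $\bv_i$ is what forces the additive rather than multiplicative error. I would use the even-polynomial rewriting from \cref{claim: tldw dynamic} to separate the aligned-group contribution, and then invoke \cref{lem: stage 2 dynamic bound} for the remainder.
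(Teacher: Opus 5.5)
Your overall architecture matches the paper's. You start from the exact formula of \cref{claim: tldw dynamic} and split the population into dense neurons, the aligned groups $G_l$, and the unaligned potential neurons of norm at most $\sigma_1$ (\cref{lem: stage 2 IH}\ref{item: stage 2 IH norm bound or basis like}). The $1/\bv_i$ factors become additive errors after multiplying by $v_i$. The norm and direction equations then follow by summation and the quotient rule, dropping the nonnegative moment term for the one-sided bounds. The gap is in the summation over the Hermite order $j$, which is the genuinely delicate point of this lemma. It is not the division by $\bv_i$: that is disposed of simply by making the error additive.

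First, the aligned-group correction in Step 1 is not $O(\sqrt{\epsdir})$ uniformly in $j$. What \cref{lem: <>2j bound} and \cref{lem: stage 2 dynamic bound} give is $\langle\bvw,\bvv\rangle^{2j}=\bv_k^{2j}+O(j\sqrt{\epsdir})$. This yields errors $O(j\sqrt{\epsdir})$ and $O(jr\sqrt{\epsdir})$ in the two population moments. For $\sigma(x)=|x|$, the weights are $2j\hsigma_{2j}^2\asymp j^{-3/2}$ by \cref{claim: hermite coeff}, so the accumulated error is $\sqrt{\epsdir}\sum_j j^{-1/2}=\infty$. That is exactly why the statement truncates at $j_0=r^2$: for $j\le j_0$ one keeps the $j$-dependent bounds and uses $\sum_{j\le j_0}j^2\hsigma_{2j}^2\lesssim\sqrt{j_0}=r$.

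Second, your argument for the tail $j>j_0$ would fail. The $j>r/2$ case of \cref{lem: stage 1.2 dynamic} rests on two facts.
\begin{itemize}
\item The bound $\tldE[\norm{\vw}^2\langle\bvw,\bvv\rangle^{2j-2}]\lesssim r(2/r)^{r/2}$ from \cref{lem: high order moment bound}, which requires every neuron to be spread out.
\item The lower bound $|\bv_i|\ge\iota$ from the Stage~1.2 induction hypothesis.
\end{itemize}
Neither is available in Stage~2. \cref{lem: stage 2 IH} contains no lower bound on $|\bv_i|$. Moreover, for $\tldvv\in\Spotk$ the group $G_k$ contributes about $\ha_k\bv_k^{2j}$ to these moments, while the target term is $2j\bv_k^{2j-2}$. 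Once $\bv_k^2\ge 1-\epsdir$ (e.g.\ $\tldvv\in G_k$), the first is of order $\ha_k$ and the second of order $j$, for every $j\lesssim 1/\epsdir\gg j_0$. So nothing decays in $j$, and the tail is genuinely of size $\sum_{j>j_0}j\hsigma_{2j}^2$.

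The fix is to bound the bracket of every order $j>j_0$ by a $j$-independent quantity, using the $j$-free versions in \cref{lem: stage 2 dynamic bound}. For instance, $\tldE[\norm{\vw}^2\langle\bvw,\bvv\rangle^{2j}]\le\tldE[\norm{\vw}^2\langle\bvw,\bvv\rangle^{2j_0}]\lesssim 1$, with the $1/|\bv_i|$ pieces again made additive. Then invoke the quantitative tail $\sum_{j>j_0}2j\hsigma_{2j}^2\lesssim j_0^{-1/2}=1/r$. Summability $\sum_j 2j\hsigma_{2j}^2\lesssim 1$ alone only gives an $O(1)$ error, which is too weak for the $\pm\BigO{\omega^4/r}$ in the statement.
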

\begin{proof}
    For simplicity in the proof, we drop $\sim$ and write $\tldv$ as $v$. WLOG, consider the case of $\vv\in\Spotone$.
    
    From \eqref{eq: dynamic of tldw}, we know
    \begin{align*}
        \frac{\rd v_i}{\rd t}
        =& - \sum_{j\ge 0} [\nabla_{2j,\vv}]_i - \lambda v_i, 
    \end{align*}
    where $\nabla_{2j,\vv}$ represents the gradient of $2j$-th order loss
    \begin{align*}
        -[\nabla_{0,\vv}]_i 
        =& 2\hsigma_0^2 \left(
                \sum_{j=1}^r a_j^* \norm{\vw_j^*}_2
                - \tldE[ \norm{\vw}_2^2 ]\right) v_i
        = 2\hsigma_0^2\left(r-\tldE\norm{\vw}_2^2\right) v_i,\\
        -[\nabla_{2,\vv}]_i 
        =& \hsigma_{2}^2 \left(
            2 \ind_{i \le r}
            - 2 \tldE\left[ w_i^2 \right]
            \right) v_i,\\
        -[\nabla_{2j,\vv}]_i 
        =& \hsigma_{2j}^2 \Big(
            2j \bv_i^{2j-2} \ind_{i \le r}
            - 2j \tldE\left[
                \norm{\vw}_2^2 \langle\bvw,\bvv\rangle^{2j-1} \bw_i/\bv_i
                \right]\\
            &\qquad- (2j-2) \left(
            \sum_{l\le r}\bv_l^{2j} - 
            \tldE \left[\norm{\vw}_2^2\langle \bvw,\bvv\rangle^{2j}\right]
            \right)\Big) v_i,\quad \text{for $j\ge 2$.}
    \end{align*}    

    \paragraph{Dynamic of $v_i$.} 
    We first bound the each term in $\nabla_{2j,\vv}$ for $j\ge 2$. 
    
    We have for 
    \begin{align*}
        \bv_i^{2j-2}
        \myeq{a}\left\{\begin{array}{ll}
           \bv_i^{2}     & \text{, $j=2$} \\
           \bv_i^{2j-2}  & \text{, $i=1$}\\
           \BigO{\frac{\omega^2\log^2 d}{r^2}} & \text{, otherwise}
        \end{array}\right.,\quad
        \sum_{l\le r}\bv_l^{2j}
        = \bv_1^{2j} \pm \sum_{2\le l\le r}\bv_l^{2j}
        \myeq{b} \bv_1^{2j} \pm \BigO{\frac{\omega^2}{r}} & \text{, for $j\ge 2$} 
    \end{align*}
    where (a) we use \cref{lem: stage 2 IH implication}\ref{item: stage 2 IH implication small and dense dir}; 
    (b) \cref{lem: stage 2 IH implication}\ref{item: stage 2 IH implication sum v4}.
    
    By \cref{lem: stage 2 dynamic bound} we have 
    \begin{align*}
        \tldE \left[\norm{\vw}_2^2\langle \bvw,\bvv\rangle^{2j}\right] 
        &= \ha_1\bv_1^{2j} + \BigO{\frac{\omega^4}{r}+jr\sqrt{\epsdir}},\\
        \tldE [\norm{\vw}_2^2\langle\bvw,\bvv\rangle^{2j-1}\bw_i/\bv_i]
        &=\left\{\begin{array}{ll}
        \ha_i\bv_i^2 + \BigO{\frac{\omega^4}{r}} &,\ j=2\\
        \ha_i\bv_i^{2j-2}\ind_{i=1} + \BigO{\frac{\sigma_1^2}{m^{0.9}} \frac{1}{|\bv_i|} + \frac{\sqrt{\epsdir}}{|\bv_i|} +  \frac{\omega^4}{r}+j\sqrt{\epsdir}} &,\ j\ge 3 \\
        \le \ha_i\bv_i^{4}\ind_{i=1} + \BigO{\frac{\sigma_1^2}{m^{0.9}} \frac{1}{|\bv_i|} + \frac{\sqrt{\epsdir}}{|\bv_i|} +  \frac{\omega^4}{r}} &,\ j\ge 3 
        \end{array}\right. .
    \end{align*}

    For $j \ge j_0 = r^2$, we treat all gradients of these higher-order terms as error, using a crude bound from \cref{lem: stage 2 dynamic bound} that does not depend on $j$ (due to the fact that $\sum \hsigma_k^2 k^2 = \infty$)
    \begin{align*}
        \tldE \left[\norm{\vw}_2^2\langle \bvw,\bvv\rangle^{2j}\right] 
        &\le \tldE \left[\norm{\vw}_2^2\langle \bvw,\bvv\rangle^{2j_0}\right]
        \lesssim 1,\\
        \tldE [\norm{\vw}_2^2\langle\bvw,\bvv\rangle^{2j-1}\bw_i/\bv_i]
        &\le \ha_i\bv_i^{4}\ind_{i=1} + \BigO{\frac{\sigma_1^2}{m^{0.9}} \frac{1}{|\bv_i|} + \frac{\sqrt{\epsdir}}{|\bv_i|} +  \frac{\omega^4}{r}}
    \end{align*}

    Now  we have the bound for each term. Thus, we have the bound for $[\nabla_{2j,\vv}]_i$ as (recall we WLOG consider $\vv\in\Spotone$)
    \begin{align*}
        -[\nabla_{4,\vv}]_i
        &= \hsigma_4^2 v_i\left(
            4(1-\ha_i)\bv_i^2\ind_{i\le r}
            - 2(1-\ha_1)\bv_1^4
            \pm \BigO{\frac{\omega^4}{r}}
            \right)\\
        \text{for $3\le j\le j_0$, }
        -[\nabla_{2j,\vv}]_i
        &= \hsigma_{2j}^2 v_i\Bigg(
            2j(1-\ha_i)\bv_i^{2j-2}\ind_{i=1}
            -(2j-2)(1-\ha_1)\bv_1^{2j} \\
            &\qquad\qquad\qquad\pm 2j\BigO{\frac{\omega^4}{r}
            + \frac{\sigma_1^2}{m^{0.9}} \frac{1}{|\bv_i|} + \frac{\sqrt{\epsdir}}{|\bv_i|}
            + j\sqrt{\epsdir}}
            \Bigg)\\
        \text{for $j\ge j_0$, }
        |[\nabla_{2j,\vv}]_i|
        &\le \hsigma_{2j}^2 v_i 2j
            \BigO{1
            + \frac{\sigma_1^2}{m^{0.9}} \frac{1}{|\bv_i|} + \frac{\sqrt{\epsdir}}{|\bv_i|}}
    \end{align*}

    Together all of them we have and using \cref{claim: hermite coeff} that $\sum_{j\le j_0} j\hsigma_{2j}^2 \lesssim 1$, $\sum_{j\le j_0} j^2\hsigma_{2j}^2 \lesssim \sqrt{j_0}=r$, $\sum_{j\ge j_0} 2j\hsigma_{2j}^2 \lesssim 1/\sqrt{j_0}=1/r$, we have
    \begin{align*}
        \frac{\rd v_i}{\rd t}
        =&  v_i \Bigg(
                -\lambda
                +2\hsigma_0^2 \left(
                r - \tldE[ \norm{\vw}_2^2 ]\right) 
                + 2\hsigma_{2}^2 \left(
                    \ind_{i \le r}
                    - \tldE\left[ w_i^2 \right]
                \right)
                + \hsigma_4^2 \left(
                    4(1-\ha_i)\bv_i^2\ind_{i\le r}
                    - 2(1-\ha_1)\bv_1^4
                    \right)\\
                &\quad+ \sum_{3\le j\le j_0}\hsigma_{2j}^2\left(
                    2j(1-\ha_i)\bv_i^{2j-2}\ind_{i=1}
                    -(2j-2)(1-\ha_1)\bv_1^{2j}
                    \right) 
                \pm \BigO{\frac{\omega^4}{r}}
            \Bigg)
            \pm \BigO{\frac{\sigma_1^2}{m^{0.9}}+r\sqrt{\epsdir}}.
    \end{align*}

    \paragraph{Dynamic of $\norm{\vvler}$ and $\norm{\vv}$.}
    Similar as in \cref{lem: stage 2 dynamic small and dense}. From the gradient form at beginning, we have
    \begin{align*}
        &\frac{\rd}{\rd t}\norm{\vvler}_2^2
        = 2\sum_{i\in[r]} v_i \frac{\rd v_i}{\rd t}\\
        =& 2\norm{\vvler}_2^2 \Bigg(
            -\lambda
            +2\hsigma_0^2\left(r-\tldE\norm{\vw}_2^2\right)
            + 2\hsigma_2^2\sum_{i\le r}(\bbvvler)_i^2 
                \left( 1 - \tldE\left[w_i^2\right]\right)\\
            &\quad+ \sum_{j\ge 2}\hsigma_{2j}^2 \left(
                \left(
                    2j\frac{\norm{\vv}^2}{\norm{\vvler}^2} - (2j-2) 
                \right)
                \left(
                    \sum_{l\le r}\bv_l^{2j} 
                    - \tldE \left[\norm{\vw}_2^2\langle \bvw,\bvv\rangle^{2j}\right]
                    \right)
                \right)
        \Bigg).
    \end{align*}

    Similar as in item 1, for $j\ge 2$ we have (recall we WLOG consider $\vv\in\Spotone$)
    \begin{align*}
        \sum_{i\in[r]}\bv_i^{2j}
        &= \bv_1^{2j} + \sum_{2\le i \le r}\bv_i^{2j}
        \myeq{a} \bv_1^{2j} +\BigO{\frac{\omega^2}{r}},\\
        \tldE \left[\norm{\vw}_2^2\langle \bvw,\bvv\rangle^{2j}\right] 
        &\myeq{b} \ha_1\bv_1^{2j} + \BigO{\frac{\omega^4}{r}+jr\sqrt{\epsdir}}
    \end{align*}
    where (a) \cref{lem: stage 2 IH implication}\ref{item: stage 2 IH implication sum v4};
    (b) \cref{lem: stage 2 dynamic bound}.

    Then similar to item 1 we cutoff at $j_0$, using \cref{claim: hermite coeff} that $\sum_j 2j\hsigma_{2j}^2 \lesssim 1$ we have
    \begin{align*}
        \frac{\rd}{\rd t}\norm{\vvler}_2
        =& \norm{\vvler}_2 \Bigg(
            -\lambda
            +2\hsigma_0^2\left(r-\tldE\norm{\vw}_2^2\right)
            + 2\hsigma_2^2\sum_{i\le r}(\bbvvler)_i^2 
                \left( 1 - \tldE\left[w_i^2\right]\right)\\
            &\qquad+ \sum_{2\le j\le j_0}\hsigma_{2j}^2 \left(
                2j\frac{\norm{\vv}^2}{\norm{\vvler}^2} - (2j-2)
                \right) 
                (1-\ha_1)\bv_1^{2j}
            \pm \BigO{\frac{\omega^4}{r}}
        \Bigg).
    \end{align*}
    Similar argument gives the dynamic of $\norm{\vv}$.

    \paragraph{Dynamic of $\bv_i^2$.}
    We have
    \begin{align*}
        \frac{\rd (\bbvvler)_i^2}{\rd t}
        =& (\bbvvler)_i^2 \left(\frac{1}{v_i^2}\frac{\rd v_i^2}{\rd t} - \frac{1}{\norm{\vvler}_2^2} \frac{\rd}{\rd t}\norm{\vvler}_2^2\right)
    \end{align*}

    \noindent\textit{Case 1: $i\ne 1$.}
    Note from item 1 that the additional $\pm \BigO{\frac{\sigma_1^2}{m^{0.9}}+r\sqrt{\epsdir}}$ term appears due to the existence of $\tldE [\norm{\vw}_2^2\langle\bvw,\bvv\rangle^{2j-1}\bw_i/\bv_i]$ in gradient. By \cref{claim: tldw dynamic} we know it is non-negative, therefore
    \begin{align*}
        \frac{\rd (\bbvvler)_i^2}{\rd t}
        \le& 2(\bbvvler)_i^2 \Bigg( 
                2\hsigma_{2}^2 \left(
                    1 - \tldE\left[ w_i^2 \right]
                    - \sum_{j\le r}(\bbvvler)_j^2 
                    \left( 1 - \tldE\left[w_j^2\right]\right)\right)
                + 4\hsigma_4^2 (1-\ha_i)\bv_i^2\ind_{i\le r}\\
                &\qquad\qquad -\sum_{2\le j\le j_0}\hsigma_{2j}^2 
                2j\frac{\norm{\vv}^2}{\norm{\vvler}^2}
                (1-\ha_1)\bv_1^{2j}
                \pm \BigO{\frac{\omega^4}{r}}
            \Bigg)\\
        \le& 2(\bbvvler)_i^2 \Bigg( 
                2\hsigma_{2}^2 \left(
                    1 - \tldE\left[ w_i^2 \right]
                    - \sum_{j\le r}(\bbvvler)_j^2 
                    \left( 1 - \tldE\left[w_j^2\right]\right)\right)
                + 4\hsigma_4^2 (1-\ha_i)\bv_i^2\ind_{i\le r}
                \pm \BigO{\frac{\omega^4}{r}}
            \Bigg),
    \end{align*}
    where in the end we use \cref{lem: stage 2 IH}\ref{item: stage 2 IH reg}. 

    \noindent\textit{Case 2: $i=1$.}
    Using item 1 and item 2 we have
    \begin{align*}
        \frac{\rd (\bbvvler)_1^2}{\rd t}
        =& 2(\bbvvler)_1^2 \Bigg( 
                2\hsigma_{2}^2 \left(
                    1 - \tldE\left[ w_i^2 \right]
                    - \sum_{j\le r}(\bbvvler)_j^2 
                    \left( 1 - \tldE\left[w_j^2\right]\right)\right)\\
                &\qquad\qquad +\sum_{2\le j\le j_0}\hsigma_{2j}^2 
                2j(1-\ha_1)\bv_1^{2j-2}(1-(\bbvvler)_1^2)
                \pm \BigO{\frac{\omega^4}{r}}
            \Bigg)
            \pm \BigO{\frac{\sigma_1^2}{m^{0.9}}+r\sqrt{\epsdir}}
    \end{align*}
\end{proof}

\begin{lemma}[Population 0th and 2nd order term dynamic in Stage 2]\label{lem: stage 2 population 0th 2nd dynamic}
    In Stage 2, for $T_{12}\le t\le \tldT_{2}$, the dynamic for population 0th and 2nd order term can be simplified as: 
    \begin{align*}
        \frac{\rd}{\rd t} \tldE[w_i^2]
        =&  2\tldE[w_i^2] \Bigg(
                -\lambda
                + 2\hsigma_0^2 \left(
                r - \tldE[ \norm{\vw}_2^2 ]\right) 
                + 2\hsigma_{2}^2 \left(
                    \ind_{i \le r}
                    - \tldE\left[ w_i^2 \right]
                \right)                                
                \pm \BigO{\frac{\omega^4}{r}}
                \Bigg)
                + 4\hsigma_{\ge 4}^2(1-\ha_i)\ha_i\\
        \frac{\rd \tldalpha}{\rd t}
        =& 2\tldalpha \Bigg(
                -\lambda
                + 2\hsigma_0^2 \left(
                r - \tldalpha\right) 
                + 2\hsigma_{2}^2 \sum_{i\in[r]}\frac{\tldE[w_i^2]}{\tldE \norm{\vw}^2}\left(
                    1 - \tldE\left[ w_i^2 \right]
                \right)      
                + 2\hsigma_{\ge 4}^2\sum_{i\in[r]}\frac{\ha_i}{\tldE\norm{\vw}^2}(1-\ha_i)
                \pm \BigO{\frac{\omega^4}{r}}
                \Bigg),\\
        \text{for $t\ge T_{21}$, }
        \frac{\rd \ha_i}{\rd t}
        = & 2\ha_i \Bigg(
                -\lambda
                +2\hsigma_0^2 \left(
                    r - \tldE[ \norm{\vw}_2^2 ]\right) 
                + 2\hsigma_{2}^2 \left(
                    1 - \tldE\left[ w_i^2 \right]
                \right)                               
                + 2\hsigma_{\ge4}^2(1-\ha_i)\\
                &\qquad- \underbrace{\sum_{j\ge 2}2\hsigma_{2j}^2\tldE_{\Sdense}\left[\norm{\vw}^2\bw_i^{2j}\right]}_{\BigO{\frac{\omega^4}{r^2}\tldE_{\Sdense} \norm{\vw}^2}}
                \pm \BigO{r\sqrt{\epsdir}}
                \Bigg), 
    \end{align*}
    where $\hsigma_{\ge4}^2 = \sum_{j\ge 2}\hsigma_{2j}^2$.
\end{lemma}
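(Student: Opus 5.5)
The plan is to derive all three population equations by averaging the single-neuron dynamics of \cref{lem: stage 2 dynamic small and dense} and \cref{lem: stage 2 dynamic potential neuron} over the symmetrized measure $\tldmu$. We split $\tldE=\tldE_{\Sdense}+\frac1m\sum_{\Spot}$ (suitably weighted), and within $\Spot$ we further separate the aligned sets $G_i$ from the remaining potential neurons. The basic identities are
\[
\frac{\rd}{\rd t}\tldE w_i^2=2\,\tldE\Big[\tldv_i\frac{\rd \tldv_i}{\rd t}\Big],\qquad
\frac{\rd\tldalpha}{\rd t}=2\,\tldE\Big[\norm{\tldvv}\frac{\rd}{\rd t}\norm{\tldvv}\Big],\qquad
\frac{\rd\ha_i}{\rd t}=\frac2m\sum_{\vv\in G_i}\norm{\vv}\frac{\rd}{\rd t}\norm{\vv}.
\]
Into each we substitute the per-neuron formulas. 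The $0$th- and $2$nd-order parts $-\lambda+2\hsigma_0^2(r-\tldalpha)+2\hsigma_2^2(\ind_{i\le r}-\tldE w_i^2)$ are common to all neurons, so they factor out as $2\tldE[w_i^2](\cdots)$ directly.

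The higher-order parts are handled neuron class by neuron class. For dense neurons, the surviving term $4\hsigma_4^2\btldv_i^2$, multiplied by $v_i^2$ and averaged, is at most $\tldE_{\Sdense}[\norm{\vw}^2\bw_i^4]\lesssim\omega^4/r^2\cdot\tldE\norm{\vw}^2$ by \cref{lem: stage 2 IH implication}. This is absorbed into the $\BigO{\omega^4/r}\tldE w_i^2$ error, using $\tldE w_i^2\ge1/2$ from \cref{lem: stage 2 IH}. The additive errors $\sigma_1^2/m^{0.9}+\sqrt{\epsdir}$ are negligible after multiplying by $|v_i|$ and averaging.

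For aligned neurons $\vv\in G_i$, we have $\bv_i^2\ge1-\epsdir$. The higher-order bracket in \cref{lem: stage 2 dynamic potential neuron} then collapses to $\sum_{j\ge2}\hsigma_{2j}^2\big(2j(1-\ha_i)-(2j-2)(1-\ha_i)\big)=2\hsigma_{\ge4}^2(1-\ha_i)$, up to $\BigO{r\sqrt{\epsdir}}$, after using the tail cutoff at $j_0=r^2$ with $\sum_{j\ge j_0}j\hsigma_{2j}^2\lesssim1/r$. Multiplying by $2\norm{\vv}^2$ and summing over $G_i$ gives $4\hsigma_{\ge4}^2(1-\ha_i)\ha_i$ in the $\tldE w_i^2$ equation, and $2\hsigma_{\ge4}^2\sum_i\ha_i(1-\ha_i)/\tldalpha$ inside the bracket of the $\tldalpha$ equation. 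The remaining potential neurons are not aligned and have norm at most $\sigma_1$; they number only $m^{o(1)}$ by \cref{lem: init}, so their total mass is at most $\sigma_1^2m^{-0.9}$ and is an error.

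For the $\ha_i$ equation, the change of $G_i$ is not an issue because $G_i$ is frozen at $T_{21}$, and stability is given by \cref{lem: stage 2 IH}\ref{item: stage 2 IH stability G_i}. One must, however, not discard the cross term coming from dense mass in the $2j$-th moment: $\tldE[\norm{\vw}^2\langle\bvw,\bvv\rangle^{2j}]$ evaluated at $\bvv\approx\ve_i$ equals $\ha_i+\tldE_{\Sdense}[\norm{\vw}^2\bw_i^{2j}]$ plus errors. This produces the displayed negative term, bounded by $\omega^4/r^2\cdot\tldE_{\Sdense}\norm{\vw}^2$ using \cref{lem: stage 2 IH implication}.

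The main obstacle is controlling $\langle\bvw,\bvv\rangle^{2j}$ moments uniformly in $j$. This requires showing that the contributions of distinct $G_k$ with $k\neq i$ vanish (orthogonality up to $\sqrt{\epsdir}$ per order, which accumulates as $j\sqrt{\epsdir}$). We would take this from \cref{lem: stage 2 dynamic bound} and truncate at $j_0$, exactly as in the potential-neuron lemma.
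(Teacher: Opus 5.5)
Your decomposition is the one the paper uses. You split $\tldE$ over $\Sdense$ and $\Spot$ and separate the frozen aligned sets $G_k$ from the remaining potential neurons ($m^{o(1)}$ of them, norm at most $\sigma_1$). You factor out the common $0$th/$2$nd-order bracket, absorb the dense $4\hsigma_4^2\bv_i^2$ term using $\tldE w_i^2\ge 1/2$, and collapse the aligned higher-order bracket to $2\hsigma_{\ge4}^2(1-\ha_i)$. For the $\tldE w_i^2$ and $\tldalpha$ equations this goes through as you say. Truncating at $j_0=r^2$ is legitimate there, since the coordinate dynamics carry weights $2j\hsigma_{2j}^2$ and the resulting $O(1/r)$ tail is absorbed into $\BigO{\omega^4/r}\tldE w_i^2$.

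The step that does not work as described is the uniform-in-$j$ control for the $\ha_i$ equation. \cref{lem: stage 2 dynamic bound} gives $\tldE[\norm{\vw}^2\langle\bvw,\bvv\rangle^{2j}]=\ha_i\bv_i^{2j}+\BigO{\omega^2/r+jr\sqrt{\epsdir}}$. That is, it absorbs the dense contribution into an $\BigO{\omega^2/r}$ error, which is exactly the cross term you rightly said must be kept. Truncating at $j_0=r^2$ with a crude tail bound also leaves an error up to $\sum_{j>r^2}\hsigma_{2j}^2$, which is $\Theta(r^{-3})$ for $\sigma(x)=|x|$. Either way you would not reach the stated $\pm\BigO{r\sqrt{\epsdir}}$. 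That precision matters later: \cref{lem: stage 2.2 convergence} needs the $\ha_i$ dynamics accurate to $O(\epsnm)$ once the dense mass is small.

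The paper avoids truncation by working with the exact norm identity for $\vv\in G_i$, where the $2j$ and $(2j-2)$ parts have already combined into the weight $2\hsigma_{2j}^2$:
\[
\frac{\rd}{\rd t}\norm{\vv}^2
=2\norm{\vv}^2\Big(-\lambda+2\hsigma_0^2(r-\tldalpha)+2\hsigma_2^2\sum_{l}\bv_l^2\big(\ind_{l\le r}-\tldE w_l^2\big)
+\sum_{j\ge2}2\hsigma_{2j}^2\Big(\sum_{l\le r}\bv_l^{2j}-\tldE\big[\norm{\vw}^2\langle\bvw,\bvv\rangle^{2j}\big]\Big)\Big).
\]
Here $\sum_{l\le r}\bv_l^{2j}\in[1-j\epsdir,1]$. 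Applying \cref{lem: <>2j bound} pointwise to every $\vw$ gives $\tldE[\norm{\vw}^2\langle\bvw,\bvv\rangle^{2j}]=\tldE[\norm{\vw}^2\bw_i^{2j}]+\BigO{jr\sqrt{\epsdir}}$. Then split $\tldE[\norm{\vw}^2\bw_i^{2j}]$ into four pieces: $\ha_i$ up to $O(j\epsdir)$, the dense term, $O(\epsdir^j)$ from each $G_k$ with $k\ne i$, and $O(\sigma_1^2m^{-0.9})$ from unaligned potential neurons. All errors then sum over every $j$ to $\BigO{r\sqrt{\epsdir}}$, because $\sum_j j\hsigma_{2j}^2<\infty$.

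Relatedly, the $j\sqrt{\epsdir}$ accumulation comes from replacing $\langle\bvw,\bvv\rangle^{2j}$ by $\bw_i^{2j}$ across all neurons. It does not come from the $G_k$ with $k\ne i$, which contribute only $O(\epsdir^j)$.
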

\begin{proof}
    We show one by one. The proof is mostly use of \cref{lem: stage 2 dynamic small and dense} and \cref{lem: stage 2 dynamic potential neuron} and simplify them a bit.

    Recall that all neurons can be partitioned into small-and-dense neurons and potential neurons, i.e., $\Sdense \cup \Spot$. For a potential neuron in $\Spot$, according to \cref{lem: stage 2 IH}\ref{item: stage 2 IH norm bound or basis like}, it either belongs to one of the sets $G_i$ (aligned with the ground-truth vector $\ve_i$), or its norm is upper bounded by $\sigma_1^2$. We will frequently use this in the proof.
    
    \paragraph{item: dynamic of $\tldE w_i^2 $.}
    We have
    \begin{align*}
        \tldE w_i^2
        = \frac{|\Spot|}{m}\tldE_{\Spot}[w_i^2]
            + \left(1-\frac{|\Spot|}{m}\right)
            \tldE_{\Sdense}[w_i^2].
    \end{align*}

    For $\Sdense$ part, using \cref{lem: stage 2 dynamic small and dense} we have 
    \begin{align*}
        &\frac{\rd}{\rd t} \tldE_{\Sdense}[w_i^2]
        = \frac{1}{|\Sdense|}\sum_{\tldvv\in\Sdense}
            2\tldv_i\frac{\rd \tldv_i}{\rd t}\\
        =&2\tldE_{\Sdense}[w_i^2] \left(
                -\lambda
                + 2\hsigma_0^2 \left(
                r - \tldE[ \norm{\vw}_2^2 ]\right) 
                + 2\hsigma_{2}^2 \left(
                    \ind_{i \le r}
                    - \tldE\left[ w_i^2 \right]
                \right)
                \pm \BigO{\frac{\omega^4}{r}}
            \right)\\
            &\quad+ 8\hsigma_4^2 \tldE_{\Sdense}[w_i^2\bw_i^2]\ind_{i\le r}
            \pm \BigO{\frac{\sigma_1^2}{m^{0.9}}+r\sqrt{\epsdir}}.
    \end{align*}
    By \cref{lem: stage 2 IH implication}\ref{item: stage 2 IH implication small and dense norm}\ref{item: stage 2 IH implication small and dense dir} we know $\tldE_{\Sdense}[w_i^2\bw_i^2] \lesssim r \omega^2\tldE_{\Sdense}\left[(\bbvwler^\btt{0})_i^4\right]
    \lesssim \frac{\omega^4}{r}$, where we use \cref{lem: init} at the end. Thus, the above can be simplified into
    \begin{align*}
        \frac{\rd}{\rd t} \tldE_{\Sdense}[w_i^2]
        =&2\tldE_{\Sdense}[w_i^2] \left(
                -\lambda
                + 2\hsigma_0^2 \left(
                r - \tldE[ \norm{\vw}_2^2 ]\right) 
                + 2\hsigma_{2}^2 \left(
                    \ind_{i \le r}
                    - \tldE\left[ w_i^2 \right]
                \right)
                \pm \BigO{\frac{\omega^4}{r}}
            \right).
    \end{align*}

    For $\Spot$ part, using \cref{lem: stage 2 dynamic potential neuron} we have for $k\in[r]$ 
    \begin{align*}
        &\frac{\rd}{\rd t} \tldE_{\Spotk}[w_i^2]
        = \frac{1}{|\Spotk|}\sum_{\tldvv\in\Spotk}
            2\tldv_i\frac{\rd \tldv_i}{\rd t}\\
        =&  2\tldE_{\Spotk}[w_i^2] \Bigg(
                -\lambda
                + 2\hsigma_0^2 \left(
                r - \tldE[ \norm{\vw}_2^2 ]\right) 
                + 2\hsigma_{2}^2 \left(
                    \ind_{i \le r}
                    - \tldE\left[ w_i^2 \right]
                \right)                                
                \pm \BigO{\frac{\omega^4}{r}}
                \Bigg)\\
                &\quad+ 2\hsigma_4^2 \left(
                    4(1-\ha_i)\tldE_{\Spotk}[w_i^2\bw_i^2]\ind_{i\le r}
                    - 2(1-\ha_k)\tldE_{\Spotk}[w_i^2\bw_k^4]
                    \right)\\
                &\quad+ 2\sum_{3\le j\le j_0}\hsigma_{2j}^2\left(
                    2j(1-\ha_i)\tldE_{\Spotk}[w_i^2\bw_i^{2j-2}]\ind_{i=k}
                    -(2j-2)(1-\ha_k)\tldE_{\Spotk}[w_i^2\bw_k^{2j}]
                    \right) 
            \pm \BigO{\frac{\sigma_1^2}{m^{0.9}}+r\sqrt{\epsdir}}
    \end{align*}
    We have for $j\ge 0$
    \begin{align*}
        \tldE_{\Spotk}[w_i^2\bw_i^{2j}]
        \myeq{a} \tldE_{G_k}[\norm{\vw}^2\bw_i^{2j+2}]
            \pm \frac{\sigma_1^2}{m^{0.9}}
        \left\{\begin{array}{ll}
        \myeq{b} \ha_k \pm\BigO{ j\sqrt{\epsdir} + \frac{\sigma_1^2}{m^{0.9}}}   &,\ i=k  \\
        = \BigO{\epsdir+\frac{\sigma_1^2}{m^{0.9}}}     & ,\ i\ne k
        \end{array}\right.,
    \end{align*}
    where (a) \cref{lem: stage 2 IH}\ref{item: stage 2 IH norm bound or basis like};
    (b) \cref{lem: <>2j bound}.

    Thus, the above can be simplified into
    \begin{align*}
        \frac{\rd}{\rd t} \tldE_{\Spotk}[w_i^2]
        =&  2\tldE_{\Spotk}[w_i^2] \Bigg(
                -\lambda
                + 2\hsigma_0^2 \left(
                r - \tldE[ \norm{\vw}_2^2 ]\right) 
                + 2\hsigma_{2}^2 \left(
                    \ind_{i \le r}
                    - \tldE\left[ w_i^2 \right]
                \right)                                
                \pm \BigO{\frac{\omega^4}{r}}
                \Bigg)\\
                &\qquad+ 4\hsigma_{\ge 4}^2(1-\ha_i)\ha_i\ind_{i=k} 
            \pm \BigO{\frac{1}{r}},
    \end{align*}
    where $\hsigma_{\ge4}^2 = \sum_{j\ge 2}\hsigma_{2j}^2$ and $\hsigma_{\ge 2j_0}^2\lesssim 1/r$.

    Now combine $\frac{\rd}{\rd t} \tldE_{\Sdense}[w_i^2]$ and $\frac{\rd}{\rd t} \tldE_{\Spotk}[w_i^2]$ for all $k\in[i]$ we have
    \begin{align*}
        \frac{\rd}{\rd t} \tldE[w_i^2]
        =  2\tldE[w_i^2] \Bigg(
                -\lambda
                + 2\hsigma_0^2 \left(
                r - \tldE[ \norm{\vw}_2^2 ]\right) 
                + 2\hsigma_{2}^2 \left(
                    \ind_{i \le r}
                    - \tldE\left[ w_i^2 \right]
                \right)                                
                \pm \BigO{\frac{\omega^4}{r}}
                \Bigg)
                + 4\hsigma_{\ge 4}^2(1-\ha_i)\ha_i,
    \end{align*}
    where we use \cref{lem: stage 2 IH}\ref{item: stage 2 IH reg}.

    \paragraph{item: dynamic of $\tldalpha$.}
    Recall $\tldalpha=\tldE[\norm{\vw}^2]$. We have
    \begin{align*}
        \frac{\rd \tldalpha}{\rd t}
        =& \sum_{i\in[d]}\frac{\rd}{\rd t} \tldE[w_i^2]\\
        =& 2\tldalpha \Bigg(
                -\lambda
                + 2\hsigma_0^2 \left(
                r - \tldalpha\right) 
                + 2\hsigma_{2}^2 \sum_{i\in[r]}\frac{\tldE[w_i^2]}{\tldE \norm{\vw}^2}\left(
                    1 - \tldE\left[ w_i^2 \right]
                \right)      
                + 2\hsigma_{\ge 4}^2\sum_{i\in[r]}\frac{\ha_i}{\tldE\norm{\vw}^2}(1-\ha_i)
                \pm \BigO{\frac{\omega^4}{r}}
                \Bigg).
    \end{align*}

    \paragraph{item: dynamic of $\ha_i$.}

    From the proof of \cref{lem: stage 2 dynamic potential neuron} we can have
    \begin{align*}
        \frac{\rd}{\rd t}\norm{\vv}_2^2
        =& 2\norm{\vv}_2^2 \Bigg(
            -\lambda
            +2\hsigma_0^2\left(r-\tldE\norm{\vw}_2^2\right)
            + 2\hsigma_2^2\sum_{i\le r}\bv_i^2 
                \left( 1 - \tldE\left[w_i^2\right]\right)
            + \sum_{j\ge 2}2\hsigma_{2j}^2 \left(
                    \sum_{l\le r}\bv_l^{2j} 
                    - \tldE \left[\norm{\vw}_2^2\langle \bvw,\bvv\rangle^{2j}\right]
                \right)
        \Bigg).
    \end{align*}
    Since $\vv\in G_k$ we have $\bv_k^2\ge 1-\epsdir$ and using $\tldE \left[\norm{\vw}_2^2\langle \bvw,\bvv\rangle^{2j}\right]= \tldE\left[\norm{\vw}^2\bw_i^{2j}\right]\pm \BigO{jr\sqrt{\epsdir}}=\ha_k + \tldE_{\Sdense}\left[\norm{\vw}^2\bw_i^{2j}\right]+ \BigO{jr\sqrt{\epsdir}}$ from \cref{lem: <>2j bound}. 
    Thus, we can further simplified it into
    \begin{align*}
        \frac{\rd}{\rd t}\norm{\vv}_2^2
        =& 2\norm{\vv}_2^2 \Bigg(
            -\lambda
            +2\hsigma_0^2\left(r-\tldE\norm{\vw}_2^2\right)
            + 2\hsigma_2^2 \left( 1 - \tldE\left[w_i^2\right]\right)
            + 2\hsigma_{\ge4}^2 (1-\ha_k)\\
            &\qquad- \underbrace{\sum_{j\ge 2}2\hsigma_{2j}^2\tldE_{\Sdense}\left[\norm{\vw}^2\bw_i^{2j}\right]}_{\BigO{\frac{\omega^4}{r^2}\tldE_{\Sdense} \norm{\vw}^2}}
            \pm \BigO{r\sqrt{\epsdir}}
        \Bigg),
    \end{align*}
    which gives the dynamic of $\ha_k$.
\end{proof}

\subsection{Stage 2.1}\label{sec: stage 2.1}
Denote the end of Stage 2.1 as the first time that every ground-truth direction are learned.
\begin{align*}
    T_{21}
    := \inf\left\{t\ge T_1: \text{for every $i\in[r]$ there exists at least a neuron $\tldvv$ such that $\btldv_i^2\ge 1-\epsdir^2$}\right\}.
\end{align*}
We note here that the aligned error $\epsdir^2$ is much better than what require to be in $G_i$ that error $\epsdir$. This is intended for later stages to prove the stability of such alignment \cref{lem: stage 2 IH}\ref{item: stage 2 IH stability G_i}. 

The below is the main result for Stage 2.1. It shows that at the end of this stage, we have learned the target direction, but the total mass of such aligned neuron $\ha_i$ is still small.
\begin{lemma}[Main results for Stage 2.1]\label{lem: stage 2.1 main}
    Stage 2.1 ends within time $T_{21}-T_1= \BigTheta{\frac{r}{\log m}}$. The following hold at the end of Stage 2.1:
    \begin{enumerate}
        \item Learn direction: for every $i\in[r]$ we have $\sigma_1^4/m\le \ha_i \le \sigma_1^4/m^{1-o(1)}$. 
        \item Direction of neuron: for $\tldvv\not\in S_{pot,i}$ and $i\le r$, we have $(\bbtldvvler)_i^2 \lesssim \omega^{0.01}(\bbtldvvler^\btt{0})_i^2$.
        \item $\tldE w_i^2\le 1/d^2$ for $i>r$.
        \item All conditions in \cref{lem: stage 2 IH} and \cref{lem: stage 2.1 IH} hold.
        \item Loss does not change: $\L(\tldmW)\ge c(\hsigma_0^2r^2/2 + \hsigma_2^2r/2)+(1\pm c)\hsigma_{\ge4}^2r/2$ with some small constant $c$.
    \end{enumerate}
\end{lemma}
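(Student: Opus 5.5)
The plan is a continuity (induction-hypothesis) argument on $[T_1,T_{21}]$. Throughout, the dynamics of \cref{lem: stage 2 dynamic small and dense}, \cref{lem: stage 2 dynamic potential neuron} and \cref{lem: stage 2 population 0th 2nd dynamic} are taken as valid as long as the conditions of \cref{lem: stage 2 IH} hold. During this stage $\gamma_t=1$, and the aligned mass is negligible, $\ha_i\ll 1$. I would first stabilize the low-order population terms. By \cref{lem: stage 2 population 0th 2nd dynamic}, $\tldalpha$ and $\tldE w_i^2$ ($i\le r$) start at the Stage 1.2 values from \cref{lem: stage 1.2 main}. They obey a contracting system around $(r-\lambda/(2\hsigma_0^2)-O(1),\,\beta_1^*)$ with forcing $O(\omega^4/r)+O(\ha_i)$, so they stay within $O(\omega^4/r)$ of equilibrium. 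The irrelevant mass continues the Stage 1.2 competitive decay. Its log-rate is $-4\hsigma_2^2(1-\tldE w_1^2)+O(\omega^4/r)\le -\rho$ by \cref{prop: stage12-ode}-type reasoning. Over time $\Theta(r/\log m)\gg\log d$ this gives $\tldE w_i^2\le 1/d^2$ for $i>r$, which is item (c). Regularity item \ref{item: stage 2 IH reg} of \cref{lem: stage 2 IH} follows.

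The core step is the growth of the best potential neuron. For $\tldvv\in\Spotk$, put $p=(\bbtldvvler)_k^2$. By \cref{lem: stage 2 dynamic potential neuron}, with the $2\hsigma_2^2$ bracket of size $O(\omega^4/r)$ because $\tldE w_j^2$ are balanced, I get $\dot p= 2p\big(\sum_{j\ge2}2j\hsigma_{2j}^2(1-\ha_k)p^{j-1}(1-p)\pm O(\omega^4/r)\big)$. Here I use $\norm{\tldvvler}/\norm{\tldvv}\ge 1-\omega^5/r$. The initial value is $p(T_1)\approx 2\log(2m)/r$ by \cref{lem: init}(c)(i) and \cref{lem: stage 1.2 main}(b). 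Then the quartic term $8\hsigma_4^2p^2$ beats the error $O(\omega^4 p/r)$ since $p\gg\omega^4/r$. Comparing with $\dot p\asymp p^2$ gives a blow-up time $\Theta(1/p(T_1))=\Theta(r/\log m)$ for the maximizer. Afterwards $1-p$ decays exponentially, reaching $\bv_k^2\ge 1-\epsdir^2$ in an extra $O(\log(1/\epsdir))=O(\log m)\ll r/\log m$. The matching lower bound $T_{21}-T_1\gtrsim r/\log m$ holds because every neuron has $p\le(1+o(1))\,2\log(2m)/r$, so the same comparison ODE gives an upper bound on its growth. Throughout, $\dot p/p$ must be bounded above for every non-potential neuron. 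I would use the upper bound in \cref{lem: stage 2 dynamic small and dense}: $\frac{\rd}{\rd t}\log(\bbtldvvler)_i^2\le 8\hsigma_4^2\bv_i^2+O(\omega^4/r)$. For $\tldvv\notin\Spoti$ the threshold gap $(1-\omega^{-0.01})$ makes its blow-up time larger by a factor $1+\Omega(\omega^{-0.01})$. So by $T_{21}$ its coordinate has grown by at most $\omega^{0.01}$, which is item (b) and item \ref{item: stage 2 IH neuron dir}. Disjointness from \cref{lem: init}(c)(iii),(iv) keeps the off-coordinates of potential neurons small.

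For norms, the neuron norm equations agree up to $O(\omega^4/r)$ plus the potential-only term $2\hsigma_{\ge4}^2p^{j}$. This gives the dense balance item \ref{item: stage 2 IH small dense neuron norm} over time $r/\log m$, since the accumulated error is $O(\omega^4/\log m)\le \omega$. For item (a), a potential neuron's norm grows as $\exp(\int 2\hsigma_{\ge4}^2 p^j)$, which becomes large only in the final $O(\log m)$ window where $p\approx1$. I would show that the winner reaches norm squared roughly $\sigma_1^4$, using the threshold $\sigma_1$ and \ref{item: stage 2 IH norm bound or basis like}: a neuron either stays below $\sigma_1$ or has $\bv_k^2\ge1-\epsdir$, via $\epsdir=\omega r/\sigma_1^2$. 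There are at least $1$ and at most $m^{5\omega^{-0.01}}=m^{o(1)}$ such neurons, which yields $\sigma_1^4/m\le\ha_i\le\sigma_1^4/m^{1-o(1)}$. Item (e) then follows from \eqref{eq:hermite-loss-main}: the 0th and 2nd order residuals are at their Stage 1.2 levels, and $\norm{\tT_k-\tT_k^*}$ for $k\ge4$ is essentially unchanged because $\ha_i=o(1)$ and dense neurons have $\sum_i\bv_i^4\lesssim\omega^2/r$.

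I expect the main obstacle to be the timing comparison between the winner and the near-winners. The error $O(\omega^4/r)$ in $\dot p/p$ accumulates over time $r/\log m$ to $O(\omega^4/\log m)$, which must stay below the $\omega^{-0.01}$ relative gap. This is what forces the choice $\omega=\log\log\log r$ and the constraint $r\log^2 r<\log^3 d$. A second difficulty is the additive error $\sigma_1^2/m^{0.9}+r\sqrt{\epsdir}$ for tiny coordinates; I would handle it through the nonnegativity remark in \cref{claim: tldw dynamic}, which gives one-sided upper bounds.
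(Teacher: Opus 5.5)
Your overall skeleton matches the paper: a continuity argument, stabilization of the $0$th/$2$nd-order population terms, a $\dot x\asymp x^2$ comparison of the maximal coordinate against competitors separated by the $(1-\omega^{-0.01})$ threshold, and items (c) and (e) read off the population dynamics. However, two steps do not go through as written.

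\textbf{Reaching $\btldv_k^2\ge 1-\epsdir^2$.} In your ODE for $p=(\bbtldvvler)_k^2$, the error $\pm\BigO{\omega^4/r}$ sits inside the bracket and is not multiplied by $1-p$. Near $p=1$ this only gives $\frac{\rd}{\rd t}(1-p)\le -c(1-p)+\BigO{\omega^4/r}$. So $1-p$ decays to a floor of order $\omega^4/r$, not to $\epsdir^2=m^{-\Theta(1)}\ll\omega^4/r$. Passing from $\bbtldvvler$ to $\btldv$ costs another $\omega^5/r$ through $\norm{\tldvvler}/\norm{\tldvv}$. The paper avoids this by abandoning normalized coordinates once $p$ is a constant (Phase 3 in the proof of \cref{lem: stage 2.1 neuron main result}). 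It uses the cancellation $-\lambda+2\hsigma_0^2(r-\tldalpha)+2\hsigma_2^2(1-\tldE w_i^2)=\pm\BigO{\omega^5/r}$ from \cref{lem: stage 2.1 population 0th 2nd dynamic}. With it, the unnormalized mass $\sum_{j\ne k}\tldv_j^2$ stays $\lesssim\omega r$ over the stage, while $\tldv_k^2$ grows at rate $\Omega(1)$. Alignment $1-\epsdir^2$ then holds once $\tldv_k^2\gtrsim\omega r/\epsdir^2$. Since $\epsdir=\omega r/\sigma_1^2$, this is also what ties alignment to norms $\norm{\tldvv}^2$ of order $\sigma_1^4$ (up to $\poly(r)$ factors) in item (a).

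\textbf{Upper bound on $\ha_i$ at the common time $T_{21}$.} Nothing in \cref{lem: stage 2 IH}\ref{item: stage 2 IH norm bound or basis like} caps the norm of an \emph{aligned} neuron, and the count $|\Spoti|\le m^{o(1)}$ does not help. Direction $i$ may align well before $T_{21}$, which is the last of the $r$ per-direction hitting times. After alignment, its neurons grow with $\frac{\rd}{\rd t}\log\norm{\tldvv}^2\approx 4\hsigma_{\ge4}^2=\Theta(1)$.

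The initial maxima are only $\frac{2\log(2m)}{r}\bigl(1\pm\BigO{\frac{\log r}{\log m}}\bigr)$, so the per-direction blow-up times can differ by as much as order $\frac{r\log r}{\log^2 m}$. You must show that this gap, plus the $\BigO{\log r}$ spread of log-norms at coarse alignment, is $o(\log m)$. Then early directions gain only a factor $m^{o(1)}$, which item (a) and the condition $\ha_i\le 1/r$ in \cref{lem: stage 2.1 IH} both require. The paper does this with \cref{lem: stage 2.1 technical ode time gap} and the $\log z_i$ comparison, and this is exactly where $r\log^2 r<\log^3 d$ is used.

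So you have misplaced the source of that constraint and of the main difficulty. The winner-versus-near-winner comparison only needs $\omega^4/\log m\ll\omega^{-0.01}$, which holds automatically. For item (b) "by $T_{21}$", you also need only the easy relative statement $1+\BigO{\log r/\log m}\ll 1+\omega^{-0.01}$.
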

\begin{proof}
    It follows from \cref{lem: stage 2.1 neuron main result}, \cref{lem: stage 2.1 IH} and \cref{lem: stage 2.1 population 0th 2nd dynamic}. The loss bound follows from the observation in \cref{lem: stage 2.1 population 0th 2nd dynamic} that $\tldE w_i^2$ does not move much.
\end{proof}

To prove the above result, we will use the following induction hypothesis throughout this stage.
\begin{lemma}[Induction Hypothesis of Stage 2.1]\label{lem: stage 2.1 IH}
    Stage 2.1 ends within time $\BigO{r/\log m}$ and all condition in Induction Hypothesis \cref{lem: stage 2 IH} hold and the following hold:
    \begin{enumerate}

        \item \label{item: stage 2.1 IH small dense neuron norm}
        Norm of small and dense neuron remains balance between them: For any time $t\le T_{norm,sd,21}$ we have for any neuron $\tldvv,\tldvu\in \Sdense$, 
        \[
            \frac{\norm{\tldvvler}_2}{\norm{\tldvuler}_2} = \frac{\norm{\tldvvler^\btt{0}}_2}{\norm{\tldvuler^{\btt{0}}}_2}\left(1\pm \delta_{norm,sd,21}\right),\quad \frac{\norm{\tldvvgtr}_2}{\norm{\tldvugtr}_2} = \frac{\norm{\tldvvgtr^\btt{0}}_2}{\norm{\tldvugtr^{\btt{0}}}_2}\left(1\pm \delta_{norm,sd,21}\right)\]
        where $\delta_{norm,sd,21}\le \frac{\omega^6}{\log m}$. This is stronger than \cref{lem: stage 2.1 IH}\ref{item: stage 2 IH small dense neuron norm} by having smaller $\delta_{norm,sd}$.

        \item \label{item: stage 2.1 IH population 2nd order balance}
            Population 2nd order balance: For any time $t\le T_{pop2nd,21}$, we have for $i,j\le r$
            \begin{align*}
                \frac{\tldE w_i^2}{\tldE w_j^2}
                = 1 \pm \delta_{pop2nd,21},
            \end{align*}
            where $\delta_{pop2nd,21}\le \frac{\omega^5}{r}$.
        
        \item \label{item: stage 2.1 IH reg} 
            Regularity: For any time $t\le T_{reg,21}$, we have \cref{lem: stage 2 IH}\ref{item: stage 2 IH reg} hold and following stronger condition:
            \begin{enumerate}
                \item $\ha_i \le 1/r$ for $i\in[r]$
                \item $1-\frac{\lambda_0}{4\hsigma_0^2}\ge \tldE w_i^2 \ge 1/2$ for $i\in[r]$
                \item $\tldE_{\Sdense}[\norm{\vw}^2]\bw_i^4\ge \frac{1}{\omega r}$ for $i\le r$.
            \end{enumerate}    
        \end{enumerate}
        Then, $T_{2}\le \tldT_{21}:=\min\{\tldT_2, T_{norm,sd,21}, T_{pop2nd,21}, T_{reg,21}, T_{21}, T_1+\frac{c_{21}r}{\log m}\}$. That is, current stage ends before any of the above condition breaks and ends in $\BigO{\frac{r}{\log m}}$ time.
\end{lemma}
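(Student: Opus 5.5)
We will prove \cref{lem: stage 2.1 IH} with the same contradiction scheme used for \cref{lem: stage 1.1 IH} and \cref{lem: stage 1.2 IH}. Let $\tldT_{21}$ be the minimum of all the breaking times. Assume $\tldT_{21}$ equals one of the IH breaking times (those of \cref{lem: stage 2 IH} or $T_{norm,sd,21},T_{pop2nd,21},T_{reg,21}$), or the cap $T_1+c_{21}r/\log m$, rather than $T_{21}$. On $[T_1,\tldT_{21}]$ all hypotheses hold, so the simplified dynamics of \cref{lem: stage 2 dynamic small and dense}, \cref{lem: stage 2 dynamic potential neuron} and \cref{lem: stage 2 population 0th 2nd dynamic} are available. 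In each case we integrate the relevant ODE over a window of length $\BigO{r/\log m}$ and show that the condition holds with slack, which gives the contradiction.

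\textbf{Potential neurons and the time bound.} For each $i$, take the best neuron in $\Spoti$. By \cref{lem: init} and the Stage~1 direction balance (\cref{lem: stage 1.2 main}), it starts with $p=(\bbtldvvler)_i^2\approx 2\log m/r$. From the last display of \cref{lem: stage 2 dynamic potential neuron}, $\dot p\ge 2p(4\hsigma_4^2(1-\ha_i)p(1-p)+2\hsigma_2^2(\cdots)\pm O(\omega^4/r))$. The $2$nd-order bracket is $O(\omega^5/r)$ by the population balance (b), because $\tldE w_j^2$ are nearly equal for $j\le r$. Since $p\gg\omega^4/r$, the $4$th-order term dominates, and $\dot p\gtrsim p^2$ blows up in time $\BigO{1/p_0}=\BigO{r/\log m}$, reaching $1-\epsdir^2$. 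This gives $T_{21}\le T_1+c_{21}r/\log m$. The same argument, applied to neurons with $p_0\le(1-\omega^{-0.01})\max p$, shows they lag by a positive fraction of the blow-up time. This is what keeps the \emph{Sdense} direction bound (\cref{lem: stage 2 IH}\ref{item: stage 2 IH neuron dir}) at factor $\omega$. Only the amplification by $\exp(\int 4\hsigma_4^2\bv_i^2)$ with $\bv_i^2\lesssim\omega\log m/r$ over time $r/\log m$ enters, and it is bounded by a constant power.

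For the potential-norm item (\cref{lem: stage 2 IH}\ref{item: stage 2 IH norm bound or basis like}), the norm growth rate exceeds the common rate only by $O(\bv_k^4)$. So before alignment the norm stays below $\sigma_1$. Once the norm exceeds $\sigma_1$, the $\pm\sigma_1^2/m^{0.9}$ errors are negligible relative to it, and alignment only improves.

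\textbf{Dense norms, population balance, regularity.} For $\tldvv,\tldvu\in\Sdense$, the log-ratio of $\norm{\tldvvler}$ to $\norm{\tldvuler}$ has derivative $2\hsigma_2^2\sum_i((\bbtldvvler)_i^2-(\bbtldvuler)_i^2)(1-\tldE w_i^2)\pm O(\omega^4/r)$. The first sum is $O(\delta_{pop2nd,21})=O(\omega^5/r)$, because $\sum_i(\bbtldvvler)_i^2=1$ and the $\tldE w_i^2$ are balanced. Integrated over $r/\log m$, this gives $\omega^6/\log m$. For (b), the ODE for $\tldE w_i^2/\tldE w_j^2$ from \cref{lem: stage 2 population 0th 2nd dynamic} has a restoring term $4\hsigma_2^2(\tldE w_j^2-\tldE w_i^2)$ with rate $\Theta(1)$. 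The forcing is $O(\omega^4/r)+O(\ha)$, so the ratio stays $1\pm O(\omega^4/r)$ (the $\ha$ part uses (a)). For (a), we bound $\ha_i$: fewer than $m^{o(1)}$ potential neurons each have norm at most $\sigma_1$ until alignment, and after alignment they grow at rate $O(1)$ only for $O(\log)$ time before $T_{21}$. This gives $\ha_i\le\sigma_1^4/m^{1-o(1)}\ll1/r$. The remaining regularity items follow from the population ODE, using $0$th-order stability as in Stage~1.2 and the fact that irrelevant mass keeps decaying. The lower bound $\tldE_{\Sdense}\norm{\vw}^2\bw_i^4\ge1/(\omega r)$ follows from the dense direction and norm balance together with \cref{lem: init}.

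\textbf{Main obstacle.} The hardest step is the competition between the $\Theta(\log m/r)$ tensor-power drift and the $O(\omega^4/r)$ lower-order errors over the long horizon $r/\log m$. Naively, the errors integrate to $\omega^4/\log m$ in the exponent, which is only barely small. I would first try to show that the $2$nd-order bracket is actually $O(\delta_{pop2nd,21})$ and is cancelled in ratios between coordinates. This makes the errors enter only multiplicatively as $\exp(O(\omega^5/\log m))=1+o(1)$, which preserves both the blow-up time and the separation between $\Spoti$ and the rest.
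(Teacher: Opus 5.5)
Your skeleton matches the paper's. You use contradiction at the first breaking time, control dense-norm log-ratios through the balance of $\tldE w_i^2$, use a restoring ODE for the population ratios, and use boundary-sign arguments for the regularity items. However, three load-bearing steps do not go through as written.

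\emph{Dense-direction item.} The claim that the amplification $\exp(\int 8\hsigma_4^2\btldv_i^2)$, with $\btldv_i^2\lesssim\omega\log m/r$ over a window of length $\Theta(r/\log m)$, is ``bounded by a constant power'' is false. It is $e^{\Theta(\omega)}\gg\omega$, so the factor-$\omega$ hypothesis does not close. Your ``lag'' remark is the right intuition, but a dense coordinate with $q_0\approx(1-\omega^{-0.01})p_0$ obeys the same quadratic blow-up inequality as the winner. It lags by only an $\omega^{-0.01}$ fraction, so only a sharp argument works. The explicit comparison $q(t)\le\bigl(1/q_0-8\hsigma_4^2(1+O(\omega^{-1/2}))t\bigr)^{-1}$ gives $q\lesssim\omega^{0.01}q_0$, but only for $t\le(1+\omega^{-1/2})T'$ with $T'\approx(8\hsigma_4^2p_0)^{-1}$. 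You must therefore show $T_{21}$ falls inside this window. That means the time from $p\approx1/\omega$ to full alignment, and the spread of blow-up times across the $r$ directions, must both be $o(\omega^{-1/2}r/\log m)$. A generic $O(r/\log m)$ window is not enough.

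\emph{Reaching $1-\epsdir^2$ and the potential-norm item.} The step ``$\dot p\gtrsim p^2$ \dots\ reaching $1-\epsdir^2$'' fails. The normalized equation carries an additive $\pm O(\omega^5/r)$ error not multiplied by $1-p$, so it only certifies $1-p\lesssim\omega^5/r$, far above $\epsdir^2=m^{-\Theta(1)}$. The missing idea is to work with unnormalized coordinates. The off-target mass $\sum_{j\ne k}\tldv_j^2$ grows by at most a bounded factor and stays $\lesssim\omega r$, while $\tldv_k^2$ grows at rate $\Theta(1)$, so $1-\btldv_k^2\le\omega r/\tldv_k^2$. This same bound is what proves \cref{lem: stage 2 IH}\ref{item: stage 2 IH norm bound or basis like}. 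The choice $\epsdir=\omega r/\sigma_1^2$ is made exactly so that $\norm{\tldvv}\ge\sigma_1$ forces $\btldv_k^2\ge1-\epsdir$. Your ``excess growth is only $O(\btldv_k^4)$'' argument does not give this. Once $\btldv_k^2=\Theta(1)$ the excess rate is $\Theta(1)$, and the norm can become large before $1-\btldv_k^2\le\epsdir$.

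\emph{Item (a), $\ha_i\le1/r$.} The estimate ``rate $O(1)$ for $O(\log)$ time'' does not yield $\sigma_1^4/m^{1-o(1)}$. Once a direction is learned, its aligned neurons keep growing at log-rate $\approx4\hsigma_{\ge4}^2=\Theta(1)$ until the last direction is learned. If that gap were $\Theta(\log m)$, the overshoot would be $m^{\Theta(1)}$. That can push $\ha_i$ above $1/r$, and through $\tldE w_i^2$ break (b), before $T_{21}$. You need the spread of the $r$ hitting times to be $o(\log m)$. The paper gets this from \cref{lem: stage 2.1 technical ode time gap}: the best initial alignments agree up to $1\pm O(\log r/\log m)$ and the error is $b=O(\omega^5/r)$, giving spread $\lesssim r\log r/\log^2 m=o(\log m)$. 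It then compares $\log\tldv_i^2$ across directions after coarse alignment. This is exactly where the constraint $r\log^2 r<\log^3 d$ enters, and it is absent from your sketch.

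By contrast, the $O(\omega^4/r)$ errors you single out as the main obstacle are harmless, since they integrate to $O(\omega^5/\log m)$. The delicate points are the sharp blow-up comparison, the off-target mass bound, and the hitting-time spread.
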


\paragraph{Implications of Induction Hypothesis.}
We collect few useful facts for the analysis. They are implication of the induction hypothesis as well as the previous stages.

The first one is a detailed characterization of the potential neuron. It follows from initialization \cref{lem: init} and the fact that direction within target subspace does not move much as in \cref{lem: stage 1.2 main}.
\begin{lemma}[`potential' neuron]\label{lem: stage 2.1 good pot neuron}
    Under \cref{lem: init}, we have the following at time $T_{1}$ (the start of Stage 2):
    \begin{enumerate}
        \item scale of best `potential' neuron: for $i\in[r]$, then $\max_{\vv\in \Sgoodi}(\bbvvler)_i^2 =\frac{2\log(2m)}{r}\left(1\pm\errpot \right)$,
        where $\errpot=\BigO{\max\{\frac{\log r}{\log m},\frac{\log d}{r}\}}=\BigO{\frac{\log r}{\log m}}=o(1)$.
            
        \item fraction of `potential' neuron is small: $|\Spot|\le m^{o(1)}$.
            
        \item gap between coordinates in `potential neurons': 
        for $\vv\in\Spoti$, $(\bbvvler)_i^2 \ge 0.99\max_{j\in[r]\setminus\{i\}} (\bbvvler)_j^2$
        
        \item no `bad' neuron: $\Spoti\cap\Spotj=\emptyset$ for any $i,j\in[r]$.
    \end{enumerate}
\end{lemma}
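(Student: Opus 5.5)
The plan is to transfer the initialization facts of \cref{lem: init}\ref{item: init good pot bad neuron} to time $T_1$. The tool is the Stage~1 stability of in-subspace directions: by \cref{lem: stage 1.1 main}(a) and \cref{lem: stage 1.2 main}(b), for every neuron $\tldvv$ and $i,j\le r$,
\[
\frac{\tldv_i^{\btt{T_1}2}}{\tldv_j^{\btt{T_1}2}}=\frac{\tldv_i^{\btt{0}2}}{\tldv_j^{\btt{0}2}}\,(1\pm\delta),\qquad \delta\lesssim \frac{\log d}{r}+\frac{\log d}{d}.
\]
Summing over $j\le r$ gives $(\bbtldvvler^{\btt{T_1}})_i^2=(\bbtldvvler^{\btt{0}})_i^2(1\pm O(\delta))$ uniformly over neurons and coordinates. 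Hence every normalized in-subspace coordinate at $T_1$ is a $(1\pm O(\log d/r))$ multiplicative perturbation of its value at initialization. By the parameter regime $r\log^2 r<\log^3 d$, this perturbation is $o(\omega^{-0.01})$, since $\omega=\log\log\log r$ grows extremely slowly.

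\textbf{Items (a) and (c).} For (a), take the max over $\vv$ in \cref{lem: init}(c)(i); the multiplicative error $1\pm O(\log d/r)$ merges with $1\pm O(\log r/\log m)$ to give $\errpot$. Since $m\le\poly(d)$ gives $\log m\lesssim\log d$, and the regime gives $\log d/r\lesssim \log r/\log m$ up to polylog factors, $\errpot=O(\log r/\log m)$ as claimed. For (c), first check that a neuron in $\Spoti$ at $T_1$ was in the slack set $\{(\bbvvler^{\btt{0}})_i^2\ge(1-2\omega^{-0.01})\max_\vu(\bbvuler^{\btt{0}})_i^2\}$ at time $0$. This holds because
\[
(1-\omega^{-0.01})(1-O(\delta))^2\ge 1-2\omega^{-0.01}.
\]
Then \cref{lem: init}(c)(iii) gives the ratio $0.98$ at time $0$, and multiplying by $(1\pm O(\delta))^2$ still leaves at least $0.99$ after slightly adjusting constants. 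If $0.98\to0.99$ is not directly reachable, I would rerun the tail computation of (c)(iii) with $c'=0.01$, which only changes the exponent and still leaves failure probability $\le 1/m$.

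\textbf{Items (b) and (d).} Both follow from the same containment of $\Spoti$ in the slack set. Item (b) is then immediate from \cref{lem: init}(c)(ii), giving $|\Spot|\le m^{5\omega^{-0.01}}=m^{o(1)}$. Item (d) is immediate from (c)(iv): two slack sets $i\neq j$ never intersect, so $\Spoti\cap\Spotj=\emptyset$.

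The main obstacle is bookkeeping rather than new analysis. The Stage~1 direction-balance error is only $O(\log d\log r/r)$ in some statements, for instance the norm part of \cref{lem: stage 1.2 main}(b)(c). I need to confirm that the direction-ratio error within the target subspace is genuinely $o(\omega^{-0.01})$ and that it also holds for the normalized coordinates $(\bbvvler)_i$, not just raw ratios; the regime assumption $\log^2 d<r/\log^2 r$ supplies this.
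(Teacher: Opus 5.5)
Your proposal follows the paper's proof: transfer the normalized in-subspace coordinates from initialization to $T_1$ using the Stage~1 direction-balance bound, whose multiplicative error $1\pm O(\log d/r)$ is $o(\omega^{-0.01})$. Item (a) then follows from \cref{lem: init}(c)(i), and items (b)--(d) follow because $\Spoti$ is contained in the slack sets with threshold $1-2\omega^{-0.01}$. On the $0.98$ versus $0.99$ point, which the paper glosses over, $c'=0.01$ gives $0.99$ only at initialization, and the $(1-O(\log d/r))$ factor then pushes it slightly below $0.99$; take e.g.\ $c'=0.005$ instead, since the exponent $1+1/(1-c')>2$ still beats the $mr^2$ union bound for large $m$.
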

\begin{proof}
    First from \cref{lem: stage 1.2 main} we know at time $T_1$ for any neuron $\vv$ and $i\in[r]$, we have 
    \[
        (\bbvvler)_i^2 
        = \frac{v_i^2}{\norm{\vvler}_2^2}
        = \frac{v_i^{\btt{0}2}}{\norm{\vvler^\btt{0}}_2^2} (1\pm 3\delta_{dir,12})
        = (\bbvvler^\btt{0})_i^2 \left(1\pm \BigO{\frac{\log d}{r}}\right).
    \]
    Now we can use \cref{lem: init}\ref{item: init good pot bad neuron} at initialization and get the result we want. Specifically,
    
    (a) it directly follows from \cref{lem: init}\ref{item: init good pot bad neuron}(i).

    (b)(c)(d) It suffices to notice that the set in \cref{lem: init}\ref{item: init good pot bad neuron}(ii)(iii)(iv) contains $\Spot$ (threshold is $2\omega^{-0.01}$ instead of $\omega^{-0.01}$ and $\frac{\log d}{r} \ll \omega^{-0.01}$), so the same bound applies.
\end{proof}

Below are direct implications of induction hypothesis \cref{lem: stage 2.1 IH}. We collect together here for the ease of use in the later proof.
\begin{lemma}\label{lem: stage 2.1 IH implication}
    For $T_1\le t \le \tldT_{2}$, besides \cref{lem: stage 2 IH implication} we have for any neuron $\tldvv$ and $i\in[d]$
    \begin{enumerate}
        \item \label{item: stage 2.1 IH implication small and dense norm}
        for $\tldvv\in \Sdense$, $\norm{\tldvv}_2\le 1.01\sqrt{2r}$.
    \end{enumerate}
\end{lemma}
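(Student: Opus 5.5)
We argue exactly as for \cref{lem: stage 2 IH implication}\ref{item: stage 2 IH implication small and dense norm}, but replace the loose balance parameter $\delta_{norm,sd,2}\le\omega$ by the sharper $\delta_{norm,sd,21}\le \omega^6/\log m=o(1)$ from \cref{lem: stage 2.1 IH}\ref{item: stage 2.1 IH small dense neuron norm}. The plan has three steps: pass the dense-neuron norm balance from initialization to time $t$, compare an individual dense neuron with the population average, and then use the bound on $\tldalpha$.

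\emph{Step 1 (balance at time $t$).} Fix $\tldvv,\tldvu\in\Sdense$. By \cref{lem: stage 2.1 IH}\ref{item: stage 2.1 IH small dense neuron norm},
\[
\frac{\norm{\tldvvler}_2}{\norm{\tldvuler}_2}=\frac{\norm{\tldvvler^\btt{0}}_2}{\norm{\tldvuler^\btt{0}}_2}(1\pm o(1)),
\]
and the same holds for the $>r$ parts. \cref{lem: init}(a) gives
\[
\frac{\norm{\tldvvler^\btt{0}}_2}{\norm{\tldvuler^\btt{0}}_2}=1\pm \BigO{\sqrt{\log(m)/r}}=1\pm o(1),
\]
using $r\gg\log^2 m$. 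The analogous ratio of the $>r$ parts is $1\pm\BigO{\sqrt{\log m/d}}$. So all dense neurons have $\le r$ norms within a factor $1+o(1)$ of each other, and likewise for their $>r$ norms.

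\emph{Step 2 (compare with the average).} The potential neurons make up only a vanishing fraction of the particles: $|\Spot|\le m^{o(1)}$ by \cref{lem: stage 2.1 good pot neuron}, and the symmetrized copies preserve these fractions. So $\tldE_{\Sdense}\norm{\vwler}^2\le (1+o(1))\tldE\norm{\vwler}^2$, since all terms are nonnegative. By Step 1, every $\tldvv\in\Sdense$ then satisfies
\[
\norm{\tldvvler}^2\le(1+o(1))\tldE_{\Sdense}\norm{\vwler}^2\le (1+o(1))\,\tldalpha\le(1+o(1))(r+10),
\]
where the last bound is \cref{lem: stage 2 IH}\ref{item: stage 2 IH reg}(a). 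The same argument bounds $\norm{\tldvvgtr}^2$ by $(1+o(1))\tldE\norm{\vwgtr}^2$. The latter is tiny: $\tldE\norm{\vwgtr}^2\le (d-r)\cdot 2\omega/d\cdot$ smaller, and in fact $\lesssim 1/d$ at the end of Stage~1, by \cref{lem: stage 1.2 main}(e) together with \cref{lem: stage 2 IH}\ref{item: stage 2 IH reg}(e); it is certainly $o(r)$.

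\emph{Step 3 (conclude).} Adding the two pieces gives $\norm{\tldvv}^2\le (1+o(1))(r+10)+o(r)\le 1.01^2\cdot 2r$ for large $r$, which is the claim, with ample room since the target constant is $2r$. The main obstacle is Step 2. The claim needs a $1+o(1)$ comparison rather than the factor $\omega$ available in \cref{lem: stage 2 IH implication}, so the balance must be exactly the sharpened $\delta_{norm,sd,21}$. We also have to check that the averaging over $\Sdense$ and the averaging over all particles differ only by a $1+o(1)$ factor despite the potential neurons. This holds because their count is $m^{o(1)}\ll m$; and even though each potential neuron can have norm up to $\sigma_1$, they only make $\tldE_{\Sdense}$ smaller relative to $\tldE$, which is the direction we need.
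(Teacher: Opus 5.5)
Your proposal is correct and is essentially the paper's argument: the paper proves this item by rerunning the proof of \cref{lem: stage 2 IH implication}(a) (dense-neuron norm balance from the induction hypothesis and \cref{lem: init}, then comparison with the population average and $\tldalpha\le r+10$) with $\delta_{norm,sd,2}$ replaced by the sharper $\delta_{norm,sd,21}\le\omega^6/\log m$. Your Steps 1--3 do exactly this, and additionally make explicit the dense-versus-full averaging via $|\Spot|\le m^{o(1)}$ and the $>r$ part, both of which the paper leaves implicit.

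One minor slip does not affect the conclusion. \cref{lem: stage 1.2 main}(e) bounds each irrelevant coordinate, $\tldE w_i^2\lesssim 1/d$ for $i>r$. So the total irrelevant mass $\tldE\norm{\vwgtr}^2$ is $\lesssim 1$ (and at most $\omega$ by the regularity item of \cref{lem: stage 2 IH}), not $\lesssim 1/d$. It is still $o(r)$, which is all you need.
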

\begin{proof}
    We show one by one.
    \paragraph{item (a)}
    It follows the same argument as the one in \cref{lem: stage 2 IH implication} with an smaller $\delta_{norm,sd}$ in \cref{lem: stage 2.1 IH}\ref{item: stage 2.1 IH small dense neuron norm}.
\end{proof}

\subsubsection{Dynamics}
The lemmas below describe the dynamics of the small-and-dense neuron, the potential neuron, and the population dynamics, respectively for Stage 2.1. They follows from the meta dynamics in \cref{sec: stage 2 dynamics} and further simplified by Induction Hypothesis \cref{lem: stage 2.1 IH}.

\begin{lemma}[Dynamics of small-and-dense neuron in Stage 2.1]\label{lem: stage 2.1 dynamic small and dense}
    In Stage 2.1, for $T_{12}\le t\le \tldT_{21}$, the dynamic~\eqref{eq: dynamic of tldw} for small-and-dense neuron can be simplified as: for any neuron $\tldvv\in\Sdense$ and $i\in[d]$
    \begin{align*}
        \frac{\rd \tldv_i}{\rd t}
        =&  \tldv_i \left(
                -\lambda
                + 2\hsigma_0^2 \left(
                r - \tldE[ \norm{\vw}_2^2 ]\right) 
                + 2\hsigma_{2}^2 \left(
                    \ind_{i \le r}
                    - \tldE\left[ w_i^2 \right]
                \right)
                + 4\hsigma_4^2 \btldv_i^2\ind_{i\le r}
                \pm \BigO{\frac{\omega^4}{r}}
            \right)
            \pm \BigO{\frac{\sigma_1^2}{m^{0.9}}+\sqrt{\epsdir}},\\
        \frac{\rd}{\rd t}\norm{\tldvvler}_2
        =& \norm{\tldvvler}_2 \left(
            -\lambda
            + 2\hsigma_0^2\left(r-\tldE\norm{\vw}_2^2\right)
            + 2\hsigma_2^2\sum_{i\le r}(\bbtldvvler)_i^2 
                \left( 1 - \tldE\left[w_i^2\right]\right)
            \pm \BigO{\frac{\omega^4}{r}}
        \right),\\
        \frac{\rd (\bbtldvvler)_i^2}{\rd t}
        \le& 8\hsigma_4^2(\bbtldvvler)_i^2 \left( 
                 (\bbtldvvler)_i^2
                \pm \BigO{\frac{\omega^5}{r}}
                \right).
    \end{align*}
\end{lemma}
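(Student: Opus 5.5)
The first two displays are inherited from the meta-lemma \cref{lem: stage 2 dynamic small and dense}. It holds for $T_{12}\le t\le \tldT_2$, and $\tldT_{21}\le\tldT_2$ by definition, so both equations transfer verbatim. The only check is that the step size is $\gamma_t=1$ for dense neurons, which holds by \cref{lem: stage 2.1 IH implication}. All the work is therefore in the third display, the upper bound on $\frac{\rd}{\rd t}(\bbtldvvler)_i^2$. I would start from the last inequality of \cref{lem: stage 2 dynamic small and dense}:
\[
\frac{\rd (\bbtldvvler)_i^2}{\rd t}\le 2(\bbtldvvler)_i^2\Big(2\hsigma_2^2\Big(\ind_{i\le r}-\tldE w_i^2-\sum_{j\le r}(\bbtldvvler)_j^2(1-\tldE w_j^2)\Big)+4\hsigma_4^2\btldv_i^2\pm \BigO{\omega^4/r}\Big).
\]
The plan is to show that the $\hsigma_2^2$ bracket is $\BigO{\omega^5/r}$, and then that the $\hsigma_4^2$ term is at most $4\hsigma_4^2(\bbtldvvler)_i^2$ up to the same error.

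\textbf{The $\hsigma_2^2$ bracket.} Take $i\le r$, which is the relevant case. By the population balance \cref{lem: stage 2.1 IH}\ref{item: stage 2.1 IH population 2nd order balance}, for all $j\le r$,
\[
\tldE w_j^2=\tldE w_i^2\,(1\pm\omega^5/r).
\]
Hence $1-\tldE w_j^2=(1-\tldE w_i^2)\pm\BigO{\omega^5/r}$, using that $\tldE w_i^2\le 2$ from the regularity condition. Since $\sum_{j\le r}(\bbtldvvler)_j^2=1$, the weighted sum equals $(1-\tldE w_i^2)\pm\BigO{\omega^5/r}$. This cancels the leading term $1-\tldE w_i^2$, so the whole $\hsigma_2^2$ bracket is $\pm\BigO{\omega^5/r}$.

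For $i>r$ the ratio is not controlled this way. There the bracket equals $-\tldE w_i^2-(1-\tldE w_1^2)(1\pm\omega^5/r)$. This is at most $-\lambda_0/(4\hsigma_0^2)+\BigO{\omega^5/r}$, using the regularity bound $\tldE w_i^2\le 1-\lambda_0/(4\hsigma_0^2)$. It is therefore negative, and an upper bound of the stated form holds trivially, since the right-hand side $8\hsigma_4^2(\bbtldvvler)_i^2(\cdot)$ can absorb any nonpositive drift once we use the $\pm$ slack. In practice the statement is only used for $i\le r$, and I would restrict to that case.

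\textbf{The $\hsigma_4^2$ term.} I would use $\btldv_i^2\le(\bbtldvvler)_i^2$, which is immediate because $\norm{\tldvvler}\le\norm{\tldvv}$. This gives $4\hsigma_4^2\btldv_i^2\le 4\hsigma_4^2(\bbtldvvler)_i^2$. Collecting terms,
\[
\frac{\rd (\bbtldvvler)_i^2}{\rd t}\le 2(\bbtldvvler)_i^2\big(4\hsigma_4^2(\bbtldvvler)_i^2\pm\BigO{\omega^5/r}\big).
\]
Since $\hsigma_4\neq0$ by \cref{assum: activation}, we can factor out $8\hsigma_4^2$ and absorb $1/\hsigma_4^2$ into the $\BigO{\cdot}$. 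This gives the claim.

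\textbf{Main obstacle.} The delicate step is the cancellation in the $\hsigma_2^2$ bracket. It needs the sharper balance $\delta_{pop2nd,21}\le\omega^5/r$ from the Stage 2.1 induction hypothesis. The cruder $\log d/r$-type balance would leave an error too large to be dominated over the $\Theta(r/\log m)$ duration of Stage 2.1. Everything else is direct substitution.
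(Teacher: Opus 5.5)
Your proposal is correct and matches the paper's one-line proof. The first two displays are inherited from \cref{lem: stage 2 dynamic small and dense}, and the third follows from two steps: the Stage~2.1 balance $\tldE w_j^2/\tldE w_i^2=1\pm\BigO{\omega^5/r}$ from \cref{lem: stage 2.1 IH}\ref{item: stage 2.1 IH population 2nd order balance} cancels the $\hsigma_2^2$ bracket, and $\btldv_i^2\le(\bbtldvvler)_i^2$ handles the $\hsigma_4^2$ term. Your separate discussion of $i>r$ is unnecessary, since $(\bbtldvvler)_i$ only concerns the target-subspace coordinates $i\le r$, which is how the paper uses it.
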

\begin{proof}
    It follows from \cref{lem: stage 2 dynamic small and dense} and balance of $\tldE w_i^2$ in \cref{lem: stage 2.1 IH}\ref{item: stage 2.1 IH population 2nd order balance}.
\end{proof}

\begin{lemma}[Dynamics of potential neuron in Stage 2.1]\label{lem: stage 2.1 dynamic potential neuron}
    In Stage 2.1, for $T_{12}\le t\le \tldT_{21}$, the dynamic~\eqref{eq: dynamic of tldw} for potential neuron can be simplified as: for any neuron $\tldvv\in\Spotk$ and $k\in[r]$ (let $j_0=r^2$)
    \begin{align*}
        \frac{\rd \tldv_i}{\rd t}
        =&  v_i \Bigg(
                -\lambda
                +2\hsigma_0^2 \left(
                r - \tldE[ \norm{\vw}_2^2 ]\right) 
                + 2\hsigma_{2}^2 \left(
                    \ind_{i \le r}
                    - \tldE\left[ w_i^2 \right]
                \right)
                + \hsigma_4^2 \left(
                    4\btldv_i^2\ind_{i\le r}
                    - 2\btldv_k^4
                    \right)\\
                &\quad+ \sum_{3\le j\le j_0}\hsigma_{2j}^2\left(
                    2j\btldv_i^{2j-2}\ind_{i=k}
                    -(2j-2)\btldv_k^{2j}
                    \right) 
                \pm \BigO{\frac{\omega^4}{r}}
            \Bigg)
            \pm \BigO{\frac{\sigma_1^2}{m^{0.9}}+r\sqrt{\epsdir}},\\
        \frac{\rd}{\rd t}\norm{\tldvvler}_2
        =& \norm{\tldvvler}_2 \Bigg(
            -\lambda
            +2\hsigma_0^2\left(r-\tldE\norm{\vw}_2^2\right)
            + 2\hsigma_2^2\sum_{i\le r}(\bbtldvvler)_i^2 
                \left( 1 - \tldE\left[w_i^2\right]\right)\\
            &\qquad+ \sum_{2\le j\le j_0}\hsigma_{2j}^2 \left(
                2j\frac{\norm{\tldvv}^2}{\norm{\tldvvler}^2} - (2j-2)
                \right) 
                \btldv_k^{2j}
            \pm \BigO{\frac{\omega^4}{r}}
        \Bigg),\\
        \frac{\rd (\bbtldvvler)_i^2}{\rd t}
        \le& 8\hsigma_4^2(\bbtldvvler)_i^2 \Bigg( 
                 \btldv_i^2\ind_{i\le r}
                \pm \BigO{\frac{\omega^5}{r}}
            \Bigg), \text{ for $i\ne k$},\\
        \frac{\rd (\bbtldvvler)_k^2}{\rd t}
        =& 2(\bbtldvvler)_k^2 \Bigg( 
                \sum_{2\le j\le j_0}\hsigma_{2j}^2 
                2j\btldv_k^{2j-2}(1-(\bbtldvvler)_k^2)
                \pm \BigO{\frac{\omega^5}{r}}
            \Bigg).
    \end{align*}

    As a result, we also have
    \begin{align*}
        \frac{\rd \tldv_i^2}{\rd t}
        \le&  2\tldv_i^2 \Bigg(
                -\lambda
                +2\hsigma_0^2 \left(
                r - \tldE[ \norm{\vw}_2^2 ]\right) 
                + 2\hsigma_{2}^2 \left(
                    \ind_{i \le r}
                    - \tldE\left[ w_i^2 \right]
                \right)
                + 4\hsigma_4^2\btldv_i^2\ind_{i\le r}
                + \BigO{\frac{\omega^4}{r}}
            \Bigg),\text{ for $i\ne k$,}\\
        \frac{\rd \tldv_k^2}{\rd t}
        \ge&  2\tldv_k^2 \Bigg(
                -\lambda
                +2\hsigma_0^2 \left(
                r - \tldE[ \norm{\vw}_2^2 ]\right) 
                + 2\hsigma_{2}^2 \left(
                    1 - \tldE\left[ w_k^2 \right]
                \right)
                +2\hsigma_4^2 \btldv_k^2
                \pm \BigO{\frac{\omega^4}{r}}
            \Bigg),\\
    \end{align*}
\end{lemma}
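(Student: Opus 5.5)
This lemma specializes the meta-dynamics of \cref{lem: stage 2 dynamic potential neuron} to Stage~2.1, in the same way that \cref{lem: stage 2.1 dynamic small and dense} specializes \cref{lem: stage 2 dynamic small and dense}. I would take every displayed formula of \cref{lem: stage 2 dynamic potential neuron} with $\gamma_t=1$ and simplify the $\ha$-dependent and $\tldE w_i^2$-dependent factors using the Stage~2.1 induction hypothesis \cref{lem: stage 2.1 IH}. Since $t\le T_{21}\le T_\gamma$, the stepsize modification is not yet active, so $\gamma_t=1$ for every neuron, including potential ones.

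\textbf{First}, remove the factors $(1-\ha_i)$ and $(1-\ha_k)$. By \cref{lem: stage 2.1 IH}\ref{item: stage 2.1 IH reg}, $\ha_i\le 1/r$. Each replacement $(1-\ha_i)\btldv_i^{2j-2}\to\btldv_i^{2j-2}$ therefore costs $\hsigma_{2j}^2\,2j\,\ha_i$. Summed over $j\le j_0$ with $\sum_j 2j\hsigma_{2j}^2\lesssim 1$ (\cref{claim: hermite coeff}), this error is $\BigO{1/r}$, which is absorbed into $\pm\BigO{\omega^4/r}$. This gives the $v_i$ equation and the $\norm{\tldvvler}$ equation; the ratio $\norm{\tldvv}^2/\norm{\tldvvler}^2=1+\BigO{\omega^5/r}$ is bounded by \cref{lem: stage 2 IH}\ref{item: stage 2 IH reg}. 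The $\tldv_i^2$ and $\tldv_k^2$ inequalities follow directly from the corresponding ones in \cref{lem: stage 2 dynamic potential neuron}, using $1-\ha\le 1$ for the upper bound and $1-\ha\ge 1-1/r$ for the lower bound.

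\textbf{Second}, simplify the 2nd-order bracket in the $(\bbtldvvler)$ equations. By \cref{lem: stage 2.1 IH}\ref{item: stage 2.1 IH population 2nd order balance}, $\tldE w_i^2=\tldE w_j^2(1\pm\omega^5/r)$ for $i,j\le r$, and $\tldE w_i^2\le 2$. Also $\sum_{j\le r}(\bbtldvvler)_j^2=1$. Hence
\[
1-\tldE w_i^2-\sum_{j\le r}(\bbtldvvler)_j^2(1-\tldE w_j^2)=\tldE w_1^2-\tldE w_i^2\pm\BigO{\omega^5/r}=\pm\BigO{\omega^5/r}.
\]
The 2nd-order competition term is therefore error. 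For $i\ne k$ this leaves $8\hsigma_4^2(\bbtldvvler)_i^2(\btldv_i^2\pm\BigO{\omega^5/r})$. For $i=k$, the $(1-\ha_k)$ factor is removed as in the first step, and the $\pm\BigO{\sigma_1^2/m^{0.9}+r\sqrt{\epsdir}}$ additive error is divided by $(\bbtldvvler)_k^2\gtrsim\log m/r$. Because $\epsdir=m^{-\Theta(1)}$ and $\sigma_1=m^{o(1)}$-scale, that contribution is far below $\omega^5/r$ and is absorbed.

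\textbf{Main obstacle.} The delicate point is justifying that the additive errors $\BigO{\sigma_1^2/m^{0.9}+r\sqrt{\epsdir}}$ become multiplicative $\BigO{\omega^5/r}$ errors. This requires a lower bound on the relevant coordinate. For $i=k$, the bound $(\bbtldvvler)_k^2\gtrsim \log m/r$ holds for potential neurons, using \cref{lem: stage 2.1 good pot neuron} and the fact that $(\bbtldvvler)_k^2$ is nondecreasing up to error in this stage. For $i\ne k$ one cannot divide by $v_i$, so I would not divide. Instead I would reuse the sign argument of \cref{lem: stage 2 dynamic potential neuron}: the problematic term $\tldE[\norm{\vw}^2\langle\bvw,\bvv\rangle^{2j-1}\bw_i/\bv_i]$ is nonnegative by \cref{claim: tldw dynamic}, so it can be dropped in an upper bound. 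That is exactly why the $i\ne k$ statements are stated as one-sided inequalities.
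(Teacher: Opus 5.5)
Your proposal is correct and follows the paper's own route. The paper proves this lemma in one line by specializing \cref{lem: stage 2 dynamic potential neuron} with $\gamma_t=1$ and the second-order balance $\tldE w_i^2/\tldE w_j^2=1\pm\BigO{\omega^5/r}$ from \cref{lem: stage 2.1 IH}. It leaves implicit the steps you spell out: absorbing the $(1-\ha_i)$ factors via $\ha_i\le 1/r$, and controlling the additive $\BigO{\sigma_1^2/m^{0.9}+r\sqrt{\epsdir}}$ error through the lower bound on the $k$-th coordinate for $i=k$ and the nonnegativity from \cref{claim: tldw dynamic} for $i\neq k$.

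Two cosmetic corrections, neither of which affects your conclusion. First, the paper takes $\sigma_1=m^{\Theta(1)}$ with a small exponent, not $m^{o(1)}$. Second, the additive error in $\dot{\tldv}_k$ enters the $(\bbtldvvler)_k^2$ equation as a relative error divided by $|\tldv_k|$, not by $(\bbtldvvler)_k^2$; it is still polynomially small in $m$.
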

\begin{proof}
    It follows from \cref{lem: stage 2 dynamic potential neuron} and balance of $\tldE w_i^2$ in \cref{lem: stage 2.1 IH}\ref{item: stage 2.1 IH population 2nd order balance}. Due to the choice of step size $\gamma_t$, it stays as 1 in Stage 2.1. 
\end{proof}

\begin{lemma}[Population 0th and 2nd order term dynamic in Stage 2.1]\label{lem: stage 2.1 population 0th 2nd dynamic}
In Stage 2.1, for $T_{12}\le t\le \tldT_{21}$, the dynamic for population 0th and 2nd order term can be simplified as: 
    \begin{align*}
        \text{for $i\in[d]$, }
        \frac{\rd}{\rd t} \tldE[w_i^2]
        =&  2\tldE[w_i^2] \Bigg(
                -\lambda
                +2\hsigma_0^2 \left(
                r - \tldE[ \norm{\vw}_2^2 ]\right) 
                + 2\hsigma_{2}^2 \left(
                    \ind_{i \le r} - \tldE\left[ w_i^2 \right]
                \right)                
                \pm \BigO{\frac{\omega^4}{r}}
                \Bigg)\\
        \frac{\rd \tldalpha}{\rd t}
        =& 2\tldalpha \Bigg(
                -\lambda
                +2\hsigma_0^2 \left(
                r - \tldalpha\right) 
                + 2\hsigma_{2}^2 \sum_{i\in[r]}\frac{\tldE[w_i^2]}{\tldE \norm{\vw}^2}\left(
                    1 - \tldE\left[ w_i^2 \right]
                \right)      
                \pm \BigO{\frac{\omega^4}{r}}
                \Bigg),
    \end{align*}
    where $\hsigma_{\ge4}^2 = \sum_{j\ge 2}\hsigma_{2j}^2$.

    As a result, we have
    \begin{align*}
        \text{for $i,j\in[r]$, }
        \frac{\rd}{\rd t} \frac{\tldE[w_i^2]}{\tldE[w_j^2]}
        =&  4\hsigma_2^2\frac{\tldE[w_i^2]}{\tldE[w_j^2]} \Bigg(
                \left(
                    \tldE\left[w_j^2\right] - \tldE\left[ w_i^2 \right]
                \right)                
                \pm \BigO{\frac{\omega^4}{r}}
                \Bigg).
    \end{align*}
    Moreover, this implies (i) $\frac{\tldE[w_i^2]}{\tldE[w_j^2]}=1\pm\BigO{\frac{\omega^4}{r}}$ that different direction remain balanced, which proves \cref{lem: stage 2.1 IH}\ref{item: stage 2.1 IH population 2nd order balance}; (ii) for $i\le r$
    \[
        -\lambda
        +2\hsigma_0^2 \left(
        r - \tldalpha\right) 
        + 2\hsigma_{2}^2 \left(
            1 - \tldE\left[ w_i^2 \right]
        \right)                
        =\pm\BigO{\frac{\omega^5}{r}},
    \]
    and (iii) $\tldE w_i^2\le 1/d^2$ for $i>r$ at for any time $t$ that $\frac{r}{\log m}\lesssim t \le \tldT_{21}$ and $\tldE w_i^2 \ge 1 - \frac{\lambda_0}{2\hsigma_0^2} - \BigO{\frac{\omega^4}{r}}$ for $i\le r$.
\end{lemma}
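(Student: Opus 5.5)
The plan is to specialize \cref{lem: stage 2 population 0th 2nd dynamic} to Stage 2.1 using the regularity clause of \cref{lem: stage 2.1 IH}. The only term in the Stage 2 population dynamics that differs from the claimed form is the $\ge4$th-order contribution $4\hsigma_{\ge4}^2(1-\ha_i)\ha_i$ in $\frac{\rd}{\rd t}\tldE w_i^2$, and the analogous $2\hsigma_{\ge4}^2\sum_i\ha_i(1-\ha_i)/\tldalpha$ in $\frac{\rd\tldalpha}{\rd t}$. In Stage 2.1 we have $\ha_i\le 1/r$ by \cref{lem: stage 2.1 IH}\ref{item: stage 2.1 IH reg}, and also $\tldE w_i^2\ge 1/2$ for $i\le r$. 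Hence the first term satisfies $4\hsigma_{\ge4}^2\ha_i(1-\ha_i)\lesssim \frac1r\lesssim \frac1r\tldE w_i^2$, so it can be absorbed into the multiplicative error $2\tldE w_i^2\cdot\BigO{\omega^4/r}$. For $i>r$ there is no such term, since $\ha_i$ is only defined for $i\le r$ (the corresponding indicator is zero). Similarly, $\sum_{i\le r}\ha_i/\tldalpha\lesssim 1/r$ because $\tldalpha\gtrsim r$. This yields the two displayed equations.

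For the ratio dynamics with $i,j\le r$, I would take logarithmic derivatives. The shared terms $-\lambda+2\hsigma_0^2(r-\tldalpha)+2\hsigma_2^2$ cancel, leaving $4\hsigma_2^2(\tldE w_j^2-\tldE w_i^2)\pm\BigO{\omega^4/r}$.

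For consequence (i), I would assume WLOG that the ratio $R=\tldE w_i^2/\tldE w_j^2\ge1$. Then $\tldE w_j^2-\tldE w_i^2=-\tldE w_j^2(R-1)\le -\tfrac12(R-1)$, so
\[
\frac{\rd}{\rd t}(R-1)\le -c(R-1)+\BigO{\omega^4/r}.
\]
Since the ratio starts within $1\pm\BigO{1/r}$ at $T_1$ by \cref{lem: stage 1.2 main}, a Gronwall/comparison argument keeps $R-1\lesssim\omega^4/r$ uniformly in time. The point is that the bound does not accumulate over the long $\BigO{r/\log m}$ horizon, because the drift is contractive. This closes item \ref{item: stage 2.1 IH population 2nd order balance}.

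For (ii), I would set $g_i:=-\lambda+2\hsigma_0^2(r-\tldalpha)+2\hsigma_2^2(1-\tldE w_i^2)$, so that $\frac{\rd}{\rd t}\tldE w_i^2=2\tldE w_i^2(g_i\pm\BigO{\omega^4/r})$. By (i), all $g_i$ agree up to $\BigO{\omega^4/r}$. Summing over $i\le r$ and using $\tldalpha=\sum_{i\le r}\tldE w_i^2+\tldE\norm{\vwgtr}^2$, the vector $(\tldE w_i^2)_{i\le r}$ evolves approximately like a single scalar $\beta$ with
\[
\dot\beta=2\beta\bigl(-\lambda+2\hsigma_0^2(r-r\beta-\text{irrelevant mass})+2\hsigma_2^2(1-\beta)\pm\BigO{\omega^4/r}\bigr).
\]
This is a logistic equation with stiffness $\asymp r$, so it relaxes to its fixed point within time $\BigO{\log r/r}$ and then stays within $\BigO{\omega^4/r}$ of it. Dividing by the stiffness gives $g_i=\pm\BigO{\omega^5/r}$, where I am deliberately using the slack allowed by $\omega^5$. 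Near the fixed point, $r(\beta^*-\beta)\cdot 2\hsigma_0^2=\BigO{\omega^4/r}\cdot$const, which together with (iii) gives the lower bound $\tldE w_i^2\ge 1-\frac{\lambda_0}{2\hsigma_0^2}-\BigO{\omega^4/r}$.

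For (iii), with $i>r$, I would plug (ii) into the equation for $\tldE w_i^2$. This gives
\[
\frac{\rd}{\rd t}\tldE w_i^2=2\tldE w_i^2\bigl(g_1-2\hsigma_2^2(1-\tldE w_1^2)-2\hsigma_2^2\tldE w_i^2\pm\BigO{\omega^4/r}\bigr)\le -c\lambda_0\,\tldE w_i^2,
\]
using $1-\tldE w_1^2\gtrsim\lambda_0$ from \cref{lem: stage 2.1 IH}\ref{item: stage 2.1 IH reg}(b). This is exponential decay at a constant rate. Starting from $\lesssim1/d$ (\cref{lem: stage 1.2 main}), after time $C\log d\ll r/\log m$ the quantity falls below $1/d^2$, since $r\log^2r>\log^2 d\cdot\log m$ is compatible with the parameter regime $\log^2 d<r/\log^2 r$.

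The main obstacle is this last time-scale comparison, together with ensuring the error in (ii) is uniform over the whole stage rather than accumulating. I would handle both by relying on the contraction: a strict negative drift around the equilibrium controls the errors at every time, not just integrated ones.
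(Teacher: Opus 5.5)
Your proposal is correct and follows essentially the paper's route. You absorb the $\hsigma_{\ge4}^2\ha_i(1-\ha_i)$ terms using $\ha_i\le 1/r$ and $\tldE w_i^2\ge 1/2$, log-differentiate for the ratio, use a stiff (rate $\asymp r$) logistic relaxation for (ii), and get $\Theta(\lambda_0)$-rate decay for (iii). The only differences are minor. For (ii), the paper tracks the combination $2\hsigma_2^2\tldE w_i^2+2\hsigma_0^2\tldalpha$ directly; this avoids the moving fixed point your $\beta$-reduction inherits from the time-varying irrelevant mass. That moving fixed point is harmless, since the irrelevant mass is at most $\omega$ and varies at rate $\BigO{1}$, so it costs only $\BigO{\omega/r}$ in $g_i$. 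For (iii), the paper reuses the Stage~1.2 ratio argument rather than plugging in (ii).

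There is one small slip in (iii). You only need $C\log d\ll r/\log m$, i.e.\ $r\gg\log d\cdot\log m\asymp\log^2 d$, which follows from $r>\log^2 d\,\log^2 r$. The inequality $r\log^2 r>\log^2 d\cdot\log m$ you wrote is not what is needed. It actually fails in the stated regime, since $r\log^2 r<\log^3 d$ and $\log m\ge 3\log d$.
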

\begin{proof}
    The dynamic of $\tldE w_i^2 $ and $\tldalpha$ follow from \cref{lem: stage 2 population 0th 2nd dynamic} and and balance of $\tldE w_i^2$ in \cref{lem: stage 2.1 IH}\ref{item: stage 2.1 IH population 2nd order balance} and $\ha_i$ is small in \cref{lem: stage 2.1 IH}\ref{item: stage 2.1 IH reg}.

    \paragraph{item: dynamic of $\frac{\tldE w_i^2}{\tldE w_j^2}$} This directly follows from the dynamics of $\tldE w_i^2$ above.

    \paragraph{item: dynamic of $\tldE w_i^2+\tldalpha$}
    Using the above dynamics, we have for every $i\le r$
    \begin{align*}
        &\frac{\rd}{\rd t} 2\hsigma_2^2\tldE[w_i^2] + 2\hsigma_0^2\tldalpha\\
        =&  2(2\hsigma_2^2\tldE[w_i^2]+2\hsigma_0^2\tldalpha) \Bigg(
                \frac{2\hsigma_2^2\tldE w_i^2}{2\hsigma_2^2\tldE[w_i^2]+2\hsigma_0^2\tldalpha}\Bigg(
                -\lambda
                +2\hsigma_0^2 \left(
                r - \tldalpha\right) 
                + 2\hsigma_{2}^2 \left(
                    \ind_{i \le r} - \tldE\left[ w_i^2 \right]
                \right)                
                \pm \BigO{\frac{\omega^4}{r}}
                \Bigg)\\
                &\qquad\qquad\qquad\qquad+ \frac{2\hsigma_0^2\tldalpha}{2\hsigma_2^2\tldE[w_i^2]+2\hsigma_0^2\tldalpha}\Bigg(
                    -\lambda
                    +2\hsigma_0^2 \left(
                    r - \tldalpha\right) 
                    + 2\hsigma_{2}^2 \sum_{i\in[r]}\frac{\tldE[w_i^2]}{\tldE \norm{\vw}^2}\left(
                        1 - \tldE\left[ w_i^2 \right]
                    \right)      
                    \pm \BigO{\frac{\omega^4}{r}}
                    \Bigg),\\
            =&2(2\hsigma_2^2\tldE[w_i^2]+2\hsigma_0^2\tldalpha) \Bigg(
                -\lambda
                +2\hsigma_0^2 \left(
                r - \tldalpha\right) 
                + 2\hsigma_{2}^2 \left(
                    1 - \tldE\left[ w_1^2 \right]
                \right)                
                - \BigO{\frac{\omega^5}{r}}
                \Bigg),
    \end{align*}
    where at the end we use \cref{lem: stage 2.1 IH}\ref{item: stage 2.1 IH population 2nd order balance} and \cref{lem: stage 2 IH}\ref{item: stage 2 IH reg}. The showup of $\tldE w_1^2$ can be replaced by any $\tldE w_i^2$ with $i\le r$, and is due to the balance of $\tldE w_i^2$. This implies for $i\le r$
    \[
        -\lambda
        +2\hsigma_0^2 \left(
        r - \tldalpha\right) 
        + 2\hsigma_{2}^2 \left(
            1 - \tldE\left[ w_i^2 \right]
        \right)                
        =\pm\BigO{\frac{\omega^5}{r}}.
    \]

    \paragraph{item: bound on $\tldE w_i^2$ for $i>r$}
    Note that the population dynamic here is the same as the one in Stage 1.2 in \cref{lem: stage 1.2 0th 2nd order dynamic}, and therefore follow the same argument in \cref{lem: stage 1.2 convergence of 2nd order} we know $\tldE w_i^2 \lesssim e^{-\BigTheta{\lambda_0}t}$. Thus, for any $t\gtrsim \frac{r}{\log m}$, we have $\tldE w_i^2 \le 1/d^2$ for $i>r$. Similarly, we have $\tldE w_i^2 \ge 1 - \frac{\lambda_0}{2\hsigma_0^2} - \BigO{\frac{\omega^4}{r}}$.
\end{proof}

\subsubsection{Convergence}
We now are ready to characterize the dynamics of the scale of each neuron coordinate, that is, its correlation with the ground-truth direction. Specifically, we show that by the end of Stage 2.1, every ground-truth direction has at least one neuron with almost perfect correlation. In contrast, nearly all other coordinates have correlation at most $\frac{\omega \log m}{r}$. 

Note from the dynamics in \cref{lem: stage 2.1 dynamic potential neuron} \cref{lem: stage 2.1 dynamic small and dense} we can see the different directions evolve almost independently, mostly thanks to the balance between different 2nd order population term $\tldE w_i$ as \cref{lem: stage 2.1 IH}\ref{item: stage 2.1 IH population 2nd order balance}. The proof, as described in \cref{sec: dynamics main-text}, uses the properties of tensor power type dynamic to show potential neuron align with target direction, while others remain almost unchanged.
\begin{lemma}[Main results for neuron dynamics]\label{lem: stage 2.1 neuron main result}
    For $T_1\le t < \tldT_{21}$, we have within $\BigO{r/\log m}$ time, for every ground-truth direction there exists a neuron that has almost perfect correlation: that is, for every $i\in[r]$ there exists $T_{21,i}'= \Theta(r/\log m)$ such that at most at time $T_1+T_{21,i}'$,  there exists a $\tldvv\in\Spoti$ such that $\btldv_i^2\ge 1-\epsdir^2$. 

    In contrast, for all coordinates except the largest one for each potential neuron, that is for $\tldvv\not\in S_{pot,i}$ and $i\le r$, we have $(\bbtldvvler)_i^2 \lesssim \omega^{0.01}(\bbtldvvler^\btt{0})_i^2\le \frac{\omega\log m}{r}$ for all $T_1\le t\le T_1+ (1+\omega^{-1/2})\max_i T_{21,i}'$. Moreover, for $\tldvv\in\Spot\setminus\Spoti$, we have $(\bbtldvvler)_i^2 \lesssim (\bbtldvvler^\btt{0})_i^2\lesssim \frac{\log m}{r}$ for the same time period.

    Lastly, we have all neurons $\tldvv\in \Spoti$ either $\norm{\tldvv}_2 \le \sigma_1$ or $\btldv_i^2\ge 1-\epsdir$, i.e. \cref{lem: stage 2 IH}\ref{item: stage 2 IH norm bound or basis like} holds. Also $\sigma_1^4/m\le \ha_i \le \sigma_1^4/m^{1-o(1)}$ for any $i\le r$. Moreover, we know $\{\tldvv: \norm{\tldvv}\ge \sigma_1^2\}\cap\Spoti\subset G_i\subset \Spoti$ and $\{\tldvv: \norm{\tldvv}\ge \sigma_1^2\}\cap\Spoti\ne \emptyset$. That is, there are potential neurons with direction $(\bbtldvvler)_i^2\ge 1-\epsdir^2$ and their norm is at least $\sigma_1^2$.
\end{lemma}
\begin{proof}
    We will fix one $i\le r$. The argument apply to all $i\le r$ when consider each respectively. We will bound $\max_i T_{21,i}' - \min_i T_{21,i}'$ at the end. WLOG, we reset the time to 0 to simplify the notation.

    We first upper bound the coordinates except the largest one for each potential neuron (type (ii) below).
        
    We compare the dynamics of two different types of neuron coordinates:
    (i) the largest coordinate of a potential neuron for $i$-th direction, i.e., $\max_{\vv} (\bbvvler)_i^2$;
    (ii) all coordinates except the largest one for each potential neuron. This includes all coordinates of small-and-dense neurons ($\Sdense$), as well as $(\bbvvler)_j^2$ (for $j \ne i$) when $\vv \in \Spoti$.

    Denote $p,q$ as $(\bbvvler)_i^2$ for type (i) and (ii) coordinate respectively. When upperbounding type (ii), we only focus on those are larger than $\omega^6/r$. For those smaller than $\omega^6/r$ can be controlled easily and will be dealt later. We are going to upper bound $q$ and lower bound $p$. Consider 
    \begin{align*}
        \frac{\rd p}{\rd t}
        =& 2 p \left(\sum_{2\le j\le j_0}\hsigma_{2j}^2 2j \cdot p^{j-1}(1-p) \pm \BigO{\frac{\omega^5}{r}}\right),\\
        \frac{\rd q}{\rd t}
        =& 8\hsigma_4^2 q^2 \left(1+\BigO{\frac{1}{\omega}}\right)
    \end{align*}
    Let $p_0= \frac{2\log(2m)}{r}\left(1\pm\BigO{\frac{\log m}{r}}\right)$ and $\frac{\omega^6}{r}\le q_0\le p_0 \left(1-\omega^{-0.01}\right)$ to represent the initial value of these coordinate based on \cref{lem: stage 2.1 good pot neuron} and the definition of potential neuron.

    From \cref{lem: stage 2.1 dynamic small and dense} and \cref{lem: stage 2.1 dynamic potential neuron}, we know $p$ (resp. $q$) always servers as an lower bound (resp. upper bound) of type (i) (resp. type (ii)) coordinates. 

    Ignoring high order term, the above ode has the form of $\frac{\rd x}{\rd t}=a x^2$, which has solution $x=\frac{1}{\frac{1}{x_0}-at}$. Such dynamic has the property that it has a sudden transit around time $\frac{1}{ax_0}$. Intuitively, this gives the upper bound of $q$ and lower bound of $p$. We will formalize it below.

    \paragraph{Lower bound of $p$}
    We first using the lower bound of $\frac{\rd p}{\rd t}$, and estimate it in different phases and show that the total time to reach near 1 is $T_i'\lesssim r/\log m$.
    \paragraph{Phase 1: before $p$ reaches $1/\omega$}
    
    It is easy to see $\frac{\rd x}{\rd t}=a x^2$ has solution $x=\frac{1}{\frac{1}{x_0}-at}$. Thus, this phase total time is at most $T_i'=\frac{1}{8\hsigma_4^2 \left(1-\BigO{\frac{1}{\omega}}\right) \cdot p_0}=\frac{1}{8\hsigma_4^2 \left(1-\BigO{\frac{1}{\omega}}\right) \cdot \frac{2\log (2m)}{r}\left(1-\BigO{\frac{\log d}{r}}\right)}\lesssim \frac{r}{\log m}$.

    \paragraph{Phase 2: before $p$ reaches $1/3$}
    From \cref{lem: stage 2.1 dynamic potential neuron} we have a slightly weaker lower bound on $p$ (due to $1-(\bbtldvvler)_k^2$ term)
    \begin{align*}
        \frac{\rd p}{\rd t}
        =& 4\hsigma_4^2 p^2 
    \end{align*}
    On the other hand, we have have a much stronger correlation starting point of $p=1/\omega$ so this phase takes much shorter time. To see this, notice the ode still admits the same form of solution as in phase 1. Thus, the total time of this phase times at most $T_i''\lesssim \omega$ time. 

    \paragraph{Phase 3: before $p$ reaches $1-\epsdir$}
    Now due to $1-(\bvvler)_k^2$ term in \cref{lem: stage 2.1 dynamic potential neuron} we will not directly look at the dynamics of $(\bvvler)_k^2$, but instead do the following of comparing $\sum_{i\ne k} v_i^2$ and $v_k^2$.

    By \cref{lem: stage 2.1 dynamic small and dense} and \cref{lem: stage 2.1 dynamic potential neuron} and using \cref{lem: stage 2.1 population 0th 2nd dynamic} to simplify the dynamics we have for all coordinates except the largest one for each potential neuron, that is for $\vv\not\in S_{pot,k}$ and $i\neq k$
    \begin{align*}
        \frac{\rd v_i^2}{\rd t}
        \le&  2v_i^2 \Bigg(
                4\hsigma_4^2\bv_i^2\ind_{i\le r}
                + \BigO{\frac{\omega^5}{r}}
            \Bigg),
    \end{align*}
    which suggests 
    \begin{align*}
        \frac{\rd }{\rd t}\sum_{i\ne k}v_i^2
        \le&  \sum_{i\ne k}v_i^2 \cdot \BigO{\frac{\omega^5}{r}}.
    \end{align*}
    This implies $\sum_{i\ne k} v_i^2 \le \sum_{i\ne k}v_i^{\btt{T_{12}}2}2 \le \omega r$, since the total time $\tldT\lesssim r/\log m$.

    On the other hand, we show below that $v_k$ grows up for potential neuron $\vv\in\Spotk$. Again using \cref{lem: stage 2.1 dynamic potential neuron} and \cref{lem: stage 2.1 population 0th 2nd dynamic} we have
    \begin{align*}
        \frac{\rd v_k^2}{\rd t}
        \ge&  2v_k^2 \Bigg(
                2\hsigma_4^2 \bv_k^2
                - \BigO{\frac{\omega^5}{r}}
            \Bigg)
        \gtrsim v_k^2.
    \end{align*}
    This implies within time at most $T_i'''=\BigO{\log(\omega r/\epsdir)}$ so that $v_k^2 \ge \omega^2 r^2/\epsdir^2\ge \sigma_1^4$. Since $1-\bv_k^2=\frac{\sum_{j\ne k}v_j^2}{\sum_{j\ne k}v_j^2+v_k^2}\le \frac{\sum_{j\ne k}v_j^2}{v_k^2}\le \epsilon_{dir}^2$, this further implies $\bv_k^2 \ge 1-\epsdir^2$ and thus $\ha_k>0$.

    In summary, the total time for $\ha_k>0$ is $T_{21,i}'\le T_i'+T_i''+T_i''' = (1+\BigO{\omega^{-1}})T_i'\lesssim r/\log m$ for each $i\le r$. 

    \paragraph{Upper bound of $q$}
    Denote $\tldT_i'=(1+\omega^{-1/2})T_i'$. For $\tldT_{i}'$ amount of time, we can bound $q$ as
    \[
        q\le \frac{1}{\frac{1}{q_0}-8\hsigma_4^2 (1+2\omega^{-1/2})T_i'}.
    \]
    Noticing that $8\hsigma_4^2\left(1+2\omega^{-1/2}\right)T_{i}'= \left(1+\BigO{\omega^{-1/2}}\right)p_0^{-1}$ and
    \[
        \frac{1}{q_0}
        \ge \frac{1}{p_0(1-\omega^{-0.01})}
        \ge 8\hsigma_4^2\left(1+\BigO{\omega^{-1/2}}\right)T_{i}' \cdot \frac{1}{1-\BigO{\omega^{-0.01}}},
    \]
    so we have
    \[
        q \le \frac{1}{
            \frac{\BigO{\omega^{-0.01}}}{q_0}
            }
    \lesssim \omega^{0.01}q_0.\]

    Note that from \cref{lem: stage 2.1 good pot neuron} we know for $\tldvv\in\Spot\setminus\Spoti$ the gap is constant (say 0.01) instead of $\omega^{-0.01}$, so we could have better bound of $q\lesssim q_0$.

    For those type (ii) coordinates smaller than $\omega^6/r$, abuse the notation still using $q$ as their upper bound, we claim $q\le \omega^{0.01}q_0$. We have
    \[
        \frac{\rd q}{\rd t}
        = 8\hsigma_4^2 q \cdot 2\frac{\omega^{6}}{r}.
    \]
    Within $T_i'\le \BigO{r/\log m}$ time, we know $q\le q_0 e^{\BigO{\omega^6/\log m}}\le \omega^{0.01}q_0$.

    \paragraph{Bound on $\norm{\vv}$ and $\ha_i$.}
    We now compare the hitting times across different target directions more carefully. For each $i\in[r]$, let $\vv^{(i)}\in \Spoti$ be the neuron maximizing the $i$-th coordinate at time $T_1$, and denote its normalized coordinate by $p_i(t):=\left(\overline{\vv^{(i)}_{\le r}(t)}\right)_i^2$ and its unnormalized coordinate by $z_i(t):=\left(v_i^{(i)}(t)\right)^2$.
    
    We first compare the time to reach a coarse alignment. Let
    \[
    \tau_i:=\inf\left\{t\ge T_1:p_i(t)\ge 1-\frac1\omega\right\}.
    \]
    By \cref{lem: stage 2.1 good pot neuron}, initially $p_i(T_1)=\frac{2\log(2m)}{r}(1\pm O(\frac{\log r}{\log m}))$, while the error term in its dynamics in \cref{lem: stage 2.1 dynamic potential neuron} is $O(\omega^5/r)$. Applying \cref{lem: stage 2.1 technical ode time gap} with $\epsilon=1/\omega$ gives
    \[
    \max_i\tau_i-\min_i\tau_i
    \lesssim \frac{r\log r}{\log^2m}
    =o(\log m).
    \]
    
    We next compare the scales $z_i$. By the norm balance after Stage 1 and \cref{lem: stage 2.1 good pot neuron}, $z_i(T_1)=\Theta(\log m)$. Moreover, \cref{lem: stage 2.1 dynamic potential neuron,lem: stage 2.1 population 0th 2nd dynamic} imply that $z_i$ is nondecreasing, since its leading higher-order term is at least $\Omega(p_i)\gg O(\omega^5/r)$. The Phase 3 bound above also gives $\sum_{j\neq i}(v_j^{(i)})^2\lesssim \omega r$ throughout this time. Hence, at $p_i(\tau_i)=1-1/\omega$, we have $\log(c\log m)\le \log z_i(\tau_i)\le \log(C\omega^2r)$, and therefore
    \[
    \max_{i,j}\left|\log z_i(\tau_i)-\log z_j(\tau_j)\right|
    =O(\log r)=o(\log m).
    \]
    Let $\tau_*:=\max_i\tau_i$. The region $p_i\ge 1-1/\omega$ is forward invariant: at the boundary, the leading term in \cref{lem: stage 2.1 dynamic potential neuron} is $\Theta(1/\omega)$, while the error is $O(\omega^5/r)=o(1/\omega)$. Moreover, $\frac{\rd}{\rd t}\log z_i=O(1)$. Combining this with $\max_i\tau_i-\min_i\tau_i=o(\log m)$ gives
    \[
    \max_{i,j}\left|\log z_i(\tau_*)-\log z_j(\tau_*)\right|=o(\log m).
    \]
    
    After $\tau_*$, we have $p_i\ge 1-1/\omega$ for every $i$. Thus, using \cref{lem: stage 2.1 dynamic potential neuron,lem: stage 2.1 population 0th 2nd dynamic} again,
    \[
    \frac{\rd}{\rd t}\log z_i
    =
    4\hat\sigma_{\ge4}^2+\rho_i(t),
    \qquad
    |\rho_i(t)|\le \frac{C}{\omega}+o(1)=:\eta_m=o(1).
    \]
    Let $K:=\frac{\omega^2r^2}{\epsilon_{\rm dir}^2}=\sigma_1^4$ and $T_i:=\inf\{t\ge\tau_*:z_i(t)\ge K\}$. Integrating the above dynamics gives
    \[
    \frac{\log K-\log z_i(\tau_*)}
    {4\hat\sigma_{\ge4}^2+\eta_m}
    \le T_i-\tau_*
    \le
    \frac{\log K-\log z_i(\tau_*)}
    {4\hat\sigma_{\ge4}^2-\eta_m}.
    \]
    Since $\log K=O(\log m)$, this implies
    \[
    \max_iT_i-\min_iT_i
    \lesssim
    \max_{i,j}\left|\log z_i(\tau_*)-\log z_j(\tau_*)\right|
    +\eta_m\log K
    =o(\log m).
    \]
    At time $T_i$,
    \[
    1-\left(\overline{\vv^{(i)}}\right)_i^2
    \le
    \frac{\sum_{j\neq i}(v_j^{(i)})^2}{z_i}
    \lesssim
    \frac{\omega r}{K}
    \le \epsilon_{\rm dir}^2.
    \]
    Since $z_i$ remains nondecreasing and $\sum_{j\neq i}(v_j^{(i)})^2\lesssim\omega r$ continues to hold, this alignment is preserved afterwards. Hence, by time $\max_iT_i$, all $r$ target directions have been learned, and the gap between their hitting times is $o(\log m)$.
    
    During this time gap, \cref{lem: stage 2.1 dynamic potential neuron} implies that each neuron norm grows by at most a multiplicative factor $e^{o(\log m)}\le m^{o(1)}$. Therefore, $\norm{\vv}^2\lesssim m^{o(1)}\sigma_1^4$ for all potential neurons. Using $|\Spoti|\le m^{o(1)}$ from \cref{lem: stage 2.1 good pot neuron}, we obtain $\ha_i\le |\Spoti|\cdot m^{o(1)}\sigma_1^4/m\le \sigma_1^4/m^{1-o(1)}$. On the other hand, the construction above gives at least one neuron in each $G_i$ with norm at least $\sigma_1^2$, and hence $\ha_i\ge \sigma_1^4/m$. Moreover, whenever $\norm{\vv}\ge \sigma_1$, the above dynamics ensure that $\vv$ belongs to some $G_i$, i.e., $\bv_i^2\ge 1-\epsdir$ for some $i\in[r]$.

\end{proof}

We show below that $\tldE \norm{\vw}^2\bw_i^4$ is balanced, which will be used in later stage analysis.
\begin{lemma}\label{lem: stage 2.1 E bw^4 bound}
    For $T_{1}\le t\le \tldT_{21}$, we have $\frac{\tldE_{\Sdense} \norm{\vw}^2\bw_i^4}{\tldE_{\Sdense} \norm{\vw}^2\bw_j^4}=1\pm \BigO{\frac{\omega^6}{\log m}}$ and $\tldE_{\Sdense} \norm{\vw}^2\bw_i^4\ge \frac{1}{\omega r}$.
\end{lemma}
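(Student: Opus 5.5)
We need to prove two things about $\Sdense$ during Stage 2.1: the ratio $\tldE_{\Sdense}\norm{\vw}^2\bw_i^4/\tldE_{\Sdense}\norm{\vw}^2\bw_j^4 = 1\pm \BigO{\omega^6/\log m}$ for $i,j\le r$, and the lower bound $\tldE_{\Sdense}\norm{\vw}^2\bw_i^4\ge \frac{1}{\omega r}$. The plan is to factor each dense neuron's contribution as $\norm{\vw}^2\bw_i^4 = \norm{\vw}^2 \frac{\norm{\vwler}^4}{\norm{\vw}^4}(\bbvwler)_i^4$ and control each factor separately.

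\textbf{Step 1 (reference time $T_1$).} At $T_1$, \cref{lem: stage 1.2 main} gives three facts. Each dense neuron satisfies $\frac{\norm{\vwler}^2}{\norm{\vw}^2}=1-\BigO{1/r}$. The norms $\norm{\vwler}$ are balanced across neurons up to $1\pm o(1)$ relative to initialization. The in-subspace direction satisfies $(\bbvwler)_i^2 = (\bbvwler^{(0)})_i^2(1\pm\BigO{\log d/r})$. Combining these with \cref{lem: init}, the quantity $\tldE_{\Sdense}\norm{\vw}^2\bw_i^4$ is, up to a common factor $(1\pm o(1))$, $\tldalpha\cdot\tldE_{\Sdense}(\bbvwler^{(0)})_i^4 \asymp r\cdot \frac{3}{r^2}$. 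This gives the lower bound $\gtrsim 1/r \ge \frac{1}{\omega r}$ at $T_1$.

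For the ratio across $i,j$ we use the group $\Rho$. Sign flips preserve $\bw_i^4$, so they give nothing. The finite average over the initial $m$ samples $\vchi$ does vary with $i$. However, $\Sdense$ is all but $m^{o(1)}$ neurons, and each summand is bounded by $\log^2 m/r^2$ times the norm. Concentration (Bernstein over $m$ i.i.d. spherical coordinates) then gives relative fluctuation $\BigO{\sqrt{\log m/m}}$, so the ratio is $1\pm o(\omega^6/\log m)$ at $T_1$.

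\textbf{Step 2 (evolution on $[T_1,\tldT_{21}]$).} Compute $\frac{\rd}{\rd t}\log(\norm{\vw}^2\bw_i^4)$ for $\vw\in\Sdense$ using \cref{lem: stage 2.1 dynamic small and dense}. The dynamics of $\norm{\vw}$ carries a term $2\hsigma_2^2\sum_\ell(\bbvwler)_\ell^2(1-\tldE w_\ell^2)$. By \cref{lem: stage 2.1 IH}\ref{item: stage 2.1 IH population 2nd order balance} and \cref{lem: stage 2.1 population 0th 2nd dynamic}(ii), this is neuron- and coordinate-independent up to $\BigO{\omega^5/r}$. The coordinate dynamics $\frac{\rd}{\rd t}\log \bw_i^2$ equals $8\hsigma_4^2\bw_i^2 \pm \BigO{\omega^5/r}$ plus the coordinate-independent term. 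By \cref{lem: stage 2 IH implication}\ref{item: stage 2 IH implication small and dense dir} we have $\bw_i^2\lesssim \omega\log m/r$. Hence the $i$-dependent part of $\frac{\rd}{\rd t}\log(\norm{\vw}^2\bw_i^4)$ is $\BigO{\omega\log m/r+\omega^5/r}$.

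Integrating over the stage length $\BigO{r/\log m}$ (\cref{lem: stage 2.1 IH}) gives a multiplicative change of $\exp(\BigO{\omega+\omega^5/\log m})$. This is too crude for a $1\pm\omega^6/\log m$ ratio, so we must exploit cancellation. The plan is to write the population derivative as
\[
\frac{\rd}{\rd t}\tldE_{\Sdense}\norm{\vw}^2\bw_i^4 = c(t)\,\tldE_{\Sdense}\norm{\vw}^2\bw_i^4 + 16\hsigma_4^2\tldE_{\Sdense}\norm{\vw}^2\bw_i^6 \pm \BigO{\tfrac{\omega^5}{r}}\tldE_{\Sdense}\norm{\vw}^2\bw_i^4,
\]
with $c(t)$ common to all $i$. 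The sixth-moment term is the only genuinely $i$-dependent drift. Using the same concentration at $T_1$ and the fact that the dense directions move only by a factor $\omega^{0.01}$ (\cref{lem: stage 2.1 neuron main result}), we bound $\tldE\norm{\vw}^2\bw_i^6/\tldE\norm{\vw}^2\bw_i^4\lesssim \omega\log m/r$. Then $\frac{\rd}{\rd t}\log$ of the ratio is $\BigO{\omega^5/r}$, and over time $r/\log m$ this yields the claimed $1\pm\BigO{\omega^6/\log m}$.

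\textbf{Step 3 (lower bound and main obstacle).} For the lower bound, the common factor $c(t)$ involves the norm dynamics, which stays bounded thanks to \cref{lem: stage 2.1 population 0th 2nd dynamic}(ii) and \cref{lem: stage 2.1 IH}\ref{item: stage 2.1 IH small dense neuron norm}. The coordinate factor is monotone up to the $\BigO{\omega^5/r}$ error, so over time $r/\log m$ the quantity drops by at most $e^{-\BigO{\omega^5/\log m}}$, preserving $\ge \frac{1}{\omega r}$.

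The main obstacle is the sixth-moment comparison in Step 2. We need the $i$-dependent growth $16\hsigma_4^2\tldE\norm{\vw}^2\bw_i^6$ to be essentially balanced across $i$, not merely bounded. If the bound $\omega\log m/r$ only gives total drift $\BigO{\omega}$, the fallback is to show the sixth-moment ratios are themselves balanced at $T_1$ by concentration and propagate this by the same argument one moment higher. The hierarchy closes because higher moments are damped by the factors $\bw_i^2\lesssim \omega\log m/r$.
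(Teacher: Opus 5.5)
Your overall route matches the paper's. You differentiate the population quantity $\tldE_{\Sdense}\norm{\vw}^2\bw_i^4$, note that the common factor is $\pm\BigO{\omega^5/r}$ by item (ii) of \cref{lem: stage 2.1 population 0th 2nd dynamic}, isolate the nonnegative sixth-moment term $16\hsigma_4^2\tldE_{\Sdense}\norm{\vw}^2\bw_i^6$ as the only $i$-dependent drift, and integrate over the $\BigO{r/\log m}$ duration of Stage~2.1. The gap is in the one step that matters. You bound $\tldE_{\Sdense}\norm{\vw}^2\bw_i^6/\tldE_{\Sdense}\norm{\vw}^2\bw_i^4$ by the pointwise maximum $\max\bw_i^2\lesssim\omega\log m/r$. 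Integrated over time $r/\log m$, this gives a multiplicative error $e^{\BigO{\omega}}$, not $1\pm\BigO{\omega^6/\log m}$. Your next sentence, that the log-ratio drifts at rate $\BigO{\omega^5/r}$, therefore does not follow, as you acknowledge in Step~3. The moment-hierarchy fallback does not repair this. At every level you would again control the next moment ratio by the same pointwise maximum $\omega\log m/r$ and lose the same $\BigO{\omega}$, so the hierarchy never closes.

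The missing observation is that the factor $\log m$ comes from taking a maximum over the $m$ neurons; averages instead behave like spherical moments, e.g.\ $\E z_i^6/\E z_i^4=5/(r+4)$ for $z$ uniform on $\sS^{r-1}$. Concretely, for $\vw\in\Sdense$ you have $\bw_i^2\le(\bbvwler)_i^2\le\omega(\bbvwler^{(0)})_i^2$ by \cref{lem: stage 2 IH}\ref{item: stage 2 IH neuron dir}, and $\norm{\vw}^2\lesssim r$. Hence $\tldE_{\Sdense}\norm{\vw}^2\bw_i^6\lesssim\omega^3 r\,\tldE_{\Sdense}(\bbvwler^{(0)})_i^6\lesssim\omega^3/r^2$, where the last step is concentration over the $m$ initial samples, exactly as in your Step~1. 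You already had these ingredients but used them only to produce the max bound. The denominator is at least $1/(\omega r)$, either from the Stage~2.1 regularity hypothesis in \cref{lem: stage 2.1 IH}\ref{item: stage 2.1 IH reg}, which holds up to $\tldT_{21}$, or by a continuity argument.

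Together these give a sixth-to-fourth moment ratio of $\BigO{\omega^4/r}$ and a log-ratio drift of $\BigO{\omega^5/r}$. Integrating over $\BigO{r/\log m}$ then gives $1\pm\BigO{\omega^5/\log m}$. This is precisely the paper's argument: it reduces to $\tldE_{\Sdense}\bw_i^4$ via the dense-norm balance and absorbs the sixth-moment term into the $\BigO{\omega^5/r}$ error.

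With this fix your Steps~1 and~3 go through. In Step~3, what you need is that the common factor is $\pm\BigO{\omega^5/r}$, not merely bounded, since a bounded rate integrated over time $r/\log m$ could change the quantity by a large factor.
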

\begin{proof}
    Given \cref{lem: stage 2.1 IH}\ref{item: stage 2.1 IH small dense neuron norm}, it suffices to bound the ratio $\frac{\tldE_{\Sdense} \bw_i^4}{\tldE_{\Sdense} \bw_j^4}$. From \cref{lem: stage 2 dynamic small and dense}, we know
    \begin{align*}
        \frac{\rd}{\rd t}\tldE_{\Sdense} \bw_i^4
        = 4\tldE_{\Sdense}\bw_i^4\left(
                4\hsigma_4^2 \frac{\tldE_{\Sdense} \bw_i^6}{\tldE_{\Sdense} \bw_i^4} \pm \BigO{\frac{\omega^5}{r}}
            \right)
        =\pm \tldE_{\Sdense}\bw_i^4 \BigO{\frac{\omega^5}{r}}.
    \end{align*}
    Thus, $\frac{\tldE_{\Sdense} \bw_i^4}{\tldE_{\Sdense} \bw_j^4}=1\pm \BigO{\frac{\omega^5}{\log m}}$, and $\tldE_{\Sdense} \norm{\vw}^2\bw_i^4=\tldE_{\Sdense} \norm{\vw^\btt{0}}^2\bw_i^{\btt{0}4}(1\pm\BigO{\frac{\omega^6}{r}})\ge \frac{1}{\omega r}$
\end{proof}


\subsubsection{Proof of \cref{lem: stage 2.1 IH}: IH of Stage 2.1 (\cref{lem: stage 2.1 IH})}\label{sec: proof stage 2.1 IH}
\begin{proof}
    We show one by one.
    Recall $\tldT_{21}=\min\{\tldT_2, T_{pop2nd,21}, T_{dir,21}, T_{norm,sd,21}, T_{reg,21}, T_{21}, T_1+\frac{c_{21}r}{\log m}\}$ with large enough constant $c_{21}$ and $\tldT_2:=\min\{T_{dir,2},T_{norm,sd,2},T_{norm,pot,2}, T_{reg,2}\}$ defined in \cref{lem: stage 2 IH}.
    
    First, by \cref{lem: stage 2.1 neuron main result} we know $T_{21}< T_1+c_{21}r/\log m$, so $\tldT_{21}=\min\{T_{21}, \tldT_2\}$. In below, we show that $T_1+c_{21}r/\log m < \min\{\tldT_2, T_{pop2nd,21}, T_{reg,21},\}=\min\{T_{dir,21},T_{norm,sd,21},T_{norm,pot,2}, T_{pop2nd,21}, T_{reg,21}\}$ so $\tldT_{21}=T_{21}$.

    Since \cref{lem: stage 2.1 IH} is stronger than \cref{lem: stage 2 IH}, so the current proof also shows that \cref{lem: stage 2 IH} holds upto the time that Stage 2.1 ends.

    We argue each one by one as below:
    
    \paragraph{Case: $T_{dir,2}>T_1+c_{21}r/\log m$. }
    Recall the definition of $T_{dir,21}$: For any time $t\le T_{dir,21}$ we have
        \begin{itemize}
            \item for $i> r$, $\left(\bbtldvvgtr^\btt{s}\right)_i^2 \le \frac{\log^2 d}{d}$
            \item for $i\le r$ and any neuron $\tldvv\not\in \Spoti$,  $\left(\bbtldvvler^\btt{s}\right)_i^2 \le \omega\left(\bbtldvvler^\btt{0}\right)_i^{2} $
        \end{itemize}
        where $\omega=\log\log\log r$ 

    The second line directly follows from \cref{lem: stage 2.1 neuron main result}. 

    The first line we use \cref{lem: stage 2.1 dynamic small and dense} and \cref{lem: stage 2.1 dynamic potential neuron} to get (first compute $\frac{\rd}{\rd t}\norm{\vvgtr}$, then $\frac{\rd}{\rd t}(\vvgtr)_i$ we omit the details)
    \begin{align*}
        \frac{\rd}{\rd t}(\bbvvgtr)_i^2
        \lesssim \frac{\omega^3}{r}(\bbvvgtr)_i^2.
    \end{align*}
    Thus given a time of $\BigO{r/\log m}$ at most increase $(\bbvvgtr)_i^2$ to $\BigO{\frac{\log^2 d}{d}}$.

    \paragraph{Case: $T_{norm,sd,21}>T_1+c_{21}r/\log m$.}
    Recall the definition of $T_{norm,sd,21}$: For any time $t\le T_{norm,sd,21}$ we have for any neuron $\tldvv,\tldvu\in \Sdense$, 
        \[\frac{\norm{\tldvvler}_2}{\norm{\tldvuler}_2} = \frac{\norm{\tldvvler^\btt{0}}_2}{\norm{\tldvuler^{\btt{0}}}_2}\left(1\pm \delta_{norm,sd,21}\right),\quad \frac{\norm{\tldvvgtr}_2}{\norm{\tldvugtr}_2} = \frac{\norm{\tldvvgtr^\btt{0}}_2}{\norm{\tldvugtr^{\btt{0}}}_2}\left(1\pm \delta_{norm,sd,21}\right)\]
        where $\delta_{norm,sd,21}\le \frac{\omega^6}{\log m}$.
        
    Using \cref{lem: stage 2.1 dynamic small and dense} and \cref{lem: stage 2.1 population 0th 2nd dynamic} that $\tldE w_i^2$ are balanced, we have
    for different neuron $\vv,\vu$
    \begin{align*}
        \frac{\rd}{\rd t}\frac{\norm{\tldvvler}_2}{\norm{\tldvuler}_2}
        =& \frac{\norm{\tldvvler}_2}{\norm{\tldvuler}_2}
        \left(
            2\hsigma_2^2\sum_{i\le r}(\bbtldvvler)_i^2 
                \left( 1 - \tldE\left[w_i^2\right]\right)
            - 2\hsigma_2^2\sum_{i\le r}(\bbtldvuler)_i^2 
                \left( 1 - \tldE\left[w_i^2\right]\right)
            \pm \BigO{\frac{\omega^4}{r}}
        \right)\\
        =& \pm \frac{\norm{\tldvvler}_2}{\norm{\tldvuler}_2}
        \BigO{\frac{\omega^5}{r}}.
    \end{align*}
    Thus given a time of $\BigO{r/\log m}$ at most increase $\frac{\norm{\tldvvler}_2}{\norm{\tldvuler}_2}$ to $\frac{\norm{\tldvvler^\btt{0}}_2}{\norm{\tldvuler^\btt{0}}_2}\left(1+\BigO{\frac{\omega^5}{\log m}}\right)$. Setting $\delta_{norm,sd,2}\le \frac{\omega^6}{\log m}$ is enough.

    The bound on $\frac{\norm{\tldvvgtr}_2}{\norm{\tldvugtr}_2}$ follows similarly, we omit the details.
    
    \paragraph{Case: $T_{norm,pot,2}>\min\{T_{2},T_1+c_{21}r/\log m\}$.}
    This follows from \cref{lem: stage 2.1 neuron main result}.

    \paragraph{Case: $T_{pop2nd,21}>T_1+c_{21}r/\log m$.} This follows from \cref{lem: stage 2.1 population 0th 2nd dynamic}.
    
    \paragraph{Case: $T_{reg,21}>T_1+c_{21}r/\log m$.}
    Recall the following hold for any time $t\le T_{reg,21}$, (combine \cref{lem: stage 2.1 IH}\ref{item: stage 2.1 IH reg} and \cref{lem: stage 2 IH}\ref{item: stage 2 IH reg}) 
        \begin{enumerate}
            \item $\tldalpha \le \alpha+10\le r+10$
            \item $\ha_i \le 1/r$ for $i\in[r]$,
            \item $\frac{\norm{\tldvvler}}{\norm{\tldvv}}\ge 1-\frac{\omega^5}{r}$ for all neuron $\tldvv$
            \item $1-\frac{\lambda_0}{4\hsigma_0^2}\ge \tldE w_i^2 \ge 1/2$ for $i\in[r]$
            \item $\tldE \norm{\vwgtr}^2 \le \tldE \norm{\vwgtr^\btt{T_1}}^2\lesssim \log d$
            \item $\tldE_{\Sdense}[\norm{\vw}^2]\bw_i^4\ge \frac{1}{\omega r}$ for $i\le r$.
        \end{enumerate}        
        where recall $\ha_i = \frac{1}{m}\sum_{\vv\in G_i}\norm{\vv}_2^2$ and $G_i=\{\vv: \bv_i^2 \ge 1-\epsdir\}$

    For (a), it follows from \cref{lem: stage 2.1 population 0th 2nd dynamic} as when $\tldalpha=r$, we have $\frac{\rd\tldalpha}{\rd t} <0$;

    For (b), it follows from \cref{lem: stage 2.1 neuron main result} that all neurons norm remains bounded as $\norm{\vv}\le \sigma_1$ so all $\ha_i \le \sigma_1/m\le 1/r$.
    
    For (c), similar to the proof of \cref{lem: stage 1.2 0th 2nd order dynamic}, we have
    \begin{align*}
        \frac{\rd}{\rd t}\frac{\norm{\vvler}^2}{\norm{\vv}^2}
        \ge& 2\frac{\norm{\vvler}^2}{\norm{\vv}^2}\left(
            2\hsigma_2^2\left(1-\frac{\norm{\vvler}^2}{\norm{\vv}^2}\right)
            \sum_{i\le r}(\bbtldvvler)_i^2 
                \left( 1 - \tldE\left[w_i^2\right]\right)
            \pm \BigO{\frac{\omega^4}{r}}
            \right)
        \gtrsim \frac{\norm{\vvler}^2}{\norm{\vv}^2}\left(
                    1-\frac{\norm{\vvler}^2}{\norm{\vv}^2}
                    \pm \BigO{\frac{\omega^4}{r}}
                \right),
    \end{align*}
    where we use $\sum_{i\le r}(\bbtldvvler)_i^2 
                \left( 1 - \tldE\left[w_i^2\right]\right)\gtrsim 1$ from \cref{lem: stage 2.1 IH}\ref{item: stage 2.1 IH reg}.
    Thus, $\frac{\norm{\vvler}^2}{\norm{\vv}^2}$ stays above $1-\BigO{\omega^4/r}\ge 1-\omega^5/r$;
    
    For (d), it follows from \cref{lem: stage 2.1 population 0th 2nd dynamic} and at the boundary the derivative $\frac{\rd}{\rd t}\tldE w_i$ gives the right sign;

    For (e), it follows from \cref{lem: stage 2.1 population 0th 2nd dynamic} that $\frac{\rd}{\rd t}\tldE w_i^2<0$ for $i>r$, and at the end of Stage 1 we know from \cref{lem: stage 1.2 main} that $\tldE \norm{\vwgtr^\btt{T_1}}^2\lesssim \log d$;

    For (f), it follows from \cref{lem: stage 2.1 E bw^4 bound}.
    
    This finishes the proof.
\end{proof}

\subsection{Stage 2.2}\label{sec: stage 2.2}
In Stage 2.2, we further show that the neurons that aligned with ground-truth direction in previous stage now grows up to fit the norm and decrease the loss, up to the error induced by weight decay.

Denote the end of Stage 2.2 as
\[
    T_{22}:=\inf\{t\ge T_{21}: |\ha_i-\ha_*|\le \epsnm, \tldE_{\Sdense}w_i^2 \le \epsnm \text{ for all $i\le r$}\}
\]
where $\epsnm=e^{-\polylog(r)}$ and $\ha_*:=1-\frac{\lambda}{2\hsigma_0^2 r+2\hsigma_{\ge2}^2}$. In words, it says Stage 2.2 ends when we fit the norm using the target-aligned neurons and rest neurons become small. Denote $\tldT_{22}=\min\{\tldT_2, T_{22}, T_{21}+c_{22}\log(1/\epsnm)\}$ with a large enough constant $c_{22}$. We will show $\tldT_{22}=T_2$ so the induction hypothesis \cref{lem: stage 2 IH} holds throughout this stage.

\begin{lemma}\label{lem: stage 2.2 main}
    Stage 2.2 ends within time $\BigO{\log(1/\epsnm)}=\polylog(r)$. The following hold at the end of Stage 2.2:
    \begin{enumerate}
        \item Convergence: for $i\le r$, $|\ha_i-\ha_*|\le \epsnm$ and $\tldE_{\Sdense}w_i^2 \le \epsnm$, where $\epsnm=e^{-\polylog(r)}$ and $\ha_*:=1-\frac{\lambda}{2\hsigma_0^2 r+2\hsigma_{\ge2}^2}$.
        \item All condition in \cref{lem: stage 2 IH} hold.
    \end{enumerate}
\end{lemma}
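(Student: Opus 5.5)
The argument reduces the Stage 2.2 dynamics to the two-variable competitive system \eqref{eq:competitive-ode-main}, with $p=\ha_i$ and $q=\hb_i:=\tldE_{\Sdense}w_i^2$. The induction hypothesis \cref{lem: stage 2 IH} is carried along by a continuity argument. Set $\tldT_{22}=\min\{\tldT_2,T_{22},T_{21}+c_{22}\log(1/\epsnm)\}$. On $[T_{21},\tldT_{22}]$ the meta-dynamics of \cref{lem: stage 2 dynamic small and dense}, \cref{lem: stage 2 dynamic potential neuron} and \cref{lem: stage 2 population 0th 2nd dynamic} are available. From them I will derive closed equations for $\ha_i$ and $\hb_i$.

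The equation for $\ha_i$ is the third line of \cref{lem: stage 2 population 0th 2nd dynamic}. The equation for $\hb_i$ follows from summing the small-and-dense dynamics; the $4\hsigma_4^2\btldv_i^2$ term contributes only $\BigO{\omega^4/r}$ after averaging, as in that lemma's proof. The step-size modification $\gamma_t=m$ for $\norm{\vv}\ge\sigma_1^2$ only multiplies the growth rate of the $G_i$ neurons. For a short window this lets $\ha_i$ climb from $\sigma_1^4/m$ to order one in $\BigO{\log m/m}$ time, during which nothing else moves appreciably. This handles the initial smallness of $\ha_i$ given by \cref{lem: stage 2.1 main}.

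After $T_\gamma$ the system is, up to $\BigO{\omega^4/r+r\sqrt{\epsdir}}$ errors,
\begin{align*}
\dot\ha_i&=2\ha_i\bigl(-\lambda+2\hsigma_0^2(r-S)+2\hsigma_2^2(1-\ha_i-\hb_i)+2\hsigma_{\ge4}^2(1-\ha_i)\bigr),\\
\dot\hb_i&=2\hb_i\bigl(-\lambda+2\hsigma_0^2(r-S)+2\hsigma_2^2(1-\ha_i-\hb_i)\bigr).
\end{align*}
Here $S=\sum_i(\ha_i+\hb_i)$ plus the irrelevant mass, which is at most $1/d^2$ per coordinate by \cref{lem: stage 2.1 population 0th 2nd dynamic}.

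I first need balance across directions, $\ha_i\approx\ha_j$ and $\hb_i\approx\hb_j$. This is what \cref{lem: stage 2.2 log spread} provides. I would prove it by differentiating $\log(\ha_i/\ha_j)$: the shared terms cancel, and the remainder is a contraction $-2(2\hsigma_2^2+2\hsigma_{\ge4}^2)(\ha_i-\ha_j)$ plus small errors. The initial log-spread is $o(\log m)$ by \cref{lem: stage 2.1 neuron main result}, and it is crushed once the $\ha_i$ are of order one. With balance, the problem becomes the two-dimensional ODE. The ratio then satisfies $\frac{\rd}{\rd t}\log(\hb_i/\ha_i)=-4\hsigma_{\ge4}^2(1-\ha_i)+\text{err}$. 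Meanwhile the total mass is pinned by the fitted $0$th-order term: as in Step 3 of \cref{prop: stage12-ode}, $r-S$ has damping of order $r$, so $S$ tracks its quasi-equilibrium to within $\BigO{1/r}$.

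Convergence then mirrors \cref{prop: stage12-ode}. Because $\lambda>0$, $1-\ha_i$ stays at least $1-\ha_*\gtrsim\lambda_0$. Hence $\hb_i/\ha_i$ decays like $e^{-\rho t}$, and $\hb_i\le\epsnm$ after $\BigO{\log(1/\epsnm)}$ time. Once $\hb_i$ is small, linearizing $\ha_i$ at $\ha_*$ gives exponential convergence with rate of order $r$, so $|\ha_i-\ha_*|\le\epsnm$ follows.

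Next I would re-verify each item of \cref{lem: stage 2 IH} on this window. Directions of dense neurons move by at most $e^{\BigO{\omega^4\log(1/\epsnm)/r}}$, which is $1+o(1)$ because $\polylog(r)\ll r$ is granted by choosing the polylog degree appropriately. The regularity bounds follow from the sign of the derivatives at the boundaries.

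The main obstacle is stability of $G_i$ (item \ref{item: stage 2 IH stability G_i}) together with the error budget. The errors are $\BigO{\omega^4/r}$, and over $\polylog(r)$ time this is only fine because the relative-rate arguments give exponential contraction rather than integration. Even so, the final precision $\epsnm=e^{-\polylog r}$ sits below $\omega^4/r$ in the $\ha_i$ equation. Two things would have to be checked. First, the $\BigO{\omega^4/r}$ error really scales with $\hb_i$ and the dense mass, so it vanishes as they do; otherwise I would have to weaken $\epsnm$. Second, for aligned neurons I would bound $1-\bv_i^2$ through $\sum_{j\ne i}v_j^2/v_i^2$, using the growth of $v_i^2$ against the $\BigO{\omega^5/r}$ growth of the other coordinates. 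This keeps the bound below $\epsdir$, starting from $\epsdir^2$ at $T_{21}$.
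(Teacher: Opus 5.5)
Your outline has the paper's architecture: the competitive system \eqref{eq: stage 2.2 ode}, decay of $\hb_i/\ha_i$ at rate $4\hsigma_{\ge4}^2(1-\ha_i)\gtrsim\lambda_0$, then convergence of $\ha_i$, with \cref{lem: stage 2 IH} carried along. The genuine gap is the cross-direction balance step, which fails as you describe it, and balance is exactly what the step-size modification is for. Differentiating $\log(\ha_i/\ha_j)$ does not leave only a contraction in $\ha_i-\ha_j$. Since the second-order factor is $1-\ha_i-\hb_i$,
\[
\frac{\rd}{\rd t}\log\frac{\ha_i}{\ha_j}=-2\bigl(2\hsigma_2^2+2\hsigma_{\ge4}^2\bigr)(\ha_i-\ha_j)-4\hsigma_2^2(\hb_i-\hb_j)\pm 2\err .
\]
After the accelerated window $\ha_i=\Theta(1/r)$, not order one, because the fitted zeroth-order term pushes back with coefficient $2\hsigma_0^2r$; meanwhile $\hb_i=\Theta(1)$. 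The restoring term is proportional to $\ha_j$, so it acts at rate only $\Theta(1/r)$ and does not dominate the $\hb$-coupling. This is why the paper controls $|\log(\ha_i/\ha_j)|+|\log(\hb_i/\hb_j)|$ jointly by a four-quadrant sign check (\cref{lem: stage 2.2 log spread}). That lemma only shows the spread does not grow; the spread is contracted only at the very end, once $\hb_i\le\epsnm/r$ and $\ha_i\ge 1/3$.

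The initial spread must therefore already be small, and \cref{lem: stage 2.2 short norm fit phase} supplies it. With $\gamma=m$, each $\ha_i$ relaxes to the value dictated by the already balanced $\hb_i$, which gives $\Delta_0\lesssim\omega^9/\log m$ at $T_\gamma$ (see \cref{remark: step size}). If you instead start the slow phase from the $m^{o(1)}$ ratio spread of \cref{lem: stage 2.1 neuron main result}, nothing prevents the residuals $1-\tldE w_i^2$ from differing by order one across $i$ once the $\ha_i$ grow. Two things then break. The drift $2\hsigma_2^2\bigl(1-\tldE w_i^2-\sum_j(\bbtldvvler)_j^2(1-\tldE w_j^2)\bigr)$ inflates dense-neuron coordinates, violating \cref{lem: stage 2 IH}\ref{item: stage 2 IH neuron dir}. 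And the lower bound of \cref{lem: stage 2.2 align neuron dynamic} loses its sign, since it needs the spread to be $o(\lambda_0)$. The same balance underlies your claim $1-\ha_i\gtrsim\lambda_0$, because the paper's boundary argument uses $S\ge r\ha_i(1-o(1))$. It is also silently assumed in your direction estimate $e^{O(\omega^4\log(1/\epsnm)/r)}$, whose true rate is $O(\Delta_0+\omega\log m/r)$ via \eqref{eq: stage 2.2 IH tld E bound}.

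Two smaller points. First, your error-budget concern is settled by the third line of \cref{lem: stage 2 population 0th 2nd dynamic}. There the error in the $\ha_i$ equation is $O(\frac{\omega^4}{r^2}\tldE_{\Sdense}\norm{\vw}^2)+O(r\sqrt{\epsdir})$, which is $O(\epsnm)$ once $\hb_i\le\epsnm/r$, because $\epsdir=m^{-\Theta(1)}\ll\epsnm$. Also, only the mean of the $\ha_i$ relaxes at rate of order $r$; their differences relax at rate $\Theta(1)$. That is why the paper first contracts $\max_i\log\ha_i-\min_i\log\ha_i$ and then solves a logistic equation for the mean.

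Second, for stability of $G_i$, the relevant quantity in Stage 2.2 is not the Stage 2.1 bound $O(\omega^5/r)$, since the common rate is no longer near zero. The relative growth of an aligned neuron's off-target coordinates is governed by the $O(\Delta_0)$ imbalance of the residuals $1-\tldE w_\ell^2$. The advantage $4\hsigma_4^2(1-\ha_i-o(1))\gtrsim\lambda_0$ beats it only thanks to the balance above. You also need the resulting monotonicity of $\bv_i^2$, not slack from $\epsdir^2$, since $G_i$ contains neurons that are only at $1-\epsdir$ at time $T_{21}$.
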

\begin{proof}
    It follows from \cref{lem: stage 2.2 convergence} and \cref{sec: proof stage 2.2 IH}.
\end{proof}

\subsubsection{Dynamics and convergence}

Recall from \cref{lem: stage 2 population 0th 2nd dynamic} we have
\begin{align*}
    \frac{\rd \ha_i}{\rd t}
        = & \gamma_t 2\ha_i \Bigg(
                -\lambda
                +2\hsigma_0^2 \left(
                    r - \tldE[ \norm{\vw}_2^2 ]\right) 
                + 2\hsigma_{2}^2 \left(
                    1 - \tldE\left[ w_i^2 \right]
                \right)                               
                + 2\hsigma_{\ge4}^2(1-\ha_i)\\
                &\qquad- \underbrace{\sum_{j\ge 2}2\hsigma_{2j}^2\tldE_{\Sdense}\left[\norm{\vw}^2\bw_i^{2j}\right]}_{\BigO{\frac{\omega^4}{r^2}\tldE_{\Sdense} \norm{\vw}^2}}
                \pm \BigO{r\sqrt{\epsdir}}
                \Bigg), \\
    \frac{\rd}{\rd t} \tldE_{\Sdense}[w_i^2]
        =&2\tldE_{\Sdense}[w_i^2] \left(
                -\lambda
                + 2\hsigma_0^2 \left(
                r - \tldE[ \norm{\vw}_2^2 ]\right) 
                + 2\hsigma_{2}^2 \left(
                    \ind_{i \le r}
                    - \tldE\left[ w_i^2 \right]
                \right)
                \pm \BigO{\frac{\omega^4}{r}}
            \right)
\end{align*}
In the following, we will use $\hb_i := \tldE_{\Sdense}[w_i^2]$ and $\err:=\BigO{\frac{\omega^4}{r}}$ to simplify the notation. Further noticing that $\tldE w_i^2 \le 1/d^2$ for $i>r$, the above ODEs become
\begin{equation}\label{eq: stage 2.2 ode}
\begin{aligned}    
    \dot{\ha}_i
    =&
    2\gamma_t\ha_i\bigl(-\lambda+ 2\hsigma_0^2(r-S) + 2\hsigma_2^2(1-\ha_i-\hb_i)+2\hsigma_{\ge 4}^2(1-\ha_i) \pm\err\bigr),\\
    \dot{\hb}_i
    =&
    2\hb_i\bigl(-\lambda+ 2\hsigma_0^2(r-S) + 2\hsigma_2^2(1-\ha_i-\hb_i)\pm\err\bigr),
\end{aligned}
\qquad i=1,\dots,r,
\end{equation}
where
$S(t):=\sum_{i=1}^r (\ha_i(t)+\hb_i(t))$.

The lemma below show the dynamics of a short transit time between $T_{21}$ to $T_{21}+\BigO{\frac{\log m}{\gamma}}$ that we set the step size  $\gamma_t=\gamma=m$ for neurons with $\norm{\vv}\ge \sigma_1^2$ at time $T_{21}$. This is the only time that we change the step size.
\begin{lemma}\label{lem: stage 2.2 short norm fit phase}
    We have \cref{lem: stage 2 IH} hold for $T_{21}\le t \le T_\gamma:= T_{21}+\BigO{\frac{\log m}{\gamma}}$ and at time $T_\gamma$ we have $\ha_i=\BigTheta{1/r}$ and $\frac{\ha_i}{\ha_j}=1\pm\BigO{\frac{\omega^9}{\log m}}$.
\end{lemma}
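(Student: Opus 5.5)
The plan is to analyze the short interval $[T_{21},T_\gamma]$ as a fast ``inflation'' of the aligned neurons with large norm, during which everything else is essentially frozen. The reason is that only neurons with $\norm{\tldvv}\ge\sigma_1^2$ receive $\gamma_t=\gamma=m$. By \cref{lem: stage 2.1 neuron main result}, this set lies inside $\bigcup_i G_i$, each $G_i$ contains at least one such neuron, and for these neurons $\btldv_i^2\ge 1-\epsdir^2$. Write $\ha_i^{\gamma}$ for the mass of the accelerated neurons in $G_i$. The remaining mass of $G_i$ is at most $|\Spoti|\sigma_1^4/m\le \sigma_1^4/m^{1-o(1)}$, which is negligible.

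\emph{Step 1 (frozen background).} On a window of length $C\log m/\gamma = C\log m/m$, every neuron with $\gamma_t=1$ moves by a relative factor of only $1+O(\log m/m)$. This follows from the rate bounds in \cref{lem: stage 2 dynamic small and dense,lem: stage 2 dynamic potential neuron}, which are $O(r)$ for $\norm{\cdot}$ and $O(\omega^5/r)$ for directions. Hence $\hb_i$, the dense directions, $\tldE\norm{\vwgtr}^2$, and the IH items concerning $\Sdense$ and non-accelerated potential neurons all persist. The population quantities $\tldalpha$ and $\tldE w_i^2$ change only through $\ha_i$, and that change is at most $O(1/r)$ per coordinate as long as $\ha_i\lesssim 1/r$. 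I will cap $\ha_i$ by $C'/r$ using a stopping time.

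\emph{Step 2 (growth of $\ha_i$).} By \eqref{eq: stage 2.2 ode} with $\gamma_t=m$,
\[
\dot{\ha}_i = 2m\,\ha_i\bigl(-\lambda+2\hsigma_0^2(r-S)+2\hsigma_2^2(1-\ha_i-\hb_i)+2\hsigma_{\ge4}^2(1-\ha_i)\pm\err\bigr).
\]
By \cref{lem: stage 2.1 population 0th 2nd dynamic}(ii), the first three terms at $T_{21}$ sum to $\pm O(\omega^5/r)$. The bracket therefore equals $2\hsigma_{\ge4}^2(1-\ha_i)-2\hsigma_0^2\sum_j \Delta\ha_j - 2\hsigma_2^2\Delta\ha_i \pm O(\omega^5/r)$, where $\Delta\ha_j=\ha_j-\ha_j(T_{21})$. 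Summing over $j$ gives a logistic-type ODE for the total $A=\sum_j\ha_j$ with growth rate $\Theta(m)$, carrying capacity $A_*$ determined by $2\hsigma_0^2 A\approx 2\hsigma_{\ge4}^2$ (so $A_*=\Theta(1)$), and initial value $A(T_{21})\ge r\sigma_1^4/m$. It reaches $A_*/2$ within time $O(\log(m/\sigma_1^4)/m)=O(\log m/\gamma)$, and at that point each $\ha_i=\Theta(1/r)$. I define $T_\gamma$ as this time, and after it $\gamma$ is reset to $1$.

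\emph{Step 3 (balance).} I take ratios: $\frac{d}{dt}\log(\ha_i/\ha_j)=2m(\pm O(\omega^5/r)+O(\ha_i+\ha_j))$, since the common $S$ term cancels. Over the window this integrates to $O(\log m)\cdot O(\omega^5/r)+O(\log m\cdot\max_t\ha)$. This is not small enough directly. The key point is that the initial log-gap is $o(\log m)$ by \cref{lem: stage 2.1 neuron main result}, and the logistic structure compresses it. The ratio equation in log time $s=\log A$ is nearly autonomous with a common rate, so the lag between $\ha_i$ and $\ha_j$ at saturation is controlled by the initial log-ratio divided by the number of $e$-foldings, $\log m$. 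I expect this to be the main obstacle. The relative error $\omega^9/\log m$ suggests using the Stage~2.1 hitting-time gap bound, sharpened via \cref{lem: stage 2.1 technical ode time gap} to a ratio $1\pm O(\omega^8/\log m)$ at $T_{21}$ for the accelerated masses, and then showing that the ratio ODE, whose coefficient is $2m\cdot 2(\hsigma_{\ge4}^2+\hsigma_2^2)$ times $(\ha_j-\ha_i)$, is contractive. This would bound the final ratio by the initial one plus $O(\omega^5\log m/r)=O(\omega^9/\log m)$. Finally, I recheck the IH items on $G_i$, namely the stability $\btldv_i^2\ge 1-\epsdir$ and $\ha_i\le 1$: the direction ODE for aligned neurons has off-coordinate rate $\le 2m(-\Theta(\hsigma_{\ge4}^2))<0$ relative to $v_i$, so the alignment only improves.
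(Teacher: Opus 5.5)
Your Steps 1--2 match the paper: you freeze every neuron with $\gamma_t=1$ over the $\BigO{\log m/\gamma}$ window, and you get a logistic ODE for the total aligned mass. Step 3, which carries the ratio claim, does not go through, for three reasons.

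\emph{The ``compression in log time'' does not happen.} While the $\ha_i$ are small, they all grow at the common rate $\approx 4\gamma\hsigma_{\ge4}^2$. The restoring term $-4\gamma\hsigma_{\ge2}^2(\ha_i-\ha_j)$ is negligible in that regime. So $\log(\ha_i/\ha_j)$ is essentially conserved through the exponential phase; it is not divided by the number of $e$-foldings.

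\emph{The sharpened input is not available.} You need $\ha_i/\ha_j=1\pm\BigO{\omega^8/\log m}$ at $T_{21}$, but \cref{lem: stage 2.1 main} only gives $\sigma_1^4/m\le\ha_i\le\sigma_1^4/m^{1-o(1)}$, a spread of $m^{o(1)}$. \cref{lem: stage 2.1 technical ode time gap} cannot sharpen this. Your target would require the alignment times to agree within $\BigO{\omega^8/\log m}$, since norms grow at rate $\Theta(1)$ in between. Yet that lemma's lower bound gives a time gap $\gtrsim(\delta_x-\delta_y)/\delta_y^2$. With $\delta_y\asymp\log m/r$, this is $\gg1$ unless the initial maxima agree to relative precision $o(\log m/r)$, whereas the paper only has $\BigO{\log r/\log m}$.

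\emph{Non-expansion cannot remove the initial spread, and your forcing term is too coarse.} You use contractivity only as non-expansion, so the final ratio inherits the initial spread. Moreover, the forcing $\pm\BigO{\omega^5/r}$ in your log-ratio equation would only pin $|\ha_i-\ha_j|$ to $\BigO{\omega^5/r}$ even in an equilibrium argument. That exceeds $\ha_i=\Theta(1/r)$.

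The missing idea, which is how the paper argues, is to use the restoring term actively. After the $S$-dependent and other direction-independent parts cancel, the paper writes
\[
\frac{\rd}{\rd t}\frac{\ha_i}{\ha_j}=2\gamma\frac{\ha_i}{\ha_j}\Bigl(2\hsigma_{\ge2}^2(\ha_j-\ha_i)\pm\BigO{\frac{\omega^9}{r\log m}}\Bigr).
\]
Here the only remaining direction-dependent forcing is the difference of the dense fourth moments $\tldE_{\Sdense}[\norm{\vw}^2\bw_i^4]$. These are balanced to $1\pm\BigO{\omega^6/\log m}$ by \cref{lem: stage 2.1 E bw^4 bound}, with the $\hb_i$ frozen and balanced. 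Once the logistic ODE has brought the average $\frac1r\sum_j\ha_j$ to $\Theta(1/r)$, this drives every $\ha_i$ to $\frac1r\sum_j\ha_j\pm\BigO{\frac{\omega^9}{r\log m}}$, regardless of the initial $m^{o(1)}$ spread. That yields both $\ha_i=\Theta(1/r)$ and the ratio bound.

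Three of your other claims also presuppose balance: that each $\ha_i=\Theta(1/r)$ when the total reaches $A_*/2$, the stopping-time cap $\ha_i\le C'/r$, and that alignment in $G_i$ only improves. With an unbalanced start, a leading direction can hold a large share of the total before saturation. The paper therefore uses \cref{lem: stage 2.2 align neuron dynamic} only to get $\frac{\rd}{\rd t}\bv_i^2\ge0$ while $\ha_i\le0.01$. Otherwise it lets $1-\bv_i^2$ grow at rate $\BigO{\gamma}$ for the $o(\log m/\gamma)$ time this can last. The resulting factor $m^{o(1)}$ is absorbed by the slack between $1-\epsdir^2$ at $T_{21}$ and the required $1-\epsdir$.
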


 \begin{remark}\label{remark: step size}
        We comment on the choice of $\gamma_{t,i}$ in Stage~2.2. This is the only place where we slightly modify the stepsize, and it should be viewed as a technical convenience: we use it to fast-forward a relatively simple norm-fitting phase of the dynamics. By this point, the ground-truth directions have already been identified, and the remaining task is to adjust the neuron norms, increasing the mass of aligned neurons in $G_i$ while decreasing the mass of the non-aligned neurons.

        The main technical reason for this modification is the need to maintain the induction hypothesis in \cref{lem: stage 2 IH}\ref{item: stage 2 IH implication small and dense dir}, which requires $(\bbtldvvler)_i^2$ to remain small for small-and-dense neurons. From the dynamics in \cref{lem: stage 2 dynamic small and dense}, this quantity is affected by the second-order residual term $1-\tldE w_i^2$. This issue is specific to the standard-initialization setting: in previous settings where the second-order term is absent or the initialization is small, one typically has $\tldE w_i^2\approx0$, so this residual does not create the same difficulty.

        In the norm-fitting dynamics \eqref{eq: stage 2.2 ode}, the quantities $\hb_i$ may already be balanced across directions while the aligned masses $\ha_i$ are not yet balanced. During this transient period, the residual $1-\tldE w_i^2=1-\ha_i-\hb_i$ can fluctuate across different directions, which makes it harder to control the small-and-dense directions uniformly. Increasing the learning-rate multiplier in this short phase makes $\ha_i$ rapidly approach the value determined by $\hb_i$, after which the system becomes balanced again and the induction hypothesis is easier to maintain.

        This modification does not change the qualitative dynamics: it only accelerates a short norm-adjustment phase after direction recovery. Empirically, we observe the same behavior with vanilla gradient descent as in \cref{fig: illustration}. Removing this technical simplification is an interesting direction for future work.

    \end{remark}
\begin{proof}
    We begin by first arguing that all neurons with $\gamma_t=1$ (i.e., $\norm{\vv}\le \sigma_1^2$ at time $T_{21}$) stays almost as the same position. This is simply because the total time frame here $\BigO{\frac{\log m}{\gamma}}$ is so small that they barely move. Formally, from \cref{lem: stage 2 dynamic small and dense} and \cref{lem: stage 2 dynamic potential neuron} we know $|\frac{\rd}{\rd t}v_i|\le |v_i| \BigO{1}$. Thus, with in time $\BigO{\frac{\log m}{\gamma}}=\BigO{\frac{\log m}{m}}$ all conditions in \cref{lem: stage 2 IH} remain hold. In particular, the above implies that $\tldE_{\Sdense} w_i^2$ remains the same, and thus remain balance at $\frac{\hb_i}{\hb_j}=\frac{\tldE_{\Sdense} w_i^2}{\tldE_{\Sdense} w_i^2}=1\pm\BigO{\frac{\omega^5}{r}}$.

    In the rest of the proof, we focus on the neurons with large norm and thus using a larger step size $\gamma_t=\gamma$. From \eqref{eq: stage 2.2 ode} we know
    \begin{align*}
        \frac{\rd \ha_i}{\rd t}
        = & \gamma 2\ha_i \Bigg(
                -\lambda
                +2\hsigma_0^2 \left(
                    r - \tldE[ \norm{\vw}_2^2 ]\right) 
                + 2\hsigma_{2}^2 \left(
                    1 - \tldE\left[ w_i^2 \right]
                \right)                               
                + 2\hsigma_{\ge4}^2(1-\ha_i)
                + 2\hsigma_{4}^2\tldE_{\Sdense}\left[\norm{\vw}^2\bw_i^{4}\right]
                \pm \BigO{\frac{\omega^6}{r^2}}
                \Bigg).
    \end{align*}
    To solve the above ODEs, first we could get the dynamics of $\frac{1}{r}\sum_i \ha_i$ by averaging the dynamics over $\ha_i$, which is a logistic type ODE. In fact, this implies that it converges in $\BigO{\frac{\log m}{\gamma}}$ time (since the initial value is $\BigOmega{\frac{\sigma_1^4}{m}}$) to $\frac{1}{r}\sum_j \ha_j=\Theta(1/r)$. We also have for each $\ha_i$ and $\ha_j$ that
    \begin{align*}
        \frac{\rd }{\rd t}\frac{\ha_i}{\ha_j}
        = & \gamma 2\frac{\ha_i}{\ha_j} \Bigg(
                2\hsigma_{\ge2}^2 \left(
                    \ha_j - \ha_i    
                \right)
                +2\hsigma_{4}^2\tldE_{\Sdense}\left[\norm{\vw}^2\bw_i^{4}\right]
                -2\hsigma_{4}^2\tldE_{\Sdense}\left[\norm{\vw}^2\bw_j^{4}\right]
                \pm \BigO{\frac{\omega^6}{r^2}}
                \Bigg)\\
        =& \gamma 2\frac{\ha_i}{\ha_j} \Bigg(
                2\hsigma_{\ge2}^2 \left(
                    \ha_j - \ha_i    
                \right)
                \pm \BigO{\frac{\omega^9}{r\log m}}
                \Bigg).
    \end{align*}
    This further implies that $\ha_i=\frac{1}{r}\sum_j \ha_j\pm \BigO{\frac{\omega^9}{r\log m}}=\Theta(1/r)$, which further suggests that for $i\ne j$ we have $\frac{\ha_i}{\ha_j}=1\pm \BigO{\frac{\omega^9}{\log m}}$. In this way, the spread of $\ha_i$ has been improved.

    As for the the conditions in \cref{lem: stage 2 IH}, we only comment below on maintaining the direction of those aligned neuron as $\bv_i^2\ge 1-\epsdir$. All the rest directly follow from the fact that such time period is short and thus not moving too much.

    From \cref{lem: stage 2.2 align neuron dynamic} we know
    \begin{align*}
        \frac{\rd}{\rd t}\bv_i^2
        \ge& \gamma2\bv_i^2(1-\bv_i^2)\Bigg(
            2\hsigma_2^2\left(
                1-\tldE w_i^2
                - \sum_{\ell\neq i}\frac{\bv_\ell^2}{1-\bv_i^2}(1-\tldE w_\ell^2)
            \right)
            + 4\hsigma_{4}^2 
                \Bigl(
                1-\ha_i-o(1)
                \Bigr)
        \Bigg).
    \end{align*}
    For any time $\ha_i\le 0.01$, we know $\frac{\rd}{\rd t}\bv_i^2\ge 0$. Otherwise, we have the worst case bound $\frac{\rd}{\rd t}\bv_i^2\ge -\Theta(1)$. From \cref{lem: stage 2.1 main} we know the spread of $\ha_i$ is $m^{o(1)}$ so the case $\ha_i> 0.01$ takes at most $o(\frac{1}{\gamma}\log m)$ time and $\bv_i^2$ decreases at most by a factor of $m^{-o(1)}$. Since at the beginning of this stage we know from \cref{lem: stage 2.1 main} that $\bv_i^2\ge 1-\epsdir^2$ and $\epsdir=m^{-\Theta(1)}$, we conclude that $\bv_i^2\ge 1-\epsdir$ throughout this time.
\end{proof}

Here we show the dynamics of aligned neuron. The goal is to show it stabilizes in direction after it aligned with one of the target direction.
\begin{lemma}\label{lem: stage 2.2 align neuron dynamic}
    For $T_{21}\le t\le \tldT_{2}$, for aligned neuron $\tldvv\in G_i$ we have the dynamics
    \begin{align*}
        \frac{\rd}{\rd t}\bv_i^2
        \ge& \gamma_t 2\bv_i^2(1-\bv_i^2)\Bigg(
            2\hsigma_2^2\left(
                1-\tldE w_i^2
                - \sum_{\ell\neq i}\frac{\bv_\ell^2}{1-\bv_i^2}(1-\tldE w_\ell^2)
            \right)
            + 4\hsigma_{4}^2 
                \Bigl(
                1-\ha_i-o(1)
                \Bigr)
        \Bigg).
    \end{align*}
\end{lemma}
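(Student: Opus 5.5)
The plan is to compute $\frac{\rd}{\rd t}\bv_i^2$ directly from the coordinate dynamics in \cref{claim: tldw dynamic}, using
\[
\frac{\rd}{\rd t}\bv_i^2=\bv_i^2\Bigl(\frac{1}{v_i^2}\frac{\rd v_i^2}{\rd t}-\frac{1}{\norm{\vv}^2}\frac{\rd}{\rd t}\norm{\vv}^2\Bigr),
\qquad
\frac{1}{\norm{\vv}^2}\frac{\rd}{\rd t}\norm{\vv}^2=\sum_\ell \bv_\ell^2\,\frac{1}{v_\ell^2}\frac{\rd v_\ell^2}{\rd t}.
\]
Writing $g_\ell:=\frac{1}{2v_\ell^2}\frac{\rd v_\ell^2}{\rd t}$ for the per-coordinate growth rate, this gives
\[
\frac{\rd}{\rd t}\bv_i^2=2\bv_i^2\Bigl((1-\bv_i^2)g_i-\sum_{\ell\ne i}\bv_\ell^2 g_\ell\Bigr)
=2\bv_i^2(1-\bv_i^2)\sum_{\ell\ne i}\frac{\bv_\ell^2}{1-\bv_i^2}(g_i-g_\ell).
\]
The common terms $-\lambda+2\hsigma_0^2(r-\tldalpha)$ and the uniform factor $\gamma_t$ then cancel in the differences $g_i-g_\ell$. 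The second-order contribution to $g_i-g_\ell$ is $2\hsigma_2^2\bigl((1-\tldE w_i^2)-(\ind_{\ell\le r}-\tldE w_\ell^2)\bigr)$. For $\ell>r$ this is even larger, since $1-\tldE w_\ell^2\ge0$ and the indicator vanishes, so these coordinates can be lower-bounded and absorbed. This yields exactly the displayed $2\hsigma_2^2\bigl(1-\tldE w_i^2-\sum_{\ell\ne i}\frac{\bv_\ell^2}{1-\bv_i^2}(1-\tldE w_\ell^2)\bigr)$ term.

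For the order $\ge4$ terms, I would reuse the computations from the proof of \cref{lem: stage 2 dynamic potential neuron}, since $G_i\subset\Spoti$. For the $i$-th coordinate, the target part gives $2j\bv_i^{2j-2}\ge 2j(1-\epsdir)^{j-1}$. The population part $\tldE[\norm{\vw}^2\langle\bvw,\bvv\rangle^{2j-1}\bw_i/\bv_i]$ equals $\ha_i\bv_i^{2j-2}$ plus $\Sdense$ and error contributions, using \cref{lem: stage 2 dynamic bound}, \cref{lem: <>2j bound}, and the fact that every potential neuron is either in some $G_k$ or has norm at most $\sigma_1$. For $\ell\ne i$, \cref{claim: tldw dynamic} shows that the population term $\tldE[\cdots\bw_\ell/\bv_\ell]$ is nonnegative and enters $g_\ell$ with a minus sign, so dropping it only helps the lower bound. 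The target part $2j\bv_\ell^{2j-2}$ is at most $2j\epsdir^{j-1}$, which is negligible. The $(2j-2)(\cdots)$ terms are common to all coordinates and cancel. Keeping only $j=2$ with factor $4\hsigma_4^2(1-\ha_i)$ and bounding the $j\ge3$ terms below by zero up to errors, I get $4\hsigma_4^2(1-\ha_i-o(1))$. The $o(1)$ collects $\epsdir$, $\omega^4/r$, $\sigma_1^2/m^{0.9}$, the $\Sdense$ fourth moments of order $\omega^4/r^2\cdot\tldE\norm{\vw}^2=o(1)$, and the tail $j\ge j_0$.

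I expect the main obstacle to be the error terms of the form $\frac{\sigma_1^2}{m^{0.9}|\bv_\ell|}+\frac{\sqrt{\epsdir}}{|\bv_\ell|}$ for the small coordinates $\ell\ne i$. They blow up as $\bv_\ell\to0$, and they must be compared against the factor $\bv_\ell^2$ in the weighted sum. My first approach is to exploit the fact that they always multiply $v_\ell$ inside $\frac{\rd v_\ell}{\rd t}$. After multiplying by $\bv_\ell^2/(1-\bv_i^2)$, they contribute roughly $|\bv_\ell|$ times the numerator. Summed over $\ell$, this is bounded by $\sqrt{r}\sqrt{1-\bv_i^2}$ times the numerator divided by $(1-\bv_i^2)$. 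I would then argue, via the sign structure in \cref{claim: tldw dynamic} where these terms originate, that these contributions have the favorable sign, so they can be dropped rather than bounded. This is what the inequality, as opposed to an equality, allows.
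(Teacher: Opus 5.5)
Your proposal is correct and follows essentially the same route as the paper: the exact coordinatewise formula from \cref{claim: tldw dynamic}, factoring out $(1-\bv_i^2)$ (your convex combination of pairwise differences $g_i-g_\ell$ is just a rearrangement of the paper's factorization), discarding the other coordinates' population terms by conditional-symmetry nonnegativity (the paper's inequality $\tldE[\norm{\vw}^2\langle\bvw,\bvv\rangle^{2j-1}\bw_i/\bv_i]\le \bv_i^{-2}\tldE[\norm{\vw}^2\langle\bvw,\bvv\rangle^{2j}]$ is the same fact), bounding the surviving $i$-th term by $\ha_i$ plus small errors, and truncating the Hermite tail at some $j_0$. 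Your ``main obstacle'' does not actually arise: the $1/|\bv_\ell|$ errors only come from approximating the $\ell\ne i$ population terms, which you have already dropped exactly by sign, and your first idea of bounding them through $|\bv_\ell|$ would fail anyway, since it only scales like $\sqrt{1-\bv_i^2}$ rather than $1-\bv_i^2$.
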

\begin{proof}
    From \cref{claim: tldw dynamic} one can directly compute
    \begin{align*}
        \frac{\rd}{\rd t}\bv_i^2
        =& \gamma_t2\bv_i^2\Bigg(
            2\hsigma_2^2\Bigl(1-\tldE w_i^2-\sum_l \bv_l^2(1-\tldE w_l^2)\Bigr)\\
            &\qquad+ \sum_{j\ge 2}\hsigma_{2j}^2 2j \cdot
                \Bigl(
                \bv_i^{2j-2}-\sum_{\ell}\bv_\ell^{2j}
                - \Bigl(
                    \tldE\!\left[\norm{\vw}^2\langle \bvw,\bvv\rangle^{2j-1}\frac{\bw_i}{\bv_i}\right]
                    - \tldE\!\left[\norm{\vw}^2\langle \bvw,\bvv\rangle^{2j}\right]
                \Bigr)
            \Bigr)
        \Bigg)
    \end{align*}
    We have
    \begin{align*}
        1-\tldE \bw_i^2-\sum_l \bv_l^2(1-\tldE \bw_l^2)
        =& (1-\bv_i^2)\left(
            1-\tldE w_i^2
            - \sum_{\ell\neq i}\frac{\bv_\ell^2}{1-\bv_i^2}(1-\tldE w_\ell^2)
        \right),\\
        \bv_i^{2j-2}-\sum_{\ell}\bv_\ell^{2j}
        =& (1-\bv_i^2) \left(
            \bv_i^{2j-2}
            - \sum_{\ell\neq i}\frac{\bv_\ell^2}{1-\bv_i^2}\,\bv_\ell^{2j-2}
        \right).
    \end{align*}

    For the last term, note that
    \begin{align*}
        \tldE\!\left[\norm{\vw}^2\langle \bvw,\bvv\rangle^{2j-1}\frac{\bw_i}{\bv_i}\right]
        =\frac{1}{\bv_i^2}\tldE\!\left[
            \norm{\vw}^2\langle \bvw,\bvv\rangle^{2j-1}\bw_i\bv_i
            \right]
        \le
        \frac{1}{\bv_i^2}\tldE\!\left[
            \norm{\vw}^2\langle \bvw,\bvv\rangle^{2j}
        \right].
    \end{align*}
    Thus, we have
    \begin{align*}
        \tldE\!\left[\norm{\vw}^2\langle \bvw,\bvv\rangle^{2j-1}\frac{\bw_i}{\bv_i}\right] - \tldE\!\left[\norm{\vw}^2\langle \bvw,\bvv\rangle^{2j}\right]
        \le \frac{1-\bv_i^2}{\bv_i^2}\tldE\!\left[
            \norm{\vw}^2\langle \bvw,\bvv\rangle^{2j}
            \right],
    \end{align*}
    where we use the conditional symmetry \cref{claim: tldw dynamic} that this term is even polynomial.
    For $\tldE\!\left[
            \norm{\vw}^2\langle \bvw,\bvv\rangle^{2j}
            \right]$ we further split it as
    \begin{align*}
        \tldE\!\left[
            \norm{\vw}^2\langle \bvw,\bvv\rangle^{2j}
            \right]
        = \tldE_{\Sdense}\left[
            \norm{\vw}^2\langle \bvw,\bvv\rangle^{2j}
            \right]
            + \sum_i\tldE_{\Spoti}\left[
            \norm{\vw}^2\langle \bvw,\bvv\rangle^{2j}
            \right]
        \le \omega\left(\frac{\omega}{r}\right)^{j-1}
            + \ha_i + \BigO{\sqrt{\epsdir}}.
    \end{align*}

    Therefore, for any $j\ge 2$ we have
    \begin{align*}
        &\bv_i^{2j-2}-\sum_{\ell}\bv_\ell^{2j}
                - \Bigl(
                    \tldE\!\left[\norm{\vw}^2\langle \bvw,\bvv\rangle^{2j-1}\frac{\bw_i}{\bv_i}\right]
                    - \tldE\!\left[\norm{\vw}^2\langle \bvw,\bvv\rangle^{2j}\right]
                \Bigr)\\
        \ge& (1-\ha_i)\bv_i^{2j-2} -  \omega\left(\frac{\omega}{r}\right)^{j-1} - \BigO{\sqrt{\epsdir}} - \ha_i (1-\bv_i^{2j-2})
    \end{align*}

    We now have
    \begin{align*}
        \frac{\rd}{\rd t}\bv_i^2
        \ge& \gamma_t2\bv_i^2(1-\bv_i^2)\Bigg(
            2\hsigma_2^2\left(
                1-\tldE w_i^2
                - \sum_{\ell\neq i}\frac{\bv_\ell^2}{1-\bv_i^2}(1-\tldE w_\ell^2)
            \right)
            + 4\hsigma_{4}^2 
                \Bigl(
                1-\ha_i-o(1)
                \Bigr)
            \Bigr)
            - \BigO{1}\sum_j j\hsigma_{2j}^2 (1-\bv_i^{2j}) 
        \Bigg).
    \end{align*}

    For $\sum_j j\hsigma_{2j}^2 (1-\bv_i^{2j}) $, let $j_0$ be chosen later, we have
    \begin{align*}
        \sum_j \hsigma_{2j}^2 (1-\bv_i^{2j}) 
        = \sum_{j\le j_0} \hsigma_{2j}^2 (1-\bv_i^{2j}) 
            + \sum_{j>j_0} \hsigma_{2j}^2 (1-\bv_i^{2j})
        \lesssim \epsdir \sum_{j\le j_0} j^{-1/2} 
            + j_0^{-1/2}.
    \end{align*}
    Choose $j_0=1/\epsdir$,we have the above term $\lesssim \sqrt{\epsdir}=o(1)$.

    Thus, we finally get
    \begin{align*}
        \frac{\rd}{\rd t}\bv_i^2
        \ge& \gamma_t2\bv_i^2(1-\bv_i^2)\Bigg(
            2\hsigma_2^2\left(
                1-\tldE w_i^2
                - \sum_{\ell\neq i}\frac{\bv_\ell^2}{1-\bv_i^2}(1-\tldE w_\ell^2)
            \right)
            + 4\hsigma_{4}^2 
                \Bigl(
                1-\ha_i-o(1)
                \Bigr)
        \Bigg).
    \end{align*}
\end{proof}

From now on the stepsize will keep as $\gamma=1$ again.

Define the initial logarithmic spread
\begin{equation*}
    \Delta_0
    :=
    \max_{1\le p,q\le r}
    \left(
    \left|\log\frac{\ha_i(0)}{\ha_j(0)}\right|
    +
    \left|\log\frac{\hb_i(0)}{\hb_j(0)}\right|
    \right).
\end{equation*}
From \cref{lem: stage 2.1 main} and \cref{lem: stage 2.2 short norm fit phase} we know $\Delta_0 \lesssim \frac{\omega^9}{\log m}$. The lemma below shows that such spread will not increase.
\begin{lemma}[Pairwise logarithmic spread is nonincreasing]\label{lem: stage 2.2 log spread}
    For each pair \(i,j\), define
    \[
    u_{ij}(t):=\log\frac{\ha_i(t)}{\ha_j(t)},
    \qquad
    v_{ij}(t):=\log\frac{\hb_i(t)}{\hb_j(t)},
    \qquad
    M_{ij}(t):=|u_{ij}(t)|+|v_{ij}(t)|.
    \]
    Then, for $T_{21}\le t\le \tldT_{22}$, \(M_{ij}\) is (almost) nonincreasing on \([0,\infty)\). In particular,
    \begin{equation*}
        M_{ij}(t)\le M_{ij}(0)+\err \cdot t
        =
        \left(\left|\log\frac{\ha_i(0)}{\ha_j(0)}\right|
        +
        \left|\log\frac{\hb_i(0)}{\hb_j(0)}\right|\right) + \err \cdot t.
    \end{equation*}
    
    As a result, For all \(i,j\) and all \(T_\gamma\le t\le \tldT_{22}\), we have
    \begin{equation*}
        \left|\log\frac{\ha_i(t)}{\ha_j(t)}\right|\le 2\Delta_0,
        \qquad
        \left|\log\frac{\hb_i(t)}{\hb_j(t)}\right|\le 2\Delta_0.
    \end{equation*}
\end{lemma}

\begin{proof}
    Note that
    \[
        \ha_i-\ha_j=\ha_j(e^{u_{ij}}-1),
        \qquad
        \hb_i-\hb_j=\hb_j(e^{v_{ij}}-1),
    \]
    so
    \[
        x_i-x_j
        =
        (\ha_i-\ha_j)+(\hb_i-\hb_j)
        =
        \ha_j(e^{u_{ij}}-1)+\hb_j(e^{v_{ij}}-1).
    \]
    Thus by \eqref{eq: stage 2.2 ode},
    \begin{align*}    
        \dot u_{ij}
        =&
        \frac{\dot{\ha}_i}{\ha_i}-\frac{\dot{\ha}_j}{\ha_j}
        =
        -2\hsigma_2^2(x_i-x_j)-2\hsigma_{\ge4}^2(\ha_i-\ha_j)\pm\err
        =
        -2\hsigma_{\ge2}^2\ha_j(e^{u_{ij}}-1)-2\hsigma_2^2\hb_j(e^{v_{ij}}-1) \pm \err,\\
        \dot v_{ij}
        =&
        \frac{\dot{\hb}_i}{\hb_i}-\frac{\dot{\hb}_j}{\hb_j}
        =
        -2\hsigma_2^2(x_i-x_j) \pm \err
        =
        -2\hsigma_2^2\ha_j(e^{u_{ij}}-1)-2\hsigma_2^2\hb_j(e^{v_{ij}}-1) \pm \err.
    \end{align*}
    
    We now inspect the four sign quadrants case by case below.
    
    If \(u_{ij}\ge0\) and \(v_{ij}\ge0\), then
    \[
    \frac{d}{dt}M_{ij}
    =
    \dot u_{ij}+\dot v_{ij}
    =
    -3\ha_j(e^{u_{ij}}-1)-2\hb_j(e^{v_{ij}}-1)\le \err.
    \]
    
    If \(u_{ij}\le0\) and \(v_{ij}\le0\), then
    \[
    \frac{d}{dt}M_{ij}
    =
    -\dot u_{ij}-\dot v_{ij}
    =
    3\ha_j(e^{u_{ij}}-1)+2\hb_j(e^{v_{ij}}-1)\le \err.
    \]
    
    If \(u_{ij}\ge0\ge v_{ij}\), then
    \[
    \frac{d}{dt}M_{ij}
    =
    \dot u_{ij}-\dot v_{ij}
    =
    -\ha_j(e^{u_{ij}}-1)\le\err.
    \]
    
    If \(u_{ij}\le0\le v_{ij}\), then
    \[
    \frac{d}{dt}M_{ij}
    =
    -\dot u_{ij}+\dot v_{ij}
    =
    \ha_j(e^{u_{ij}}-1)\le\err.
    \]

    Thus in every quadrant the upper Dini derivative of \(M_{ij}\) is at most $\err$, hence \(M_{ij}\) additive error $\err \cdot t$, which is small when $\err \cdot t<1$. The upper bound then directly follows.
\end{proof}

Finally, we are ready to show the convergence.
\begin{lemma}[Convergence]\label{lem: stage 2.2 convergence}
    For $T_{21}\le t\le \tldT_{22}$, within $\BigO{\frac{1}{\lambda_0}\log(1/\epsnm)}$ time, we have $|\ha_i-\ha_*|\le \epsnm$ and $0\le \hb_i\le \epsnm$, where $\epsnm=e^{-\polylog(r)}$.
\end{lemma}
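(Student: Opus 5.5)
By \cref{lem: stage 2.2 log spread}, after $T_\gamma$ the pairwise logarithmic spreads of $\ha_i$ and $\hb_i$ stay below $2\Delta_0\lesssim \omega^9/\log m$. The $r$-dimensional system \eqref{eq: stage 2.2 ode} can therefore be replaced by a two-dimensional one in the averages $a:=\frac1r\sum_i\ha_i$ and $b:=\frac1r\sum_i\hb_i$. Each $\ha_i,\hb_i$ equals $a,b$ up to a factor $1\pm O(\Delta_0)$, and $S=r(a+b)$. Absorbing the spread into the error term gives
\[
\dot a = 2a\bigl(-\lambda+2\hsigma_0^2(r-ra-rb)+2\hsigma_2^2(1-a-b)+2\hsigma_{\ge4}^2(1-a)\pm\err'\bigr),
\qquad
\dot b = 2b\bigl(-\lambda+2\hsigma_0^2(r-ra-rb)+2\hsigma_2^2(1-a-b)\pm\err'\bigr),
\]
with $\err'=O(\omega^4/r+\Delta_0)$. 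This is exactly the competitive ODE \eqref{eq:competitive-ode-main}, with $p=a$, $q=b$, common growth $A\approx 2\hsigma_0^2 r+2\hsigma_2^2$, and advantage $\kappa=2\hsigma_{\ge4}^2$. I would then follow the proof of \cref{prop: stage12-ode} step by step.

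First, I would establish basic regularity: $0<a\le 1$, and the common term $-\lambda+2\hsigma_0^2r(1-a-b)+2\hsigma_2^2(1-a-b)$ stays $O(1)$. This follows from the fitted $0$th-order balance in \cref{lem: stage 2.1 population 0th 2nd dynamic}(ii) and the regularity bounds in \cref{lem: stage 2 IH}\ref{item: stage 2 IH reg}. Second, I would show that the ratio decays exponentially. We have $\frac{\rd}{\rd t}\log(b/a)=-4\hsigma_{\ge4}^2(1-a)\pm 4\err'$. As long as $a\le \ha_*+\epsnm$, we have $1-a\ge \lambda/(2\hsigma_0^2r+2\hsigma_{\ge2}^2)-\epsnm\asymp\lambda_0$, so $b/a\le (b/a)(T_\gamma)e^{-\rho t}$ with $\rho\asymp\lambda_0$. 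Here I need $\err'\ll\lambda_0$, which holds because $\lambda_0$ is a constant and $\err'=o(1)$. The initial ratio is $(b/a)(T_\gamma)\lesssim r\cdot O(1)$, since $\ha_i=\Theta(1/r)$ by \cref{lem: stage 2.2 short norm fit phase} and $\hb_i=O(1)$. Hence after time $O(\lambda_0^{-1}\log(r/\epsnm))=O(\log(1/\epsnm))$ we get $b\le\epsnm$. Third, I would control $a$ through $y=1/a-1/\ha_*$, using the variation-of-constants argument of Step 3 of \cref{prop: stage12-ode}. The damping rate is $\asymp r$, so $|a-\ha_*|\lesssim b+O(\err'/r)$.

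The main obstacle is the error floor in the third step. An additive error $\err'$ only yields $|a-\ha_*|\lesssim \err'/r$, which is polynomially small, not $\epsnm=e^{-\polylog r}$. The $\Delta_0$ and $\omega^4/r$ errors must therefore be shown to be multiplicative in the vanishing quantities. For the $\ha_i$ equation, the $\Sdense$ contribution is $O(\frac{\omega^4}{r^2}\tldE_{\Sdense}\norm{\vw}^2)=O(b\,\omega^4/r)$, so it vanishes with $b$. The $r\sqrt{\epsdir}=m^{-\Theta(1)}$ term is super-polynomially small only in $m$; that is acceptable because $\epsnm=e^{-\polylog r}\ge m^{-\Theta(1)}$ when $r\log^2 r<\log^3 d$. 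For the spread, I would refine \cref{lem: stage 2.2 log spread}: in each quadrant the derivative of $M_{ij}$ is $-\Omega(a)\cdot M_{ij}$ plus errors proportional to $b$. This gives exponential contraction of $M_{ij}$ at rate $\asymp a\asymp 1$ once $a=\Theta(1)$. With the spread contracting, each $\ha_i$ converges individually to $\ha_*$ at rate $\asymp r$, driven by $\ha_*-\ha_i$ plus terms of size $O(b+M_{ij}+r\sqrt{\epsdir})$.

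I would close the argument with a bootstrap. While $t\le\tldT_{22}$, the hypotheses of \cref{lem: stage 2 IH} hold, so the dynamics above are valid. All three quantities $b$, $\max M_{ij}$ and $|\ha_i-\ha_*|$ then decay like $e^{-\Omega(\lambda_0 t)}$ down to the floor $m^{-\Theta(1)}\le\epsnm$. Therefore $T_{22}\le T_{21}+c_{22}\log(1/\epsnm)$ for a large constant $c_{22}$.
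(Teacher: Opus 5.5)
Your route is the paper's in substance. You decay $\hb_i/\ha_i$ at rate $\asymp\lambda_0$, using $1-\ha_i\gtrsim\lambda_0$ and $\err=o(\lambda_0)$. You then observe that once the dense mass is negligible, every remaining error in the $\ha_i$ equations is $O(\epsnm)$; this is the paper's appeal to the refined bound in \cref{lem: stage 2 population 0th 2nd dynamic}. Finally you contract the aligned spread and get rate-$r$ logistic convergence of the common scale. Your 2D reduction and the variation-of-constants step borrowed from \cref{prop: stage12-ode} only reach the polynomial floor you diagnose, so the paper skips them.

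The step that fails as written is your refined spread estimate. $M_{ij}$ contains $|v_{ij}|=|\log(\hb_i/\hb_j)|$. In the proof of \cref{lem: stage 2.2 log spread} its self-damping is $-2\hsigma_2^2\hb_j(e^{v_{ij}}-1)$, i.e.\ at rate $\propto\hb_j\to0$; in the mixed-sign quadrants the $v$-terms in $\dot M_{ij}$ cancel altogether. Moreover, $\dot v_{ij}$ carries the $i$-dependent $O(\omega^4/r)$ error of the $\hb$-equations, which is not proportional to $b$. So $M_{ij}$ neither contracts at rate $a$ nor decays to $m^{-\Theta(1)}$; you only retain the boundedness from \cref{lem: stage 2.2 log spread}. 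A forcing of size $O(b+M_{ij}+r\sqrt{\epsdir})$ in the $\ha_i$ equation would therefore leave a non-vanishing floor in $|\ha_i-\ha_*|$.

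The repair is to track only the aligned spread $U:=\max_{i,j}|u_{ij}|$, as the paper does. In $\dot u_{ij}$ the dense masses enter only through $\hb_i-\hb_j$, which is at most $\max_k\hb_k\lesssim b$; this is the only place the bounded $|v_{ij}|$ is used. Hence $D^+U\le-\Omega(\min_k\ha_k)\,U+O(b+\epsnm)$, and $\min_k\ha_k\ge1/3$ once $b$ is small because $\tldE w_i^2\ge1/2$.

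This also removes the need for a coupled bootstrap with errors proportional to $b$. The paper argues sequentially: first $\max_i\hb_i\le\epsnm/r$, where non-vanishing $o(\lambda_0)$ errors are harmless; then $U=O(\epsnm)$; then $|\bar a-\ha_*|=O(\epsnm)$ for the average $\bar a$.

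Two smaller points. First, the bound $1-\ha_i\gtrsim\lambda_0$ used in your Step 2 is asserted rather than proved. Your 2D bookkeeping only gives $a\le\ha_*+O(\err'/r)$, and an individual $\ha_i$ can exceed $a$ by the spread. The paper obtains the bound for each $i$ by a boundary argument at $1-\ha_i=(1-\ha_*)/2$. Second, \cref{lem: stage 2.1 population 0th 2nd dynamic}(ii) is a Stage~2.1 statement, so control of the common term in Stage~2.2 has to come from \cref{lem: stage 2 IH}\ref{item: stage 2 IH reg} instead.
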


\begin{proof}
    We focus on the dynamics after $T_\gamma$, since the short interval $T_{21}\le t\le T_\gamma$ is already characterized in \cref{lem: stage 2.2 short norm fit phase}. Recall that $\gamma_t=1$ for $t\ge T_\gamma$, and at $T_\gamma$ we have $\ha_i=\Theta(1/r)$ for all $i\in[r]$.
    
    We first note that $1-\ha_i\ge (1-\ha^*)/2=\Theta(\lambda_0)$ throughout this stage. Indeed, by \cref{lem: stage 2.2 log spread}, $\ha_i/\ha_j=1+o(1)$ for all $i,j$. At the boundary $1-\ha_i=(1-\ha^*)/2$, we have $S\ge r\ha_i(1-o(1))$, and hence from~\eqref{eq: stage 2.2 ode}
    \[
    \frac{\dot{\ha}_i}{2\ha_i}
    \le
    -\lambda+(2\hsigma_0^2r+2\hsigma_{\ge2}^2)(1-\ha_i)+o(\lambda)
    =
    -\frac{\lambda}{2}+o(\lambda)<0,
    \]
    where we use $1-\ha^*=\lambda/(2\hsigma_0^2r+2\hsigma_{\ge2}^2)$. Thus the dynamics points inward at the boundary. Also, $\hat b_i\ge0$ is preserved since its dynamics is multiplicative.
    
    We first show that the dense mass becomes small. Using~\eqref{eq: stage 2.2 ode}, the common terms in the dynamics of $\ha_i$ and $\hat b_i$ cancel, giving
    \[
    \frac{\rd}{\rd t}\log\frac{\hat b_i}{\ha_i}
    =
    -4\hsigma_{\ge4}^2(1-\ha_i)\pm O(\mathrm{err})
    \lesssim -\lambda_0,
    \]
    where $\mathrm{err}=O(\omega^4/r)=o(\lambda_0)$. At $T_\gamma$, we have $\ha_i=\Theta(1/r)$ by \cref{lem: stage 2.2 short norm fit phase} and $\hat b_i=O(1)$ by the regularity condition. Therefore, within time $O(\lambda_0^{-1}\log(r/\epsnm))$, we have $\hat b_i/\ha_i\le \epsnm/r$, and thus $\hat b_i\le\epsnm/r$ for every $i$. Moreover, the ratio $\hat b_i/\ha_i$ continues to decrease, so this bound is preserved afterwards.
    
    We next show that the aligned masses become balanced. Let $T_b$ be the first time that $\max_i\hat b_i\le\epsnm/r$. Then $\sum_i\hat b_i\le\epsnm$ for all $t\ge T_b$. Using the refined error bound in \cref{lem: stage 2 population 0th 2nd dynamic}, the contribution from the dense neurons to the $\ha_i$ dynamics is now $O(\epsnm)$ uniformly over $i$. Also, the regularity condition $\ha_i+\hat b_i\ge1/2$ gives $\ha_i\ge1/3$ for sufficiently small $\epsnm$.
    
    Define $U(t):=\max_i\log\ha_i-\min_i\log\ha_i$. Let $i_+$ and $i_-$ attain the maximum and minimum, respectively. Using~\eqref{eq: stage 2.2 ode} and canceling the common term containing $S$, the upper Dini derivative satisfies
    \[
    D^+U
    \le
    -4\hsigma_{\ge2}^2(\ha_{i_+}-\ha_{i_-})
    +O(\epsnm)
    \le-\Theta(U)+O(\epsnm,
    \]
    where we use $\ha_{i_-}\ge1/3$ and $\ha_{i_+}-\ha_{i_-}=\ha_{i_-}(e^U-1)\gtrsim U$. Hence within an additional $O(\log(1/\epsnm))$ time we have $U=O(\epsnm)$. In particular, writing $\bar a:=\frac1r\sum_i\ha_i$, we have $\ha_i=\bar a\pm O(\epsnm)$ for every $i$, and this balance is preserved afterwards.
    
    It remains to determine the common scale $\bar a$. Since $\sum_i\hat b_i\le\epsnm$, we have $S=r\bar a+O(\epsnm)$. Averaging~\eqref{eq: stage 2.2 ode} over $i$ and using $\ha_i=\bar a\pm O(\epsnm)$ gives
    \[
    \frac{\rd}{\rd t}\bar a
    =
    2(2\hsigma_0^2r+2\hsigma_{\ge2}^2)
    \bar a(\ha^*-\bar a)
    +O(\epsnm),
    \]
    where $\ha^*=1-\lambda/(2\hsigma_0^2r+2\hsigma_{\ge2}^2)$. Since $\bar a\ge1/3$, this is a contracting logistic-type dynamics around $\ha^*$, and therefore within an additional $O(r^{-1}\log(1/\epsnm))$ time we have $|\bar a-\ha^*|=O(\epsnm)$. Combining this with $\ha_i=\bar a\pm O(\epsnm)$ gives $|\ha_i-\ha^*|=O(\epsnm)$ for every $i$.
    
    The total time after $T_\gamma$ is $O(\lambda_0^{-1}\log(r/\epsnm))=O(\log(1/\epsnm))$, where we use that $\lambda_0$ is a fixed numerical constant and $\epsnm=e^{-\polylog(r)}$. Replacing $\epsnm$ by a sufficiently small constant multiple in the intermediate thresholds gives the stated bounds $|\ha_i-\ha^*|\le\epsnm$ and $\hat b_i\le\epsnm$ for every $i$.

\end{proof}

\subsubsection{Proof of Induction Hypothesis \cref{lem: stage 2 IH} in Stage 2.2}\label{sec: proof stage 2.2 IH}
\begin{proof}[Proof of \cref{lem: stage 2 IH} in Stage 2.2]
    We show one by one.
    Recall $\tldT_2:=\min\{T_{dir,2},T_{norm,sd,2},T_{norm,pot,2}, T_{stab,2}, T_{reg,2}\}$ defined in \cref{lem: stage 2 IH} and $\tldT_{22}=\min\{\tldT_2, T_{22}, T_{21}+c_{22}\log(1/\epsnm)\}$ with a large enough constant $c_{22}$.
    
    First, by \cref{lem: stage 2.2 convergence} we know $T_{22}< T_{21}+\BigO{\log\frac{1}{\epsnm}}$, so in below, we show that $T_{21}+c_{22}\log(1/\epsnm) < \tldT_2=\min\{T_{dir,2},T_{norm,sd,2},T_{norm,pot,2}, T_{stab,2}, T_{reg,2}\}$ so $\tldT_{22}=T_{22}$.

    We focus on the time $T_\gamma\le t \le T_{21}+c_{22}\log(1/\epsnm)$, as for the time of $T_{21}\le t \le T_\gamma$ is shown in \cref{lem: stage 2.2 short norm fit phase}.

    We argue each one by one as below.
    We first present one fact that will be frequently used below
    by using \cref{lem: stage 2.2 log spread} that for any $i,j\le r$ and any time $t\ge T_\gamma$
    \begin{align}\label{eq: stage 2.2 IH tld E bound}
            \left( 1 - \tldE\left[w_i^2\right]\right) - 
            \left( 1 - \tldE\left[w_j^2\right]\right) 
            \le  |\ha_i-\ha_j| + |\hb_i-\hb_j|
            \lesssim \Delta_0
            \lesssim \frac{\omega^9}{\log m}.
    \end{align}
            
    \paragraph{Case: $T_{dir,2}>T_{21}+c_{22}\log(1/\epsnm)$. }
    Recall the definition of $T_{dir,2}$: For any time $t\le T_{dir,2}$ we have
    \begin{itemize}
            \item for $i> r$, $\left(\bbtldvvgtr\right)_i^2 \le \frac{\log^2 d}{d}$
            \item for $i\le r$ and any neuron $\tldvv\not\in \Spoti$,  $\left(\bbtldvvler\right)_i^2 \le \omega\left(\bbtldvvler^\btt{0}\right)_i^{2} $
    \end{itemize}
    where $\omega=\log\log\log r$. 

    By \cref{lem: stage 2 dynamic small and dense} and \cref{lem: stage 2 dynamic potential neuron} we know for $\vv\not\in\Spoti$
    \begin{align*}
        \frac{\rd (\bbtldvvler)_i^2}{\rd t}
        \le& 2(\bbtldvvler)_i^2 \Bigg( 
                2\hsigma_{2}^2 \left(
                    1 - \tldE\left[ w_i^2 \right]
                    - \sum_{j\le r}(\bbtldvvler)_j^2 
                    \left( 1 - \tldE\left[w_j^2\right]\right)\right)
                + 4\hsigma_4^2 (1-\ha_i)\btldv_i^2\ind_{i\le r}
                \pm \BigO{\frac{\omega^4}{r}}
            \Bigg),
    \end{align*}
    so using \eqref{eq: stage 2.2 IH tld E bound}
    \begin{align*}
        (\bbtldvvler)_i^2
        \le& (\bbtldvvler^\btt{T_{21}})_i^2 \exp\left(\BigO{\frac{\omega^9\log(1/\epsnm)}{\log m}+\frac{\log d\log(1/\epsnm)}{r}}
            \Bigg)
            \right)
        \lesssim (\bbtldvvler^\btt{T_{21}})_i^2
        \le \omega^{0.5}(\bbtldvvler^\btt{0})_i^2.
    \end{align*}
    Similar argument can show the bound for $(\bbtldvvgtr)_i^2$, we omit the details.

    \paragraph{Case: $T_{norm,sd,2}>T_{21}+c_{22}\log(1/\epsnm)$.}
    Recall the definition of $T_{norm,sd,2}$: For any time $t\le T_{norm,sd,2}$ we have for any neuron $\tldvv,\tldvu\in \Sdense$, 
        \[\frac{\norm{\tldvvler}_2}{\norm{\tldvuler}_2} = \frac{\norm{\tldvvler^\btt{0}}_2}{\norm{\tldvuler^{\btt{0}}}_2}\left(1\pm \delta_{norm,sd,2}\right),\quad \frac{\norm{\tldvvgtr}_2}{\norm{\tldvugtr}_2} = \frac{\norm{\tldvvgtr^\btt{0}}_2}{\norm{\tldvugtr^{\btt{0}}}_2}\left(1\pm \delta_{norm,sd,2}\right)\]
        where $\delta_{norm,sd,2}\le \omega$.

    By \cref{lem: stage 2 dynamic small and dense} we have for any two neurons $\vv,\vu\in \Sdense$
    \begin{align*}
        \frac{\rd}{\rd t} \frac{\norm{\vvler}}{\norm{\vuler}}
        =2\hsigma_2^2 \frac{\norm{\vvler}}{\norm{\vuler}} \left(
            \sum_{i\le r}(\bbtldvvler)_i^2 
                \left( 1 - \tldE\left[w_i^2\right]\right)
            -\sum_{i\le r}(\bbtldvuler)_i^2 
                \left( 1 - \tldE\left[w_i^2\right]\right)
            \pm \BigO{\frac{\omega^4}{r}}
            \right).
    \end{align*}
    Thus, by using \eqref{eq: stage 2.2 IH tld E bound} we have
    \begin{align*}
        \frac{\norm{\vvler}}{\norm{\vuler}}
        \le \frac{\norm{\vvler^\btt{T_{21}}}}{\norm{\vuler^\btt{T_{21}}}}\exp\left(         
            \BigO{\frac{\omega^9\log(1/\epsnm)}{\log m}}
            \right)
        \le 2\frac{\norm{\vvler^\btt{0}}}{\norm{\vuler^\btt{0}}}
    \end{align*}
    Similar argument applies for $\frac{\norm{\vvgtr}}{\norm{\vugtr}}$, we omit the details.

    \paragraph{Case: $T_{norm,pot,2}>T_{21}+c_{22}\log(1/\epsnm)$.}
    Recall the definition of $T_{norm,pot,2}$: For any time $t\le T_{norm,pot,2}$, we have
        for $i\in[r]$, if neuron $\tldvv \in \Spoti$, then 
        \begin{align*}
                &\text{either $\norm{\tldvv}_2 \le \sigma_1$
                or $\btldv_i^2 \ge 1-\epsdir$}, 
        \end{align*}
        where $\epsdir:=\frac{\omega r}{\sigma_1^2}$.

        For potential neuron that are not aligned with any ground-truth, that is $\vv\in \Spoti/G_i$, we bound $\norm{\vv_{\le r, -i}}$ below, where $\vv_{\le r, -i}$ means we only take the first $r$ coordinate but excluding $i$-th coordinate for $\vv$. If we could bound $\norm{\vv_{\le r, -i}}\le  \sqrt{\omega r}$, then we know for any neuron $\norm{\vv}\ge \sigma_1$, it must have $\bv_i^2\ge 1-\epsdir$, which finishes the proof.

        To bound $\norm{\vv_{\le r, -i}}$, note that each coordinate is the same as a coordinate of a small and dense neuron. Thus, with the same calculation for $\frac{\rd}{\rd t}\norm{\vvler}$ in \cref{lem: stage 2.1 dynamic small and dense}, we can show similarly that $\frac{\norm{\vv_{\le r, -i}}}{\norm{\vu_{\le r, -i}}}\le 2$ for any neuron $\vv\not\in G_i$. Thus, we have $\norm{\vv_{\le r, -i}}\le \sqrt{\omega r}$.

    \paragraph{Case: $T_{stab,2}>T_{21}+c_{22}\log(1/\epsnm)$.}
    Recall the definition of $T_{stab,2}$: For any time $T_{21}\le t\le T_{stab,2}$ we have for any $\tldvv\in G_i$ and $i\in[r]$
        \[
            \btldv_i^2 \ge 1-\epsdir.
        \]

    Using \cref{lem: stage 2.2 align neuron dynamic} we know
    \begin{align*}
        \frac{\rd}{\rd t}\bv_i^2
        \ge& \gamma_t 2\bv_i^2(1-\bv_i^2)\Bigg(
            2\hsigma_2^2\left(
                1-\tldE w_i^2
                - \sum_{\ell\neq i}\frac{\bv_\ell^2}{1-\bv_i^2}(1-\tldE w_\ell^2)
            \right)
            + 4\hsigma_{4}^2 
                \Bigl(
                1-\ha_i-o(1)
                \Bigr)
        \Bigg)\\
        \ge& \gamma_t 2\bv_i^2(1-\bv_i^2)\Bigg(
            -\BigO{\frac{\omega^9}{\log m}}
            + \BigTheta{\lambda_0}
        \Bigg)
        \ge 0.
    \end{align*}
    Thus, since $\bv_i^2\ge 1-\epsdir$ at $T_\gamma$ by \cref{lem: stage 2.2 short norm fit phase}, we know it keeps true.
        
    \paragraph{Case: $T_{reg,2}>T_{21}+c_{22}\log(1/\epsnm)$.}
    Recall the definition of $T_{reg,2}$: For any time $t\le T_{reg,2}$, we have 
        \begin{enumerate}
            \item $\tldalpha \le \alpha+10 = r+10$
            \item $\ha_i \le 1$ for $i\le r$
            \item $\frac{\norm{\tldvvler}}{\norm{\tldvv}}\ge 1-\frac{\omega^5}{r}$ for all neuron $\tldvv$
            \item $2\ge \tldE w_i^2 \ge 0.1$ for $i\in[r]$. 
            \item $\tldE \norm{\vwgtr}^2 \le \tldE \norm{\vwgtr^\btt{T_1}}^2\lesssim \log d$
        \end{enumerate}        

    For (a)(b)(d), first from \eqref{eq: stage 2.2 ode} it is easy to see $\ha_i\le 1$ and $\hb_i=\tldE_{\Sdense} w_i\le 1$ by an induction to check the boundary derivative. Thus, we further have $\tldE w_i^2 \le \ha_i+\tldE_{\Sdense} w_i^2 \le 2$. From \cref{lem: stage 2 population 0th 2nd dynamic} we also can see $-\lambda+ 2\hsigma_0^2(r-\tldalpha)\ge - \hsigma_{\ge2}^2$, which implies $\tldalpha \le r-\lambda/(2\hsigma_0^2) + \hsigma_{\ge2}^2/(2\hsigma_0^2)\le r$. Finally, since $-\lambda+ \hsigma_0^2(r-\tldalpha)\ge - \hsigma_{\ge2}^2$, again by \cref{lem: stage 2 population 0th 2nd dynamic} we can see $\tldE w_i^2 \ge 0.1$.

    For (c), for the short time $T_{21}\le t\le T_\gamma$: for neuron in $G_i$, it follows from the stability above that $\bv_i^2\ge 1-\epsdir$; for the rest neurons, it follows from such time period is short so they don't move much. For $t\ge T_\gamma$, it follows the same as in the proof in \cref{sec: proof stage 2.1 IH} for regularity and using $\tldE w_i^2$ are balanced again during this time.

    For (e), it follows from \cref{lem: stage 2 population 0th 2nd dynamic} that $\frac{\rd}{\rd t}\tldE w_i^2<0$ for $i>r$, and at the end of Stage 1.2 we know from \cref{lem: stage 1.2 main} that $\tldE \norm{\vwgtr^\btt{T_1}}^2\lesssim \log d$.

    This finishes the proof.
\end{proof}

\subsection{Stage 2.3: achieve small loss without weight decay}\label{sec: stage 2.3}
We have shown in Stage 2.2 that the aligned neurons norm $\ha_i$ grows up and rest of the neurons decrease to small value. In this section, we further show that by setting weight decay $\lambda=0$, we decrease the loss to a small error.

We now set weight decay $\lambda_t=0$ for $t\ge T_{22}$.
Similar to \eqref{eq: stage 2.2 ode} in Stage 2.2, from \cref{lem: stage 2 population 0th 2nd dynamic} we again have that
\begin{equation}\label{eq: stage 2.3 ode}
\begin{aligned}    
    \dot{\ha}_i
    =&
    2\ha_i\bigl(2\hsigma_0^2(r-S) + 2\hsigma_2^2(1-\ha_i-\hb_i)+2\hsigma_{\ge 4}^2(1-\ha_i) \pm\BigO{\frac{\omega^4}{r^2}\sum_i\hb_i + r\sqrt{\epsdir}}\bigr),\\
    \dot{\hb}_i
    =&
    2\hb_i\bigl(2\hsigma_0^2(r-S) + 2\hsigma_2^2(1-\ha_i-\hb_i)\pm\BigO{\frac{\omega^4}{r}}\bigr),
\end{aligned}
\qquad i=1,\dots,r,
\end{equation}
where $\hb_i=\tldE_{\Sdense} w_i^2$, $S=\sum_i \ha_i+\hb_i$.

Intuitively, given that at the end of Stage 2.2 we know $\ha_i\approx \ha_*$ and $\hb_i\lesssim \epsnm$, following above \eqref{eq: stage 2.3 ode} $\ha_i$ will grow quickly to become 1 while $\hb_i$ still remain small. This leads to we finally achieve small loss. We formalize below
\begin{lemma}[Main result for Stage 2.3]\label{lem: stage 2.3 main}
    Stage 2.3 ends within time $T_{23}-T_{22}\lesssim \frac{\log(1/\epsnm)}{r}$. We have $\ha_i=1\pm \BigO{\epsnm^{1/3}}$ and $\hb_i\lesssim \epsnm^{1/2}$. As a result, we have loss $\L(\tldmW)\lesssim r \epsnm^{2/3}\le \epsnm^{1/2}$.
\end{lemma}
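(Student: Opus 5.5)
The plan is to analyze the reduced system \eqref{eq: stage 2.3 ode}. We start from the end-of-Stage-2.2 state: $|\ha_i-\ha_*|\le\epsnm$, $\hb_i\le\epsnm$, and the log-spread of the $\ha_i$ is $O(\epsnm)$ by the balancing argument in \cref{lem: stage 2.2 convergence}. The key observation is that setting $\lambda=0$ moves the fixed point from $\ha_*=1-\Theta(\lambda_0)$ to $1$. The dominant restoring force $2\hsigma_0^2(r-S)$ has rate $\Theta(r)$, so the aligned masses should relax to $1$ in time $O(\log(1/\epsnm)/r)$. On this short horizon the dense masses $\hb_i$ can grow by at most a constant factor. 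We first confirm that the induction hypothesis \cref{lem: stage 2 IH} extends over this window. Each neuron obeys $|\frac{\rd}{\rd t}v_i|\lesssim r|v_i|$, so over time $O(\log(1/\epsnm)/r)$ the directions and norm ratios change only by factors $e^{O(\log(1/\epsnm)\cdot\omega^4/r)}=1+o(1)$, since $\polylog(r)\ll r$ can be arranged. The stability of $G_i$ follows as in \cref{sec: proof stage 2.2 IH}: the drift in \cref{lem: stage 2.2 align neuron dynamic} stays nonnegative because $1-\ha_i\ge0$ and the $\tldE w_i^2$ remain balanced.

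The steps are as follows. (1) \emph{Dense mass stays small.} From the $\hb_i$ equation, $\frac{\rd}{\rd t}\log\hb_i\le 4\hsigma_0^2(r-S)_+ + O(1)$. We first show $S\ge r\bar a - o(1)$ and $\bar a\ge \ha_*-\epsnm$, so $(r-S)_+\lesssim r\lambda_0$. Over time $O(\log(1/\epsnm)/r)$ this gives $\hb_i\le\epsnm\cdot e^{O(\lambda_0\log(1/\epsnm))}\le\epsnm^{1/2}$ for $\lambda_0$ small. The same bound holds for the irrelevant coordinates, since $\tldE w_i^2$ for $i>r$ decreases. (2) \emph{Balance persists.} We repeat the Dini-derivative argument for $U=\max\log\ha_i-\min\log\ha_i$ from \cref{lem: stage 2.2 convergence}. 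This gives $D^+U\le-\Theta(U)+O(\epsnm^{1/2})$, so $\ha_i=\bar a\pm O(\epsnm^{1/2})$. (3) \emph{Logistic convergence of $\bar a$.} Averaging gives $\dot{\bar a}=2(2\hsigma_0^2r+2\hsigma_{\ge2}^2)\bar a(1-\bar a)\pm O(r\epsnm^{1/2}+r\sqrt{\epsdir})$. Since $\bar a\ge1/3$, this contracts toward $1$ at rate $\Theta(r)$. After time $O(\log(1/\epsnm)/r)$ we get $|1-\bar a|\lesssim\epsnm^{1/2}+\sqrt{\epsdir}$, hence $\ha_i=1\pm O(\epsnm^{1/3})$ because $\epsdir=m^{-\Theta(1)}\ll\epsnm$.

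(4) \emph{Loss bound.} We use \eqref{eq:hermite-loss-main} for the symmetrized measure. Split $\tldtT_k=\sum_i\frac1m\sum_{G_i}\norm{\vv}^2\bvv^{\otimes k}+(\text{dense})+(\text{other potential})$. For $\vv\in G_i$ we have $\norm{\bvv^{\otimes k}-\ve_i^{\otimes k}}_F\lesssim\sqrt{k\epsdir}$. The dense contribution has Frobenius norm at most $\sum_i\hb_i+\tldE\norm{\vwgtr}^2$ in total mass, and the other potential neurons have mass $\le\sigma_1^2|\Spot|/m$. This gives $\norm{\tldtT_k-\tT_k^*}_F\lesssim r\epsnm^{1/3}+\sqrt{k}\,r\sqrt{\epsdir}+\sum_i\hb_i$. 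Summing against $\hsigma_k^2$ using $\sum k\hsigma_k^2\lesssim1$ yields $\L(\tldmW)\lesssim r^2\epsnm^{2/3}$. Because the dense mass and the fit error enter squared, this is at most $\epsnm^{1/2}$ as $\epsnm=e^{-\polylog r}$. I expect the tensor-norm bookkeeping to give the stated $r\epsnm^{2/3}$ only with care about cross terms between directions; orthogonality of the $\ve_i^{\otimes k}$ reduces the $r^2$ to $r$.

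The main obstacle is step (1). The $\hb_i$ share the growth term $2\hsigma_0^2(r-S)$, which is $\Theta(r\lambda_0)$ at the start of the stage. If $\hb_i$ grew at rate $r$, it would be amplified by $\epsnm^{-O(\lambda_0)}$ exactly while $\ha_i$ catches up. We therefore need the quantitative fact that the integral $\int 2\hsigma_0^2(r-S)\,\rd t$ over the stage is only $O(\lambda_0\log(1/\epsnm))$. The first thing to try is to show that $r-S$ decays exponentially at rate $\Theta(r)$ from its initial value $O(\lambda_0 r)$, so the integral is actually $O(\lambda_0)$, an $O(1)$ factor. That would make $\hb_i\lesssim\epsnm$ up to constants and leave ample slack.
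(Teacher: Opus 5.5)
Your proposal is correct and has the same skeleton as the paper's proof. Both use a bootstrap on $\hb_i\le\epsnm^{1/2}$; you need one explicitly, since your steps (1) and (3) each use the other. Both then show the balance $\ha_i\approx\ha_j$ persists, and derive a logistic equation for $\bar a$ pulled toward $1$ at rate $\Theta(r)$, giving $T_{23}-T_{22}\lesssim\log(1/\epsnm)/r$.

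The one real difference is step (1), which you single out as the main obstacle. The paper closes it by a comparison rather than by estimating $\int(r-S)\,\rd t$. In \eqref{eq: stage 2.3 ode} the $\hb_i$-equation is the $\ha_i$-equation with the nonnegative advantage $2\hsigma_{\ge4}^2(1-\ha_i)$ removed. With balance this gives $\frac{\rd}{\rd t}\log\hb_i\le\frac{\rd}{\rd t}\log\bar a+O(\omega^4/r)$. Hence $\hb_i/\bar a$ is essentially nonincreasing, and $\hb_i(t)\lesssim\hb_i(T_{22})\,\bar a(t)/\bar a(T_{22})\lesssim\epsnm$.

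Put differently, integrating the $\ha_i$-equation bounds the cumulative common growth $\int 4\hsigma_0^2(r-S)\,\rd t$ by $\log(1/\ha_*)+o(1)=O(\lambda_0)$ for free. That is exactly the $O(1)$ factor your proposed refinement (exponential decay of $r-S$) is after. The paper gets it without a separate analysis of $S$ and without using the precise length of the stage; it is the same ratio $\hb_i/\ha_i$ that drives \cref{lem: stage 2.2 convergence}.

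Your cruder bound $\hb_i\le\epsnm^{1-O(\lambda_0)}$ already suffices for $\hb_i\le\epsnm^{1/2}$. The $\Theta(r)$ contraction rate, and hence the time constant, does not depend on $\lambda_0$, and $\lambda_0$ is a small constant. So step (1) is not actually an obstacle.

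Three side claims should be corrected, although none breaks the argument.

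\begin{itemize}
\item With $\lambda=0$ the irrelevant mass $\tldE w_i^2$, $i>r$, does not decrease. It grows at the common rate $4\hsigma_0^2(r-\tldalpha)>0$, so it is controlled by the same comparison (a constant factor), not by monotonicity.
\item Once $1-\ha_i\to0$, the drift in \cref{lem: stage 2.2 align neuron dynamic} loses the $\Theta(\lambda_0)$ margin used in Stage 2.2 and is no longer sign-definite. Stability of $G_i$, like the rest of \cref{lem: stage 2 IH}, must instead come from the short window. The common $\Theta(r)$ rate cancels in every ratio, leaving $O(1)$ relative rates over time $O(\log(1/\epsnm)/r)=o(1)$. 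The bound $|\frac{\rd}{\rd t}v_i|\lesssim r|v_i|$ by itself only gives a factor $\epsnm^{-O(1)}$.
\item Orthogonality cannot turn $r^2$ into $r$ in the $k=0$ term, where $\ve_i^{\otimes0}=1$. There you need the sharper $|1-\bar a|\lesssim\epsnm^{1/2}$ from your step (3).
\end{itemize}

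Either way $\L(\tldmW)\le\epsnm^{1/2}$, which is what the rest of the paper uses.
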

\begin{proof}
    We consider the following induction that $\hb_i\le \epsnm^{1/2}$ before Stage 2.3 ends ($\ha_i=1-\BigTheta{\epsnm^{1/3}}$).
    Then \eqref{eq: stage 2.3 ode} becomes
    \begin{align*}    
    \dot{\ha}_i
    =& 2\ha_i\bigl(2\hsigma_0^2(r-\sum_i \ha_i) + 2\hsigma_{\ge2}^2(1-\ha_i)\pm\BigO{r\epsnm^{1/2}}\bigr),\\
    \dot{\hb}_i
    =&
    2\hb_i\bigl(2\hsigma_0^2(r-\sum_i\ha_i) + 2\hsigma_2^2(1-\ha_i)\pm\BigO{\frac{\omega^4}{r}}\bigr),
    \end{align*}
    
    First, we observe that $\ha_i$ preserves balance:
    \begin{align*}
        \frac{\rd}{\rd t}\frac{\ha_i}{\ha_j}
        = \frac{\ha_i}{\ha_j}\left(
                4\hsigma_{\ge2}^2(\ha_j-\ha_i)
                \pm \BigO{r\epsnm^{1/2}}
            \right).
    \end{align*}
    Since we know at the end of Stage 2.2 that $\ha_i=\ha_*+\BigO{\epsnm}$ from \cref{lem: stage 2.2 main}, we know $\frac{\ha_i}{\ha_j}=1\pm \BigO{r\epsnm^{1/2}}$. This further simplifies the dynamics into
    \begin{align*}
        \dot{\ha}
    =& 2\ha\bigl(2\hsigma_0^2(r-r\ha) + 2\hsigma_{\ge2}^2(1-\ha)\pm\BigO{r^2\epsnm^{1/2}}\bigr),
    \end{align*}
    where $\ha=\frac{1}{r}\sum_i \ha_i$ and $\frac{\ha_i}{\ha}=1\pm \BigO{r\epsnm^{1/2}}$.
    This suggests $\ha=1-\BigO{\epsnm^{1/3}}$ within time $\BigO{\frac{\log(1/\epsnm)}{r}}$.

    During this time, we know
    \[
        \frac{\rd}{\rd t}\hb_i \le \hb_i \left(\frac{\rd}{\rd t}\log \ha \pm\BigO{\frac{\omega^4}{r}}\right), 
    \]
    which implies $\hb_i \lesssim \hb_i(T_{22})\lesssim \epsnm\le \epsnm^{1/2}$. This finishes the induction.
\end{proof}

\begin{proof}[Proof of Induction Hypothesis \cref{lem: stage 2 IH} in Stage 2.3]
    This follows the exact same argument as in \cref{sec: proof stage 2.2 IH}.
\end{proof}
\subsection{Technical lemmas for Stage 2}

The lemma below is used to simplify the dynamics for small-and-dense neuron.
\begin{lemma}\label{lem: stage 2 dynamic bound}
    For $T_1\le t \le \tldT_{2}$, for any neuron $\vv$ we have for $j\ge 2$
    \begin{align*}
        \tldE\left[\norm{\vw}_2^2\langle \bvw,\bvv\rangle^{2j}\right]
        &= \left\{\begin{array}{ll}
          \ha_i\bv_i^{2j}\ind_{\vv\in\Spoti} + \BigO{\frac{\omega^2}{r}+jr\sqrt{\epsdir}}   &  ,\ j\ge2\\
          \le \ha_i\bv_i^{4}\ind_{\vv\in\Spoti} + \BigO{\frac{\omega^2}{r}}.   &,\ j\ge2 
        \end{array}\right.,\\
        \tldE [\norm{\vw}_2^2\langle\bvw,\bvv\rangle^{2j-1}\bw_i/\bv_i]
        &=\left\{\begin{array}{ll}
        \ha_i\bv_i^2 + \BigO{\frac{\omega^2}{r}} &,\ j=2\\
        \ha_i\bv_i^{2j-2}\ind_{\vv\in\Spoti} + \BigO{\frac{\sigma_1^2}{m^{0.9}} \frac{1}{|\bv_i|} + \frac{\sqrt{\epsdir}}{|\bv_i|} +  \frac{\omega^2}{r}+j\sqrt{\epsdir}} &,\ j\ge 3 \\
        \le \ha_i\bv_i^{4}\ind_{\vv\in\Spoti} + \BigO{\frac{\sigma_1^2}{m^{0.9}} \frac{1}{|\bv_i|} + \frac{\sqrt{\epsdir}}{|\bv_i|} +  \frac{\omega^2}{r}} &,\ j\ge 3 
        \end{array}\right. .
    \end{align*}
\end{lemma}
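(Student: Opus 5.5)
The plan is to split the symmetrized measure into its three neuron groups and bound each group's contribution to the two moments separately:
\[
\tldE[\cdot]=\frac{|\Sdense|}{m}\tldE_{\Sdense}[\cdot]+\sum_{k\le r}\frac{1}{m}\sum_{\vw\in\Spotk}(\cdot),
\]
and inside $\Spotk$ we further split by the dichotomy of \cref{lem: stage 2 IH}\ref{item: stage 2 IH norm bound or basis like}. Each $\vw\in\Spotk$ either has $\norm{\vw}\le\sigma_1$, or is basis-like with $\bw_k^2\ge1-\epsdir$, i.e. lies in $G_k$.

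\emph{Non-aligned potential neurons.} Since $|\Spot|\le m^{o(1)}$ (\cref{lem: stage 2.1 good pot neuron}), these contribute at most $\sigma_1^2 m^{o(1)}/m\le\sigma_1^2/m^{0.9}$ times the integrand. For the $\langle\bvw,\bvv\rangle^{2j}$ moment the integrand is at most $1$. For the odd moment divided by $\bv_i$ it is at most $1/|\bv_i|$. This is where the $\frac{\sigma_1^2}{m^{0.9}}\frac1{|\bv_i|}$ term in the statement comes from.

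\emph{Aligned neurons in $G_k$.} Write $\bvw=\ve_k+\vdelta$ with $\norm{\vdelta}\lesssim\sqrt{\epsdir}$, so that $\langle\bvw,\bvv\rangle=\bv_k+O(\sqrt{\epsdir})$.
\begin{itemize}
\item For the even moment, the mean value theorem applied to $x^{2j}$ gives an error of $O(j\sqrt{\epsdir})$. Summing over $k$ and weighting by mass, with $\sum_k\ha_k\le r$, the total is $\sum_k\ha_k\bv_k^{2j}\pm O(jr\sqrt{\epsdir})$.
\item For the odd moment, the factor $\bw_i/\bv_i$ is $1/\bv_i\cdot(\ind_{i=k}+O(\sqrt{\epsdir}))$, giving $\ha_i\bv_i^{2j-2}\ind$ plus errors $O(\sqrt{\epsdir}/|\bv_i|+j\sqrt{\epsdir})$.
\end{itemize}
We then absorb the terms with $k\ne$ the neuron's own index. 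If $\vv\in\Spoti$, then for $k\ne i$ we have $\bv_k^2\lesssim\omega\log m/r$ by \cref{lem: stage 2 IH implication}, so $\sum_{k\ne i}\ha_k\bv_k^{4}\lesssim\sum_k\bv_k^4\lesssim\omega^2/r$. The same bound covers dense $\vv$ entirely. The "$\le$" versions follow by bounding $\bv_i^{2j}\le\bv_i^4$ and dropping the $j$-dependent errors, since $\langle\bvw,\bvv\rangle^{2j}\le\langle\bvw,\bvv\rangle^4$. For the $j=2$ odd moment we use conditional symmetry (\cref{claim: tldw dynamic}) to expand it exactly as $\tldE[\norm{\vw}^2(\bw_i^2\bv_i^2+3\sum_{\ell\ne i}\bw_\ell^2\bv_\ell^2\bw_i^2)]/\bv_i^2$, which has no $1/\bv_i$ singularity. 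Its $G_i$ part is $\ha_i\bv_i^2$, and the $G_k$ parts with $k\ne i$ are $O(\epsdir)$.

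\emph{Dense neurons.} These are the main obstacle. For them $\norm{\vw}\le2\omega\sqrt r$ and $\norm{\bbvwler}_\infty^2\lesssim\omega\log m/r$ (\cref{lem: stage 2 IH implication}), and their out-of-subspace mass is $O(\omega^5/r)$ by the regularity item. For the even moment I would expand $\langle\bvw,\bvv\rangle^{4}$ with conditional symmetry into $\sum_\ell\bw_\ell^4\bv_\ell^4+3\sum_{\ell\ne\ell'}\bw_\ell^2\bw_{\ell'}^2\bv_\ell^2\bv_{\ell'}^2$. The first sum is controlled via $\tldE_{\Sdense}\norm{\vw}^2\bw_\ell^4\lesssim\omega^2/r$ together with $\sum_\ell\bv_\ell^4\le1$. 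The second is controlled by $\tldE_{\Sdense}\norm{\vw}^2\bw_\ell^2\bw_{\ell'}^2\lesssim\omega^2/r^2\cdot r$ per pair, summed against $\bv_\ell^2\bv_{\ell'}^2$ with $\sum_\ell\bv_\ell^2\le1$. Both give $O(\omega^2/r)$.

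For the odd moment with general $j$, the division by $\bv_i$ is removed by the even-polynomial expansion of \cref{claim: tldw dynamic}, which yields terms of the form $\tldE[\norm{\vw}^2\bw_i^2\prod_\ell(\bw_\ell\bv_\ell)^{2\beta_\ell}]$ with combinatorial weights. I would bound this by comparison with a Gaussian-like moment, as in \cref{lem: high order moment bound}, using that the dense directions are delocalized at scale $\omega\log m/r$. For large $j$ the leading factor decays like $(\omega/r)^{j-1}$, which is exactly the decay used later in \cref{lem: stage 2.2 align neuron dynamic}. Getting this uniformly in $j$ without losing $j^{O(j)}$ factors is the step I expect to be delicate. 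If it resists, the fallback is the crude route: bound $\langle\bvw,\bvv\rangle^{2j-1}\bw_i/\bv_i$ by $\langle\bvw,\bvv\rangle^{2j-2}/|\bv_i|$ and accept an additional $1/|\bv_i|$ error, which the statement already tolerates for $j\ge3$.
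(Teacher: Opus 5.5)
Your three-way split into dense neurons, non-aligned potential neurons and $G_k$ is the paper's. The even moments and the potential-neuron contributions go through as you sketch.

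The gap is in the step you flag as the main obstacle: the dense part $\tldE_{\Sdense}[\norm{\vw}^2\langle\bvw,\bvv\rangle^{2j-1}\bw_i/\bv_i]$ for $j\ge3$. Neither of your two routes covers all $j$.

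\emph{The fallback does not prove the statement.} The only $1/|\bv_i|$ errors the lemma allows carry the prefactors $\sigma_1^2m^{-0.9}$ and $\sqrt{\epsdir}$, both $m^{-\Omega(1)}$. At $j=3$ the crude route yields $\tldE_{\Sdense}[\norm{\vw}^2\langle\bvw,\bvv\rangle^{4}]/|\bv_i|$. For a delocalized $\vv$ the numerator is of order $1/r$, which is only polylogarithmically small. So the claim that the statement ``already tolerates'' this error for $j\ge3$ is false. After multiplying by $v_i$ in \cref{claim: tldw dynamic}, it would become an additive drift of order $\norm{\vv}/r$ rather than $m^{-\Omega(1)}\norm{\vv}$.

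\emph{The primary route does not cover large $j$.} \cref{lem: high order moment bound} gives $Cb_1^2(Cjb_2/r)^j$. This carries exactly the $j^j$ factor you feared, is useless for $\BigO{\poly(\omega)/r}$ accuracy once $j\gtrsim r/\omega$, and is not stated for $j\ge r$. The $(\omega/r)^{j-1}$ decay you cite from \cref{lem: stage 2.2 align neuron dynamic} uses that $\vv$ is aligned with a coordinate, and is not available for general $\vv$.

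The missing idea is to combine the two routes with a cutoff, which is what the paper does. Set $j_*:=\lfloor r/\log r\rfloor$.
\begin{itemize}
\item For $2\le j\le j_*$, apply \cref{lem: high order moment bound} to $S=\Sdense$ with $b_1^2\lesssim\omega^2r$ and $b_2=\omega$ (from \cref{lem: stage 2 IH implication} and \cref{lem: stage 2 IH}\ref{item: stage 2 IH neuron dir}). On this range $Cj\omega/r\le C\omega/\log r<1/e$, so $j\mapsto(Cj\omega/r)^j$ is decreasing. The bound is therefore at most its $j=2$ value, $\BigO{\poly(\omega)/r}$, uniformly in $j$ and with no $1/|\bv_i|$; the $j^j$ factor is harmless here.
\item For $j>j_*$, use your crude bound but cap the exponent at $2j_*$: $|\langle\bvw,\bvv\rangle^{2j-1}\bw_i/\bv_i|\le\langle\bvw,\bvv\rangle^{2j_*}/|\bv_i|$. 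The same lemma gives $\tldE_{\Sdense}[\norm{\vw}^2\langle\bvw,\bvv\rangle^{2j_*}]\lesssim\omega^2r\,(C\omega/\log r)^{j_*}\le 1/m$, since $j_*\gtrsim\log^2 d\log r\gg\log m$ under $r>\log^2 d\log^2 r$. The resulting $1/(m|\bv_i|)$ is absorbed into $\frac{\sigma_1^2}{m^{0.9}}\frac{1}{|\bv_i|}$.
\end{itemize}

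There are also two smaller slips.
\begin{itemize}
\item Your $j=2$ identity should read $\tldE[\norm{\vw}^2(\bw_i^4\bv_i^2+3\bw_i^2\sum_{\ell\ne i}\bw_\ell^2\bv_\ell^2)]$, with no division by $\bv_i^2$. As you wrote it, the $G_i$ part would be $\ha_i$ and a $1/\bv_i^2$ singularity appears.
\item For $\vw\in G_i$, expanding $\langle\bvw,\bvv\rangle^{2j-1}$ and then dividing by $\bv_i$ gives an error $\BigO{j\sqrt{\epsdir}/|\bv_i|}$. To get the stated $\BigO{\sqrt{\epsdir}/|\bv_i|+j\sqrt{\epsdir}}$, first peel one factor, $\langle\bvw,\bvv\rangle\bw_i/\bv_i=\bw_i^2+\bw_i\sum_{\ell\ne i}\bw_\ell\bv_\ell/\bv_i$. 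Then apply the mean-value bound only to the remaining power $2j-2$.
\end{itemize}
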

\begin{proof}
    We show one by one. The overall strategy is to consider different types of neurons separately. All neurons can be partitioned into small-and-dense neurons and potential neurons, i.e., $\Sdense \cup \Spot$. For a potential neuron in $\Spot$, according to \cref{lem: stage 2 IH}\ref{item: stage 2 IH norm bound or basis like}, it either belongs to one of the sets $G_i$ (aligned with the ground-truth vector $\ve_i$), or its norm is upper bounded by $\sigma_1^2$.

    The actual proof below may seem cumbersome. Its purpose is simply to obtain the correct order of the bound we need.
    
    \paragraph{Bound $\tldE \left[\norm{\vw}_2^2\langle \bvw,\bvv\rangle^{2j}\right]$}

    We first split the neurons into $\Sdense$ and $\Spot$
    \begin{align*}
        \tldE \left[\norm{\vw}_2^2\langle \bvw,\bvv\rangle^{2j}\right]
        = \frac{|\Spot|}{m}\tldE_{\Spot}\left[\norm{\vw}_2^2\langle \bvw,\bvv\rangle^{2j}\right]
            + \left(1-\frac{|\Spot|}{m}\right)
            \tldE_{\Sdense}\left[\norm{\vw}_2^2\langle \bvw,\bvv\rangle^{2j}\right].
    \end{align*}
    
    By \cref{lem: high order moment bound}, we can bound the $\Sdense$ part for $j\ge 2$ using \cref{lem: stage 2 IH}\ref{item: stage 2 IH neuron dir}
    \begin{align*}
        \tldE_{\Sdense}\left[\norm{\vw}_2^2\langle \bvw,\bvv\rangle^{2j}\right]
        \le \tldE_{\Sdense}\left[\norm{\vw}_2^2\langle \bvw,\bvv\rangle^{4}\right]
        \lesssim \frac{\omega^{2}}{r}.
    \end{align*}

    For $\Spot$ part, we have
    \begin{align*}
        \frac{|\Spot|}{m}\tldE_{\Spot}\left[\norm{\vw}_2^2\langle \bvw,\bvv\rangle^{2j}\right]
        \myeq{a} \sum_{i\in[r]}\frac{|G_i|}{m}\tldE_{G_i}\left[\norm{\vw}_2^2\langle \bvw,\bvv\rangle^{2j}\right]
            \pm \frac{\sigma_1^2}{m^{0.9}}
        \myeq{b} \sum_{i\in[r]}\ha_i\bv_i^{2j} + \BigO{jr\sqrt{\epsdir}+\frac{\sigma_1^2}{m^{0.9}}}.
    \end{align*}
    where (a) recall potential neurons either algins with one of ground-truth (in $G_i$) or the norm is bounded by $\sigma_1$ as \cref{lem: stage 2 IH}\ref{item: stage 2 IH norm bound or basis like}, and $\ha_i = \frac{1}{m}\sum_{\vv\in G_i}\norm{\vv}_2^2$ is the norm in the fitted direction; 
    (b) \cref{lem: <>2j bound} and \cref{lem: stage 2 IH}\ref{item: stage 2 IH reg}. 
    
    Note that $jr\sqrt{\epsdir}$ comes from applying \cref{lem: <>2j bound} on $\tldE_{G_i}[\norm{\vw}^2\langle\bvw,\bvv\rangle^{2j}]$. So
    another bound that does not depend on $j$ is by bounding $\tldE_{G_i}[\norm{\vw}^2\langle\bvw,\bvv\rangle^{2j}]\le \tldE_{G_i}[\norm{\vw}^2\langle\bvw,\bvv\rangle^{4}]$,
    which leads to
    \begin{align*}
        \frac{|\Spot|}{m}\tldE_{\Spot}\left[\norm{\vw}_2^2\langle \bvw,\bvv\rangle^{2j}\right]
        \le \sum_{i\in[r]}\ha_i\bv_i^{4} + \BigO{\frac{\sigma_1^2}{m^{0.9}} + r\sqrt{\epsdir}}.
    \end{align*}

    Hence, for $j\ge 2$, we have
    \begin{align*}
        \tldE\left[\norm{\vw}_2^2\langle \bvw,\bvv\rangle^{2j}\right]
        = \left\{\begin{array}{ll}
          \ha_i\bv_i^{2j}\ind_{\vv\in\Spoti} + \BigO{\frac{\omega^2}{r}+jr\sqrt{\epsdir}}   &  ,\ j\ge2\\
          \le \ha_i\bv_i^{4}\ind_{\vv\in\Spoti} + \BigO{\frac{\omega^2}{r}}.   &,\ j\ge2 
        \end{array}\right..
    \end{align*}

    \paragraph{Bound $\tldE[\norm{\vw}_2^2\langle\bvw,\bvv\rangle^{2j-1}\bw_1/\bv_1]$} (WLOG, consider the case $i=1$).

    \noindent\textit{Case 1: $2\le j\le r/\log r$.} For $2\le j\le r/\log r$, we again first split the neuron into  $\Sdense$ and $\Spot$ 
    \begin{align*}
        \tldE [\norm{\vw}_2^2\langle\bvw,\bvv\rangle^{2j-1}\bw_1/\bv_1]
        = \frac{|\Spot|}{m}\tldE_{\Spot} [\norm{\vw}_2^2\langle\bvw,\bvv\rangle^{2j-1}\bw_1/\bv_1]
            + \left(1-\frac{|\Spot|}{m}\right)
            \tldE_{\Sdense} [\norm{\vw}_2^2\langle\bvw,\bvv\rangle^{2j-1}\bw_1/\bv_1]
    \end{align*}
    
    For $\Sdense$ part, by \cref{lem: high order moment bound} and \cref{lem: stage 2 IH}\ref{item: stage 2 IH neuron dir} we have
    \begin{align*}
        \tldE_{\Sdense} [\norm{\vw}_2^2\langle\bvw,\bvv\rangle^{2j-1}\bw_1/\bv_1]
        \lesssim& \frac{\omega^2}{r}
    \end{align*}

    For $\Spot$ part, we simply have
    \begin{align*}
        \frac{|\Spot|}{m}\tldE_{\Spot} [\norm{\vw}_2^2\langle\bvw,\bvv\rangle^{2j-1}\bw_1/\bv_1]
        =& \frac{|\Spot|}{m}\tldE_{\Spot} \left[\norm{\vw}_2^2
            \langle\bvw,\bvv\rangle^{2j-2}
            \left(\bw_1^2+\sum_{i\ne 1}\bw_i\bw_1\bv_i/\bv_1\right)\right]\\
        \myeq{a}& \ha_1\bv_1^{2j-2} + \BigO{ \frac{\sigma_1^2}{m^{0.9}} \frac{1}{|\bv_1|} + \frac{\sqrt{\epsdir}}{|\bv_1|}+j\sqrt{\epsdir}},
    \end{align*}
    where (a) similar as Case 2 in bounding the other term above using \cref{lem: <>2j bound} and \cref{lem: stage 2 IH}\ref{item: stage 2 IH norm bound or basis like}\ref{item: stage 2 IH reg}.

    For $j=2$, we can slightly improve the above as
    \begin{align*}
        \frac{|\Spot|}{m}\tldE_{\Spot} [\norm{\vw}_2^2\langle\bvw,\bvv\rangle^{3}\bw_1/\bv_1]
        =& \frac{|\Spot|}{m}\tldE_{\Spot}  \left[w_1^2\left((\bw_1\bv_1)^2 + 3\sum_{i\ne 1}(\bw_i\bv_i)^2\right)\right]\\
        \myeq{a}& \frac{3\sigma_1^2}{m^{0.9}}
            + \ha_1 (\bv_1^2\pm\BigO{\epsdir}) 
            \pm \sum_{i\in[r]\setminus\{1\}} \ha_i \BigO{\epsdir}\\
        \myeq{b}& \ha_1 \bv_1^2 + \BigO{\frac{1}{r}} 
    \end{align*}
    where (a) we use \cref{lem: stage 2 IH}\ref{item: stage 2 IH norm bound or basis like};
    (b) use \cref{lem: stage 2 IH}\ref{item: stage 2 IH reg} in Stage 2.1 and $\epsdir$ is small enough in Stage 2.2.

    So, for $2\le j\le r/2$
    \begin{align*}
        \tldE [\norm{\vw}_2^2\langle\bvw,\bvv\rangle^{2j-1}\bw_i/\bv_i]
        =\left\{\begin{array}{ll}
        \ha_i\bv_i^2 + \BigO{\frac{\omega^2}{r}} &,\ j=2\\
        \ha_i\bv_i^{2j-2}\ind_{\vv\in\Spoti} + \BigO{\frac{\sigma_1^2}{m^{0.9}} \frac{1}{|\bv_i|} + \frac{\sqrt{\epsdir}}{|\bv_i|} +  \frac{\omega^2}{r}+j\sqrt{\epsdir}}. &,\ 3\le j\le r/2 
        \end{array}\right. .
    \end{align*}
    
    \noindent\textit{Case 3: $j\ge r/\log r$.} For $j > r/\log r$, similarly 
    \begin{align*}
        \tldE [\norm{\vw}_2^2\langle\bvw,\bvv\rangle^{2j-1}\bw_1/\bv_1]
        = \frac{|\Spot|}{m}\tldE_{\Spot} [\norm{\vw}_2^2\langle\bvw,\bvv\rangle^{2j-1}\bw_1/\bv_1]
            + \left(1-\frac{|\Spot|}{m}\right)
            \tldE_{\Sdense} [\norm{\vw}_2^2\langle\bvw,\bvv\rangle^{2j-1}\bw_1/\bv_1]
    \end{align*}
    
    For $\Sdense$ part, by \cref{lem: high order moment bound} we have
    \begin{align*}
        \tldE_{\Sdense} [\norm{\vw}_2^2\langle\bvw,\bvv\rangle^{2j-1}\bw_1/\bv_1]
        \le \frac{1}{|\bv_1|}\tldE [\norm{\vw}_2^2\langle\bvw,\bvv\rangle^{\Theta(r/\log r)}]
        \lesssim& \frac{r}{|\bv_1|}\left(\frac{\omega}{\log r}\right)^{\Theta(r/\log r)}
        \le \frac{1}{|\bv_1|}\frac{1}{m}.
    \end{align*}

    For $\Spot$ part, similar as in Case 1 we have
    \begin{align*}
        \frac{|\Spot|}{m}\tldE_{\Spot} [\norm{\vw}_2^2\langle\bvw,\bvv\rangle^{2j-1}\bw_1/\bv_1]
        = \ha_1\bv_1^{2j-2} + \BigO{ \frac{\sigma_1^2}{m^{0.9}} \frac{1}{|\bv_1|} + \frac{\sqrt{\epsdir}}{|\bv_1|}+j\sqrt{\epsdir}}.
    \end{align*}

    So, for $j\ge r/\log r$
    \begin{align*}
        \tldE [\norm{\vw}_2^2\langle\bvw,\bvv\rangle^{2j-1}\bw_i/\bv_i]
        = \ha_i\bv_i^{2j-2}\ind_{\vv\in\Spoti} + \BigO{\frac{\sigma_1^2}{m^{0.9}} \frac{1}{|\bv_i|} + \frac{\sqrt{\epsdir}}{|\bv_i|} + j\sqrt{\epsdir}}.
    \end{align*}
    Note that $j\sqrt{\epsdir}$ comes from applying \cref{lem: <>2j bound} on $\tldE_{G_1}[\norm{\vw}^2\langle\bvw,\bvv\rangle^{2j-2}]$ in $\Spot$ part. So
    another bound that does not depend on $j$ is by bounding $\tldE_{G_1}[\norm{\vw}^2\langle\bvw,\bvv\rangle^{2j-2}]\le \tldE_{G_1}[\norm{\vw}^2\langle\bvw,\bvv\rangle^{4}]$,
    which leads to
    \begin{align*}
        \tldE [\norm{\vw}_2^2\langle\bvw,\bvv\rangle^{2j-1}\bw_i/\bv_i]
        \le \ha_i\bv_i^{4}\ind_{\vv\in\Spoti} + \BigO{\frac{\sigma_1^2}{m^{0.9}} \frac{1}{|\bv_i|} + \frac{\sqrt{\epsdir}}{|\bv_i|} +  \frac{\omega^2}{r}}.
    \end{align*}

    \noindent\textit{Summary} For $j\ge 2$, we have
    \begin{align*}
        \tldE [\norm{\vw}_2^2\langle\bvw,\bvv\rangle^{2j-1}\bw_i/\bv_i]
        =\left\{\begin{array}{ll}
        \ha_i\bv_i^2 + \BigO{\frac{\omega^2}{r}} &,\ j=2\\
        \ha_i\bv_i^{2j-2}\ind_{\vv\in\Spoti} + \BigO{\frac{\sigma_1^2}{m^{0.9}} \frac{1}{|\bv_i|} + \frac{\sqrt{\epsdir}}{|\bv_i|} +  \frac{\omega^2}{r}+j\sqrt{\epsdir}} &,\ j\ge 3 \\
        \le \ha_i\bv_i^{4}\ind_{\vv\in\Spoti} + \BigO{\frac{\sigma_1^2}{m^{0.9}} \frac{1}{|\bv_i|} + \frac{\sqrt{\epsdir}}{|\bv_i|} +  \frac{\omega^2}{r}} &,\ j\ge 3 
        \end{array}\right. .
    \end{align*}
\end{proof}

\subsubsection{Technical lemma for Stage 2.1}
In below, we aim to characterize the setting that $x$ as the the dynamic of largest potential neuron for all direction $\max_{k\in[r]}\max_{i\in[m]}\bv_k^2$, and $y$ as the the dynamic of smallest potential neuron for all direction $\min_{k\in[r]}\max_{i\in[m]}\bv_k^2$. Thus, one should view $c_i=\hsigma_i^2$, $\delta_x=\frac{\log(2m)}{r}(1+\BigO{\frac{\log r}{\log m}})$, $\delta_y=\frac{\log(2m)}{r}(1-\BigO{\frac{\log r}{\log m}})$, $b=\BigO{\frac{\omega^5}{r}}$ from \cref{lem: stage 2.1 good pot neuron} and \cref{lem: stage 2.1 dynamic potential neuron}. The goal of lemma below is to show that the time they reach $1-\eps$ does not differ much.
\begin{lemma}[Time Gap between the hitting $1-\eps$ under small perturbations]\label{lem: stage 2.1 technical ode time gap}
    Let
    \[
        F(z):=\sum_{i=1}^{\infty} c_i z^i,
        \qquad c_i>0,
        \qquad c_1>0,
        \qquad S:=\sum_{i=1}^{\infty} c_i<\infty.
    \]
    Fix $a>0$, and assume
    \[
        0<\delta_y<\delta_x\le 2\delta_y\le \frac14,
        \qquad
        0<\varepsilon\le \frac14,
        \qquad
        0<b\le \frac{c_1}{4}\min\{\delta_y,\varepsilon\}.
    \]
    Consider the ODEs
    \begin{equation}\label{eq: ode}
    \begin{aligned}
        \dot x
        &=a x\bigl((1-x)F(x)+b\bigr),
        \qquad
        x(0)=\delta_x,\\
        \dot y
        &=a y\bigl((1-y)F(y)-b\bigr),
        \qquad
        y(0)=\delta_y.    
        \end{aligned}    
    \end{equation}
    Define the hitting times
    \[
        T_x:=\inf\{t\ge 0:\ x(t)=1-\varepsilon\},
        \qquad
        T_y:=\inf\{t\ge 0:\ y(t)=1-\varepsilon\}.
    \]
    Then $T_x,T_y<\infty$, and
    \[
        \frac{\delta_x-\delta_y}{2aS\,\delta_y^2}
        +
        \frac{b}{8aS^2\,\delta_y^2}
        +
        \frac{b}{aS^2\,\varepsilon}
        \;\le\;
        T_y-T_x
        \;\le\;
        \frac{2(\delta_x-\delta_y)}{ac_1\,\delta_y^2}
        +
        \frac{8b}{ac_1^2\,\delta_y^2}
        +
        \frac{32b}{ac_1^2\,\varepsilon}.
    \]
    In particular, if $c_1$ and $S$ are absolute constants, then
    \[
        T_y-T_x
        =
        \BigTheta{
        \frac{\delta_x-\delta_y}{a\delta_y^2}
        +
        \frac{b}{a\delta_y^2}
        +
        \frac{b}{a\varepsilon}
        }.
    \]
\end{lemma}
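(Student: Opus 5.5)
Both ODEs are autonomous and scalar, so I will work with hitting times written as integrals. Since $x,y\in(0,1-\varepsilon]$ before hitting, $x$ increases strictly, and $y$ increases once we check $(1-y)F(y)>b$ on $[\delta_y,1-\varepsilon]$. This check uses $F(z)\ge c_1 z$ and $(1-z)\ge \min\{\tfrac34,\varepsilon\}$ together with the assumption $b\le \tfrac{c_1}{4}\min\{\delta_y,\varepsilon\}$: one gets $(1-z)F(z)\ge c_1 z(1-z)\ge 2b$ on the relevant range, after splitting into $z\le 1/2$ and $z\ge1/2$. Separation of variables then gives
\[
T_x=\int_{\delta_x}^{1-\varepsilon}\frac{dz}{a z((1-z)F(z)+b)},\qquad T_y=\int_{\delta_y}^{1-\varepsilon}\frac{dz}{a z((1-z)F(z)-b)},
\]
and both are finite because the integrands are bounded on compact subintervals of $(0,1)$. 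Hence
\[
T_y-T_x=\int_{\delta_y}^{\delta_x}\frac{dz}{a z((1-z)F(z)-b)}+\int_{\delta_x}^{1-\varepsilon}\frac{2b\,dz}{a z\,((1-z)F-b)((1-z)F+b)} =: I_1+I_2 .
\]

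For $I_1$: on $[\delta_y,\delta_x]\subset[\delta_y,2\delta_y]$ with $z\le 1/4$, the quantity $(1-z)F(z)-b$ lies between $c_1 z/2$ (using $b\le c_1z/4$ and $1-z\ge3/4$) and $Sz$ (using $F(z)\le Sz$ for $z\le1$). Therefore $z((1-z)F-b)\asymp z^2\asymp\delta_y^2$. This gives $\frac{\delta_x-\delta_y}{aS\cdot 4\delta_y^2}\lesssim I_1\le\frac{2(\delta_x-\delta_y)}{ac_1\delta_y^2}$, matching the stated first terms up to the constants, which come from $z\le 2\delta_y$ versus $z\ge\delta_y$.

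For $I_2$: I use $(1-z)F\pm b\asymp (1-z)F(z)$, which holds because $b\le \tfrac12(1-z)F$. Then $I_2\asymp \frac{b}{a}\int_{\delta_x}^{1-\varepsilon}\frac{dz}{z(1-z)^2F(z)^2}$, and I split at $z=1/2$.
\begin{itemize}
\item On $[\delta_x,1/2]$, $F(z)\in[c_1z,Sz]$ gives $\int dz/z^3\asymp \delta_x^{-2}\asymp\delta_y^{-2}$, since $\delta_x\le2\delta_y$.
\item On $[1/2,1-\varepsilon]$, $F(z)\in[c_1/2,S]$ gives $\int dz/(1-z)^2\asymp 1/\varepsilon$. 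Here $\varepsilon\le1/4$ keeps this interval nondegenerate and yields a lower bound of order $1/\varepsilon$.
\end{itemize}
Collecting constants (the factor $(1-z)F\pm b\in[\tfrac12,\tfrac32](1-z)F$, $c_1$ versus $S$, and the factor $2b$) produces the displayed bounds. I expect to adjust numeric constants loosely, since only the order matters downstream.

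The main obstacle is the lower bound on $I_2$ near $z=1-\varepsilon$ and the bookkeeping of constants. In particular, the lower bound requires the upper estimate $F\le S$ together with $(1-z)\le 1$. The exact constants $1/(8aS^2\delta_y^2)$ and $1/(aS^2\varepsilon)$ may need slight rescaling, and I would verify them last. The final $\Theta$ statement follows immediately once $c_1,S$ are treated as constants.
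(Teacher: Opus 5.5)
Your proposal is correct and follows essentially the same route as the paper. Both check $(1-z)F(z)\ge 2b$ on $[\delta_y,1-\varepsilon]$, use separation of variables, split $T_y-T_x=I_1+I_2$, and bound $I_2$ separately on $[\delta_x,1/2]$ and $[1/2,1-\varepsilon]$ using $c_1z(1-z)\le (1-z)F(z)\le Sz(1-z)$. The exact displayed constants come out if you integrate $z^{-2}$ exactly on $[\delta_y,\delta_x]$ (giving $\frac{\delta_x-\delta_y}{\delta_x\delta_y}\ge\frac{\delta_x-\delta_y}{2\delta_y^2}$) and use $\tfrac12 H^2\le H^2-b^2\le H^2$ with $H=(1-z)F(z)$, instead of the pointwise bound $z^2\le 4\delta_y^2$ or symmetric $[\tfrac12,\tfrac32]$ factors.
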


\begin{proof}
    Set
    \[
        H(z):=(1-z)F(z), \qquad z\in[0,1].
    \]
    Since all $c_i$ are positive and $\sum_i c_i=S<\infty$, for every $z\in[0,1]$ we have
    \[
        c_1 z \le F(z)\le Sz,
    \]
    and hence
    \begin{equation}\label{eq:H-basic}
        c_1 z(1-z)\le H(z)\le S z(1-z).
    \end{equation}

    We claim that
    \begin{equation}\label{eq:H-minus-positive}
        H(z)-b>0
        \qquad\text{for all }z\in[\delta_y,1-\varepsilon].
    \end{equation}
    Indeed, if $z\in[\delta_y,\frac12]$, then by \eqref{eq:H-basic},
    \[
        H(z)\ge c_1 z(1-z)\ge \frac{c_1 z}{2}\ge \frac{c_1\delta_y}{2}\ge 2b.
    \]
    If instead $z\in[\frac12,1-\varepsilon]$, then again by \eqref{eq:H-basic},
    \[
        H(z)\ge c_1 z(1-z)\ge \frac{c_1(1-z)}{2}\ge \frac{c_1\varepsilon}{2}\ge 2b.
    \]
    Thus \eqref{eq:H-minus-positive} holds. In particular, both right-hand sides of \eqref{eq: ode} that 
    $
        a x(H(x)+b), a y(H(y)-b)
    $
    are strictly positive on $[\delta_y,1-\varepsilon]$. Therefore both trajectories are strictly increasing until they hit $1-\varepsilon$, and the hitting times are given by separation of variables:
    \[
        T_x
        =
        \frac1a \int_{\delta_x}^{1-\varepsilon}\frac{dz}{z(H(z)+b)},
        \qquad
        T_y
        =
        \frac1a \int_{\delta_y}^{1-\varepsilon}\frac{dz}{z(H(z)-b)}.
    \]
    
    Subtracting the two formulas gives
    \[
        T_y-T_x
        =
        \frac1a\int_{\delta_y}^{\delta_x}\frac{dz}{z(H(z)-b)}
        +
        \frac1a\int_{\delta_x}^{1-\varepsilon}
        \left(
        \frac{1}{z(H(z)-b)}-\frac{1}{z(H(z)+b)}
        \right)\,dz.
    \]
    Hence
    \[
        T_y-T_x = I + J,
    \]
    where
    \[
        I
        :=
        \frac1a\int_{\delta_y}^{\delta_x}\frac{dz}{z(H(z)-b)},
        \qquad
        J
        :=
        \frac{2b}{a}\int_{\delta_x}^{1-\varepsilon}\frac{dz}{z(H(z)^2-b^2)}.
    \]
    Further decompose $J$ into
        \[
        J = J_1+J_2,
        \]
        where
        \[
        J_1
        :=
        \frac{2b}{a}\int_{\delta_x}^{1/2}\frac{dz}{z(H(z)^2-b^2)},
        \qquad
        J_2
        :=
        \frac{2b}{a}\int_{1/2}^{1-\varepsilon}\frac{dz}{z(H(z)^2-b^2)}.
        \]
    We will estimate $I$ and $J_1,J_2$ separately.
    
    \paragraph{Estimate $I$.}
        On the interval $[\delta_y,\delta_x]$ we have $z\le \delta_x\le \frac14$, so $1-z\ge \frac34$. Therefore, by \eqref{eq:H-basic},
        \[
            H(z) - b\ge c_1 z(1-z) -b\ge \frac{3c_1 z}{4} -b
            \ge \frac{3c_1 z}{4}-\frac{c_1 z}{4}=\frac{c_1 z}{2}.
        \]
        where we use $z\ge \delta_y$ and $b\le \frac{c_1\delta_y}{4}\le \frac{c_1 z}{4}$.
        Hence
        \[
            I
            \le
            \frac{2}{ac_1}\int_{\delta_y}^{\delta_x}\frac{dz}{z^2}
            =
            \frac{2}{ac_1}\left(\frac{1}{\delta_y}-\frac{1}{\delta_x}\right)
            =
            \frac{2(\delta_x-\delta_y)}{ac_1\,\delta_x\delta_y}
            \le
            \frac{2(\delta_x-\delta_y)}{ac_1\,\delta_y^2}.
        \]
        For the lower bound, simply use $H(z)-b\le H(z)\le Sz$ on $[\delta_y,\delta_x]$:
        \[
            I
            \ge
            \frac{1}{aS}\int_{\delta_y}^{\delta_x}\frac{dz}{z^2}
            =
            \frac{\delta_x-\delta_y}{aS\delta_x\delta_y}
            \ge
            \frac{\delta_x-\delta_y}{2aS\delta_y^2}.
        \]

    \paragraph{Estimate $J_1$.}

        For $z\in[\delta_x,\frac12]$, we have $1-z\ge \frac12$, so
        \[
        H(z)\ge c_1 z(1-z)\ge \frac{c_1 z}{2}.
        \]
        Also $b\le \frac{c_1\delta_y}{4}\le \frac{c_1 z}{4}$ because $z\ge \delta_x>\delta_y$. Therefore
        \[
        H(z)-b\ge \frac{c_1 z}{4},
        \qquad
        H(z)+b\ge H(z)\ge \frac{c_1 z}{2}.
        \]
        It follows that
        \[
        H(z)^2-b^2=(H(z)-b)(H(z)+b)\ge \frac{c_1^2 z^2}{8}.
        \]
        Hence
        \[
        J_1
        \le
        \frac{2b}{a}\int_{\delta_x}^{1/2}\frac{8\,dz}{c_1^2 z^3}
        =
        \frac{16b}{ac_1^2}\int_{\delta_x}^{1/2} z^{-3}\,dz
        =
        \frac{8b}{ac_1^2}\left(\frac{1}{\delta_x^2}-4\right)
        \le
        \frac{8b}{ac_1^2\,\delta_y^2}.
        \]
        
        For the lower bound on $J_1$, we use $H(z)^2-b^2\le H(z)^2\le S^2 z^2$ for $z\in[\delta_x,\frac12]$:
        \[
        J_1
        \ge
        \frac{2b}{a}\int_{\delta_x}^{1/2}\frac{dz}{S^2 z^3}
        =
        \frac{b}{aS^2}\left(\frac{1}{\delta_x^2}-4\right)
        \ge
        \frac{b}{2aS^2\,\delta_x^2}
        \ge
        \frac{b}{8aS^2\,\delta_y^2},
        \]
        since $\delta_x\le \frac14$, we have $\delta_x^{-2}\ge 16$.

        \paragraph{Estimate $J_2$.}
        Now let $z\in[\frac12,1-\varepsilon]$. Then $z\ge \frac12$ and $1-z\ge \varepsilon$. By \eqref{eq:H-basic},
        \[
        H(z)\ge c_1 z(1-z)\ge \frac{c_1(1-z)}{2}.
        \]
        Since $1-z\ge \varepsilon$ and $b\le \frac{c_1\varepsilon}{4}\le \frac{c_1(1-z)}{4}$, we get
        \[
        H(z)-b\ge \frac{c_1(1-z)}{4},
        \qquad
        H(z)+b\ge H(z)\ge \frac{c_1(1-z)}{2}.
        \]
        Thus, using $z\ge \frac12$
        \[
        z(H(z)^2-b^2)\ge \frac{c_1^2(1-z)^2}{16}.
        \]
        Therefore
        \[
        J_2
        \le
        \frac{2b}{a}\int_{1/2}^{1-\varepsilon}\frac{16\,dz}{c_1^2(1-z)^2}
        =
        \frac{32b}{ac_1^2}\int_{1/2}^{1-\varepsilon}(1-z)^{-2}\,dz
        =
        \frac{32b}{ac_1^2}\left(\frac{1}{\varepsilon}-2\right)
        \le
        \frac{32b}{ac_1^2\,\varepsilon}.
        \]
        
        For the lower bound, use $H(z)^2-b^2\le H(z)^2\le S^2 z^2(1-z)^2$, so
        \[
        z(H(z)^2-b^2)\le S^2 z^3(1-z)^2\le S^2(1-z)^2.
        \]
        Hence
        \[
        J_2
        \ge
        \frac{2b}{aS^2}\int_{1/2}^{1-\varepsilon}(1-z)^{-2}\,dz
        =
        \frac{2b}{aS^2}\left(\frac{1}{\varepsilon}-2\right)
        \ge \frac{b}{aS^2\,\varepsilon},
        \]
        since $\varepsilon\le \frac14$, we have $\varepsilon^{-1}\ge 4$.

        \paragraph{Combine the estimates.}
        From the bounds on $I_0$, $J_1$, and $J_2$, we obtain
        \[
        T_y-T_x
        =
        I_0+J_1+J_2
        \]
        and hence
        \[
        \frac{\delta_x-\delta_y}{2aS\,\delta_y^2}
        +
        \frac{b}{8aS^2\,\delta_y^2}
        +
        \frac{b}{aS^2\,\varepsilon}
        \le 
        T_y-T_x
        \le
        \frac{2(\delta_x-\delta_y)}{ac_1\,\delta_y^2}
        +
        \frac{8b}{ac_1^2\,\delta_y^2}
        +
        \frac{32b}{ac_1^2\,\varepsilon}.
        \]
\end{proof}

\section{(Optional) Stage 3: local convergence}\label{sec: local convergence}
So far, we have shown at the end of Stage 2 that the loss $\L\le e^{-\log^2 r}$.
In this section, we show the loss eventually goes down to arbitrary small error $\eps$. In particular, we use the local convergence result \cite{zhou2021local} that require the initial loss to be smaller than $\eps_0=1/\poly(r)$. This is satisfied by our results in Stage~2. 

Moreover, the result in this section does not require any symmetrized network as in previous section. The only requirement is loss is smaller than threshold $\eps_0$.
\begin{lemma}\label{lem: local convergence}
    Consider activation $\sigma$ is absolute value function. For given any target error $\eps>0$, let sample size $n\gtrsim\frac{dr^4\log d}{\eps^2}$. There exists threshold $\eps_0=1/\poly(r)$ such that if initially loss $\L(\mW)\le \eps_0$, then loss $\L(\mW)\le \eps$ within time $\BigO{\frac{r}{\eps}}$.
\end{lemma}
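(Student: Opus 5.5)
The plan is to reduce \cref{lem: local convergence} to the local convergence theorem of \cite{zhou2021local} for two-layer networks with absolute-value activation. That theorem is stated for ReLU teachers and students under a small-loss initialization. Our network is a sum of $|\cdot|$ units, and $|x|=\mathrm{ReLU}(x)+\mathrm{ReLU}(-x)$. So both $f_{\mW}$ and $f_*$ can be rewritten as ReLU networks with paired neurons $\pm\vw_i$. The teacher becomes $2r$ orthonormal-pair ReLU units, and the student parametrization $\norm{\vw_i}^2\sigma(\bvw_i^\top\vx)$ is 2-homogeneous, matching the balanced-layer setting in that work. The first step is to check that this rewriting preserves the loss and gradient flow. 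With $\lambda_t=0$ and $\gamma_t=1$, the flow \eqref{eq: dynamic of w} is exactly (rescaled) gradient flow on the square loss. Because the paired neurons stay exactly antipodal, the rewritten flow is the ReLU gradient flow restricted to an invariant symmetric subspace.

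Next I invoke the population-level local convergence statement. It says there is a threshold $\eps_0=1/\poly(r)$ such that, once $\L\le\eps_0$, the loss satisfies a local Polyak–Łojasiewicz-type inequality of the form $\norm{\nabla\L}^2\gtrsim \L^2/\poly(r)$ (or an analogous descent inequality). It also guarantees that the neurons stay in a region where this inequality holds. Integrating $\dot\L\le -c\L^2/r$ then gives $\L(t)\lesssim r/t$. Hence $\L\le\eps$ within time $\BigO{r/\eps}$, which matches the claimed rate and explains why the bound is polynomial in $1/\eps$ rather than logarithmic.

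The remaining step, and the one I expect to be the main obstacle, is the finite-sample transfer. The flow runs on $\hL$, not $\L$, so I would bound $\sup\norm{\nabla\hL(\mW)-\nabla\L(\mW)}$ uniformly over the bounded parameter region the trajectory stays in. I would do this with a covering or uniform-concentration argument for the piecewise-linear features $\sigma(\bvw^\top\vx)\vx$. With $n\gtrsim dr^4\log d/\eps^2$ samples, this gradient error is $\lesssim \eps/\poly(r)$ in the relevant norm. I would then run the descent inequality as $\dot\L\le -c\L^2/r+\BigO{\text{err}\cdot\sqrt{\L}}$. This shows $\L$ keeps decreasing until it reaches order $\eps$, and that the trajectory never leaves the local region, since the loss stays below $\eps_0$ along the way.

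The delicate points are twofold. First, the local region in \cite{zhou2021local} may be described through neuron-level conditions such as alignment and bounded norms, not just through the loss; so I need to check that our Stage~2 endpoint, where aligned neurons are in $G_i$ and dense neurons are small, lies inside it. Second, the sample-error term must be small relative to the decrease $\L^2/r$ exactly at scale $\L\approx\eps$, which is what forces the $1/\eps^2$ dependence in $n$.
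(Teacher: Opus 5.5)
Your overall route is the paper's: use the local gradient lower bound from \cite{zhou2021local}, control the empirical gradient error by uniform concentration, and integrate $\dot\L\lesssim-\L^2/r$. The paper applies Theorem~4 of that work directly to $\sigma(x)=|x|$, so your ReLU rewriting is unnecessary. The only reparametrization needed is $\vw'=\vw/\sqrt m$, under which $\sqrt m\,\nabla_{\vw}\L=\nabla_{\vw'}\L$.

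The step that fails is your final descent inequality. Write $g:=\sqrt m\,\nabla\L(\mW)$ and $e:=\sqrt m\,(\nabla\hL(\mW)-\nabla\L(\mW))$. By bounding the cross term $|\langle e,g\rangle|$ as $\mathrm{err}\cdot\sqrt{\L}$, you replace $\norm{g}$ by an \emph{upper} bound in terms of the loss. Closing $-c\eps^2/r+C\,\mathrm{err}\sqrt{\eps}<0$ at $\L\approx\eps$ then requires $\mathrm{err}\lesssim\eps^{3/2}/r$. Concentration only gives $\mathrm{err}\lesssim\sqrt{dr^3\log d/n}$, so you would need $n\gtrsim\poly(r)\,d\log d/\eps^{3}$. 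With $n\asymp dr^4\log d/\eps^2$ you get $\mathrm{err}\asymp\eps/\sqrt r$, so the error term is of order $\eps^{3/2}$ and dominates the decrease $\eps^2/r$. Your remark that this comparison ``forces the $1/\eps^2$ dependence'' is therefore off by a factor of $1/\eps$, and the stated sample complexity does not follow from your argument.

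The fix, which is what the paper does, is to absorb the error using the \emph{lower} bound on the gradient:
\[
\frac{\rd}{\rd t}\L=-\langle g+e,\,g\rangle\le-\norm{g}\bigl(\norm{g}-\norm{e}\bigr),
\qquad
\norm{g}\gtrsim\frac{\L}{\sqrt r}\ge\frac{\eps}{\sqrt r}\ \text{ whenever }\L\ge\eps .
\]
You then only need $\norm{e}\le\norm{g}/2$, i.e.\ $\sqrt{dr^3\log d/n}\lesssim\eps/\sqrt r$, which is exactly $n\gtrsim dr^4\log d/\eps^2$. This gives $\dot\L\le-\tfrac12\norm{g}^2\lesssim-\L^2/r$, hence $\L\le\eps$ within time $\BigO{r/\eps}$.

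Two smaller points. First, you do not need a covering over a bounded parameter region. By 2-homogeneity, uniform concentration of the per-direction feature over the sphere (\cref{lem: concentration Gamma gradient}) gives $\norm{m\nabla\L-m\nabla\hL}_F^2\lesssim\frac{dr^2\log d}{n}\sum_i\norm{\vv_i}^2$. The bound $\frac1m\sum_i\norm{\vv_i}^2\lesssim r$ already follows from small loss through the 0th-order Hermite term. Second, as the paper uses the gradient lower bound, its hypotheses are $\L\le\eps_0$ together with the teacher's orthonormality and $m\ge r$. The loss condition is preserved because $\L$ decreases, so no neuron-level check of the Stage~2 endpoint is needed.
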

We note that the above sample complexity improves upon that in \cite{zhou2021local}, reducing the dependency from $d^2$ to $d$. We also believe the tight convergence rate is $1/t^3$, rather than $1/t$, as suggested in \cite{xu2023over}. Finally, the requirement that the activation function be the absolute value is used to leverage existing local convergence results from \cite{zhou2021local}. We believe similar results can be extended to more general activation functions, such as those considered in this paper. However, achieving these potential improvements would require substantial additional effort in analyzing local convergence, which is beyond the scope of this work.

\begin{proof}
    We use \cite[Theorem~4]{zhou2021local} only for its local gradient lower bound, and first verify that its requirements are satisfied. Here $\sigma(x)=|x|$, the target directions are orthogonal with unit norm, and the width assumption gives $m\ge r$, so the required separation, norm, and width conditions hold. Moreover, by the end of Stage~2 we have $L(W)\le e^{-\log^2 r}\le 1/\poly(r)$, which lies in the required local neighborhood, and we set $\lambda_t=0$ throughout this local phase. It remains only to account for the difference in parametrization, which we do below.
    In \cite{zhou2021local}, the parametrization does not include the $1/m$ scaling factor. It suffices to map each neuron $\vw$ to $\vw'=\vw/\sqrt{m}$. Under this mapping, $\sqrt{m}\nabla_\vw\L=\nabla_{\vw'}\L$ and $\L(\mW)=\L(\mW')$, so we can directly apply their result.

    Using \cite[Thm.4]{zhou2021local} we know
    \[
        \norm{\sqrt{m}\nabla_{\mW} \L(\mW)}_F
        =\norm{\nabla_{\mW'} \L(\mW')}_F
        \gtrsim \frac{1}{\sqrt{r}}\L(\mW')
        = \frac{1}{\sqrt{r}}\L(\mW).
    \]
    Using concentration in \cref{lem: concentration Gamma gradient} and the bound in the proof of \cref{lem: Ct bound} we know
    \[
        \norm{m\nabla \L(\mW) - m\nabla \hL(\mW)}_F^2
        \lesssim \frac{dr^2\log d}{n}\sum_i \norm{\vv_i}^2
        \le \frac{dr^3\log d}{n} m.
    \]

    Therefore, by descent lemma, we have
    \begin{align*}
        \frac{\rd}{\rd t}\L(\mW)
        =& -\langle m\nabla \hL(\mW), \nabla \L(\mW)\rangle
        \le -\norm{\sqrt{m}\nabla \L(\mW)}^2
        + \norm{\sqrt{m}\nabla \hL(\mW) - \sqrt{m}\nabla \L(\mW)}\norm{\sqrt{m}\nabla \L(\mW)}\\
        \lesssim& -\left(\norm{\sqrt{m}\nabla \L(\mW)}-\sqrt{\frac{dr^3\log d}{n}}\right)\norm{\sqrt{m}\nabla \L(\mW)}^2\\
        \lesssim& -\frac{1}{r}\L(\mW)^2,
    \end{align*}
    where as long as $\sqrt{\frac{dr^3\log d}{n}}<\eps/\sqrt{r}$ whenever $\L\ge \eps$.

    This finally implies we achieve $\eps$ loss within time $\BigO{r/\eps}$.
\end{proof}

\section{Coupling the dynamics of symmetrized network with finite-width network}\label{sec: coupling proof}
In this section, we present the detailed results and proofs that outlined in \cref{sec: coupling main-text}. 
Our goal is to couple the dynamics of symmetrized network $\tldf$ and the actual finite-width network $f$. Recall the setup:

Denote $\chi_i\sim N(\vzero,\mI/d)$ as the initialization. 
The actual dynamic on $\vv$, symmetrized dynamic on $\tldvv$ and intermediate dynamic on $\ckvv$ is given by
\begin{equation}\label{eq:coupling dynamic apdx}
\begin{aligned}
    \frac{\rd \vv_i}{\rd t}
    =&
    -\lambda \vv_i
    -m\gamma_{t,i}\nabla_{\vv_i}\hL(\mW),
    \qquad
    \vv_i(0)=\vchi_i,\\
    \frac{\rd \tldvv_{i,\rho}}{\rd t}
    =&
    -\lambda \tldvv_{i,\rho}
    -m|\Rho|\gamma_{t,i}\nabla_{\tldvv_{i,\rho}}\L(\tldmW),
    \qquad
    \tldvv_{i,\rho}(0)=\rho(\vchi_i),\\
    \frac{\rd \ckvv_i}{\rd t}
    =&-\lambda \tldvv_{i,\id}
    -m|\Rho|\gamma_{t,i}\nabla_{\tldvv_{i,\id}}\L(\tldmW),,
    \qquad
    \ckvv_i(0)=\vchi_i .
\end{aligned}    
\end{equation}
Also recall $\ckf(\vx)=\frac1m\sum_{i=1}^m\norm{\ckvw_i}_2^2
    \sigma(\bckvw_i^\top\vx)$ for the network formed by these intermediate particles.

As already motivated in \cref{sec: coupling main-text}, the main goal of this section is to control $\delta_i=\norm{\vw_i-\ckvw_i}_2$ for all $i\in[m]$. Then combine with the the results in \cref{sec: coupling main-text} and the dynamics in \cref{sec: stage 1} and \cref{sec: stage 2} we can get the desired coupling result between $f$ and $\tldf$.

To quantify the coupling error, recall the coupling error
\[
    \delta_i:=\norm{\vw_i-\ckvw_i}_2,
    \qquad
    \Delta^2:=\frac1m\sum_{i=1}^m\delta_i^2,
    \qquad
    \Deltamax:=\max_i\frac{\delta_i}{\norm{\ckvw_i}_2}.
\]

The goal of showing $\tldf\approx f$ now can be decomposed into $\tldf\approx \ckf$ and $\ckf \approx f$ with the intermediate process $\ckf$. The lemma below gives a way to control these two error, as long as the width $m$ is large enough and average coupling error $\Delta$ is small.
\begin{lemma}\label{lem: tldf ckf bound and f ckf bound}
    There exists universal constant $C>0$ such that for all $t\le T$ we have
    \begin{align*}
        \E_{\vx}\left[
        \left(
            \tldf(\vx)-\ckf(\vx)
        \right)^2
        \right]
        \le&
        C
            \frac{B_2^2\log m}{m}
        \le C \frac{B B_\infty\log m}{m},\\
        \E_\vx\left[\left(f(\vx)-\ckf(\vx)\right)^2\right]
        \le&
        C\left(B\Delta^2+\Delta^4\right).
    \end{align*}
    where $ B_i:=
            \norm{\ckvv_i}_2^2
            \left(
                \avgrho 
                \norm{
                    \btldvv_{i,\rho}
                    -
                    \bckvv_i
                }_2
            \right)$, 
        $B_2^2 := \frac{1}{m}\sum_{i=1}^m B_i^2$, 
        $B_\infty := \max_{i\in[m]} B_i$, 
        and $\frac{1}{m}\sum_{i=1}^m \norm{\ckvw_i}_2^2 \le B$.
\end{lemma}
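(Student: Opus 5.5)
Both bounds rest on one per-neuron estimate. For unit vectors $\bvu,\bvv$ the Hermite expansion gives $\E_\vx[(\sigma(\bvu^\top\vx)-\sigma(\bvv^\top\vx))^2]=2\sum_k\hsigma_k^2(1-\langle\bvu,\bvv\rangle^k)$. Under \cref{assum: activation} this is $\lesssim\norm{\bvu-\bvv}_2^2$ in the smooth case (Lipschitz $\sigma$), and for $\sigma(x)=|x|$ it is bounded by $\norm{\bvu-\bvv}_2^2$ directly, since $||a|-|b||\le|a-b|$. So the map $\vv\mapsto\sigma(\bvv^\top\cdot)$ is $\BigO{1}$-Lipschitz into $L^2(N(\vzero,\mI_d))$. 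We also use $\norm{\sigma(\bvv^\top\cdot)}_{L^2}=\BigO{1}$.

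\textbf{Bound on $f-\ckf$.} Write $f-\ckf=\frac1m\sum_i\bigl(\norm{\vw_i}^2\sigma(\bvw_i^\top\vx)-\norm{\ckvw_i}^2\sigma(\bckvw_i^\top\vx)\bigr)$ and split each summand as
\[
(\norm{\vw_i}^2-\norm{\ckvw_i}^2)\sigma(\bvw_i^\top\vx)+\norm{\ckvw_i}^2\bigl(\sigma(\bvw_i^\top\vx)-\sigma(\bckvw_i^\top\vx)\bigr).
\]
For the first piece, $|\norm{\vw_i}^2-\norm{\ckvw_i}^2|\le\delta_i(2\norm{\ckvw_i}+\delta_i)$. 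For the second, use $\norm{\bvw_i-\bckvw_i}\le 2\delta_i/\norm{\ckvw_i}$, which gives the bound $\norm{\ckvw_i}^2\cdot 2\delta_i/\norm{\ckvw_i}=2\norm{\ckvw_i}\delta_i$. Apply Minkowski in $L^2$ and then Cauchy--Schwarz over $i$:
\[
\norm{f-\ckf}_{L^2}\lesssim\frac1m\sum_i(\norm{\ckvw_i}\delta_i+\delta_i^2)\le\sqrt{B}\,\Delta+\Delta^2.
\]
Squaring gives $C(B\Delta^2+\Delta^4)$.

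\textbf{Bound on $\tldf-\ckf$.} First rewrite $\tldf$ through the particles. By \cref{claim:symmetry-preservation-main} and the equivariance of the symmetrized population flow, $\tldvv_{i,\rho}(t)=\rho(\ckvv_i(t))$ for all $t$, so $\norm{\tldvv_{i,\rho}}=\norm{\ckvv_i}$. Hence
\[
\tldf-\ckf=\frac1m\sum_i X_i,\qquad X_i:=\norm{\ckvv_i}^2\avgrho\bigl(\sigma(\btldvv_{i,\rho}^\top\vx)-\sigma(\bckvv_i^\top\vx)\bigr).
\]
By Lipschitzness, $\norm{X_i}_{L^2}\lesssim B_i$.

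A naive triangle inequality would only give $\bigl(\frac1m\sum_iB_i\bigr)^2$. To gain the $1/m$ factor, use that the $\vchi_i$ are i.i.d.\ and that $\rho(\vchi)\overset{d}{=}\vchi$, so the law of $\ckvv_i$ is itself $\Rho$-invariant. Then $\E_{\vchi}[X_i]=0$ pointwise in $\vx$, and the $X_i$ behave like mean-zero independent elements of the Hilbert space $L^2$. A Hilbert-space Bernstein/Hoeffding bound with a union over failure probability gives $\norm{\frac1m\sum_iX_i}^2\lesssim\frac{\log m}{m}\cdot\frac1m\sum_iB_i^2=\frac{B_2^2\log m}{m}$. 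The last inequality $B_2^2\le BB_\infty$ holds because $B_i\le 2\norm{\ckvv_i}^2$ as unit vectors differ by at most $2$; this gives $B_i^2\le 2B_i\norm{\ckvv_i}^2\le 2B_\infty\norm{\ckvv_i}^2$, and averaging yields $\le 2BB_\infty$.

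\textbf{Main obstacle.} The difficulty is the independence claim. Through $\tldmW$ the trajectories $\ckvv_i(t)$ depend on all the $\vchi_j$, so the $X_i$ are not independent at $t>0$. I would first try to bound $\E_\vx[(\sum_iX_i)^2]$ directly, expanding the cross terms $\langle X_i,X_j\rangle_{L^2}$ in Hermite tensors: by symmetry of $\tldmW$ only $\Rho$-noninvariant tensor components survive in each $X_i$. If that fails, I would fall back to a leave-one-out or conditioning argument on the realized symmetric population.
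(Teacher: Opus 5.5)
\textbf{The bound on $f-\ckf$ is fine.} It is the same Lipschitz plus Cauchy--Schwarz argument the paper uses. Your short proof of $B_2^2\le 2BB_\infty$ is also correct.

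\textbf{The gap is in the $\tldf-\ckf$ part, at exactly the point you flagged.} Knowing $\E X_i=0$ and $\norm{X_i}_{L^2}\lesssim B_i$ gives no $1/m$ gain unless the $X_i$ are (conditionally) independent, and your write-up never exhibits such a structure. Your first fallback cannot supply it. Whether $\E\langle X_i,X_j\rangle_{L^2}$ vanishes is a property of the joint law of the trajectories, not of the Hermite structure of the $\Rho$-invariant network $\tldmW$. Moreover, a second-moment bound alone would only give a polynomial (Chebyshev-type) tail. That is too weak to union-bound over polynomially many times at a cost of only $\log m$.

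\textbf{The missing idea is a concrete version of your second fallback: condition on the orbits.}
\begin{itemize}
\item Almost surely $\vchi_i$ has nonzero coordinates and distinct absolute values in the permuted block. So $\vchi_i=\rho_i(\va_i)$ uniquely, with $\va_i$ in a fixed fundamental domain.
\item $\Rho$-invariance of the Gaussian gives $\rho_i\sim\unif(\Rho)$, independent of $\va_i$, with the pairs $(\va_i,\rho_i)$ independent across $i$.
\item The symmetrized network starts at the union of the orbits $\{\rho(\vchi_j)\}_{\rho}$, which depends only on $\{\va_j\}$. Its whole trajectory is therefore a deterministic function of $\{\va_j\}$. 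Neuron $i$'s copies are $\{\rho(\vu_i(t))\}_{\rho\in\Rho}$, where $\vu_i$ is the particle started at $\va_i$, and by equivariance $\ckvv_i(t)=\rho_i(\vu_i(t))$.
\item Hence, given $\{\va_j\}$, $X_i=\norm{\vu_i}^2\bigl(\avgrho\sigma(\rho(\bvu_i)^\top\vx)-\sigma(\rho_i(\bvu_i)^\top\vx)\bigr)$ depends only on $\rho_i$. These are independent and mean zero.
\item Each $X_i$ is bounded in $L^2$ by $B_i$ for every value of $\rho_i$. Here $B_i=\norm{\vu_i}^2\avgrho\norm{\rho(\bvu_i)-\bvu_i}$ is itself deterministic given the orbits, by closure of $\Rho$ under composition.
\item McDiarmid's inequality for $\norm{\sum_i X_i}_{L^2}$ then gives $\norm{\tldf-\ckf}_{L^2}\lesssim B_2\sqrt{\log(1/\delta)/m}$ at each fixed $t$.
\end{itemize}
This is how the paper proceeds. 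The two key facts are that the symmetrized dynamics sees $\vchi$ only through its orbits, and that the within-orbit labels are i.i.d.\ uniform and independent of the orbits. Neither appears in your proposal.

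\textbf{You also omit uniformity over $t\le T$.} This is where the $\log m$ comes from. The paper applies the fixed-time bound on a time net of size $N=\poly(m)$ with failure probability $\delta/N$. It then extends to all $t$ using Lipschitz-in-time bounds on $\tldf_t-\ckf_t$ and on $B_2(t)$.
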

\begin{proof}[Proof Sketch]    
The bound on $f\approx \ckf$ directly follows from Lipschitz argument.
To see $\tldf\approx \ckf$, since $\tldvw$ and $\ckvw$ follow the same dynamics and the only difference is the number of neurons
we have the basic properties:
        (1) $\ckvw_i = \tldvw_{i,\id}$ for all time $t\ge 0$;
        (2) $\tldvw_{i,\rho} = \rho(\tldvw_{i,\id})$, $\norm{\tldvw_{i,\rho}} = \norm{\tldvw_{i,\id}}$ for all time $t\ge 0$.
The above claim suggests that one can view $\ckf$ as random variable whose expectation (over all $\rho\in\Rho$) is $\tldf$. Thus, when $m$ is large enough, then $\ckf\approx \tldf$ by Bernstein-type concentration. See \cref{sec: proof of tldf ckf bound and f ckf bound} for the complete proof.
\end{proof}

Therefore, the remaining task is to bound the average coupling error $\Delta=(1/m)\sum_i \delta_i^2$. It turns out in order to control $\Delta$, we need to control both $\Delta$ and normalized error $\Deltamax=\max_i\delta_i/\norm{\ckvv_i}$. The key ingredient in the proof is \cref{lem: coupling error main} that bound $\Deltamax$ and $\Delta$. We deferred the statement and proof to \cref{sec: proof of coupling main delta}.
\begin{lemma}\label{lem: main text coupling error main}
    Suppose sample size $n>d^{3+c}$ and width $m\ge d^{2+c}$ with any small constant $c$. We have for $t\le T_2$
    \begin{align*}
        \Deltamax\le \epsdir^{1/6},\qquad
        \Delta\le \epsdir^{1/5}.
    \end{align*}
    As a result, we have $\E_{\vx}[(f(\vx)-\tldf(\vx))^2]\le \epsnm^{1/4}/2$.
\end{lemma}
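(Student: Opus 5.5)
We plan to run a stage-wise Gronwall argument on $\delta_i^2=\norm{\vv_i-\ckvv_i}_2^2$, using the decomposition $\frac{\rd}{\rd t}\delta_i^2=A_t+B_t+C_t$ from \cref{sec: coupling main-text}. The induction hypothesis is that $\Deltamax\le\epsdir^{1/6}$ and $\Delta\le\epsdir^{1/5}$ on $[0,t]$. We then show that both bounds strictly improve, so they persist up to $T_2$. The final claim then follows from \cref{lem: tldf ckf bound and f ckf bound}. There $B\lesssim r$ by \cref{lem: stage 2 IH}, and $B_\infty\le\max_i\norm{\ckvv_i}^2\lesssim\poly(m)$ because the largest aligned norm is $m^{o(1)}\sigma_1^2$. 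With $m\ge d^{2+c}$ and $\sigma_1=m^{\Theta(1)}$ chosen with a small exponent, this gives $BB_\infty\log m/m\le\epsnm^{1/4}/8$. The coupling term is $B\Delta^2+\Delta^4\lesssim r\epsdir^{2/5}\ll\epsnm^{1/4}$, because $\epsdir=m^{-\Theta(1)}$ while $\epsnm=e^{-\polylog r}$.

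\textbf{The three terms.}

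\emph{$C_t$ (finite samples).} We bound $\norm{m\nabla_{\vv}\hL(f)-m\nabla_{\vv}\L(f)}$ by uniform concentration over a net of particle configurations with norm bounded by $\poly(m)$. This gives an error $\lesssim\norm{\vv}\sqrt{d\,\polylog/n}\cdot\poly(r)$. With $n=d^{3+c}$, $C_t\le 2\delta_i\norm{\vv_i}d^{-1-c/2}$, and this is negligible relative to the target bounds.

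\emph{$B_t$ (function-space discrepancy).} We write it as $\langle\E_{\vx}[(f-\tldf)\nabla_{\vv}(\norm{\vv}^2\sigma(\bvv^\top\vx))],\vv-\ckvv\rangle$. Cauchy--Schwarz gives $|B_t|\lesssim\delta_i\norm{\vv_i}\,\norm{f-\tldf}_{L^2}$. We then apply \cref{lem: tldf ckf bound and f ckf bound} once more to get $\norm{f-\tldf}_{L^2}\lesssim\sqrt{B}\Delta+\Delta^2+\sqrt{BB_\infty\log m/m}$. The resulting system is linear in $\Delta$ plus a small forcing term.

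\emph{$A_t$ (stability of the symmetrized population field).} This is the term we expect to be the main obstacle. A crude Lipschitz bound of order $O(r)$, integrated over a horizon $T_2\sim r/\log m$ (or $\log d$ in Stage 1.2), gives an amplification of $e^{\Theta(r^2/\log m)}$, which is super-polynomial. We would instead use the stage-wise structure.

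\textbf{Stage-wise control of $A_t$.} We bound the Jacobian of the population vector field at $\ckvv$ using the simplified dynamics of \cref{lem: stage 1.1 dynamic}, \cref{lem: stage 1.2 dynamic}, \cref{lem: stage 2 dynamic small and dense} and \cref{lem: stage 2 dynamic potential neuron}. These show that the field is multiplicative, $\dot v_i=v_i\,g_i(\bvv,\text{population})$.

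Its linearization therefore splits into two parts. The first is the radial part, equal to the particle's own log-growth rate; this is absorbed by normalizing, which is exactly why we track $\Deltamax=\delta_i/\norm{\ckvv_i}$. The second is the angular part, of size $O(\sum_j j\hsigma_{2j}^2\bv^{2j-2})$, which is $O(\omega^4/r)$ for dense neurons and $O(1)$ only for potential neurons.

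Hence in Stages 1.1 and 1.2 the relative error grows by at most $e^{O(\log d)}=\poly(d)$. In Stage 2.1 the per-neuron relative amplification is $e^{O(\omega^4/\log m)}$ for dense neurons. For potential neurons the tensor-power ODE is solved explicitly: comparing two trajectories starting at $p_0$ and $p_0+\eta$, their hitting-time sensitivity is bounded by \cref{lem: stage 2.1 technical ode time gap}. The amplification is thus $\poly(m)$ at worst, and it is paid only while $p$ crosses the transition window of length $O(\omega)$. Stages 2.2 and 2.3 last only $\polylog(r)$ time, or $O(\log m/\gamma)$ with the enlarged step size, which gives $\poly(r)$ growth.

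Multiplying these factors, the initial error $0$ is forced only by the $B_t$ and $C_t$ sources, of size $d^{-c/2}$ and $\sqrt{\log m/m}$. With $m\ge d^{2+c}$ and sufficiently small exponents in $\sigma_1$ and $\epsdir$, this yields $\Deltamax\le\epsdir^{1/6}/2$. Averaging over $i$ then yields $\Delta\le\epsdir^{1/5}/2$, which closes the induction.
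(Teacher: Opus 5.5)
Your overall architecture (a stage-wise Gronwall argument on $A_t+B_t+C_t$, with the radial part of $A_t$ absorbed by normalizing) matches the paper, but the treatment of $B_t$ has a genuine gap. The Cauchy--Schwarz bound $|B_t|\lesssim\gamma_t\delta_i\norm{\vv_i}\norm{f-\tldf}_{L^2}$, combined with $\norm{f-\ckf}_{L^2}\lesssim\sqrt{B}\Delta+\Delta^2$, is not a small forcing: it is a feedback term. Averaged over $i$, it contributes about $\frac1m\sum_i\norm{\vv_i}\delta_i\cdot\sqrt B\Delta\approx B\Delta^2\approx r\Delta^2$ to $\frac{\rd}{\rd t}\Delta^2$. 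Via $\Delta^2\le B\Deltamax^2$, it also gives a rate $\approx r$ for $\Deltamax^2$. Over Stage~1.2 (length $\Theta(\log d)$) and Stage~2.1 (length $\Theta(r/\log m)$), this means amplification $d^{\Theta(r)}$ and $e^{\Theta(r^2/\log m)}$ — the same super-polynomial blow-up you correctly excluded for $A_t$. Inserting the bootstrap bound $\Delta\le\epsdir^{1/5}$ as forcing instead only returns $\Delta\lesssim rT\,\epsdir^{1/5}$, so the hypothesis does not improve.

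The missing idea is the \emph{averaged} estimate on $B_t$. Set $\phi(\vv;\vx)=\norm{\vv}^2\sigma(\bvv^\top\vx)$. Then $\frac1m\sum_i\langle\nabla\phi(\vv_i;\vx),\vv_i-\ckvv_i\rangle=f(\vx)-\ckf(\vx)+\bar\tau(\vx)$, where $\bar\tau$ is a second-order Taylor remainder controlled by \cref{lem: sigma 2nd remainder bound}. Hence the $f-\ckf$ part of $\frac1m\sum_iB_{t,i}$ equals $-2\norm{f-\ckf}_{L^2}^2\le0$ plus a term of order $\BigO{r\Delta^{5/2}}$, which is superlinear and harmless. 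This forces the paper's ordering. First bound $\Delta$ from the averaged equation; then bound $\Deltamax$ from the per-neuron equation, where $\sqrt r\Delta$ enters only as exogenous forcing. That is why $\Deltamax$ ends up with the weaker exponent, through $\Deltamax\lesssim\poly(1/\epsnm)\sqrt r\,\Delta$. Your reverse step ``averaging over $i$ yields $\Delta\le\epsdir^{1/5}/2$'' also fails on its own: averaging only gives $\Delta\le\sqrt B\,\Deltamax\lesssim\sqrt r\,\epsdir^{1/6}$, and $\epsdir^{1/5}<\epsdir^{1/6}$.

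Second, your claim that the largest aligned norm is $m^{o(1)}\sigma_1^2$ holds at $T_{21}$ but not afterwards. Stage~2.2 drives $\ha_i=\frac1m\sum_{\vv\in G_i}\norm{\vv}^2$ to $\ha_*=\Theta(1)$, while $|G_i|\le|\Spoti|\le m^{o(1)}$. So some aligned neuron has $\norm{\ckvv}^2\ge m^{1-o(1)}$, and only $\norm{\ckvv}^2\le m$ is guaranteed. With $B_\infty\le\max_i\norm{\ckvv_i}^2$, the quantity $BB_\infty\log m/m$ can then be of order $r\log m$. Neither your final estimate at $t=T_2$ nor the $\ckf-\tldf$ forcing inside $B_t$ during Stages~2.2--2.3 is then small. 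A $\poly(m)$ bound on $B_\infty$ does not help, since you need $B_\infty\le m^{1-\Omega(1)}$. What saves the argument is the direction factor in $B_i$. Since $\sigma$ is even and every $\rho\in\Rho$ maps $\ve_i$ to $\pm\ve_i$, an aligned neuron with $\bckv_i^2\ge1-\epsdir$ satisfies $\avgrho d_\pm(\rho(\bckvv_i),\bckvv_i)\lesssim\sqrt{\epsdir}$. This gives $B_\infty\lesssim\max\{\sigma_1^2,m\sqrt{\epsdir}\}$ and $\norm{\ckf-\tldf}_{L^2}^2\lesssim r\sqrt{\epsdir}\log m$.

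A smaller quantitative point: the budget $m\ge d^{2+c}$, $n>d^{3+c}$ is calibrated to a Stage~1 growth of exactly $d^{1/2+o(1)}$ for the normalized error in each of the two passes (for $\Delta$, then for $\Deltamax$). That requires the sharp value $\int(1-\tldE w_i^2)\,\rd s=\frac{1}{4\hsigma_2^2}\log d+\BigO{\log\log d}$ from \cref{lem: stage 1.2 convergence of 2nd order}; an unspecified $e^{O(\log d)}$ does not close. Likewise, the potential-neuron amplification in Stage~2.1 must be $\poly(r)$ (via $\int\bckv_i^2\,\rd t\lesssim\log r$), not $\poly(m)$.
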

\begin{proof}
    The bound on $\Deltamax$ and $\Delta$ follows directly from \cref{lem: coupling error main}. To bound $\E_{\vx}[(f(\vx)-\tldf(\vx))^2]$, it suffices to note that
    \[
        \E_{\vx}[(f(\vx)-\tldf(\vx))^2]
        \le 2\E_{\vx}[(f(\vx)-\ckf(\vx))^2]
        + 2\E_{\vx}[(\ckf(\vx)-\tldf(\vx))^2]
        \lesssim 
            \frac{rB_\infty\log m}{m}
            + B\Delta^2+\Delta^4,
    \]
    where we use \cref{lem: tldf ckf bound and f ckf bound} 
    and $\tldE \norm{\ckvv}^2\lesssim r$ from \cref{lem: stage 1.1 IH implication} for Stage 1.1, \cref{lem: stage 1.2 IH}\ref{item: stage 1.2 IH reg} for Stage 1.2, \cref{lem: stage 2 IH}\ref{item: stage 2 IH reg} for Stage~2.
    
    Using
    \begin{align*}
        B_\infty = \max_{i\in[m]} B_i
        \le \max\{\sigma_1^2, m\sqrt{\epsdir}\},
    \end{align*}
    where the two cases above are (1) neurons not aligned with ground-truth and thus norm is at most $\sigma_1^2\le m^c$ with small constant $c$, and (2) neurons aligned with ground-truth so norm is at most $m$ as $\ha_i\le 1$ by \cref{lem: stage 2 IH}\ref{item: stage 2 IH reg} and direction unchanged after $\Rho$ upto error $\sqrt{\epsdir}$ by \cref{lem: stage 2 IH}\ref{item: stage 2 IH norm bound or basis like}.

    Therefore, we get
    \[
        \E_{\vx}[(f(\vx)-\tldf(\vx))^2]
        \lesssim r\sqrt{\epsdir}\log m + r\epsdir^{2/5}
        \le \epsnm^{1/4}/2,
    \]
    where recall $\epsnm=e^{-\polylog(r)}\gg \epsdir=m^{-\Theta(1)}$.
\end{proof}

\subsection{Proof of \cref{lem: tldf ckf bound and f ckf bound}}\label{sec: proof of tldf ckf bound and f ckf bound}
To prove \cref{lem: tldf ckf bound and f ckf bound} we first present the following 2 results.

The lemma below shows for any fix time $t$, the difference between $\tldf$ and $\ckf$. The key is to view $\ckf$ as an average of random samples under $\Rho$ whose mean is $\tldf$.
\begin{lemma}[Bounding $\tldf-\ckf$]\label{lem: tldf ckf bound}
    Assume that $\sigma$ satisfies \cref{assum: activation}. 
    Fix any time $t\ge 0$. Consider the dynamics of $\tldvv$ and $\ckvv$ defined above in \eqref{eq:coupling dynamic apdx}.
    Define
    \[
        B_i
        :=
        \norm{\ckvv_i}_2^2
        \left(
            \avgrho 
            d_\pm(
                \btldvv_{i,\rho}
                ,
                \bckvv_i
            )
        \right),
        \quad
        B_2^2 := \frac{1}{m}\sum_{i=1}^m B_i^2,
        \quad
        B_\infty := \max_{i\in[m]} B_i,
        \quad
        B := \frac{1}{m}\sum_i \norm{\ckvv_i}^2
    \]
    where     $d_\pm(\va,\vb):=
        \min\{\norm{\va-\vb}_2,\norm{\va+\vb}_2\}$. Then for any $\delta\in(0,1)$, with probability at least $1-\delta$ over the randomness of initialization, we have
    \[
        \E_{\vx\sim N(\vzero,\mI)}
        \left[
            \left(
                \tldf(\vx)-\ckf(\vx)
            \right)^2
        \right]
        \lesssim
            \frac{B_2^2\log(1/\delta)}{m}
        \le \frac{B B_\infty \log(1/\delta)}{m}.
    \]
\end{lemma}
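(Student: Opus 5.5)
The plan is to view $\ckf$ as an empirical average of independent random functions whose mean is $\tldf$, and then apply a Hilbert-space Bernstein/Hoeffding inequality in $L^2(N(\vzero,\mI))$. The randomness is in the initialization $\vchi_1,\dots,\vchi_m$. Because $\tldvw_{i,\rho}=\rho(\tldvw_{i,\id})$ for all time, the whole symmetrized trajectory depends only on the $\Rho$-orbits of the $\vchi_i$. Fix a representative $\vchi_i'$ of each orbit. The identity copy $\vchi_i$ is then $\rho_i(\vchi_i')$, with $\rho_i$ uniform on $\Rho$ and independent across $i$, since the Gaussian is $\Rho$-invariant. The main subtlety is that the population vector field couples the particles only through $\tldmu$, and $\tldmu$ is a function of the orbits alone. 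Hence, conditionally on the orbits, $\tldf$ is deterministic, and $\ckvv_i=\rho_i(\tldvv_{i,\id}')$, where $\tldvv'_{i,\id}$ is the trajectory started from $\vchi_i'$; this uses equivariance of the flow. I would check this conditioning step first, since everything else rests on it.

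Define the random functions
\[
g_i(\vx):=\norm{\ckvv_i}_2^2\sigma(\bckvv_i^\top\vx)-\avgrho\norm{\tldvv_{i,\rho}}_2^2\sigma(\btldvv_{i,\rho}^\top\vx).
\]
Then $\ckf-\tldf=\frac1m\sum_i g_i$, the $g_i$ are conditionally independent, and $\E[g_i\mid\text{orbits}]=0$ because $\rho_i$ is uniform and $\norm{\tldvv_{i,\rho}}=\norm{\ckvv_i}$.

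For the scale, I would use that $\sigma$ is even and Lipschitz with $\E_\vx[(\sigma(\va^\top\vx)-\sigma(\vb^\top\vx))^2]^{1/2}\lesssim d_\pm(\va,\vb)$ for unit vectors $\va,\vb$. This holds for $|x|$ and for smooth $\sigma$ by \cref{assum: activation}. The triangle inequality then gives
\[
\norm{g_i}_{L^2}\le \norm{\ckvv_i}^2\avgrho\norm{\sigma(\bckvv_i^\top\cdot)-\sigma(\btldvv_{i,\rho}^\top\cdot)}_{L^2}\lesssim B_i.
\]
Centering costs at most a factor of $2$, since $\norm{g_i}_{L^2}\le\max_\rho(\cdots)$. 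A sharper form bounds it by an average over $\rho$ of $d_\pm(\btldvv_{i,\rho},\bckvv_i)$, using that the average over $\rho$ of the distance from $\rho(\bu)$ to the mean is comparable to the average pairwise distance. Note that $B_i$ is $\rho_i$-invariant, since $\rho$ is an isometry acting on the orbit.

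Next I would apply the vector Hoeffding inequality (Pinelis) in the Hilbert space $L^2$: with probability $1-\delta$,
\[
\Bigl\|\tfrac1m\sum_i g_i\Bigr\|_{L^2}^2\lesssim \frac{\sum_i B_i^2\log(1/\delta)}{m^2}=\frac{B_2^2\log(1/\delta)}{m}.
\]
Finally, $B_2^2\le B_\infty\cdot\frac1m\sum_i B_i\le B_\infty\cdot 2B$, because $d_\pm\le 2$ for unit vectors gives $B_i\le 2\norm{\ckvv_i}^2$.

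The main obstacle I expect is making the conditional-independence step rigorous. This needs the symmetrized population flow, including the stage-dependent stepsize $\gamma_{t,i}$ (which depends on $\norm{\vv}$, a $\Rho$-invariant quantity), to be genuinely $\Rho$-equivariant and to depend on the initialization only through the orbits. Once that is settled, the bound is a standard Hilbert-space concentration.
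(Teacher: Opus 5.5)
Your proposal is correct and follows the paper's proof essentially step for step. You decompose $\vchi_i=\rho_i(\va_i)$ with an orbit representative $\va_i$ and $\rho_i$ uniform on $\Rho$ and independent of $\va_i$. Conditionally on the representatives the symmetrized trajectory is deterministic and $\ckvv_i=\rho_i(\vu_i(t))$, so the summands are conditionally mean-zero elements of $L^2(N(\vzero,\mI))$ with norm $\lesssim B_i$, where $B_i$ is $\rho_i$-invariant, and you close with Hilbert-space concentration.

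The only difference is the concentration tool. You use Pinelis's vector Hoeffding inequality, whereas the paper bounds $\E\norm{\sum_i h_i}\lesssim\sqrt{m}\,B_2$ by orthogonality of independent mean-zero summands and then applies McDiarmid's bounded-differences inequality. Both give the same $B_2^2\log(1/\delta)/m$ rate. Your remark about a centering factor of $2$ is unnecessary: $g_i$ is already conditionally centered, and the triangle inequality gives $\norm{g_i}_{L^2}\lesssim B_i$ directly.
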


\begin{proof}
    We fix the time $t$ throughout the proof and drop the explicit dependence on $t$.
    
    We first justify the representation of the symmetrized and checked
    particles. Since $\Rho$ is a finite group of orthogonal transformations and
    $\chi_i\sim N(\vzero,\mI)$, the distribution of $\chi_i$ is invariant under
    $\Rho$.
    
    Let $\gF$ be a measurable fundamental domain for the action of $\Rho$.
    For example, one may choose the unique representative obtained by making
    coordinates nonnegative and sorting within each permutation block. Since
    $\chi_i$ has a continuous distribution, the event that a coordinate is zero
    or that two coordinates in the same permutation block have equal absolute
    value has probability zero. Thus, almost surely, every orbit intersects
    $\gF$ in exactly one point. We may therefore write uniquely
    \[
        \chi_i = \rho_i(\va_i),
        \qquad \va_i\in\gF,\quad \rho_i\in\Rho .
    \]
    Moreover, $\rho_i$ is uniform on $\Rho$ and independent of $\va_i$.
    Indeed, for any measurable $A\subseteq \gF$ and any fixed $\psi\in\Rho$,
    by Gaussian invariance,
    \[
        \P(\va_i\in A,\rho_i=\psi)
        =
        \P(\chi_i\in \psi(A))
        =
        \P(\chi_i\in A)
        =
        \frac{1}{|\Rho|}
        \P(\va_i\in A),
    \]
    where the last one is due to the sets $\{\rho(A):\rho\in\Rho\}$ are disjoint up to measure-zero boundaries and have equal Gaussian probability. This proves that $\rho_i\sim \unif(\Rho)$ and that $\rho_i$ is independent
    of $\va_i$. Since the initializations $\{\chi_i\}_{i=1}^m$ are independent,
    the pairs $\{(\va_i,\rho_i)\}_{i=1}^m$ are independent as well.
    
    Since the dynamics is invariant under $\Rho$, we know if $    \tldvv_{i,\rho}(0)=\rho(\vu_i(0))$ and $\ckvv_i(0)=\rho_i(\vu_i(0))$ start from the canonical representative $\va_i$, i.e. $\vu_i(0)=\va_i$, then at every fixed time $t$,
    \[
        \tldvv_{i,\rho}=\rho(\vu_i(t)),
        \qquad
        \ckvv_i=\rho_i(\vu_i(t)).
    \]
    Moreover, conditional on $\{\vu_i\}_{i=1}^m$, the random variables
    $\{\rho_i\}_{i=1}^m$ remain independent and uniformly distributed over
    $\Rho$, because $\{\vu_i\}_{i=1}^m$ is a deterministic function of the
    orbit representatives $\{\va_i\}_{i=1}^m$ and is independent of
    $\{\rho_i\}_{i=1}^m$.
    
    Recall that $\Rho$ consists of orthogonal transformations. Hence, for any
    $\rho\in\Rho$,
    \[
        \norm{\rho(\vu_i)}_2=\norm{\vu_i}_2,
        \qquad
        \overline{\rho(\vu_i)}=\rho(\bvu_i).
    \]
    Thus
    \[
        \btldvv_{i,\rho}
        =
        \rho(\bvu_i),
        \qquad
        \bckvv_i
        =
        \rho_i(\bvu_i).
    \]
    Using closure of $\Rho$ under composition, we also have
    \[
        \avgrho
        d_\pm\left(
            \btldvv_{i,\rho},
            \bckvv_i
        \right)
        =
        \avgrho
        d_\pm\left(
            \rho(\bvu_i),
            \rho_i(\bvu_i)
        \right)
        =
        \avgrho
        d_\pm\left(
            \rho(\bvu_i),
            \bvu_i
        \right).
    \]
    Therefore, after conditioning on $\{\vu_i\}_{i=1}^m$, the quantities $B_i$,
    $B_2$, and $B_\infty$ are deterministic.
    
    By definition of $\tldf$ and $\ckf$, using $\tldvv_{i,\rho}=\rho(\vu_i)$ and $\ckvv_i=\rho_i(\vu_i)$, we get
    \[
        \tldf(\vx)-\ckf(\vx)
        =
        \frac{1}{m}
        \sum_{i=1}^m h_i(\rho_i;\vx),
    \]
    where
    \[
        h_i(\rho_i;\vx)
        :=
        \norm{\vu_i}_2^2
        \left(
            \avgrho
            \sigma\left(
                \rho(\bvu_i)^\top \vx
            \right)
            -
            \sigma\left(
                \rho_i(\bvu_i)^\top \vx
            \right)
        \right).
    \]
    
    We now view $h_i(\rho_i;\cdot)$ as a random element of the Hilbert space
    $\mathcal H := L_2(\vx\sim N(\vzero,\mI))$.
    Conditional on $\{\vu_i\}_{i=1}^m$, the only randomness in
    $h_i(\rho_i;\cdot)$ comes from $\rho_i$. Since $\rho_i$ is uniform on
    $\Rho$, for every fixed $\vx$,
    $\E_{\rho_i}
        \sigma\left(
            \rho_i(\bvu_i)^\top \vx
        \right)
        =
        \avgrho
        \sigma\left(
            \rho(\bvu_i)^\top \vx
        \right)$.
    Therefore,
    \[
        \E_{\rho_i}
        h_i(\rho_i;\cdot)
        =
        0
    \]
    as an element of $\mathcal H$.
    
    Next we bound the Hilbert norm of each $h_i$. 
    For every fixed $\tau\in\Rho$, Jensen's inequality gives
    \begin{align*}
        \norm{h_i(\tau;\cdot)}_{\mathcal H}
        &\lesssim
        \norm{\vu_i}_2^2
        \avgrho
        \left\|
            \sigma\left(\rho(\bvu_i)^\top \vx\right)
            -
            \sigma\left(\tau(\bvu_i)^\top \vx\right)
        \right\|_{L_2(\vx)}  
        \le
        \norm{\vu_i}^2
        \avgrho
        d_\pm\left(\rho(\bvu_i),\bvu_i\right).
    \end{align*}
    Therefore, for every $\tau\in\Rho$,
    $\norm{h_i(\tau;\cdot)}_{\mathcal H}
        \lesssim B_i$.
    Consequently,
    \[
        \sum_{i=1}^m
        \E_{\rho_i}
        \norm{h_i(\rho_i;\cdot)}_{\mathcal H}^2
        \lesssim
        \sum_{i=1}^m B_i^2
        = m B_2^2 .
    \]
    
    Define
    \[
        S(\rho_1,\ldots,\rho_m)
        :=
        \sum_{i=1}^m h_i(\rho_i;\cdot)
        \in \mathcal H,
        \qquad
        Z(\rho_1,\ldots,\rho_m)
        :=
        \norm{S(\rho_1,\ldots,\rho_m)}_{\mathcal H}.
    \]
    First, since the $h_i$ are conditionally independent and mean zero in
    $\mathcal H$,
    \begin{align*}
        \E_{\boldsymbol\rho} Z
        &\le
        \left(
            \E_{\boldsymbol\rho}
            \norm{
                \sum_{i=1}^m h_i(\rho_i;\cdot)
            }_{\mathcal H}^2
        \right)^{1/2} 
        =
        \left(
            \sum_{i=1}^m
            \E_{\rho_i}
            \norm{h_i(\rho_i;\cdot)}_{\mathcal H}^2
        \right)^{1/2}
        \lesssim \sqrt{m}\,B_2 .
    \end{align*}
    Next, if only the $i$-th coordinate is changed from $\rho_i$ to
    $\rho_i'$, then by the triangle inequality and the previous bound,
    \[
        \left|
            Z(\rho_1,\ldots,\rho_i,\ldots,\rho_m)
            -
            Z(\rho_1,\ldots,\rho_i',\ldots,\rho_m)
        \right|
        \le
        \norm{
            h_i(\rho_i;\cdot)-h_i(\rho_i';\cdot)
        }_{\mathcal H}
        \lesssim B_i .
    \]
    Thus, by the McDiarmid’s inequality,, for every $s\ge 0$,
    \[
        \P_{\boldsymbol\rho}
        \left(
            Z
            \ge
            \E_{\boldsymbol\rho} Z
            +
            C B_2\sqrt{2ms}
        \,\middle|\,
            \{\vu_i\}_{i=1}^m
        \right)
        \le
        e^{-s}.
    \]
    Taking $s=\log(1/\delta)$, note the conditional bound holds for every realization of
    $\{\vu_i\}_{i=1}^m$, it also holds unconditionally. Therefore,
    \[
        \norm{\tldf-\ckf}_{\mathcal H}
        =
        \frac{1}{m}Z
        \lesssim
        B_2
        \sqrt{\frac{\log(e/\delta)}{m}} .
    \]

\end{proof}

The following lemma shows that the difference between $f$ and $\ckf$ is
controlled by the average distance $\Delta$. It follows from standard Lipschitz argument.
\begin{lemma}\label{lem: f ckf bound}
    Recall $\Delta^2=\frac{1}{m}\sum_i \norm{\vv_i-\ckvv_i}^2$. Suppose $\frac{1}{m}\sum_{i=1}^m \norm{\ckvw_i}_2^2 \le B$.
    Then
    \[
        \E_\vx[(f(\vx)-\ckf(\vx))^2]
        \lesssim
        B\Delta^2+\Delta^4 .
    \]
\end{lemma}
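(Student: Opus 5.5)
The plan is to write the difference neuron by neuron and apply a Lipschitz estimate to each term in $L_2(N(\vzero,\mI))$. Set $g(\vw,\vx):=\norm{\vw}_2^2\sigma(\bvw^\top\vx)$. Then
\[
f(\vx)-\ckf(\vx)=\frac1m\sum_{i=1}^m\bigl(g(\vv_i,\vx)-g(\ckvv_i,\vx)\bigr),
\]
and by Jensen (or Cauchy--Schwarz over $i$) it suffices to bound $\frac1m\sum_i\norm{g(\vv_i,\cdot)-g(\ckvv_i,\cdot)}_{L_2}^2$. The key per-neuron estimate I will aim for is
\begin{equation*}
\norm{g(\vw,\cdot)-g(\vw',\cdot)}_{L_2}\lesssim \bigl(\norm{\vw}_2+\norm{\vw'}_2\bigr)\norm{\vw-\vw'}_2 .
\end{equation*}
This holds because $g$ is 2-homogeneous in $\vw$: $g(\vw,\vx)=\norm{\vw}_2\,\bigl(\norm{\vw}_2\sigma(\bvw^\top\vx)\bigr)$, and $\vw\mapsto\norm{\vw}_2\sigma(\bvw^\top\vx)$ behaves like a 1-homogeneous, Lipschitz-in-$\vw$ map in $L_2$.

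To prove the per-neuron estimate, I split it as
\[
g(\vw)-g(\vw')=(\norm{\vw}^2-\norm{\vw'}^2)\sigma(\bvw^\top\vx)+\norm{\vw'}^2\bigl(\sigma(\bvw^\top\vx)-\sigma(\bvw'^\top\vx)\bigr).
\]
For the first term I use $|\norm{\vw}^2-\norm{\vw'}^2|\le(\norm{\vw}+\norm{\vw'})\norm{\vw-\vw'}$ together with $\norm{\sigma(\bvw^\top\vx)}_{L_2}=O(1)$, which follows from $\sum_k\hsigma_k^2<\infty$ or from $\sigma=|x|$. For the second term I use the $L_2$-Lipschitz property of $\vu\mapsto\sigma(\vu^\top\vx)$ on the unit sphere, namely $\norm{\sigma(\vu^\top\vx)-\sigma(\vu'^\top\vx)}_{L_2}\lesssim\norm{\vu-\vu'}_2$. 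This holds since $|\sigma'|=O(1)$ (or $\sigma=|x|$ is 1-Lipschitz) and $\E[((\vu-\vu')^\top\vx)^2]=\norm{\vu-\vu'}^2$; the same fact was already used in \cref{lem: tldf ckf bound}. I then combine it with the elementary inequality $\norm{\vw'}\,\norm{\bvw-\bvw'}\le 2\norm{\vw-\vw'}$, which gives $\norm{\vw'}^2\norm{\bvw-\bvw'}\lesssim\norm{\vw'}\,\norm{\vw-\vw'}$.

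Finally I sum over neurons. Writing $\norm{\vv_i}\le\norm{\ckvv_i}+\delta_i$, the per-neuron bound becomes $\lesssim(\norm{\ckvv_i}+\delta_i)\delta_i$. Squaring and averaging gives
\[
\E_\vx[(f-\ckf)^2]\lesssim\frac1m\sum_i\norm{\ckvv_i}^2\delta_i^2+\frac1m\sum_i\delta_i^4 .
\]
The second sum is at most $(\frac1m\sum_i\delta_i^2)^2\cdot m$ only in the worst case, so this step needs care. The first sum should also not be bounded naively by $\max_i\norm{\ckvv_i}^2\Delta^2$. Instead, I apply Cauchy--Schwarz at the level of the function before squaring:
\[
\Bigl\|\frac1m\sum_i(\cdot)\Bigr\|\le\frac1m\sum_i\norm{\ckvv_i}\delta_i+\frac1m\sum_i\delta_i^2\le\sqrt{B}\,\Delta+\Delta^2 .
\]
Squaring this yields $\lesssim B\Delta^2+\Delta^4$. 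The only delicate point is this ordering, namely applying the triangle inequality in $L_2$ first and Cauchy--Schwarz over $i$ second, so that $B$ appears only as an average; this is what makes the bound match the statement.
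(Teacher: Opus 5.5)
Your argument is correct and is essentially the paper's proof. The paper uses the same per-neuron estimate $\norm{h_i}_{L_2}\lesssim\delta_i(\norm{\ckvw_i}+\delta_i)$, obtained by splitting into a norm-difference term and a direction-difference term and using $\norm{\vw'}\norm{\bvw-\bvw'}\le 2\norm{\vw-\vw'}$, followed by the triangle inequality in $L_2$ and then Cauchy--Schwarz over $i$. The one thing to fix is your opening Jensen reduction to $\frac1m\sum_i\norm{h_i}_{L_2}^2$: drop it, since, as you noticed yourself, it only gives $\frac1m\sum_i\norm{\ckvv_i}^2\delta_i^2+\frac1m\sum_i\delta_i^4$ rather than the stated bound.
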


\begin{proof}
    For simplicity, write
    \[
        h_i(\vx)
        :=
        \norm{\vw_i}_2^2 \sigma(\bvw_i^\top \vx)
        -
        \norm{\ckvw_i}_2^2 \sigma(\bckvw_i^\top \vx).
    \]
    Then
    \[
        f(\vx)-\ckf(\vx)
        =
        \frac{1}{m}\sum_{i=1}^m h_i(\vx).
    \]
    We first bound the $L_2(\vx)$ norm of each $h_i$. Using $\sigma$ is $\BigO{1}$-Lipschitz
    \[
    \begin{aligned}
        |h_i(\vx)|
        \le&
        \norm{\vw_i}_2^2
        \left|
            \sigma(\bvw_i^\top\vx)
            -
            \sigma(\bckvw_i^\top\vx)
        \right|
        +
        \left|
            \norm{\vw_i}_2^2-\norm{\ckvw_i}_2^2
        \right|
        \left|
            \sigma(\bckvw_i^\top\vx)
        \right|\\
        \lesssim&
        \norm{\vw_i}_2^2 |(\bvw_i-\bckvw_i)^\top\vx|
        +
        \left|
            \norm{\vw_i}_2+\norm{\ckvw_i}_2
        \right| \left|
            \sigma(\bckvw_i^\top\vx)
        \right| \norm{\vw_i-\ckvw_i},,
    \end{aligned}
    \]
    
    Taking the $L_2(\vx)$ norm we get
    \[
    \begin{aligned}
        \norm{h_i}_{L_2(\vx)}
        &\lesssim \norm{\vw_i}_2 \delta_i
            + (\norm{\vw_i}+\norm{\ckvw_i}_2)\delta_i
        \lesssim \delta_i
        \left(
            \norm{\vw_i}_2+\norm{\ckvw_i}_2
        \right)
        \lesssim \delta_i(\norm{\ckvw_i}+\delta_i)
    \end{aligned}
    \]
    Thus, 
    \[
    \begin{aligned}
        \sqrt{\E_\vx[(f(\vx)-\ckf(\vx))^2]}
        = \norm{f-\ckf}_{L_2(\vx)}
        &\le
        \frac{1}{m}\sum_{i=1}^m
        \norm{h_i}_{L_2(\vx)}
        \lesssim
            \frac{1}{m}\sum_{i=1}^m
            \delta_i\norm{\ckvw_i}_2
            +
            \frac{1}{m}\sum_{i=1}^m
            \delta_i^2
        \lesssim \sqrt B \Delta+\Delta^2.
    \end{aligned}
    \]
    This proves the lemma.
\end{proof}

We are now ready to prove \cref{lem: tldf ckf bound and f ckf bound} using the above two results.
\begin{proof}[Proof of \cref{lem: tldf ckf bound and f ckf bound}]

    The second bound follows directly from \cref{lem: f ckf bound}. We focus below on making the first bound uniform over time.
    
    Let $T_{\max}$ be the deterministic upper bound on the complete Stage~1-2 runtime. Recall $\phi_{\vv}(\vx):=\norm{\vv}^2\sigma(\bvv^\top\vx)$. By \cref{assum: activation} and the gradient formula,
    $\norm{\frac{\rd}{\rd t}\phi_{\vv(t)}}_{L_2(\vx)}
    \lesssim \norm{\vv(t)}\norm{\dot{\vv}(t)}$.
    On the regularity event used throughout the stage analysis, for $\vw\in\{\ckvv,\tldvv\}$ we have $\max_{i,t\le T_{\max}}\norm{\vw_i(t)}^2\le m$, $\frac1m\sum_i\norm{\vw_i(t)}^2\le r+10$, and $\norm{\dot{\vw}_i(t)}\lesssim r\gamma_{t,i}\norm{\vw_i(t)}$ with $\gamma_{t,i}\le m$. Therefore, for $g\in\{\ckf,\tldf\}$,
    \[
    \norm{\frac{\rd}{\rd t}g_t}_{L_2}\le L_*,
    \qquad \text{with } L_*\lesssim mr^2.
    \]
    Similarly, directly from the definition of $B_i$, we have $|\frac{\rd}{\rd t}B_i(t)|\le L_B$ with $L_B\lesssim rm^2$. Consequently,
    $|B_2(t)-B_2(s)|\le \left(\frac1m\sum_i|B_i(t)-B_i(s)|^2\right)^{1/2}\le L_B|t-s|$.
    These bounds hold throughout Stage~1-2, including the short interval where $\gamma_{t,i}=m$.
    
    Choose $\eta=m^{-20}$ and $h:=\eta/(8\max\{L_*,L_B\})$, and let $\mathcal T$ be an $h$-net of $[0,T_{\max}]$. Since all relevant quantities are polynomially bounded, we have $N:=|\mathcal T|\le 1+\frac{T_{\max}}{h}=\poly(m,d)$. At every $s\in\mathcal T$, apply \cref{lem: tldf ckf bound} with failure probability $\delta/N$. By a union bound, simultaneously for all $s\in\mathcal T$,
    \[
    \norm{\tldf_s-\ckf_s}_{L_2}^2
    \lesssim \frac{B_2(s)^2\log(N/\delta)}{m}.
    \]
    
    For arbitrary $t\le T_{\max}$, choose $s\in\mathcal T$ with $|t-s|\le h$. The above Lipschitz bounds give
    $\norm{(\tldf_t-\ckf_t)-(\tldf_s-\ckf_s)}_{L_2}\le\eta/4$ and $|B_2(t)-B_2(s)|\le\eta/4$. Therefore,
    \[
    \norm{\tldf_t-\ckf_t}_{L_2}^2
    \lesssim
    \frac{B_2(t)^2\log(N/\delta)}{m}+\eta^2.
    \]
    Since $N=\poly(m,d)$ and $m\ge d^3$, we have $\log(N/\delta)\lesssim \log m+\log(1/\delta)$. Taking $\delta=1/\poly(m)$ and using $B_2^2\lesssim BB_\infty$, uniformly for all $t\le T_{\max}$,
    \[
    \norm{\tldf_t-\ckf_t}_{L_2}^2
    \lesssim
    \frac{BB_\infty\log m}{m}+m^{-40}.
    \]
    The additive $m^{-40}$ term is negligible in all subsequent applications, which gives the desired bound.

\end{proof}

\subsection{Bound $\Deltamax$ and $\Delta$}\label{sec: proof of coupling main delta}
Denote 
\begin{align*}
    \grad_\vv \L(\vtheta)
    &= \E_\vx[(f(\vx)-f_*(\vx)) \nabla_\vv(\norm{\vv}^2\sigma(\bvv^\top \vx))] + \lambda\vv,\\
    \grad_\vv \L(\tldvtheta)
    &= \E_\vx[(\tldf(\vx)-f_*(\vx)) \nabla_\vv(\norm{\vv}^2\sigma(\bvv^\top \vx))] + \lambda\vv,
\end{align*}
as the gradient if there is a neuron at position $\vv$. 
It should note that it is possible that there is in fact no neuron at such position $\vv$, but this is still well-defined. The motivation is from the infinite-width limit where this is often known as the functional derivative, which can be thought as the gradient of distribution in Wasserstein metric. And when a neuron $\vv$ exists at such position, then it goes back to the normal gradient $\nabla_\vv \L(\vtheta)$ and $\nabla_\vv \L(\tldvtheta)$.

As in \cite{mahankali2023beyond} we decompose the dynamic of $\delta$ into
\begin{align}\label{eq: delta decomp ABC}
    \frac{\rd}{\rd t} \delta^2
    = \frac{\rd}{\rd t} \norm{\vv - \ckvv}^2
    = A_t + B_t + C_t,
\end{align}
where 
\begin{align*}
    A_t &= -2\langle \grad_{\vv}\L(\tldvtheta) - \grad_{\ckvv}\L(\tldvtheta),\vv - \ckvv\rangle,\\
    B_t &= -2\langle \grad_{\vv}\L(\vtheta) - \grad_{\vv}\L(\tldvtheta),\vv - \ckvv\rangle,\\
    C_t &= -2\langle \grad_{\vv}\hat{\L}(\vtheta) - \grad_{\vv}\L(\vtheta),\vv - \ckvv\rangle
\end{align*}

Recall
\[
    \Deltamax:= \max_i \frac{\delta_i}{\norm{\ckvv_i}} = \max_i \frac{\norm{\vv_i-\ckvv_i}}{\norm{\ckvv_i}},
    \qquad
    \Delta^2 := \frac{1}{m}\sum_i \delta_i^2
    =\frac{1}{m}\sum_i \norm{\vv_i-\ckvv_i}^2
\]
The lemma below is the main result of this section. The proof uses above decomposition into $A_t,B_t,C_t$ term (bound in \cref{sec: At bound proof,sec: Bt bound proof,sec: Ct bound proof}) to get the final bound using Gronwall-type bound.
\begin{lemma}\label{lem: coupling error main}
    Suppose sample size $n>d^{3+c}$ and width $m\ge d^{2+c}$ with any small constant $c$ ($c$ depends on the exponent $\epsdir=m^{-\BigTheta{1}}$ which is an
    arbitrary small constant). We have for $t\le T_2$
    \begin{align*}
        \Deltamax\le \epsdir^{1/6},\qquad
        \Delta\le \epsdir^{1/5}.
    \end{align*}
\end{lemma}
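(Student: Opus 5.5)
We propose to prove \cref{lem: coupling error main} by a continuous induction (bootstrap) over $[0,T_2]$, driven by the decomposition \eqref{eq: delta decomp ABC}. Let $T^\star$ be the first time at which either $\Deltamax\ge\epsdir^{1/6}$ or $\Delta\ge\epsdir^{1/5}$. On $[0,T^\star]$ we will show that both quantities stay strictly below their thresholds, so $T^\star\ge T_2$. Inside the bootstrap, \cref{lem: tldf ckf bound and f ckf bound} controls the function-space error $\norm{f-\tldf}_{L_2}\lesssim \sqrt{B}\Delta+\Delta^2+\sqrt{BB_\infty\log m/m}$, where $B\lesssim r$ comes from the regularity items of the induction hypotheses (\cref{lem: stage 1.2 IH}, \cref{lem: stage 2 IH}). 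We also have $\norm{\vv_i}\le(1+\Deltamax)\norm{\ckvv_i}$, which lets every per-particle bound be written in terms of the symmetrized particle.

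The three terms are bounded per particle $i$, with everything normalized by $\norm{\ckvv_i}^2$ so that it is compatible with $\Deltamax$.

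(i) $A_t$, the stability term. We would use the explicit coordinatewise vector field of \cref{claim: tldw dynamic}. Because the symmetrized field is multiplicative ($\dot v_i=v_i\cdot g_i(\bvv)$), the Jacobian at $\ckvv$ splits into two parts. The first is a radial part that scales exactly like $\frac{\rd}{\rd t}\log\norm{\ckvv}$. The second is an angular part with $O(1)$ norm, apart from the higher-order tensor-power terms. The aim is $A_t\le 2\bigl(\frac{\rd}{\rd t}\log\norm{\ckvv_i}+K_{\mathrm{stage}}(t)\bigr)\delta_i^2+O(\delta_i^3/\norm{\ckvv_i})$. Subtracting the radial growth then leaves $\frac{\rd}{\rd t}(\delta_i/\norm{\ckvv_i})$ governed only by $K_{\mathrm{stage}}$. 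The stagewise values are $K=O(1)$ in Stage 1.1 (duration $\tldO(1/r)$) and $K=O(1/r)$ in Stages 1.2 and 2.1 after the common growth is removed. In Stage 2.1, $K$ is of order $\hsigma_4^2 p$, where $p$ is the particle's alignment, and integrating this along the tensor-power trajectory gives only a $\polylog$ factor rather than $e^{T}$. The Stage 2.2 interval where $\gamma=m$ has length $O(\log m/m)$ and contributes only $O(\log m)$ to the exponent.

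(ii) $B_t$. By Cauchy--Schwarz and the Gaussian Lipschitz bound on $\nabla_\vv(\norm{\vv}^2\sigma(\bvv^\top\vx))$, we get $|B_t|\lesssim \norm{\vv_i}\,\norm{f-\tldf}_{L_2}\,\delta_i$.

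(iii) $C_t$. This is the empirical-gradient deviation. We would bound it uniformly over $\vv$ in a ball of radius $\poly(m)$ by a covering argument plus concentration of the sub-exponential gradient, giving $|C_t|\lesssim \norm{\vv_i}\sqrt{d\,\polylog/n}\cdot\sqrt{B}\cdot\delta_i$. With $n>d^{3+c}$ this is $\norm{\vv_i}d^{-1-c/2}\delta_i$.

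Combining these, $\frac{\rd}{\rd t}(\delta_i/\norm{\ckvv_i})\le K\,\delta_i/\norm{\ckvv_i}+O(\norm{f-\tldf}_{L_2}+d^{-1-c/2})$. Since $\delta_i(0)=0$, Gronwall's inequality over the total time $O(r/\log m)$ with $\int K\le\polylog$ gives $\Deltamax\lesssim e^{\polylog}\bigl(\sqrt r\,\epsdir^{1/5}+\sqrt{B_\infty\log m/m}+d^{-1-c/2}\bigr)$. Here $m\ge d^{2+c}$ and $B_\infty\le\max\{\sigma_1^2,m\sqrt{\epsdir}\}$ (as in the proof of \cref{lem: main text coupling error main}), and $\epsdir=m^{-\Theta(1)}$ is chosen with a small enough exponent relative to $c$, so this is $<\epsdir^{1/6}$. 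For $\Delta$ we average: $\Delta^2\le \frac1m\sum_i\Deltamax^2\norm{\ckvv_i}^2\le B\Deltamax^2$ is too weak, so instead we run the same Gronwall argument directly on $\Delta^2$, where the $B_t$ term enters quadratically as $\sqrt B\Delta\cdot\norm{f-\tldf}$ and can be closed self-consistently. This yields $\Delta\lesssim\sqrt r\,e^{\polylog}\bigl(\sqrt{B_\infty\log m/m}+d^{-1-c/2}\bigr)<\epsdir^{1/5}$, which closes the bootstrap.

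The main obstacle is (i): obtaining a stability constant $K$ whose time integral is $\polylog$ rather than $\Theta(T)=\Theta(r/\log m)$. This is most delicate in Stage 2.1 for the potential neurons, whose angular dynamics are genuinely expanding (tensor power) and whose norms grow to $\sigma_1^2$. There we would follow the Stage 2.1 trajectory itself and bound the derivative of the flow map via the explicit solution $p=1/(1/p_0-at)$. The sensitivity of that flow map is $\partial p(t)/\partial p_0=(p/p_0)^2$, which is at most $\poly(r)$ because $p_0\gtrsim\log m/r$; this is the precise source of the $\polylog$/poly$(r)$ loss. It is absorbed because $\epsdir$ is polynomially small in $m$.
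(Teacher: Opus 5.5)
Your skeleton matches the paper's: the $A_t+B_t+C_t$ split, a bootstrap, and normalizing by $\norm{\ckvv_i}$ to remove common radial growth. The step that does not close is your control of $\Delta$. If you bound $B_t$ per particle by Cauchy--Schwarz and then average, the self-interaction part from $f-\ckf$ contributes about $\sqrt{B}\Delta\cdot\sqrt{B}\Delta=B\Delta^2$ to $\frac{\rd}{\rd t}\Delta^2$. That is a growth rate of order $B\asymp r$. Over Stage~1.2 (duration $\Theta(\log d)$) this already gives $d^{\Theta(r)}$, and over Stage~2.1 (duration $\Theta(r/\log m)$) it gives $e^{\Theta(r^2/\log m)}$; both are super-polynomial. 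Freezing $\Delta$ at the bootstrap threshold inside $\norm{f-\tldf}$ does not help either: it yields $\Delta\lesssim e^{\int K}\bigl(\int B\,\rd t\bigr)\epsdir^{1/5}$ with $\int B\,\rd t\gg1$, which never improves the threshold. The same loop reappears if you try to bound $\Deltamax$ without an independent bound on $\Delta$.

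The missing idea is the sign structure of the averaged interaction term. Write $\langle\nabla_{\vv}\phi(\vv;\vx),\vv-\ckvv\rangle=\phi(\vv;\vx)-\phi(\ckvv;\vx)+(\text{Taylor remainder})$ with $\phi(\vv;\vx)=\norm{\vv}^2\sigma(\bvv^\top\vx)$, and average over particles sharing the same $\gamma_{t,i}$. The first-order part is then exactly $-2\gamma_t\norm{f-\ckf}_{L_2}^2\le0$. Only the remainder survives, which is $O(r\Delta^{5/2})$ in the ReLU-type case and negligible under a bootstrap $\Delta<1/r^4$; this is \cref{lem: Bt bound}. So you must control $\Delta$ first with this averaged Gronwall argument. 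Only afterwards do you bound $\Deltamax$, with $\sqrt{r}\,\Delta$ entering as external forcing.

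Two of your stability claims are also wrong.

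\emph{Stage~1.2.} Here $K$ is not $O(1/r)$. Target-subspace coordinates grow faster than irrelevant ones by about $2\hsigma_2^2(1-\tldE w_i^2)$ for time $\Theta(\log d)$, while particles still sit mostly outside the subspace. A perturbation inside the target subspace therefore gains a genuine relative factor of about $\sqrt{d/r}$. This is the $\int(1-\tldE w_i^2)\,\rd s=\frac{1}{4\hsigma_2^2}\log d$ computation in the paper, and it is why $n>d^{3+c}$ and $m\ge d^{2+c}$ are needed. The loss is affordable only because it acts on the tiny Stage~1 errors, so the argument must run stage by stage with the actual stage-end errors. A single global Gronwall with $\sqrt{r}\,\epsdir^{1/5}$ as forcing and prefactor $e^{\int_0^{T_2}K}\gtrsim\sqrt{d/r}$ does not give $\epsdir^{1/6}$.

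\emph{The $\gamma=m$ window.} An exponent of $O(\log m)$ produces an $m^{O(1)}$ factor, which swamps $\epsdir=m^{-\Theta(1)}$; you need $o(\log m)$. The paper gets this from two facts: $\err=\BigO{\omega^4/r}$ until $\max_i\ha_i=1/r^2$, and the remaining time is only $o(\log m/\gamma)$. It also needs a separate bound on the aligned-group error $\Delta_G$, because the averaging identity above applies only within groups of equal stepsize.
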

\begin{proof}
    Recall from \eqref{eq: delta decomp ABC} that
    \begin{align*}
        \frac{\rd}{\rd t} \delta^2
        = \frac{\rd}{\rd t} \norm{\vv - \ckvv}^2
        = A_t + B_t + C_t.
    \end{align*}
    Using \cref{lem: At bound,lem: Bt bound,lem: Ct bound} that bound $A_t,B_t,C_t$, we have the following stage by stage. Typically, within each stage, we first derive the bound for average error $\Delta$ and then to individual error $\Deltamax$, as dynamic of $\delta$ (especially $B_t$) depending on $\Delta$.

    We formalize the following bounds using a stopping-time argument. Define
    \[
    \tau:=T_2\wedge\inf\left\{t\ge0:
    \Delta_{\max}(t)\ge\frac{\log m}{r}
    \text{ or }
    \Delta(t)\ge\frac1{r^4}\right\}.
    \]
    Initially $\Delta(0)=\Delta_{\max}(0)=0$. For all $t<\tau$, we have $\Delta_{\max}<\frac{\log m}{r}=o(1)$ and $\Delta<1/r^4$, so all the estimates in \cref{lem: At bound,lem: Bt bound,lem: Ct bound} apply. In the following, all stage endpoints are implicitly replaced by their minimum with $\tau$.

    \paragraph{Stage 1.} For $t\le T_1$, we have
    \begin{align*}
        \frac{\rd}{\rd t}\Delta^2
        \le& 
            \frac{1}{m}\sum_{i\in[m]}\left(\frac{\rd}{\rd t}\log \norm{\ckvv_i}^2
            + 4\hsigma_2^2\left(1-\frac{1}{r}\sum_{i\le r}\tldE w_i^2\right) 
            + \BigO{\frac{\log m}{r}}\right)\cdot \norm{\vv_i-\ckvv_i}^2
            + \frac{1}{m}\sum_{i\in[m]}\BigO{1}\frac{\norm{\vv-\ckvv_i}^3}{\norm{\ckvv_i}}\\ 
            &+ \BigO{\sqrt{\frac{r^2\log m}{m}}}\Delta + \BigO{r}\Delta^{5/2}
            + \BigO{\sqrt{\frac{dr^2\log d}{n}}}\frac{1}{m}\sum_{i\in[m]}\norm{\vv_i}\delta_i\\
        \le&
            \left(\frac{\rd}{\rd t}\log \tldalpha
            + 4\hsigma_2^2\left(1-\frac{1}{r}\sum_{i\le r}\tldE w_i^2\right) 
            + \BigO{\frac{\log m}{r}}\right)\cdot \Delta^2
            + \BigO{\sqrt{\frac{r^2\log m}{m}}+\sqrt{\frac{dr^3\log d}{n}}}\Delta
    \end{align*}
    where we use $\Deltamax<\frac{\log m}{r}$ and $\Delta<1/r^4$; and $\tldE \norm{\vw}^2\lesssim r$ from \cref{lem: stage 1.1 IH implication}; and dynamic of $\tldalpha$ and $\norm{\vv}$ are the same from \cref{lem: stage 1.1 dynamic} and \cref{lem: stage 1.1 dynamic of tldalpha}.

    Given $\Delta(0)=0$, we can solve the above ode to get for $t\le T_1$
    \[
        \Delta(t)\le \Delta(T_1)
        \lesssim \left(\sqrt{\frac{r^2\log m}{m}}+\sqrt{\frac{dr^3\log d}{n}}\right) \sqrt{\frac{\tldalpha(T_1)}{\tldalpha(0)} d} \polylog(d)
        \lesssim \left(\sqrt{\frac{r^2\log m}{m}}+\sqrt{\frac{dr^3\log d}{n}}\right) \sqrt{dr} \polylog(d),
    \]
    where we use $\int_0^{\hT_{21}}(1-\tldE w_i)\,ds
        =
        \frac{1}{4\hsigma_2^2}\log d
        +
        \BigO{\log\log d}$ from \cref{lem: stage 1.2 0th 2nd order dynamic} and $T_{11}\lesssim \frac{\log d}{r}$ from \cref{lem: stage 1.1 main} and $T_{12}-T_{11}\lesssim \log d$ from \cref{lem: stage 1.2 main}.

    We now move to bound $\Deltamax=\max_i \delta_i/\norm{\ckvv_i}$. We have (omit index $i$ for simplicity)
    \begin{align*}
        &\frac{\rd}{\rd t}\frac{\delta^2}{\norm{\ckvv}^2}
        = \frac{\delta^2}{\norm{\ckvv}^2}\left(
            \frac{1}{\delta^2}\frac{\rd}{\rd t}\delta^2 
            -\frac{1}{\norm{\ckvv}^2}\frac{\rd}{\rd t}\norm{\ckvv}^2\right)\\
        =& \frac{\delta^2}{\norm{\ckvv}^2}\Bigg(
            4\hsigma_2^2\left(1-\frac{1}{r}\sum_{i\le r}\tldE w_i^2
            + \BigO{\frac{\log m}{r}}\right)
            + \BigO{\frac{\delta}{\norm{\ckvv_i}}} 
            + \BigO{\sqrt{r}\Delta+\sqrt{\frac{r\log m}{m}}}\frac{\norm{\vv}}{\delta}
            + \BigO{\sqrt{\frac{dr^2\log d}{n}}}\frac{\norm{\vv}}{\delta}
            \Bigg)\\
        =& \frac{\delta^2}{\norm{\ckvv}^2}\Bigg(
            4\hsigma_2^2\left(1-\frac{1}{r}\sum_{i\le r}\tldE w_i^2
            + \BigO{\frac{\log m}{r}}\right)
            \Bigg)
            + \BigO{\sqrt{r}\Delta(T_1)}\frac{\delta}{\norm{\vv}}
    \end{align*}
    Similar to the $\Delta$ case, given $\delta/\norm{\ckvv}(0) = 0$, we know for $t\le T_1$
    \[
        \Deltamax(t)\le \Deltamax(T_1)
        \le \max_i \frac{\delta_i}{\norm{\ckvv_i}}
        \lesssim \Delta(T_1)\sqrt{dr}\polylog(d)
        \lesssim \left(\sqrt{\frac{r^2\log m}{m}}+\sqrt{\frac{dr^3\log d}{n}}\right) dr \polylog(d).
    \]

    \paragraph{Stage 2.1.}
    For Stage 2.1 that $T_1\le t\le T_{21}$, we have
    \begin{align*}
        \frac{\rd}{\rd t}\Delta^2
        \le& \frac{1}{m}\sum_{i\in[m]}\left(\frac{\rd}{\rd t}\log \norm{\ckvv_i}^2
            + \BigO{\frac{\omega\log m}{r}+ \bckv_i^2\ind_{\ckvv\in \Spoti}}\right)\cdot \norm{\vv_i-\ckvv_i}^2
            + \frac{1}{m}\sum_{i\in[m]}\BigO{1}\frac{\norm{\vv-\ckvv_i}^3}{\norm{\ckvv_i}}\\
            &+ \BigO{\sqrt{\frac{r^2\log m}{m}}}\Delta + \BigO{r}\Delta^{5/2}
            + \BigO{\sqrt{\frac{dr^2\log d}{n}}}\frac{1}{m}\sum_{i\in[m]}\norm{\vv_i}\delta_i\\
        \le& \max_i\left(\frac{\rd}{\rd t}\log \norm{\ckvv_i}^2
            + \BigO{\frac{\omega\log m}{r}+ \bckv_i^2\ind_{\ckvv\in \Spoti}}\right)\cdot \Delta^2
            + \BigO{\sqrt{\frac{r^2\log m}{m}}+\sqrt{\frac{dr^3\log d}{n}}}\Delta
    \end{align*}
    where we use $\Deltamax<\frac{\log m}{r}$ and $\Delta<1/r^4$. 
    Thus, we have
    \begin{align*}
        \Delta(T_{21})
        \lesssim \Delta(T_1) \exp\left\{
            \int_{T_1}^{T_{21}}
            \max_i\left(\frac{\rd}{\rd t}\log \norm{\ckvv_i}^2
            + \BigO{\frac{\omega\log m}{r}+ \bckv_i^2\ind_{\ckvv\in \Spoti}}\right)\rd t
            \right\}.
    \end{align*}
    Using $\tldE w_i^2$ are balanced in \cref{lem: stage 2.1 IH}\ref{item: stage 2.1 IH population 2nd order balance} and the dynamic of $\norm{\vv}$ in \cref{lem: stage 2 dynamic small and dense} and \cref{lem: stage 2 dynamic potential neuron}, we know the above max is taken at neuron $\ckvv$ that maximize $(\bvvler)_i^2$ among all neuron and $i\le r$. Thus, we have
    \begin{align*}
        \int_{T_1}^{T_{21}}
            \frac{\rd}{\rd t}\log \norm{\ckvv_i}^2 \rd t
        = \log \frac{\ckvv_i(T_{21})}{\ckvv_i(T_1)}
        \le \log(\sigma_1^4 m^{o(1)}),
        \qquad
        \int_{T_1}^{T_{21}}
            \BigO{\frac{\omega\log m}{r}} \rd t
        \lesssim \omega,
    \end{align*}
    where we use \cref{lem: stage 2.1 main} and $T_{21}-T_1\lesssim r/\log m$. For the last term, using \cref{lem: stage 2 dynamic potential neuron}
    \[
        \frac{\rd}{\rd t}(\bbvvler)_i^2\ge 2(\bbvvler)_i^2\left((\bbvvler)_i^2-\BigO{\frac{\omega^5}{r}}\right)
    \]
    we have
    \[
        \int_{T_1}^{T_{21}} (\bvvler)_i^2\rd t
        \le \int_{T_1}^{T_{21}} \log ((\bvvler)_i^2) + \BigO{\frac{\omega^5}{r}} \rd t
        \le \log r + \BigO{1}.
    \]
    Therefore, we finally get for $t\le T_{21}$
    \[
        \Delta(t)
        \le \Delta(T_{21})
        \lesssim \Delta(T_1)\sigma_1^4 m^{o(1)}.
    \]

    Now move to the bound on $\Deltamax$. We have
    \begin{align*}
        &\frac{\rd}{\rd t}\frac{\delta^2}{\norm{\ckvv}^2}
        = \frac{\delta^2}{\norm{\ckvv}^2}\left(
            \frac{1}{\delta^2}\frac{\rd}{\rd t}\delta^2 
            -\frac{1}{\norm{\ckvv}^2}\frac{\rd}{\rd t}\norm{\ckvv}^2\right)\\
        =& \frac{\delta^2}{\norm{\ckvv}^2}\Bigg(
            \BigO{\frac{\omega\log m}{r}+\bckv_i^2\ind_{\ckvv\in \Spoti}}
            + \BigO{\frac{\delta}{\norm{\ckvv_i}}} 
            + \BigO{\sqrt{r}\Delta+\sqrt{\frac{r\log m}{m}}}\frac{\norm{\vv}}{\delta}
            + \BigO{\sqrt{\frac{dr^2\log d}{n}}}\frac{\norm{\vv}}{\delta}
            \Bigg)\\
        =& \frac{\delta^2}{\norm{\ckvv}^2}\Bigg(
            \BigO{\frac{\log m}{r}+\bckv_i^2\ind_{\ckvv\in \Spoti}}
            \Bigg)
            + \BigO{\sqrt{r}\Delta(T_{21})}\frac{\delta}{\norm{\vv}}
    \end{align*}
    Similar to the $\Delta$ case, we can bound $\int \bckv_i^2\rd t$. The good thing here is that we do not have the $\frac{\rd}{\rd t}\log\norm{\ckvv}^2$ term, since we look at the normalized error. We have
    \[
        \Deltamax(t)
        \le \Deltamax(T_{21})
        \lesssim \Delta(T_{21})\poly(r)
        \lesssim \Delta(T_1) \sigma_1^4 m^{o(1)}
    \]

    \paragraph{Stage 2.2 for $T_{21}\le t\le T_\gamma$.}
    For Stage 2.2, consider the time $T_{21}\le t\le T_\gamma$. Note that we increase the stepsize to $\gamma=m$ for those neuron with $\norm{\vv}\ge \sigma_1^2$ that are aligned with ground-truth direction ($\ckvv\in G=\cup_k G_k$), thus we focus on the contribution from the average error $\Delta_G=\frac{1}{m}\sum_{i\in\cup_k G_k}\norm{\vv_i-\ckvv_i}^2$. we have
    \begin{align*}
        \frac{\rd}{\rd t}\Delta_G^2
        \le& \frac{1}{m}\sum_{i\in G}\left(\frac{\rd}{\rd t}\log \norm{\ckvv_i}^2
            + \gamma\BigO{\err+\frac{1}{r}}\right)\cdot \norm{\vv_i-\ckvv_i}^2\\
            &+ \gamma\left(\sqrt{r\ha}\Delta_{-G}+\epsdir^{1/4}\ha\log m\right)\Delta_G + \gamma\ha^{3/4}\Delta_G^{5/2}
            + \gamma\BigO{\sqrt{\frac{dr^2\log d}{n}}}\frac{1}{m}\sum_{i\in G}\norm{\vv_i}\delta_i
    \end{align*}
    where $\ha=\sum_i\ha_i$; and for $T_{21}\le t\le T_\gamma$ $\err=\BigO{\frac{\omega^4}{r}}$  before $\max_i \ha_i =1/r^2$ and $\err\lesssim 1$ otherwise (which only takes at most $o(\frac{1}{\gamma}\log m)$ time as the spread of $\ha_i$ is $m^{o(1)}$). We will verify the condition verify condition $\Delta_G\le \ha^{1/2}$ in \cref{lem: Bt bound} later.

    Before $\max_i \ha_i =1/r^2$ (denote this time as $T_\gamma'$),  the above becomes
    \begin{align*}
        \frac{\rd}{\rd t}\Delta_G^2
        \le& \left(\frac{\rd}{\rd t}\log \ha
            + \BigO{\gamma \frac{\omega^4}{r}}\right)\cdot \Delta_G^2
            + \gamma\BigO{\sqrt{r\ha}\Delta(T_{21})+\epsdir^{1/4}\ha\log m+\sqrt{\frac{dr^3\log d}{n}\ha}}\Delta_G
    \end{align*}
    where we use dynamic of $\ha_i$ in \cref{lem: stage 2 population 0th 2nd dynamic} and dynamic of $\norm{\vv}$ in \cref{lem: stage 2 dynamic potential neuron} that are same. Solving the ode and using $\ha_i\ge \sigma_1^2/m$ at $T_{21}$ from \cref{lem: stage 2.1 main} and $T_\gamma'-T_{21}\lesssim \frac{\log m}{\gamma}$, we get
    \begin{equation}\label{eq: Delta G before T gamma '}
    \begin{aligned}
        \Delta_G(t)
        \le \Delta_G(T_\gamma')
        \lesssim& \Delta_G(T_{21})\sqrt{\frac{\ha(T_\gamma')}{\ha(T_{21})}}
        + \int_{T_{21}}^{T_\gamma'} \sqrt{\frac{\ha(t)}{\ha(s)}}\left(
            \sqrt{r\ha(s)}\Delta(T_{21})+\epsdir^{1/4}\ha(s)\log m+\sqrt{\frac{dr^3\log d}{n}\ha(s)}
        \right)
        \rd s\\
        \lesssim& \epsdir^{1/4},
    \end{aligned}
    \end{equation}
    where we use $\Delta_G(T_{21})\le \frac{|G|}{m}\Deltamax(T_{21})$ and $\frac{\ha(T_\gamma')}{\ha(T_{21})}\le m/\sigma_1^4$.

    After $\max_i \ha_i =1/r^2$, it only takes $o(\frac{1}{\gamma}\log m)$ as mentioned above, thus using the crude bound of $\err$ term as $\BigO{1}$ and similar ode above, we get
    \begin{align*}
        \Delta_G(t)
        \le \Delta_G(T_\gamma)
        \lesssim \Delta_G(T_\gamma') m^{o(1)} + \epsdir^{1/4}\log^2 m
        \le \epsdir^{1/4}m^{o(1)},
    \end{align*}
    where recall $\epsdir=m^{-\Theta(1)}$ with small enough constant.
    
    Since $\Delta_{-G}$ only moves $\frac{\log m}{\gamma}=\frac{\log m}{m}$ during this time, we finally know
    \[
        \Delta(t)
        \le \Delta(T_\gamma)
        \lesssim \Delta_G(T_\gamma)
        \le \epsdir^{1/4}m^{o(1)}.
    \]

    Finally, to verify the condition $\Delta_G\le \ha^{1/2}$ in \cref{lem: Bt bound}. For $T_{21}\le t\le T_\gamma'$, using \eqref{eq: Delta G before T gamma '} and replacing $T_\gamma'$ with any time $t$, we can see such condition is true. For $T_\gamma'\le t\le T_\gamma$, it follows directly since $\Delta_G$ is small.
    
    Now move to bound $\Deltamax$. Again, we only focus on those neurons with stepsize $\gamma$, as the total time is short so others does not move much (by at most $\BigO{\frac{\log m}{\gamma}}=\frac{\log m}{m}$). We have
    \begin{align*}
        \frac{\rd}{\rd t}\frac{\delta^2}{\norm{\ckvv}^2}
        =& \frac{\delta^2}{\norm{\ckvv}^2}\left(
            \frac{1}{\delta^2}\frac{\rd}{\rd t}\delta^2 
            -\frac{1}{\norm{\ckvv}^2}\frac{\rd}{\rd t}\norm{\ckvv}^2\right)\\
        =& \gamma\frac{\delta^2}{\norm{\ckvv}^2}\Bigg(
            \BigO{\frac{\omega\log m}{r}+\err}
            + \BigO{\frac{\delta}{\norm{\ckvv_i}}} 
            + \BigO{\sqrt{r}\Delta+\epsdir^{1/4}}\frac{\norm{\vv}}{\delta}
            + \BigO{\sqrt{\frac{dr^2\log d}{n}}}\frac{\norm{\vv}}{\delta}
            \Bigg)\\
        =& \gamma\frac{\delta^2}{\norm{\ckvv}^2}\Bigg(
            \BigO{\frac{\omega\log m}{r}+\err}
            \Bigg)
            + \gamma\BigO{\sqrt{r}\Delta(T_\gamma)}\frac{\delta}{\norm{\vv}}
    \end{align*}
    where for $T_{21}\le t\le T_\gamma$ $\err=\BigO{\frac{\omega^4}{r}}$  before $\max_i \ha_i =1/r^2$ and $\err\lesssim 1$ otherwise (which only takes at most $o(\frac{1}{\gamma}\log m)$ time as the spread of $\ha_i$ is $m^{o(1)}$).

    Thus, we have for $T_{21}\le t\le T_\gamma$, we have
    \begin{align*}
        \Deltamax(t)
        \le \Deltamax(T_\gamma)
        \lesssim \Delta(T_\gamma)+\epsdir^{1/4}m^{o(1)}
        \lesssim \epsdir^{1/4}m^{o(1)}.
    \end{align*}

    \paragraph{Stage 2.2 for $T_\gamma\le t\le T_{22}$ and Stage 2.3 for $T_{22}\le t\le T_2$.}
    For $T_\gamma \le t\le T_{2}$, we have
    \begin{align*}
        \frac{\rd}{\rd t}\Delta^2
        \le& \frac{1}{m}\sum_{i\in[m]}\left(\frac{\rd}{\rd t}\log \norm{\ckvv_i}^2
            + \BigO{\frac{\omega^9}{\log m}+ \bckv_i^2\ind_{\ckvv\in \Spoti}}\right)\cdot \norm{\vv_i-\ckvv_i}^2
            + \frac{1}{m}\sum_{i\in[m]}\BigO{1}\frac{\norm{\vv-\ckvv_i}^3}{\norm{\ckvv_i}}\\
            &+ \BigO{r\epsdir^{1/4}}\Delta + \BigO{r}\Delta^{5/2}
            + \BigO{\sqrt{\frac{dr^2\log d}{n}}}\frac{1}{m}\sum_{i\in[m]}\norm{\vv_i}\delta_i\\
        \le& \left(\frac{\rd}{\rd t}\log (\sum_i\ha_i)
            + \BigO{\frac{\omega^9}{\log m}+ \bckv_i^2\ind_{\ckvv\in \Spoti}}\right)\cdot \Delta^2
            + \BigO{r\epsdir^{1/4}+\sqrt{\frac{dr^3\log d}{n}}}\Delta
    \end{align*}
    where we use dynamic of $\norm{\ckvv}^2$ in \cref{lem: stage 2 dynamic potential neuron} are the same as dynamic of $\ha_i$ in \cref{lem: stage 2 population 0th 2nd dynamic}. 
    Thus, solve the above ode we have for $T_\gamma\le t\le T_2$
    \begin{align*}
        \Delta(t)
        \le \Delta(T_2)
        \lesssim \Delta(T_\gamma)\frac{\sum_i\ha_i(T_2)}{\sum_i\ha_i(T_\gamma)}\poly(1/\epsnm)
        \le \epsdir^{1/5}.
    \end{align*}

    Now move to bound $\Deltamax$. We have
    \begin{align*}
        \frac{\rd}{\rd t}\frac{\delta^2}{\norm{\ckvv}^2}
        =& \frac{\delta^2}{\norm{\ckvv}^2}\left(
            \frac{1}{\delta^2}\frac{\rd}{\rd t}\delta^2 
            -\frac{1}{\norm{\ckvv}^2}\frac{\rd}{\rd t}\norm{\ckvv}^2\right)\\
        =& \frac{\delta^2}{\norm{\ckvv}^2}\Bigg(
            \BigO{\frac{\omega^9}{\log m}+\bckv_i^2\ind_{\ckvv\in \Spoti}}
            + \BigO{\frac{\delta}{\norm{\ckvv_i}}} 
            + \BigO{\sqrt{r}\Delta+\epsdir^{1/4}}\frac{\norm{\vv}}{\delta}
            + \BigO{\sqrt{\frac{dr^2\log d}{n}}}\frac{\norm{\vv}}{\delta}
            \Bigg)\\
        =& \frac{\delta^2}{\norm{\ckvv}^2}\Bigg(
            \BigO{\frac{\omega^9}{\log m}+\bckv_i^2\ind_{\ckvv\in \Spoti}}
            \Bigg)
            + \BigO{\sqrt{r}\Delta(T_{2})}\frac{\delta}{\norm{\vv}}.
    \end{align*}
    Similar to $\Delta$ case, we get
    \[
        \Deltamax(t)
        \le \Deltamax(T_2)
        \lesssim \poly(1/\epsnm)\sqrt{r}\Delta(T_2)
        \le \epsdir^{1/6}.
    \]

    It remains to close the stopping-time argument. The above estimates, applied with the stopped stage endpoints, give for all $t\le \tau$ that $\Delta(t)\le\epsdir^{1/5}$ and $\Delta_{\max}(t)\le\epsdir^{1/6}$. Under our parameter choices, $\epsdir^{1/5}\ll 1/r^4$ and $\epsdir^{1/6}\ll \log m/r$. Hence $\tau<T_2$ is impossible: by the definition of $\tau$, either $\Delta(\tau)=1/r^4$ or $\Delta_{\max}(\tau)=\log m/r$, contradicting the improved bounds above. Therefore $\tau=T_2$, which proves the claimed bounds for all $t\le T_2$.
\end{proof}

\subsection{Bound of $A_t$}\label{sec: At bound proof}
First, we have the explicit form of $A_t$ from \cref{claim: tldw dynamic} as
    \begin{align*}
        A_t =
        -2 \sum_{j\ge 0}\langle 
            \grad_{2j,\vv} - \grad_{2j,\ckvv},\vv - \ckvv
            \rangle
        - \lambda \norm{\vv - \ckvv}^2
    \end{align*}
where $\grad_{2j,\vv}$ represents the gradient from $2j$-th order loss
    \begin{align*}
        -\grad_{0,\vv} 
        =& 2\hsigma_0^2 \left(
                \sum_{j=1}^r a_j^* \norm{\vw_j^*}_2
                - \E_\mu[ \norm{\vw}_2^2 ]\right) \vv,\\
        -\grad_{2j,\vv} 
        =& \hsigma_{2j}^2 \left(
            2j (\tT_k^* - \tT_k)(\bvv^{\otimes k-1})\norm{\vv}    
            - (2j-2) (\tT_k^* - \tT_k)(\bvv^{\otimes k})\vv\right).
    \end{align*}
Here 
    \begin{align*}
        \tT_k = \frac{1}{m}\sum_{i=1}^m \norm{\vw_i}_2^2 \bvw_i^{\otimes k}, \quad
        \tT_k^* = \sum_{i=1}^r a_i^*\norm{\vw_i^*}_2\bvw_i^{*\otimes k}
    \end{align*}
are $k$-th order tensors.

\begin{lemma}[Bound on $A_t$]\label{lem: At bound}
    Suppose $\norm{\vv-\ckvv}\le \norm{\ckvv}/2$. For Stage 1 that $t\le T_1$ we have for any neuron pair $\vv,\ckvv$
    \begin{align*}
        A_t 
        \le \left(\frac{\rd}{\rd t}\log \norm{\ckvv}^2
            + 4\hsigma_2^2\left(1-\frac{1}{r}\sum_{i\le r}\tldE w_i^2\right) 
            + \BigO{\frac{\log m}{r}}\right)\cdot \norm{\vv-\ckvv}^2
            + \BigO{1}\frac{\norm{\vv-\ckvv}^3}{\norm{\ckvv}} 
    \end{align*}
    For Stage 2 that $T_1\le t\le T_2$, for any small-and-dense neuron $\ckvv\in\Sdense$ we have
    \begin{align*}
        A_t 
        \le \left(\frac{\rd}{\rd t}\log \norm{\ckvv}^2+\err\right)\cdot \norm{\vv-\ckvv}^2
            + \BigO{1}\frac{\norm{\vv-\ckvv}^3}{\norm{\ckvv}} 
    \end{align*}
    where $\err=\BigO{\frac{\omega\log m}{r}}$ for $T_{1}\le t\le T_{21}$; $\err\lesssim 1$ for $T_{21}\le t\le T_\gamma$; $\err=\BigO{\frac{\omega^9}{\log m}}$ for $T_\gamma\le t\le T_2$.;

    For any potential neuron $\ckvv\in \Spoti\setminus G_i$ for any $t\le T_2$ or $\ckvv\in G_i$ for any $t\le T_{21}$ we have
    \begin{align*}
        A_t 
        \le \left(\frac{\rd}{\rd t}\log \norm{\ckvv}^2+\err+\BigO{\bckv_i^2}\right)\cdot \norm{\vv-\ckvv}^2
            + \BigO{1}\frac{\norm{\vv-\ckvv}^3}{\norm{\ckvv}} 
    \end{align*}
    where $\err$ is the same defined above.
    
    For any $\ckvv\in G_i$ for $t\ge T_{21}$ we have
    \begin{align*}
        A_t 
        \le \left(\frac{\rd}{\rd t}\log \norm{\ckvv}^2+\gamma_t\err\right)\cdot \norm{\vv-\ckvv}^2
            + \BigO{\gamma_t}\frac{\norm{\vv-\ckvv}^3}{\norm{\ckvv}}, 
    \end{align*}
    where for $T_{21}\le t\le T_\gamma$ $\err=\BigO{\frac{\omega^4}{r}}$  before $\max_i \ha_i =1/r^2$ and $\err\lesssim 1$ otherwise (which only takes at most $o(\frac{1}{\gamma}\log m)$ time as the spread of $\ha_i$ is $m^{o(1)}$); $\err=\BigO{\frac{\omega^9}{\log m}}$ for $T_\gamma\le t\le T_2$.
\end{lemma}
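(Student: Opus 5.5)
The plan is to linearize the symmetrized population vector field around $\ckvv$ along the segment $\vv_s=\ckvv+s(\vv-\ckvv)$, $s\in[0,1]$. Writing $g(\vu):=-\grad_{\vu}\L(\tldvtheta)$ for the field at a (possibly fictitious) particle $\vu$, with the population $\tldmu$ held fixed, we have
\[
A_t = 2\int_0^1\langle \nabla g(\vv_s)(\vv-\ckvv),\vv-\ckvv\rangle\,\rd s .
\]
So it suffices to bound the symmetric part of the Jacobian $\nabla g$ at $\ckvv$ and to control how $\nabla g$ varies between $\ckvv$ and $\vv_s$. The assumption $\norm{\vv-\ckvv}\le\norm{\ckvv}/2$ keeps $\norm{\vv_s}\asymp\norm{\ckvv}$. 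The field is $1$-homogeneous in $\vu$ times a function of $\bvu$, so its second derivative scales like $1/\norm{\ckvv}$. This gives $\nabla g(\vv_s)=\nabla g(\ckvv)+\BigO{\norm{\vv-\ckvv}/\norm{\ckvv}}$ and hence the cubic remainder $\BigO{1}\norm{\vv-\ckvv}^3/\norm{\ckvv}$. The constant here uses $\sum_k k^2\hsigma_k^2=\BigO{1}$ together with the bounds on $\tT_k-\tT_k^*$ available from the induction hypotheses.

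For the leading quadratic term, I will split the perturbation $\ve:=\vv-\ckvv$ into a radial part along $\bckvv$ and a tangential part. In the radial direction the field is exactly linear in $\norm{\vu}$, because each coordinate evolves as $v_i\cdot(\text{function of }\bvv)$ by \cref{claim: tldw dynamic}. The radial Rayleigh quotient therefore equals $\frac{\rd}{\rd t}\log\norm{\ckvv}$, and after doubling this produces the term $\frac{\rd}{\rd t}\log\norm{\ckvv}^2$. The radial/tangential cross terms and the tangential block come from differentiating the direction-dependent factors. I will read these off from the per-coordinate form in \cref{claim: tldw dynamic} and the stage-specific simplified dynamics: \cref{lem: stage 1.1 dynamic} and \cref{lem: stage 1.2 dynamic} for Stage 1, and \cref{lem: stage 2 dynamic small and dense} and \cref{lem: stage 2 dynamic potential neuron} for Stage 2.

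In Stage 1, the only $\Theta(1)$ tangential coefficient is the 2nd-order one, $2\hsigma_2^2(\ind_{i\le r}-\tldE w_i^2)$. Its maximal spread across coordinates is at most $2\hsigma_2^2(1-\frac1r\sum_{i\le r}\tldE w_i^2)$ up to the imbalance $\BigO{\log d/r}$ from \cref{lem: stage 1.2 0th 2nd order dynamic}. This yields the $4\hsigma_2^2(\cdot)$ term. The higher-order terms contribute $\BigO{\log m/r}$ by the coordinate bounds in \cref{lem: stage 1.2 IH implication}.

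In Stage 2 the argument depends on the type of neuron. For dense neurons, the 2nd-order coefficients are balanced (\cref{lem: stage 2.1 IH} in 2.1, and \eqref{eq: stage 2.2 IH tld E bound} after $T_\gamma$), and the $\ge4$th-order terms are $\BigO{\omega\log m/r}$ by \cref{lem: stage 2 IH implication}; these together give $\err$. For potential neurons, the additional $\ge4$th-order self-reinforcement $\sum_j\hsigma_{2j}^2 2j\bv_i^{2j-2}$ is at most $\BigO{\bckv_i^2}$ once summed. For aligned neurons after $T_{21}$, the whole field is multiplied by $\gamma_t$. In the short window before $\max_i\ha_i=1/r^2$ the crude bound $\err\lesssim1$ suffices; afterwards, the ratio balance from \cref{lem: stage 2.2 log spread} is what gives $\omega^9/\log m$. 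Interaction terms $\tldE[\norm{\vw}^2\langle\bvw,\bvv\rangle^{2j-1}\bw_i/\bv_i]$ will be handled through the estimates of \cref{lem: stage 2 dynamic bound}, applied to their derivatives.

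The main obstacle is bounding the tangential Jacobian of the interaction terms, especially the $1/\bv_i$ factors and the high-$j$ tail. For these I will rewrite the terms as even polynomials, as in \cref{claim: tldw dynamic}, truncate at $j_0$, and use $\sum_{j>j_0}j\hsigma_{2j}^2\lesssim j_0^{-1/2}$. This has to be done without losing the sharp constant $4\hsigma_2^2$ in Stage 1, because that constant is what the Gronwall integral in \cref{lem: coupling error main} needs in order to match $\log d$.
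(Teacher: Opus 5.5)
Your overall scheme matches the paper's. It linearizes the symmetrized field at $\ckvv$ through its Hessian, controls the remainder by Lipschitz continuity of the Hessian in the direction, and compares the quadratic form with the radial growth rate. However, the two steps that carry the weight are not supplied.

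For the cubic remainder you invoke $\sum_k k^2\hsigma_k^2=\BigO{1}$. This fails for $\sigma(x)=|x|$, which \cref{assum: activation} explicitly allows and which the main theorem needs for its arbitrary-$\eps$ statement: there $\hsigma_k^2=\Theta(k^{-5/2})$, so $\sum_k k^2\hsigma_k^2=\infty$. Even for smooth $\sigma$, differentiating the Hessian term $k(k-1)\tDelta_k[\bvv^{\otimes k-2}]$ once more in the direction naively brings down a third factor of $k$. The paper avoids both problems by working atom by atom. For a unit atom $\vs$ with $\rho=\vu^\top\vs$ one has $\norm{\vs-\rho\vu}^2=1-\rho^2$ and $\sup_{0\le\rho\le 1}(k-1)\rho^{k-2}(1-\rho^2)\le 2$. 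Hence every quadratic form of $\tJ_k$ (resp.\ of its tangential derivative) is at most $Ck$ (resp.\ $Ck^2$) times a sum of squared projections of $\vs$ onto $\vx,\vu$ (and $\vxi$), which averages to quadratic forms of $\tT_2$ (\cref{lem: tDelta and J bound}). For ReLU-type activations the paper instead uses the closed-form arc-cosine kernel. There $\widetilde\kappa'(\rho)-\rho\widetilde\kappa''(\rho)=\BigO{\rho^2/\sqrt{1-\rho^2}}$ cancels against $\norm{\vs-\rho\vu}\le\sqrt{1-\rho^2}$, giving a Lipschitz constant $\BigO{\norm{\tDelta_1}+M}$ with $M$ bounding $\norm{\tT_2}_\op$ (\cref{lem: relu jacob}). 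The same per-atom estimate is what gives $\norm{\tJ_k}_\op\lesssim k$, which your tail bound $\sum_{j>j_0}j\hsigma_{2j}^2\lesssim j_0^{-1/2}$ tacitly presupposes.

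For the quadratic form, the tangential block cannot be read off the stage-specific simplified dynamics. Those dynamics hold at particle locations only up to additive $\pm\BigO{\cdot}$ errors, with no derivative control. Nor can the value estimates of \cref{lem: stage 2 dynamic bound} be ``applied to derivatives'': differentiating $\langle\bvw,\bvv\rangle^{2j-1}$ produces new matrix-valued moments with an extra factor $2j-1$. The $1/\bv_i$ factors you worry about are an artifact of the coordinate form and do not appear in tensor form. What is needed is the exact Hermite-tensor Hessian of \cref{lem: hermite of g and J}, rewritten as
\[ \tJ_k(\bvv)=2\tDelta_k[\bvv^{\otimes k}]\mI+k(k-2)(\mI-\bvv\bvv^\top)\tT_k[\bvv^{\otimes k-2}](\mI-\bvv\bvv^\top)+k\bigl(\tT_k[\bvv^{\otimes k-2}]-\tT_k[\bvv^{\otimes k}]\mI\bigr)+(\text{teacher terms}). \]
The model term with the large coefficient $k(k-2)$ is PSD, so it enters $A_t$ with a favorable sign and is simply dropped. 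The next term is a small moment. The teacher terms are bounded by $k\sum_i\bv_i^k+k^2\max_i\bv_i^{k-2}$; together with the $G_i$ part of the moment term, this produces the $\BigO{\bckv_i^2}$ allowance for potential neurons. The result is $\tJ\succeq\sum_k 2\hsigma_k^2\tDelta_k[\bckvv^{\otimes k}]\mI-(\text{errors})$ in every direction at once. \cref{lem: ckvv dynamic} identifies the main part with $\frac{\rd}{\rd t}\log\norm{\ckvv}^2$, so no radial/tangential split is needed, and no separate handling of the cross terms (which involve the angular velocity of $\ckvv$).

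Finally, you have the windows for aligned neurons reversed. The lemma asserts $\err=\BigO{\omega^4/r}$ \emph{before} $\max_i\ha_i=1/r^2$, and the crude $\err\lesssim 1$ only afterwards, until $T_\gamma$; after $T_\gamma$, \cref{lem: stage 2.2 log spread} gives $\omega^9/\log m$. If you settle for $\err\lesssim1$ on the bulk of the window where $\gamma_t=m$ and the length is $\BigO{\log m/m}$, you get only a Gronwall factor $e^{\BigO{\log m}}=m^{\BigO{1}}$. That is too weak for \cref{lem: coupling error main}.
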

\begin{proof}
    From \cref{lem: smooth activation jacob} and \cref{lem: relu jacob} we know
    \begin{equation}\label{eq: At proof}
        \grad_\vv \L(\tldvtheta) - \grad_{\ckvv} \L(\tldvtheta)
        = \tJ(\bckvv) (\vv - \ckvv) + \BigO{1} \frac{\norm{\vv-\ckvv}^2}{\norm{\ckvv}}
    \end{equation}
    $\tJ(\bckvv)$ is the Jacobian given in \cref{lem: hermite of g and J} that has the form of
    \[
        \tJ(\bvv):=\sum_{k\ge 0}\hat\sigma_k^2\,\tJ_k(\bvv),
    \]
    where
    \[
    \tJ_k(\bvv)
    =
    (2-k)\tDelta_k[\bvv^{\otimes k}]\,I
    +
    k(2-k)\Big(
    \bvv\,\tDelta_k[\bvv^{\otimes k-1}]^\top
    +
    \tDelta_k[\bvv^{\otimes k-1}]\,\bvv^\top
    -
    \tDelta_k[\bvv^{\otimes k-1}]\bvv\bvv^\top
    \Big)
    +
    k(k-1)\tDelta_k[\bvv^{\otimes k-2}].
    \]
    As an example, we know all odd order $\tJ_k=0$ due to symmetry and for even order, we have the first few $\tJ_k$ as
    \begin{align*}
    \tJ_0(\bckvv)=&2\tDelta_0 I
               = 2 (\tldalpha -\alpha),\\
    \tJ_1(\bckvv)=&2\tDelta_1 = 0 \quad\text{(due to symmetry)},\\
    \tJ_2(\bckvv)=&2\tDelta_2
               = 2\left(\diag{
                   \tldE w_1^2,\ldots,\tldE w_d^2} - \sum_{i=1}^r \vw_i^*\vw_i^{*\top}\right).
    \end{align*}

    Recall 
    \[
        A_t = -2\gamma_t\langle \grad_{\vv}\L(\tldvtheta) - \grad_{\ckvv}\L(\tldvtheta),\vv - \ckvv\rangle
        = - 2\gamma_t\langle \tJ(\bckvv), (\vv-\ckvv)(\vv-\ckvv)^\top\rangle
        \pm \BigO{\gamma_t}\frac{\norm{\vv-\ckvv}^3}{\norm{\ckvv}}.
    \]
    where we use \eqref{eq: At proof}. In below, we go though the stages to simplify $\tJ$ to get the bound for $A_t$. We focus on ReLU in below. The same argument applies to smooth activation by \cref{lem: tDelta and J bound}, and it is often easier as the decay of $\hsigma_k^2$ is much faster.

    \paragraph{Stage 1}
    From \cref{lem: stage 1 jacob bound}, we know
    \begin{align*}
        &\langle \tJ(\bckvv), (\vv-\ckvv)(\vv-\ckvv)^\top\rangle
        - \sum_{k=0,2} \hsigma_k^2 \langle \tJ_k(\bckvv), (\vv-\ckvv)(\vv-\ckvv)^\top\rangle\\
        =& \sum_{k\ge4} \hsigma_k^2 \langle \tJ_k(\bckvv), (\vv-\ckvv)(\vv-\ckvv)^\top\rangle\\
        \myeq{a}& \sum_{k\ge4} \hsigma_k^2 \langle \mK_k, (\vv-\ckvv)(\vv-\ckvv)^\top\rangle
        \pm \BigO{\sum_{k\ge4} \frac{k^{-3/2}}{r} + k^{-1/2}\left(\frac{\log m}{r}\right)^{(k-2)/2}}\norm{\vv-\ckvv}^2\\
        \myge{b}& \left(\sum_{k\ge4} 2\hsigma_k^2 \tDelta_k[\bckvv^k]-\BigO{\frac{\log m}{r}}\right)\cdot \norm{\vv-\ckvv}^2
    \end{align*}
    where (a)(b) we use \cref{lem: stage 1 jacob bound} and $\hsigma_k^2=\Theta(k^{-5/2})$ from \cref{claim: hermite coeff}.

    Noticing that $\tldE w_i^2$ in $\tJ_2$ are balanced from \cref{lem: stage 1.1 main} and \cref{lem: stage 1.2 0th 2nd order dynamic} we have
    \begin{align*}
        A_t 
        \le& -2\gamma_t\left(\sum_{k\ge0} 2\hsigma_k^2 \tDelta_k[\bckvv^k]-\BigO{\frac{\log m}{r}}\right)\cdot \norm{\vv-\ckvv}^2
            + \BigO{1}\frac{\norm{\vv-\ckvv}^3}{\norm{\ckvv}}\\
        =& \gamma_t\left(\frac{\rd}{\rd t}\log \norm{\ckvv}^2
            +4\hsigma_2^2\tDelta_2[\bckvv^2] - 4\hsigma_2^2\tDelta_2[\overline{\vv-\ckvv}^2]
            +\BigO{\frac{\log m}{r}}
            \right)\cdot \norm{\vv-\ckvv}^2
            + \BigO{1}\frac{\norm{\vv-\ckvv}^3}{\norm{\ckvv}}
    \end{align*}
    where we use \cref{lem: ckvv dynamic}.

    Note that using the explict form of $\tDelta_2$ above and $\tldE w_i^2$ are balanced, we have
    \begin{align}\label{eq: J2 bound}
        4\hsigma_2^2\tDelta_2[\bckvv^2] - 4\hsigma_2^2\tDelta_2[\overline{\vv-\ckvv}^2]
        =& 4\hsigma_2^2\left(1-\frac{1}{r}\sum_{i\le r}\tldE w_i^2\right) \left( 
            \frac{\norm{\vvler-\ckvvler}^2}{\norm{\vv-\ckvv}^2} 
            - \frac{\norm{\ckvvler}^2}{\norm{\ckvv}^2}
            \right)
            \pm \BigO{\frac{\log d}{r}}\\
        \le& 4\hsigma_2^2\left(1-\frac{1}{r}\sum_{i\le r}\tldE w_i^2\right)    
            \pm \BigO{\frac{\log d}{r}}.
    \end{align}
    Combine the above, we get the estimate for Stage 1.

    \paragraph{Stage 2}
    For small-and-dense neuron $\ckvv\in \Sdense$, by \cref{lem: stage 2 jacob bound} we know
    \begin{align*}
        &\langle \tJ(\bckvv), (\vv-\ckvv)(\vv-\ckvv)^\top\rangle
        - \sum_{k=0,2} \hsigma_k^2 \langle \tJ_k(\bckvv), (\vv-\ckvv)(\vv-\ckvv)^\top\rangle\\
        =& \sum_{k\ge4} \hsigma_k^2 \langle \tJ_k(\bckvv), (\vv-\ckvv)(\vv-\ckvv)^\top\rangle\\
        \myge{a}& \sum_{k\ge4} 2\hsigma_k^2 \tDelta_k[\bckvv^k]\norm{\vv-\ckvv}^2
        \pm \BigO{\sum_{k\ge4} \frac{k^{-3/2}\omega^4}{r} + k^{-1/2}\left(\frac{\omega\log m}{r}\right)^{(k-2)/2}}\norm{\vv-\ckvv}^2\\
        \ge& \left(\sum_{k\ge4} 2\hsigma_k^2 \tDelta_k[\bckvv^k]-\BigO{\frac{\omega\log m}{r}}\right)\cdot \norm{\vv-\ckvv}^2
    \end{align*}
    where (a) we use \cref{lem: stage 2 jacob bound} and $\hsigma_k^2=\Theta(k^{-5/2})$ from \cref{claim: hermite coeff}.

    Using the balance of $\tldE w_i^2$ in \cref{lem: stage 2.1 IH}\ref{item: stage 2.1 IH population 2nd order balance} and \cref{lem: stage 2.2 log spread}, and $\norm{\ckvvler}/\norm{\ckvv}=1-\BigO{\omega^5/r}$ in \cref{lem: stage 2 IH}\ref{item: stage 2 IH reg} to bound $\tJ_2$ term as in \eqref{eq: J2 bound}, we know the contribution of $\tJ_2$ term is upper bounded by $\err$ term
    \begin{align*}
        A_t 
        \le& -2\left(\sum_{k\ge0} 2\hsigma_k^2 \tDelta_k[\bckvv^k]-\err\right)\cdot \norm{\vv-\ckvv}^2
            + \BigO{1}\frac{\norm{\vv-\ckvv}^3}{\norm{\ckvv}}
        = \left(\frac{\rd}{\rd t}\log \norm{\ckvv}^2+\err\right)\cdot \norm{\vv-\ckvv}^2
            + \BigO{1}\frac{\norm{\vv-\ckvv}^3}{\norm{\ckvv}},
    \end{align*}
    where $\err=\BigO{\frac{\omega\log m}{r}}$ for $T_{1}\le t\le T_{21}$; $\err\lesssim 1$ for $T_{21}\le t\le T_\gamma$; $\err=\BigO{\frac{\omega^9}{\log m}}$ for $T_\gamma\le t\le T_2$.

    For potential neuron $\ckvv\in\Spoti\setminus G_i$ or $\ckvv\in G_i$ before $T_{21}$, similarly by \cref{lem: stage 2 jacob bound} we know
    \begin{align*}
        &\langle \tJ(\bckvv), (\vv-\ckvv)(\vv-\ckvv)^\top\rangle
        - \sum_{k=0,2} \hsigma_k^2 \langle \tJ_k(\bckvv), (\vv-\ckvv)(\vv-\ckvv)^\top\rangle\\
        \myge{a}& \sum_{k\ge4} 2\hsigma_k^2 \tDelta_k[\bckvv^k]\norm{\vv-\ckvv}^2
        \pm \BigO{
            \sum_{k\ge4} \frac{k^{-3/2}\omega^4}{r} 
            + k^{-1/2}\left(\frac{\omega\log m}{r}\right)^{(k-2)/2}
            + k^{-3/2}\bckv_i^2
            + k^{-1/2} \bckv_i^{k-2}
            }
            \norm{\vv-\ckvv}^2\\
        \ge& \left(\sum_{k\ge4} 2\hsigma_k^2 \tDelta_k[\bckvv^k]-\BigO{\frac{\omega\log m}{r}+\bckv_i^2}\right)\cdot \norm{\vv-\ckvv}^2
    \end{align*}
    where (a) we use \cref{lem: stage 2 jacob bound} and $\hsigma_k^2=\Theta(k^{-5/2})$ from \cref{claim: hermite coeff}.

    Again similar to small-and-dense case above, we have
    \begin{align*}
        A_t 
        \le& \left(\frac{\rd}{\rd t}\log \norm{\ckvv}^2+\err+\BigO{\bv_i^2}\right)\cdot \norm{\vv-\ckvv}^2
            + \BigO{1}\frac{\norm{\vv-\ckvv}^3}{\norm{\ckvv}},
    \end{align*}
    where $\err$ is defined in the above small-and-dense case.

    Lastly, for $\ckvv\in G_i$ and $t\ge T_{21}$, we use the improved bound in \cref{lem: stage 2 jacob bound} to get
    \begin{align*}
        &\langle \tJ(\bckvv), (\vv-\ckvv)(\vv-\ckvv)^\top\rangle
        - \sum_{k=0,2} \hsigma_k^2 \langle \tJ_k(\bckvv), (\vv-\ckvv)(\vv-\ckvv)^\top\rangle\\
        \myge{a}& \sum_{k\ge4} 2\hsigma_k^2 \tDelta_k[\bckvv^k]\norm{\vv-\ckvv}^2
        \pm \BigO{
            \sum_{k\ge4} \frac{k^{-3/2}\omega^4}{r} 
            + k^{-3/2}\min\{k\sqrt{\epsdir},1\}
            }
            \norm{\vv-\ckvv}^2\\
        \ge& \left(\sum_{k\ge4} 2\hsigma_k^2 \tDelta_k[\bckvv^k]-\BigO{\frac{\omega^4}{r}}\right)\cdot \norm{\vv-\ckvv}^2
    \end{align*}
    where (a) we use \cref{lem: stage 2 jacob bound} and $\hsigma_k^2=\Theta(k^{-5/2})$ from \cref{claim: hermite coeff}.

    Again similar to the small-and-dense case, we know
    \begin{align*}
        A_t 
        \le& \gamma_t\left(\frac{\rd}{\rd t}\log \norm{\ckvv}^2+\err\right)\cdot \norm{\vv-\ckvv}^2
            + \BigO{\gamma_t}\frac{\norm{\vv-\ckvv}^3}{\norm{\ckvv}},
    \end{align*}
    where $\err=\BigO{\frac{\omega^4}{r}}$ for $T_{21}\le t\le T_\gamma$ and before $\max_i \ha_i =1/r^2$ and $\err\lesssim 1$ otherwise (which only takes at most $o(\frac{1}{\gamma}\log m)$ time as the spread of $\ha_i$ is $m^{o(1)}$); $\err=\BigO{\frac{\omega^9}{\log m}}$ for $T_\gamma\le t\le T_2$.
\end{proof}

\begin{lemma}\label{lem: ckvv dynamic}
    Dynamic of $\norm{\ckvv}$ is given by
    \begin{align*}
        \frac{\rd}{\rd t}\norm{\ckvv}^2
        = -\gamma_t\sum_k 2\hsigma_k^2 \tDelta_k[\bckvv^{\otimes k}]\norm{\ckvv}^2.
    \end{align*}
\end{lemma}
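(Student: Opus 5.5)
This identity follows from the chain rule applied to $\norm{\ckvv}^2$, using the gradient formula of \cref{claim: loss and gradient} written in residual-tensor form. Set $\tDelta_k:=\tldtT_k-\tT_k^*$, the residual of the symmetrized network's $k$-th moment tensor. This sign makes $\tDelta_0=\tldalpha-\alpha$, consistent with $\tJ_0$ in the proof of \cref{lem: At bound}. Since $\ckvv_i=\tldvv_{i,\id}$ follows the symmetrized population dynamics \eqref{eq:coupling dynamic apdx}, we have
\[
\frac{\rd}{\rd t}\norm{\ckvv}^2=2\langle\ckvv,\dot{\ckvv}\rangle=-2\gamma_t\bigl\langle\ckvv,\grad_{\ckvv}\L(\tldvtheta)\bigr\rangle .
\]
The $-\lambda\ckvv$ term contributes $-2\lambda\norm{\ckvv}^2$. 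I will either absorb it into the $k=0$ term or, following the convention in the $A_t$ expansion above, treat it separately. I expect the statement is meant with $\lambda$ absorbed or suppressed, so the computation below concerns the loss part.

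The main step is to evaluate $\langle\ckvv,\grad_{2j,\ckvv}\rangle$ order by order, writing $k=2j$. The order-$k$ gradient is
\[
-\grad_{k,\ckvv}=\hsigma_k^2\bigl(k\,(-\tDelta_k)(\bckvv^{\otimes k-1})\norm{\ckvv}-(k-2)\,(-\tDelta_k)(\bckvv^{\otimes k})\ckvv\bigr).
\]
Pair it with $\ckvv=\norm{\ckvv}\bckvv$, and use $\langle \tDelta_k(\bckvv^{\otimes k-1}),\bckvv\rangle=\tDelta_k[\bckvv^{\otimes k}]$ together with $\langle\ckvv,\ckvv\rangle=\norm{\ckvv}^2$. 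This gives
\[
\langle\ckvv,-\grad_{k,\ckvv}\rangle=-\hsigma_k^2\bigl(k-(k-2)\bigr)\tDelta_k[\bckvv^{\otimes k}]\norm{\ckvv}^2=-2\hsigma_k^2\tDelta_k[\bckvv^{\otimes k}]\norm{\ckvv}^2 .
\]
This is Euler's identity for the 2-homogeneous map $\vv\mapsto\norm{\vv}^2\sigma(\bvv^\top\vx)$. The $k=0$ term is $2\hsigma_0^2(\alpha-\tldalpha)\norm{\ckvv}^2=-2\hsigma_0^2\tDelta_0\norm{\ckvv}^2$, so it has the same form. Odd $k$ drop out because $\hsigma_k=0$ for odd $k$ by \cref{assum: activation}. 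Even without that, they would drop out because $\tldtT_k$ vanishes for odd $k$ by the conditional symmetry of $\tldmu$ (\cref{claim: tldw dynamic}).

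Summing over $k$ and multiplying by $2\gamma_t$ yields the stated formula, up to the overall factor convention. The only point needing care, and the step I would double-check, is the normalization. The direct computation gives $\frac{\rd}{\rd t}\norm{\ckvv}^2=-2\gamma_t\sum_k 2\hsigma_k^2\tDelta_k[\bckvv^{\otimes k}]\norm{\ckvv}^2$. So either the lemma's constant absorbs the factor 2, writing the result for $\frac{\rd}{\rd t}\log\norm{\ckvv}$-type quantities as used in \cref{lem: At bound}, or one checks it against how $\tJ_0=2\tDelta_0$ enters. I would reconcile this by matching against \cref{lem: stage 1.1 dynamic}: there $\frac{\rd}{\rd t}\norm{\tldvv}_2=\norm{\tldvv}_2(2\hsigma_0^2(r-\tldalpha)-\lambda+\dots)$, which fixes the constant.

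Finally, I would justify interchanging the sum over $k$ with the inner product. This uses $\sum_k\hsigma_k^2\norm{\tDelta_k}<\infty$, which follows from $\sum_k\hsigma_k^2<\infty$ and the boundedness of the moment tensors in injective norm, $|\tDelta_k[\bvv^{\otimes k}]|\le\tldalpha+r$.
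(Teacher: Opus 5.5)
Your derivation is correct and is essentially the paper's argument. Both rest on the 2-homogeneity of $\vv\mapsto\norm{\vv}^2\sigma(\bvv^\top\vx)$. The paper writes $g(\vv)=\tJ(\bvv)\vv$ and reads off $\bvv^\top\tJ_k(\bvv)\bvv=(2-k)\tDelta_k[\bvv^{\otimes k}]+k\tDelta_k[\bvv^{\otimes k}]=2\tDelta_k[\bvv^{\otimes k}]$ from \cref{lem: hermite of g and J}. You apply Euler's identity one derivative earlier, pairing the order-$k$ gradient with $\ckvv$, which spares you the convergence of the Hessian series.

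What you should not do is hedge on the constant. The factor $2$ you found is real, not a convention, and the sentence ``multiplying by $2\gamma_t$ yields the stated formula'' is false. Since $\frac{\rd}{\rd t}\norm{\ckvv}^2=2\langle\ckvv,\dot{\ckvv}\rangle$, the correct identity is
\[
\frac{\rd}{\rd t}\norm{\ckvv}^2=-2\lambda\norm{\ckvv}^2-2\gamma_t\sum_k 2\hsigma_k^2\,\tDelta_k[\bckvv^{\otimes k}]\,\norm{\ckvv}^2 .
\]
So the lemma's displayed right-hand side is the loss part of $\norm{\ckvv}\frac{\rd}{\rd t}\norm{\ckvv}=\frac12\frac{\rd}{\rd t}\norm{\ckvv}^2$, not of $\frac{\rd}{\rd t}\norm{\ckvv}^2$.

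The paper's proof reaches the stated form only because its first line, $\frac{\rd}{\rd t}\norm{\ckvv}^2=-\gamma_t\langle\grad_{\ckvv}\L(\tldvtheta),\ckvv\rangle$, drops the chain-rule factor $2$. The weight decay also disappears there, when $\grad$ is replaced by $\tJ$, which contains no $\lambda$.

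Your proposed cross-check settles the question in your favour. \cref{lem: stage 1.1 dynamic} gives $\frac{\rd}{\rd t}\norm{\tldvv}^2=2\norm{\tldvv}^2\bigl(-\lambda-2\hsigma_0^2\tDelta_0+\dots\bigr)$. Likewise, the proof of \cref{lem: At bound} identifies $-2\gamma_t\sum_k 2\hsigma_k^2\tDelta_k[\bckvv^{\otimes k}]$ with $\frac{\rd}{\rd t}\log\norm{\ckvv}^2$. That matches your constant (up to the $\lambda$ term), not the literal statement. So state and prove the corrected identity.

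Two minor points:
\begin{itemize}
\item Summing the order-$k$ gradients termwise needs $\sum_k k\hsigma_k^2<\infty$, since each carries a factor $k$. This holds under \cref{assum: activation} and \cref{claim: hermite coeff}. Alternatively, bypass it by applying Euler directly to the scalar 2-homogeneous function $G(\vv)=\E_\vx[(\tldf-f_*)\norm{\vv}^2\sigma(\bvv^\top\vx)]$ with $\tldf$ frozen.
\item Your aside that odd orders would vanish by conditional symmetry even if $\hsigma_k\neq0$ is wrong, because $\tT_k^*\neq0$ for odd $k$. Nothing needs to vanish, though: the per-order identity holds for every $k$.
\end{itemize}
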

\begin{proof}
    We give a direct proof based on Jacobian $\tJ$. Note that due to the 2-homogeneous of the activation $\norm{\vv}^2\sigma(\bvv^\top\vx)$ we have
    \begin{align*}
        \frac{\rd}{\rd t}\norm{\ckvv}^2
        = -\gamma_t\langle\grad_{\ckvv}\L(\tldvtheta),\ckvv\rangle
        = -\gamma_t\langle\tJ(\bckvv),\ckvv\ckvv^\top\rangle
    \end{align*}
    Then using the exact form of $\tJ_k$ given in \cref{lem: hermite of g and J} we have
    \begin{align*}
        \frac{\rd}{\rd t}\norm{\ckvv}^2
        = -\gamma_t\sum_k 2\hsigma_k^2 \tDelta_k[\bckvv^{\otimes k}]\norm{\ckvv}^2.
    \end{align*}
\end{proof}

The next two lemma below gives the bound on Jacobian $\tJ_k$ in Stage 1 and Stage 2. Note that the bound here is likely not tight, but it is enough give our results. 
\begin{lemma}[Stage 1 Jacobian bound]\label{lem: stage 1 jacob bound}
    For Stage 1 that $t\le T_1$, we have for any even $k\ge 4$ and neuron $\tldvv$,
    \[
        \norm{\tJ_k(\btldvv)-\mK_k}_{\op}
        \lesssim \frac{k}{r} + k^2 \left(\frac{\log m}{r}\right)^{(k-2)/2}
    \]
    where $\mK_k:=2\tDelta_k[\btldvv^{\otimes k}]\mI
          + k(k-2)\left(\mI-\btldvv\btldvv^\top\right) 
            \tT_k[\btldvv^{\otimes k-2}]
            \left(\mI-\btldvv\btldvv^\top\right)
            \succ 2\tDelta_k[\btldvv^{\otimes k}]\mI$.
\end{lemma}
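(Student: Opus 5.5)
We will start from the explicit form of $\tJ_k(\bvv)$ in \cref{lem: hermite of g and J}, with $\tDelta_k=\tT_k-\tT_k^*$ (sign chosen to match $\tJ_0$ and $\tJ_2$ above) and $\bvv=\btldvv$. The plan is to isolate the pieces that make up $\mK_k$ and to show that every other piece has small operator norm.

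\textbf{Step 1: rewrite the Jacobian.} We expand $\tJ_k$ into four groups of terms:
\begin{itemize}
\item the scalar term $(2-k)\tDelta_k[\bvv^{\otimes k}]\mI$;
\item the rank-two terms $k(2-k)\big(\bvv\,\tDelta_k[\bvv^{\otimes k-1}]^\top+\tDelta_k[\bvv^{\otimes k-1}]\bvv^\top-\tDelta_k[\bvv^{\otimes k-1}]\bvv\bvv^\top\big)$;
\item the matrix term $k(k-1)\tDelta_k[\bvv^{\otimes k-2}]$, with $\tDelta_k=\tT_k-\tT_k^*$;
\item the target contribution $-k(k-1)\tT_k^*[\bvv^{\otimes k-2}]$.
\end{itemize}
We write $P=\mI-\bvv\bvv^\top$ and split $\tT_k[\bvv^{\otimes k-2}]$ as $P\,\cdot\,P$ plus parts involving $\bvv$. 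Using $\tT_k[\bvv^{\otimes k-2}]\bvv=\tT_k[\bvv^{\otimes k-1}]$ and $\bvv^\top\tT_k[\bvv^{\otimes k-2}]\bvv=\tT_k[\bvv^{\otimes k}]$, the $\bvv$-directional parts should combine with the rank-two terms and the scalar term. The target is an algebraic identity of the form
\[
\tJ_k-\mK_k=E_1+E_2,
\]
where $E_1$ collects all terms involving $\tT_k^*$ and $E_2$ collects rank-$\le2$ terms built from $\tT_k[\bvv^{\otimes k-1}]-\tT_k[\bvv^{\otimes k}]\bvv$. In particular, the coefficient of the $P\tT_kP$ block must become $k(k-2)$ after the absorptions; this bookkeeping is routine and is the first thing we will verify.

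\textbf{Step 2: bound the target part $E_1$.} Since $\tT_k^*=\sum_{j\le r}\ve_j^{\otimes k}$, we have
\[
\tT_k^*[\bvv^{\otimes k-2}]=\diag{\bv_1^{k-2},\dots,\bv_r^{k-2},0,\dots},
\]
whose operator norm is $\max_{j\le r}\bv_j^{k-2}$. In Stage 1, \cref{lem: stage 1.1 IH implication} and \cref{lem: stage 1.2 IH implication} give $\bv_j^2\lesssim\log m/r$, so this norm is at most $(\log m/r)^{(k-2)/2}$. With the prefactor $k(k-1)$ this gives the $k^2(\log m/r)^{(k-2)/2}$ term. The remaining $\tT_k^*$ pieces, namely $\tT_k^*[\bvv^{\otimes k-1}]$ and $\tT_k^*[\bvv^{\otimes k}]$, carry extra factors of $\bv_j$ and are bounded in the same way.

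\textbf{Step 3: bound the network part $E_2$.} We need $\norm{\tT_k[\bvv^{\otimes k-1}]}$ and the off-$\bvv$ component of this vector, and we expect these to be the main obstacle. Writing
\[
\tT_k[\bvv^{\otimes k-1}]=\tldE\big[\norm{\vw}^2\langle\bvw,\bvv\rangle^{k-1}\bvw\big],
\]
we estimate its projection onto a unit $\vu\perp\bvv$ by Cauchy--Schwarz:
\[
\tldE\big[\norm{\vw}^2\langle\bvw,\bvv\rangle^{k-1}\langle\bvw,\vu\rangle\big]\le\Big(\tldE\big[\norm{\vw}^2\langle\bvw,\bvv\rangle^{2k-2}\big]\Big)^{1/2}\Big(\tldE\big[\norm{\vw}^2\langle\bvw,\vu\rangle^{2}\big]\Big)^{1/2}.
\]
The high-moment factor is controlled by \cref{lem: high order moment bound}, giving $(1/r)^{(k-1)/2}$ up to $\polylog$ factors. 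The second factor is at most $\norm{\tT_2}_{\op}^{1/2}=O(1)$ in Stage 1.2 (and $O(\sqrt{r/d})$ in Stage 1.1, where there is more slack). Multiplying by $k$, this should be $\lesssim k/r$ for $k\ge4$.

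\textbf{Step 4: positivity.} For the claim $\mK_k\succ 2\tDelta_k[\bvv^{\otimes k}]\mI$, it suffices that $P\,\tT_k[\bvv^{\otimes k-2}]\,P\succeq0$. This holds because $\tT_k[\bvv^{\otimes k-2}]=\tldE\big[\norm{\vw}^2\langle\bvw,\bvv\rangle^{k-2}\bvw\bvw^\top\big]$ and $k$ is even, so $\langle\bvw,\bvv\rangle^{k-2}\ge0$ and the matrix is PSD.

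The delicate point is Step 1: the absorption must produce exactly $2\tDelta_k[\bvv^{\otimes k}]$ on the identity. If it leaves a $(2-k)$ coefficient instead, we will move the residual scalar difference into $E_2$ using $|\tT_k[\bvv^{\otimes k}]|\lesssim 1/r$ from \cref{lem: high order moment bound}.
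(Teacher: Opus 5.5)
There is a genuine gap in Step~1: the identity you aim for is false, and the term it drops is the one that carries most of the work. Working directly from the form in \cref{lem: hermite of g and J}, with $\tDelta_k=\tT_k-\tT_k^*$, one gets exactly
\[
\tJ_k(\bvv)-\mK_k
= k\bigl(\tT_k[\bvv^{\otimes k-2}]-\tT_k[\bvv^{\otimes k}]\mI\bigr)
+k\tT_k^*[\bvv^{\otimes k}]\mI
-k(k-2)(\mI-\bvv\bvv^\top)\tT_k^*[\bvv^{\otimes k-2}](\mI-\bvv\bvv^\top)
-k\tT_k^*[\bvv^{\otimes k-2}].
\]
So the network part of the residual contains the full matrix $k\tT_k[\bvv^{\otimes k-2}]$. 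It is not a rank-$\le 2$ object built from $\tT_k[\bvv^{\otimes k-1}]-\tT_k[\bvv^{\otimes k}]\bvv$.

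You can see the same thing from the rank-two form. Write $M=\tT_k[\bvv^{\otimes k-2}]$ and use $M=(\mI-\bvv\bvv^\top)M(\mI-\bvv\bvv^\top)+\bvv(M\bvv)^\top+(M\bvv)\bvv^\top-(\bvv^\top M\bvv)\bvv\bvv^\top$. Absorbing the $\bvv$-directional parts of $k(k-1)M$ then leaves coefficient $k(k-1)$ on the projected block, not $k(k-2)$. The surplus $k(\mI-\bvv\bvv^\top)M(\mI-\bvv\bvv^\top)$ is exactly what your $E_2$ omits.

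Your Step~3 estimate on $\tT_k[\bvv^{\otimes k-1}]$ is fine, but it controls the wrong quantity. With the paper's form of $\tJ_k$ it is not needed at all.

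What you actually must show is
\[
\norm{\tT_k[\bvv^{\otimes k-2}]}_{\op}=\sup_{\norm{\vu}=1}\tldE\bigl[\norm{\vw}^2\langle\bvw,\bvv\rangle^{k-2}\langle\bvw,\vu\rangle^2\bigr]\lesssim \frac1r .
\]
For $k\ge 6$ this is easy: drop $\langle\bvw,\vu\rangle^2\le 1$, use monotonicity in the exponent, and apply \cref{lem: high order moment bound}. This is what the paper does, and it gives $2k\,\tldE[\norm{\vw}^2\langle\bvw,\bvv\rangle^{k-2}]\lesssim k/r$.

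For $k=4$ the crude bound only yields $\bvv^\top\tT_2\bvv$. This is $\Theta(1)$ at the end of Stage~1.2, since $\tldE w_i^2\approx 1-\lambda_0/(2\hsigma_0^2)$ for $i\le r$ and $\bvv$ lies essentially in the target subspace. It would therefore give $\tJ_4-\mK_4=O(1)$ instead of $O(1/r)$. You need an argument that exploits the spread of $\bvw$ in the $\vu$ direction, for instance
\[
\tldE\bigl[\norm{\vw}^2\langle\bvw,\bvv\rangle^{2}\langle\bvw,\vu\rangle^2\bigr]\le\Bigl(\tldE\bigl[\norm{\vw}^2\langle\bvw,\bvv\rangle^{4}\bigr]\Bigr)^{1/2}\Bigl(\tldE\bigl[\norm{\vw}^2\langle\bvw,\vu\rangle^{4}\bigr]\Bigr)^{1/2}\lesssim\frac1r.
\]
This bound uses that \cref{lem: high order moment bound} holds for arbitrary $\vu$. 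The paper instead does a direct computation for near-Gaussian neurons.

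With this estimate in place, your Steps~2 and~4 complete the proof. The target-term bound via $\max_j\bv_j^{k-2}$ and the positivity of the projected block are both correct.
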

\begin{proof}
    We will drop the tilde to use $\vv$ instead of $\tldvv$ during the proof.
    
    By \cref{lem: hermite of g and J} we know
    \begin{align*}
    \tJ_k(\bvv)
    =& (2-k) \tDelta_k[\bvv^{\otimes k}]\mI
          + k(k-2)\left(\mI-\bvv\bvv^\top\right) 
            \tDelta_k[\bvv^{\otimes k-2}]
            \left(\mI-\bvv\bvv^\top\right)
          +k\tDelta_k[\bvv^{\otimes k-2}]\\
    =& 2\tDelta_k[\bvv^{\otimes k}]\mI
          + k(k-2)\left(\mI-\bvv\bvv^\top\right) 
            \tT_k[\bvv^{\otimes k-2}]
            \left(\mI-\bvv\bvv^\top\right)\\
         &+k\tT_k[\bvv^{\otimes k-2}]-k\tT_k[\bvv^{\otimes k}]\mI\\
         &+ k\tT_k^*[\bvv^{\otimes k}]\mI-k(k-2)\left(\mI-\bvv\bvv^\top\right) 
            \tT_k^*[\bvv^{\otimes k-2}]
            \left(\mI-\bvv\bvv^\top\right)
          -k\tT_k^*[\bvv^{\otimes k-2}].
    \end{align*}
    where recall $\tDelta=\tT_k - \tT_k^*$. Note that the second term in first line is always p.s.d..

    For the second line, we have for $k\ge 6$
    \begin{align*}
        \norm{k\tT_k[\bvv^{\otimes k-2}]-k\tT_k[\bvv^{\otimes k}]\mI}_{\op}
        \le 2k \tldE[\norm{\vw}^2\langle\bvw,\bvv\rangle^{k-2}]
        \lesssim \frac{k}{r}
    \end{align*}
    where we use \cref{lem: high order moment bound}, \cref{lem: stage 1.1 IH implication} and \cref{lem: stage 1.2 IH implication}.

    For $k=4$, one can explicitly write it out as
    \begin{align*}
        \norm{k\tT_k[\bvv^{\otimes k-2}]-k\tT_k[\bvv^{\otimes k}]\mI}
        \lesssim \norm{\tldE[\norm{\vw}^2\langle\bvw,\bvv\rangle^2\bvw\bvw^\top]}+\tldE[\norm{\vw}^2\langle\bvw,\bvv\rangle^4]
        \lesssim \frac{1}{r}
    \end{align*}
    where we use \cref{lem: high order moment bound}, \cref{lem: stage 1.1 IH implication} and \cref{lem: stage 1.2 IH implication}, and a direct bound on $\tldE[\norm{\vw}^2\langle\bvw,\bvv\rangle^2\bvw\bvw^\top]$ when $\vw$ is Gaussian.
    
    We bound the third line term below as
    \begin{align*}
        &k\tT_k^*[\bvv^{\otimes k}]\mI-k(k-2)\left(\mI-\bvv\bvv^\top\right) 
            \tT_k^*[\bvv^{\otimes k-2}]
            \left(\mI-\bvv\bvv^\top\right)
          -k\tT_k^*[\bvv^{\otimes k-2}]\\
        =& k\sum_i\bv_i^k \mI
         -k(k-2)\left(\mI-\bvv\bvv^\top\right) 
            \diag{\bv_1^{k-2},\ldots,\bv_d^{k-2}}
            \left(\mI-\bvv\bvv^\top\right)
          -k\diag{\bv_1^{k-2},\ldots,\bv_d^{k-2}}.
    \end{align*}
    Thus, a direct bound is
    \begin{align*}
        &\norm{k\tT_k^*[\bvv^{\otimes k}]\mI-k(k-2)\left(\mI-\bvv\bvv^\top\right) 
            \tT_k^*[\bvv^{\otimes k-2}]
            \left(\mI-\bvv\bvv^\top\right)
          -k\tT_k^*[\bvv^{\otimes k-2}]}_{\op}\\
        \lesssim& k \sum_i \bv_i^k + k^2 \max_i \bv_i^{k-2}
        \lesssim k \left(\frac{\log m}{r}\right)^{k/2}+k^2 \left(\frac{\log m}{r}\right)^{(k-2)/2},
    \end{align*}
    where we use $k\ge 4$ and the bound in \cref{lem: stage 1.1 IH implication} and \cref{lem: stage 1.2 IH implication}. In particular, $\sum_i\bv_i^k\le \max_i\bv_i^{k-4}\sum_i\bv_i^4$.

\end{proof}

\begin{lemma}[Stage 2 Jacobian bound]\label{lem: stage 2 jacob bound}
    For Stage 2 that $T_1\le t\le T_2$, we have for any even $k\ge 4$ and neuron $\tldvv$,
    \[
        \tJ_k(\btldvv)
        \succ 2\tDelta_k[\btldvv^{\otimes k}]\mI
            \pm \BigO{\frac{k\omega^4}{r} + k^2 \left(\frac{\omega\log m}{r}\right)^{(k-2)/2} + (k\ha_i\bv_i^2 + k^2\bv_i^{k-2})\ind_{\tldvv\in \Spoti}}
    \]
    Moreover, for $\tldvv\in G_i$ and $t\ge T_{21}$, we can further improved to
    \[
        \tJ_k(\btldvv)
        \succ 2\tDelta_k[\btldvv^{\otimes k}]\mI
            \pm \BigO{\frac{k\omega^4}{r}+\min\{k^2\sqrt{\epsdir},k\}}
    \]
\end{lemma}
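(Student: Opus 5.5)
We follow the same template as \cref{lem: stage 1 jacob bound}. By \cref{lem: hermite of g and J} we split $\tJ_k(\btldvv)$ into three groups: (i) the term $2\tDelta_k[\btldvv^{\otimes k}]\mI$ plus the p.s.d.\ piece $k(k-2)(\mI-\btldvv\btldvv^\top)\tT_k[\btldvv^{\otimes k-2}](\mI-\btldvv\btldvv^\top)$, which we simply drop from below; (ii) the network-moment remainder $k\tT_k[\btldvv^{\otimes k-2}]-k\tT_k[\btldvv^{\otimes k}]\mI$; and (iii) the target-moment piece
\[
k\sum_i\bv_i^k\,\mI-k(k-2)(\mI-\btldvv\btldvv^\top)\diag{\bv_1^{k-2},\ldots,\bv_d^{k-2}}(\mI-\btldvv\btldvv^\top)-k\,\diag{\bv_1^{k-2},\ldots,\bv_d^{k-2}}.
\]
Since $\tT_k\succeq0$ in the relevant sense, the lower bound then reduces to bounding the operator norms of (ii) and (iii).

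For (ii), we split the population $\tldmu$ into $\Sdense$ and $\Spot=\cup_j\Spotj$, exactly as in the proof of \cref{lem: stage 2 dynamic bound}. For the dense part we use \cref{lem: stage 2 IH}\ref{item: stage 2 IH neuron dir} together with \cref{lem: stage 2 IH implication} and \cref{lem: high order moment bound}. For $k\ge6$ this gives $\tldE_{\Sdense}[\norm{\vw}^2\langle\bvw,\btldvv\rangle^{k-2}]\lesssim\omega^4/r$, and for $k=4$ we write out $\tldE[\norm{\vw}^2\langle\bvw,\bvv\rangle^2\bvw\bvw^\top]$ explicitly and bound it by $\omega^4/r$ using the coordinate bounds $\norm{\bbvvler}_\infty\lesssim\omega\log m/r$. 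For the potential part, \cref{lem: stage 2 IH}\ref{item: stage 2 IH norm bound or basis like} says that non-aligned potential neurons have norm at most $\sigma_1$ and total mass $\lesssim\sigma_1^2 m^{o(1)}/m$, which is negligible. Aligned neurons in $G_j$ contribute $\ha_j\bv_j^{k-2}\ve_j\ve_j^\top$ up to $O(k\sqrt{\epsdir})$ by \cref{lem: <>2j bound}. For $j\ne i$ with $\tldvv\in\Spoti$ (or $\tldvv\in\Sdense$), $\bv_j^{k-2}\le\bv_j^2\lesssim\omega\log m/r$, so these terms are absorbed. The only surviving term is $k\ha_i\bv_i^2\ind_{\tldvv\in\Spoti}$, which is at most $k\ha_i\bv_i^{k-2}\le k\ha_i\bv_i^2$.

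For (iii), we use the same crude bound as in Stage 1, namely $k\sum_l\bv_l^k+k^2\max_l\bv_l^{k-2}$. For $\tldvv\in\Sdense$, \cref{lem: stage 2 IH implication}\ref{item: stage 2 IH implication small and dense dir}\ref{item: stage 2 IH implication sum v4} gives $k\,\omega^2/r+k^2(\omega\log m/r)^{(k-2)/2}$. For $\tldvv\in\Spoti$, the coordinate $i$ contributes an additional $k^2\bv_i^{k-2}$. Together with (ii), this yields the first display.

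The improved bound for $\tldvv\in G_i$ after $T_{21}$ is the main obstacle, because $\bv_i^{k-2}\approx1$ and the crude bound becomes $O(k^2)$. Instead we must exhibit cancellation among the target term, the aligned-mass term, and $\tDelta_k[\bv^{\otimes k}]$. We write $\btldvv=\bv_i\ve_i+\vu$ with $\norm{\vu}^2\le\epsdir$ (by \cref{lem: stage 2 IH}\ref{item: stage 2 IH stability G_i}) and compare $\tJ_k(\btldvv)$ with $\tJ_k(\ve_i)$. At $\ve_i$, the projector $(\mI-\ve_i\ve_i^\top)$ annihilates the $e_i$ direction of both $\diag{\bv^{k-2}}$ and the $G_i$ rank-one term. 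The residual $k\bigl((1-\ha_i)\ve_i\ve_i^\top - (1-\ha_i)\mI\bigr)$ combines with the $\mI$ terms to reproduce $2\tDelta_k[\ve_i^{\otimes k}]\mI$, up to a p.s.d.\ part and $O(k\omega^4/r)$ from the dense mass. The perturbation from $\vu$ is Lipschitz in $\btldvv$ with constant $O(k^2)$ in each $\bv^{k-2}$ entry, giving $O(k^2\sqrt{\epsdir})$. On the other hand, each term is trivially bounded by $O(k)$ in operator norm once the $k(k-2)$ block is recognized as $\succeq -O(k)$ after the projection, since its entries are at most $\bv_l^{k-2}$ on a space orthogonal to $\approx\ve_i$. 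Taking the minimum gives $\min\{k^2\sqrt{\epsdir},k\}$. The delicate point I expect is verifying that this $k(k-2)$ projected block really has norm $O(k)$ rather than $O(k^2)$ for general $k$; I would check this using $\max_{l\ne i}\bv_l^2\le\epsdir$.
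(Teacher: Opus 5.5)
For the first display your argument is the paper's: the same split of $\tJ_k$ from \cref{lem: hermite of g and J}, dropping the p.s.d.\ network block $k(k-2)(\mI-\bvv\bvv^\top)\tT_k[\bvv^{\otimes k-2}](\mI-\bvv\bvv^\top)$, splitting $k\tT_k[\bvv^{\otimes k-2}]-k\tT_k[\bvv^{\otimes k}]\mI$ over $\Sdense$ and $\Spot$, and the crude bound $k\sum_l\bv_l^k+k^2\max_l\bv_l^{k-2}$ on the target piece. Two tightenings are needed. Do not apply \cref{lem: <>2j bound} with its $j$-dependent error under the prefactor $k$: that leaves $k^2\sqrt{\epsdir}$, which is not among the first display's error terms. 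Also do not weaken $\bv_j^{k-2}$ to $\bv_j^2$: that leaves $k\,\omega\log m/r$, which is not absorbed for $k\ge6$. As in the paper, use $\langle\bvw,\bvv\rangle^{k-2}\le\langle\bvw,\bvv\rangle^4$ for $k\ge6$. Then the aligned mass contributes $k\sum_j\ha_j\bv_j^4+O(kr\sqrt{\epsdir})\lesssim k\omega^2/r+k\ha_i\bv_i^4\ind_{\tldvv\in\Spoti}$.

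For $\tldvv\in G_i$ after $T_{21}$ you take a genuinely different route. The paper stays at $\btldvv$. It merges the $\Spot$ part of $\tT_k$ with $\tT_k^*$ via \cref{lem: <>2j bound}, bounds the target block $k(k-2)(\mI-\bvv\bvv^\top)\tT_k^*[\bvv^{\otimes k-2}](\mI-\bvv\bvv^\top)$ directly by $O(k^2\epsdir)$, and reads off the residual $k(1-\ha_i)(\mI-\ve_i\ve_i^\top)$. You instead do the algebra at $\ve_i$, where the cancellations are exact, and pay for the displacement. This works and naturally produces the $\min\{k^2\sqrt{\epsdir},k\}$ shape, but two points must be fixed.

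\emph{First, the sign.} At $\ve_i$ one has $k\tDelta_k[\ve_i^{\otimes k-2}]-k\tDelta_k[\ve_i^{\otimes k}]\mI\approx+k(1-\ha_i)(\mI-\ve_i\ve_i^\top)$. This is p.s.d.\ because $\ha_i\le1$ by \cref{lem: stage 2 IH}\ref{item: stage 2 IH reg}. What you wrote is its negative, of size $k(1-\ha_i)=\Theta(k)$ right after $T_{21}$, and no error term absorbs that for small $k$.

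\emph{Second, the Lipschitz constant.} Your Lipschitz claim covers only the target entries $\bv_l^{k-2}$. The network moments also move with $\btldvv$, and a term-by-term estimate of the network $k(k-2)$ block gives only $O(k^3)$. You have two ways out.
\begin{enumerate}
\item Drop that p.s.d.\ block at $\btldvv$ before comparing, and bound the target $k(k-2)$ block by hand. The remaining pieces are then $O(k^2)$-Lipschitz and $O(k)$-bounded term by term, using $\norm{\tT_2}_{\op}\le2$ (by symmetry $\tT_2=\diag{\tldE w_1^2,\ldots,\tldE w_d^2}$).
\item Invoke \cref{lem: tDelta and J bound}, which gives $\norm{\tJ_k(\vu)-\tJ_k(\vu')}_{\op}\lesssim k^2\norm{\vu-\vu'}$ and $\norm{\tJ_k(\vu)}_{\op}\lesssim k$ for the whole Jacobian.
\end{enumerate}
In either case compare with $\sgn(\bv_i)\ve_i$; this is at distance at most $\sqrt{2\epsdir}$, and the comparison is legitimate because $\tJ_k$ is even.

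For the hand bound on the target block (your ``delicate point''), the term needing care is $l=i$, namely $k(k-2)\bv_i^{k-2}(1-\bv_i^2)$. It is $O(k^2\epsdir)$, and also $O(k)$ since $x^{(k-2)/2}(1-x)\le 2/k$ on $[0,1]$. The bound $\max_{l\ne i}\bv_l^2\le\epsdir$ handles only $l\ne i$.

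The trade-off: the paper's route needs no Lipschitz control of $\tJ_k$, but it must track $\epsdir$-errors inside every contraction. Yours isolates the exact cancellation at $\ve_i$ and pushes the perturbation into a generic estimate.
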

\begin{proof}
    The proof is overall similar to \cref{lem: stage 1 jacob bound}. The only difference is that now in $\tT_k$ part we separate the $\Sdense$ part and $\Spot$ part. The $\Sdense$ part follow the simlar argument, and $\Spot$ part is the new part we need to bound.

    We will drop the tilde to use $\vv$ instead of $\tldvv$ during the proof.
    
    By \cref{lem: hermite of g and J} we know
    \begin{align*}
    \tJ_k(\bvv)
    =& (2-k) \tDelta_k[\bvv^{\otimes k}]\mI
          + k(k-2)\left(\mI-\bvv\bvv^\top\right) 
            \tDelta_k[\bvv^{\otimes k-2}]
            \left(\mI-\bvv\bvv^\top\right)
          +k\tDelta_k[\bvv^{\otimes k-2}]\\
    =& 2\tDelta_k[\bvv^{\otimes k}]\mI
          + k(k-2)\left(\mI-\bvv\bvv^\top\right) 
            \tT_k[\bvv^{\otimes k-2}]
            \left(\mI-\bvv\bvv^\top\right)\\
         &+k\tT_k[\bvv^{\otimes k-2}]-k\tT_k[\bvv^{\otimes k}]\mI\\
         &+ k\tT_k^*[\bvv^{\otimes k}]\mI-k(k-2)\left(\mI-\bvv\bvv^\top\right) 
            \tT_k^*[\bvv^{\otimes k-2}]
            \left(\mI-\bvv\bvv^\top\right)
          -k\tT_k^*[\bvv^{\otimes k-2}].
    \end{align*}
    where recall $\tDelta=\tT_k - \tT_k^*$. Note that the second term in first line is always p.s.d..

    For the second line, we have for $k\ge 6$
    \begin{align*}
        \norm{k\tT_k[\bvv^{\otimes k-2}]-k\tT_k[\bvv^{\otimes k}]\mI}_{\op}
        \le& 2k \tldE_{\Sdense}[\norm{\vw}^2\langle\bvw,\bvv\rangle^{k-2}]
            + 2k\sum_{i\le r} \tldE_{\Spoti}[\norm{\vw}^2\langle\bvw,\bvv\rangle^{k-2}]\\
        \lesssim& \frac{k\omega^4}{r} + k\sum_{i\le r}\ha_i \bv_i^4
        \lesssim \frac{k\omega^4}{r} + k\ha_i\bv_i^4\ind_{\vv\in \Spoti}
    \end{align*}
    where we use \cref{lem: high order moment bound}, \cref{lem: stage 2 IH implication}.

    For $k=4$, one can explicitly write it out as
    \begin{align*}
        \norm{k\tT_k[\bvv^{\otimes k-2}]-k\tT_k[\bvv^{\otimes k}]\mI}
        \lesssim \norm{\tldE_{\Sdense}[\norm{\vw}^2\langle\bvw,\bvv\rangle^2\bvw\bvw^\top]}
            +\sum_i\norm{\tldE_{\Spoti}[\norm{\vw}^2\langle\bvw,\bvv\rangle^2\bvw\bvw^\top]}\lesssim \frac{\omega^4}{r}+\ha_i\bv_i^2\ind_{\vv\in \Spoti}
    \end{align*}
    where we use \cref{lem: high order moment bound}, \cref{lem: stage 2 IH implication}, and a direct bound on $\tldE[\norm{\vw}^2\langle\bvw,\bvv\rangle^2\bvw\bvw^\top]$ when $\vw$ is Gaussian.
    
    We bound the third line term below as
    \begin{align*}
        &k\tT_k^*[\bvv^{\otimes k}]\mI-k(k-2)\left(\mI-\bvv\bvv^\top\right) 
            \tT_k^*[\bvv^{\otimes k-2}]
            \left(\mI-\bvv\bvv^\top\right)
          -k\tT_k^*[\bvv^{\otimes k-2}]\\
        =& k\sum_i\bv_i^k \mI
         -k(k-2)\left(\mI-\bvv\bvv^\top\right) 
            \diag{\bv_1^{k-2},\ldots,\bv_d^{k-2}}
            \left(\mI-\bvv\bvv^\top\right)
          -k\diag{\bv_1^{k-2},\ldots,\bv_d^{k-2}}.
    \end{align*}
    Thus, a direct bound is
    \begin{align*}
        &\norm{k\tT_k^*[\bvv^{\otimes k}]\mI-k(k-2)\left(\mI-\bvv\bvv^\top\right) 
            \tT_k^*[\bvv^{\otimes k-2}]
            \left(\mI-\bvv\bvv^\top\right)
          -k\tT_k^*[\bvv^{\otimes k-2}]}_{\op}\\
        \lesssim& k \sum_i \bv_i^k + k^2 \max_i \bv_i^{k-2}
        \lesssim k \left(\frac{\omega\log m}{r}\right)^{(k-2)/2}+k^2 \left(\frac{\omega\log m}{r}\right)^{(k-2)/2}+k^2 \bv_i^{k-2}\ind_{\vv\in \Spoti},
    \end{align*}
    where we use $k\ge 4$ and the bound in \cref{lem: stage 1.1 IH implication} and \cref{lem: stage 1.2 IH implication}. In particular, $\sum_i\bv_i^k\le \max_i\bv_i^{k-4}\sum_i\bv_i^4$.

    \paragraph{Bound for $\vv\in G_i$.} Still using the form of $\tJ_k$ above, but we recombine the term as below, especially the $\Sdense$ part in $\tT_k$ will still be bounded in the same way and we now combine the $\Spot$ part with $\tT_k^*$ term.

    We have
    \begin{align*}
        &k\tT_k[\bvv^{\otimes k-2}]-k\tT_k^*[\bvv^{\otimes k-2}]\\
        =& k\tldE_{\Sdense}[\norm{\vw}^2\langle\bvw,\bvv\rangle^{k-2}\bvw\bvw^\top] 
            +k\sum_i \tldE_{\Spoti}[\norm{\vw}^2\langle\bvw,\bvv\rangle^{k-2}\bvw\bvw^\top]
            -k\bv_i^{k-2}\ve_i\ve_i^\top - \BigO{k\epsdir}\\
        =&  -k(1-\ha_i)\bv_i^{k-2}\ve_i\ve_i^\top
            \pm \BigO{\frac{k\omega^4}{r}+\min\{k^2\sqrt{\epsdir},k\}}
    \end{align*}
    and
    \begin{align*}
        &k\tT_k^*[\bvv^{\otimes k}]\mI-k\tT_k[\bvv^{\otimes k}]\mI\\
        =& k\bv_i^k\mI 
            - k\tldE_{\Sdense}[\norm{\vw}^2\langle\bvw,\bvv\rangle^{k}]\mI 
            - k\sum_i \tldE_{\Spoti}[\norm{\vw}^2\langle\bvw,\bvv\rangle^{k}]\mI
            \pm \BigO{k\epsdir}\\
        =& k(1-\ha_i)\bv_i^k\mI 
            \pm \BigO{\frac{k\omega^4}{r}+\min\{k^2\sqrt{\epsdir},k\}}
    \end{align*}     
    and 
    \begin{align*}
        &-k(k-2)\left(\mI-\bvv\bvv^\top\right) 
            \tT_k^*[\bvv^{\otimes k-2}]
            \left(\mI-\bvv\bvv^\top\right)
        =k(k-2)\left(\mI-\bvv\bvv^\top\right) 
            \diag{\bv_1^{k-2},\ldots,\bv_d^{k-2}}
            \left(\mI-\bvv\bvv^\top\right)
        =\pm\BigO{k^2\epsdir}.
    \end{align*}

    Thus, we know 
    \begin{align*}
        \tJ_k(\bvv)-2\tDelta_k[\bvv^{\otimes k}]\mI
        \succ k(1-\ha_i)(\mI-\ve_i\ve_i^\top) 
            \pm \BigO{\frac{k\omega^4}{r}+\min\{k^2\sqrt{\epsdir},k\}}
        \succ \pm \BigO{\frac{k\omega^4}{r}+\min\{k^2\sqrt{\epsdir},k\}}.
    \end{align*}
\end{proof}
\subsection{Bound of $B_t$ and $\E[B_t]$}\label{sec: Bt bound proof}
In this section, we give the bound on $B_t$ term and an improved bounod of $\E[B_t]$ where $\E$ is empirical expectation over all neuron pair $(\vv,\ckvv)$. Recall the definition of $B_t$ as
\begin{align*}
    B_t &= -2\langle \grad_{\vv}\L(\vtheta) - \grad_{\vv}\L(\tldvtheta),\vv - \ckvv\rangle
\end{align*}

\begin{lemma}[Bound on $B_t$]\label{lem: Bt bound}
    Assume $\Delta<1$. We have for $t\le T_2$
    \begin{align*}
        B_t
        \lesssim& \gamma_t\left(\sqrt{r}\Delta + \err\right)\norm{\vv}\norm{\vv - \ckvv},\\
        \E[B_t]
        \lesssim& \err\sqrt{r}\Delta
            + r\Delta^{5/2}, \text{ excluding $T_{21}\le t\le T_\gamma$,}
    \end{align*}
    where $\err\lesssim \frac{1}{m^{1/2-o(1)}}$ for $t\le T_{21}$ and $\err \lesssim r\epsdir^{1/4}\log m$ for $T_{21}\le t\le T_{2}$.

    For $T_{21}\le t\le T_\gamma$, denote $G=\cup_k G_k$ and $\ha=\sum_i\ha_i$. Suppose $\Delta_G\le \ha^{1/2}$, we have
    \[
        \E_G[B_t]=\frac{1}{m}\sum_{i\in G} B_{t,i}
        \lesssim \gamma\left(\sqrt{r\ha}\Delta_{-G}+\epsdir^{1/4}\ha\log m\right)\Delta_G + \gamma\ha^{3/4}\Delta_G^{5/2}.
    \]
\end{lemma}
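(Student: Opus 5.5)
Write $g:=f-\tldf$. Since both population gradients at $\vv$ share the target and the weight decay,
\[
\grad_{\vv}\L(\vtheta)-\grad_{\vv}\L(\tldvtheta)=\gamma_t\E_\vx\bigl[g(\vx)\nabla_\vv\bigl(\norm{\vv}^2\sigma(\bvv^\top\vx)\bigr)\bigr].
\]
The plan is to split $g=(f-\ckf)+(\ckf-\tldf)$ and bound each inner product by Cauchy--Schwarz in $L_2(\vx)$.

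\textbf{Pointwise bound.} By 2-homogeneity and \cref{assum: activation}, $\nabla_\vv(\norm{\vv}^2\sigma(\bvv^\top\vx))$ has $L_2(\vx)$ norm $\lesssim\norm{\vv}$ in every direction. Hence
\[
|B_t|\lesssim\gamma_t\norm{g}_{L_2}\norm{\vv}\norm{\vv-\ckvv}.
\]
\cref{lem: tldf ckf bound and f ckf bound} gives $\norm{f-\ckf}_{L_2}\lesssim\sqrt B\Delta+\Delta^2\lesssim\sqrt r\Delta$, using $B\lesssim r$ from the stage regularity conditions and $\Delta<1$. It also gives $\norm{\ckf-\tldf}_{L_2}\lesssim\sqrt{BB_\infty\log m/m}$. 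For $t\le T_{21}$ all norms are at most $\sigma_1^2=m^{o(1)}$, so $B_\infty\le m^{o(1)}$ and this term is $m^{-1/2+o(1)}$. After $T_{21}$, $B_\infty\le m\sqrt{\epsdir}$, exactly as in the proof of \cref{lem: main text coupling error main}, which gives $\lesssim r\epsdir^{1/4}\log m$. This proves the first display.

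\textbf{Averaged bound.} For $\E[B_t]=\frac1m\sum_iB_{t,i}$, the improvement comes from averaging before applying Cauchy--Schwarz. Take the piece $g_1:=f-\ckf$ and linearize $\nabla_{\vv_i}\phi$ around $\ckvv_i$:
\[
\frac1m\sum_i\bigl\langle\nabla_{\vv_i}\phi(\vx),\vv_i-\ckvv_i\bigr\rangle=g_1(\vx)+R(\vx),
\]
where, by 2-homogeneity, the remainder satisfies $\norm{R}_{L_2}\lesssim\frac1m\sum_i\delta_i^2\bigl(1+\delta_i/\norm{\ckvv_i}\bigr)$. The $g_1$ contribution is then $-2\norm{g_1}^2\le0$, which is favorable, and the remainder contributes $\lesssim\norm{g_1}\norm{R}\lesssim\sqrt r\Delta\cdot\Delta^2\cdot O(1)$.

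I expect this remainder to be the main obstacle. Getting the stated $r\Delta^{5/2}$ requires interpolating between $\Delta^2$ and $\Delta^3$-type terms, using $\delta_i\le\Deltamax\norm{\ckvv_i}$ and Hölder across neurons with $B\lesssim r$. For ReLU the linearization must be done in expectation via the smoothed Jacobian lemmas (\cref{lem: relu jacob}) rather than pointwise.

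For the $\ckf-\tldf$ piece, Cauchy--Schwarz over neurons gives
\[
\frac1m\sum_i\norm{\vv_i}\delta_i\le\Bigl(\frac1m\sum_i\norm{\vv_i}^2\Bigr)^{1/2}\Delta\lesssim\sqrt r\Delta.
\]
Combined with the $\err$ bound above, this gives the $\err\sqrt r\Delta$ term.

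\textbf{Interval $T_{21}\le t\le T_\gamma$.} Here only the neurons in $G$ carry the step size $\gamma$. I would repeat the averaged argument with sums restricted to $G$ and split $f-\ckf$ into its $G$ and $-G$ parts:
\begin{itemize}
\item The $-G$ part is bounded by $\sqrt r\Delta_{-G}$. It is paired with $\bigl(\frac1m\sum_{G}\norm{\vv}^2\bigr)^{1/2}\Delta_G\lesssim\sqrt{\ha}\Delta_G$, using $\Delta_G\le\ha^{1/2}$; this gives the $\sqrt{r\ha}\Delta_{-G}\Delta_G$ term.
\item The $G$ part is self-interacting. It gives a nonpositive square plus a remainder of order $\ha^{3/4}\Delta_G^{5/2}$, using $B_G\lesssim\ha$.
\item The sampling error is bounded as in the averaged case, with $\sqrt{\ha}$ replacing $\sqrt r$, giving the $\epsdir^{1/4}\ha\log m\,\Delta_G$ term.
\end{itemize}
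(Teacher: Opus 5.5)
Your structure matches the paper's: the split $f-\tldf=(f-\ckf)+(\ckf-\tldf)$, the pointwise Cauchy--Schwarz bound with the two $B_\infty$ regimes, and the averaging identity $\frac1m\sum_i\langle\nabla_{\vv_i}\phi(\vv_i;\vx),\vv_i-\ckvv_i\rangle=(f-\ckf)(\vx)+R(\vx)$, which makes the main term $-2\norm{f-\ckf}_{L_2}^2\le0$. The gap is the remainder $R$.

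Your estimate $\norm{R}_{L_2}\lesssim\frac1m\sum_i\delta_i^2(1+\delta_i/\norm{\ckvv_i})$ is fine for smooth $\sigma$. There it already gives $\sqrt r\,\Delta^3\le r\Delta^{5/2}$, so no interpolation is needed; the exponent $5/2$ comes from the kinked case, not from there. But the estimate is false for $\sigma(x)=|x|$, which the lemma must cover and which is the case the paper works out. Write $\vh_i=\vv_i-\ckvv_i$. On the event that $\vv_i^\top\vx$ and $\ckvv_i^\top\vx$ have opposite signs, the per-neuron Taylor remainder $\tau_i$ has size about $\norm{\ckvv_i}\,|\vh_i^\top\vx|$. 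That event has probability up to order $\delta_i/\norm{\ckvv_i}$, so in general $\norm{\tau_i}_{L_2}$ is of order $\norm{\ckvv_i}^{1/2}\delta_i^{3/2}$, not $\delta_i^2$.

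The missing step has three parts:
\begin{itemize}
\item The bound $\E_\vx[\tau_i^2]\lesssim\norm{\ckvv_i}\,\delta_i^3$ from \cref{lem: sigma 2nd remainder bound}, obtained via $|\sigma(a+b)-\sigma(a)-\sigma'(a)b|\lesssim|b|\,\ind\{a(a+b)\le0\}$.
\item H\"older with exponents $4$ and $4/3$ across neurons: $\frac1m\sum_i\norm{\ckvv_i}^{1/2}\delta_i^{3/2}\le B^{1/4}\Delta^{3/2}$.
\item Multiplication by $\norm{f-\ckf}_{L_2}\lesssim\sqrt r\,\Delta$, giving $r^{3/4}\Delta^{5/2}$.
\end{itemize}
On $G$ the same computation with $B_G=\ha$ gives $\ha^{3/4}\Delta_G^{5/2}$. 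There the hypothesis $\Delta_G\le\ha^{1/2}$ absorbs the $\ha^{1/4}\Delta_G^{7/2}$ cross term and the $\Delta_G^2$ piece of the sampling term.

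Your fallback via \cref{lem: relu jacob} does not close this gap. That lemma bounds the Lipschitz constant of the smoothed Jacobian by $M$, a bound on the second-moment operator norms of the two networks taken separately, not by the size of their difference. Applied to $f-\ckf$, it gives a per-neuron remainder of order $M\delta_i^2/\norm{\ckvv_i}$. This has no factor $\norm{f-\ckf}_{L_2}$, so it yields only a quadratic term in the $\delta_i$ rather than $r\Delta^{5/2}$. Extracting a factor $\norm{f-\ckf}_{L_2}$ from the Hermite expansion of that Hessian would instead require $\sum_k k^4\hsigma_k^2<\infty$ (Cauchy--Schwarz over Hermite levels). This fails for $|x|$, since $\hsigma_k^2\asymp k^{-5/2}$. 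What you need is the direct $L_2(\vx)$ estimate of the pointwise remainder on the sign-disagreement event.
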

\begin{proof}
    We have
    \begin{align*}
    B_t 
    &= -2\gamma_t\langle \grad_{\vv}\L(\vtheta) - \grad_{\vv}\L(\tldvtheta),\vv - \ckvv\rangle
    \end{align*}
    Note that we can split it into 2 terms
    \begin{align*}
        \grad_{\vv}\L(\vtheta) - \grad_{\vv}\L(\tldvtheta)
        =& \E_\vx\left[
            (f(\vx)-\tldf(\vx))\nabla_\vv(\norm{\vv}^2\sigma(\bvv^\top\vx))
            \right]\\
        =& \E_\vx\left[
            (f(\vx)-\ckf(\vx))\nabla_\vv(\norm{\vv}^2\sigma(\bvv^\top\vx))
            \right]
            + \E_\vx\left[
            (\ckf(\vx)-\tldf(\vx))\nabla_\vv(\norm{\vv}^2\sigma(\bvv^\top\vx))
            \right].
    \end{align*}
    Thus,
    \begin{align*}
        B_t
        =& -2\gamma_t\E_\vx\left[
            (f(\vx)-\ckf(\vx)) \langle\nabla_\vv(\norm{\vv}^2\sigma(\bvv^\top\vx)), \vv - \ckvv\rangle
            \right]
            - 2\gamma_t\E_\vx\left[
            (\ckf(\vx)-\tldf(\vx)) \langle\nabla_\vv(\norm{\vv}^2\sigma(\bvv^\top\vx)),\vv - \ckvv\rangle
            \right]\\
        \lesssim& \gamma_t(\sqrt{r}\Delta+\Delta^2)\norm{\vv}\norm{\vv - \ckvv}
            + \gamma_t\sqrt{\frac{r B_\infty\log m}{m}}\norm{\vv}\norm{\vv - \ckvv},
    \end{align*}
    where we use \cref{lem: tldf ckf bound,lem: f ckf bound} that bound $\norm{f-\ckf}$ and $\norm{\ckf-\tldf}$, and $B_\infty := \max_{i\in[m]} \norm{\ckvv_i}_2^2
    \left(
        \avgrho 
        \norm{
            \btldvv_{i,\rho}
            -
            \bckvv_i
        }_2
    \right) $.

    For $B_\infty$ above, before Stage 2.1 that $t\le T_{21}$, we know $\max_i\norm{\ckvv_i}^2\le \sigma_1^4=m^c$ with any small enough constant $c$ (from \cref{lem: stage 1.1 IH implication} for Stage 1.1, \cref{lem: stage 1.2 IH implication} for Stage 1.2 and  \cref{lem: stage 2 IH implication}, \cref{lem: stage 2 IH}\ref{item: stage 2 IH norm bound or basis like} for Stage 2.1), so $B_\infty\lesssim m^{o(1)}$. After that for $t\ge T_{21}$, we know (1) for $\ckvv\not\in\cup_i G_i$ that not aligned, $\max_i\norm{\ckvv_i}^2\lesssim \sigma_1^2=m^{c}$ from \cref{lem: stage 2 IH implication}, \cref{lem: stage 2 IH}\ref{item: stage 2 IH norm bound or basis like}; 
    (2) for $\ckvv\in G_i$, $\norm{\ckvv_i}_2^2\le m$ and $\norm{\btldvv_{i,\rho}-\bckvv_i}_2=\norm{\rho(\btldvv_{i})-\bckvv_i}_2\lesssim\sqrt{\epsdir}$ for any $\rho\in\Rho$. Thus, we know $B_\infty\lesssim \sqrt{\epsdir} m$ for $t\ge T_{21}$.

    Combine this with the bound of $B_t$ above gives the result.

    \paragraph{Bound on $\E[B_t]$.}
    Let $\phi(\vv;\vx):=\norm{\vv}^2\sigma(\bvv^\top\vx)$. To get a better bound than $B_t$, it is important to notice that
    \begin{align*}
        \langle\nabla_\vv(\norm{\vv}^2\sigma(\bvv^\top\vx),\vv-\ckvv\rangle
        =\langle\nabla_\vv\phi(\vv;\vx),\vv-\ckvv\rangle
        = \phi(\vv;\vx)-\phi(\ckvv;\vx) + \tau(\vv,\ckvv;\vx)
    \end{align*}
    where $\tau(\vv,\ckvv;\vx)=\phi(\ckvv;\vx) - \phi(\vv;\vx) - \langle\nabla_\vv\phi(\vv;\vx),\vv-\ckvv\rangle$ should be viewed as second-order remainder error of activation $\phi$.

    Recall we have above that
    \begin{align*}
        B_t
        =& -2\gamma_t\E_\vx\left[
            (f(\vx)-\ckf(\vx)) \langle\nabla_\vv(\norm{\vv}^2\sigma(\bvv^\top\vx)), \vv - \ckvv\rangle
            \right]
            + \gamma_t\E_\vx\left[
            (\ckf(\vx)-\tldf(\vx)) \langle\nabla_\vv(\norm{\vv}^2\sigma(\bvv^\top\vx)),\vv - \ckvv\rangle
            \right].
    \end{align*}
    For the first term, we have
    \begin{equation}\label{eq: Bt f ckf}
    \begin{aligned}
        &\E_\vx\left[
            (f(\vx)-\ckf(\vx)) \langle\nabla_\vv(\norm{\vv}^2\sigma(\bvv^\top\vx)), \vv - \ckvv\rangle
            \right]\\
        &= \E_\vx\left[
            (f(\vx)-\ckf(\vx)) (\phi(\vv;\vx)-\phi(\ckvv;\vx))
            \right]
          + \E_\vx\left[
            (f(\vx)-\ckf(\vx)) \tau(\vv,\ckvv;\vx)
            \right]
    \end{aligned}    
    \end{equation}
    When taking $\E$ over all neuron, it becomes
    \begin{align*}
        \E_{(\vv,\ckvv)}\E_\vx\left[
            (f(\vx)-\ckf(\vx)) \langle\nabla_\vv(\norm{\vv}^2\sigma(\bvv^\top\vx)), \vv - \ckvv\rangle
            \right]
        =& \E_\vx\left[
            (f(\vx)-\ckf(\vx))^2
            \right]
          + \E_\vx\left[
            (f(\vx)-\ckf(\vx)) \E_{(\vv,\ckvv)}\tau(\vv,\ckvv;\vx)
            \right]\\
        \gtrsim& - \left(\sqrt{r}\Delta + \Delta^2\right)\E_{(\vv,\ckvv)}\norm{\tau},
    \end{align*}
    where the first term is non-negative, the second term follows from \cref{lem: f ckf bound} that bound $\norm{f-\ckf}$ and \cref{lem: sigma 2nd remainder bound} that bound $\norm{\tau}\lesssim \norm{\vv-\ckvv}^2$ if $\sigma$ is smooth activation and $\norm{\tau}\lesssim\norm{\ckvv}^{1/2}\norm{\vv-\ckvv}^{3/2}$ if $\sigma$ is ReLU/LeakyReLU/Absolute. Note that for the later case $\E_{(\vv,\ckvv)}\norm{\tau}\le (\E \norm{\ckvv}^2)^{1/4}(\E\norm{\vv-\ckvv}^2)^{3/4}\lesssim r^{1/4}\Delta^{3/2}$. Thus,
    \begin{align*}
        \E_{(\vv,\ckvv)}\E_\vx\left[
            (f(\vx)-\ckf(\vx)) \langle\nabla_\vv(\norm{\vv}^2\sigma(\bvv^\top\vx)), \vv - \ckvv\rangle
            \right]
        \gtrsim -r \Delta^{5/2}.
    \end{align*}

    For the second term, we use the same bound as $B_t$, we get
    \begin{equation}\label{eq: Bt ckf tldf}
    \begin{aligned}
        \E_\vx\left[
            (\ckf(\vx)-\tldf(\vx)) \langle\nabla_\vv(\norm{\vv}^2\sigma(\bvv^\top\vx)),\vv - \ckvv\rangle
            \right]
        \gtrsim& -\sqrt{\frac{rB_\infty\log m}{m}} \E_{(\vv,\ckvv)}\norm{\vv}\norm{\vv - \ckvv}\\
        \ge& -\sqrt{\frac{rB_\infty\log m}{m}}(\norm{\ckvv}\norm{\vv - \ckvv}+\norm{\vv - \ckvv}^2)
    \end{aligned}    
    \end{equation}
    When taking $\E$ over all neuron, it becomes
    \begin{align*}
        \E_{(\vv,\ckvv)}\E_\vx\left[
            (\ckf(\vx)-\tldf(\vx)) \langle\nabla_\vv(\norm{\vv}^2\sigma(\bvv^\top\vx)),\vv - \ckvv\rangle
            \right]
        \gtrsim-\sqrt{\frac{rB_\infty\log m}{m}}(\sqrt{r}\Delta+\Delta^2),   
    \end{align*}
    where we use $\tldE \norm{\ckvv}^2\lesssim r$ from \cref{lem: stage 1.1 IH implication} for Stage 1.1, \cref{lem: stage 1.2 IH}\ref{item: stage 1.2 IH reg} for Stage 1.2, \cref{lem: stage 2 IH}\ref{item: stage 2 IH reg} for Stage~2.

    Thus, we finally get
    \begin{align*}
        \E[B_t]\lesssim
        \sqrt{\frac{r^2 B_\infty\log m}{m}}\Delta
        + r\Delta^{5/2}.
    \end{align*}
    The $B_\infty$ parts follow the same as in $B_t$ part.

    Lastly, for $T_{21}\le t\le T_\gamma$, since we increase the stepsize $\gamma$ for those in $G$. we need to slightly update the above bound. Denote $\ha=\frac{1}{m}\sum_i\ha_i$

    For \eqref{eq: Bt f ckf}, we only have neurons $G$ to do taylor expansion so we directly bound the rest neuron and get
    \begin{align*}
        &\E_{(\vv,\ckvv)\in G}\E_\vx\left[
            (f(\vx)-\ckf(\vx)) \langle\nabla_\vv(\norm{\vv}^2\sigma(\bvv^\top\vx)), \vv - \ckvv\rangle
            \right]\\
        =& \E_\vx\left[
            (f_G(\vx)-\ckf_G(\vx))^2
            \right]
          + \E_\vx\left[
            (f_G(\vx)-\ckf_G(\vx)) \E_{(\vv,\ckvv)\in G}\tau(\vv,\ckvv;\vx)
            \right]\\
          &+ \E_\vx\left[
            (f_{-G}(\vx)-\ckf_{-G}(\vx)) \langle\nabla_\vv(\norm{\vv}^2\sigma(\bvv^\top\vx)), \vv - \ckvv\rangle
            \right]\\
        \gtrsim& - \left(\sqrt{\ha}\Delta_G + \Delta_G^2\right)\E_{(\vv,\ckvv)\in G}\norm{\tau}
        -  \left(\sqrt{r}\Delta_{-G} + \Delta_{-G}^2\right)\E_{(\vv,\ckvv)\in G}\norm{\vv}\norm{\vv-\ckvv}\\
        \gtrsim& - \ha^{3/4}\Delta_G^{5/2} - \ha^{1/4}\Delta_G^{7/2}- (r\ha)^{1/2}\Delta_{-G}\Delta_G,
    \end{align*}
    where we use the same bound of $\tau$ as above and note that $\E_G \norm{\ckvv}^2=\sum_i\ha_i=\ha$ when using \cref{lem: f ckf bound}. Here $f_G$ represents the network with only neurons in $G$ and $f_{-G}$ be the network excluding the neurons in $G$.

    For \eqref{eq: Bt ckf tldf}, follow the same way we get
    \begin{align*}
        \E_{(\vv,\ckvv)\in G}\E_\vx\left[
            (\ckf(\vx)-\tldf(\vx)) \langle\nabla_\vv(\norm{\vv}^2\sigma(\bvv^\top\vx)),\vv - \ckvv\rangle
            \right]
        \gtrsim-\sqrt{\ha}\epsdir^{1/4}\log m \left(\ha^{1/2}\Delta_G+\Delta_G^2\right),   
    \end{align*}
    Combine above we get bound for $\E_G B_t$.
\end{proof}

\begin{lemma}\label{lem: sigma 2nd remainder bound}
Recall $\phi(\vv;\vx):=\norm{\vv}^2\sigma(\bv^\top \vx)$.
For \(\vv,\vu\neq 0\), define
\[
\tau(\vv,\vu;\vx)
:=
\phi(\vu;\vx)-\phi(\vv;\vx)
-
\left\langle \nabla_\vv \phi(\vv;\vx),\vu-\vv\right\rangle .
\]
Assume $\norm{\vu-\vv}\le \norm{\vu}/2$.
Then
\begin{align*}
    \mathbb E_{\vx}\left[\tau(\vv,\vu;\vx)^2\right]
    \lesssim\left\{\begin{array}{ll}
           \norm{\vu-\vv}^4  & \text{ if $\sigma$ is smooth activation} \\
            \norm{\vu}\norm{\vu-\vv}^3 & \text{ if $\sigma\in\{\mathrm{ReLU}, \mathrm{LeakyReLU}, \mathrm{Absolute}\}$}
    \end{array}\right.
\end{align*}
\end{lemma}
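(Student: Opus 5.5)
The plan is to reduce to a one-dimensional statement along the segment from $\vv$ to $\vu$ and then exploit Gaussian structure. Write $\vh:=\vu-\vv$ and $g(s):=\phi(\vv+s\vh;\vx)$ for $s\in[0,1]$. Then $\tau=g(1)-g(0)-g'(0)$. The assumption $\norm{\vh}\le\norm{\vu}/2$ keeps the segment away from the origin: $\norm{\vv+s\vh}\ge\norm{\vu}-\norm{\vh}\ge\norm{\vu}/2$ and $\norm{\vv+s\vh}\le 2\norm{\vu}$. So $\phi$ is well defined along the path, and ratios $\norm{\vh}/\norm{\vv+s\vh}$ are $O(\norm{\vh}/\norm{\vu})$.

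\textbf{Smooth case.} Use the integral form $\tau=\int_0^1(1-s)g''(s)\,\rd s$, with $g''(s)=\vh^\top\nabla^2_\vv\phi(\vv+s\vh;\vx)\vh$. By 2-homogeneity of $\phi$ in $\vv$, the Hessian is 0-homogeneous. Computing $\nabla^2\phi$ at $\vw$ gives terms of three kinds:
\begin{itemize}
\item $2\sigma(\bvw^\top\vx)\mI$, which is $O(|\bvw^\top\vx|)$ since $\sigma(0)=0$ and $\sigma'$ is bounded;
\item cross terms $\sigma'(\bvw^\top\vx)$ times $(\mI-\bvw\bvw^\top)\vx$ tensored with $\bvw$;
\item $\sigma''(\bvw^\top\vx)$ times $(\mI-\bvw\bvw^\top)\vx\vx^\top(\mI-\bvw\bvw^\top)$.
\end{itemize}
Every one is a polynomial of degree $\le 2$ in $\vx$ with bounded coefficients, because $\sup|\sigma'|+|\sigma''|=O(1)$. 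Hence $|g''(s)|\lesssim\norm{\vh}^2(1+|\bvw^\top\vx|+|\vh^\top\vx|^2/\norm{\vh}^2+\dots)$. Applying Jensen/Minkowski in $s$ and Gaussian moment bounds then gives $\E_\vx\tau^2\lesssim\norm{\vh}^4$. This case is routine.

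\textbf{Nonsmooth case: $\sigma=|x|$ or ReLU-type, writing $\sigma(z)=\max(z,0)$ up to linear combination with a linear part.} This is the main step. Here $\phi(\vw;\vx)=\norm{\vw}\,\sigma(\vw^\top\vx)$, and the linear part of $\sigma$ gives a smooth $\norm{\vw}\vw^\top\vx$ handled as above. It suffices to treat $\psi(\vw):=\norm{\vw}\,(\vw^\top\vx)_+$. Split $\psi=N(\vw)L(\vw)$ with $N=\norm{\vw}$ smooth on the segment and $L=(\vw^\top\vx)_+$ piecewise linear.

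Expanding the Taylor remainder of the product gives three pieces:
\begin{itemize}
\item $L(\vv)$ times the remainder of $N$: bounded by $\norm{\vh}^2/\norm{\vu}\cdot|\vv^\top\vx|$, with second moment $\lesssim\norm{\vh}^4$.
\item The first-order cross term $(N(\vu)-N(\vv))(L(\vu)-L(\vv))$: bounded by $\norm{\vh}\,|\vh^\top\vx|$, contributing $\lesssim\norm{\vh}^4$.
\item $N(\vu)$ times the remainder of $L$, namely $\rho_L:=(\vu^\top\vx)_+-(\vv^\top\vx)_+-\ind\{\vv^\top\vx>0\}\vh^\top\vx$. This is the key piece.
\end{itemize}

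For $\rho_L$, note it vanishes unless $\vu^\top\vx$ and $\vv^\top\vx$ have different signs, and on that event $|\rho_L|\le|\vh^\top\vx|$. The sign-change event forces $|\vv^\top\vx|\le|\vh^\top\vx|$. So I will bound
\[
\E\big[(\vh^\top\vx)^2\ind\{|\vv^\top\vx|\le|\vh^\top\vx|\}\big].
\]
To do this, decompose $\vh^\top\vx$ into a component along $\bvv$ and an independent orthogonal Gaussian. Condition on the orthogonal part: $\vv^\top\vx$ is Gaussian with standard deviation $\norm{\vv}\asymp\norm{\vu}$, so the probability that it lands in an interval of length $O(|\vh^\top\vx|)$ is $\lesssim|\vh^\top\vx|/\norm{\vu}$. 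This yields a bound $\lesssim\E|\vh^\top\vx|^3/\norm{\vu}\asymp\norm{\vh}^3/\norm{\vu}$. The self-referential dependence, since $\vh^\top\vx$ also correlates with $\vv^\top\vx$, is handled by the orthogonal split together with $\norm{\vh}\le\norm{\vu}/2$.

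Multiplying by $N(\vu)^2=\norm{\vu}^2$ gives $\norm{\vu}\norm{\vh}^3$. This dominates $\norm{\vh}^4$ because $\norm{\vh}\le\norm{\vu}$, which is the claimed bound. The $|x|$ and LeakyReLU cases follow by linearity from the ReLU case.
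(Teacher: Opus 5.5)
Your proposal is correct and is essentially the paper's proof: the integral Taylor remainder for smooth $\sigma$, and for ReLU-type activations the same three-way split into $\rho_N\,L(\vv)$, a first-order cross term, and $\norm{\vu}\rho_L$ with $|\rho_L|\le|\vh^\top\vx|\,\ind\{(\vv^\top\vx)(\vu^\top\vx)\le 0\}$. Your Gaussian conditioning argument supplies the $\norm{\vh}^3/\norm{\vu}$ computation that the paper only asserts.

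There are two harmless bookkeeping points. First, if you keep $N(\vu)$ in front of $\rho_L$, the cross term should be $(N(\vu)-N(\vv))\,\ind\{\vv^\top\vx>0\}\vh^\top\vx$ rather than $(N(\vu)-N(\vv))(L(\vu)-L(\vv))$; alternatively, put $N(\vv)$ on $\rho_L$. Second, the anti-concentration step needs $\norm{\vh}\le c\norm{\vu}$ for a small constant $c$, so that $\norm{\vv}-|\langle\vh,\bvv\rangle|\gtrsim\norm{\vu}$; this quantity degenerates at $\vu=2\vv$. The trivial bound $\E(\vh^\top\vx)^2=\norm{\vh}^2\lesssim\norm{\vh}^3/\norm{\vu}$ covers the remaining range $\norm{\vh}\asymp\norm{\vu}$.
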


\begin{proof}
We show one by one.
\paragraph{Smooth activation.} Let $\vh=\vu-\vv$, $\vw_t:=\vv+t(\vu-\vv)$, $F(t):=\phi(\vw_t;\vx)$.
Then
\[
\tau(\vv,\vu;\vx)
=
F(1)-F(0)-F'(0)
=
\int_0^1 (1-t)F''(t)\,dt .
\]
It is not hard to show
$|F''(t)|
\lesssim
\norm{\Pi\vx}^2\norm{\vh}^2$,
where $\Pi$ is the projection on the $\operatorname{span}\{\vv,\vu\}$.
Therefore
\[
\E_{\vx}\left[\tau(\vv,\vu;\vx)^2\right]
\lesssim
\norm{\vh}^4
\E_{\vx}\left[\norm{\Pi\vx}^4\right]
\lesssim \norm{\vu-\vv}^4 .
\]

\paragraph{ReLU, LeakyReLU, Absolute activation.}
We focus on ReLU case. Other follows similarly.
Let $\vh:=\vu-\vv$.
We can compute $\nabla\phi$ and have 
\begin{align*}    
    \tau(\vv,\vu;\vx)
    =& \norm{\vu}\sigma(\vu^\top \vx)
        - \norm{\vv}\sigma(\vv^\top \vx)
        - \left\langle \bvv,\vh\right\rangle\sigma(\vv^\top \vx)
        - \norm{\vv}\sigma'(\vv^\top \vx)\vh^\top \vx\\
    =& \left(\norm{\vu}-\norm{\vv}
        -\left\langle \bvv,\vh\right\rangle
        \right)\sigma(\vv^\top \vx)
        +\left(\norm{\vu}-\norm{\vv}\right)
        \sigma'(\vv^\top \vx)\vh^\top \vx
        + \norm{\vu}\left[
            \sigma(\vu^\top \vx)
            - \sigma(\vv^\top \vx)
            - \sigma'(\vv^\top \vx)\vh^\top \vx
            \right]\\
    \lesssim& \frac{\norm{\vh}^2}{\norm{\vv}}|\sigma(\vv^\top \vx)|
        + \norm{\vh}|\sigma'(\vv^\top \vx)\vh^\top \vx|
        + \norm{\vu}|\vh^\top \vx| \ind_{\{(\vv^\top \vx)(\vu^\top \vx)\le 0\}},
\end{align*}
where we use $\left|
\sigma(a+b)-\sigma(a)-\sigma'(a)b
\right|
\lesssim |b|\mathbf 1_{\{a(a+b)\le 0\}}$.

Hence, taking expectation we obtain
\[
\E_{\vx}\left[\tau(\vv,\vu;\vx)^2\right]
\lesssim
\norm{\vh}^4
+\norm{\vu}\norm{\vh}^3
\lesssim \norm{\vu}\norm{\vh}^3.
\]
\end{proof}

\subsection{Bound on $C_t$}\label{sec: Ct bound proof}
In this section, we give the bound on $C_t$. Intuitively, $C_t$ captures the error induced by finite samples compared to population dynamics, so mostly follows from the standard concentration results.
\begin{lemma}[Bound on $C_t$]\label{lem: Ct bound}
    Suppose sample size $n>d\log^3 d$ and $\Delta<1$. Then we have for all $t\le T_2$
    \begin{align*}
        C_t\lesssim \gamma_t\sqrt{\frac{dr^2\log d}{n}}\norm{\vv}\norm{\vv-\ckvv}.
    \end{align*}
\end{lemma}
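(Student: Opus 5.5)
By Cauchy--Schwarz, the plan is to reduce the bound to a uniform estimate on the gradient discrepancy $\norm{\grad_{\vv}\hL(\vtheta)-\grad_{\vv}\L(\vtheta)}$. Indeed
\[
C_t\le 2\gamma_t\norm{\grad_{\vv}\hL(\vtheta)-\grad_{\vv}\L(\vtheta)}\,\norm{\vv-\ckvv}.
\]
The weight-decay parts cancel, so the discrepancy equals
\[
\norm{(\hat\E_\vx-\E_\vx)\bigl[(f(\vx)-f_*(\vx))\nabla_\vv\phi(\vv;\vx)\bigr]},\qquad \phi(\vv;\vx)=\norm{\vv}^2\sigma(\bvv^\top\vx).
\]
By 2-homogeneity, $\nabla_\vv\phi(\vv;\vx)=\norm{\vv}\,g(\bvv;\vx)$, where $g(\bvv;\vx)=2\sigma(\bvv^\top\vx)\bvv+\sigma'(\bvv^\top\vx)(\mI-\bvv\bvv^\top)\vx$ depends only on the direction. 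It therefore suffices to show
\[
\sup_{\bvv\in\mathbb S^{d-1}}\norm{(\hat\E-\E)\bigl[(f-f_*)(\vx)\,g(\bvv;\vx)\bigr]}\lesssim\sqrt{\frac{dr^2\log d}{n}},
\]
uniformly over the trajectory. The factor $\norm{\vv}$ then yields the stated form.

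The residual depends on the trajectory, and hence on the samples. To handle this, I would write
\[
f(\vx)-f_*(\vx)=\frac1m\sum_i\norm{\vv_i}^2\sigma(\bvv_i^\top\vx)-f_*(\vx),
\]
and control the empirical process indexed by pairs of directions $(\bvu,\bvv)$, weighted by the mass $\frac1m\sum_i\norm{\vv_i}^2\le r+10$. The regularity items already give this mass bound: \cref{lem: stage 1.2 IH}\ref{item: stage 1.2 IH reg} and \cref{lem: stage 2 IH}\ref{item: stage 2 IH reg}. This reduces the task to a uniform bound
\[
\sup_{\bvu,\bvv}\norm{(\hat\E-\E)\bigl[\sigma(\bvu^\top\vx)g(\bvv;\vx)\bigr]}\lesssim\sqrt{\frac{d\log d}{n}},
\]
together with the analogous bound with $\sigma(\bvu^\top\vx)$ replaced by $f_*$. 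The $f_*$ term contributes at most $r$ such summands. Combining gives an overall factor $r\sqrt{d\log d/n}$. This is the point where the cited \cref{lem: concentration Gamma gradient} would be used.

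To prove the uniform bound, I would test against a fixed unit vector $\va$. Each summand $\sigma(\bvu^\top\vx)\langle g(\bvv;\vx),\va\rangle$ is a product of two sub-Gaussian variables, hence sub-exponential with $O(1)$ norm, since $\sigma$ is Lipschitz and $\sigma'$ is bounded. Bernstein's inequality then gives deviation $\sqrt{\log(1/\delta)/n}+\log(1/\delta)/n$. Next I would take a union bound over a $1/\poly(d)$-net of $\bvu,\bvv,\va\in\mathbb S^{d-1}$, which has $e^{O(d\log d)}$ points. This gives $\sqrt{d\log d/n}$ whenever $n\gtrsim d\log^3 d$; that condition makes the $d\log d/n$ tail term subdominant.

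The main obstacle is the discretization step for $\sigma=|x|$. There $\sigma'$ is discontinuous, so $g(\bvv;\vx)$ is not Lipschitz in $\bvv$. For the $\bvu$ and $\va$ directions and for smooth $\sigma$, Lipschitzness on the high-probability event $\max_k\norm{\vx_k}\lesssim\sqrt{d\log n}$ suffices. For the sign term I would first try bounding the net error using the number of samples in the wedge $\{\sgn(\bvv^\top\vx)\ne\sgn(\bvv'^\top\vx)\}$. By VC-dimension $O(d)$ of halfspaces, this count is uniformly $O(n\epsilon+d\log n)$ when $\norm{\bvv-\bvv'}\le\epsilon$. Each such sample contributes $O(\sqrt{d\log n})$, and choosing $\epsilon=1/\poly(d)$ keeps the error within $\sqrt{d\log d/n}$ because $n>d\log^3 d$. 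Finally, the bound holds simultaneously for all $t\le T_2$ because it is uniform over directions. The large step size $\gamma_t$ on $[T_{21},T_\gamma]$ simply factors out.
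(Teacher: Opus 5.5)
Your reduction is the same as the paper's proof. You apply Cauchy--Schwarz, factor $\norm{\vv}$ out of $\nabla_{\vv}\phi$ by 2-homogeneity, split the residual into the network part (weighted by its mass) plus $r$ target terms, and invoke a concentration bound uniform over direction pairs, which is \cref{lem: concentration Gamma gradient}. That uniformity is also what handles the sample dependence of the trajectory. Two steps, however, do not hold as written.

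\textbf{The mass bound.} The regularity items you cite control the symmetrized mass $\tldalpha=\frac1m\sum_i\norm{\ckvv_i}^2$ (recall $\norm{\tldvv_{i,\rho}}=\norm{\ckvv_i}$). They do not control the mass $\frac1m\sum_i\norm{\vv_i}^2$ of the actual empirical particles appearing in $f$. Transferring the bound is exactly what the hypothesis $\Delta<1$ is for, and your argument never uses it: $\frac1m\sum_i\norm{\vv_i}^2\le 2\tldalpha+2\Delta^2\lesssim r$. In Stage 1.1, use \cref{lem: stage 1.1 IH implication} for $\tldalpha\lesssim r$. This is where the paper's proof produces, and then absorbs, its extra $\Delta^2$ term.

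\textbf{The discretization for $\sigma(x)=|x|$.} Your self-contained argument does not close under $n>d\log^3 d$. The $O(d\log n)$ part of the wedge count does not shrink with $\epsilon$. Uniformly over $(\bvu,\va)$, a single wedge sample changes the summand by $2|\bvu^\top\vx_k|\,|\va^\top(\mI-\bvv\bvv^\top)\vx_k|$. This can be as large as $\norm{\vx_k}^2\approx d$, not $O(\sqrt{d\log n})$; restricting $\bvu,\va$ to a net of size $e^{O(d\log d)}$ gains nothing. The resulting error is of order $d^2\log n/n$. Even with your per-sample estimate it is $d^{3/2}\log^{3/2}n/n$, which exceeds $\sqrt{d\log d/n}$ unless $n\gtrsim d^2\log^2 d$.

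An elementary fix is to bound the wedge sum jointly. Let $X_W$ stack the wedge samples. By Cauchy--Schwarz, $\sum_{k\in W}|\bvu^\top\vx_k|\,|\va^\top(\mI-\bvv\bvv^\top)\vx_k|\le\norm{X_W}_{\op}^2$, and with high probability $\max_{|W|\le K}\norm{X_W}_{\op}^2\lesssim d+K\log(en/K)$. Taking $\epsilon\le 1/n$ gives $K=O(d\log n)$, so the discretization error is $O(d\log^2 n/n)$. This is $\lesssim\sqrt{d\log d/n}$ exactly when $n\gtrsim d\log^3 d$ (for $n=\poly(d)$).

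Alternatively, simply cite \cref{lem: concentration Gamma gradient}. There the paper avoids the net in the nonsmooth case: it treats $\{\bvw^\top\Gamma_{\bvu,\bvv}\}$ as a VC-subgraph class and applies Adamczak's inequality for unbounded empirical processes.
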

\begin{proof}
    Recall
    \begin{align*}
        C_t &= -2\gamma_t\langle \grad_{\vv}\hat{\L}(\vtheta) - \grad_{\vv}\L(\vtheta),\vv - \ckvv\rangle.
    \end{align*}
    We have
    \begin{align*}
        \grad_{\vv}\hat{\L}(\vtheta) - \grad_{\vv}\L(\vtheta)
        = \E_{\vu}\left[
            \frac{1}{n}\sum_i G_{\vu,\vv}(\vx_i)
            - \E_\vx G_{\vu,\vv}(\vx_i)
            \right]
            - \sum_{i=1}^r \left(
                \frac{1}{n}\sum_i G_{\vw_i^*,\vv}(\vx_i)
                - \E_\vx G_{\vw_i^*,\vv}(\vx_i)
                \right),
    \end{align*}
    where
    \[
        G_{\vu,\vv}(\vx):=
        \norm{\vu}^2\sigma(\bvu^\top \vx)\,\nabla_{\vv}\!\big(\norm{\vv}^2 \sigma(\bvv^\top \vx)\big)
        = \norm{\vu}^2\norm{\vv}\Gamma_{\bvu,\bvv}(\vx),
    \]
    \[
        \Gamma_{\bvu,\bvv}(\vx)
        :=
        2\bvv\,\sigma(\bvu^\top \vx)\sigma(\bvv^\top \vx)
        +
        \sigma(\bvu^\top \vx)\sigma'(\bvv^\top \vx)\,(I-\bvv\bvv^\top)\vx,
    \]
    Using the concentration of $\Gamma$ from \cref{lem: concentration Gamma gradient} and $n>d\log^3 d$, we know with probability of $1-1/d$
    \begin{align*}
        \norm{\grad_{\vv}\hat{\L}(\vtheta) - \grad_{\vv}\L(\vtheta)}
        \lesssim \E_\vu\left[
            \norm{\vu}^2\norm{\vv}\sqrt{\frac{d\log d}{n}}
            \right]
            + r\norm{\vv}\sqrt{\frac{d\log d}{n}}
        \lesssim \sqrt{\frac{dr^2\log d}{n}}\norm{\vv}
            +\sqrt{\frac{d\log d}{n}}\norm{\vv}\Delta^2,
    \end{align*}
    where we use $\tldE \norm{\ckvv}^2\lesssim r$ from \cref{lem: stage 1.1 IH implication} for Stage 1.1, \cref{lem: stage 1.2 IH}\ref{item: stage 1.2 IH reg} for Stage 1.2, \cref{lem: stage 2 IH}\ref{item: stage 2 IH reg} for Stage~2.

    The bound on $C_t$ then follows.
\end{proof}
Recall
\[
G_{\vu,\vv}(\vx)
:=
\norm{\vu}^2\sigma(\bvu^\top \vx)\,\nabla_{\vv}\!\big(\norm{\vv}^2 \sigma(\bvv^\top \vx)\big)
= \norm{\vu}^2\norm{\vv}\Gamma_{\bvu,\bvv}(\vx),
\]
where
\[
\Gamma_{\bvu,\bvv}(\vx)
:=
2\bvv\,\sigma(\bvu^\top \vx)\sigma(\bvv^\top \vx)
+
\sigma(\bvu^\top \vx)\sigma'(\bvv^\top \vx)\,(I-\bvv\bvv^\top)\vx,
\]

Below we give the concentration of $\Gamma$.
\begin{lemma}\label{lem: concentration Gamma gradient}
Let $\vx_1,\ldots,\vx_n \overset{\mathrm{iid}}{\sim} N(0,I_p)$. For activation $\sigma$ under \cref{assum: activation}, there exists a universal constant $C>0$ such that for any $0<\delta<1$ we have with probability $1-\delta$
\[
    \sup_{\bvu,\bvv\in\mathbb S^{p-1}}
    \left\|
    \frac1n\sum_{i=1}^n
    \Big(
    \Gamma_{\bvu,\bvv}(\vx_i)-\E_\vx\Gamma_{\bvu,\bvv}(\vx)
    \Big)
    \right\|
    \le
    C\Big(\sqrt{\frac{p\log(en/\delta)+\log(1/\delta)}{n}}+\frac{p\log(n)\log(1/\delta)}{n}\Big)
\]
\end{lemma}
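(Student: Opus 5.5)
The plan is to prove a uniform matrix/vector concentration bound via a standard truncation plus $\epsilon$-net plus Bernstein argument. Throughout we write $\Gamma_{\bvu,\bvv}(\vx)=\Gamma^{(1)}+\Gamma^{(2)}$, with
\[
\Gamma^{(1)}=2\bvv\,\sigma(\bvu^\top\vx)\sigma(\bvv^\top\vx),\qquad
\Gamma^{(2)}=\sigma(\bvu^\top\vx)\sigma'(\bvv^\top\vx)(I-\bvv\bvv^\top)\vx .
\]
We control each part separately. The vector norm is written as $\sup_{\vs\in\mathbb S^{p-1}}\langle\vs,\cdot\rangle$. The supremum is then taken over the triple $(\bvu,\bvv,\vs)\in(\mathbb S^{p-1})^3$, so it becomes a supremum of a scalar empirical process.

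\textbf{Fixed-point tails.} Fix $(\bvu,\bvv,\vs)$. By \cref{assum: activation}, $\sigma$ is $O(1)$-Lipschitz with $\sigma(0)=O(1)$ (for the smooth case $\sigma(0)=0$; for $|x|$ this is immediate), and $|\sigma'|\le O(1)$. Hence $|\sigma(\bvu^\top\vx)|\lesssim 1+|\bvu^\top\vx|$. The scalar $\langle\vs,\Gamma(\vx)\rangle$ is then a product of at most two sub-Gaussian variables ($\sigma(\bvu^\top\vx)$ together with $\sigma(\bvv^\top\vx)$ or $\vs^\top(I-\bvv\bvv^\top)\vx$). It is therefore sub-exponential with $\psi_1$ norm $O(1)$. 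Bernstein's inequality gives
\[
\Bigl|\tfrac1n\sum_i\langle\vs,\Gamma(\vx_i)\rangle-\E\langle\vs,\Gamma\rangle\Bigr|\lesssim\sqrt{t/n}+t/n
\]
with probability $1-e^{-t}$.

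\textbf{Net and union bound.} Take $\eta$-nets $\mathcal N_\eta$ of the sphere with $|\mathcal N_\eta|\le(3/\eta)^p$. Union-bounding over $\mathcal N_\eta^3$ with $t=3p\log(3/\eta)+\log(1/\delta)$ gives the bound $\sqrt{t/n}+t/n$ on the net.

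\textbf{Discretization.} To pass from the net to the full sphere we need Lipschitz control in $(\bvu,\bvv,\vs)$.
\begin{itemize}
\item The dependence on $\vs$ and $\bvu$ is Lipschitz with random constant $L(\vx)\lesssim(1+\|\vx\|)^2$. On the event $\max_i\|\vx_i\|\lesssim\sqrt{p}+\sqrt{\log(n/\delta)}$, both the empirical and the population Lipschitz constants are $\poly(p,\log(n/\delta))$.
\item The dependence on $\bvv$ through $\sigma'(\bvv^\top\vx)$ is the real obstacle in the nonsmooth case $\sigma=|x|$, since $\sigma'=\mathrm{sgn}$ is discontinuous. For smooth $\sigma$, $\sigma''$ is bounded and everything is Lipschitz, so choosing $\eta=1/\poly(n)$ costs only $p\log n$ inside $t$.
\item For $\mathrm{sgn}$, we plan to bound the discretization error by the number of sample points in the wedge $\{\vx:\mathrm{sgn}(\bvv^\top\vx)\ne\mathrm{sgn}(\bvv'^\top\vx)\}$, weighted by $(1+|\bvu^\top\vx|)\|\vx\|$.
\item The class of such wedges (intersections of two halfspaces) has VC dimension $O(p)$. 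Its population measure is $O(\eta)$. A relative VC/Bernstein bound gives, uniformly over all $\eta$-close pairs, an empirical count at most $n\eta+O(p\log n+\log(1/\delta))$. Multiplying by the truncated weight $\lesssim p\log n$ over $n$ yields the second-order term $\frac{p\log(n)\log(1/\delta)}{n}$.
\item The population side is continuous: the Gaussian measure of the wedge is $O(\eta)$, so the expectation changes by $O(\eta\cdot\poly(p))$.
\end{itemize}

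\textbf{Final assembly.} Choosing $\eta=1/\poly(n)$ makes all continuous error terms negligible. Adding the net bound $\sqrt{(p\log(en/\delta)+\log(1/\delta))/n}$ and the wedge term gives the claim, after a final union over the truncation event. The main anticipated difficulty is making the wedge-counting uniform with exactly the stated logarithmic factors. If needed, we would accept a slightly worse polylog, since only the order $\sqrt{dr^2\log d/n}$ is used downstream in \cref{lem: Ct bound}.
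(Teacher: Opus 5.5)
Your architecture (scalar process over $(\bvu,\bvv,\vs)\in(\mathbb S^{p-1})^3$, fixed-point Bernstein, $\eta$-net, Lipschitz discretization on a truncation event) is exactly the paper's treatment of smooth $\sigma$. For $\sigma(x)=|x|$ you take a different route. The paper treats $\{f_{\bvu,\bvv,\vs}\}$ as a VC-subgraph class of piecewise quadratics and invokes Adamczak's tail inequality; you discretize and pay for the jump of $\mathrm{sgn}(\bvv^\top\vx)$ by counting sample points in a wedge. That route is viable, but the wedge step as written does not give the term you claim.

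The relative VC bound gives a count $k\approx p\log n+\log(1/\delta)$. Your pointwise weight $(1+|\bvu^\top\vx_i|)\norm{\vx_i}$, taken uniformly over $\bvu$, is of order $\norm{\vx_i}^2\approx p$ on the truncation event. So count times weight over $n$ is at least of order $p^2\log n/n$: you kept only the $\log(1/\delta)$ part of the count and dropped the $p\log n$ part. This is a loss of a factor $p$, not a polylog. It is absorbed by $\sqrt{p\log(en/\delta)/n}$ only when $n\gtrsim p^3\log n$. For $p\log n\ll n\ll p^2\log n$ it even exceeds the trivial $O(1)$ bound that Cauchy--Schwarz gives when $n\ge p$. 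So as written the proposal does not prove the lemma in general, nor \cref{lem: Ct bound} under its hypothesis $n>d\log^3 d$. Under the main theorem's $n=\Theta(d^{3+c})$ the extra term would be harmless.

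The missing idea is to use that the weight is a product of two linear forms, instead of bounding it pointwise. Let $S$ be the (data-dependent) set of sample indices in the wedge, $X_S$ the matrix with rows $\vx_i^\top$ for $i\in S$, and $\vs'=(\mI-\bvv\bvv^\top)\vs$. Then
\[
\frac1n\sum_{i\in S}|\bvu^\top\vx_i|\,|(\vs')^\top\vx_i|\le\frac1n\norm{X_S\bvu}\,\norm{X_S\vs'}\le\frac1n\norm{X_S}_{\op}^2 .
\]
Gaussian matrix concentration plus a union bound over the $\binom{n}{k}$ index sets gives $\max_{|S|\le k}\norm{X_S}_{\op}^2\lesssim p+k\log(en/k)+\log(1/\delta)$. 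With your value of $k$, the wedge term becomes $O\bigl((p\log^2 n+\log n\log(1/\delta))/n\bigr)$, which is within the polylog slack you said you would accept.

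After this repair your argument is more elementary and more explicit than the paper's. The paper's Adamczak step controls deviations above $\E\sup_f|(P_n-P)f|$ but does not compute that expectation. The generic VC-type bound with envelope $\norm{\vx}^2$ would itself produce a $p^2/n$-order term. So the same restricted-operator-norm observation is what is needed on either route.
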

\begin{proof}
We know
\[
\sup_{\bvu,\bvv\in\mathbb S^{p-1}}
\left\|
\frac1n\sum_{i=1}^n
\Big(
\Gamma_{\bvu,\bvv}(\vx_i)-\E_\vx\Gamma_{\bvu,\bvv}(\vx)
\Big)
\right\|
=
\sup_{\bvu,\bvv,\bvw\in\mathbb S^{p-1}}
\left|
(P_n-P)f_{\bvu,\bvv,\bvw}
\right|,
\]
where
\[
f_{\bvu,\bvv,\bvw}(\vx):=\bvw^\top \Gamma_{\bvu,\bvv}(\vx),
\qquad
P_n:=\frac1n\sum_{i=1}^n \delta_{\vx_i},
\qquad
P:=N(\vzero,\mI_p).
\]
Also,
\[
f_{\bvu,\bvv,\bvw}(\vx)
=
2(\bvw^\top\bvv)\sigma(\bvu^\top\vx)\sigma(\bvv^\top\vx)
+
\sigma(\bvu^\top\vx)\sigma'(\bvv^\top\vx)\,\bvw^\top(I-\bvv\bvv^\top)\vx.
\]

\medskip
\noindent{\bf Case (a): smooth activations.}
Let $U=\bvu^\top \vx$, $V=\bvv^\top \vx$, and $Z=\bw^\top(I-\bvv\bvv^\top)\vx$,
then $U,V,Z$ are centered Gaussian linear forms with uniformly bounded $\psi_2$ norms.
Since $\sigma(U)$ and $\sigma(V)$ have linear growth and $\sigma'(V)$ is bounded, each term in $f_{\bvu,\bvv,\bvw}(\vx)$ is a product of at most two sub-Gaussian factors. Hence
\[
\|f_{\bvu,\bvv,\bvw}(\vx)-\E f_{\bvu,\bvv,\bvw}(\vx)\|_{\psi_1}\le C_\sigma
\]
uniformly in $(\bvu,\bvv,\bvw)$ (see, e.g., the standard product and Bernstein facts for sub-Gaussian/sub-exponential variables \cite{vershynin2018high}). Therefore, by the sub-exponential Bernstein inequality \cite[Cor.~2.9.2]{vershynin2018high}, for every fixed
$\theta=(\bvu,\bvv,\bvw)$ and every $s\ge 0$,
\[
\P\left(
\left|(P_n-P)f_\theta\right|
\ge
C_\sigma\Big(\sqrt{\frac{s}{n}}+\frac{s}{n}\Big)
\right)
\le 2e^{-s}.
\]

Let $\Theta:=(\mathbb S^{p-1})^3$ and $\Theta_\eps$ be an $\eps$-net of $\Theta$ in the product Euclidean metric. Standard bounds give
$\log |\Theta_\eps|\le Cp\log(C/\eps)$.
Applying the fixed-parameter bound and taking a union bound over $\Theta_\eps$, we obtain with probability at least $1-e^{-t}$,
\[
\sup_{\theta\in\Theta_\eps}|(P_n-P)f_\theta|
\le
C_\sigma\Big(
\sqrt{\frac{p\log(C/\eps)+t}{n}}
+
\frac{p\log(C/\eps)+t}{n}
\Big).
\]

Next, we extend from $\eps$-net to the whole suprema. Let
\[
\gE_t:=\left\{
\max_{1\le i\le n}\norm{\vx_i}\le C\sqrt{p+\log n+t}
\right\}.
\]
We know it happens with probability at least $1-e^{-t}$. In the following we condition on such event happen. We have
\[
|f_\theta(\vx)-f_{\theta'}(\vx)|
\le
C_\sigma(1+\norm{\vx}^3)\norm{\theta-\theta'}_2,
\qquad
\theta,\theta'\in\Theta.
\]
\[
\frac1n\sum_{i=1}^n (1+\norm{\vx_i}^3)\le C(1+(p+\log n+t)^{3/2}),
\]
\[
\E(1+\norm{\vx}^3)\le C(1+p^{3/2})\le C(1+(p+\log n+t)^{3/2}).
\]
Therefore, 
\[
|(P_n-P)f_\theta-(P_n-P)f_{\theta'}|
\le
C_\sigma (p+\log n+t)^{3/2}\norm{\theta-\theta'}_2.
\]

Now choose
$\eps:=\frac{1}{\sqrt n\, (p+\log n+t)}.$
Then
\[
C_\sigma (p+\log n+t)^{3/2}\eps
\le
C_\sigma \sqrt{\frac{p+\log n+t}{n}},
\]
which is absorbed by the main square-root term. 
Hence, 
\[
\sup_{\theta\in\Theta}|(P_n-P)f_\theta|
\le
C_\sigma\Big(
\sqrt{\frac{p\log(Cn(2+t))+t}{n}}
+
\frac{p\log(Cn(2+t))+t}{n}
\Big).
\]

\medskip
\noindent{\bf Case (b): ReLU/LeakyReLU/Absolute value.}
For fixed $(\bvu,\bvv,\bw)$, $f_{\bvu,\bvv,\bw}(\vx)$ is a sum of products of Gaussian linear forms, hence is sub-exponential. More precisely, by the standard fact that a product of sub-Gaussian random variables is sub-exponential and by Bernstein's inequality for sub-exponential sums \cite[Lem.~2.8.6 and Cor.~2.9.2]{vershynin2018high},
\[
\|f_{\bvu,\bvv,\bw}(\vx)-\E f_{\bvu,\bvv,\bw}(\vx)\|_{\psi_1}\le C.
\]
The class
$\gF:=\{f_{\bvu,\bvv,\bw}:\ \bvu,\bvv,\bw\in\mathbb S^{p-1}\}$
is piecewise polynomial of degree at most $2$, with parameter dimension $3p$ and
$O(1)$ pieces, hence it is a VC-subgraph class with $\mathrm{VC}(\gF)\le Cp$ by standard VC/entropy bounds for parametric piecewise-polynomial classes \cite[Sec.~2.6]{van1996weak}. Since the envelope is sub-exponential under $P=N(\vzero,\mI_p)$, applying Adamczak's tail inequality for suprema of unbounded empirical processes \cite[Thm.~4]{adamczak2008tail} gives
\[
    \P \left(
    \sup_{f\in\gF}|(P_n-P)f|
    \ge
    C\Big(\sqrt{\frac{p\log(en)+t}{n}}+\frac{p\log n}{n}t\Big)
    \right)
    \le e^{-t},
\]
where we use $|f_{\bvu,\bvv,\bvw}(\vx)|\lesssim \norm{\vx}^2$.
\end{proof}
\subsection{Technical lemma}
In this part, we give the bound of gradient and jacobian that used in previous analysis.

\paragraph{Notation Setup.}
Define the shorthand for gradient
\[
g(\vv):=\grad_\vv \L(\tldvtheta)
=\E\left[(f(\vx)-f_*(\vx))\nabla_\vv(\norm{\vv}^2\sigma(\bvv^\top \vx))\right],
\]
and the residual
\[
\tDelta_k
:= \tT_k - \tT_k^*
= \frac1m\sum_{i=1}^m \norm{\vw_i}^2 \bvw_i^{\otimes k}
-
\sum_{j=1}^r \norm{\vw_j^*}^2\bvw_j^{*\otimes k}.
\]

For a symmetric \(k\)-tensor \(\tT\), we write
\[
\tT[\bvv^k]\in\R,\qquad
\tT[\bvv^{k-1}]\in\R^d,\qquad
\tT[\bvv^{k-2}]\in\R^{d\times d}
\]
for the corresponding contractions.

\begin{lemma}[Hermite expansion of \(G\), \(g\), and \(\nabla g\)]\label{lem: hermite of g and J} 
Let
$G(\vv):=\E\big[(f(\vx)-f_*(\vx))\,\norm{\vv}^2\sigma(\bvv^\top \vx)\big]$.
Then for \(\vv\neq 0\)
\begin{align*}
G(\vv) &=\sum_{k\ge 0}\hat\sigma_k^2 \norm{\vv}^{2-k}\tDelta_k[\vv^k]
       =\sum_{k\ge 0}\hat\sigma_k^2 \norm{\vv}^2 \tDelta_k[\bvv^k],\\
g(\vv) &=
\sum_{k\ge 0}\hat\sigma_k^2
\Big(
(2-k)\tDelta_k[\bvv^k]\vv
+
k\norm{\vv}\,\tDelta_k[\bvv^{k-1}]
\Big).
\end{align*}
Finally, if the Hessian series converges absolutely, then
\[
\nabla g(\vv)
=
2\hat\sigma_0^2\tDelta_0\mI
+
\hat\sigma_1^2\tJ_1(\bvv)
+
\sum_{k\ge 2}\hat\sigma_k^2\,\tJ_k(\bvv),
\]
where
$\tJ_1(\vu)
=
\tDelta_1[\vu]\mI
+
\vu\tDelta_1^\top
+
\tDelta_1\vu^\top
-
\tDelta_1[\vu]\vu\vu^\top$,
and for \(k\ge 2\),
\[
\tJ_k(\vu)
=
(2-k)\tDelta_k[\vu^{\otimes k}]\,\mI
+
k(k-2)(\mI-\vu\vu^\top)\tDelta_k[\vu^{\otimes k-2}](\mI-\vu\vu^\top)
+
k\,\tDelta_k[\vu^{\otimes k-2}].
\]
\end{lemma}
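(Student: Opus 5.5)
We start from the Hermite loss decomposition in \cref{claim: loss and gradient}, extended from the loss to the cross term. Write
\[
f(\vx)-f_*(\vx)=\sum_{k\ge0}\hat\sigma_k\,\langle \tDelta_k, H_k(\vx)\rangle ,
\]
where $H_k$ is the normalized $k$-th Hermite tensor, so that $h_k(\vu^\top\vx)=\langle \vu^{\otimes k},H_k(\vx)\rangle$ for unit $\vu$. We also expand $\sigma(\bvv^\top\vx)=\sum_k\hat\sigma_k h_k(\bvv^\top\vx)$. The orthogonality relation $\E[\langle \tA,H_k\rangle\langle\tB,H_\ell\rangle]=\ind\{k=\ell\}\langle\tA,\tB\rangle$ for symmetric tensors then gives
\[
G(\vv)=\norm{\vv}^2\sum_k\hat\sigma_k^2\,\tDelta_k[\bvv^{k}].
\]
Because $\tDelta_k$ is multilinear, $\tDelta_k[\bvv^k]=\norm{\vv}^{-k}\tDelta_k[\vv^k]$, which yields both forms of $G$. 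Exchanging sum and expectation is justified by the summability $\sum_k\hat\sigma_k^2<\infty$ together with the bound $\norm{\tDelta_k}_F\lesssim 1$ coming from the mass bounds.

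Next, $g(\vv)=\nabla_\vv G(\vv)$, since $\tDelta_k$ does not depend on $\vv$ once the other particles are frozen; this is exactly the definition of $\grad_\vv$. We differentiate each summand $\norm{\vv}^{2-k}\tDelta_k[\vv^k]$ by the product rule. The derivative of $\norm{\vv}^{2-k}$ is $(2-k)\norm{\vv}^{-k}\vv$, which contributes $(2-k)\tDelta_k[\bvv^k]\vv$. The derivative of $\tDelta_k[\vv^k]$ is $k\tDelta_k[\vv^{k-1}]$, which contributes $k\norm{\vv}^{2-k}\norm{\vv}^{k-1}\tDelta_k[\bvv^{k-1}]=k\norm{\vv}\tDelta_k[\bvv^{k-1}]$. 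This is the stated formula for $g$. For the non-smooth case $\sigma=|x|$, termwise differentiation needs a justification: each summand is smooth away from $0$, and the differentiated series converges locally uniformly on $\vv\neq0$ because $\sum k\hat\sigma_k^2<\infty$ (\cref{claim: hermite coeff}).

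For the Jacobian, we differentiate $g_k(\vv)=(2-k)\tDelta_k[\bvv^k]\vv+k\norm{\vv}\tDelta_k[\bvv^{k-1}]$ once more, using $\nabla\bvv=(\mI-\bvv\bvv^\top)/\norm{\vv}$. Set $P=\mI-\bvv\bvv^\top$, $a=\tDelta_k[\bvv^k]$, $\vb=\tDelta_k[\bvv^{k-1}]$ and $C=\tDelta_k[\bvv^{k-2}]$. The chain rule gives $\nabla a=k\vb^\top P/\norm{\vv}$ and $\nabla\vb=(k-1)CP/\norm{\vv}$. Differentiating each piece:
\begin{itemize}
\item The first term gives $(2-k)a\mI+(2-k)k\,\bvv\vb^\top P$.
\item The second term gives $k\vb\bvv^\top+k(k-1)CP$.
\end{itemize}
The step most prone to error is the bookkeeping that turns this sum into the symmetric stated form. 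We expand $\bvv\vb^\top P=\bvv\vb^\top-a\bvv\bvv^\top$ and use $C\bvv=\vb$. After regrouping, the terms should collapse to
\[
(2-k)a\mI+k(k-2)PCP+kC .
\]
The cancellations rely on the identities $\bvv^\top C\bvv=a$ and $P\bvv=0$. Finally, we check the low-order cases $k=0$ (where $C$ is absent) and $k=1$ separately against the stated $\tJ_1$, since the generic formula contains $\tDelta_k[\bvv^{k-2}]$, which is undefined for $k<2$. Summing over $k$ under the assumed absolute convergence completes the argument.
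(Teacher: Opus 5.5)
Your proposal is correct and follows essentially the paper's route: Hermite orthogonality gives $G$, and the product/chain rule with $D\bvv=(\mI-\bvv\bvv^\top)/\norm{\vv}$ gives $g$ and $\nabla g$. Your regrouping via $C\bvv=\vb$ and $\bvv^\top C\bvv=a$ does collapse to $(2-k)a\mI+k(k-2)PCP+kC$, and the $k=0,1$ cases reduce to $2\tDelta_0\mI$ and $\tJ_1$ as stated.

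The differences from the paper are cosmetic: for the gradient you differentiate $\norm{\vv}^{2-k}\tDelta_k[\vv^k]$, where the paper differentiates $\norm{\vv}^2\tDelta_k[\bvv^k]$. One small correction: $\norm{\tDelta_k}_F$ is bounded uniformly in $k$ by $\frac1m\sum_i\norm{\vw_i}^2+r$, not by an absolute constant. That uniform bound is all the sum--expectation interchange needs, and Parseval alone already justifies it.
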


\begin{proof}
    Since the hermite expansion
    \begin{align*}
        f(\vx)-f_*(\vx)
        =&
        \sum_{k\ge 0}\hat\sigma_k\,\langle \tDelta_k,\tH_k(\vx)\rangle,
        \qquad
        \norm{\vv}^2\sigma(\bvv^\top \vx)
        =
        \sum_{k\ge 0}\hat\sigma_k\,\norm{\vv}^{2-k}\langle \vv^{\otimes k},\tH_k(\vx)\rangle,    
    \end{align*}
    orthogonality of Hermite tensors gives
    \[
        G(\vv)=\sum_{k\ge 0}\hat\sigma_k^2 \norm{\vv}^{2-k}\tDelta_k[\vv^k]
        =\sum_{k\ge 0}\hat\sigma_k^2 \norm{\vv}^2\tDelta_k[\bvv^k].
    \]
    
    Now fix \(k\ge 0\) and define
    $F_k(\vv):=\norm{\vv}^{2-k}\tDelta_k[\vv^k]=\norm{\vv}^2\tDelta_k[\bvv^k]$.
    Let \(\vdelta\in\R^d\). Since
    $D(\norm{\vv}^2)[\vdelta]=2\vv^\top\vdelta$, 
    $D(\bvv)[\vdelta]=\frac{1}{\norm{\vv}}(\mI-\bvv\bvv^\top)\vdelta$,
    we have
    \[
        D\big(\tDelta_k[\bvv^k]\big)[\vdelta]
        =
        k\,\tDelta_k\big[\bvv^{k-1},D(\bvv)[\vdelta]\big]
        =
        \frac{k}{\norm{\vv}}
        \big(\tDelta_k[\bvv^{k-1}]-\tDelta_k[\bvv^k]\bvv\big)^\top \vdelta.
    \]
    Therefore
    \begin{align*}
        DF_k(\vv)[\vdelta]
        &=
        D(\norm{\vv}^2)[\vdelta]\tDelta_k[\bvv^k]
        +
        \norm{\vv}^2 D\big(\tDelta_k[\bvv^k]\big)[\vdelta]\\
        &=
        2(\vv^\top\vdelta)\tDelta_k[\bvv^k]
        +
        k\norm{\vv}
        \big(\tDelta_k[\bvv^{k-1}]-\tDelta_k[\bvv^k]\bvv\big)^\top \vdelta\\
        &=
        \Big(
        (2-k)\tDelta_k[\bvv^k]\vv
        +
        k\norm{\vv}\,\tDelta_k[\bvv^{k-1}]
        \Big)^\top \vdelta.
    \end{align*}
    Hence
    \[
    \nabla F_k(\vv)
    =
    (2-k)\tDelta_k[\bvv^k]\vv
    +
    k\norm{\vv}\,\tDelta_k[\bvv^{k-1}],
    \]
    and summing over \(k\) gives the formula for \(g(\vv)\).
    
    It remains to differentiate once more.
    
    For \(k=0\), one has
    $F_0(\vv)=\tDelta_0\norm{\vv}^2$, 
    $\nabla^2 F_0(\vv)=2\tDelta_0\mI$.
    
    For \(k=1\), one has
    $F_1(\vv)=\norm{\vv}\,\tDelta_1[\vv]=\norm{\vv}^2\tDelta_1[\bvv]$,
    hence
    \[
    \nabla F_1(\vv)=\tDelta_1[\bvv]\vv+\norm{\vv}\,\tDelta_1.
    \]
    Using
    $D\big(\tDelta_1[\bvv]\big)[\vdelta]
    =
    \frac{1}{\norm{\vv}}
    \big(\tDelta_1-\tDelta_1[\bvv]\bvv\big)^\top\vdelta$,
    we obtain
    \begin{align*}
    D(\nabla F_1(\vv))[\vdelta]
    &=
    \tDelta_1[\bvv]\vdelta
    +
    \bvv\big(\tDelta_1-\tDelta_1[\bvv]\bvv\big)^\top\vdelta
    +
    \tDelta_1\bvv^\top\vdelta\\
    &=
    \Big(
    \tDelta_1[\bvv]\mI
    +
    \bvv\tDelta_1^\top
    +
    \tDelta_1\bvv^\top
    -
    \tDelta_1[\bvv]\bvv\bvv^\top
    \Big)\vdelta.
    \end{align*}
    This is exactly \(\tJ_1(\bvv)\vdelta\).
    
    Now let \(k\ge 2\). From the gradient formula above,
    \[
    \nabla F_k(\vv)
    =
    (2-k)\tDelta_k[\bvv^k]\vv
    +
    k\norm{\vv}\,\tDelta_k[\bvv^{k-1}].
    \]
    For the first term,
    \[
    D\Big((2-k)\tDelta_k[\bvv^k]\vv\Big)[\vdelta]
    =
    (2-k)\tDelta_k[\bvv^k]\vdelta
    +
    (2-k)\,D\big(\tDelta_k[\bvv^k]\big)[\vdelta]\,\vv.
    \]
    For the second term,
    $D(\norm{\vv})[\vdelta]=\bvv^\top\vdelta$,
    and
    \begin{align*}
    D\big(\tDelta_k[\bvv^{k-1}]\big)[\vdelta]
    &=
    (k-1)\tDelta_k\big[\bvv^{k-2},D(\bvv)[\vdelta]\big]
    =
    \frac{k-1}{\norm{\vv}}
    \Big(
    \tDelta_k[\bvv^{k-2}]\vdelta
    -
    (\bvv^\top\vdelta)\tDelta_k[\bvv^{k-1}]
    \Big).
    \end{align*}
    Hence
    \begin{align*}
    D\Big(k\norm{\vv}\,\tDelta_k[\bvv^{k-1}]\Big)[\vdelta]
    &=
    k(\bvv^\top\vdelta)\tDelta_k[\bvv^{k-1}]
    +
    k\norm{\vv}\,D\big(\tDelta_k[\bvv^{k-1}]\big)[\vdelta]\\
    &=
    k(\bvv^\top\vdelta)\tDelta_k[\bvv^{k-1}]
    +
    k(k-1)
    \Big(
    \tDelta_k[\bvv^{k-2}]\vdelta
    -
    (\bvv^\top\vdelta)\tDelta_k[\bvv^{k-1}]
    \Big).
    \end{align*}
    
    Combining the two derivatives and using
    $D\big(\tDelta_k[\bvv^k]\big)[\vdelta]
    =
    \frac{k}{\norm{\vv}}
    \big(\tDelta_k[\bvv^{k-1}]-\tDelta_k[\bvv^k]\bvv\big)^\top \vdelta$, 
    $\vv=\norm{\vv}\bvv$,
    we obtain
    \begin{align*}
    D(\nabla F_k(\vv))[\vdelta]
    &=
    (2-k)\tDelta_k[\bvv^k]\vdelta
    +
    k(2-k)\,
    \bvv
    \big(\tDelta_k[\bvv^{k-1}]-\tDelta_k[\bvv^k]\bvv\big)^\top\vdelta\\
    &\quad
    +
    k(\bvv^\top\vdelta)\tDelta_k[\bvv^{k-1}]
    +
    k(k-1)\Big(
    \tDelta_k[\bvv^{k-2}]\vdelta
    -
    (\bvv^\top\vdelta)\tDelta_k[\bvv^{k-1}]
    \Big).
    \end{align*}
    Rearranging terms yields
    $D(\nabla F_k(\vv))[\vdelta]=\tJ_k(\bvv)\vdelta$
    with
    \[
    \tJ_k(\bvv)
    =
    (2-k)\tDelta_k[\bvv^{\otimes k}]\,\mI
    +
    k(k-2)(\mI-\bvv\bvv^\top)\tDelta_k[\bvv^{\otimes k-2}](\mI-\bvv\bvv^\top)
    +
    k\,\tDelta_k[\bvv^{\otimes k-2}].
    \]
    Summing over \(k\) proves the formula for \(\nabla g(\vv)\).
\end{proof}

Below we show $\tJ$ is bounded and Lipschitz.
\begin{lemma}\label{lem: tDelta and J bound}
    For every \(\vu,\vu'\in \sS^{d-1}\), we have
    \[
    \tJ_0(\vu)=2\tDelta_0\mI,\qquad \norm{\tJ_1(\vu)}_{\op}\le C\norm{\tDelta_1},
    \qquad
    \norm{\tJ_1(\vu)-\tJ_1(\vu')}_{\op}\le C\norm{\tDelta_1}\norm{\vu-\vu'}.
    \]
    
    For every integer \(k\ge 2\), if 
    \[
    \norm{\tT_2}_{\op}
    =
    \norm{\frac1m\sum_{i=1}^m \vw_i^{\otimes 2}}_{\op}\le M,
    \qquad
    \norm{\tT_2^*}_{\op}
    =
    \norm{\sum_{j=1}^r \vw_j^{*\otimes 2}}_{\op}\le M,
    \]
    then there is an absolute constant \(C\) such that
    \[
    \norm{\tJ_k(\vu)}_{\op}\le C M k,
    \qquad
    \norm{\tJ_k(\vu)-\tJ_k(\vu')}_{\op}\le C M k^2\norm{\vu-\vu'}.
    \]
\end{lemma}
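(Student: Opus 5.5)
The plan is to read off each bound directly from the explicit formulas for $\tJ_k$ in \cref{lem: hermite of g and J}. Everything then reduces to operator-norm bounds on contractions of the symmetric tensors $\tDelta_k=\tT_k-\tT_k^*$, together with their Lipschitz dependence on the contracting vector $\vu\in\sS^{d-1}$.

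\textbf{Cases $k=0,1$.} For $k=0$ the formula gives $\tJ_0(\vu)=2\tDelta_0\mI$ identically. For $k=1$, I will note that every term in
\[
\tJ_1(\vu)=\tDelta_1[\vu]\mI+\vu\tDelta_1^\top+\tDelta_1\vu^\top-\tDelta_1[\vu]\vu\vu^\top
\]
is linear or cubic in $\vu$ with a factor $\tDelta_1$. Hence $\norm{\tJ_1(\vu)}_{\op}\le 4\norm{\tDelta_1}$. For the Lipschitz bound I will telescope each term and use $\norm{\vu\vu^\top-\vu'\vu'^\top}_{\op}\le 2\norm{\vu-\vu'}$.

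\textbf{Case $k\ge2$.} The key estimate is
\[
\sup_{\vu,\va,\vb\in\sS^{d-1}}\bigl|\tT_k[\vu^{\otimes k-2},\va,\vb]\bigr|\le M,
\]
and likewise for $\tT_k^*$. To prove it, I will write the contraction as $\frac1m\sum_i\norm{\vw_i}^2\langle\bvw_i,\vu\rangle^{k-2}\langle\bvw_i,\va\rangle\langle\bvw_i,\vb\rangle$. Since $|\langle\bvw_i,\vu\rangle|\le1$, Cauchy--Schwarz bounds this by
\[
\Bigl(\tfrac1m\sum_i\langle\vw_i,\va\rangle^2\Bigr)^{1/2}\Bigl(\tfrac1m\sum_i\langle\vw_i,\vb\rangle^2\Bigr)^{1/2}\le\norm{\tT_2}_{\op}\le M.
\]
The same argument works for $\tT_k^*$. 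So $\norm{\tDelta_k[\vu^{\otimes k-2}]}_{\op}\le 2M$, and also $|\tDelta_k[\vu^{\otimes k}]|\le 2M$.

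Plugging these into the three terms of $\tJ_k(\vu)$, and using that $\mI-\vu\vu^\top$ is a projection, gives
\[
\norm{\tJ_k(\vu)}_{\op}\le 2M|2-k|+2Mk(k-2)+2Mk.
\]
The middle term is of order $k^2$, not $k$. To reach the claimed $CMk$, I will avoid bounding the $k(k-2)$ projected term and the $k\,\tDelta_k[\vu^{\otimes k-2}]$ term separately. Instead I would combine them, perhaps keeping $k(k-1)\tDelta_k[\vu^{k-2}]$ together as in the derivation, and look for cancellation.

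I expect this to be the main obstacle. If no cancellation appears, the honest bound for the operator norm is $CMk^2$, with $CMk^3$ for the Lipschitz constant. I will first try to check whether the intended statement is compatible with $k^2$, since the downstream uses multiply by $\hsigma_k^2\sim k^{-5/2}$ and sum over $k$, and that sum may only need summability.

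For the Lipschitz part I will telescope
\[
\tDelta_k[\vu^{\otimes k-2}]-\tDelta_k[\vu'^{\otimes k-2}]=\sum_{\ell=1}^{k-2}\tDelta_k[\vu^{\otimes \ell-1},\vu-\vu',\vu'^{\otimes k-2-\ell}].
\]
Each summand is bounded by $2M\norm{\vu-\vu'}$ by the same Cauchy--Schwarz argument, where the single slot $\vu-\vu'$ is absorbed via $|\langle\bvw_i,\vu-\vu'\rangle|\le\norm{\vu-\vu'}$. This costs one extra factor of $k$. The projector differences again contribute $O(\norm{\vu-\vu'})$ each. Collecting terms gives $CMk^2\norm{\vu-\vu'}$, up to the same power-counting caveat as above.
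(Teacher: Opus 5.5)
Your $k=0,1$ cases and the Cauchy--Schwarz estimate $|\tT_k[\vu^{\otimes k-2},\va,\vb]|\le M$ are fine. For $k\ge 2$, however, the proposal does not reach the stated bounds. It gives $CMk^2$ for $\norm{\tJ_k(\vu)}_{\op}$. Your telescoping of the middle term actually gives $k(k-2)\cdot(k-2)\cdot 2M=\BigO{Mk^3}$ for the Lipschitz constant, not $CMk^2$ as you write.

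The missing idea is not a cancellation between the projected term and $k\,\tDelta_k[\vu^{\otimes k-2}]$: combined per atom they still leave a term with prefactor $k(k-1)$. What you need is a per-atom geometric coupling. Your Cauchy--Schwarz step throws it away, because it bounds $|\langle\bvw_i,\vu\rangle|^{k-2}\le 1$ independently of the component of $\bvw_i$ orthogonal to $\vu$.

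The paper works with a single atom $\vs\in\sS^{d-1}$, with $\rho=\vu^\top\vs$ and $\vb=\vs-\rho\vu$, for which
\[
\tJ_k(\vu;\vs)=2\rho^k\vu\vu^\top+(2-k)\rho^k(\mI-\vu\vu^\top)+k\rho^{k-1}(\vu\vb^\top+\vb\vu^\top)+k(k-1)\rho^{k-2}\vb\vb^\top .
\]
The only $k^2$ term is the last one, and it is really $\BigO{k}$. The reason is that $\norm{\vb}^2=1-\rho^2$ and $\sup_{0\le t\le 1}(k-1)t^{k-2}(1-t^2)\le 2$. Splitting on whether $(k-1)|\rho|^{k-2}\le 1$ gives $(k-1)|\rho|^{k-2}(\vx^\top\vb)^2\le C\bigl((\vx^\top\vs)^2+\rho^2\bigr)$ for unit $\vx$. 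Hence $|\vx^\top\tJ_k(\vu;\vs)\vx|\le Ck\bigl((\vx^\top\vs)^2+(\vu^\top\vs)^2\bigr)$.

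Only after this per-atom bound do you average with weights $\norm{\vw_i}^2$ (and over the teacher atoms). That gives $Ck(\vx^\top\tT_2\vx+\vu^\top\tT_2\vu)\le 2CMk$. The Lipschitz bound comes the same way: differentiate $\vx^\top\tJ_k(\vu;\vs)\vx$ along a tangent direction $\vxi\perp\vu$ and use the companion inequality $(k-2)^2|\rho|^{k-3}(\vx^\top\vb)^2\le Ck\bigl((\vx^\top\vs)^2+\rho^2\bigr)$. This yields $Ck^2\bigl((\vx^\top\vs)^2+(\vu^\top\vs)^2+(\vxi^\top\vs)^2\bigr)$ per atom, and then you average and integrate along a geodesic. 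Slot-by-slot telescoping cannot see this coupling.

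Your fallback, that $k^2$ might suffice downstream, also fails. For $\sigma(x)=|x|$ one has $\hsigma_k^2=\Theta(k^{-5/2})$, so $\sum_k \hsigma_k^2\,k^2=\sum_k k^{-1/2}=\infty$. Part (a) of \cref{lem: relu jacob} needs $\norm{\hsigma_k^2\tJ_k(\vu)}_{\op}\lesssim Mk^{-3/2}$ to get absolute, uniform convergence of $\sum_k\hsigma_k^2\tJ_k$ and the tail bound $CMN^{-1/2}$. In the smooth case, a Lipschitz constant of order $k^3$ would require $\sum_k k^3\hsigma_k^2<\infty$, which is more than the $\sum_k k^2\hsigma_k^2<\infty$ used in \cref{lem: smooth activation jacob}. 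So the linear-in-$k$ operator bound and quadratic-in-$k$ Lipschitz bound are exactly what is needed, and they require the per-atom argument above.
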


\begin{proof}
    The formula for \(\tJ_0, \tJ_1\) are immediate from \cref{lem: hermite of g and J}.

    \paragraph{Bound on $\norm{\tJ_k}$.}
    For $k\ge2$, by linearity, it is enough to prove the corresponding estimate for a single atom.
    Let \(\vs\in\sS^{d-1}\), and let \(\tJ_k(\vu;\vs)\) denote the matrix obtained from the formula in \cref{lem: hermite of g and J} by replacing \(\tDelta_k\) with \(\vs^{\otimes k}\).
    Write
    $\rho:=\vu^\top\vs$, 
    $\vb:=\vs-\rho\vu$, 
    $P:=\mI-\vu\vu^\top$.
    Since
    \[
        \vs^{\otimes k}[\vu^{k-2}]
        =
        \rho^k\vu\vu^\top
        +\rho^{k-1}(\vu\vb^\top+\vb\vu^\top)
        +\rho^{k-2}\vb\vb^\top,
    \]
    and \(P\vu=0\), \(P\vb=\vb\), we get
    \[
        \tJ_k(\vu;\vs)
        =
        2\rho^k\vu\vu^\top
        +
        (2-k)\rho^k P
        +
        k\rho^{k-1}(\vu\vb^\top+\vb\vu^\top)
        +
        k(k-1)\rho^{k-2}\vb\vb^\top.
    \]
    
    Fix \(\vx\in\sS^{d-1}\), and define
    $a:=\vx^\top\vu$, $c:=\vx^\top\vs$, $d:=\vx^\top\vb=c-\rho a$.
    Then to bound $\norm{\tJ_k(\vu;\vs)}_\op$, it suffices to bound
    \[
        q_k(\vu):=x^\top \tJ_k(\vu;\vs)x
        =
        2\rho^k a^2
        +
        (2-k)\rho^k(1-a^2)
        +
        2k\rho^{k-1}ad
        +
        k(k-1)\rho^{k-2}d^2.
    \]
    
    For the last term, we claim that
    \begin{align}\label{eq: jacobian calculation bound}
        (k-1)|\rho|^{k-2}d^2\le C(c^2+\rho^2).
    \end{align}
    Indeed, if \((k-1)|\rho|^{k-2}\le 1\), then
    $
        d^2=(c-\rho a)^2\le 2c^2+2\rho^2.
    $
    If \((k-1)|\rho|^{k-2}>1\), then necessarily \(|\rho|\ge (k-1)^{-1/(k-2)}\ge \frac12\), and since \(d=x^\top\vb\),
    we know $
        d^2\le \norm{\vb}^2=1-\rho^2.
    $
    Hence
    \[
    (k-1)|\rho|^{k-2}d^2
    \le
    \sup_{0\le r\le 1}(k-1)r^{k-2}(1-r^2)
    \le 2
    \le 8\rho^2.
    \]
    This proves the claim on the last term.
    
    Now for the second term, we have
    \[
        2k|\rho|^{k-1}|a||d|
        \le
        k\rho^2+k|\rho|^{2k-4}d^2
        \le
        k\rho^2+k|\rho|^{k-2}d^2.
    \]
    
    Using \(|\rho|^k\le \rho^2\), we obtain
    \begin{align*}
    |q_k(\vu)|
    &\le
    2\rho^2+(k-2)\rho^2+k\rho^2+k|\rho|^{k-2}d^2+k(k-1)|\rho|^{k-2}d^2\\
    &\le
    Ck\big(\rho^2+(k-1)|\rho|^{k-2}d^2\big)
    \le
    Ck(c^2+\rho^2).
    \end{align*} Therefore
    \[
    |\vx^\top \tJ_k(\vu;\vs)\vx|
    \le
    Ck\Big((\vx^\top\vs)^2+(\vu^\top\vs)^2\Big).
    \]
    
    Now let
    \[
    \tJ_k^{\rm mdl}(\vu)
    :=
    \frac1m\sum_{i=1}^m \norm{\vw_i}^2 \tJ_k(\vu;\bvw_i).
    \]
    Applying the above bound on $\tJ_k$ and using \(\norm{\tT_2}_{\op}\le M\),
    \[
        \norm{\tJ_k^{\rm mdl}(\vu)}_{\op}
        \le |\vx^\top \tJ_k^{\rm mdl}(\vu)\vx|
        \le
        Ck\Big(\vx^\top \tT_2 \vx+\vu^\top \tT_2 \vu\Big)
        \le 2CMk.
    \]
    The same estimate holds for the teacher contribution, hence
    \[
    \norm{\tJ_k(\vu)}_{\op}\le CMk.
    \]

    \paragraph{Bound on Lipschitz of $\tJ_k$.}
    It remains to prove the Lipschitz bound.
    
    For \(k=2\), the formula in \cref{lem: hermite of g and J} gives
    $\tJ_2(\vu)=2\tDelta_2,$
    which is independent of \(\vu\). 
    
    For $k\ge3$, fix \(\vu\in \sS^{d-1}\), \(\vxi\perp \vu\) with \(\norm{\vxi}=1\), and keep \(\vx\in \sS^{d-1}\) fixed.
    Set
    \[
        \mu:=\vx^\top \vxi,
        \qquad
        \nu:=\vxi^\top \vs.
    \]
    Using the definition of $\rho,a,d$ before, one has
    \[
        D\rho[\vxi]=\nu,
        \qquad
        Da[\vxi]=\mu,
        \qquad
        Dd[\vxi]=-(\nu a+\rho\mu).
    \]
    Also, using \(c=\rho a+d\), $q_k$ defined above can be rewritten as
    \[
        q_k(\vu)
        =
        (2-k)\rho^k
        +
        k\rho^{k-2}\Big(c^2+(k-2)d^2\Big).
    \]
    Differentiating in the tangent direction \(\vxi\) gives
    \[
    Dq_k(\vu)[\vxi]
    =
    k(2-k)\rho^{k-1}\nu
    +
    k(k-2)\rho^{k-3}\nu\Big(c^2+(k-2)d^2\Big)
    -
    2k(k-2)\rho^{k-2}d(\nu a+\rho\mu).
    \]
    
    We bound each term one by one.
    
    For first term, $|k(2-k)\rho^{k-1}\nu|
    \le Ck^2(\rho^2+\nu^2)$.
    
    For second term, we have $\big|k(k-2)\rho^{k-3}\nu\,c^2\big|
    \le Ck^2(c^2+\nu^2)$ and claim that
    \[
    (k-2)^2|\rho|^{k-3}d^2\le Ck(c^2+\rho^2).
    \]
    If \(|\rho|\le \frac12\), then
    $
    (k-2)^2|\rho|^{k-3}\le \sup_{m\ge 0} m^2 2^{-m}\le C,
    $
    so the bound follows from \(d^2\le 2c^2+2\rho^2\).
    If \(|\rho|>\frac12\), then \(d^2\le 1-\rho^2\), and
    $
    (k-2)^2|\rho|^{k-3}d^2
    \le
    \frac{k-2}{|\rho|}\sup_{0\le r\le 1}(k-2)r^{k-2}(1-r^2)
    \le Ck.
    $
    Since \(|\rho|>\frac12\), we have \(Ck\le C'k\rho^2\), which also proves the bound.
    
    For the last term, we have using \eqref{eq: jacobian calculation bound}
    \begin{align*}
        2k(k-2)|\rho|^{k-2}|d||\nu||a|
        \le&
        k(k-2)|\rho|^{k-2}(d^2+\nu^2)
        \le
        Ck(c^2+\rho^2)+k^2\nu^2,\\
        2k(k-2)|\rho|^{k-1}|d||\mu|
        \le&
        k(k-2)\rho^2
        +
        k(k-2)|\rho|^{2k-4}d^2
        \le
        Ck^2(c^2+\rho^2)
    \end{align*}
    
    Combining these estimates yields
    \[
    |Dq_k(\vu)[\vxi]|
    \le
    Ck^2\Big(c^2+\rho^2+\nu^2\Big)
    =
    Ck^2\Big((\vx^\top\vs)^2+(\vu^\top\vs)^2+(\vxi^\top\vs)^2\Big).
    \tag{8}
    \]
    
    Now averaging over the model atoms:
    \begin{align*}
    \norm{D\tJ_k^{\rm mdl}(\vu)[\vxi]}_{\op}
    \le \big|x^\top D\tJ_k^{\rm mdl}(\vu)[\vxi]x\big|
    &\le
    Ck^2\Big(x^\top \tT_2 x+\vu^\top \tT_2 \vu+\vxi^\top \tT_2 \vxi\Big)
    \le 3CMk^2.
    \end{align*}
    The same bound holds for the teacher contribution, hence
    \[
    \norm{D\tJ_k(\vu)[\vxi]}_{\op}\le CMk^2
    \qquad
    (\vxi\perp \vu,\ \norm{\vxi}=1).
    \]
    Integrating along a shortest geodesic on \(\sS^{d-1}\) from \(\vu\) to \(\vu'\) gives
    \[
    \norm{\tJ_k(\vu)-\tJ_k(\vu')}_{\op}
    \le CMk^2 \,{\rm dist}_{\sS}(\vu,\vu')
    \le CMk^2\norm{\vu-\vu'}.
    \]
    This proves the lemma.
\end{proof}
    
\begin{lemma}[Jacobian expansion of smooth activation]\label{lem: smooth activation jacob}
    Assume
    $
        \sum_{k\ge 2} k^2 |\hat\sigma_k|^2 < \infty.
    $ for activation $\sigma$.
    Then:
    \begin{enumerate}
        \item The series
        \[
        \tJ(\vu)
        :=
        2\hat\sigma_0^2\tDelta_0\mI
        +
        \hat\sigma_1^2\tJ_1(\vu)
        +
        \sum_{k\ge 2}\hat\sigma_k^2 \tJ_k(\vu)
        \]
        converges absolutely and uniformly on \(\sS^{d-1}\).
        
        \item Let \(\vv_t:=(1-t)\vv'+t\vv\) for \(0\le t\le 1\), and assume
        $\norm{\vv_t}\ge r_0>0$ for all $t\in[0,1]$.
        Then
        \[
        g(\vv)-g(\vv')
        =
        \int_0^1 \tJ(\bvv_t)(\vv-\vv')\,\rd t
        =
        \tJ(\bvv)(\vv-\vv')+\vr(\vv,\vv'),
        \]
        with
        \[
        \norm{\vr(\vv,\vv')}
        \le
        C\frac{1}{r_0}
        \Big(
        |\hat\sigma_1|^2\norm{\tDelta_1}
        +
        M\sum_{k\ge 2}k^2|\hat\sigma_k|^2
        \Big)
        \norm{\vv-\vv'}^2.
        \]
    \end{enumerate}
\end{lemma}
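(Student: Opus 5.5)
Part (a) reduces to the operator-norm bound of \cref{lem: tDelta and J bound}. For every $\vu\in\sS^{d-1}$ that lemma gives $\norm{\tJ_k(\vu)}_{\op}\le CMk$ for $k\ge 2$. It also gives $\norm{\tJ_1(\vu)}_{\op}\le C\norm{\tDelta_1}$, and $\tJ_0=2\tDelta_0\mI$. Hence the tail satisfies
\[
\sum_{k\ge2}\hat\sigma_k^2\norm{\tJ_k(\vu)}_{\op}\le CM\sum_{k\ge 2}k\hat\sigma_k^2\le CM\sum_{k\ge2}k^2\hat\sigma_k^2<\infty.
\]
This bound is uniform in $\vu$, so the Weierstrass M-test gives absolute and uniform convergence on the sphere.

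For part (b), I first identify $\nabla g(\vv)=\tJ(\bvv)$ for every $\vv\ne 0$. Each $F_k$ in \cref{lem: hermite of g and J} is smooth away from the origin, and its Hessian is $\tJ_k(\bvv)$, which is $0$-homogeneous in $\vv$. On the region $\{\norm{\vv}\ge r_0\}$ the series of gradients $\sum\hat\sigma_k^2\nabla F_k$ converges locally uniformly: by the explicit formula, $\norm{\nabla F_k(\vv)}\lesssim kM\norm{\vv}$, using contractions of $\tDelta_k$ bounded through $\tT_2$ as in the proof of \cref{lem: tDelta and J bound}. The series of Hessians converges uniformly by part (a). Term-by-term differentiation is therefore justified, and $g$ is $C^1$ with $\nabla g(\vv)=\tJ(\bvv)$ there.

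The segment $\vv_t$ stays in $\{\norm{\cdot}\ge r_0\}$, so the fundamental theorem of calculus gives $g(\vv)-g(\vv')=\int_0^1\tJ(\bvv_t)(\vv-\vv')\,\rd t$. The remainder is then
\[
\vr=\int_0^1(\tJ(\bvv_t)-\tJ(\bvv))(\vv-\vv')\,\rd t.
\]
By the Lipschitz part of \cref{lem: tDelta and J bound} applied termwise,
\[
\norm{\tJ(\bvv_t)-\tJ(\bvv)}_{\op}\le C\Big(\hat\sigma_1^2\norm{\tDelta_1}+M\sum_{k\ge2}k^2\hat\sigma_k^2\Big)\norm{\bvv_t-\bvv},
\]
where the $k=0$ term contributes nothing because it is constant.

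It remains to bound $\norm{\bvv_t-\bvv}$. The normalization map is $1/r_0$-Lipschitz, up to a factor $2$, on $\{\norm{\cdot}\ge r_0\}$: one has $\norm{\bar{\va}-\bar{\vb}}\le 2\norm{\va-\vb}/\max(\norm{\va},\norm{\vb})$. Since $\norm{\vv_t-\vv}=(1-t)\norm{\vv-\vv'}\le\norm{\vv-\vv'}$, this gives $\norm{\bvv_t-\bvv}\le 2\norm{\vv-\vv'}/r_0$. Multiplying by $\norm{\vv-\vv'}$ and integrating over $t$ yields the stated bound on $\norm{\vr}$.

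The main obstacle is the justification of term-by-term differentiation, i.e.\ the identification $\nabla g=\tJ(\bvv)$. The Hessian bound is available from part (a), but pointwise convergence of the gradient series also has to be checked; the key input there is the same $\tT_2$-based control of $\tDelta_k[\bvv^{k-1}]$. If that bound turns out to be awkward, I would instead use the absolute-convergence hypothesis that \cref{lem: hermite of g and J} already grants, with part (a) supplying exactly that hypothesis.
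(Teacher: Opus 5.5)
Your proposal is correct and follows the paper's proof essentially step for step. Part (a) uses the operator-norm bounds of \cref{lem: tDelta and J bound} with the Weierstrass M-test. Part (b) uses the termwise Lipschitz bound on $\tJ$ together with the $2/r_0$-Lipschitz normalization map to control the remainder. The only cosmetic difference is the order of limit and calculus: the paper applies the fundamental theorem of calculus to the truncations $G_N$ and then passes to the limit in the integral identity, whereas you first establish $\nabla g=\tJ(\bvv)$ by term-by-term differentiation and then integrate. These amount to the same argument.
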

    
\begin{proof}
    We show one by one
    \paragraph{Proof of (a).}
    By \cref{lem: tDelta and J bound}, we know for \(k\ge 2\)
    \[
    \sup_{\vu\in\sS^{d-1}}\norm{\hat\sigma_1^2\tJ_1(\vu)}_{\op}
    \le C|\hat\sigma_1|^2\norm{\tDelta_1},\qquad
    \norm{\hat\sigma_k^2\tJ_k(\vu)}_{\op}
    \le
    CMk|\hat\sigma_k|^2.
    \]
    Since \(\sum_{k\ge 2}k|\hat\sigma_k|^2<\infty\), the Weierstrass \(M\)-test implies that
    $
    \sum_{k\ge 2}\hat\sigma_k^2\tJ_k(\vu)
    $
    converges absolutely and uniformly on \(\sS^{d-1}\). Adding the explicit \(k=0,1\) terms proves (a).
    
    Moreover, by \cref{lem: tDelta and J bound}, we know $\tJ$ is Lipschitz on $\sS^{d-1}$
    \[
        \norm{\tJ(\vu)-\tJ(\vu')}_{\op}
        \le
        C\Big(
        |\hat\sigma_1|^2\norm{\tDelta_1}
        +
        M\sum_{k\ge 2}k^2|\hat\sigma_k|^2
        \Big)\norm{\vu-\vu'}.
    \]
    
    \paragraph{Proof of (b).}
    For each \(N\ge 2\), define
    \[
    G_N(\vv):=\sum_{k=0}^N \hat\sigma_k^2 \norm{\vv}^{2-k}\tDelta_k[\vv^k].
    \]
    By \cref{lem: hermite of g and J},
    \[
    \nabla^2 G_N(\vv)
    =
    2\hat\sigma_0^2\tDelta_0\mI
    +
    \hat\sigma_1^2\tJ_1(\bvv)
    +
    \sum_{k=2}^N\hat\sigma_k^2 \tJ_k(\bvv)
    =:\tJ_N(\bvv).
    \]
    Therefore,
    \[
    \nabla G_N(\vv)-\nabla G_N(\vv')
    =
    \int_0^1 \tJ_N(\bvv_t)(\vv-\vv')\,\rd t.
    \]
    Since \(\sum_{k\ge 2}k|\hat\sigma_k|^2<\infty\), the gradient series converges absolutely, so
    $
    \nabla G_N(\vv)\to g(\vv)$ and
    $\nabla G_N(\vv')\to g(\vv').
    $
    Also \(\tJ_N\to \tJ\) uniformly on \(\sS^{d-1}\) by part (a). Passing to the limit gives
    \[
    g(\vv)-g(\vv')
    =
    \int_0^1 \tJ(\bvv_t)(\vv-\vv')\,\rd t.
    \]
    
    Now write
    \[
    g(\vv)-g(\vv')
    =
    \tJ(\bvv)(\vv-\vv')
    +
    \underbrace{\int_0^1\big(\tJ(\bvv_t)-\tJ(\bvv)\big)(\vv-\vv')\,\rd t}_{:=\vr(\vv,\vv')}.
    \]
    Since the map \(\vz\mapsto \vz/\norm{\vz}\) is \(2/r_0\)-Lipschitz on \(\{\vz:\norm{\vz}\ge r_0\}\),
    $
    \norm{\bvv_t-\bvv}
    \le
    \frac{2}{r_0}\norm{\vv_t-\vv}
    =
    \frac{2(1-t)}{r_0}\norm{\vv-\vv'}.
    $
    Hence, using $\tJ$ is Lipschitz in part (a),
    \begin{align*}
    \norm{\vr(\vv,\vv')}
    &\le
    \int_0^1 \norm{\tJ(\bvv_t)-\tJ(\bvv)}_{\op}\,\norm{\vv-\vv'}\,\rd t\\
    &\le
    C\Big(
    |\hat\sigma_1|^2\norm{\tDelta_1}
    +
    M\sum_{k\ge 2}k^2|\hat\sigma_k|^2
    \Big)
    \int_0^1 \norm{\bvv_t-\bvv}\,\rd t\,\norm{\vv-\vv'}\\
    &\le
    C\frac{1}{r_0}
    \Big(
    |\hat\sigma_1|^2\norm{\tDelta_1}
    +
    M\sum_{k\ge 2}k^2|\hat\sigma_k|^2
    \Big)
    \norm{\vv-\vv'}^2.
    \end{align*}
\end{proof}

\begin{lemma}[Jacobian expansion of ReLU]\label{lem: relu jacob}
    Assume \(\sigma(t)=t_+\), and let \(\hat\sigma_k\) be its Hermite coefficients. Then:
    \begin{enumerate}
    \item The series
    \[
    \tJ(\vu)
    :=
    2\hat\sigma_0^2\tDelta_0\mI
    +
    \hat\sigma_1^2\tJ_1(\vu)
    +
    \sum_{k\ge 2}\hat\sigma_k^2\tJ_k(\vu)
    \]
    converges absolutely and uniformly on \(\sS^{d-1}\).
    Moreover, for every \(N\ge 2\),
    \[
    \sup_{\vu\in\sS^{d-1}}
    \norm{\sum_{k>N}\hat\sigma_k^2\tJ_k(\vu)}_{\op}
    \le
    CMN^{-1/2},
    \qquad
    \sup_{\vu\in\sS^{d-1}}\norm{\tJ_k(\vu)}_{\op}\le CMk
    \quad(k\ge 2).
    \]
    
    \item Let \(\vv_t:=(1-t)\vv'+t\vv\) for \(0\le t\le 1\), and assume
    $\norm{\vv_t}\ge r_0>0$ for all $t\in[0,1].$
    Then
    \[
        g(\vv)-g(\vv')
        =
        \int_0^1 \tJ(\bvv_t)(\vv-\vv')\,\rd t
        =
        \tJ(\bvv)(\vv-\vv')+\vr_{\mathrm{ReLU}}(\vv,\vv'),
    \]
    with
    \[
    \norm{\vr_{\mathrm{ReLU}}(\vv,\vv')}
    \le
    C\frac{\norm{\tDelta_1}+M}{r_0}\norm{\vv-\vv'}^2.
    \]
    \end{enumerate}
\end{lemma}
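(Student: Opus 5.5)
Part (a) rests on the Hermite decay of ReLU together with the per-order bounds of \cref{lem: tDelta and J bound}. For $\sigma(t)=t_+$, the coefficients satisfy $\hat\sigma_k^2=\Theta(k^{-5/2})$ for even $k$; the odd coefficients vanish for $k\ge 3$, and $\hat\sigma_1=1/2$. This mirrors \cref{claim: hermite coeff}, since $|x|=2t_+-t$ changes only the degree-one term. \cref{lem: tDelta and J bound} gives $\norm{\tJ_k(\vu)}_{\op}\le CMk$ for $k\ge 2$, so the tail is bounded by
\[
\sum_{k>N}\hat\sigma_k^2 CMk\lesssim M\sum_{k>N}k^{-3/2}\lesssim MN^{-1/2}.
\]
This is summable. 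The Weierstrass $M$-test then yields absolute and uniform convergence on $\sS^{d-1}$. The $k=0,1$ terms are handled explicitly as before.

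For part (b) the obstacle is that $\sum_k k^2\hat\sigma_k^2=\infty$, so the per-order Lipschitz bound $CMk^2$ cannot be summed as in \cref{lem: smooth activation jacob}. First I would establish the integral identity. I would truncate $G_N$ as in \cref{lem: smooth activation jacob}, so that $\nabla G_N(\vv)-\nabla G_N(\vv')=\int_0^1\tJ_N(\bvv_t)(\vv-\vv')\,\rd t$. The gradient series converges because $\sum_k k\hat\sigma_k^2<\infty$, and $\tJ_N\to\tJ$ uniformly by part (a). Passing to the limit gives the first identity.

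For the remainder I would split at a cutoff $N$ chosen depending on $\norm{\vv-\vv'}$. The term $\int_0^1(\tJ(\bvv_t)-\tJ(\bvv))(\vv-\vv')\,\rd t$ has two pieces. The low-order part $k\le N$ uses the Lipschitz bound. Together with $\norm{\bvv_t-\bvv}\le 2(1-t)\norm{\vv-\vv'}/r_0$, it contributes
\[
M\sum_{k\le N}k^2\hat\sigma_k^2\,\frac{\norm{\vv-\vv'}^2}{r_0}\lesssim M\sqrt{N}\,\frac{\norm{\vv-\vv'}^2}{r_0}.
\]
The high-order part uses the uniform tail bound from (a) on both $\tJ(\bvv_t)$ and $\tJ(\bvv)$, which gives $MN^{-1/2}\norm{\vv-\vv'}$. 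Balancing the two with $N\asymp r_0/\norm{\vv-\vv'}$ yields $M\norm{\vv-\vv'}^{3/2}/r_0^{1/2}$. That is a $3/2$-power estimate, weaker than the stated quadratic one, so a plain split is not enough. I would note that the $3/2$ scaling is exactly what \cref{lem: sigma 2nd remainder bound} records in function space for ReLU.

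To reach the quadratic bound claimed in the statement, I would avoid the Hermite series for the difference. Instead I would write $g$ directly as $\E[(f-f_*)(\vx)\nabla_\vv\phi(\vv;\vx)]$ with $\phi(\vv;\vx)=\norm{\vv}^2\sigma(\bvv^\top\vx)$. The residual $f-f_*$ is a finite combination of atoms $\norm{\vw}^2\sigma(\bvw^\top\vx)$. For each atom, the map $\vv\mapsto\E[\sigma(\bvw^\top\vx)\nabla_\vv\phi(\vv;\vx)]$ is given by the closed-form arc-cosine kernel. Its Hessian in $\vv$ has size $O(1/\norm{\vv})$ away from $\vv=0$, uniformly over $\bvw$. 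The reason is that Gaussian smoothing of $\sigma'$ over the direction $\bvv$ yields a function of $\langle\bvv,\bvw\rangle$ whose derivatives stay bounded at second order in the 2-homogeneous scaling.

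Averaging over atoms and bounding the total mass by $M$ through $\tT_2$ would then give Lipschitz continuity of $\tJ$ with constant $(\norm{\tDelta_1}+M)/r_0$ along the segment. Integrating the difference as before produces the quadratic remainder. The main obstacle is verifying this uniform second-derivative bound for the arc-cosine kernel near $\langle\bvv,\bvw\rangle=\pm1$. There the kernel behaves like $(1-\rho^2)^{3/2}$, whose second derivative is bounded, so I expect the bound to hold.
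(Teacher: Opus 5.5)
\textbf{Where the proposal falls short.} Part (a) and the integral identity in (b) match the paper. Your eventual plan for the remainder is also the paper's route: work atom by atom with the closed-form kernel $\kappa(\rho)=\frac{1}{2\pi}\bigl(\sqrt{1-\rho^2}+(\pi-\arccos\rho)\rho\bigr)=\sum_k\hat\sigma_k^2\rho^k$, show $\tJ$ is Lipschitz on the sphere, average over atoms, and integrate as in the smooth case.

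The gap is that the one estimate carrying the lemma is left as ``I expect the bound to hold,'' and the reason you give is false. $\kappa$ is not smooth at $\rho=\pm1$:
\[
\kappa''(\rho)=\frac{1}{2\pi\sqrt{1-\rho^2}},\qquad \kappa'''(\rho)=\frac{\rho}{2\pi(1-\rho^2)^{3/2}}.
\]
The second $\rho$-derivative of $(1-\rho^2)^{3/2}$ is already unbounded, and a Lipschitz bound on $\tJ$ needs third-order information anyway. What makes the bound true is a cancellation you have to exhibit.

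Set $\tilde\kappa:=\kappa-\hat\sigma_0^2-\hat\sigma_1^2\rho$, $\rho=\vu^\top\vs$, $\vb=\vs-\rho\vu$. The per-atom Jacobian is
\[
2\tilde\kappa\vu\vu^\top+(2\tilde\kappa-\rho\tilde\kappa')(\mI-\vu\vu^\top)+\tilde\kappa'(\vu\vb^\top+\vb\vu^\top)+\tilde\kappa''\vb\vb^\top .
\]
Differentiate $q=\vx^\top\tJ_{\ge2}(\vu;\vs)\vx$ along a unit $\vxi\perp\vu$, writing $a=\vx^\top\vu$, $d=\vx^\top\vb$, $\mu=\vx^\top\vxi$, $\nu=\vxi^\top\vs$. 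This gives
\[
Dq[\vxi]=A(\rho)\bigl(\nu(1-a^2)+2\mu d\bigr)+\kappa'''(\rho)\,\nu d^2,\qquad A:=\tilde\kappa'-\rho\tilde\kappa'' .
\]
Every singular factor is paired with $\nu$ or $d$, and both are at most $\norm{\vb}=\sqrt{1-\rho^2}$. Moreover $|A(\rho)|\lesssim\rho^2/\sqrt{1-\rho^2}$, which follows from $h(\rho)=\rho/\sqrt{1-\rho^2}-\arcsin\rho$ with $h'=\rho^2(1-\rho^2)^{-3/2}$.

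Put differently: near $\vu=\pm\vs$, the nonsmooth part of $\kappa$ behaves like a multiple of $\theta^3$, where $\theta$ is the angle to $\pm\vs$. That function is $C^{2,1}$ in $\vu$ but not in $\rho$, so the regularity lives on the sphere rather than in the variable $\rho$.

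\textbf{The constant.} The form of the per-atom bound also matters. If, as you write, you average a merely uniform $O(1)$ per-atom bound against the weights $\norm{\vw_i}^2$ and control the total mass, you get $\mathrm{tr}\,\tT_2$. But $M$ controls $\norm{\tT_2}_{\op}$, not the trace. In this paper $\mathrm{tr}\,\tT_2\approx r$ while $\norm{\tT_2}_{\op}=O(1)$, so your constant would be off by a factor of order $r$ where the lemma feeds the $A_t$ estimates.

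The paper instead proves
\[
|Dq[\vxi]|\le C\bigl((\vx^\top\vs)^2+(\vu^\top\vs)^2\bigr).
\]
For the $\kappa'''$ term this needs a split at $|\rho|=1/2$. For small $|\rho|$, use $|\kappa'''|\lesssim|\rho|$ and $d^2\le 2c^2+2\rho^2$ with $c=\vx^\top\vs$. For large $|\rho|$, use $d^2\le 1-\rho^2$. Averaging this bound gives $\vx^\top\tT_2\vx+\vu^\top\tT_2\vu\le 2M$.

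Moving the degree-$0,1$ part into $\tilde\kappa$ is what makes this quadratic form available. The $\tJ_1$ piece is then handled separately, with constant $\norm{\tDelta_1}$.
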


\begin{proof}
    We show one by one.
    \paragraph{Proof of (a).}
    By \cref{lem: tDelta and J bound}, for \(k\ge 2\),
    $
    \sup_{\vu\in\sS^{d-1}}\norm{\tJ_k(\vu)}_{\op}\le CMk.
    $
    For ReLU, one has
    $
    |\hat\sigma_k|\lesssim k^{-5/4},
    $
    hence
    $\sup_{\vu\in\sS^{d-1}}
    \norm{\hat\sigma_k^2\tJ_k(\vu)}_{\op}
    \lesssim
    CMk^{-3/2}.
    $
    Therefore \(\sum_{k\ge 2}\hat\sigma_k^2\tJ_k(\vu)\) converges absolutely and uniformly on \(\sS^{d-1}\), and
    \[
    \sup_{\vu\in\sS^{d-1}}
    \norm{\sum_{k>N}\hat\sigma_k^2\tJ_k(\vu)}_{\op}
    \le
    CM\sum_{k>N}k^{-3/2}
    \le
    CMN^{-1/2}.
    \]
    Adding the explicit \(k=0,1\) terms finishes the proof of (a).
    
    \paragraph{Proof of (b): exact identity.}
    Exactly as in the smooth case, we omit the details.
    
    \paragraph{Proof of (b): Lipschitz bound for \(\tJ\).}
    The \(k=0\) term is constant in \(\vu\), so it has no contribution to the Lipschitz norm.
    The \(k=1\) term satisfies
    \[
    \norm{\hat\sigma_1^2\tJ_1(\vu)-\hat\sigma_1^2\tJ_1(\vu')}_{\op}
    \le
    C\norm{\tDelta_1}\norm{\vu-\vu'}
    \]
    by \cref{lem: tDelta and J bound}.
    
    It remains to treat the nonlinear part
    \[
    \tJ_{\ge 2}(\vu):=\sum_{k\ge 2}\hat\sigma_k^2\tJ_k(\vu).
    \]
    For ReLU,
    \[
    \kappa(\rho):=\E[(z_1)_+(z_2)_+]
    =
    \frac1{2\pi}\Big(\sqrt{1-\rho^2}+(\pi-\arccos\rho)\rho\Big)
    =
    \sum_{k\ge 0}\hat\sigma_k^2\rho^k.
    \]
    Let
    \[
    \widetilde\kappa(\rho):=\kappa(\rho)-\hat\sigma_0^2-\hat\sigma_1^2\rho
    =
    \kappa(\rho)-\frac1{2\pi}-\frac14\rho
    =
    \sum_{k\ge 2}\hat\sigma_k^2\rho^k.
    \]
    For a single atom \(\vs\in\sS^{d-1}\), write
    $\rho:=\vu^\top\vs$, $\vb:=\vs-\rho\vu$.
    A direct differentiation of \(\norm{\vv}^2\widetilde\kappa(\bvv^\top\vs)\) gives
    \[
    \tJ_{\ge 2}(\vu;\vs)
    =
    2\widetilde\kappa(\rho)\vu\vu^\top
    +
    \bigl(2\widetilde\kappa(\rho)-\rho\widetilde\kappa'(\rho)\bigr)(\mI-\vu\vu^\top)
    +
    \widetilde\kappa'(\rho)(\vu\vb^\top+\vb\vu^\top)
    +
    \widetilde\kappa''(\rho)\vb\vb^\top.
    \]
    Moreover,
    \[
    \tJ_{\ge 2}(\vu)
    =
    \frac1m\sum_{i=1}^m \norm{\vw_i}^2\,\tJ_{\ge 2}(\vu;\bvw_i)
    -
    \sum_{j=1}^r \norm{\vw_j^*}^2\,\tJ_{\ge 2}(\vu;\bvw_j^*).
    \]
    
    Fix \(\vu\in\sS^{d-1}\), \(\vxi\perp\vu\) with \(\norm{\vxi}=1\), and \(\vx\in\sS^{d-1}\). Define
    \[
    a:=\vx^\top\vu,\qquad
    c:=\vx^\top\vs,\qquad
    d:=\vx^\top\vb=c-\rho a,
    \qquad
    \mu:=\vx^\top\vxi,\qquad
    \nu:=\vxi^\top\vs.
    \]
    Since \(D\rho[\vxi]=\nu\), \(Da[\vxi]=\mu\), and \(Dd[\vxi]=-(\nu a+\rho\mu)\), the quadratic form
    \[
    q(\vu):=x^\top\tJ_{\ge 2}(\vu;\vs)x
    =
    \bigl(2\widetilde\kappa(\rho)-\rho\widetilde\kappa'(\rho)\bigr)
    +
    \rho\widetilde\kappa'(\rho)a^2
    +
    2\widetilde\kappa'(\rho)ad
    +
    \widetilde\kappa''(\rho)d^2,
    \]
    and differentiating yields
    \[
    Dq(\vu)[\vxi]
    =
    A(\rho)\bigl(\nu(1-a^2)+2\mu d\bigr)
    +
    \kappa'''(\rho)\nu d^2,
    \]
    where
    $
    A(\rho):=\widetilde\kappa'(\rho)-\rho\widetilde\kappa''(\rho).
    $
    
    The goal is to bound $Dq(\vu)[\vxi]$. Now
    \[
    \widetilde\kappa'(\rho)=\frac{\arcsin\rho}{2\pi},
    \qquad
    \widetilde\kappa''(\rho)=\kappa''(\rho)=\frac{1}{2\pi\sqrt{1-\rho^2}},
    \qquad
    \kappa'''(\rho)=\frac{\rho}{2\pi(1-\rho^2)^{3/2}}.
    \]
    Hence
    \[
    A(\rho)=\frac1{2\pi}\Big(\arcsin\rho-\frac{\rho}{\sqrt{1-\rho^2}}\Big).
    \]
    For \(\rho\ge 0\), the function
    $
    h(\rho):=\frac{\rho}{\sqrt{1-\rho^2}}-\arcsin\rho
    $
    satisfies \(h(0)=0\) and
    $
    h'(\rho)=\frac{\rho^2}{(1-\rho^2)^{3/2}}\ge 0.
    $
    Therefore
    \[
    0\le h(\rho)=\int_0^\rho \frac{t^2}{(1-t^2)^{3/2}}\,\rd t
    \le
    \rho^2\int_0^\rho \frac{1}{(1-t^2)^{3/2}}\,\rd t
    =
    \frac{\rho^3}{\sqrt{1-\rho^2}}
    \le
    \frac{\rho^2}{\sqrt{1-\rho^2}}.
    \]
    By oddness, this implies
    $|A(\rho)|\le C\frac{\rho^2}{\sqrt{1-\rho^2}}.
    $
    Since \(|d|\le \norm{\vb}=\sqrt{1-\rho^2}\) and \(|\nu|\le \sqrt{1-\rho^2}\), the above gives
    \[
    |A(\rho)|\,|\nu|\le C\rho^2,
    \qquad
    |A(\rho)|\,|d|\le C\rho^2.
    \]
    
    We also claim that
    \[
    |\kappa'''(\rho)\nu d^2|\le C(c^2+\rho^2).
    \]
    Indeed, if \(|\rho|\le \frac12\), then \(|\kappa'''(\rho)|\le C|\rho|\), so
    $|\kappa'''(\rho)\nu d^2|
    \le
    C|\rho|d^2
    \le
    C(c^2+\rho^2)$,
    because \(d^2\le 2c^2+2\rho^2\).
    If \(|\rho|>\frac12\), then \(d^2\le 1-\rho^2\) and \(|\nu|\le \sqrt{1-\rho^2}\), hence
    \[
    |\kappa'''(\rho)\nu d^2|
    \le
    C\frac{|\rho|}{(1-\rho^2)^{3/2}}\cdot \sqrt{1-\rho^2}\cdot (1-\rho^2)
    \le
    C|\rho|
    \le
    C\rho^2.
    \]
    
    Combining all bounds above, we obtain
    \[
    |Dq(\vu)[\vxi]|
    \le
    C\bigl(c^2+\rho^2\bigr)
    =
    C\Big((\vx^\top\vs)^2+(\vu^\top\vs)^2\Big).
    \]
    
    Now averaging over the model atoms:
    \begin{align*}
    \norm{D\tJ_{\ge 2}^{\rm mdl}(\vu)[\vxi]}_{\op}
    \le \big|x^\top D\tJ_{\ge 2}^{\rm mdl}(\vu)[\vxi]x\big|
    &\le
    C\Big(x^\top \tT_2 x+\vu^\top \tT_2 \vu\Big)
    \le 2CM.
    \end{align*}
    The same bound holds for the teacher contribution, so
    \[
    \norm{D\tJ_{\ge 2}(\vu)[\vxi]}_{\op}\le CM
    \qquad
    (\vxi\perp\vu,\ \norm{\vxi}=1).
    \]
    Integrating along a shortest geodesic on \(\sS^{d-1}\) gives
    \[
    \norm{\tJ_{\ge 2}(\vu)-\tJ_{\ge 2}(\vu')}_{\op}
    \le
    CM\norm{\vu-\vu'}.
    \]
    Therefore
    \[
    \norm{\tJ(\vu)-\tJ(\vu')}_{\op}
    \le
    C(\norm{\tDelta_1}+M)\norm{\vu-\vu'}.
    \]
    
    Finally, follow the same argument as in \cref{lem: smooth activation jacob} we have
    \[
    g(\vv)-g(\vv')
    =
    \tJ(\bvv)(\vv-\vv')
    +
    \underbrace{\int_0^1\bigl(\tJ(\bvv_t)-\tJ(\bvv)\bigr)(\vv-\vv')\,\rd t}_{:=\vr_{\mathrm{ReLU}}(\vv,\vv')}.
    \]
    with
    \begin{align*}
    \norm{\vr_{\mathrm{ReLU}}(\vv,\vv')}
    &\le
    C\frac{\norm{\tDelta_1}+M}{r_0}\norm{\vv-\vv'}^2.
    \end{align*}
\end{proof}

\section{Technical lemma}
In this section, we collect some lemmas used in the analysis. They are quite technical and often involve tedious calculations.

We provide several results that bound the different terms appearing in the gradient.
\begin{lemma}\label{lem: high order moment bound}
    Recall the notation
    $\E_S f(\vw) = \frac{1}{|S|}\sum_{\vw\in S}f(\vw)$, 
    $\bbvvler:=\frac{\vvler}{\norm{\vvler}_2}$ and $\bbvvgtr:=\frac{\vvgtr}{\norm{\vvgtr}_2}$.
    Suppose for all neuron $\vv$ in the given set $S$ satisfy $\norm{\vv}\le b_1$,
    $(\bbvvler)_i^2 \le b_2 (\bbvvler^\btt{0})_i^2$ and $(\bbvvgtr)_i^2 \le b_2 (\bbvvgtr^\btt{0})_i^2$ where $\vv^\btt{0}$ is random gaussian (at initialization). 
    
    Then for $j< r$ and any vector $\vu$ (not necessary be one of neuron in set $S$), there exists a large enough universal constant $C$ such that
    \begin{align*}
        0\le \tldE_S \left[\norm{\vw}_2^2\langle \bvw,\bvu\rangle^{2j}\right]
        \le& 
            C b_1^2 \cdot \left(\frac{Cj b_2}{r}\right)^j,\\
        0\le \tldE_S [\norm{\vw}_2^2\langle\bvw,\bvu\rangle^{2j-1}\bw_i/\bu_i]
        \le&  
           Cb_1^2 \cdot \left(\frac{Cj b_2}{r}\right)^{j},
            \text{ for all $i\in[d]$}
    \end{align*}

    If we have a better bound of $ (\bvv_i)^2 \le b_2(\bvv_i^\btt{0})^2$ (e.g., in Stage 1.1), then the above can be improved into
    \begin{align*}
        0\le \tldE_S \left[\norm{\vw}_2^2\langle \bvw,\bvu\rangle^{2j}\right]
        \le& 
            C b_1^2 \cdot \left(\frac{C j b_2}{d}\right)^j,\\
        0\le \tldE_S [\norm{\vw}_2^2\langle\bvw,\bvu\rangle^{2j-1}\bw_i/\bu_i]
        \le&  
            C b_1^2 \cdot \left(\frac{C j b_2}{d}\right)^{j},
            \text{ for all $i\in[d]$}
    \end{align*}
\end{lemma}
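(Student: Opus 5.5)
The plan is to reduce both quantities to even monomials using conditional sign symmetry of $\tldmu$, and then control them coordinatewise through the assumed Gaussian-type domination of each neuron's normalized coordinates.

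First, note that $\norm{\vw}^2\le b_1^2$ for every $\vw\in S$. We can therefore pull out $b_1^2$ and reduce to bounding $\tldE_S\langle\bvw,\bvu\rangle^{2j}$ and $\tldE_S[\langle\bvw,\bvu\rangle^{2j-1}\bw_i/\bu_i]$. Nonnegativity of both follows from \cref{claim: tldw dynamic}: under conditional symmetry, odd monomials vanish, so each quantity is a nonnegative combination of even monomials. For the second quantity, the proof of \cref{claim: tldw dynamic} rewrites it as
\[
\tldE_S\sum_{\beta_1+\cdots+\beta_d=j-1}\binom{2j-1}{2\beta_1,\ldots,1+2\beta_i,\ldots,2\beta_d}\prod_\ell(\bw_\ell\bu_\ell)^{2\beta_\ell}\bw_i^2 .
\]
This is bounded by $(2j-1)$ times the analogous expansion of $\langle\bvw,\bvu\rangle^{2j-2}$ weighted by $\bw_i^2$. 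So both quantities reduce to controlling even moments $\tldE_S\prod_\ell(\bw_\ell\bu_\ell)^{2\beta_\ell}$, possibly with an extra factor $\bw_i^2\lesssim\log m/r$ that we can simply drop, since it is at most $1$.

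Second, use the domination hypothesis. Write $\bw_\ell=(\norm{\vwler}/\norm{\vw})(\bbvwler)_\ell$ for $\ell\le r$, and analogously for $\ell>r$. Then $\bw_\ell^2\le b_2(\bbvwler^{\btt{0}})_\ell^2$ (respectively $b_2(\bbvwgtr^{\btt{0}})_\ell^2$). Each even monomial $\prod_\ell\bw_\ell^{2\beta_\ell}$ is thus bounded by $b_2^{j}$ times the same monomial in a uniform vector on $\sS^{r-1}$ (respectively $\sS^{d-r-1}$). The key step is to bound
\[
\E\Bigl(\textstyle\sum_\ell \bu_\ell z_\ell\Bigr)^{2j}
\]
for $\vz$ uniform on the sphere, where $\bu$ is split into its parts on $[r]$ and its complement. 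By rotation invariance this equals $\norm{\bvuler}^{2j}\E z_1^{2j}\le\frac{(2j-1)!!}{r(r+2)\cdots(r+2j-2)}\le(Cj/r)^j$, which is valid for $j<r$. For the complement, we get $(Cj/(d-r))^j\le(Cj/r)^j$. Combining the two parts via $(a+b)^{2j}\le 2^{2j}(a^{2j}+b^{2j})$ only costs $C^j$.

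The main subtlety is that the domination is coordinatewise on squares, while $\langle\bvw,\bvu\rangle^{2j}$ involves cross terms. We handle this by working with the even-monomial expansion throughout. All coefficients in that expansion are positive, and the sign-symmetric average kills the odd terms. So replacing each $\bw_\ell^{2\beta_\ell}$ by its dominating Gaussian-initialization counterpart is monotone. The resulting sum equals the symmetrized expectation of $\langle\vz,\bvu\rangle^{2j}$ for the initialization, which is itself sign-symmetric. Concentration over the finite sample $S$ is not needed, since we only use the pointwise bound in expectation over the initialization law. If one insists on the empirical average, we invoke \cref{lem: init} to transfer to high probability, at the price of the constant $C$.

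For the improved version, the hypothesis $\bv_i^2\le b_2(\bv_i^{\btt 0})^2$ lets us compare directly to a uniform vector on $\sS^{d-1}$, which gives $(Cjb_2/d)^j$ by the same moment computation.
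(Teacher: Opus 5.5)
\textbf{Main gap: the second bound loses a factor of order $r$.} Your reduction of the second quantity is correct. The multinomial ratio $\binom{2j-1}{2\beta_1,\ldots,1+2\beta_i,\ldots,2\beta_d}\big/\binom{2j-2}{2\beta_1,\ldots,2\beta_d}=\frac{2j-1}{2\beta_i+1}\le 2j-1$ is a clean alternative to the paper's $(\sigma,\tau)$ sign-splitting and mean-value argument, and your treatment of the first bound matches the paper's.

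The problem is that you then drop the factor $\bw_i^2$, and that factor is exactly what supplies one power of $1/r$. Without it you only get $(2j-1)\,b_1^2\,(Cjb_2/r)^{j-1}$. This exceeds the claimed $Cb_1^2(Cjb_2/r)^j$ by a factor of order $r/b_2$ (order $d$ in the improved version). Using the pointwise bound $\bw_i^2\lesssim \log m/r$ instead would still leave an extra $\log m$.

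The loss matters downstream. For $j=2$ with $b_1^2\asymp r$, as in Stage 1.2 and Stage 2, it turns the $\BigO{1/r}$ error term in the dynamics into $\BigO{1}$.

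The fix is to keep $\bw_i^2$ inside the average and dominate it together with the other factors by $b_2$ times the corresponding squared initialization coordinate.
\begin{itemize}
\item The dominated monomial then has total degree $2j$, which gives the factor $b_2^j$.
\item You then need the joint moment $\E[z_i^2\langle \vz,\bvu\rangle^{2j-2}]\le (\E z_i^4)^{1/2}(\E\langle\vz,\bvu\rangle^{4j-4})^{1/2}\lesssim \frac{1}{r}(Cj/r)^{j-1}$, with $\vz$ uniform on the relevant sphere.
\item The prefactor $2j-1$ combines with this $1/r$ into $j/r$, which restores the claimed bound.
\end{itemize}
This is what the paper does: it retains the factor $(\bbvwler^{\btt{0}})_1^2$, which comes from $\bw_1$ multiplying $a=(\bbvwler)_1(\bbvuler)_1$, and extracts the $\frac{1}{r}$ from it.

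\textbf{Secondary issue: concentration is needed.} Your claim that concentration over the finite set $S$ is not needed is incorrect. $\tldE_S$ is an empirical average over finitely many fixed particles. After domination you still must show that the empirical average of the sign-averaged initialization moments is $\lesssim (Cj/r)^j$, and this must hold uniformly in $\vu$, since the lemma is applied with $\vu$ equal to a moving neuron. The paper invokes ``standard Gaussian moment concentration'' at exactly this step. \cref{lem: init} as stated controls only maximal coordinates and fourth moments, so it does not supply this.
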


We note here that the typical usage of this lemma is to set $b_1^2 = r$ and $b_2$ as a large constant, so all above terms is upper bounded as $r (j/r)^j$.

\begin{proof}
    We show one by one. For simplicity, we will omit the subscript $S$.
    
    \paragraph{Bound $\tldE \left[\norm{\vw}_2^2\langle \bvw,\bvu\rangle^{2j}\right]$} 
    We have (recall $\bbvuler = \frac{\vuler}{\norm{\vuler}}$ and similar for others)
    \begin{align*}
        &\tldE \left[\norm{\vw}_2^2\langle \bvw,\bvu\rangle^{2j}\right]\\
        \le& \tldE \left[\norm{\vw}_2^2\left(
            \langle \bvwler,\bvuler\rangle 
            + \langle \bvwgtr,\bvugtr\rangle
            \right)^{2j}\right]\\
        =& \tldE \left[\norm{\vw}_2^2
            \sum_{k=0}^{2j} \binom{2j}{k}  
            \left(\frac{\norm{\vwler}\norm{\vuler}}{\norm{\vw}\norm{\vu}}\right)^k
            \left(\frac{\norm{\vwgtr}\norm{\vugtr}}{\norm{\vw}\norm{\vu}}\right)^{2j-k}
            \left\langle \bbvwler,\bbvuler\right\rangle^{k}
            \left\langle \bbvwgtr,\bbvugtr\right\rangle^{2j-k}
            \right]\\
        =& \tldE \left[\norm{\vw}_2^2\sum_{k=0}^{2j} \binom{2j}{k}  
            \left(\frac{\norm{\vwler}\norm{\vuler}}{\norm{\vw}\norm{\vu}}\right)^k
            \left(\frac{\norm{\vwgtr}\norm{\vugtr}}{\norm{\vw}\norm{\vu}}\right)^{2j-k}
            \E_{\rho_\sgn}\left[
                \left\langle \rho_\sgn(\bbvwler),\bbvuler\right\rangle^{k}
                \left\langle \rho_\sgn(\bbvwgtr),\bbvugtr\right\rangle^{2j-k}
                \right]
            \right]\\
        \le& b_1^2 \max_{k} \tldE\left[\E_{\rho_\sgn}\left[
                \left\langle \rho_\sgn(\bbvwler),\bbvuler\right\rangle^{k}
                \left\langle \rho_\sgn(\bbvwgtr),\bbvugtr\right\rangle^{2j-k}
                \right]
            \right],
    \end{align*}
    where the second to last line we use $\tldmu$ is conditionally symmetric (hence invariant under any sign flipping) and $\E_{\rho_\sgn}g(\vw)=2^{-d}\sum_{\rho\in\Rho_\sgn}g(\rho(\vw))$ for any function $g$ with $\Rho_\sgn=\{\pm 1\}^d$ as the set of all possible sing flipping ($\rho(\vw)$ as element-wise product $\rho\odot \vw$);
    last line is because the term $\E_{\rho_\sgn}$ is non-negative as we will see next.

    Then we have
    \begin{align*}
        &\E_{\rho_\sgn}\left[
                \left\langle \rho_\sgn(\bbvwler),\bbvuler\right\rangle^{k}
                \left\langle \rho_\sgn(\bbvwgtr),\bbvugtr\right\rangle^{2j-k}
                \right]\\
        =& \E_{\rho_\sgn} \Bigg[
            \sum_{\alpha_1+\cdots+\alpha_r = k} \binom{k}{\alpha_1,\ldots,\alpha_r}
            \prod_{\ell\in[r]}(\rho_\sgn(\bbvwler)_{\ell}(\bbvuler)_{\ell})^{\alpha_\ell}
            \sum_{\alpha_{r+1}+\cdots+\alpha_d = 2j-k} \binom{2j-k}{\alpha_{r+1},\ldots,\alpha_d}
            \prod_{\ell\in[d]\setminus[r]}(\rho_\sgn(\bbvwgtr)_{\ell}(\bbvugtr)_{\ell})^{\alpha_\ell}
            \Bigg]\\
        \myeq{a}& \E_{\rho_\sgn} \Bigg[
            \sum_{\beta_1+\cdots+\beta_r = k/2} \binom{k}{2\beta_1,\ldots,2\beta_r}
            \prod_{\ell\in[r]}((\bbvwler)_{\ell}(\bbvuler)_{\ell})^{2\beta_\ell}
            \sum_{\beta_{r+1}+\cdots+\beta_d = j-k/2} \binom{2j-k}{2\beta_{r+1},\ldots,2\beta_d}
            \prod_{\ell\in[d]\setminus[r]}((\bbvwgtr)_{\ell}(\bbvugtr)_{\ell})^{2\beta_\ell} 
            \Bigg]\\
        \le& b_2^j\E_{\rho_\sgn} \Bigg[
            \sum_{\beta_1+\cdots+\beta_r = k/2} \binom{k}{2\beta_1,\ldots,2\beta_r}
            \prod_{\ell\in[r]}((\bbvwler^\btt{0})_{\ell}(\bbvuler)_{\ell})^{2\beta_\ell}\\ 
            &\qquad\sum_{\beta_{r+1}+\cdots+\beta_d = j-k/2} \binom{2j-k}{2\beta_{r+1},\ldots,2\beta_d}
            \prod_{\ell\in[d]\setminus[r]}((\bbvwgtr^\btt{0})_{\ell}(\bbvugtr)_{\ell})^{2\beta_\ell} 
            \Bigg]\\
        =& b_2^j \E_{\rho_\sgn}\left[
                \left\langle \rho_\sgn(\bbvwler^\btt{0}),\bbvuler\right\rangle^{k}
                \left\langle \rho_\sgn(\bbvwgtr^\btt{0}),\bbvugtr\right\rangle^{2j-k}
                \right],
    \end{align*}
    where (a) $k/2$ is an integer as otherwise the expectation is 0 (since the polynomial is an odd polynomial) and set $\alpha_i=2\beta_i$.
    
    Hence,
    \begin{align*}
        \tldE \left[\norm{\vw}_2^2\langle \bvw,\bvu\rangle^{2j}\right]
        \le& b_1^2 \max_{k} \tldE\left[
                \left\langle \bbvwler^\btt{0},\bbvuler\right\rangle^{k}
                \left\langle \bbvwgtr^\btt{0},\bbvugtr\right\rangle^{2j-k}
            \right]
        \lesssim b_1^2b_2^j \max_k \left(\frac{Ck}{r}\right)^{k/2} \left(\frac{C(2j-k)}{d-r}\right)^{j-\frac{k}{2}}\\
        \le& b_1^2 b_2^j \left(\frac{Cj}{r}\right)^j,
    \end{align*}
    where we use standard Gaussian moment concentration.

    \paragraph{Bound $\tldE [\norm{\vw}_2^2\langle\bvw,\bvu\rangle^{2j-1}\bw_1/\bu_1]$}
    For the second term, similarly we have (WLOG let $i=1$; we first treat the case $1\in[r]$, and the case $1\notin[r]$ is identical)
    \begin{align*}
        &\tldE [\norm{\vw}_2^2\langle\bvw,\bvu\rangle^{2j-1}\bw_1/\bu_1]\\
        =& \tldE \left[\norm{\vw}^2\left(
            \langle \bvwler,\bvuler\rangle 
            + \langle \bvwgtr,\bvugtr\rangle
            \right)^{2j-1}\bw_1/\bu_1\right]\\
        =& \tldE \Bigg[\norm{\vw}^2\sum_{k=0}^{2j-1} \binom{2j-1}{k} 
            \left(\frac{\norm{\vwler}\norm{\vuler}}{\norm{\vw}\norm{\vu}}\right)^k
            \left(\frac{\norm{\vwgtr}\norm{\vugtr}}{\norm{\vw}\norm{\vu}}\right)^{2j-k-1}\\
            &\qquad\E_{\rho_\sgn}\left[
                \left\langle \rho_\sgn(\bbvwler),\bbvuler\right\rangle^{k}
                \left\langle \rho_\sgn(\bbvwgtr),\bbvugtr\right\rangle^{2j-k-1}
                \rho_\sgn(\bvw)_1/\bu_1
                \right]
            \Bigg].
    \end{align*}
    When $k=0$, the above inner expectation is $0$ because the sign on the first coordinate appears only in $\rho_\sgn(\bvw)_1$. Hence it suffices to consider $k\ge 1$. For each $k\ge 1$, we always have at least a $\frac{\norm{\vwler}\norm{\vuler}}{\norm{\vw}\norm{\vu}}$ factor and define
    \begin{align*}
        \Phi_k(\vw):=
        \E_{\rho_\sgn}\left[
            \left\langle \rho_\sgn(\bbvwler),\bbvuler\right\rangle^{k}
            \left\langle \rho_\sgn(\bbvwgtr),\bbvugtr\right\rangle^{2j-k-1}
            \rho_\sgn(\bvw)_1/\bu_1
        \right].
    \end{align*}
    Thus
    \begin{align*}
        &\tldE [\norm{\vw}_2^2\langle\bvw,\bvu\rangle^{2j-1}\bw_1/\bu_1]
        \le
        b_1^2 2^{2j-1}\max_{1\le k\le 2j-1}
        \tldE\left[
            \frac{\norm{\vwler}\norm{\vuler}}{\norm{\vw}\norm{\vu}}|\Phi_k(\vw)|
        \right].
    \end{align*}
    
    Write $\rho_\sgn=(\sigma,\tau)$ where $\sigma\in\{\pm1\}$ is the sign on the first coordinate and
    $\tau\in\{\pm1\}^{d-1}$ is the sign pattern on the remaining coordinates. For each fixed $\tau$, let
    \begin{align*}
        A_\tau:=\sum_{\ell=2}^{r}\tau_\ell (\bbvwler)_\ell (\bbvuler)_\ell,\qquad
        B_\tau:=\sum_{\ell=r+1}^{d}\tau_\ell (\bbvwgtr)_\ell (\bbvugtr)_\ell,\qquad
        a:=(\bbvwler)_1(\bbvuler)_1.
    \end{align*}
    Then
    $\left\langle \rho_\sgn(\bbvwler),\bbvuler\right\rangle
        = \sigma a + A_\tau$ and 
    $\left\langle \rho_\sgn(\bbvwgtr),\bbvugtr\right\rangle
        = B_\tau$,
    and therefore
    \begin{align*}
        \Phi_k(\vw)
        =
        \frac{\bw_1}{\bu_1}
        \E_{\tau}\Big[
            B_\tau^{\,2j-k-1}\,
            \E_{\sigma}\big[(A_\tau+\sigma a)^k \sigma\big]
        \Big].
    \end{align*}
    
    By the mean value theorem,
    \begin{align*}
        \left|
        \E_{\sigma}\big[(A_\tau+\sigma a)^k \sigma\big]
        \right|
        &=
        \frac{|(A_\tau+a)^k-(A_\tau-a)^k|}{2}
        \le
        k|a|\max_{|t|\le 1}|A_\tau+t a|^{k-1}
        \le
        k|a|(|A_\tau|+|a|)^{k-1}.
    \end{align*}
    Thus
    \begin{align*}
        \frac{\norm{\vwler}\norm{\vuler}}{\norm{\vw}\norm{\vu}} |\Phi_k(\vw)|
        \le&
        \frac{\norm{\vwler}\norm{\vuler}}{\norm{\vw}\norm{\vu}}\cdot
        k\frac{|\bw_1|}{|\bu_1|}|a|\,
        \E_{\tau}\Big[
            (|A_\tau|+|a|)^{k-1}|B_\tau|^{2j-k-1}
        \Big]\\
        \le& k\E_{\tau}\Big[
            (|A_\tau|+|a|)^{k-1}|B_\tau|^{2j-k-1}
        \Big]\\
        \le& 2k\E_{\tau}\Big[
            |A_\tau+a|^{k-1}|B_\tau|^{2j-k-1}
        \Big],
    \end{align*}
    where we use $(|A_\tau|+|a|)^{k-1}\le |A_\tau+a|^{k-1}+|A_\tau-a|^{k-1}=2\E_\sigma|A_\tau+a|^{k-1}$.
    
    Hence, using
    $(\bbvwler)_\ell^2\le b_2(\bbvwler^\btt{0})_\ell^2$ coordinate-wise exactly as in the previous part, we obtain
    \begin{align*}
        \frac{\norm{\vwler}\norm{\vuler}}{\norm{\vw}\norm{\vu}} |\Phi_k(\vw)|
        \le&
        k(Cb_2)^j
        \E_{\rho_\sgn}\left[
            (\bbvwler^\btt{0})_1^2
            \left|
            \left\langle \rho_\sgn(\bbvwler^\btt{0}),\bbvuler\right\rangle
            \right|^{k-1}
            \left|
            \left\langle \rho_\sgn(\bbvwgtr^\btt{0}),\bbvugtr\right\rangle
            \right|^{2j-k-1}
        \right].
    \end{align*}
        Therefore
    \begin{align*}
        \tldE\left[\frac{\norm{\vwler}\norm{\vuler}}{\norm{\vw}\norm{\vu}}|\Phi_k(\vw)|\right]
        \le& k(Cb_2)^j
            \tldE\left[
            (\bbvwler^\btt{0})_1^2
            \left|
            \left\langle \bbvwler^\btt{0},\bbvuler\right\rangle
            \right|^{k-1}
            \left|
            \left\langle \bbvwgtr^\btt{0},\bbvugtr\right\rangle
            \right|^{2j-k-1}
        \right]\\
        \lesssim& k (C b_2)^j \cdot
        \frac{1}{r}
        \left(\frac{Ck}{r}\right)^{\frac{k-1}{2}}
        \left(\frac{C(2j-k-1)}{d-r}\right)^{\frac{2j-k-1}{2}}
        \le
        \left(\frac{Cjb_2}{r}\right)^j,
    \end{align*}
    where in the last step we use $k\le 2j-1$.

    Combining with the previous reduction gives
    \begin{align*}
        0\le
        \tldE [\norm{\vw}_2^2\langle\bvw,\bvu\rangle^{2j-1}\bw_1/\bu_1]
        \le
        C b_1^2 \left(\frac{Cjb_2}{r}\right)^j.
    \end{align*}
    This proves the claimed bound.

    When we have the better bound of $ (\bvv_i)^2 \le b_2(\bvv_i^\btt{0})^2$, we do not need to split $\langle \bvw,\bvv\rangle^{2j}$ into $\left(
            \langle \bvwler,\bvvler\rangle 
            + \langle \bvwgtr,\bvvgtr\rangle
            \right)^{2j}$. Then follow the same argument we can get the improved bound.
\end{proof}

\begin{lemma}\label{lem: <>2j bound}
Let $\vv,\vu \in \R^d$ with $\|\vv\|=\|\vu\|=1$. 
If $\vu_1^2 \ge 1-\varepsilon$ for some $\varepsilon \in [0,1]$, then for every integer $j \ge 1$,
\[
\langle \vv,\vu\rangle^{2j} \;=\; \vv_1^{2j} \;+\; O\!\bigl(j\sqrt{\varepsilon}\bigr).
\]
\end{lemma}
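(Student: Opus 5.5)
The plan is to first write $\langle \vv,\vu\rangle = \vv_1\vu_1 + \langle \vv_{-1},\vu_{-1}\rangle$, where $\vv_{-1},\vu_{-1}$ denote the vectors with the first coordinate removed. Since $\|\vu\|=1$ and $\vu_1^2\ge 1-\varepsilon$, we have $\|\vu_{-1}\|\le\sqrt{\varepsilon}$. Cauchy--Schwarz with $\|\vv_{-1}\|\le 1$ then gives $|\langle \vv_{-1},\vu_{-1}\rangle|\le\sqrt{\varepsilon}$. Also $|\vv_1\vu_1-\vv_1|=|\vv_1|\,(1-|\vu_1|)\le 1-\sqrt{1-\varepsilon}\le\varepsilon\le\sqrt{\varepsilon}$. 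Combining these, and replacing $\vu$ by $-\vu$ if needed (the statement only involves even powers, so the sign is harmless), we get $|\langle\vv,\vu\rangle - \vv_1|\le 2\sqrt{\varepsilon}$.

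Next apply the elementary inequality $|a^{2j}-b^{2j}|\le 2j\,|a-b|\max(|a|,|b|)^{2j-1}$, which follows from the mean value theorem applied to $x\mapsto x^{2j}$. Take $a=\langle\vv,\vu\rangle$ and $b=\vv_1$, so both lie in $[-1,1]$. This gives $|\langle\vv,\vu\rangle^{2j}-\vv_1^{2j}|\le 2j\cdot 2\sqrt{\varepsilon}=4j\sqrt{\varepsilon}$, which is the claim with constant $4$.

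There is no genuine obstacle here. The only point needing care is the sign of $\vu_1$, since when $\vu_1<0$ the inner product is close to $-\vv_1$ rather than $\vv_1$; the even exponent $2j$ absorbs this. One could also note that the bound $\min\{O(j\sqrt{\varepsilon}),1\}$ holds trivially, because both terms lie in $[0,1]$; this is the form used implicitly elsewhere, for example in the bound $\min\{k^2\sqrt{\epsdir},k\}$ of \cref{lem: stage 2 jacob bound}.
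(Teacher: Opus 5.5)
Your proof is correct and follows essentially the paper's route: show that $\langle\vv,\vu\rangle$ lies within $O(\sqrt{\varepsilon})$ of $\pm\vv_1$, then use that $t\mapsto t^{2j}$ is $2j$-Lipschitz on $[-1,1]$. The only cosmetic difference is the first step. The paper handles it with a single Cauchy--Schwarz using $\|\vu-s\ve_1\|\le\sqrt{2\varepsilon}$ with $s=\mathrm{sign}(\vu_1)$, giving constant $2\sqrt{2}$. You instead split off the first coordinate, giving constant $4$.
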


\begin{proof}
Since $\vu_1^2 \ge 1-\varepsilon$, we have $|\vu_1|\ge \sqrt{1-\varepsilon}$.  
Let $s=\mathrm{sign}(\vu_1)$. Then
\[
\|\vu - s e_1\|^2 \;=\; 2(1-|\vu_1|)\;\le\; 2\bigl(1-\sqrt{1-\varepsilon}\bigr)\;\le\;2\varepsilon,
\]
so $\|\vu - s e_1\|\le \sqrt{2\varepsilon}$.

By Cauchy--Schwarz,
\[
\bigl|\langle \vv,\vu\rangle- s\,\vv_1\bigr|
=\bigl|\langle \vv,\vu-se_1\rangle\bigr|
\le \|\vv\|\,\|\vu-se_1\|\;\le\;\sqrt{2\varepsilon}.
\]

Now consider $f(t)=t^{2j}$ on $[-1,1]$. Its derivative is $f'(t)=2j\,t^{2j-1}$, so
$\sup_{|t|\le 1}|f'(t)| \le 2j$. Thus $f$ is $2j$-Lipschitz, and
\[
\bigl|\langle \vv,\vu\rangle^{2j} - (s\vv_1)^{2j}\bigr|
\;\le\; 2j \cdot \bigl|\langle \vv,\vu\rangle - s\vv_1\bigr|
\;\le\; 2j\sqrt{2\varepsilon}.
\]

Since $(s\vv_1)^{2j} = \vv_1^{2j}$, the claim follows.
\end{proof}

\end{document}